\tikzstyle{decision} = [diamond, draw, text width=4.5em, text badly centered, node distance=4cm, inner sep=0pt]
\tikzstyle{block} = [rectangle, draw, thick,text width=6.2em, text centered, rounded corners, minimum height=4em]
\definecolor{darkgray}{rgb}{0.25,0.25,0.25}
\definecolor{lightgray}{rgb}{0.75,0.75,0.75}
\definecolor{ared}{rgb}{.647,.129,.149}
\definecolor{nicered}{rgb}{.147,.129,.149}
\newlength\dlf@normtxtw
\newsavebox{\feline@chapter}
\newcommand\feline@chapter@marker[1][4cm]{%
\sbox\feline@chapter{%
\resizebox{!}{#1}{\fboxsep=1pt%
\colorbox{nicered}{\color{white}\bfseries\sffamily\thechapter}%
}}%
\rotatebox{90}{%
\resizebox{%
\heightof{\usebox{\feline@chapter}}+\depthof{\usebox{\feline@chapter}}}%
{!}{\scshape\so\@chapapp}}\quad%
\raisebox{\depthof{\usebox{\feline@chapter}}}{\usebox{\feline@chapter}}%
}
\newcommand\feline@chm[1][4cm]{%
\sbox\feline@chapter{\feline@chapter@marker[#1]}%
\makebox[0pt][l]{% aka \rlap
\makebox[1cm][r]{\usebox\feline@chapter}%
}}
\renewcommand\printchapternum{\null\hfill\feline@chm[2.5cm]\par}
\definecolor{cite_color}{RGB}{255,0,130}
\newtheorem{definition}{Definition}[chapter]		% numbering starts from 1 within each chapter
\newtheorem{lemma}[definition]{Lemma}			% same numbering as definitions
\newtheorem{theorem}[definition]{Theorem}		% same numbering as definitions
\newtheorem{corollary}[definition]{Corollary}		% same numbering as definitions
\newtheorem{proposition}[definition]{Proposition}	% same numbering as definitions
\newtheorem{remark}[definition]{Remark}
\newtheorem{assumption}[definition]{Assumption}
\newenvironment{myalg}[2] %
{
\begin{Sbox}
\begin{minipage}{0.8\textwidth}
\vspace*{0.1cm}
\begin{center}
%\textbf{\texttt{#1}}
{{#1}} (\texttt{#2})
\end{center}
\rm
\begin{tabbing}
....\=...\=...\=...\=...\=  \+ \kill
}%
{\end{tabbing}
\vspace*{-0.2cm}
\end{minipage}
\hspace*{0.1cm}
\end{Sbox}
\fbox{\TheSbox}
}
\newcommand*{\compress}{\@minipagetrue}
\newenvironment{mylist}%
  {
    %from parlist package, reduces indent before bulletpoints
    \setdefaultleftmargin{1em}{1em}{}{}{}{} 
    \compress %places itemize into minipage, removing whitespace before
    \begin{itemize}%
    \setlength{\itemsep}{0pt}%
    \setlength{\topsep}{0pt} 
    \setlength{\partopsep}{0pt}
    \setlength{\parsep}{0pt}
    \setlength{\parskip}{0pt}}%
  {\end{itemize}}
\def \R {\mathbb{R}}
\def \D {\mathcal{D}}
\def \y {\mathbf{y}}
\def \M {\mathcal{M}}
\def \E {\mathrm{E}}
\def \x {\mathbf{x}}
\def \a {\mathbf{a}}
\def \L {\mathcal{L}}
\def \H {\mathcal{H}}
\def \fh {\widehat{f}}
\def \f {\mathbf{f}}
\def \Hk {\H_{\kappa}}
\def \P {\mathcal{P}}
\def \v {\mathbf{v}}
\def \c {\mathbf{c}}
\def \X {\mathcal{X}}
\def \fhv {\widehat{\f}}
\def \S {\mathcal{S}}
\def \eh {\widehat{\epsilon}}
\def \z {\mathbf{z}}
\def \t {\mathbf{t}}
\def \ph {\widehat{\phi}}
\def \lh {\widehat{\lambda}}
\def \gt {\widetilde{g}}
\def \et {\widetilde{\varepsilon}}
\def \zt {\widetilde{\z}}
\def \Zt {\widetilde{Z}}
\def \gammat {\widetilde{\gamma}}
\def \etat {\widehat{\eta}}
\def \ab {\overline{\alpha}}
\def \Mh {\widehat{M}}
\def \veh {\widehat{\varepsilon}}
\def \gammah {\widehat{\gamma}}
\def \etah {\widehat{\eta}}
\def \xh {\widehat{\x}}
\def \Kh {\widehat{K}}
\def \rh {\widehat{r}}
\def \vet {\widetilde{\varepsilon}}
\def \span {\mbox{span}}
\def \hh {\widehat{h}}
\def \htt {\widetilde{h}}
\def \Er {\mathcal{E}}
\def \u {\mathbf{u}}
\def \Eh {\widehat{\Er}}
\def \v {\mathbf{v}}
\def \w {\mathbf{w}}
\def \Hb {\overline{\H}}
\def \DDh {\widehat{\D}}
\def \B {\mathcal{B}}
\def \Phl {\overline{\Phi}}
\def \R {\mathbb{R}}
\def \Kt {\widetilde{K}}
\def \G {\mathcal{G}}
\def \kt {\widetilde{\kappa}}
\def \b {\mathbf{b}}
\def \zt {\widetilde{\z}}
\def \h {\mathbf{h}}
\def \vt {\widetilde{\v}}
\def \Vt {\widetilde{V}}
\def \A {\mathcal{A}}
\def \B {\mathcal{B}}
\def \Hh {\widehat{H}}
\def \Ht {\widetilde{H}}
\def \alt {\widetilde{\alpha}}
\def \at {\widetilde{\a}}
\def \bt {\widetilde{\b}}
\def \hht {\widetilde{\h}}
\def \Dt {\widetilde{D}}
\def \Zt {\widetilde{Z}}
\def \lt {\widetilde{\lambda}}
\def \rb {\bar{r}}
\def \Dh {\widehat{D}}
\def \tr {\mbox{tr}}
\def \Ah {\widehat{A}}
\def \r {\mathcal{R}}
\def \Zh {\widehat{Z}}
\def \m {\mathbf{m}}
\def \Ub {\bar{U}}
\def \Vb {\bar{V}}
\def \Pb {\bar{\P}}
\def \e {\mathbf{e}}
\def \rank {\mbox{rank}}
\def \F {\mathcal{F}}
\def \Uh {\widehat{U}}
\def \uh {\widehat{\u}}
\def \Vh {\widehat{V}}
\def \vh {\widehat{\v}}
\def \sigmah {\widehat{\sigma}}
\def \mh {\widehat{\m}}
\def \m {\mathbf{m}}
\def \xt {\widetilde{\x}}
\def \Bh {\widehat{B}}
\def \Mt {\widetilde{M}}
\def \d {\mathbf{d}}
\def \vec {\mbox{vec}}
\def \k {\mathbf{k}}
\def \At {\widetilde{A}}
\def \Bt {\widetilde{B}}
\def \N {\mathcal{N}}
\def \wh {\widehat{\w}}
\def \ah {\widehat{\alpha}}
\def \wt {\widetilde{\w}}
\def \Gh {\widehat{G}}
\def \rb {\bar{r}}
\def \etah {\widehat{\eta}}
\def \s {\mathbf{s}}
\def \Xh {\widehat{X}}
\def \Ph {\widehat{P}}
\def \lb {\L}
\def \Lh {\widehat{\L}}
\def \K {\mathcal{K}}
\def \clip {\mbox{clip}}
\def \nt {\widetilde{\nabla}}
\def \Xt {\widetilde{X}}
\def \lh {\widehat{\ell}}
\def \eh {\widehat{\ell}}
\def \A {\mathcal{A}}
\newcommand{\dd}[2] { \langle {#1}, {#2} \rangle}
\newcommand{\mc}[1] {\mathcal{#1}}
\newcommand{\floor}[1]{\left\lfloor{#1}\right\rfloor}
\def \eo {\epsilon_{\text{opt}}}
\def \ep {\epsilon_{\text{prior}}}
\def \a {\mathbf{a}}
\def \b {\mathbf{b}}
\def \w {\mathbf{w}}
\def \x {\mathbf{x}}
\def \B {\mathbf{B}}
\def \R {\mathbb{R}}
\def \E {\mathbb{E}}
\def \B {\mathbf {B}}
\def \X {\mathbf {X}}
\def \Z {\mathbf {Z}}
\def \z {\mathbf{z}}
\def \gb {\hat{\mathbf{g}}}
\def \g {\hat{\mathbf{g}}}
\def \O {\mathcal{O}}
\newenvironment{myquotation}{\setlength{\leftmargini}{2em}\quotation}{\endquotation}
\def \bz {\mathbf{0}}
\def \R {\mathbb{R}}
\def \x {\mathbf{x}}
\def \L {\mathcal{G}}
\def \F {\mathcal{F}}
\def \lb {\L}
\def \Lh {\widehat{\L}}
\def \wh {\widehat{\w}}
\def \wt {\widetilde{\w}}
\def \K {\mathcal{K}}
\def \clip {\mbox{clip}}
\def \nt {\widetilde{\nabla}}
\def \Xt {\widetilde{X}}
\def \lh {\widehat{\ell}}
\def \eh {\widehat{\ell}}
\def \A {\mathcal{A}}
\def \w {\mathbf{w}}
\def \B {\mathbf{B}}
\def \E {\mathbb{E}}
\def \B {\mathbf {B}}
\def \z {\mathbf{z}}
\def \gb {\hat{\mathbf{g}}}
\def \g {\hat{\mathbf{g}}}
\def \lb {\bar{\ell}}
\def \O {\mathcal{O}}
\def \gh {\widehat{g}}
\def \I {\mathcal{I}}
\def \C {\mathcal{C}}
\def \P {\mathcal{P}}
\def \xb {\overline{\x}}
\def \zh {\widehat{\z}}
\def \wb {\bar{\w}}
\def \Fh {\widehat{\F}}
\def \g {\mathbf{g}}
\def \Ft {\widetilde{\F}}
\def \W {\mathcal{W}}
\def \G {\mathcal{G}}
\def \x {\mathbf{x}}
\def \e {\mathbf{e}}
\def \y {\mathbf{y}}
\def \z {\mathbf{z}}
\def \w {\mathbf{w}}
\def \f {\mathbf{f}}
\def \P {\mathcal{P}}
\def \Q {\mathcal{Q}}
\def \R {\mathbb{R}}
\def \u {\mathbf{u}}
\def \v {\mathbf{v}}
\def \B {\mathbb{B}}
\def \v {\mathbf{v}}
\def \E {\mathrm{E}}
\def \R {\mathbb{R}}
\def \regret {\mbox{Regret}}
\def \f {\mathbf{f}}
\def \VAR {\mbox{VAR}}
\def \eo {\epsilon_{\text{opt}}}
\def \ep {\epsilon_{\text{prior}}}
\def \x {\mathbf{x}}
\def \e {\mathbf{e}}
\def \y {\mathbf{y}}
\def \z {\mathbf{z}}
\def \w {\mathbf{w}}
\def \f {\mathbf{f}}
\def \P {\mathcal{P}}
\def \Q {\mathcal{Q}}
\def \R {\mathbb{R}}
\def \u {\mathbf{u}}
\def \v {\mathbf{v}}
\def \B {\mathbb{B}}
\def \v {\mathbf{v}}
\def \E {\mathrm{E}}
\def \R {\mathbb{R}}
\def \regret {\mbox{Regret}}
\def \f {\mathbf{f}}
\def \VAR {\mbox{VAR}}
\begin{document}

% Declarations for Front Matter

\title{Exploiting Smoothness in Statistical Learning, Sequential Prediction, and Stochastic Optimization}
\author{Mehrdad Mahdavi}
\degreesemester{June}
\degreeyear{2014}
\degree{Doctor of Philosophy}
\chair{Professor Rong Jin}
\othermembers{Associate Professor  Pang-Ning Tan\\ Associate Professor  Ambuj Tewari\\ Associate Professor  Eric Torng}
\numberofmembers{4}

\field{Computer Science}
\campus{}

\maketitle
\approvalpage
\newpage
% (This file is included by thesis.tex; you do not latex it by itself.)

\begin{abstract}
In the last several years, the intimate connection between convex optimization and learning problems, in both statistical and sequential  frameworks, has shifted the focus of algorithmic machine learning  to examine this interplay. In particular, on one hand,  this intertwinement brings forward new challenges in reassessment of  the performance of learning algorithms including generalization and regret bounds  under the assumptions imposed by convexity  such as analytical properties of loss functions (e.g., Lipschitzness, strong convexity, and smoothness). On the other hand, emergence of datasets of an unprecedented size, demands the development of novel and more efficient  optimization algorithms to tackle large-scale learning problems. 

The overarching goal of  this thesis is to reassess  the smoothness of loss functions in statistical learning, sequential prediction/online learning, and stochastic optimization and explicate its consequences. In particular we  examine  how leveraging  smoothness of loss function could be beneficial or detrimental in these settings in terms of sample complexity, statistical consistency, regret analysis,  and convergence rate. 

In the statistical learning framework, we investigate the sample complexity of learning problems when the loss function is smooth and strongly convex and the learner is provided with the target risk as a prior knowledge. We establish that under these assumptions, by exploiting the smoothness of loss function, we are able to improve the sample complexity of learning exponentially. Furthermore, the proof of our results is constructive and is rooted in a properly designed stochastic optimization algorithm which could be of significant practical importance. 

We also investigate the smoothness from the viewpoint of \textbf{statistical consistency} and show that in sharp contrast to  optimization and generalization where the smoothness is favorable because of its computational and theoretical virtues, the smoothness of surrogate loss function might deteriorate the binary excess risk. Motivated by this negative result, we provide a unified analysis of three types of errors including  optimization error, generalization bound, and the error in translating convex excess risk into a binary excess risk, and underline the conditions that smoothness might be preferred.

We then turn to elaborate the importance of smoothness in sequential prediction/online learning. We introduce a new measure to assess the performance of online learning algorithms which is referred to  as \textbf{gradual variation}. The gradual variation is measured by the sum of the distances between every two consecutive loss functions and is more suitable for gradually evolving environments such as stock prediction. Under smoothness assumption, we devise novel algorithms for online convex optimization with regret bounded by gradual variation. The proposed algorithms can take advantage of benign  sequences and at the same time protect against the adversarial sequences of loss functions. 

Finally, we investigate how  to exploit the smoothness of loss function  in convex optimization. Unlike the optimization methods based on full gradients, the smoothness assumption was not exploited by most of the existing stochastic optimization methods.  We propose a novel optimization paradigm that is referred to as \textbf{mixed optimization} which interpolates between stochastic and full gradient methods and is able to exploit the smoothness of loss functions  to obtain faster convergence rates in stochastic optimization, and condition number independent accesses of full gradients in deterministic optimization. The key underlying insight of mixed optimization is  to utilize infrequent full gradients of the objective function to progressively reduce the variance of the stochastic gradients.  These  results show an intricate interplay between stochastic and deterministic convex optimization to take advantages of their individual merits.

We also propose efficient  \textbf{projection-free} optimization algorithms to tackle the computational challenge arising from the projection steps   which are required at each iteration of most existing gradient based optimization methods to ensure the feasibility of intermediate solutions. In stochastic optimization setting, by introducing and leveraging smoothness, we develop novel methods which  only require one projection at the final iteration. In online learning setting, we consider online convex optimization with soft constraints where the constraints are allowed to be satisfied on long term. We show that by compromising on the learner's regret, one can devise efficient online learning algorithms with sub-linear   bound on both the regret and the violation of the constraints

\end{abstract}

\begin{frontmatter}

\begin{acknowledgements}
First and foremost, I feel indebted to my advisor, Professor Rong Jin, for his guidance, encouragement, and  inspiring supervision throughout the course of this research work. His  patience, extensive knowledge, and creative thinking have been the source of inspiration for me. He was available for advice or academic help whenever I needed  and gently guided me for deeper understanding, no matter how late or inconvenient the time is. When I was struggling to quit  my Ph.D. at Sharif University to join Rong's group, I was not sure about my decision, but after  four and a half years,   I am happy to say that I did not make  a wrong decision. It's hard to express how thankful I am for his unwavering support over the last  years.

I would like  to take on this opportunity to thank my thesis committee members Pan-Ning Tan, Ambuj Tewari, and Eric Torng who have accommodated my timing constraints despite their full schedules, and provided me with precious feedback for the presentation of the results, in both written and oral form.

During my Ph.D. studies, I had the pleasure of collaborating with many researchers from each and every one of which I had things to learn, and the quality  of my  research was considerably enhanced by these interactions. I would like to thank  Tianbao Yang and Lijun Zhang for all the discussions we had and the fun moments we spent on doing research and attending conferences.  The results of Chapter~5 and Chapter~8  represent part of the fruits of these collaborations. I also spent a summer as intern at Microsoft Research working with Ofer Dekel and a summer at NEC Research Labs working with Shenghuo Zhu. I learned a lot from them and would like to express my gratitude  for having me as an intern. I also would like to thank Elad Hazan, Satyen Kale, Phil Long, Shai Shalev-Shwartz, and Ohad Shamir for some helpful email correspondence. 

Living in East Lansing without my good friends would not have been easy. I want
to thank all my friends in the department and outside the department. I wish I could name you all.

Last but definitely not least, I want to express my deepest gratitude to my beloved
parents and my dearest siblings. Their  love and unwavering support have been crucial to my success, and a constant source of comfort and counsel.  Special thanks to my parents  for abiding by my absence in  last five years.

\end{acknowledgements}
\copyrightpage

\pagebreak

\begin{turn}{0}
\begin{minipage}{\linewidth}
\vspace*{6cm}
\begin{center}
\begin{minipage}{15cm}
\begin{quote}
\emph{``... theory is the first term in the Taylor series of practice"}
\\
\hspace*{\fill} --- Thomas M. Cover
\end{quote}
\end{minipage}
\end{center}
\end{minipage}
\end{turn}

\begin{dedication}
\null\vfil
\begin{center}
\hfill To my parents, Asieh and Rashid.\\
%\hfill my love, Rana.\hspace{2.72cm}$~$
\end{center}
\vfil\null
\end{dedication}

% You can delete the \clearpage lines if you don't want these to start on
% separate pages.
\setcounter{tocdepth}{2}
\tableofcontents
\clearpage
\listoffigures 
\clearpage
\listoftables 
\clearpage
\listofalgorithms
\clearpage
\clearpage

\end{frontmatter}

\pagestyle{headings}

\part{Background}
\chapter{Introduction} \label{chap:introduction}
\def \E {\mathbb{E}}
\def \D {\mathcal{D}}

In machine learning the goal is to learn from labeled examples in order to predict the labels of  unseen examples.  That is, given a training set, we aim to learn a  hypothesis, or a classifier that assigns labels to  samples that have never been  observed by the algorithm. Efficiently  finding a  hypothesis based on the training set which  minimizes some measure of performance is the main focus of machine learning. 

In order to study the learning problem in a mathematical framework,   it is necessary to define the framework in which the algorithm is to function.   Basically there are  two frameworks that have gained significant popularity within the last two decades:  the \textit{statistical learning} framework and the \textit{sequential prediction} or online learning framework. In  both settings \textit{mathematicical optimization} theory  plays an important role by  providing a unified framework to investigate the   \textit{computational} issues of learning algorithms.  Additionally,  tools from  convex optimization underline   the analysis in algorithmic machine learning that  targets most  of the practical learning algorithms  in both frameworks. 

This chapter is devoted to an overview of these three broad topics of statistical learning, sequential prediction/online learning, and convex optimization, aiming to develop a general correspondence between the first  two  and convex optimization.  In particular by characterizing sample complexity, statistical consistency, regret analysis,  and convergence
rate in terms of the properties of loss functions such as Lipschitzness, strong convexity, and smoothness, we elaborate the importance of smoothness~\footnote{The precise definition will be given later. We say that a continuously differentiable function  $f:\R^d \mapsto\R$ is $\beta$-smooth if its gradient is Lipschitz with constant $\beta$, i.e., $\|\nabla f(\w)-\nabla f(\w')\| \leq \beta \|\w - \w'\|$.} and explicate its consequences. Here we move towards the definitions in a fairly non-technical manner and the formal definitions will be given in Chapter~\ref{chap-background}.

%---------------------------------------------------------------------------------------------------------------------------
% Statistical Learning
%---------------------------------------------------------------------------------------------------------------------------

\section*{Classical Statistical Learning}
We begin by stating the  basic problem of binary classification in the standard passive supervised  learning setting (also called batch learning).  In binary classification, the learning algorithm is given a set of labeled examples  $\mathcal{S} = \left( (\x_1, y_1), \cdots, (\x_n, y_n) \right)$ drawn independent and identically distributed (i.i.d.) from a fixed but unknown  distribution $\D$ over the  space $\Xi = \mathcal{X} \times \mathcal{Y}$, where $\mathcal{X}$ is the instance space and $\mathcal{Y}$ is the label (target) space.  The goal, with the help of provided labeled examples, is to output a hypothesis or classifier $h$ from a predefined hypothesis class $\mathcal{H} = \{h: \mathcal{X} \mapsto \mathcal{Y}\}$  that does well on unseen examples coming from the same distribution.  In other words, we would like to find a hypothesis to generalize well from the training set to the entire domain of examples.

 To measure the performance of a classifier $h \in \H$ on unseen samples, we utilize a loss function $\ell: \mathcal{H} \times \Xi \mapsto \R_{+}$. The mostly used loss function in  binary classification problem with instance space defined as  $\Xi = \mathbb{R}^d \times \{-1,+1\}$ is the  0-1 loss  $\ell(h, (\x,y)) = \mathbb{I}{[h(\x) \neq y]}$, where $\mathbb{I}{[\cdot]}$ is the indicator function. The risk of a particular classifier $h\in\H$ is the probability that the classifier does not predict the correct label on a random data point generated
by the  underlying distribution $\D$, i.e., $\mc{L}_{\D}(h) = \mathbb{P}_{(\x,y) \sim \D} [h(\x) \neq y] = \E_{(\x,y)\sim \D}[\ell(h, (\x,y))]$. This performance measure is also called \textit{generalization error} or \textit{true risk} in statistical learning community.   An equivalent way to express the generalization bound is the \textit{sample complexity } analysis. Roughly speaking, the sample complexity of an algorithm is  the number of examples which is sufficient to ensure that, with probability at least $1-\delta$ (w.r.t. the random choice of $\mc{S}$), the algorithm picks a hypothesis  with an error that is at most $\epsilon$ from the optimal one.   The difference between the risk of a particular classifier $h$ and of the optimal classifier $h_* = \arg \min_{h \in \H} \mc{L}_{\D}(h)$ is called the excess risk of $h$, i.e., $\mathscr{E}(h) = \mc{L}_{\D}(h) - \mc{L}_{\D}(h_*)$.

Since the underlying distribution $\D$ is unknown to the learner, it is impossible for learner to directly minimize the generalization error or true risk.  Therefore, one has to resort to using the training data in $\mathcal{S}$ to estimate the probabilities of error for the classifiers in $\mathcal{H}$. This alternative approach  is known as the Empirical Risk Minimization (ERM) method and aims  to pick a hypothesis which has small  error on the training set, i.e., empirical risk.  The performance of the empirical risk minimization has been throughly investigated and well understood using tools from empirical process theory. It is a well stablished fact that  a problem is learnable with ERM method  if and if only if the empirical error for all hypothesis in $\mc{H}$ converges uniformly to the true risk. Furthermore,  the uniform convergence holds if the complexity of  hypothesis class $\mc{H}$ satisfies some combinatorial characteristics. This is  one of the main achievements of statical learning theory to characterize, and establish necessary and sufficient conditions for the learnability of learning problems using the ERM  rule.

 While the ERM method is theoretically appealing,  from a practical point of view one would like to consider problems that are efficiently learnable which is referred to as \textit{computational complexity} of learning algorithm.  This issue becomes more important by noting the fact that  in many cases ERM approach suffers from substantial problems such as computational requirements in minimizing 0-1 loss over training set.  Indeed, solving the ERM problem for 0-1 loss function is known to be an NP-hard problem. Consequently, it is natural to consider loss functions that act as surrogates for the non-convex  0-1 loss, and lead to practical algorithms.  Of course, such a surrogate loss must be reasonably related to the original binary loss function since otherwise this approach fails. For classification problem, good surrogate loss functions have been recently
identified, and the relationship between the excess classification risk and the excess
risk of these surrogate loss functions has been exactly described.

An important family of learning problems that can be learnt  efficiently  are called  \texttt{Convex Learning Problems}. In general, a convex learning problem is a setting in which the surrogate loss   function and  the hypothesis space $\mc{H}$ are both convex. This setting encompasses an enormous variety of well-know practical learning algorithms such as regression, support vector machines (SVMs), boosting, and logistic regression, where these algorithms differ in the type of the convex loss function being used as the surrogate of the 0-1 loss. 

Interestingly, for convex learning problems the ERM rule, of minimizing the empirical convex loss over a convex domain $\mc{H}$, becomes a convex optimization problem, making an intimate connection between machine learning and mathematical  optimization. Learnability in this setting  departs from the learnability via ERM method and  strongly depends on the characteristics of convex domain such as boundedness  and the analytical properties (curvature) of loss function such as Lipschitzness, smoothness (i.e, differentiable with Lipschitz gradients), and strong convexity (i.e., at any point  one can find a convex quadratic lower bound for the function). Beyond learnability, the sample complexity of learning algorithms can also be characterized in terms of  the analytical properties of loss function. Therefore, smoothness and strong convexity of convex surrogate loss function play a crucial role in characterizing learnability and analysis of sample complexity of convex learning problems.

\subsection*{Smoothness and sample complexity}
While the main focus of statistical learning theory was on understanding learnability and sample complexity by investigating the complexity of hypothesis class in terms of known combinatorial measures under uniform convergence property, recent advances in online learning and optimization theory opened a new trend in understanding  the generalization ability of learning  algorithms in terms of the characteristics of loss functions being used in convex learning problems. In particular, a staggering number of results have focused on strong convexity of loss function and obtained better generalization bounds which are referred to  as \textit{fast} rates.  In terms of smoothness of loss function,  recently it has shown that under this assumption, it is possible to obtain \textit{optimistic} rates (in the sense that smooth losses yield better generalization bounds when the problem is easier) which are more appealing than the case  the convex surrogate loss is Lipschitz continuous.  This motivates us to take a step forward in this direction and investigate the smoothness of loss functions in more depth. 

\subsection*{Smoothness and binary excess risk}

As noted above, the convex surrogates of the  0-1 loss are highly preferred because of the computational  and theoretical virtues that convexity brings in.  Since the choice of convex surrogates could significantly affect the binary excess risk, the relation between risk bounds in terms of binary 0-1 loss and it is corresponding convex surrogate has been the focus of learning community over the last decade. It was delivered that  the binary excessive risk can be upper bound by the convex excessive risk through a transform function that only depends on the surrogate convex loss. 

Although a great deal of work has been devoted to understanding the relation between binary excess risk and convex excess risk, there remain a variety of open problems. In particular,  this transformation is well understood under mild conditions such as convexity and it is unclear how other properties of convex surrogates such as  smoothness  may affect this relation. This becomes more  critical  if we consider smooth   surrogates  as  witnessed by the fact that the smoothness  is  further beneficial both computationally- by attaining an optimal  convergence rate for optimization error, and in a statistical sense- by providing an improved  optimistic rate for generalization bound.  Given these positive news of using smooth convex surrogates, an open research question is how the smoothness of a convex surrogate will affect the binary excess risk. So we are thrived  to investigate the impact of the smoothness of a convex loss function on transforming the excess risk in terms of the convex surrogate loss into  the binary excess risk.

%---------------------------------------------------------------------------------------------------------------------------
% Online Learning
%---------------------------------------------------------------------------------------------------------------------------
\section*{Sequential Prediction/Game Theoretic Learning}

An alternative paradigm to analyze the learning problems is the sequential or online learning that can be phrased as a repeated two-player game between the learner and the adversary, making an intimate connection between learning and adversarial game theory. 

In sequential prediction/online learning framework, the learner is faced with a sequence of samples appearing at discrete
time intervals and is required to make predictions sequentially. In contrast to statistical setting, in which, the data source is typically assumed to be  i.i.d. with an unknown distribution,  in the online framework we relax or eliminate any stochastic assumptions imposed on the samples and they might be chosen adversarially. As a result,  online learning framework  is better suited for adversarial and interactive learning tasks such as spam email detection and stock market prediction where decisions of the learner could negatively affect future instances the learner receives.

By dropping the  statistical assumptions on the  observed sequence, it is not immediately clear how the prediction problem can be made meaningful and which goals are reasonable. One popular possibility is to measure the performance of the learner by the loss he/she has accumulated during the learning process and compare it to the loss of best fixed solution. The cumulative loss suffered on a sequence of rounds is the sum of instantaneous losses suffered on each one of the rounds in the sequence.   In particular, the  goal becomes  to minimize the  gap between the cumulative loss of the online learner  and the loss of a strategy that selects the best action fixed in hindsight. This performance gap is called the \textit{regret}.  The analysis of regret mainly focuses on investigating how the regret depends on the length of the time horizon the game proceeds.  We note that the best fixed action is chosen form a comparator class of predictors against which the learner will be compared and   can only be computed in full knowledge of the sequence of loss functions.  

 Regret analysis stands in stark contrast to the statistical framework in which the learner  is evaluated based on his/her accuracy after seeing all training examples, making the online learning setting inherently harder.  The theoretical utility of online learning has long been appreciated. More recently, it has become the mainstay of optimization, where it serves as computational platform from which a variety of large-scale learning problems can be solved. The analogous of statistical learnability in online setting is referred to as Hannan consistency.  A hypothesis class is learnable in online setting, i.e., Hannan consistent,  if for any sequence of samples, there exists an algorithm which attains sub-linear regret in terms of number of rounds the interaction proceeds. Interestingly,  unlike statistical learning theory, the analysis of online learning is mostly algorithmic where efficient algorithms are proposed to solve the learning problem and its performance is analyzed to guarantee the Hannan consistency.  

%By dropping the  statistical assumptions on the  sequence, explicitly minimizing the total cumulative loss in an unreachable goal and thus we have to settle for a less ambitious goal. 

Recently tools from convex optimization made it possible to capture many online learning problems under a  generic problem template, and in many circumstances  obtain  improved regret bounds. This unified framework, which is referred to as \texttt{Online Convex Optimization}, assumes that the learner is forced to make decisions from a convex set  and the adversary is supposed to play convex functions.     Additionally, it has been demonstrated that  the curvature of convex loss functions  played by the adversary  such as strong convexity   gives a  great advantage to the player to attain better regret bounds.  Surprisingly, tools from online optimization also provided insights to get better convergence rates or more efficient algorithms for some stochastic and deterministic convex optimization problems.

\subsection*{Smoothness and regret bounds}
Unlike strong convexity, the smoothness of loss functions is not a desirable property in online setting as it yields the same regret bounds as the loss functions being Lipschitz continuous.  However, there are scenarios that the  smoothness of the sequence of loss functions played by the adversary becomes important. One such scenario is the online learning from  loss functions that might have  some patterns and not being  fully  adversarial. For example, the weather condition or the stock price at one moment may have some correlation with the next and their difference is usually small, while abrupt changes only occur sporadically. Therefore devising online convex optimization algorithms which can take into account the gradual behavior of the environment  and at the same time protect against the worst case  sequences would be more desireable.  In terms of regret analysis, this translates to having algorithms with regret bounded  in terms of \textit{variation} of loss functions  instead of time horizon that is  main measure in the  standard  setting of sequential prediction.   In these evolving settings, the  smoothness of loss function becomes critical. More importantly, no gradual variation bound is achievable if the loss functions are no longer smooth. This necessitates the need to develop online methods  that exploit the smoothness  assumption in the learning process or in the analysis to obtain improved regret bounds in terms of  variation of the sequence of loss functions and underlines our motivation in this thesis.

%---------------------------------------------------------------------------------------------------------------------------
% Convex Optimization
%---------------------------------------------------------------------------------------------------------------------------
\section*{Convex Optimization and Learning}

In the problem of convex optimization, we are interested in minimizing a given convex function $f: \mathbb{R}^d \mapsto \mathbb{R}$ form a predefined family of convex functions $\mc{F}$ over a convex set $\W \subseteq \mathbb{R}^d$.   The goal is to find an \textit{approximate} solution with an accuracy $\epsilon$, i.e., finding a $\widehat{\w} \in \W$ where $f(\widehat{\w}) -\min_{\w \in \W} f(\w) \leq \epsilon$. A typical optimization algorithm initially chooses a point from the feasible convex set $\w_0 \in \W$ and iteratively updates  these points based on some information about the function at hand  until it achieves the desired accuracy.

To capture the efficiency of an optimization procedure,  we follow the black-box model of optimization~\footnote{As indicated by Yurii Nesterov in his seminal book~\cite{nesterov2004introductory}, in general, optimization problems are unsolvable and we need to relax the goal to make it  reachable.}. In this model we assume that there exists an oracle  which provides information about the query points such as function value, gradient, and second gradient (i.e, Hessian). The number  of queries issued to an oracle  to find a solution with a predefined level of accuracy is called \textit{oracle complexity} when it is stated in terms of desired accuracy $\epsilon$ or equivalently \textit{convergence rate} when it is stated in terms of the number of queries.  

As  already mentioned, learning problems under both statistical and online learning frameworks can be directly formulated  as optimization problems. In statistical setting and especially convex learning problems,   the learning algorithm   corresponds to the optimization algorithm that solves the minimization problem of picking a hypothesis from the set of hypotheses that minimizes empirical loss  over training sample. Similarly, in the online convex optimization, the online learner iteratively chooses decisions from a closed, bounded and non-empty convex set and encounters convex cost functions.

Formulating and investigating both  statistical and online learning problems in the context of  convex optimization   makes an intimate connection between  learning and mathematical  optimization.   Therefore,  the study of fast iterative methods for approximately solving  convex programming problems is a central focus of research in convex optimization, with important applications in machine learning,  and many other areas of computer science. The usefulness of convex optimization  in the development of various
learning algorithms  is well established in the past several years. Additionally,  challenges exist in machine learning applications demand the development of new optimization algorithms.

In optimization for supervised machine learning and in particular the empirical risk minimization paradigm with convex surrogates and gradient information, there exist two regimes in which popular algorithms tend to operate:  \textit{deterministic} (also known as  batch optimization or full gradient method) regime in which whole training data are used to compute the gradient at each iteration and \textit{stochastic} regime which  samples a fixed number of  training samples  per iteration, typically a single training sample, to compute the gradient at each iteration.  Although stochastic optimization methods suffer from the low convergence rate in comparison to batch methods,  the lightweight computation per iteration makes them attractive for many large-scale learning problems. Hence,  with the increasing amount of data that is available for training,  stochastic convex optimization   has emerged as the most scalable approach for large-scale machine learning which is known to yield moderately accurate solutions in a relatively short time.  

We emphasize that the role of convex optimization goes beyond computational issues  and it also provides tools to characterize the learnability in convex learning problems via efficient stochastic optimization algorithms for learning these problems. 

Analogous to both statistical and online learning frameworks, the curvature of function to be optimized, significantly affects  the convergence rate of optimization methods. Perhaps the most extensively studied are strong convexity and smoothness of function.   

\subsection*{Smoothness and convergence rate}
Exploiting smoothness of loss function, in particular in stochastic optimization,  to obtain better convergence rate has been one the main research challenges  in recent years. Despite enormous advances in exploiting smoothness in deterministic optimization, it has not been utilized in stochastic optimization. In particular, stochastic optimization of smooth loss functions exhibits the same convergence rate as stochastic optimization under Lipschitzness assumption of function.   Therefore, this thesis is motivated by the need of developing stochastic optimization algorithms with better convergence rates under smoothness assumption. The key question is  whether or not smoothness property of loss functions could be leveraged to develop much faster stochastic optimization methods.

\subsection*{Smoothness and projection-free optimization}
At the core of many iterative constrained optimization algorithms in both online and stochastic convex optimization  is a projection step to ensure the feasibility of solutions for intermediate iterations. This is a serious deficiency, since in many applications the projection  onto the constrained domain might be computationally expensive and sometimes as hard as solving the original optimization problem. It is therefore of considerable interest to devise optimization methods which do not require projection steps or need a bounded number of   projection operations. At it will became clear later in this thesis, by smoothing a strongly convex objective function, we are able to reduce the number of projections into a single projection at the end of the optimization process. In contrast to the other parts of the thesis where we assume and  exploit the smoothness, this is the only result  that  injects  and leverages  the smoothness  to gain from the merits of smoothness to be able to devise more efficient algorithms.

%--------------------------------------------------------------------------------------------
% Contributions
%--------------------------------------------------------------------------------------------
\section*{Main Contributions}

In this section we shall elaborate on the main problems considered in this thesis and our
key contributions to address these problems. A common theme in all of the algorithms is that they exploit the smoothness of loss function for more efficient methods.

\subsection*{Part II: Statistical Learning}
\vspace{0.25cm}
\begin{itemize}
\item{\textbf{Logarithmic  sample complexity for learning from smooth and strongly convex losses with target risk.}}~The first problem we consider in this thesis has a statistical nature. In particular, we consider learning in passive setting but with a slight modification. We assume that the target expected loss, also referred to as \textit{target risk}, is provided in advance for  learner as  prior knowledge. Unlike most studies in the learning theory  that only incorporate the prior knowledge into the generalization bounds, we are able to explicitly utilize the target risk in the learning process. 

Our analysis reveals a surprising result on the sample complexity of learning: by exploiting the target risk in the learning algorithm,  we show that when the loss function is both smooth and strongly convex, the sample complexity reduces to $O\left(\log \left(\frac{1}{\epsilon}\right)\right)$, an exponential improvement compared to the sample complexity $O\left(\frac{1}{\epsilon}\right)$ for learning with strongly convex loss functions. 

 Unlike the previous works on sample complexity, the  proof of our result is constructive and is based on a computationally efficient stochastic optimization algorithm which makes it practically interesting. The proposed ClippedSGD algorithm uses knowledge of the target risk  to appropriately clip gradients obtained from a stochastic oracle. The clipping  is beneficial because it reduces the variance in stochastic gradients and makes it possible to reduce the sample complexity. This happens under the assumption that the loss function is smooth and strongly convex.

\item{\textbf{Statistical consistency of smoothed hinge loss.}}~The second problem we address in   statistical learning setting is to investigate the relation between the excess risk that can be achieved by minimizing the empirical binary risk  and  the excess risk of smooth convex surrogates. 

As mentioned earlier,  convex surrogates of the  0-1 loss are highly preferred because of the computational  and theoretical virtues that convexity brings in.  This is  of more importance if we consider smooth   surrogates  as  witnessed by the fact that the smoothness  is  further beneficial both computationally- by attaining an \textit{optimal}  convergence rate for optimization, and in a statistical sense- by providing an improved \textit{optimistic} rate for generalization bound. However, we investigate the smoothness property from the viewpoint of statistical consistency and show  how it affects the binary excess risk for smoothed hinge loss. In particular, we intend to answer the following fundamental questions:
 \begin{quote}
 "\textit{How does the smoothness of surrogate convex loss affect the binary excess risk? Considering the advantages of smooth losses in terms of optimization and generalization, is it beneficial or detrimental  in terms of statistical consistency? Under what conditions on these three types of errors it is better to use smooth losses?}"
 \end{quote}
We show that in contrast to optimization  and generalization errors  that favor the choice of smooth surrogate loss, the smoothness of loss function may deteriorate the binary excess risk. Motivated by this negative result, we provide a unified analysis that integrates optimization error, generalization bound, and the error in translating convex excess risk into a binary excess risk when examining the impact of smoothness on the binary excess risk.  We  show that under favorable conditions appropriate choice of smooth convex surrogate  loss will result in a binary excess risk that is better than $O(1/\sqrt{n})$ which is unimprovable for general  non-smooth Lipschitz losses.

\end{itemize}

\subsection*{Part III: Sequential Prediction/Online Learning}
\vspace{0.25cm}
\begin{itemize}

\item{\textbf{Regret bounded by gradual variation for smooth online convex optimization.}}~As our third problem, we study the online convex optimization problem under the assumption that even the loss functions are arbitrary, but there is a hidden pattern that can be exploited in learning process.  Therefore, an interesting question that inspires our work in the analysis of online learning algorithms is the following:
 \begin{quote}
"\textit{Can we have online algorithms that can take advantage of benign  sequences and at the same time protect against the adversarial sequences?}"
 \end{quote}
To answer this question, we introduce the \textit{gradual variation}, measured by the sum of the distances between every two consecutive loss functions, to asses the performance of online learning algorithms in gradually evolving environments such as stock prediction.  We propose two novel algorithms, an Improved Follow the Regularized Leader (IFTRL) algorithm and an Online Mirror Prox (OMP) method, that achieve a regret bound which only scales as the square root of the gradual variation for the linear and general smooth convex loss functions.

To establish the main results, we discuss a lower bound for online gradient descent, and  a necessary condition on the smoothness of the cost functions for obtaining a gradual variation bound.   For the closely related problem of prediction with expert advice, we show that an online algorithm modified from the multiplicative update algorithm can also achieve a similar regret bound for a different measure of deviation. Finally, for loss functions which are strongly convex in applications such as portfolio management problem,  we show a regret which is only logarithmic in terms of the gradual variation.

The gradual variation- in addition to its intrinsic interest as an extension of regret analysis- has several specific consequences.  First, since gradual variation lower bounds the regret bound, devising algorithm with small gradual variation is also guarantees  to achieve small regret.  Second,  algorithms with small  gradual variation are specifically designed to attain small variation, therefore they can capture the correlation between loss functions if it exist  and boost the performance.

\item{\textbf{Gradual variation for composite online convex optimization.}}~As an impossibility result for obtaining gradual variation for  convex losses, we show that  for non-smooth functions when the only information presented to the learner is the first order information about the cost functions, it is impossible to obtain a regret bounded by gradual variation.  However,  we show that  a gradual  variation  bound is achievable  for a special class of non-smooth functions that is composed of a  smooth component and a  non-smooth component. 

We consider two categories for the non-smooth component.  In the first category, we assume that the non-smooth component is a fixed function  and is relatively easy such that the composite gradient mapping can be solved without too much computational overhead compared to gradient mapping. In  the second category, we assume that the non-smooth component can be written as an explicit maximization structure. In general, we consider a time-varying non-smooth component,  present a primal-dual  prox method, and prove a min-max regret bound by gradual variation. When the non-smooth components are equal across all trials, the usual regret is bounded by the min-max bound plus a variation in the non-smooth component.

\end{itemize}

\subsection*{Part IV: Stochastic Optimization}
\vspace{0.25cm}
\begin{itemize}
\item{\textbf{Improved convergence rate for stochastic optimization of smooth losses.}}~We then turn to exploiting smoothness in stochastic optimization. Recently stochastic optimization methods have experienced a renaissance in the design of fast algorithms for large-scale learning  problems.  Unlike the optimization methods based on full gradients, the smoothness assumption was not exploited by most of stochastic optimization methods.  More importantly, for general Lipschitz continuous convex functions, simple stochastic optimization methods such as stochastic gradient descent  exhibit the same convergence rate as that for the smooth functions, implying that smoothness of the loss function is essentially not very useful and can not be exploited in stochastic optimization.   Therefore, by noting this  significant gap between the convergence rate of optimization of smooth functions in stochastic and deterministic optimization,  the natural question  arises is:
 \begin{quote}
 "\textit{Can  smoothness property of function be exploited  to speed up the convergence rate of stochastic optimization of smooth functions?}". 
 \end{quote}
We will provide an affirmative answer to this question. In particular,  we propose a novel optimization paradigm which interpolates between stochastic and full gradient methods and is able to exploit the smoothness of loss functions in optimization process to obtain faster rates. The results show an intricate interplay between stochastic and deterministic convex optimization. The {{MixedGrad}} algorithm we propose fits in the mixed optimization paradigm  and is  an alternation of deterministic  and stochastic gradient steps, with different of frequencies for each type of steps.   We show that  it attains an $O(1/T)$ convergence rate for smooth losses. 

\item{\textbf{Condition number independent accesses of full gradient oracle for smooth and strongly convex optimization.}} The optimal iteration complexity of the gradient based algorithm for smooth and strongly convex  objectives  is $O(\sqrt{\kappa}\log 1/\epsilon)$, where $\kappa$ is the conditional number (the ratio of strong convexity to smoothness parameters). Despite its linear convergence rate in terms of target accuracy $\epsilon$, in the case that the optimization problem is ill-conditioned, we need to evaluate a larger number of full gradients, which could be computationally expensive. Therefore, a natural question is:
 \begin{quote}
 "\textit{Can we manage the dependency on the condition number and devise optimization methods independent of condition number in accessing the full gradient oracle?}" 
 \end{quote}
We show that in the mixed optimization regime  introduced in this thesis, we may also leverage the smoothness assumption of loss functions to devise algorithms with iteration complexities  that are independent of condition number in accessing the  full gradient oracle.   We utilize the idea of mixed optimization in progressively reducing the variance of stochastic gradients to optimize smooth and strongly convex functions, and propose the Epoch Mixed Gradient Descent (EMGD) algorithm that is independent of condition number in accessing the full gradients.  Similar to the  MixedGrad  algorithm, a distinctive step in EMGD is the mixed gradient descent, where we use a combination of the full gradient and the stochastic gradient to update the intermediate solutions. By performing a fixed number of mixed gradient descents, we are able to improve the sub-optimality of the solution by a constant factor, and thus achieve a linear convergence rate. Theoretical analysis shows that EMGD is able to find an $\epsilon$-optimal solution by computing $O(\log 1/\epsilon)$ full gradients and $O(\kappa^2\log 1/\epsilon)$ stochastic gradients. 

We also provide experimental evidence complementing our theoretical results for classification problem on few medium-sized data sets.

\item{\textbf{Efficient projection-free online and stochastic convex optimization.}}~Another problem we address in this thesis is efficient projection-free optimization methods for stochastic and online convex optimization.  Our motivation stems from the observation that most  of the gradient-based optimization algorithms  require a projection onto the convex set $\mathcal{W}$ from which the decisions are made. While  the projection is straightforward for simple shapes (e.g., Euclidean ball), for arbitrary complex sets this is  the main computational bottleneck and may be inefficient in practice. For instance, for many applications in machine learning such as metric learning, the convex domain is the positive semidefinite cone for which is the projection step requires a full eigendecomposition. For many other problems, the projection step is itself an offline optimization problem and might be as hard as solving the original optimization problem. This observation immediately leads to the following question that inspires our work:
 \begin{quote}
 "\textit{To what extent it is possible to reduce the number of expensive projection steps in online and stochastic optimization? Can we  trade expensive projection steps off for other types of light computational operations?}". 
 \end{quote}
 We consider this problem in two settings: stochastic optimization and online convex optimization. In stochastic setting,  we develop  novel stochastic optimization algorithms that do not need intermediate projections. Instead, only one projection at the last iteration is needed to obtain a feasible solution in the given domain. Our theoretical analysis shows that with a high probability, the proposed algorithms achieve an $O(1/\sqrt{T})$ convergence rate for general convex optimization, and an $O(\ln T/T)$  rate for  strongly convex optimization under mild conditions about the domain and the objective function. The key insight which underlines the proposed projection-free algorithm for strongly convex functions is smoothing the objective function. This is in contrast to other problems in this thesis where we try to leverage the smoothness of objective, while  here we introduce smoothness to gain from its computational virtues in alleviating the projection steps.

In online setting, we consider an alternative online convex optimization problem. Instead of requiring that decisions belong to a constrained convex domain  for all rounds, we only require that the constraints, which define the convex set, be satisfied in the long run. By turning the problem into an online convex-concave optimization problem, we propose an efficient algorithm which achieves an $O(\sqrt{T})$  regret bound and an $O(T^{3/4})$ bound for the violation of constraints. Then we modify the algorithm in order to guarantee that the constraints are satisfied in the long run. This gain is achieved at the price of getting $O(T^{3/4})$  regret bound. We also prove an impossibility result which shows that simple ideas such as augmenting the objective function with penalized constraints   fail to solve the problem and results in a linear bound $O(T)$ for either the regret or the violation of the constraints.

%\newpage
\end{itemize}
\section*{Thesis Overview}
The remainder of this thesis is organized as follows. Chapter~\ref{chap-background} lays out the foundation for the rest of the thesis. In particular, we provide a survey of some of the background material from statistical learning,  sequential prediction/online learning theory, and as well as convex
optimization. It will become clear in this chapter that there exist deep connections between these three areas.  

Part~\ref{part-statistical} of the thesis focuses on the statistical learning, investigating the sample complexity of learning when the target risk is known to the learner and the consistency of smoothed hinge loss.  In Chapter~\ref{chap:passive_target} we focus on statistical learning with target risk under the assumption that the loss function is smooth and strongly convex. Chapter~\ref{chap:smooth_consistency} investigates the consistency of smoothed hinge loss and provides negative and positive results on transforming its excess risk to a binary excess risk.

Part~\ref{part-online} of the thesis is on sequential prediction/online learning and introduces the gradual variation measure to asses the performance of online convex optimization algorithms in gradually evolving environments. Chapter~\ref{chap:regret}   discusses the necessity of smoothness to obtain regret bounds in terms of  gradual variation, followed by two efficient algorithms to obtain gradual variation  bounds for smooth online convex optimization problems. The adaption to other settings such as expert advice problem and strongly convex loss functions are also discussed.  The extension of results to special composite loss functions with a smooth component is discussed in Chapter~\ref{chap:non-smooth}.

Part~\ref{part-stochastic} of the thesis is devoted  to devising efficient stochastic optimization algorithms by leveraging the smoothness of loss functions.  We propose the mixed optimization paradigm for stochastic optimization in Chapter~\ref{chap:mixed} and extend to smooth and strongly convex losses in Chapter~\ref{chap:mixed-strong}. The stochastic optimization methods with bounded projections and online optimization with soft constraints are elaborated in Chapter~\ref{chap:projection}. 

Finally, Part~\ref{part-appendix} summarizes rather standard things on convex analysis and concentration inequalities that are used in the proof of results in the thesis and  is mainly for reference. In order  to facilitate independent reading of various chapters, some of the definitions from convex analysis  are even repeated several times.

\section*{Bibliographic Notes}
Some of the results in this dissertation have appeared in prior publications. The material in Chapter~\ref{chap:passive_target} is based on a work published in Conference on Learning Theory (COLT)\cite{mahdavi-colt-2013} and the content of Chapter~\ref{chap:smooth_consistency} is new~\cite{mahdavi2014binary}. The material in Chapter~\ref{chap:regret} and Chapter~\ref{chap:non-smooth} come from~\cite{chiang2012online} which is published at COLT and its extended version has been recently published in Machine Learning journal~\cite{yang-2013-ml}.  The results in Chapter~\ref{chap:mixed} and Chapter~\ref{chap:mixed-strong} follow~\cite{mahdavi-nips-201e} and~\cite{mahdavi-nips-condition}, respectively, which are appeared in Advances in Neural Information Processing Systems (NIPS).  The content of Chapter~\ref{chap:projection} is mostly compiled from~\cite{sgd-one-2012},~\cite{long-term-2012}, and~\cite{mahdavi-nips-milti} which are published at NIPS and Journal of Machine Learning Research (JMLR).

%\section*{Notation}
%This section serves as a glossary for the mathematical symbols used in the thesis. Vectors are shown by lower case bold letters, such as $\x \in \R^d$. Matrices are indicated by upper case letters such as $A$ and their pseudoinverse is represented by $A^{\dagger}$. We use $[m]$ as a shorthand for the set of integers $\{1, 2, \ldots, m \}$. Throughout the paper we denote by $\|\cdot\|$ and $\|\cdot\|_1$ the $\ell_2$ (Euclidean) norm and $\ell_1$-norm, respectively. We use $\E$ and $\E_t$ to denote the expectation and conditional expectation with respect to all randomness in early $t-1$ trials, respectively.

\label{chap-background}

\chapter{Preliminaries}
\label{chap-background}

\def \E {\mathbb{E}}
\def \D {\mathcal{D}}
\def \Sb {\mathbb{S}} 
\def \Lh {\mathcal{L}_{\mathcal{S}}}
\def \Ld {\mathcal{L}_{\mathcal{D}}}
\def \Ls {\mathcal{L}_{\mathcal{S}}}

The goal of this chapter is to give a gentle and formal overview of the  material  related to the work has been done in this thesis.  In particular, we will discuss key concepts and questions in statistical learning, online learning, and convex optimization and will highlight the role of analytical properties of loss functions such as Lipschitzness, strong convexity, and smoothness in all these settings.   The exposition given here is necessarily very brief and the detailed discussion will be provided in the relevant chapters.

\section{Statistical Learning }

\subsection{Statistical Learning Model}
In a typical supervised  statistical learning problem (also known as passive learning or batch learning), we are given an instance space $\mathcal{X}$, and a space $\mathcal{Y}$ of labels or target values.  Each element in the data domain $\mathcal{X}$ represents an object to be classified, e.g., the content of an email in spam detection application or the features of an image in vision applications. The target space $\mc{Y}$ can be either discrete $\mc{Y} = \{-1,+1\}$, as in the case of classification, or continuous $\mc{Y} = \R$, as in the case of regression. To model  learning problems in statistical or probabilistic setting, we assume that the product space $\Xi \equiv \mathcal{X} \times \mathcal{Y}$ is endowed with a probability measure $\D$ which  is unknown to the learner.  However,  it is possible to sample  an arbitrary number of pairs $\mathcal{S} = \left((\x_1, y_1), (\x_2,y_2), \cdots, (\x_n,y_n) \right) \in \Xi^n$ according to the underlying distribution $\D$.  We term this set of examples the training
set, or the training sample. The existence of distribution $\D$  is necessary to ensure that the already collected samples $\mathcal{S}$  have something in common with the new and unseen data.

A hypothesis or classifier $h: \mathcal{X}\mapsto \mathcal{Y}$ is a function that assigns labels $h(\x) \in \mathcal{Y}$ to any  instance $\x \in \mathcal{X}$ such that the assigned label is a good approximation of the possible response $y$ to an arbitrary instance  $\x$ generated according to the distribution $\mathcal{D}$. In other words, the  hypothesis $h$ captures the functional relationship between the input instances  and the output, which in turn makes it possible to predict the output value for future input instances. 

In light of no free lunch theorem~\cite{wolpert1996lack}, learning is impossible unless we make assumptions regarding the nature of the problem  at hand. Therefore, when approaching a particular learning problem,  it is desirable to take into account some prior knowledge we might have about our problem.   In this regard, we assume that the learning algorithm is confined to a predetermined set of candidate hypotheses $\mathcal{H} = \{\mathcal{X} \mapsto \mathcal{Y}\}$  to which we wish to compare the result of our learning algorithm.   In a specific learning context, the hypothesis class can represent our
beliefs on the true nature of the classification rule for the problem.  For example the hypothesis class for binary classification might be a subset of a vector space with bounded norm  which represents the linear classifiers, i.e., $\mc{H} = \{\x \mapsto \text{sign}(\dd{\w}{\x}+b),~\w \in \R^d,~b \in \R,~\|\w\| \leq R \}$.

In order to measure the performance of a learning algorithm, we usually use a loss function $\ell: \H \times \Xi \mapsto \R_{+}$.  The instantaneous loss incurred by a learning algorithm on instance $\z = (\x,y) \in \Xi$ for picking hypothesis $h \in \H$ is given by  $\ell(h,\z)$.   For example, in \textit{binary classification} problems,  $\Xi=\mathcal{X} \times\{-1,1\}$,  $\H$ is a set of functions $h:\mathcal{X} \mapsto \{-1,1\}$, and the loss function is the binary or 0-1 loss defined  as $\ell_{0-1}(h(\x), y) = \mathbb{I}{[h(\x) \neq y]}$, where $\mathbb{I}[\cdot]$ is the indicator function that takes value 1 if its argument is true and 0 otherwise.   In \textit{regression}  or real classification problems, where the goal is to predict real valued labels $\mathcal{Y} = \R$, the common loss function used to evaluate the performance of a regressor $h$ on sample $\z = (\x, y)$  is the squared loss $\ell(h, \z) = (h(\x) - y)^2$.

A classifier is constructed on the basis of  the $n$ independent and identically distributed (i.i.d) samples $\mathcal{S} = \left((\x_1, y_1), (\x_2,y_2), \cdots, (\x_n,y_n) \right)$  from $\Xi$.  The ultimate goal of a typical learning algorithm  is to use as few samples as possible, and as little computation as possible, to pick a classifier $h \in \H$ that is competitive with respect to the best hypothesis from  $\H$ with respect to the \textit{expected risk} or \textit{generalization error} defined as: 
$$
\Ld(h) = \mathbb{E}_{\z \sim \D}[{\ell(h,\z)}],
$$
where  $\mathbb{E}[\cdot]$ denotes the expectation with respect to the (unknown) probability distribution $\D$ underlying our samples and $\ell(h, (\x,y))$ is the binary loss $\mathbb{I}{[h(\x) \neq y]}$ for classification and squared loss $(h(\x) - y)^2$ for regression problem.  For binary classification the generalization error of a hypotheses $h: \mathcal{X} \mapsto \{-1, +1\}$ is simply the probability that it predicts the wrong label on a randomly drawn instance from $\Xi$, i.e., $\mathcal{L}_{\D}(h) = \mathbb{P}_{(\x,y) \sim \D} {[h(\x) \neq y]}$. 

The difference between the risk of a particular classifier $h$ and of the optimal classifier $h_* = \arg \min_{h \in \H} \mc{L}_{\D}(h)$ is called the \textit{excess risk} of $h$, i.e., 
$$\mathscr{E}(h) = \mc{L}_{\D}(h) - \mc{L}_{\D}(h_*).$$

In designing any typical  solution to a supervised machine learning problem, there are few key questions that must be considered. The first of these concerns approximation that characterizes how rich the solution space $\mathcal{H}$ is to approximate the true underlying model.  The second fundamental issue  concerns estimation  that characterizes how well the obtained solution performs in making future
predictions on unseen data and how much training samples suffices to find the solution.  The third key question concerns the computational efficiency  that characterizes how efficiently can we make use of the training data to choose an accurate hypothesis.

The basic model to analyze learning algorithms in computational learning theory is the Probably Approximately Correct (PAC) model proposed by the pioneering work of Valiant~\cite{valiant1984theory}. It applies to learning binary valued functions and uses the 0-1 loss under realizability assumption, i.e., the algorithm gets  samples that are consistent with a hypothesis in a fixed  class $\H$,  $\exists\; h_* \in \mathcal{H}; \mathbb{P}_{(\x,y) \sim \D} [h_*(\x) = y] = 1$. In PAC model we bound the loss of the algorithm with a high probability over the random draw of samples. A decision-theoretic extension of the PAC framework which is known as \textit{agnostic  learning} is introduced by~\cite{kearns1994toward} that generalizes the PAC model to general loss functions and without assuming realizability assumption as defined below:

\begin{definition} [{Agnostic PAC Learnability}] A hypothesis class $\mc{H}$ is agnostic PAC learnable with respect to $\Xi$ and a loss function $\ell: \mc{H} \times \Xi \rightarrow \mathbb{R}_{+}$, if there exists a function $m_{\mc{H}}: (0,1)^2 \rightarrow \mathbb{N}$ and a learning algorithm $\mc{A}$ with the following property: for every $\epsilon, \delta \in (0, 1)$ and for any distribution $\D$ over the domain $\Xi$, when running the algorithm $\mc{A}$ on $m \geq m_{\mc{H}}({\epsilon}, \delta)$ i.i.d. examples generated by $\D$, the algorithm returns $h \in \mc{H}$ such that, with probability of at least $1 - \delta$, 
\[ \Ld(h) \leq \min_{h' \in \mc{H}} \Ld (h') + \epsilon.\]
If further $\mc{A}$ runs in $\text{poly}(1/\epsilon, 1/\delta, n)$, then $\mathcal{H}$ is said to be efficiently agnostic PAC-learnable.
\end{definition}

The goal of the PAC framework is to understand how large a data set needs to be in
order to give good generalization. It also provides bounds for the computational cost of
learning. In agnostic PAC learnability there are two fundamental questions that need to be addressed carefully: these are  \textit{computational efficiency} and \textit{sample complexity}.   

The  computational aspect of  learning  measures the amount of computation required to implement a learning algorithm. The sample complexity of an algorithm is  the number of examples which is sufficient to ensure that, with probability at least $1-\delta$ (w.r.t. the random choice of $\mc{S}$), the algorithm picks a hypothesis  with an error that is at most $\epsilon$ from the optimal one. We  note that while computational complexity concerns the efficiency  of learning, the sample complexity is  a statistical measure and concerns  the difficulty of learning from the hypothesis $\mathcal{H}$  with respect to the underlying distribution $\D$.   An equivalent way to present the sample complexity is to give a generalization  bound. It states that with probability at least $1-\delta$, to attain a  risk $\mathcal{L}_{\D}(h)$ which departs from the  optimal risk  $\min_{h' \in \mc{H}} \mathcal{L}_{\D}(h')$ by at most $\epsilon$,  is upper bounded by some quantity that depends on the sample size $n$ and $\delta$.

Sample complexity of passive learning is well established  and goes back to early works in the learning theory where the lower bounds  $\Omega\left(\frac{1}{\epsilon} ( \log \frac{1}{\epsilon}+ \log \frac{1}{\delta})\right)$  and $\Omega\left(\frac{1}{\epsilon^2} ( \log \frac{1}{\epsilon}+ \log \frac{1}{\delta})\right)$ were obtained   in classic PAC and general agnostic PAC settings, respectively~\cite{ehrenfeucht1989general,learnabilityvcdim89,anthony1999neural}. It worth emphasizing that the PAC framework is a distribution-free model  and we are interested in sample-complexity guarantees that hold regardless of the distribution $\D$  from which examples are drawn.

\subsection{Empirical Risk Minimization}

Since in the probabilistic setting, we assume  that there is an underlying probability distribution $\D$  over the sample space $\mathcal{X} \times \mathcal{Y}$ which captures the relationship between the samples given to the algorithm during training and the new instances it will receive in the future;  the training examples must be `representative' in some way of the examples to be seen in the future.  Clearly, learning is hopeless if there is no correlation between past and present
rounds. 

Note however that the distribution $\D$ is not known to the learner; the learner sees the distribution only through the training examples $\mathcal{S} = (\z_1, \z_2, \cdots, \z_n) \in \Xi^n$, and based on these examples, must learn to predict well on new instances from the same distribution. Therefore, we cannot compute 
 the generalization error directly and machine learning aims to find estimators  
 based on the observed data samples $\mathcal{S}$ .

%%%%%%%%%%%%%%%%%%%%%%%%%%%%%%%%%%%%%%%%
A simple and well-known learning approach is the empirical risk minimization (ERM) method. Basically, the idea of ERM is to replace the unknown true risk 
$$\Ld = \mathbb{P}_{(\x,y) \sim \D} {[h(\x) \neq y]},$$ 
by its empirical counterpart rooted in the training set $\mathcal{S}$ and minimize this empirical risk as defined below:
$$
\Ls(h) = \frac{1}{n} {\Big{|} \{i:  i \in [n] \;\;\text{and}\;\; h(\x_i) \neq y_i \}\Big{|}}.
$$
The empirical error $\mathcal{L}_{\mathcal{S}}(h)$ of any hypothesis $h\in \mathcal{H}$ is its average error over the training samples in $\mathcal{S}$, while the generalization error $\mathcal{L}_{\mathcal{D}}(h)$ is its expected error based on a random sample realized by the distribution $\D$.  We note that the empirical error is a useful quantity, since it can easily be determined from the training data and it provides a simple estimate of the true error.  The empirical loss over the training data provides an estimate whose loss is close to the optimal loss  if the class $\mathcal{H}$ is  sufficiently large so that the loss of the best function in $\mathcal{H}$ is close to the optimal loss and is  small enough so that finding the best
candidate in $\mathcal{H}$ based on the data is computationally feasible. In this regard, generalization error bounds give an upper bound on the difference between the true and empirical error of functions in a given class, which holds with high probability with respect to the sampling of the training set.

Then, our task becomes to evaluate the expected risk relying on  the empirical error.  Having this quantity bounded,  a learning algorithm may choose the hypothesis that is the most accurate on the sample, and is guaranteed that its loss on the distribution will also be low.    The statistical learning theory  is concerned with characterizing learnability and providing bounds on the deviations of this estimate from the expected error.  One of its main achievements   is a complete characterization of the necessary and sufficient conditions
for generalization of ERM, and for its consistency.

A fundamental  answer, formally proven for supervised classification and regression, is that learnability is equivalent to uniform convergence, and that if a problem is learnable, it is learnable via empirical risk minimization. 

\begin{definition} [Uniform Convergence] A hypothesis class $\mathcal{H}$ has the uniform convergence property with respect to $\Xi$ and the loss function $\ell: \mathcal{H} \times \Xi \mapsto \R_{+}$, if for any probability distribution  $\mathcal{D}$ over $\Xi$, there exists a function $m_{\mathcal{H}}: (0,1)^2 \mapsto \mathbb{N}$ such that for any sample $\mathcal{S}$ of size $m_{\mathcal{H}}(\epsilon, \delta)$ drawn i.i.d based on $\mathcal{D}$, with probability at least $1 - \delta$, $\forall h \in \mathcal{H}$ it holds:
$$ \big{|} \mathcal{L}_{\mathcal{S}}(h) - \mathcal{L}_{\mathcal{D}}(h) \big{|} \leq \epsilon.$$

\end{definition}

Hence, the crucial step towards
proving learnability is to obtain a result on the uniform convergence of
sample errors to true errors. Uniform convergence of empirical quantities to their mean provides ways to bound the gap between the expected risk and the empirical risk by the complexity of the hypothesis set.  Hence, the complexity of the hypothesis class $\mathcal{H}$ is the critical factor in determining the distribution-free sample-complexity of a supervised learning problem. Several complexity measures for hypothesis classes have been proposed, each providing a different type of guarantee including the Vapnik-Chervonenkis (VC) dimension~\cite{vapnik1971uniform} and the Rademacher complexity~\cite{bartlett2003rademacher,koltchinskii2001rademacher}. The main virtue of the Vapnik-Chervonenkis theorem and  Rademacher complexity is that they convert the problem of uniform deviations of empirical averages into a combinatorial  and data dependent problems, respectively.

We note that uniform convergence arguments is not the only  possible way to characterize learnability. Since the first results of Vapnik and Chervonenkis on uniform laws of large numbers for classes of binary valued functions, there has been a considerable amount of work aiming at obtaining generalizations and refinements of these bounds. These techniques include sample compression~\cite{floyd1995sample}, algorithmic stability~\cite{bousquet2002stability}, and PAC-Bayesian analysis~\cite{mcallester1998some} which  also have been shown for characterizing learnability and proving generalization bounds. We will also discuss the stochastic optimization machinery~\cite{shalev-shwartz:2010:learnability}  to characterize learnability in general settings later in this chapter.  

%%%%%%%%%%%%%%%%%%%%%%%%%%%%%%%%%%%%%%%%
\subsection{Surrogate Loss Functions and Statistical Consistency}
Although ERM approach has a lot of theoretical merits, since we should seek to minimize the training error based on 0-1 loss,  it typically is  a combinatorial problem, leading to NP-hard optimization problem which  is not computationally realizable.   

A common practice  to circumvent this difficulty is to revert to minimize a  surrogate loss function, i.e., to replace the indicator function by a  surrogate function  and find the minimizer with respect to this surrogate function.  Obviously, the surrogate loss needs to be computationally easy to optimize, while close in some sense to the 0-1 loss. In particular, if the surrogate function is assumed to be convex, it allows the optimization to be performed efficiently with only modest computational resources.  Examples of such surrogate loss functions for 0-1 loss  include  logistic loss $\ell_{\log}(h, (\x,y)) = \log (1+\exp(-y h(\x)))$ in logistic regression~\cite{friedman2000additive},  hinge loss $\ell_{\text{hinge}}(h, (\x,y)) = \max(0, 1- y h(\x))$ in support vector machines (SVMs)~\cite{cortes1995support} and exponential loss $\ell_{\text{exp}}(h, (\x,y)) = \exp(-y h(\x))$ in boosting (e.g., AdaBoost~\cite{freund1995desicion}).  When the hypothesis class $\mc{H}$  consists of functions that are linear in a parameter vector $\w$, i.e., linear classifiers, these  loss functions are depicted in Figure~\ref{fig:surrogate-losses}.

\begin{figure}
\begin{center}
\begin{tikzpicture}[domain=0:5,scale=0.8]

    \draw[black,dashed,line width=2,-] (2,5) -- (2.5,5) node[right] {0-1 loss};;
    \draw[red,line width=2,-] (2,4) -- (2.5,4)  node[right] {Hinge loss};
    \draw[black,line width=2,-] (2,3) -- (2.5,3)  node[right] {Logistic loss};
    \draw[blue,line width=2,-] (2,2) -- (2.5,2)  node[right] {Exponential loss};

%  \draw[step=1cm,color=gray] (-4,-2) grid (4,4);
  \draw[->,line width=0.5mm] (-6,0) -- (6,0) node[right] {$y \dd{\w}{\x}$};
  \draw[->, line width=0.5mm] (0,-2) -- (0,5.5) node[above] {$\ell(\w, (\x,y))$};
  
  \draw[domain=-1.6:5,smooth,variable=\x,blue,line width=0.5mm] plot ({\x},{exp(-\x)});     
  \draw[domain=-4:5,smooth,variable=\x,red,line width=0.5mm] plot ({\x},{max(0,1-\x)});
  \draw[domain=-3.2:5,smooth,variable=\x,black,line width=0.5mm] plot ({\x},{log2(1+exp(-\x))});
  
  \draw[color=blue,line width=0.6mm,black,style=dashed] (0,0) -- (4,0) node[right] {};
  \draw[color=blue,line width=0.6mm,black,style=dashed] (-4,1) -- (0,1) node[below] {};
\end{tikzpicture}
\end{center}
\caption[Illustrations of the 0-1 loss function and  its convex surrogates]{Illustrations of the 0-1 loss function, and three surrogate convex loss functions: hinge
loss, logistic loss, and exponential loss as scalar functions of $y \dd{\w}{\x}$.}
\label{fig:surrogate-losses}
\end{figure}
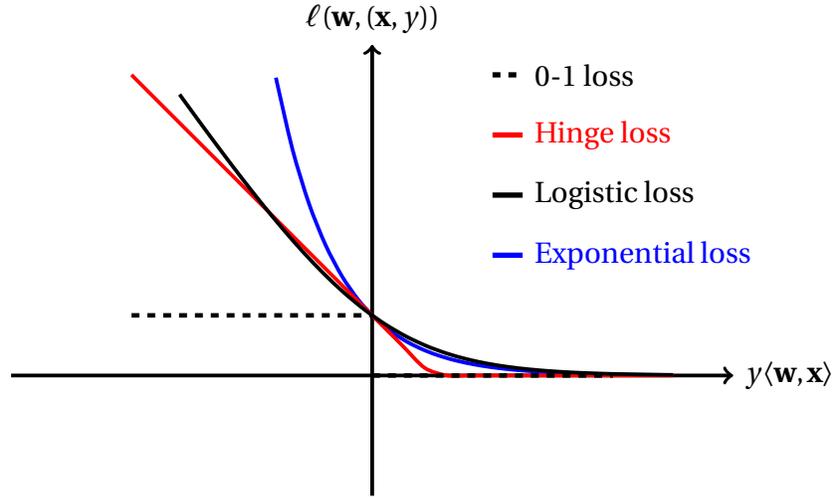

Having defined the surrogate loss functions, then the task is to minimize the relaxed empirical loss in terms of the surrogate losses. However, in practice, the ubiquitous approach to find the solution is  the regularized empirical risk minimization  which adds a regularization function  $\mathcal{R}(h)$ to the  objective and solves
\begin{eqnarray}\label{eqn:2:erm}
  h_{\mathcal{S}} \in \arg \min_{h \in \mathcal{H}} \left\{\Lh(h) + \mathcal{R}(h) \equiv  \frac{1}{n}{\sum_{i = 1}^{n} \ell(h,\z_i)}+\mathcal{R}(h)\right\}.
\end{eqnarray}
The goal of introducing regularizer is to prevent over-fitting. Of course, given some training data, it is always possible to build a function that fits exactly the data. But, in the presence of noise, this may  lead to a poor performance on unseen instances. An immediate consequence of adding the  regularization term is to favor simpler classifiers to increase its  generalization capability. Some of the commonly used regularizers  in the literature are $\mathcal{R}(h) = \|h\|_2^2$, and  $\mathcal{R}(h) = \|h\|_1$.  We note that  solving the optimization problem in~(\ref{eqn:2:erm}) is a convex optimization problem for which efficient algorithms exist to find a near optimal solution in a reasonable amount of time.  

Although the idea of replacing the non-convex 0-1 loss function with convex surrogate  loss functions seems appealing and resolves the efficiency issue of the ERM method, but it  has statistical consequences that must be balanced against the computational virtues of convexity.  The question then is how well does minimizing such a convex surrogate perform relative to minimizing the actual classification error. Statistical consistency concerns this issue.  Consistency requires convergence of the empirical risk to the expected risk for the minimizer of the empirical risk together with convergence of the expected risk to the minimum risk achievable by functions in $\mathcal{H}$~\cite{bartlett2006convexity}. An important line of research in statistical learning theory focused on relating the convex excess risk  to the binary excess risk. It is known that under  mild conditions, the classifier learned by minimizing the empirical loss of convex surrogate is consistent to the Bayes classifier~\cite{zhang2004statistical,lugosi2004bayes,jiang2004process,lin2004note,steinwart2005consistency,bartlett2006convexity}.  For instance, it was shown in~\cite{bartlett2006convexity} that the necessary and sufficient condition for a convex loss $\ell(\cdot)$ to be consistent with the binary loss is that $\ell(\cdot)$ is  differentiable at origin and $\ell'(0) < 0$. It was further established in the same work that the binary excessive risk can be upper bound by the convex excess risk through a $\psi$-transform that depends on the surrogate convex loss $\ell(\cdot)$. A detailed elaboration of this issue will be given in Chapter~\ref{chap:smooth_consistency} where we examine the statistical consistentency of smooth convex surrogates.

%%%%%%%%%%%%%%%%%%%%%%%%%%%%%%%%%%%%%%%%%%%%%%%%%
\subsection{Convex Learning Problems}
We  now turn our attention to  \textit{convex learning}  problems where  $\Xi$ be an arbitrary measurable set,  $\mathcal{H}$ be a closed, convex subset of a vector space,  and the loss  function $\ell(h, \z)$ be convex w.r.t. its first argument. This family of learning encompasses  a rich body of existing learning methods for which efficient algorithms exists such as support vector machines, boosting  and logistic regression.   Convex learning problems makes an important family of learning problems, mainly because most of what we can learn efficiently falls into this family.    

Before diving into formal definition, we need to familiarize ourselves with the following definitions about convex analysis~\cite{borwein2010convex,nesterov2004introductory} which will come in handy throughout this dissertation (for standard definitions about convex analysis see Appendix~\ref{chap:appendix-convex}).
\begin{definition}[Convexity] A set $\W$ in a vector space is convex if for any two vectors $\w, \w' \in \W$, the line segment connecting two points is contained in $\W$ as well. In other fords, for any $\lambda \in [0,1]$, we have that $\lambda \w + (1-\lambda)\w' \in \W$. A function $f: \W \mapsto \R$ is said to be convex if $\W$ is convex and  for every $\w, \w' \in \W$ and $\alpha \in [0,1]$,
\[ f(\lambda \w + (1-\lambda)\w') \leq \lambda f(\w) + (1-\lambda)f(\w'). \]
A continuously differentiable function is convex if $f(\w) \geq f(\w') + \dd{\nabla f(\w')}{\w-\w'}$ for all $\w,\w' \in \W$. If $f$ is non-smooth then this inequality holds for any sub-gradient $\g \in \partial f(\w')$.
\end{definition}
The formal definition of convex learning problems is given below.

\begin{definition}[Convex Learning Problem] A learning problem with hypothesis space $\mathcal{H}$, instance space $\Xi = \mathcal{X} \times \mathcal{Y},$ and the loss function $\ell: \Xi \times \mathcal{H} \mapsto \R_{+}$ is said to be convex if  the hypothesis class $\mathcal{H}$ is a parametrized convex set $\H = \{h_{\w}: \x \mapsto \langle \w, \x \rangle: \w \in \R^d,   \|\w\| \leq R\}$ and for all $\z = (\x, y) \in \Xi$, the loss function $\ell(\cdot, \z)$ is a non-negative convex function.  
\end{definition}
In the remainder  of thesis, when it is clear from the context, we will represent the hypothesis class with $\W$ and simply use vector $\w$ to represent $h_{\w}$, rather than working with  hypothesis $h_{\w}$.

 We note that for convex learning problems, the ERM rule becomes a convex optimization problem which can be efficiently solved. This stands in sharp contrast to non-convex loss functions such as 0-1 loss  for which solving  the ERM rule is computationally cumbersome and known to be NP-hard. Obviously, this efficiency comes at a price and not every convex learning problem is  guaranteed to be learnable and convexity by itself is not sufficient for learnability. This requires to impose more assumptions on the setting to ensure   the learnability of the problem. In particular,  it can been shown that if the hypothesis space $\W$ is bounded and the loss function is  Lipschtiz or smooth as formally defined below, then the convex learning problem is learnable~\cite{shalev-shwartz:2010:learnability,shalev2014understanding}.\\

\begin{definition}[Lipschitzness] A function $f: \W \mapsto \R$ is $\rho$-Lipschtiz over the set $\W$ if for every $\w, \w' \in \W$ we have that $|f(\w) - f(\w')| \leq \rho ||\w - \w'||$. \\
\end{definition}

\begin{definition}[Smoothness] A differentiable  loss function $f: \W \mapsto \R$ is said to be  $\beta$-smooth  with respect to a norm $\|\cdot\|$, if it holds that
\begin{equation}
\label{eqn:smoth}
f(\w) \leq f(\w') +  \dd{\nabla f(\w')}{\w-\w'} + \frac{\beta}{2}\|\w-\w'\|^2, \;\forall \; \w,\w'\in\W.
\end{equation}
\end{definition}
We note that smoothness also follows if the gradient of the loss function is $\beta$-Lipschtiz, i.e., $\| \nabla f(\w) - \nabla f(\w')\| \leq \beta \|\w - \w' \|$. Smooth functions arise, for instance, in logistic and least-squares regression,  and in general for learning linear predictors where the loss function has a Lipschitz continuous gradient.

There has been an upsurge of interest  over the last decade in finding tight upper bounds on the sample complexity of convex learning problems   by utilizing prior knowledge on the  curvature of the loss function,   that led to stronger generalization bounds  in agnostic PAC setting. In~\cite{DBLP:journals/tit/LeeBW98}  \textit{fast} rates obtained for squared loss, exploiting the strong convexity of this loss function, which only holds under pseudo-dimensionality assumption.  With the recent development in online strongly convex optimization~\cite{hazan-log-newton}, fast rates approaching $O(\frac{1}{\epsilon} \log \frac{1}{\delta}) $ for convex Lipschitz strongly convex loss functions has been obtained in~\cite{kakade-2008-strong,fastrates2008,compl-linear-nips-2008} and for exponentially concave loss functions in~\cite{mahdavi2014excess}. For smooth non-negative loss functions,~\cite{srebro-2010-smoothness} improved the sample complexity  to \textit{optimistic} rates for non-parametric learning using the notion of local Rademacher complexity~\cite{bartlett2005local}.

%%%%%%%%%%%%%%%%%%%%%%%%%%%%%%%%%%%%%%%%%%%%%%%
\section{Sequential Prediction/Online Learning}
The statistical model discussed above, first assumes the existence of a stochastic
model for   generating  instances  according to the underlying distribution $\D$, and then samples a training set $\mc{S} = \left( (\x_1,y_1), (\x_2,y_2), \cdots, (\x_n,y_n)\right)$   and investigates the ERM strategy to find a hypothesis $h \in \mc{H}$ which generalizes well on unseen instances.  Although, this model is valid for  cases for which  a tractable statistical model reasonably describes the underlying
process, but it may be unrealistic in practical problems where the process is hard to model from a statistical viewpoint and may even react to the learners's decisions, e.g.,  applications such as portfolio management, computational finance, and whether prediction. 

Sequential prediction/online learning (also known as universal prediction of individual sequences) is a strand of learning theory avoiding making any stochastic assumptions about the way the observations are generated and the goal is to develop prediction methods that are robust in the sense that they work well even in the worst case.  For instance, in the problem of online portfolio management~\cite{bell1988game}, an online investor wants to distribute her wealth on a set of available financial instruments without any assumption on the market outcome in advance. Obviously in applications of this kind, the main challenge  is that the learner can not make any statistical assumption about the process generating the instances and the data are continuously evolving or adversarially changing. Online learning or sequential prediction is an elegant paradigm  to capture  these problems  that alleviates  the statistical assumption usually made in statistical setting and was introduced in the seminal works of Hannan~\cite{hannan1957approximation}  and Blackwell~\cite{blackwell-approach} in the 1950's in repeated game playing  and the work of Littlestone~\cite{littlestone1988learning} in the 1990's in  learning theory (see e.g.,~\cite{bianchi-2006-prediction} and~\cite{Shalev-Shwartz12-book} for through discussion).

\subsection{Mistake Bound Model and Regret Analysis}

 The problem of sequential prediction may be cast as a repeated game between a decision maker- also called the forecaster- and an environment- also called adversary. In this model, learning proceeds in $T$ consecutive rounds, as we see examples one by one.  At the beginning of  round $t$, the learning algorithm $\mc{A}$ has the hypothesis $h_t \in \mc{H}$ and the adversary picks an instance $\z_t = (\x_t, y_t)$.  The adversary at round $t$ can select the instance $\z_t \in \Xi $ in an adversarial worst case fashion based on previous instances $\z_1,\ldots,\z_{t-1}$ and based on previous hypotheses $h_1,\ldots, h_{t-1}$ selected by the learner. Then, the learner receives the instance $\x_t$ and predicts $h_t(\x_t)$.  At the end of the round, the true label $y_t$ is revealed to the learner and $\mc{A}$ makes a mistake if $h_t(\x_t) \neq y_t$. Unlike statistical setting, here the prediction task is sequential: the outcomes are only revealed one after another; at time $t$, the learner guesses the next outcome $y_t$  before it is revealed.  The algorithm then updates its hypothesis, if necessary, to $h_{t+1}$ and this continues till time $T$. 
 
The sequential prediction model discussed above,  which is reminiscent  of the framework of competitive analysis~\cite{borodin1998online},  is known as \textit{mistake bound model}  and was introduced to learning community in~\cite{littlestone1988learning}. In this model  the goal of the learner is to  sequentially deduce information from previous rounds so as to improve its predictions on future rounds.   When the  algorithm is conservative (or lazy), meaning that the algorithm only changes its hypothesis  when
it makes a mistake is called \textit{mistake driven}.  We note that  many seemingly unrelated problems fit in the framework of the abstract sequential decision problem  including online prediction problems in the experts model~\cite{cesa1997use}, Perceptron like classification algorithms~\cite{block1962perceptron},  Winnow algorithm~\cite{littlestone1988learning}, and learning in repeated game playing~\cite{freund1999adaptive}- to name a few. Here we list few  sequential prediction problems to better illustrate the setting. \\
 
 \noindent \textbf{Example 1: Online classification and regression.} As an illustrative example let us consider the  \textit{online binary classification}  problem where at each round $t$ the learner receives an instance $\x_t \in \mc{X}$ as input and is required to predict a label $\hat{y}_t \in \mc{Y} = \{-1,+1\}$ for the input instance. Then, the learner receives the correct label $y_t \in \mc{Y}$ and  suffers the 0-1 loss: $\mathbb{I}[y_t \neq \hat{y}_t]$.  As another example, let us consider \textit{online regression} problems where  at each round $t$ a feature vector $\x_t \in \R^d$ is given to the online learner, and a value $y_t \in \R$ has to be estimated using linear predictors with bounded norm, i.e, $\mathcal{H} = \{\x \mapsto {\w, \x}: \w \in \R^d: \|\w\| \leq R\}$, that the learner predicts $\dd{\w}{\x_t}$. The loss function at round $t$ for a predictor $\w$ is $\ell_t(\w) = (y_t - \dd{\w_t}{\x_t})^2$.  \\
 
 \noindent \textbf{Example 2: Prediction with expert advice.}  Every day the manager of a company should decide to produce one of the $K$ different products without knowing the market demand in advance. At the end of the day, he/she will be informed the gain achieved by  selling the product but nothing about the potential income from other products. The goal of the manager is to maximize the income of company over a sequence of many  periods. This is a problem of repeated decision-making which is called learning from expert advice where the objective functions  to be optimized are unknown and revealed (perhaps only partially) in an online manner. In the general prediction with expert advice game a learner competes against $K \in \mathbb{N}$ experts in a game consisting of $T$ rounds. Each round $t$,  each expert reveals a prediction from $\mc{Y} = \{0,1\}$. The learner  form its own prediction by sampling an expert from $h_t = \w_t \in \mc{H} \equiv \Delta_K$,    where $\Delta_K$ is the set of probabilities over $K$ experts (i.e., simplex). The true outcome $y_t$ is then revealed and the learner and all of the experts receive a penalty depending on how well their prediction fits with the revealed outcome. The aim of the learner in this game is to incur a cumulative loss over all rounds that is not much worse than the best expert.

One natural measure of the quality of learning in mistake bound model of sequential setting is the number of worst case mistakes the learner makes. In particular, under the realizability assumption  (i.e. where there exists a hypothesis in $\mc{H}$  which performs perfectly on the sequence),  the learner's goal becomes to have a bounded number of mistakes   which is known as \textit{mistake bound}. The optimal mistake bound for a hypothesis class 
$\mc{H}$ is the minimum mistake bound  over all learning algorithms $\mc{A}$ on the worst case sequence of examples:
\begin{eqnarray}\label{eq:regret}\index{regret}
\text{Mistake}(\mc{A}, \mc{H}, \Xi,T) = \min_{h_1, h_2, \cdots, h_T \in \mc{H}} \max_{\x_1, \x_2, \cdots, \x_T \in \mc{X}} \sum_{t=1}^T \mathbb{I}{[h_t(\x_t) \neq h(\x_t)]},
\end{eqnarray}
where $h_1, h_2, \cdots, h_T \in \mathcal{H}$ is the sequence of hypothesis generated by $\mc{A}$. 

We say that a hypothesis class $\mc{H}$ is learnable in the online learning model if there
exists an online learning algorithm $\mc{A}$  with a finite worst case mistake bound, no matter how long the sequence of examples $T$ is. We note that the mistake bound model, in comparison to PAC model, is strong in the sense that it does not depend on any assumption about the instances.   It is also remarkable  that despite the inherent differences between PAC and mistake bound frameworks, mistake bounds have corresponding risk bounds that are not worse, and sometimes better, than those obtainable with a direct statistical approach. In particular, by a simple reduction,  it is straightforward  to show that if an algorithm $\mathcal{A}$ learns a hypothesis class $\H$ in the mistake bound model, then $\mathcal{A}$ also learns $\H$ in the probably approximately correct model.

We note that due to impossibility theorem by Cover~\cite{cover-imposiibility-1965}, any online predictor that makes deterministic predictions is doomed to achieve a sub-linear  regret universally for all sequences. To  circumvent this obstacle,  two typical solutions have been examined which are  randomization and convexification. In the former we allow the learner to make randomized predictions, making the algorithm unpredictable against the adversary,  and in the latter we replace the non-convex 0-1 loss with a cover surrogate loss function (see e.g.,~\cite{blackwell1995minimax} and~\cite{Shalev-Shwartz12-book}).

Similar to statistical learning, we can also generalize online setting to the agnostic (a.k.a. non-realizable) setting where there is no classifier in $\mc{H}$ which performs perfectly on the sequence. In this case an adversary can make the cumulative loss of our online learning algorithm arbitrarily large.  To overcome this deficiency, the performance of the forecaster is compared to  some notion of "how well it could have performed". In particular, the performance of the online learner is compared to that of the best single decision for the
sequence, in hindsight, chosen from the hypothesis in $\mathcal{H}$.  This brings us to the objective which is  commonly known as \textit{regret} which is formally defined by:
\begin{eqnarray}
\label{eqn:regeret}
\text{Regret}(\mc{A}, \mc{H},\Xi,T) = \sum_{t=1}^{T}{\ell(h_t,  (\x_t, y_t))} - \min_{h \in \mc{H}} \sum_{t=1}^{T}{\ell(h, (\x_t,y_t))}
\end{eqnarray}
where $\ell(h, (\x,y))$ is the loss function to measure the discrepancy between the prediction $h(\x)$ and the corresponding observed element, e.g., 0-1 loss function $\mathbb{I}{[h(\x) \neq y]}$ for binary classification.  

Regret measures the difference between the cumulative loss of the learner's strategy and the minimum possible loss had the sequence of loss functions been known in advance and the learner could choose the best fixed action in hindsight. In particular, we are interested in rates of increase of $\text{Regret}(\mc{A}, \mc{H}, \Xi, T)$ in terms of $T$. When this is sub-linear in the number of rounds, that is,  $o(T)$, we call the solution Hannan consistent \cite{bianchi-2006-prediction}, implying that the learner's average per-round loss approaches the average per-round loss of the best fixed action in hindsight. It is noticeable that  the performance bound must hold for any sequence of loss functions, and in particular  if the sequence is chosen adversarially.  \\

\noindent \textbf{Online learnability.} In the online setting, the analogous of PAC learnability  was addressed by Littlestone~\cite{littlestone1988learning} who described a combinatorial characterization of hypothesis classes that are learnable in mistake bound model under realizability assumption.  The extension of these results to agnostic online setting was addressed in~\cite{agnostic-online-2009}.  Recall that, in the PAC model, VC-dimension of hypothesis class $\mc{H}$ characterizes learnability of $\mc{H}$ if we ignore computational considerations. Moroever, VC-dimension characterizes learnability in the agnostic PAC model as well.  In online setting what is known as Littlestone's dimension plays the same rule.  Recently, notion of Sequential Rademacher complexity  has been introduced  to   characterizing online learnability which plays the similar role as the Rademacher complexity in statistical learning theory~\cite{DBLP:conf/nips/RakhlinST10}. 
%%%%%%%%%%%%%%%%%%%%%%%%%%%%%%%%%%%%%%%%%%%%%%%%
\subsection{Online Convex Optimization and Regret Bounds}
The online convex optimization (OCO) framework generalizes many known online learning problems in the realm of sequential  prediction and repeated game playing. Among these are online classification and sequential portfolio optimization. The unified  setting of OCO was introduced in \cite{gordon1999regret} and the exact term was used before in~\cite{DBLP:conf/icml/Zinkevich03}. Since the introduction of OCO, there have been a dizzying number of extensions and variants that is the focus of this section. 

Assume we are given a fixed convex set $\W$ and some set of convex functions $\mc{F}$ on $\W$. In OCO, a decision maker is iteratively required to choose a decision $\w_t\in \W$. After making the decision $\w_t$ at round $t$, a convex loss function $f_t \in \mc{F}$ is chosen by adversary and the decision maker incurs loss $f_t(\w_t)$. The loss function  is chosen completely arbitrarily and even in an adversarial manner given the current and past decisions of the decision maker.  Online linear optimization is a special case of OCO in which the set $\mc{F}$ is the set of linear functions, i.e., $\mc{F}=\{\w \mapsto \dd{\mathbf{f}}{\w}: \mathbf{f}\in \R^d\}$. The goal of online convex optimization is to come up with a sequence of solutions $\w_1, \ldots, \w_T$ that minimizes the \textit{regret}, which is defined as the difference in the cost of the sequence of decisions accumulated up to the trial $T$ made by the learner and the cost of the best fixed decision in hindsight, i.e.
$$
\text{Regret}(\mc{A}, \W, \mathcal{F}, T) = \sum_{t=1}^{T}{f_t(\w_t)} - \min_{\w \in \W} \sum_{t=1}^{T}{f_t(\w)}.
$$
Based on the  type of feedback revealed to learner by adversary at the end of each iteration, we distinguish
two types of OCO problem.  In the \textit{full information} OCO,  after suffering the loss, the decision maker gets full knowledge of the function $f_t(\cdot)$. In the \textit{partial information} setting (also bandit  OCO), the decision maker only learns the value $f_t(\w_t)$ and does not gain any other knowledge about $f_t(\cdot)$. 

We also distinguish between  \textit{oblivious} and \textit{adapive} adversaries.  In the oblivious  or non-adaptive model, adversary is assumed to know our algorithm, and can pick the worst possible sequence of cost functions for it. However this sequence must be fixed in advance before game starts; during the game, adversary receives no feedback about our chosen decisions.  In the more powerful adaptive  model,  the adversary is assumed to know not only our algorithm, but  also the history of the game up to the current round. In other words, at the end 
of each round $t$, our decision $\w_t$ is revealed to adversary, and the next cost function 
$f_t(\cdot)$ may depend arbitrarily on $\w_1, \cdots, \w_t$.

The design of algorithms for regret minimization in OCO setting recently has been influenced by tools from the convex optimization. It has long been known that special kinds of loss functions permit tighter regret bounds than other loss functions.  Two most important family of loss functions that has been considered are convex Lipschtiz and strongly convex functions. 

Before presenting the known results on regret bounds for different families of loss functions, we will first need  the definition of strong convexity as:
\begin{definition}[Strong Convexity] A loss function $f:\W \mapsto \R$ is said to be $\alpha$-strongly convex w.r.t a norm $\|\cdot\|$, if  there exists a constant $\alpha > 0$ (often called the modulus of strong convexity) such that, for any $\lambda\in[0, 1]$ and for all $\w, \w'\in \W$, it holds  that
\begin{equation*}
f(\lambda \w+ (1-\lambda)\w')\leq \alpha f(\w) + (1-\lambda) f(\w') - \frac{1}{2}\lambda(1-\lambda)\alpha\|\w-\w'\|^2.
\end{equation*}
\end{definition}
If $f(\w)$ is twice differentiable, then an equivalent  definition of strong convexity is $ \nabla^2 f(\w) \succeq \alpha \mathbf{I}$ which indicates that  the smallest eigenvalue of the Hessian of $f(\w)$ is uniformly
lower bounded by $\alpha$ everywhere. When $f(\w)$ is differentiable, the strong convexity is equivalent to
\[f(\w) \geq f(\w') + \langle \nabla f(\w'), \w-\w'\rangle + \frac{\alpha}{2}\|\w-\w'\|^2,\; \forall \;\w, \w'\in\W.\]

Henceforth, we shall review few algorithms for  OCO  and state their regret bounds  under different assumptions on the curvature of the sequence of the adversarial loss functions. 

\subsubsection{Online Gradient Descent}
We start with the first, perhaps the simplest  online convex optimization algorithm, which nevertheless captures the spirit of the idea behind the most of existing methods. This algorithm  applies to the most general setting of online convex optimization and is referred to as Online Gradient Descent (OGD). The OGD method is rooted in the standard
gradient descent algorithm and was introduced to the online setting by Zinkevich~\cite{DBLP:conf/icml/Zinkevich03}. 

The OGD method  starts with an arbitrary decision $\w_1 \in \W$, and iteratively modifies it according to the cost functions that are encountered so far as follows:

%%%%%%%%%%%%%%%%%%%%%%%%%%%%%%%%%%%%
\begin{figure}[H]
\begin{center}
\begin{myalg}{Online Gradient Descent}{OGD}
{\bf Input:} \+ convex set $\W$, step size $\eta > 0$ \-\\ 
{\bf Initialize:} \+ $\w_1 \in \W$  \- \\ \\
{\bf for} $t=1,2,\ldots, T$  \+ \\
Play $ \w_{t+1} =  \Pi_{\W} \big{(} \w_t- \eta \nabla f_t (\w_t) \big{)} $  \\
Receive loss function $f_{t+1}(\cdot)$ and incur $f_{t+1}(\w_{t+1})$ \- \\
{\bf end for} 
\end{myalg}
\end{center}
\end{figure}
%%%%%%%%%%%%%%%%%%%%%%%%%%%%%%%%%%%%

Here $\Pi_{\W}(\cdot)$ denotes the orthogonal projection onto the convex set $\W$.

The OGD algorithm is straightforward to implement, and updates take time $O(d)$ given the gradient. However, the projection step might be computationally cumbersome for complex domains $\W$.  The following theorem states  that the regret bound for OGD method when applied to convex Lipschitz functions  is on the order of $O(\sqrt{T})$ which has been proven to be tight up to constant factors~\cite{abernethy2008optimal}. 

\begin {theorem} \label{thm-chapter-2-ogd}
Let $f_1, f_2, \ldots, f_T$ be an arbitrary sequence of convex, differentiable functions defined over the convex set $\W \subseteq \mathbb{R}^d$. Let $G = \max_{t \in [T]} || \nabla f_t(\w)||$ and $R = \max_{\w, \w' \in \W} ||\w-\w'|| $. Then \text{OGD} with step size $\eta_t = \frac{R}{G\sqrt{t}}$ achieves the following guarantee for any $\w \in \W$, for all $T \geq 1$.  
\begin{eqnarray}
 \sum_{t=1}^{T}{f_t(\w_t)} - \sum_{t = 1}^{T}{f_t(\w)} \leq \frac{\|\w - \w_1\|^2}{2\eta} + \frac{\eta}{2} \sum_{t=1}^{T}{\|\nabla f_t(\w_t)\|^2}= O(R G\sqrt{T})
\end{eqnarray}  
\end {theorem}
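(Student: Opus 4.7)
The plan is to bound the regret via a potential-function argument where the potential is $\Phi_t = \|\w_t - \w\|^2$ for an arbitrary comparator $\w \in \W$, and then sum a telescoping inequality obtained from one step of OGD. The first key fact I would use is that Euclidean projection onto a convex set is non-expansive, i.e.\ $\|\Pi_{\W}(\u) - \w\| \leq \|\u - \w\|$ for any $\w \in \W$; this lets me get rid of the projection when passing from the update rule to a bound on $\|\w_{t+1} - \w\|^2$.

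Concretely, I would expand
\[
\|\w_{t+1} - \w\|^2 \leq \|\w_t - \eta \nabla f_t(\w_t) - \w\|^2 = \|\w_t - \w\|^2 - 2\eta \dd{\nabla f_t(\w_t)}{\w_t - \w} + \eta^2 \|\nabla f_t(\w_t)\|^2,
\]
solve for $\dd{\nabla f_t(\w_t)}{\w_t - \w}$, and then invoke the first-order characterization of convexity (from the Convexity definition recalled earlier in the chapter), namely $f_t(\w_t) - f_t(\w) \leq \dd{\nabla f_t(\w_t)}{\w_t - \w}$, to replace the inner product by the instantaneous regret $f_t(\w_t) - f_t(\w)$.

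Summing this inequality over $t = 1, \ldots, T$ collapses the $\|\w_t - \w\|^2$ terms telescopically, leaving
\[
\sum_{t=1}^{T} \bigl(f_t(\w_t) - f_t(\w)\bigr) \leq \frac{\|\w_1 - \w\|^2 - \|\w_{T+1} - \w\|^2}{2\eta} + \frac{\eta}{2}\sum_{t=1}^{T} \|\nabla f_t(\w_t)\|^2 \leq \frac{\|\w - \w_1\|^2}{2\eta} + \frac{\eta}{2}\sum_{t=1}^{T} \|\nabla f_t(\w_t)\|^2,
\]
which is exactly the first bound displayed in the theorem. Plugging in the diameter bound $\|\w - \w_1\| \leq R$ and the gradient bound $\|\nabla f_t(\w_t)\| \leq G$, and optimizing the constant step size as $\eta = R/(G\sqrt{T})$, yields the $O(RG\sqrt{T})$ rate.

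The only real subtlety is that the theorem states a time-varying step size $\eta_t = R/(G\sqrt{t})$ rather than the constant $\eta$ that made the telescoping clean. To handle this without knowing $T$ in advance, I would redo the per-step inequality with $\eta_t$, rearrange it so that the coefficient in front of $\|\w_t - \w\|^2$ is $1/(2\eta_t)$ and the coefficient in front of $\|\w_{t+1} - \w\|^2$ is $1/(2\eta_t)$ as well, and then sum by parts: the sum of differences $\tfrac{1}{2}\sum_t \|\w_t - \w\|^2 (1/\eta_t - 1/\eta_{t-1})$ telescopes (with the convention $1/\eta_0 = 0$) and is at most $R^2/(2\eta_T)$, while the residual term contributes $\tfrac{G^2}{2}\sum_t \eta_t = O(G^2 \sqrt{T}) \cdot R/G$ using the standard integral bound $\sum_{t=1}^T 1/\sqrt{t} \leq 2\sqrt{T}$. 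The main obstacle, if any, is just keeping track of the boundary terms in this sum-by-parts step, but both halves end up of order $RG\sqrt{T}$, matching the claimed rate.
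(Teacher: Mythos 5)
Your proof is correct: the paper states this theorem as a background result without proof (citing Zinkevich), and your argument is exactly the standard one it relies on — non-expansiveness of the projection, expansion of $\|\w_{t+1}-\w\|^2$, the first-order convexity inequality, and telescoping. You also correctly flag and resolve the mismatch between the constant $\eta$ in the displayed bound and the time-varying $\eta_t = R/(G\sqrt{t})$ in the hypothesis, via the usual summation-by-parts bound $\tfrac{1}{2}\sum_t \|\w_t-\w\|^2(1/\eta_t - 1/\eta_{t-1}) \leq R^2/(2\eta_T)$ together with $\sum_{t=1}^T 1/\sqrt{t} \leq 2\sqrt{T}$, both of order $RG\sqrt{T}$.
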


In fact it is possible to show that OGD can attain a  logarithmic regret  $O(\log T)$ for strongly convex functions  by appropriately tuning the step sizes as stated below. 

\begin {theorem} \label{thm-chapter-2-ogd-sc}
Let $f_1, f_2, \ldots, f_T: \W \rightarrow \mathbb{R}$ be an arbitrary sequence of $\alpha$-strongly convex  functions. Under the same conditions as Theorem~\ref{thm-chapter-2-ogd} with step size $\eta_t = \frac{1}{\alpha t}$, the OGD algorithm achieves the following regret:  
\begin{eqnarray}
 \sum_{t=1}^{T}{f_t(\w_t)} - \sum_{t = 1}^{T}{f_t(\w)} \leq G^2 \sum_{t=1}^{T}{\frac{1}{\alpha t}} \leq  \frac{G^2}{2 \alpha} (1+ \log T) = O(\log T)
\end{eqnarray}  
\end {theorem}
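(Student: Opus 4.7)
The plan is to combine the standard one-step inequality for projected gradient descent with the strong convexity lower bound, tune the telescoping with the step size $\eta_t = 1/(\alpha t)$ so that the strong convexity term exactly cancels the surplus from the shifted distances, and then bound the residual gradient-squared series by the harmonic sum.

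First, I would write down the one-step recursion. Fix an arbitrary comparator $\w \in \W$. Since projection onto the convex set $\W$ is non-expansive, the update rule yields
\begin{equation*}
\|\w_{t+1}-\w\|^2 \;\leq\; \|\w_t - \eta_t \nabla f_t(\w_t) - \w\|^2 \;=\; \|\w_t-\w\|^2 - 2\eta_t \langle \nabla f_t(\w_t), \w_t-\w\rangle + \eta_t^2 \|\nabla f_t(\w_t)\|^2,
\end{equation*}
so that $\langle \nabla f_t(\w_t), \w_t-\w\rangle \leq \tfrac{1}{2\eta_t}\bigl(\|\w_t-\w\|^2 - \|\w_{t+1}-\w\|^2\bigr) + \tfrac{\eta_t}{2}\|\nabla f_t(\w_t)\|^2$. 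Next, I would invoke $\alpha$-strong convexity of $f_t$ in the gradient form $f_t(\w) \geq f_t(\w_t) + \langle \nabla f_t(\w_t), \w-\w_t\rangle + \tfrac{\alpha}{2}\|\w-\w_t\|^2$, which rearranges to $f_t(\w_t) - f_t(\w) \leq \langle \nabla f_t(\w_t), \w_t-\w\rangle - \tfrac{\alpha}{2}\|\w_t-\w\|^2$. Substituting the one-step bound gives an upper estimate per round.

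The central step is the telescoping. Summing from $t=1$ to $T$ with $\eta_t = 1/(\alpha t)$, so that $1/\eta_t = \alpha t$, the distance terms produce
\begin{equation*}
\sum_{t=1}^T \frac{\alpha t}{2}\|\w_t-\w\|^2 - \sum_{t=1}^T \frac{\alpha t}{2}\|\w_{t+1}-\w\|^2 - \sum_{t=1}^T \frac{\alpha}{2}\|\w_t-\w\|^2.
\end{equation*}
Reindexing the middle sum shifts the coefficient of $\|\w_t-\w\|^2$ from $\alpha t/2$ to $\alpha(t-1)/2$, leaving a contribution of $\alpha/2$ per term that is precisely cancelled by the strong convexity term, and a non-positive leftover $-\tfrac{\alpha T}{2}\|\w_{T+1}-\w\|^2 \leq 0$. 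This is the crucial algebraic cancellation that makes the strongly convex rate possible, and is the only nontrivial step; the rest is bookkeeping.

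What remains is the gradient-squared residual: $\sum_{t=1}^T \tfrac{\eta_t}{2}\|\nabla f_t(\w_t)\|^2 \leq \tfrac{G^2}{2\alpha}\sum_{t=1}^T \tfrac{1}{t} \leq \tfrac{G^2}{2\alpha}(1+\log T)$, using the Lipschitz bound on the gradients and the standard estimate of the harmonic series. Combining everything delivers the stated $O(\log T)$ regret. The only real obstacle is verifying the telescoping cancellation cleanly; once the step size is aligned with the strong convexity modulus this is mechanical, but it is also the step that crucially fails for generic $\eta_t$, so I would emphasize that choice.
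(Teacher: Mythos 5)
Your proof is correct and is the standard argument for this result (the paper states the theorem in its preliminaries without proof, citing the literature); the telescoping with $1/\eta_t = \alpha t$, the exact cancellation of the $\alpha/2\,\|\w_t-\w\|^2$ terms including the $t=1$ boundary term, and the harmonic-sum bound are all handled correctly. Note that your derivation actually yields the sharper residual $\tfrac{G^2}{2}\sum_{t=1}^T \eta_t = \tfrac{G^2}{2\alpha}\sum_{t=1}^T \tfrac{1}{t}$, consistent with the final bound $\tfrac{G^2}{2\alpha}(1+\log T)$; the intermediate expression $G^2\sum_{t=1}^T \tfrac{1}{\alpha t}$ in the theorem statement is off by a factor of $2$ from that final bound, so your version is the tighter and internally consistent one.
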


\begin{remark}
In the above algorithm, the updating rule for OGD   uses the gradients of the loss functions  $\nabla f_{t} (\w)$ at each iteration to update the solution. In fact it is not required to assume that the loss functions are differentiable and it suffices to assume that the loss functions only have a sub-gradient everywhere in the domain $\W$, making the algorithm suitable  for non-smooth settings. In particular for any $\g_t \in \partial f_t(\w_t)$ where $\partial f_t(\w_t)$ is the set of sub-gradient at point $\w_t$, the algorithm is able to achieve the same regret bounds in both cases. 
\end{remark}

\subsubsection{Follow The Perturbed Leader}
The first efficient algorithm for the general online \textit{linear} optimization problems is due to Hannan~\cite{hannan1957approximation}, and was subsequently rediscovered and clarified in~\cite{Kalai:2005:EAO:1113185.1113189}. The Follow The Perturbed Leader (FTPL) algorithm assumes that there is an oracle that can efficiently solve the offline optimization problem. Having access to such an oracle, the FTPL selects the decision that appears to be the best so far, but for a version of actual cost vectors which have been perturbed by the addition of some noise.  The addition of random noise to the observed cost functions has the effect of slowing down the algorithm so that instead of tracking small fluctuations in cost functions, it has the tendency to stick to the same decision unless there is a compelling reason to switch to another decision.

Although  the FTPL algorithm works in a similar setting as OGD, but there is a  crucial difference between them which makes FTPL more suitable for online combinatorial learning.  In particular, the decision set $\W$ does not need to be convex as long as the offline optimization problem can be solved efficiently. This is a significant advantage of the FTPL approach, which can be utilized to tackle more general problems with discrete decision spaces.   The FTPL method for online decision making relies on a linear optimization procedure $\mathcal{M}$ over the set $\W$ that computes $\mathcal{M}(\f) = \arg \min_{\w \in \W} \dd{\f}{\w}$ for all $\f \in \mathbb{R}^{d}$. Then, FTPL chooses $\w_t$ by first drawing a perturbation $\boldsymbol{\mu}_t \in [0,\frac{1}{\eta}]^d$ uniformly at random, and computing $\w_{t+1} ~=~ \mathcal{M}\left(\sum_{\tau=1}^t \f_\tau + \boldsymbol{\mu}_t\right)$. 
%%%%%%%%%%%%%%%%%%%%%%%%%%%%%%%%%%%%%%%%%%%%%%%
\begin{figure}[H]
\begin{center}
\begin{myalg}{Follow The Perturbed Leader}{FTPL}
{\bf Input:} \+ a general domain $\W$, step size $\eta > 0$, offline linear oracle $\mathcal{M}$ over $\W$ \-\\ 
{\bf Initialize:} \+ $\w_1 \in \arg \min_{\w \in \W} \mathcal{M}(\mathbf{0})$  \- \\ \\
{\bf for} $t=1,2,\ldots, T$  \+ \\
Draw $\boldsymbol{\mu}_t \in [0,\frac{1}{\eta}]^d$ uniformly at random \\
Play $\w_{t+1} ~=~ \mathcal{M}\left(\sum_{\tau=1}^t \f_\tau + \boldsymbol{\mu}_t\right)$ \\
Receive loss vector $\f_{t+1} \in \R^d$ and incur $\dd{\f_{t+1}}{\w_{t+1}}$ \- \\
{\bf end for} 
\end{myalg}
\end{center}
\end{figure}
%%%%%%%%%%%%%%%%%%%%%%%%%%%%%%%%%%%%%%%%%%%%%%%
The regret of FTPL algorithm is stated in the following theorem. 
\begin{theorem}[Regret Bound for FTPL] \label{thm-chapter-2-ftpl}
Let  $\f_{1},\ldots,\f_{T}$ be an arbitrary sequence of linear loss functions from unit ball and  let $\w_{1},\ldots,\w_{T}$ be the sequence of decisions generated by the FTPL constrained to a general set $\W$. Then for any $\w\in \W$, the FTPL algorithm with parameter $\eta = \sqrt{R/GT}$ satisfies:
$$
\E\left[ \sum_{t=1}^T \dd{\f_t}{\w_t} \right] -  {\sum_{t=1}^{T} \dd{\f_t}{\w}}~\le~ 2\sqrt{RGT} ~,
$$
where $ \max_{\w} |\dd{\f_t}{\w}| \leq G, t \in [T]$ is an upper bound on the magnitude of the rewards, and $R$ is an upper bound on the $\ell_1$ diameter of $\W$, i.e., $  R \geq \max_{\w,\w' \in \W} \|\w-\w'\|_1$. 
\end{theorem}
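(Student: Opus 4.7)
The plan is to use the classical Kalai--Vempala analysis of FTPL, decomposing the expected regret into a ``look-ahead'' term (controlled by the Be-The-Leader principle applied to a \emph{perturbed} loss sequence) and a ``stability'' term (controlled by a coupling argument exploiting that $\boldsymbol{\mu}$ has a smooth density over a large box). Concretely, I would start from the identity
\[
\sum_{t=1}^T \langle \f_t,\w_t\rangle - \sum_{t=1}^T \langle \f_t,\w\rangle \;=\; \underbrace{\sum_{t=1}^T \langle \f_t,\w_t-\w_{t+1}\rangle}_{\text{stability}} \;+\; \underbrace{\Bigl(\sum_{t=1}^T \langle \f_t,\w_{t+1}\rangle - \sum_{t=1}^T \langle \f_t,\w\rangle\Bigr)}_{\text{look-ahead regret}},
\]
and bound each piece separately before tuning $\eta$.

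For the look-ahead term, I would append the perturbation as a fictitious round-zero loss $\f_0 := \boldsymbol{\mu}$ so that $\w_{t+1}=\mathcal{M}\bigl(\sum_{\tau=0}^{t}\f_\tau\bigr)$ is exactly the ordinary (unperturbed) leader with respect to the augmented sequence $\f_0,\f_1,\ldots,\f_t$. A one-line induction --- the Be-The-Leader lemma --- then yields $\sum_{t=0}^T \langle \f_t,\w_{t+1}\rangle \le \min_{\w\in\W}\sum_{t=0}^T \langle \f_t,\w\rangle$. Rearranging and applying H\"older's inequality with $\|\boldsymbol{\mu}\|_\infty\le 1/\eta$ and the $\ell_1$-diameter bound gives a look-ahead regret of at most $R/\eta$, deterministically in $\boldsymbol{\mu}$.

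The stability term is the delicate step. The consecutive iterates $\w_t=\mathcal{M}\bigl(\boldsymbol{\mu}+\sum_{\tau<t}\f_\tau\bigr)$ and $\w_{t+1}=\mathcal{M}\bigl(\boldsymbol{\mu}+\sum_{\tau\le t}\f_\tau\bigr)$ are the argmins of linear functions whose defining cumulative-loss vectors differ only by $\f_t$. Since $\boldsymbol{\mu}$ is uniform on the axis-aligned box $[0,1/\eta]^d$, translating its law by $\f_t$ shifts the density on at most an $O(\eta)$-fraction of the box; coupling $\boldsymbol{\mu}$ with its translate so that the two are equal off an event of probability $O(\eta)$, and noting that on that bad event the per-round discrepancy is at most $|\langle \f_t,\w_t-\w_{t+1}\rangle|\le 2G$ by the given reward bound, yields $\mathbb{E}\langle \f_t,\w_t-\w_{t+1}\rangle = O(\eta G)$. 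Summing over $t$, the expected stability cost is $O(\eta G T)$.

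Combining the two contributions gives $\mathbb{E}[\regret]\le R/\eta + C\,\eta G T$, and optimizing in $\eta=\sqrt{R/(GT)}$ delivers the advertised $2\sqrt{RGT}$ bound. The main obstacle is the coupling calculation: one must show that shifting a uniform law on $[0,1/\eta]^d$ by a vector from the unit ball has total variation distance proportional to $\eta$, and translate this cleanly into a bound on $\mathbb{E}|\langle\f_t,\w_t-\w_{t+1}\rangle|$ without picking up hidden dimension factors (which requires using the $\ell_1$--$\ell_\infty$ duality between the diameter assumption on $\W$ and the bound on $\boldsymbol{\mu}$ rather than a naive Cauchy--Schwarz estimate). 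A secondary bookkeeping point is that drawing a fresh $\boldsymbol{\mu}_t$ each round, as specified in the algorithm, gives the same guarantee as the fixed-$\boldsymbol{\mu}$ analysis, since both the BTL and stability estimates only invoke the one-round marginal law of the perturbation.
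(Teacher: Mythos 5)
Your proposal is correct and is essentially the classical Kalai--Vempala argument that this background theorem is quoted from (the chapter states it without proof): be-the-leader on the perturbation-augmented sequence gives the $R/\eta$ term, the cube-overlap coupling with the $\ell_1$--$\ell_\infty$ pairing gives the $O(\eta G T)$ stability term, and your closing remark that the fresh-perturbation algorithm inherits the fixed-perturbation bound because only one-round marginals enter both estimates is exactly the right way to handle the redrawn $\boldsymbol{\mu}_t$. One bookkeeping caveat: with only the two-sided bound $|\dd{\f_t}{\w}|\le G$ your coupling gives per-round stability $2\eta G$, so the stated $\eta=\sqrt{R/GT}$ yields $3\sqrt{RGT}$ rather than $2\sqrt{RGT}$; to recover the advertised constant you either need the one-sided (nonnegative-cost) normalization used by Kalai and Vempala, under which the bad-event discrepancy is $G$ rather than $2G$, or you must retune $\eta$ accordingly.
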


\begin{figure}[t]
 \begin{center}
  \begin{tikzpicture}[node distance=4.5cm, auto, >=stealth]
   % nodes
   \node[block] (a)               {Adversary};
   \node[block] (b)  [right of=a] {Linearization};
   \node[block] (d)  [right of=b] {FTRL};
   
   % edges
   \draw[thick,->] (a.north east) -- node[above]{$f_t(\w)$} (b.north west);
   \draw[thick,->] (b.north east) -- node[above]{$\mathbf{f}_t = \nabla f_t(\w_t)$} (d.north west);
   \draw[thick,->] (d.south west) -- node[below]{$\w_{t+1} \in \mc{W}$} (b.south east);
   \draw[thick,->] (b.south west) -- node[below]{$\w_{t+1} \in \mc{W}$} (a.south east);
  \end{tikzpicture}
  \label{flowchart}
 \end{center}
\caption[The linearazation method for online convex optimization.]{The reduction of general online convex optimization problem to online optimization with linear
functions.}
\label{fig:linearalization}
\end{figure}
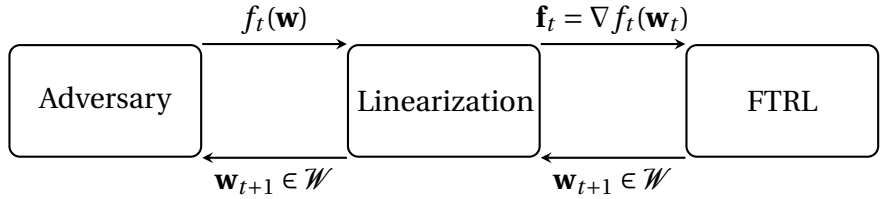

\subsubsection{Follow The Regularized Leader}

 A natural modification of the basic FTPL algorithm or  fictitious  play in game theory is the Follow The Regularized Leader (FTRL) algorithm,  in which we minimize the loss on all past rounds plus a regularization term. Regularization is an alternative to perturbation to stabilize decisions during the prediction.  Naturally, different regularization functions will yield different algorithms
for different applications.  

In FTRL we assume that the loss functions  are linear  $f_t(\w) = \dd{\mathbf{f}_t}{\w}, \f_t \in \R^d$ and the generalization to convex loss functions can be accomplished by \textit{linearization} strategy.  This reductions is  shown in Figure~\ref{fig:linearalization}. The main idea behind linearization trick is that if an algorithm $\mc{A}$  is able to achieve good regret against linear loss functions, then $\mc{A}$  can be used to achieve good regret against sequences of convex loss functions as well. To see this,  we note that from the definition of convexity, i.e,  $f_t(\w) \geq f_t(\w_t) + \dd{\nabla f_t(\w_t)}{\w - \w_t}$, we can feed the learner $\mc{A}$ with linear  loss functions   $f_t'(\w) = \dd{\nabla f_t(\w_t)}{\w}$ and then guarantee that $f_t(\w_t) - f_t(\w) \leq \dd{\f_t}{\w_t - \w}$. Therefore, any bound on the regret of linear loss functions directly translate to a regret bound for convex losses.

The FTRL algorithm at each round $t$ solves an offline optimization problem based based on the sum
of loss functions up to time $t-1$ and a regularization function.  Let  $\mc{R}(\w)$ be  a strongly convex differentiable function. The detailed steps of FTRL are as follows~\cite{interior-ieee-2012}:

\begin{figure}[H]
\begin{center}
\begin{myalg}{Follow the Regularized Leader}{FTRL}
{\bf Input:} \+ convex set $\W$, step size $\eta > 0$, regularization $\mc{R}(\w)$ \-\\ 
{\bf Initialize:} \+ $\w_1 \in \arg \min_{\w \in \W} \mc{R}(\w)$  \- \\ \\
{\bf for} $t=1,2,\ldots, T$  \+ \\
Play $  \w_{t+1} = \arg \min_{\w \in \W} \Bigg{[}\sum_{s = 1}^{t}{\dd{\mathbf{f}_s}{\w}} + \mc{R}(\w) \Bigg{]}\ $ \\
Receive loss function $f_{t+1}(\cdot)$ and incur $f_{t+1}(\w_{t+1})$ \\
Set $\f_{t+1} = \nabla f_{t+1}(\w_{t+1})$ \- \\
{\bf end for} 
\end{myalg}
\end{center}
\end{figure}
It is noticeable that what  FTRL implements at  each iteration is  the  regularized empirical risk minimization over the previous trials as we saw in the statistical learning. In online setting, the  regularizer has the role of forcing the consecutive solutions is to stay closer to each other, similar role role the perturbation plays in FTPL algorithm. Furthermore, different choices of the regularizer lead to different  algorithms. For instance, in the simplest case if we let $\mc{R}(\w) = \frac{1}{2} ||\w||^2$, then the FTRL  behaves same as OGD algorithm.

It is not hard to prove a simple bound on the regret of the FTRL algorithm for a given strongly convex regularization function $\mc{R}(\w)$ and the learning rate $\eta$. 

\begin{theorem}[Regret Bound for FTRL]\label{thm-chapter-2-ftrl}
Let $\f_1, \f_2, \ldots, \f_T \in \mathbb{R}^d$ be an arbitrary sequence of linear loss functions over convex set $\W \subseteq \mathbb{R}^d$. Let  $\mc{R}: \mc{K} \mapsto \mathbb{R}$ be a 1-strongly convex function with respect to a norm $\|\cdot\|$. Let  $R = \max_{\w \in \W} \mc{R}(\w) -\mc{R}(\w_1)$ and $\max_{t \in [T]}\|\f_t\|_* \leq G$. Then for any ${\w \in \W}$ by setting $\eta = \sqrt{\frac{R}{G^2T}}$ the regret of FTRL is bounded by:
\begin{eqnarray}
 \sum_{t=1}^{T}{\dd{\f_t}{\w_t}} -  \sum_{t = 1}^{T}{\dd{\f_t}{\w}} \leq \frac{\mc{R}(\w) - \mc{R}(\w_1)}{\eta} + \eta \sum_{t=1}^{T}{\|\f_t\|_*^2} = O( G \sqrt{ R T})
\end{eqnarray}  

\end {theorem}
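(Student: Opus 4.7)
The plan is to combine the classical ``Follow the Leader, Be the Leader'' (FTL-BTL) identity with a stability argument that exploits the strong convexity of $\mc{R}$. The FTRL update can be viewed as minimizing the cumulative objective $\sum_{s=1}^{t}\dd{\f_s}{\w} + \frac{1}{\eta}\mc{R}(\w)$, so that $\w_{t+1}$ is the ``leader'' after $t$ rounds of an augmented sequence of losses in which the regularizer $\frac{1}{\eta}\mc{R}$ is treated as a fictitious round-$0$ loss. The two ingredients I need are an upper bound on what the leader pays and a bound on how much we lose by playing $\w_t$ instead of $\w_{t+1}$.

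First I would prove the FTL-BTL lemma by a short induction: for any sequence $g_0,g_1,\ldots,g_T$ with $\u_{t+1}\in\arg\min_{\w\in\W}\sum_{s=0}^{t}g_s(\w)$, one has $\sum_{t=0}^{T}g_t(\u_{t+1})\leq \sum_{t=0}^{T}g_t(\u)$ for every $\u\in\W$. Applying this with $g_0=\frac{1}{\eta}\mc{R}$ and $g_t(\w)=\dd{\f_t}{\w}$ for $t\ge 1$, and using $\w_1\in\arg\min_{\w\in\W}\mc{R}(\w)$, after rearrangement yields
\begin{equation*}
\sum_{t=1}^{T}\dd{\f_t}{\w_{t+1}}\;\le\;\sum_{t=1}^{T}\dd{\f_t}{\w}\;+\;\frac{\mc{R}(\w)-\mc{R}(\w_1)}{\eta}.
\end{equation*}

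Second, I would control the stability term $\sum_{t=1}^{T}\dd{\f_t}{\w_t-\w_{t+1}}$, which quantifies the gap between what the learner actually plays ($\w_t$) and what FTL-BTL compares against ($\w_{t+1}$). Define $\Phi_t(\w)=\sum_{s=1}^{t-1}\dd{\f_s}{\w}+\frac{1}{\eta}\mc{R}(\w)$. Because $\mc{R}$ is $1$-strongly convex with respect to $\|\cdot\|$, the function $\Phi_t$ is $\frac{1}{\eta}$-strongly convex; since $\w_t$ minimizes $\Phi_t$ over $\W$ and $\w_{t+1}$ minimizes $\Phi_{t+1}=\Phi_t+\dd{\f_t}{\cdot}$, first-order optimality on $\W$ combined with the strong-convexity lower bound gives $\|\w_t-\w_{t+1}\|\le \eta\|\f_t\|_*$. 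Hölder's inequality then delivers $\dd{\f_t}{\w_t-\w_{t+1}}\le \eta\|\f_t\|_*^{2}$.

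Adding the two bounds produces
\begin{equation*}
\sum_{t=1}^{T}\dd{\f_t}{\w_t}-\sum_{t=1}^{T}\dd{\f_t}{\w}\;\le\;\frac{\mc{R}(\w)-\mc{R}(\w_1)}{\eta}+\eta\sum_{t=1}^{T}\|\f_t\|_*^{2}.
\end{equation*}
Using the hypotheses $\mc{R}(\w)-\mc{R}(\w_1)\le R$ and $\|\f_t\|_*\le G$, the right-hand side is at most $R/\eta + \eta G^{2}T$, which is minimized at $\eta=\sqrt{R/(G^{2}T)}$ and evaluates to $2G\sqrt{RT}=O(G\sqrt{RT})$, as claimed.

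The main obstacle is the stability estimate $\|\w_t-\w_{t+1}\|\le \eta\|\f_t\|_*$: one must transfer strong convexity of $\Phi_t$ (which lives in the primal norm $\|\cdot\|$) into a bound on the difference of constrained minimizers before and after perturbing the objective by a linear term of dual norm $\|\f_t\|_*$. Once this is handled via the first-order optimality conditions for constrained strongly convex minimization (equivalently, by appealing to the Bregman divergence induced by $\mc{R}$), the remaining steps are routine algebra and a one-dimensional optimization over $\eta$.
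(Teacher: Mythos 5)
Your proof is correct and is the standard FTL-BTL-plus-stability argument that this background theorem implicitly relies on (the paper states it without proof), so there is nothing substantive to compare against. One small point worth noting: the paper's pseudocode writes the update with $\mc{R}(\w)$ unscaled, so $\eta$ never actually appears in the algorithm as printed; your reading of the regularizer as $\frac{1}{\eta}\mc{R}(\w)$ is the correct interpretation needed for the stated bound, and your stability estimate $\|\w_t-\w_{t+1}\|\le\eta\|\f_t\|_*$ and the final optimization over $\eta$ giving $2G\sqrt{RT}$ are both right.
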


\subsubsection{Online Mirror Descent}

Another algorithm for OCO problem is the online version of celebrated proximal point algorithm in offline convex optimization. As mentioned before, the implicit goal of regularization used in the FTRL algorithm is to control by how much the consecutive solutions differ from each other.  The proximal point algorithm is designed with the explicit goal of keeping  $\w_{t+1}$ as close as possible to $\w_t$. The closeness of two solutions are measured by the Bregman divergence induced by a strongly convex Legendre function $\Phi(\cdot)$ defined over convex domain $\mc{K}$.  A Legendre function is a strictly convex functions with continuous partial derivatives and gradient blowing up at the boundary of its domain (see Appendix~\ref{chap:appendix-convex} for detailed discussion). We assume that $\mathcal{W} \cap \mathcal{K} \neq \emptyset$. Online proximal point method solves an optimization problem which expresses the tradeoff between the distance from the old solution and the loss suffered by by the current convex function as:
\begin{eqnarray}\label{eqn:proximal}
\w_{t+1} = \arg \min_{\w \in \W \cap \mathcal{K}} \Bigg{[} \eta_t f_t(\w) + \mathsf{B}_{\Phi} (\w, \w_{t}) \Bigg{]}, 
\end{eqnarray}
where $\mathsf{B}_{\Phi} (\w, \w_{t})$ is the Bregman divergence induced by $\Phi(\cdot)$ (Definition~\ref{def-app-a-bregman}). When the loss functions are linear, i.e., $f_t(\w) = \dd{\mathbf{f}_t}{\w}$  for some $\mathbf{f}_t \in \mathbb{R}^d$, or one replaces the  objective $f_t(\w)$ in~(\ref{eqn:proximal}) with its linearized term,  i.e., $f_t(\w) \approx f_t(\w_t) + \dd{\w - \w_t}{\nabla f_t(\w_t)}$, the proximal point method becomes   the Online Mirror Descent (OMD) algorithm as detailed below:

\begin{figure}[H]
\begin{center}
\begin{myalg}{Online Mirror Descent}{OMD}
{\bf Input:} \+ convex sets $\W$, step size $\eta > 0$, Legendre function $\Phi: \mathcal{K}\mapsto\R$ \-\\ 
{\bf Initialize:} \+ $\w_1 \in \arg \min_{\w \in \mc{K}} \Phi(\w)$  \- \\ \\
{\bf for} $t=1,2,\ldots, T$  \+ \\
Play $\w_{t+1} = \arg \min_{\w \in \W \cap \mathcal{K}} \Bigg{[} \eta_t \dd{\w-\w_t}{\nabla f_t(\w_t)}  + \mathsf{B}_{\Phi} (\w, \w_{t}) \Bigg{]}$ \\
Receive loss function $f_{t+1}(\cdot)$ and incur $f_{t+1}(\w_{t+1})$ \- \\
{\bf end for} 
\end{myalg}
\end{center}
\end{figure}

\begin{theorem}[Regret Bound for OMD]\label{thm-chapter-2-omd}
Let $f_1, f_2, \ldots, f_T$ be an arbitrary sequence of convex, differentiable functions defined over the convex set $\W \subseteq \mathbb{R}^d$. Let $G = \max_t || \nabla f_t(\w_t)||_{*}$ and $R = \max_{\w, \w' \in \W} ||\w-\w'|| $. Let $\Phi: \mc{K} \mapsto \mathbb{R}$ be a Legendre function which is 1-strongly convex w.r.t the norm $\|\cdot\|$ and  $\mc{W}\cap \mc{K} \neq \emptyset$.  Then the regret of the OMD  can be bounded by
\begin{eqnarray}
 \sum_{t=1}^{T}{f_t(\w_t)} -  \sum_{t = 1}^{T}{f_t(\w)} \leq \frac{ \mathsf{B}_{\Phi} (\w, \w_{1})}{\eta} + \frac{\eta}{2 } \sum_{t=1}^{T}{{\|\nabla f_t(\w_t)\|^2_*}{}} = O(RG\sqrt{T}),
\end{eqnarray}  
 where $\|\cdot \|_{*} $ is the dual norm to $\|\cdot\|$.
\end{theorem}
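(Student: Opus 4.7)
The plan is to first invoke convexity to linearize the per-round gap: by differentiability of $f_t$, we have $f_t(\w_t)-f_t(\w) \le \dd{\nabla f_t(\w_t)}{\w_t-\w}$, so it suffices to control $\sum_t \dd{\nabla f_t(\w_t)}{\w_t-\w}$. I would then split this as $\dd{\nabla f_t(\w_t)}{\w_t-\w_{t+1}} + \dd{\nabla f_t(\w_t)}{\w_{t+1}-\w}$ and handle the two pieces separately.

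For the second piece, I would exploit the first-order optimality condition of the OMD update in \eqref{eqn:proximal} (in its linearized form). Since $\w_{t+1}$ minimizes $\eta\,\dd{\w-\w_t}{\nabla f_t(\w_t)}+\mathsf{B}_{\Phi}(\w,\w_t)$ over $\W\cap\mc{K}$, standard variational inequalities give $\dd{\eta\nabla f_t(\w_t)+\nabla\Phi(\w_{t+1})-\nabla\Phi(\w_t)}{\w-\w_{t+1}}\ge 0$ for all $\w\in\W\cap\mc{K}$, i.e.,
\begin{equation*}
\eta\dd{\nabla f_t(\w_t)}{\w_{t+1}-\w} \le \dd{\nabla\Phi(\w_t)-\nabla\Phi(\w_{t+1})}{\w_{t+1}-\w}.
\end{equation*}
The three-point identity for Bregman divergences, $\dd{\nabla\Phi(\w_t)-\nabla\Phi(\w_{t+1})}{\w_{t+1}-\w}=\mathsf{B}_{\Phi}(\w,\w_t)-\mathsf{B}_{\Phi}(\w,\w_{t+1})-\mathsf{B}_{\Phi}(\w_{t+1},\w_t)$, turns this into a telescoping sum plus the negative term $-\mathsf{B}_{\Phi}(\w_{t+1},\w_t)$.

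For the first piece, I would apply Fenchel–Young (or Cauchy–Schwarz plus Young's inequality) with the dual-norm pairing:
\begin{equation*}
\dd{\nabla f_t(\w_t)}{\w_t-\w_{t+1}} \le \frac{\eta}{2}\|\nabla f_t(\w_t)\|_*^2 + \frac{1}{2\eta}\|\w_t-\w_{t+1}\|^2.
\end{equation*}
Here is where $1$-strong convexity of $\Phi$ is crucial: it gives $\mathsf{B}_{\Phi}(\w_{t+1},\w_t)\ge \tfrac12\|\w_t-\w_{t+1}\|^2$, so the $\tfrac{1}{2\eta}\|\w_t-\w_{t+1}\|^2$ term is absorbed by the $-\tfrac{1}{\eta}\mathsf{B}_{\Phi}(\w_{t+1},\w_t)$ we inherited from the optimality step.

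Combining and summing over $t=1,\dots,T$, the Bregman terms telescope to $\tfrac{1}{\eta}(\mathsf{B}_{\Phi}(\w,\w_1)-\mathsf{B}_{\Phi}(\w,\w_{T+1}))\le \tfrac{1}{\eta}\mathsf{B}_{\Phi}(\w,\w_1)$, yielding the stated bound. The $O(RG\sqrt{T})$ rate then follows by tuning $\eta=\Theta(R/(G\sqrt{T}))$, using that $1$-strong convexity of $\Phi$ together with $\max_{\w,\w'}\|\w-\w'\|\le R$ bounds $\mathsf{B}_{\Phi}(\w,\w_1)$ by $O(R^2)$ on bounded domains. The main subtlety, and the step I would be most careful about, is the variational/optimality argument: one must handle the constraint $\w\in\W\cap\mc{K}$ cleanly (so that $\nabla\Phi(\w_{t+1})$ is defined on the interior of $\mc{K}$, which is exactly what the Legendre assumption guarantees) and verify that the three-point identity extends to the constrained minimizer via the generalized Pythagorean inequality for Bregman projections.
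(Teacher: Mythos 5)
Your proof is correct and follows essentially the same route the paper uses (the argument appears in Appendix~\ref{chap:appendix-technical} for the analogous mirror descent lemma): linearize by convexity, invoke the first-order optimality condition of the proximal update, apply the three-point identity for Bregman divergences, bound the cross term by Fenchel--Young, and absorb $\frac{1}{2\eta}\|\w_t-\w_{t+1}\|^2$ via the $1$-strong convexity of $\Phi$ before telescoping. The only loose end is your justification of $\mathsf{B}_{\Phi}(\w,\w_1)=O(R^2)$ — strong convexity only lower-bounds the Bregman divergence, so this step needs a separate boundedness assumption on $\Phi$ over $\W\cap\mc{K}$ (as in the deterministic MD theorem where $R$ is defined via $\Phi$) — but the paper's statement glosses over the same point.
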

We note that many classical online learning algorithms can be viewed as   variants of OMD, generally either with the Euclidean geometry such as Perceptron algorithm and OGD, or in the simplex geometry, using an entropic distance generating function such as Winnow~\cite{littlestone1988learning} and Online Exponentiated Gradient algorithm~\cite{kivinen1997exponentiated}.

\subsubsection{Online Newton Step}

As mentioned in the analysis of OGD algorithm, it  attains  a logarithmic regret $O(\log{T})$ if the sequence of loss functions which have bounded gradient and are  strongly convex. Another case in which we can obtain logarithmic regret is the case of exp-concave loss functions (i.e., the function $\exp(-\alpha f(\w))$ is concave for some $\alpha$). Exp-concavity is weaker condition than the bounded gradient and strong convexity.  Online Newton Step (ONS)~\cite{hazan-log-newton} is the adaption of   Newton method for convex optimization to online setting and  is able to achieve logarithmic regret when learned on exp-concave functions which makes is more general than OGD.

\begin{figure}[H]
\begin{center}
\begin{myalg}{Online Newton Step}{ONS}
{\bf Input:} \+ convex sets $\W$, step size $\eta > 0$ \-\\ 
{\bf Initialize:} \+ $\w_1 \in \arg \min_{\w \in \mc{K}} \Phi(\w)$  \- \\ \\
{\bf for} $t=1,2,\ldots, T$  \+ \\
Play  $\w_{t+1} = {\Pi}_{\W}^{\mathbf{A}_{t}} \Bigg{(} \w_{t} - \eta_t \mathbf{A}_{t}^{-1} [\nabla f_t(\w_{t})] \Bigg{)} $ where $\mathbf {A}_{t-1} = \sum_{s=1}^{t-1}{\nabla f_s(\w_s) \nabla f_s(\w_s)^{\top} }$ \\
Receive loss function $f_{t+1}(\cdot)$ and incur $f_{t+1}(\w_{t+1})$ \- \\
{\bf end for} 
\end{myalg}
\end{center}
\end{figure}
Here $\Pi_{\W}^{\mathbf {A} } (\cdot)$ is the projection induced by  the a matrix $\mathbf {A}$, i.e., $\Pi_{\W}^{\mathbf {A}}(\w) = \min_{\mathbf{z} \in \W} (\mathbf{z}-\w)^{\top} \mathbf{A} (\mathbf{z}-\w)$. We note that compared to the provisos algorithms which only exploit first-order information about the loss functions, the \textit{analysis} of ONS method is based on second order information, i.e. the second derivatives of the loss functions, whereas the implementation of ONS relies only on first-order information.  

The following result shows that the ONS method achieves a logarithmic regret for exp-concave functions which is stronger result than the performance of OGD for strongly convex losses. 
\begin{theorem}[Regret Bound for ONS]\label{thm-chapter-2-ons}
Let $f_1, f_2, \ldots, f_T$ be an arbitrary sequence of $\alpha$-exp-concave functions defined over the convex set $\W \subseteq \mathbb{R}^d$ with $G = \max_t || \nabla f_t(\w_t)||$. Let $R = \max_{\w, \w' \in \W} ||\w-\w'||$.  Then the ONS algorithm achieves the following regret bound for any $\w \in \W$:
\begin{eqnarray}
 \sum_{t=1}^{T}{f_t(\w_t)} - \sum_{t = 1}^{T}{f_t(\w)} \leq \Big{(} \frac{1}{\alpha} + GR\Big{)} d \log T = O(d \log T)
\end{eqnarray}  
\end{theorem}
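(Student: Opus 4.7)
The plan is to prove logarithmic regret by combining the quadratic lower bound afforded by exp-concavity with a matrix telescoping argument that controls an accumulated sum of Mahalanobis squared-norms via the log-determinant of the Gram matrix $\mathbf{A}_T$. Throughout, write $\nabla_t := \nabla f_t(\w_t)$, let $\w^* \in \arg\min_{\w \in \W}\sum_t f_t(\w)$, and use a constant step size $\eta = \tfrac{1}{2}\min\{\tfrac{1}{4GR},\alpha\}$; also initialize $\mathbf{A}_0 = \tfrac{1}{\eta^2 R^2}\mathbf{I}$ so that all matrices $\mathbf{A}_t$ are uniformly positive definite (this makes $\mathbf{A}_t^{-1}$ well defined and adds only an $O(1)$ boundary term).

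The first step is to derive a local quadratic lower bound from exp-concavity. Since $\exp(-\alpha f_t)$ is concave, a second-order Taylor-like expansion combined with $\|\nabla f_t\|\le G$ and $\|\w-\w'\|\le R$ gives, for every $\w,\w'\in\W$,
$$f_t(\w) \geq f_t(\w') + \langle \nabla f_t(\w'),\w-\w'\rangle + \tfrac{\eta}{2}\bigl(\langle \nabla f_t(\w'),\w-\w'\rangle\bigr)^2.$$
This is the crucial replacement for strong convexity, and the particular choice of $\eta$ above is tuned precisely to make the quadratic term dominate the remainder of the Taylor expansion.

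The second step is a per-round ONS inequality. The generalized Pythagorean property of the matrix-weighted projection $\Pi_{\W}^{\mathbf{A}_t}$ gives $\|\w_{t+1}-\w^*\|_{\mathbf{A}_t}^2 \leq \|\w_t - \eta\mathbf{A}_t^{-1}\nabla_t - \w^*\|_{\mathbf{A}_t}^2$. Expanding the square and rearranging yields
$$\langle \nabla_t,\w_t-\w^*\rangle \leq \tfrac{1}{2\eta}\bigl(\|\w_t-\w^*\|_{\mathbf{A}_t}^2 - \|\w_{t+1}-\w^*\|_{\mathbf{A}_t}^2\bigr) + \tfrac{\eta}{2}\,\nabla_t^\top \mathbf{A}_t^{-1}\nabla_t.$$
Combining this with the lower bound from Step~1 and using the identity $\mathbf{A}_t - \mathbf{A}_{t-1} = \nabla_t\nabla_t^\top$, which rewrites $(\langle \nabla_t,\w_t-\w^*\rangle)^2$ as $\|\w_t-\w^*\|_{\mathbf{A}_t-\mathbf{A}_{t-1}}^2$, produces a per-round regret bound in which the Mahalanobis norms adjacent to $\mathbf{A}_t$ and $\mathbf{A}_{t-1}$ line up for telescoping. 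Summing over $t=1,\ldots,T$ collapses the bracketed differences into a boundary term bounded by $\tfrac{1}{2\eta}\|\w_1-\w^*\|_{\mathbf{A}_0}^2 = O(1)$, leaving only the ``gradient budget'' $\tfrac{\eta}{2}\sum_{t=1}^{T}\nabla_t^\top \mathbf{A}_t^{-1}\nabla_t$.

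The main technical obstacle is bounding this last sum. The key ingredient is the log-determinant inequality
$$\nabla_t^\top \mathbf{A}_t^{-1}\nabla_t \;\leq\; \log\det(\mathbf{A}_t) - \log\det(\mathbf{A}_{t-1}),$$
which follows from the matrix determinant lemma applied to the rank-one update $\mathbf{A}_t = \mathbf{A}_{t-1} + \nabla_t\nabla_t^\top$. Telescoping over $t$ and upper-bounding $\log\det(\mathbf{A}_T)$ by $d\log(\operatorname{tr}(\mathbf{A}_T)/d)$ via the arithmetic-geometric mean inequality on eigenvalues, together with $\operatorname{tr}(\mathbf{A}_T) \leq d\|\mathbf{A}_0\| + TG^2$, yields $\sum_{t=1}^T \nabla_t^\top \mathbf{A}_t^{-1}\nabla_t = O(d\log T)$. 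Substituting back and using the bound $\tfrac{1}{\eta} \leq 8GR + \tfrac{2}{\alpha}$ implied by the chosen step size, the cumulative regret is bounded by $\bigl(\tfrac{1}{\alpha}+GR\bigr) d\log T$ up to absolute constants, matching the advertised rate.
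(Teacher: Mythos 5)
The paper itself never proves this theorem: it is stated in the preliminaries chapter as a known result and attributed to Hazan, Agarwal and Kale, so the only meaningful comparison is with that standard argument, which your sketch reproduces in all its essential components --- the exp-concavity quadratic lower bound with modulus $\tfrac{1}{2}\min\{\tfrac{1}{4GR},\alpha\}$, the generalized Pythagorean inequality for the $\mathbf{A}_t$-weighted projection, the telescoping of Mahalanobis norms against the rank-one increments $\mathbf{A}_t-\mathbf{A}_{t-1}=\nabla_t\nabla_t^{\top}$, the regularized initialization $\mathbf{A}_0\propto\mathbf{I}$ to keep the inverses well defined, and the log-determinant (elliptical potential) bound on $\sum_t\nabla_t^{\top}\mathbf{A}_t^{-1}\nabla_t$. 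All of these ingredients are correctly identified, and the potential-function step in particular is stated and justified correctly.

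There is, however, one step that fails as written: you use the same $\eta=\tfrac12\min\{\tfrac{1}{4GR},\alpha\}$ both as the exp-concavity modulus and as the step size of the update $\w_t-\eta\mathbf{A}_t^{-1}\nabla_t$, and with that choice your Step~2 does not close. Your per-round inequality carries the telescoped differences with coefficient $\tfrac{1}{2\eta}$ and the gradient budget with coefficient $\tfrac{\eta}{2}$; writing $\|\w_t-\w^*\|^2_{\mathbf{A}_t}=\|\w_t-\w^*\|^2_{\mathbf{A}_{t-1}}+\langle\nabla_t,\w_t-\w^*\rangle^2$ to make the sum telescope therefore leaves a residual $+\bigl(\tfrac{1}{2\eta}-\tfrac{\eta}{2}\bigr)\langle\nabla_t,\w_t-\w^*\rangle^2$ at every round, and the $-\tfrac{\eta}{2}\langle\nabla_t,\w_t-\w^*\rangle^2$ gain from exp-concavity cannot absorb it because $\eta<1$; these residuals can each be of order $G^2R^2/\eta$ and destroy the $O(d\log T)$ rate. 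The fix is exactly the choice made in the original ONS analysis: the update step size must be $1/\eta$, the \emph{reciprocal} of the exp-concavity modulus, which flips the per-round inequality to $\langle\nabla_t,\w_t-\w^*\rangle\le\tfrac{\eta}{2}\bigl(\|\w_t-\w^*\|^2_{\mathbf{A}_t}-\|\w_{t+1}-\w^*\|^2_{\mathbf{A}_t}\bigr)+\tfrac{1}{2\eta}\nabla_t^{\top}\mathbf{A}_t^{-1}\nabla_t$, so that the telescoping residues now carry coefficient $\tfrac{\eta}{2}$ and are cancelled exactly by the exp-concavity term, while the gradient budget carries the coefficient $\tfrac{1}{2\eta}\le 4GR+\tfrac{1}{\alpha}$ that your final substitution already (correctly) presupposes. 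With that single correction the argument is the standard one and is sound.
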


We note that also this result  seems interesting, but for some of the functions of interest such as logistic loss the exp-cancavity parameter $\alpha$ could be exponentially large in $d$~\cite{open-problem-2012}. The exponential dependence on the diameter of the feasible set can make this bound worse than the $O(\sqrt{T})$ bound obtained by OGD.
%%%%%%%%%%%%%%%%%%%%%%%%%%%%%%%%%%%%%%%%%%%%%%%%%%%%
\subsection{Variational Regret Bounds}

Most previous works, including those discussed above, considered the most general setting in which the loss functions could be arbitrary and possibly chosen in an adversarial way.  However, the environments around us may not always be adversarial, and the loss functions may have some patterns which can be considered to  achieve a smaller regret.  Consequently, it is objected that requiring an algorithm to have a small regret for all sequences leads to results that are too loose to be practically interesting.  As a result,  the bounds obtained for worst case scenarios become pessimistic for these regular sequences. Therefore,  it would be  desirable to develop algorithms that yield tighter bounds for more regular sequences, while still providing protection against worst case sequences. To this end, we need to replace the number of rounds  appeared in the regret bound with some other notion of performance. In particular, this new measure should depend on variation in the sequence of costs functions emitted to the learner. Having such an algorithm guarantees that  if the cost sequence has low variation, the algorithm would be able to perform better.

One work along this direction is that of \cite{Hazan-2008-extract}. For the online linear optimization problem, in which each loss function is linear $f_t(\w) = \dd{\mathbf{f}_t}{\w}$ and can be seen as a vector, they considered the case in which the loss functions have a small variation, defined as 

$$\text{Variation}(\mathcal{A}, \W, \mathcal{F}, T)=\sum_{t=1}^T \|\mathbf{f}_t -\boldsymbol \mu\|_2^2,$$ 

where $\boldsymbol \mu=\sum_{t=1}^T \mathbf{f}_t{} /T$ is the average of the loss functions.  For this, they showed that a regret of $O(\sqrt{\text{Variation}})$ can be achieved, and they also have an analogous result for the prediction with expert advice problem.  According to this definition, a small $\text{Variation}(\mathcal{A}, \W, \mathcal{F}, T)$ means that most of the loss functions center around some fixed loss function $\boldsymbol \mu$. This seems to model a stationary environment, in which all of  the loss functions are produced according to some fixed distribution.

The variation bound is defined in terms of total difference between individual linear cost vectors to their mean. In Chapter~\ref{chap:regret} of this thesis, we introduce another measure which is called \textit{gradual variation}. Gradual variation is more general and applies to environments which may be evolving but is a somewhat gradual way.  For example, the weather condition or the stock price at one moment may have some correlation with the next and their difference is usually small, while abrupt changes only occur sporadically.  Formally, the gradual 
variation of a sequence of loss functions is defined as:
%%%%%%%%%%%%%%%%%%%%%%%%%%%%%%%%%%%%%%%%%%
\begin{eqnarray*}
    \text{GradualVarition} (\mathcal{A}, \W, \mathcal{F}, T) = \sum_{t=1}^{T-1} \max\limits_{\w \in \W} \|\nabla f_{t+1}(\w) - \nabla f_{t}(\w)\|_2^2. 
\end{eqnarray*}
It is easy to verify that the gradual variation lower bounds the variation bound and hence algorithms with regret bounded by gradual variation are more adaptive to regular patterns  than algorithm with bounded variational regret bounds.

%%%%%%%%%%%%%%%%%%%%%%%%%%%%%%%%%%%%%%%%%%%%%%%%%%%
\subsection{Bandit Online Convex Optimization}

In bandit OCO, once the online learner commits to  the decision $\w_t$ at round $t$, he does not have access to the function $f_t(\cdot)$ chosen by adversary and  instead  receives the scalar loss $f_t(\w_t)$ he suffers at point $\w_t$.  In the optimization community, this problem  usually known as zeroth-order or derivative-free convex optimization as  we only have access to function  values   to solve the optimization problem~\cite{jamieson2012query,shamir2012complexity}.  A simple approach for bandit OCO which was the main dilemma in most existing works is to utilize a reduction to the full information OCO setting. To do so,  one needs to approximate the gradient of the loss functions at each iteration based on the observed scalar loss and feed it to the full information algorithm. This reduction has been illustrated in Figure~\ref{fig:bandit-reduction}.
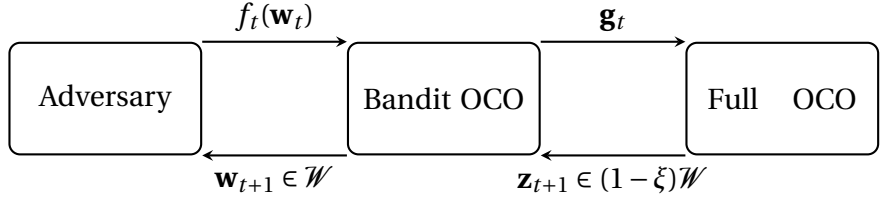
\begin{figure}[t]
 \begin{center}
  \begin{tikzpicture}[node distance=4.5cm, auto, >=stealth]
   % nodes
   \node[block] (a)               {Adversary};
   \node[block] (b)  [right of=a] {Bandit OCO};
   \node[block] (d)  [right of=b] {Full $\;\;\;\;$OCO};
   
   % edges
   \draw[thick,->] (a.north east) -- node[above]{$f_t(\w_t)$} (b.north west);
   \draw[thick,->] (b.north east) -- node[above]{$\g_t$} (d.north west);
   \draw[thick,->] (d.south west) -- node[below]{$\z_{t+1} \in (1-\xi) \mc{W}$} (b.south east);
   \draw[thick,->] (b.south west) -- node[below]{$\w_{t+1} \in \mc{W}$} (a.south east);
  \end{tikzpicture}
  \label{flowchart}
 \end{center}
\caption[Reduction of bandit OCO  to full information OCO.]{The reduction of bandit online convex optimization to online convex optimization with full information. The full OCO needs to play from a shrinked domain $(1-\xi)\W$ to ensure that the sampled points belong to the domain.}
\label{fig:bandit-reduction}
\end{figure}

A simple idea to estimate the gradients has been  utilized in~\cite{flaxman-2005-online} and  a modified gradient descent approach for bandit OCO has been presented that attains $O(T^{3/4})$ regret bound. The key idea of their algorithm is to compute the stochastic approximation of the gradient of cost functions by single point evaluation of the cost functions. The main observation was that  one can estimate the gradient of a function $f(\w)$ by taking a random  vector $\u$ from unit sphere $\mathbb{S} = \{\x \in \mathbb{R}^d ; \|\x\| = 1\}$ and scaling it by $f(\w+\delta \u)$, i.e.~$\hat{\g} = f(\w+\delta \u)\u$. Then $\E[\hat{\g}]$ is proportional to the gradient of a {\em smoothed} version of $f(\cdot)$ defined as $\hat{f}(\w) = \E_{\v \in \B}[f(\w+\delta \v)]$ for $\v$ random from the unit ball $ \mathbb{B} = \{\x \in \mathbb{R}^d ; \|\x\|\leq 1\}$. To ensure that the sampled query  points belong the domain $\W$, the OGD is run over a shrinked domain $(1-\xi)\W$ where we further assume that  $r \mathbb{B} \subseteq \W \subseteq R \mathbb{B}$.

\begin{figure}[H]
\begin{center}
\begin{myalg}{Expected Online Gradient Descent}{EOGD}
{\bf Input:} \+ convex set $\W$, step size $\eta > 0$,  $\delta$, $r$, and $\xi$ \-\\ 
{\bf Initialize:} \+ $\z_0 = \mathbf{0}$  \- \\ \\
{\bf for} $t=1,2,\ldots, T$  \+ \\
Pick a random unit vector $\u_t$ uniformly at random \\
Play $\w_t = \z_t + \delta \u_t$  and observe  $f_t(\w_t)$ \\
Update  $\z_{t+1} = \Pi_{(1-\xi)\W} \Big{(}  \z_t - \eta f_t(\w_t) \u_t\Big{)}$ \- \\
{\bf end for} 
\end{myalg}
\end{center}
\end{figure}

\begin{theorem}[Regret Bound for EOGD]\label{thm-chapter-2-eogd}
Let $f_1,f_2,\ldots,f_T$ be a sequence of convex, differentiable functions defined over the convex domain $\W \subseteq \mathbb{R}^d$ where $r \mathbb{B} \subseteq \W \subseteq R\mathbb{B}$.   Let $\g_1,\dots,\g_n$ are vector-valued random variables with $\E[\g_t | \w_t] = \E[ \nabla f_t(\w_t)]$ and $\|\g_t\|\leq G$, for some $G>0$. Then, for $\eta = \frac{R}{G\sqrt{n}}$, $\xi = \delta/r$, and $\delta = T^{-1/4}\sqrt{\frac{RdGr}{3(rG+C)}}$
     $$
     \mathbf{E}\bigg[\sum_{t=1}^T f_t(\w_t)\bigg]- \min_{\w\in \W} \sum_{t=1}^T f_t(\w)
     \leq O(T^{3/4})
     $$
\end{theorem}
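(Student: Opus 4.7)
The plan is to follow the standard reduction of Flaxman--Kalai--McMahan that turns EOGD into the full-information OGD applied to a smoothed version of each loss. Specifically, define
\[
\hat f_t(\w) \;=\; \E_{\v\sim\mathbb{B}}[f_t(\w+\delta\v)],
\]
which is convex (as an average of convex functions) and satisfies the one-point identity
\[
\nabla \hat f_t(\z) \;=\; \frac{d}{\delta}\,\E_{\u\sim\mathbb{S}}\!\bigl[f_t(\z+\delta\u)\,\u\bigr].
\]
Consequently, if we set $\g_t := (d/\delta)\,f_t(\w_t)\,\u_t$, then $\E[\g_t\mid \z_t] = \nabla \hat f_t(\z_t)$, so the EOGD update is exactly stochastic OGD on the sequence $\hat f_1,\dots,\hat f_T$ over the shrunken set $(1-\xi)\W$.

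Next I would apply Theorem~\ref{thm-chapter-2-ogd} to the sequence $\hat f_t$: taking expectations and using the unbiasedness together with convexity of $\hat f_t$, for any $\z^\star\in(1-\xi)\W$,
\[
\E\!\left[\sum_{t=1}^T \hat f_t(\z_t) - \sum_{t=1}^T \hat f_t(\z^\star)\right]
\;\le\; \frac{R^2}{2\eta} \;+\; \frac{\eta}{2}\sum_{t=1}^T\E\|\g_t\|^2
\;\le\; \frac{R^2}{2\eta} \;+\; \frac{\eta T d^2 C^2}{2\delta^2},
\]
where $C$ is an a.s. bound on $|f_t|$ (so $\|\g_t\|\le dC/\delta$). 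This is the ``easy'' step. The subtle part is converting this bound on smoothed regret back to a bound on $\sum_t f_t(\w_t) - \min_{\w\in\W}\sum_t f_t(\w)$, and here three approximation errors must be controlled and then balanced:

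\textbf{(i) Smoothing bias.} Using Lipschitzness of $f_t$ (the bound $\|\g_t\|\le G$ yields an $L$-Lipschitz constant on $\W$), we get $|f_t(\w)-\hat f_t(\w)|\le L\delta$ for every $\w$, and similarly $|f_t(\w_t)-\hat f_t(\z_t)|\le L\delta$ since $\w_t=\z_t+\delta \u_t$. This contributes $O(T\delta)$.

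\textbf{(ii) Domain shrinkage.} For $\w^\star\in\argmin_{\w\in\W}\sum_t f_t(\w)$, the point $\z^\star:=(1-\xi)\w^\star\in(1-\xi)\W$ (using $\mathbf{0}\in r\mathbb{B}\subseteq\W$) satisfies $\|\z^\star-\w^\star\|\le \xi R$, so $\sum_t f_t(\z^\star)-\sum_t f_t(\w^\star)\le TL\xi R = TLR\delta/r$ by the choice $\xi=\delta/r$.

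\textbf{(iii) Feasibility of the query.} The conditions $\xi=\delta/r$ and $\z_t\in(1-\xi)\W$ together with $\|\u_t\|=1$ ensure $\w_t=\z_t+\delta\u_t\in\W$, so the algorithm is well defined.

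Assembling (i)--(iii) with the OGD bound gives
\[
\E\!\left[\sum_t f_t(\w_t)\right] - \min_{\w\in\W}\sum_t f_t(\w)
\;\le\; \frac{R^2}{2\eta} + \frac{\eta T d^2 C^2}{2\delta^2} + 2TL\delta + \frac{TLR\delta}{r}.
\]
The main obstacle, which is really just bookkeeping, is choosing $\eta$ and $\delta$ to balance these four terms: the OGD terms are already balanced by $\eta=R/(G\sqrt T)$, leaving $O(\sqrt T\,d/\delta) + O(T\delta)$ which is optimized at $\delta\asymp T^{-1/4}\sqrt{d}$, matching the stated choice $\delta=T^{-1/4}\sqrt{RdGr/(3(rG+C))}$ and yielding the claimed $O(T^{3/4})$ regret.
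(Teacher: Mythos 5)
Your proposal is correct and is exactly the reduction the paper sketches around this theorem (and attributes to Flaxman et al.): smooth each loss by averaging over a $\delta$-ball, observe that the one-point estimator is an unbiased gradient of the smoothed loss, run stochastic OGD on the shrunken domain $(1-\xi)\W$, and pay three $O(T\delta)$ approximation costs for smoothing bias, the $\delta\u_t$ perturbation, and domain shrinkage before balancing $\delta\asymp T^{-1/4}$. The only caveat is notational: in the final balancing the $G$ appearing in the step size must be read as the bound on the \emph{estimator} norm $\|\g_t\|\le dC/\delta$ (as the theorem statement itself stipulates), not the Lipschitz constant of $f_t$; with that reading your bookkeeping gives the claimed $O(T^{3/4})$ bound.
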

This regret bound is later improved to $O(T^{2/3})$~\cite{Awerbuch:2004:ARE:1007352.1007367} for online bandit linear optimization. More recently,~\cite{DaniHK07} proposed an inefficient algorithm for online bandit linear optimization with the optimal regret bound $O(poly(d)\sqrt{T})$ based on multi-armed bandit algorithm. The key disadvantage of~\cite{DaniHK07} is that it is not computationally efficient. Abernethy, Hazan, and Rakhlin~\cite{AbernethyHR08} presented an efficient randomized algorithm with an optimal regret bound $O(poly(d)\sqrt{T})$ that exploits the properties of self-concordant barrier regularization.

For general online bandit convex optimization,~\cite{agarwal-2010-optimal} proposed optimal algorithms in a multi-point bandit setting, in which multiple points can be queried for the cost values.  With multiple queries, they showed that the EOGD algorithm can give an $O(\sqrt{T})$ expected regret bound.  The key idea of multiple point bandit online convex optimization, proposed in~\cite{agarwal-2010-optimal}, is to approximate the gradient using two point function evaluations. More specifically, at each iteration $t$ we randomly choose a unit direction $\u_t$ and measure the function values at points $\w_t + \delta \u_t$ and $\w -\delta \u_t$, i.e. $\ell_t(\w_t + \delta \u_t)$ and $\ell_t(\w_t - \delta \u_t)$, where $\delta > 0$ is a small perturbation that is $O(1/T)$. Given two point function evaluation, we approximate the gradient $\nabla f_t(\w_t)$ by   $ \g_t = \frac{d}{2\delta}\left(f_t(\w_t + \delta \u_{t}) - f_t(\w_t - \delta \u_{t}) \right)\u_t$. The nice property of  this sampling strategy is that the norm of sampled gradient is no longer dependent on $\delta$, i.e., $\|\g_t\| \leq dG$, and yield the same regret bound as OGD.

%%%%%%%%%%%%%%%%%%%%%%%%%%%%%%%%%%%%%%%%%%%%%%%%%%%
\subsection{From Regret  to Risk Bounds} 

So far we have dealt with two different models for learning: statistical and sequential or online.  The online setting  is in stark contrast to the  statistical  setting  in few aspects.  First, in  statistical learning or  the batch model  there is a strict division between the training phase and the testing phase. In contrast,  in the online model, training and testing occur together all at the same time since every example acts both as a test of what was learned in the past and as a training example for improving our
predictions in the future.  This requires that the online learner to be adaptive to the environment.  

A second key difference relates to the generation of examples.  Statistical learning scenario  follows the key assumption that the distribution over data points is fixed over time, both for training and test
points,  and samples are assumed to be drawn i.i.d. from an underlying distribution $\D$. Furthermore, the goal is to learn a hypothesis with a small expected loss or generalization
error. In contrast,  in online setting there is no notion of generalization and  algorithms are measured using a mistake bound model and the notion of regret, which are based on worst-case or
adversarial assumption where the adversary deliberately trying to ruin the learnerÕs performance.

Finally, we  distinguish between the processing model of statistical and online learning settings. 
Online algorithms  process one sample at a time and can thus be significantly
more efficient both in time and space and more practical than batch algorithms,
when processing modern data sets of several million or billion points. hence, these algorithms are more suitable for large scale learning. This stands in contrast to statistical learning algorithms such as ERM and it would be tempting to switch to online learning algorithm.

Given the close relationship between these two settings and clear advantage of online learning from computational viewpoint,   a paramount question is "\textit{whether or not algorithms developed in the sequential setting can be used for statistical learning with guaranteed generalization bound?}".  More precisely, can we devise algorithms that exhibits the desirable characteristics of online learning  but also has good generalization properties. Since a regret bound holds for all sequences of training samples, it also holds for an i.i.d. sequence. What remains to be done is to  extract a single hypothesis  out of the sequence produced by the sequential method, and to convert the regret guarantee into a guarantee about generalization. Such a process has been dubbed an \texttt{Online-to-Batch Conversion} and foreshadows a key achievement, which is any online  learning algorithm with sub linear regret can be converted into a batch algorithm.  Here  we introduce two methods  to convert  an online algorithm that attains low regret  into a batch learning algorithm that attains low risk. Such online-to-batch conversions are interesting both from the practical and the theoretical perspectives~\cite{gene-Cesa-BianchiCG04,DekelS05,kakade-2008-strong}.

Formally, let $f_1, f_2, \cdots, f_T$ be an i.i.d sequence of loss functions, $f_t: \mc{W} \mapsto \R$. In statistical setting one can think of each loss function $f_t(\w)$ as $f_t(\w) = \ell(\w, (\x_t,y_t))$ for a fixed loss function $\ell: \W \times \Xi \mapsto \R_{+}$ and a random instance $(\x_t,y_t) \in \Xi$   sampled following  the underlying distribution $\D$. We feed these loss functions to an online learning algorithm $\mc{A}$ and assume that the online learner produces a sequence $\w_1, \w_2, \cdots, \w_T \in \W$ of hypothesis. The goal is to construct a single hypothesis $\hat{\w} \in \W$ with small generalization error.  Here we consider two solutions for this problem: \textit{randomized conversion} and \textit{averaging}. 

The simplest conversion scheme is to  simply choose a random  hypothesis uniformly at random from the sequence of hypothesis $\w_1, \w_2, \cdots, \w_T$. At the first glance this idea seems naive, but it has few desirable properties. First, the average loss of the algorithm is an unbiased estimate of the expected risk of $\hat{\w}$, i.e.,  $\mathbf{E} [\ell(\hat{\w})] = (1/T) \sum_{t=1}^{T}{f_t(\w_t)}$. Second, the conversion is applicable regardless of any convexity assumption. Finally, in expectation, the excess loss of $\hat{\w}$ is upper bounded by the average per-round regret of online learner, i.e., 
\[ \mc{L}_{\D}(\hat{\w}) - \min_{\w \in \W} \mc{L}_{\D}(\w) \leq \frac{\mathbf{E}\big{[}\text{Regret}(\mc{A}, \W, \mc{F},T)\big{]}}{T}.\]
%where $\text{Regret}(\mc{A}, \W, T) = \sum_{t=1}^{T}{f_t(\w_t)} - \min_{\w \in \W} \sum_{t=1}^{T}{f_t(\w)}$.

An alternative solution which is only applicable to learning from convex loss functions over convex hypothesis spaces, is to output the average solution $\hat{\w} = (1/T) \sum_{t=1}^{T}{\w_t}$.  This conversion is also enjoys the same properties as the randomized conversion with an additional important feature. That is, we can able to show high-probability bounds on the excess risk provided that loss functions are bounded.

%%%%%%%%%%%%%%%%%%%%%%%%%%%%%%%%%%%%%%%%%%%%%%%%%%
\section{Convex Optimization}

A generic convex optimization problem may be written as
$$ \min f(\w) \quad \text{subject to} \quad \w \in \W,$$ 
where  $f: \R^d \mapsto \R$ chosen from a specific family of functions $\mc{F}$  is a proper convex function, and $\W \subseteq \mathbb{R}^d$ is  nonempty, compact, and convex set  which is  also called the constraint or feasible set.  We denote by $\w_*$ the optimal solution to above problem and assume that it exists, i.e., $\w_* = \arg \min_{\w \in \W}f(\w)$. Ideally, the goal of an optimization algorithm is to
compute the optimal solution, but almost always it is impossible to compute an exact $\w_*$ in finite time. hence, we turn to find an $\epsilon$-approximate solution.  A solution $\w \in \W$ is an  $\epsilon$ sun-optimal if  $f(\w) - \min_{\w' \in \W} f(\w') \leq \epsilon$.  

For a given  family  $\mc{F}$ of convex functions over the feasible set $\W$, our primary focus is to determine  the efficiency of an optimization procedure to produce  sub-optimal solutions.  To analyze the efficiency of convex optimization algorithm one typically follows the   oracle model of optimization which lies in the heart of the complexity theory of convex optimization~\cite{nemircomp1983,nesterov2004introductory}.

\subsection{Oracle Complexity of Optimization}

 A typical convex optimization procedure initially picks some point in the feasible convex set $\W$ and iteratively updates these points based on some local information about the function it calculates around these successive points.  The method can decide which points to query at based on the
results of earlier queries, and tries to use as few queries as possible to achieve its task.  The crucial question 
we are interested to answer about a specific optimization problem is the number of queries the algorithm makes to find an $\epsilon$-accurate solution. The oracle complexity is a general model to analyze the computational complexity of optimization algorithms.

In the oracle  model, there is an oracle $\mc{O}$ and an information set $\mc{I}$. The oracle $\mc{O}$ is simply a function  $\psi: \W \mapsto \mc{I}$ that for any query point $\w \in \W$ returns an output from $\mc{I}$.  The information set provided to the algorithms varies depending on the type of the oracle. In particular,  a zero-order oracle returns $f(\w)$ for a  given query ${\w \in \mathcal{W}}$,  first-order oracle returns gradient $\mc{I} = \{\nabla f(\w)\}$ (respectively a sub-gradient  $\mc{I} = \{\mathbf{g} \in \partial f(\w)\}$ if the function is not differentiable), and a second-order oracle return the Hessian at the queried point. We also distinguish between noisy (or stochastic) and exact (or deterministic) oracle models. In the noisy oracle model, the information returned by the oracle are   corrupted with zero-mean noise with bounded variance. 

The algorithm iteratively updates the solution based on the information accumulated in previous iterations. In particular, in optimization with zero and first-order exact oracle model which is the main focus of large scale optimization methods, an optimization method updates the solution using $\w_t = \phi_t(\w_0, \hdots, \w_{t-1}, \nabla f(\w_0), \hdots, \nabla f(\w_{t-1}), f(\w_0), \hdots, f(\w_{t-1}))$ where $\phi_t: \times \cup_{s=1}^{t}{\mc{I}_s} \mapsto \W$ is updating mechanism utilized by the optimization algorithm at iteration $t$ to determine the next query point $\w_t$. Roughly speaking, we measure complexity of an algorithm by the number of queries that it makes to a prescribed oracle for computing the final solution.

Given a positive integer $T$ corresponding to the number of iterations,  the minimax oracle optimization error after $T$ steps, over a set of functions ${\mathcal{F}}$, is defined as follows:

\[\displaystyle \mathrm{OracleComplexity}(\mathcal{F}, \W, \mc{O}, T) = \inf_{\psi} \sup_{f \in \mathcal{F}} \left( f(\w_T) - \inf_{\w \in \mathcal{W}} f(\w) \right).\]

In other words, the minimax oracle complexity   is the best possible rate of convergence (as a function of the number of queries) for the optimization error when one restricts to black-box procedures in order to guarantee delivering an $\epsilon$-accurate solution to any function $f \in \mc{F}$.
%-------------------------------------------------------------------------------------
\begin{table}[t]
\begin{center}
\begin{tabular}{lcccc} 
Oracle & Lipschitz & Lipschitz \& Strongly Convex  & Smooth & Smooth \& Strongly Convex \\ \hline \\ Deterministic & $\frac{\rho}{\epsilon^2}$ & $\frac{\rho^2}{\alpha \epsilon}$ &  $\frac{\beta}{\sqrt{\epsilon}}$& $\sqrt{\kappa} \log \frac{\alpha}{\epsilon}$ \\ 
Stochastic & $\frac{\rho}{\epsilon^2}$ &  $\frac{\rho^2}{\alpha^2 \epsilon}$ & $\frac{\beta}{\epsilon}+\frac{\rho}{\epsilon^2}$ & $ \sqrt{\kappa} \log \left( \frac{\beta}{\epsilon}\right) + \frac{1}{\alpha \epsilon}$  \\ \\ \hline
\end{tabular}
\caption[Lower bound on the oracle complexity of first-order optimization methods.]{Lower bound on the oracle complexity for stochastic/deterministic first-order optimization methods. Here $\rho$, $\alpha$, and $\beta$ are the Lipschitzness,  strong convexity, and smoothness parameters, respectively. The parameter $\kappa$ is the condition number of function and is defined as $\kappa = \beta/\alpha$.}
\label{table:lower}
\end{center}
\end{table}

%-------------------------------------------------------------------------------------
A large body of literature is devoted to obtaining rates of convergence of specific procedures for
various   set of convex functions ${\mathcal{F}}$ of interest (essentially smooth/non-smooth, and strongly convex/non-strongly convex) and different types of oracles (essentially noisy or stochastic/deterministic or exact, zero order or derivative free, first order, and second order).  The oracle complexity of first-order deterministic and stochastic  oracle  models   are summarized in Table~\ref{table:lower} for different family of loss functions elicited from~\cite{nesterov2004introductory} for deterministic and from~\cite{sgd-lower-bounds,nemircomp1983} for stochastic optimization. The algorithms which attain these lower bounds will be discussed later. 
%%%%%%%%%%%%%%%%%%%%%%%%%%%%%%%%%%%%%%%%%%%%%%%%%%
%%%%%%%%%%%%%%%%%%%%%%%%%%%%%%%%%%%%%%%%%%%%%%%%%%
\subsection{Deterministic Convex Optimization}
Here we briefly review  the optimization algorithms in the \textit{first-order}  oracle model which are called gradient based methods for simplicity.  More precisely, we assume  that the only information the optimization methods can learn about the particular problem instance is the values and derivatives of these components $(f(\w), \nabla f(\w))$ at query points $\w \in \W$.  Recently, first-order methods have experienced a renaissance in the design of fast algorithms for large-scale optimization problems. This is due the fact that although higher order methods such as  interior point methods~\cite{nemirovski2008interior} have linear convergence rate, but this  fast rate  comes at the cost of more expensive
iterations, typically requiring the solution of a system of linear equations in the input variables. Consequently, the cost of each iteration typically grows at least quadratically with the problem
dimension, making interior point methods impractical for very-large-scale convex programs.
 
The convergence rate of gradient based methods usually depends on the properties of the objective function to be optimized. When the objective function is strongly convex and smooth, it is well known that  gradient descent methods can achieve a geometric convergence rate~\cite{boyd-convex-opt}. When the objective function is  smooth but not strongly convex, the optimal convergence rate of a gradient descent method is  $O(1/T^2)$, and is achieved by the Nesterov's methods~\cite{RePEc:cor:louvco:2007076}. For  the objective function which is strongly convex but not smooth, the convergence rate becomes $O(1/T)$~\cite{Shalev-Shwartz:2007:PPE:1273496.1273598}. For general non-smooth objective functions, the optimal rate of any first order method is $O(1/\sqrt{T})$. Although it is not improvable in general, recent studies are able to improve this rate to $O(1/T)$ by exploring the special structure of the objective function~\cite{nesterov2005smooth,nesterov2005excessive}. In addition, several methods are developed for composite optimization, where the objective function is written as a sum of a smooth and a non-smooth function~\cite{lan2012optimal,RePEc:cor:louvco:2007076,lin2010smoothing}. The proof of coming results can be found in~\cite{nesterov2004introductory} and in the reference papers.

\subsubsection{Gradient Descent Method}
Perhaps the simplest and most intuitive algorithm for deterministic optimization is gradient decent (GD) method which which was proposed by  Cauchy in 1846~\cite{cauchy1847methode}~\footnote{The original Cauchy's algorithm  uses the direction that descends most and the best step-size which convergences slowly. Afterwards,  a lot of researches have been done on how to choose the step-size for more efficient algorithms~\cite{greenstadt1967relative,barzilai1988two}}.  To find a solution within the domain $\W$ that optimizes the given objective function $f(\w)$, GD computes  the gradient of $f(\w)$ by querying  a first-order deterministic oracle, and updates the solution   by moving it in the opposite direction of the gradient. To ensure that the solution stays within the domain $\W$, GD has to project the updated solution back into the $\W$ at  every iteration.

\begin{figure}[H]
\begin{center}
\begin{myalg}{Projected Gradient Descent}{GD}
{\bf Input:} \+ convex set $\W$, $\eta > 0$, function $f \in \mc{F}$, first-order oracle $\mc{O}$ \-\\ 
{\bf Initialize:} \+ $\w_1 \in \W$  \- \\ \\
{\bf for} $t=1,2,\ldots, T$  \+ \\
Query the oracle $\mc{O}$ at point $\w_t$ to get $\nabla f(\w_t)$\\
Update $\w_{t+1} = \Pi_{\W} \left(\w_t - \eta \nabla f(\w_t)\right)$ \- \\
{\bf end for} 
\end{myalg}
\end{center}
\end{figure}

\begin{theorem}[Convergence Rate of GD]\label{thm-chapter-2-gd}
 Assume that $f\in\mathcal{F}$ be a convex function defined over the convex domain $\W \subseteq \mathbb{R}^d$. Let $\w_* = \arg \min_{\w \in \W}f(\w)$ be the optimal solution. Then, for the convergence rate of  GD algorithm

\begin{itemize}
\item if $f$ be   $\rho$-Lipschitz, by setting $\eta = \frac{R}{\rho \sqrt{t}}$   we have
  \[f \left(\frac{1}{T}\sum_{t=1}^{T}{\w_t} \right) - f(\w) \leq \frac{\rho \|\w_* - \w_1\|}{\sqrt{T}} .\]
%%%%%%%%%%%%%%%%%%%%%%%%%%%%%%%%%%%
\item if $f$ be $\beta$-smooth by setting $\eta = \frac{1}{\beta}$ we have
$$f(\w_T) - f(\w_*) \leq \frac{2 \beta \| \w_* - \w_1 \|^2}{T} .$$
\item if $f$ be $\beta$-smooth and $\alpha$-strongly convex, and  $\kappa = \frac{\beta}{\alpha}$ be the condition number of $f$, by setting $\eta = \frac{2}{\alpha + \beta}$ we have
 $$f(\w_T) - f(\w_*) \leq \frac{\beta}{2} \|\w_1 - \w_*\|^2 \left(\frac{\kappa - 1}{\kappa+1}\right)^{T}  .$$
\end{itemize}
\end{theorem}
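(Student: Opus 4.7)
The proof splits naturally into three cases, one for each curvature assumption, but all three share a common starting point: expand the distance $\|\w_{t+1}-\w_*\|^2$ using the (non-expansive) projection and the update rule to obtain
\[
\|\w_{t+1}-\w_*\|^2 \;\le\; \|\w_t-\w_*\|^2 \;-\; 2\eta\,\langle \nabla f(\w_t),\w_t-\w_*\rangle \;+\; \eta^2\|\nabla f(\w_t)\|^2,
\]
which is the standard ``potential drop'' inequality. The three cases differ only in how one lower bounds the inner product and upper bounds the gradient norm.

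For the Lipschitz case, I would combine convexity, $\langle \nabla f(\w_t),\w_t-\w_*\rangle \ge f(\w_t)-f(\w_*)$, with $\|\nabla f(\w_t)\|\le\rho$, rearrange the potential inequality to isolate $f(\w_t)-f(\w_*)$, telescope over $t=1,\ldots,T$, and apply Jensen's inequality to the average iterate $\bar\w = \tfrac{1}{T}\sum_t \w_t$. With the stated step size $\eta=R/(\rho\sqrt{T})$ (taking $R=\|\w_1-\w_*\|$), the two terms balance and give the $O(\rho R/\sqrt{T})$ bound. For the smooth case, the key device is the descent lemma: $\beta$-smoothness plus $\eta=1/\beta$ yields $f(\w_{t+1})\le f(\w_t)-\tfrac{1}{2\beta}\|\nabla f(\w_t)\|^2$, so the function values form a decreasing sequence. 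Setting $h_t=f(\w_t)-f(\w_*)$ and using convexity to get $h_t \le \|\nabla f(\w_t)\|\cdot\|\w_t-\w_*\| \le \|\nabla f(\w_t)\|\cdot\|\w_1-\w_*\|$ (monotonicity of the distance to $\w_*$, which follows from the potential inequality in this regime), I would derive the recursion $h_{t+1}\le h_t - h_t^2/(2\beta R^2)$ and invert it to conclude $h_T \le 2\beta R^2/T$.

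For the smooth strongly convex case, the proof is purely a contraction argument on $\|\w_t-\w_*\|^2$; no telescoping is needed. The main ingredient is the co-coercivity-plus-strong-monotonicity estimate
\[
\langle \nabla f(\w)-\nabla f(\w_*),\w-\w_*\rangle \;\ge\; \tfrac{\alpha\beta}{\alpha+\beta}\|\w-\w_*\|^2 \;+\; \tfrac{1}{\alpha+\beta}\|\nabla f(\w)-\nabla f(\w_*)\|^2,
\]
valid for $\beta$-smooth $\alpha$-strongly convex functions. Plugging this into the potential inequality with $\eta=2/(\alpha+\beta)$ and using $\nabla f(\w_*)=\mathbf{0}$, the gradient-norm terms cancel and a direct algebraic simplification yields the contraction factor $\bigl(1-\tfrac{4\alpha\beta}{(\alpha+\beta)^2}\bigr) = \bigl(\tfrac{\kappa-1}{\kappa+1}\bigr)^{2}$. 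Iterating gives geometric decay of $\|\w_T-\w_*\|^2$, and a final application of $\beta$-smoothness, $f(\w_T)-f(\w_*)\le \tfrac{\beta}{2}\|\w_T-\w_*\|^2$, converts it to the stated function-value bound.

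The main obstacle I anticipate is the smooth non-strongly-convex case, because the descent lemma as I stated it uses the unprojected update $\w_t - \tfrac{1}{\beta}\nabla f(\w_t)$, whereas the algorithm performs $\w_{t+1}=\Pi_{\W}(\w_t-\tfrac{1}{\beta}\nabla f(\w_t))$. The fix is to work with the ``gradient mapping'' $G_\eta(\w_t) = \eta^{-1}(\w_t-\w_{t+1})$ in place of $\nabla f(\w_t)$ and verify that the descent inequality and the convexity bound both survive projection; this is standard but needs care. The other two cases are routine once the shared potential inequality is in hand: the Lipschitz case is a straightforward telescoping, and the strongly convex case is a one-step algebraic identity.
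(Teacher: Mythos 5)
Your proposal is correct and follows exactly the standard route the paper relies on: the thesis does not reprove this theorem but defers it to Nesterov's book, and your three arguments (telescoping the potential inequality for the Lipschitz case, the descent-lemma recursion $h_{t+1}\le h_t - h_t^2/(2\beta R^2)$ for the smooth case, and the co-coercivity contraction with $\eta=2/(\alpha+\beta)$ for the smooth strongly convex case) are precisely that textbook proof, constants included. Your flagged concern about replacing $\nabla f(\w_t)$ by the gradient mapping in the projected smooth case is the right one and is handled in the standard way, so there is no gap.
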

By comparing the rates obtained in Theorem~\ref{thm-chapter-2-gd} to the lower bounds in Table~\ref{table:lower}, one can realize that the GD obtains the optimal bound only  for Lipschitz functions. We also note that by examining the bounds it turn out that the GD method is independent of the dimension of the convex domain $\W$ as long as the Euclidean norm of solutions and gradients are independent of the ambient dimension of convex domain $\W$ which makes it attractive for optimization in high dimension. The dependency on the condition number for smooth and strongly convex functions makes the GD method inappropriate for  learning problems as the condition number usually depends on the regularization parameter, leading to huge number of accesses to full gradient oracle despite its linear convergences in terms of target accuracy $\epsilon$. We will resolve this issue in Chapter~\ref{chap:mixed-strong}. The computational bottleneck of the projected GD is often the projection step which is a convex optimization problem by itself and might be expensive for many domains. In Chapter~\ref{chap:projection} we propose efficient optimization methods which do not require intermediate projection steps.

\subsubsection{Accelerated Gradient Descent Method}\label{chp-2-sub-acc-gd}
The convergence rate of GD method for optimization smooth loss functions is $O(1/T)$ which is far away from  the lower bound $O(1/T^2)$ discussed before.  Nesterov showed in 1983  that we can improve the convergence rate of GD without using anything more than gradient information at various points of the domain. Accelerated GD~\cite{nesterov1983method,nesterov2004introductory,nesterov2005smooth} bridges the gap between the lower bound for smooth optimization and lower bound provided by oracle complexity with a simple twist of GD method and is able to obtain the optimal $O(1/T^2)$ convergence rate  for minimizing smooth functions.% and achieves an $O(1/T^2)$ convergence rate.

\begin{figure}[H]
\begin{center}
\begin{myalg}{Accelerated Gradient Descent}{AGD}
{\bf Input:} \+ $\eta > 0$, function $f \in \mc{F}$, first-order oracle $\mc{O}$ \-\\ 
{\bf Initialize:} \+ $\w_0 = \z_0 = \mathbf{0}$, $\lambda_0 = 0$  \- \\ \\
{\bf for} $t=1,2,\ldots, T$  \+ \\
Query the oracle $\mc{O}$ at point $\w_t$ to get $\nabla f(\w_t)$\\
Set  $\eta_{s} = \frac{1}{2} \left({1 + \sqrt{1+ 4 \eta_{t-1}^2}}\right), \ \text{and} \ \gamma_t = \frac{1 - \eta_t}{\eta_{t+1}}.$ \\
Update $\z_{t+1}  = \z_t - \frac{1}{\beta} \nabla f(\w_t)$ \\
Update $\w_{t+1}  = (1-\gamma_t)\z_{t+1} +  \gamma_t  \z_t$
\- \\
{\bf end for} 
\end{myalg}
\end{center}
\end{figure}

The following theorem shows that AGD achieves an $O(1/T^2)$ convergence rate which is tight.

\begin{theorem}[Convergence Rate of AGD]\label{thm-chapter-2-agd} 
Let $f \in \mathcal{F}$ be a convex and $\beta$-smooth function and $\w_*$ be the optimal solution. Then the  accelerated gradient descent   outputs a solution which satisfies:
$$f(\z_T) - f(\w_*) \leq \frac{2 \beta \|\w_1 - \w_*\|^2}{T^2} .$$
\end{theorem}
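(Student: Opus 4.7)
The plan is to establish this bound via a potential (estimate sequence) argument that tracks progress not of the iterates $\z_t$ themselves but of a carefully weighted combination. Specifically, I would introduce an auxiliary sequence $\v_t$ satisfying the identity $\w_t = (1-\gamma_{t-1})\z_t + \gamma_{t-1}\z_{t-1}$ rewritten so that $\z_{t+1} - \w_t$ and $\v_{t+1} - \v_t$ are both proportional to $\nabla f(\w_t)/\beta$, and then consider the potential
\begin{equation*}
\Phi_t \;=\; \eta_t^2 \bigl(f(\z_t) - f(\w_*)\bigr) + \frac{\beta}{2}\bigl\|\v_t - \w_*\bigr\|^2 .
\end{equation*}
The goal is to prove $\Phi_{t+1} \leq \Phi_t$ for every $t$, and then invoke a lower bound on $\eta_t$ of the form $\eta_t \geq t/2$ (which follows by induction from the recursion $\eta_{t+1} = \tfrac12(1+\sqrt{1+4\eta_t^2})$, equivalently $\eta_{t+1}^2 - \eta_{t+1} = \eta_t^2$).

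To control $\Phi_{t+1} - \Phi_t$, I would use two elementary inequalities. First, the descent lemma for $\beta$-smooth functions applied to the gradient step $\z_{t+1} = \w_t - \frac{1}{\beta}\nabla f(\w_t)$ yields
\begin{equation*}
f(\z_{t+1}) \;\leq\; f(\w_t) - \frac{1}{2\beta}\|\nabla f(\w_t)\|^2 .
\end{equation*}
Second, convexity of $f$ gives, for any point $\u$,
\begin{equation*}
f(\w_t) \;\leq\; f(\u) + \langle \nabla f(\w_t), \w_t - \u\rangle .
\end{equation*}
I would apply this twice: once with $\u = \w_*$ (weighted by $1/\eta_{t+1}$) and once with $\u = \z_t$ (weighted by $1 - 1/\eta_{t+1}$), to obtain an upper bound on $f(\w_t) - f(\w_*)$ in terms of $f(\z_t)-f(\w_*)$ plus an inner product $\langle \nabla f(\w_t), \eta_{t+1}\w_t - (\eta_{t+1}-1)\z_t - \w_*\rangle/\eta_{t+1}$.

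Combining these two ingredients and multiplying by $\eta_{t+1}^2$, the first-order term becomes $\beta\,\langle \v_t - \w_*, \v_{t+1} - \v_t\rangle$ once we recognize, using the specific definition of $\gamma_t = (1-\eta_t)/\eta_{t+1}$, that $\eta_{t+1}\w_t - (\eta_{t+1}-1)\z_t$ is exactly what plays the role of $\v_t$. The quadratic term $-\eta_{t+1}^2\|\nabla f(\w_t)\|^2/(2\beta)$ combined with $\beta\|\v_{t+1}-\v_t\|^2/2$ vanishes, and the cross term telescopes into $\frac{\beta}{2}(\|\v_t-\w_*\|^2 - \|\v_{t+1}-\w_*\|^2)$. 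The crucial cancellation that leaves only a non-positive remainder uses precisely the recursion $\eta_{t+1}^2 - \eta_{t+1} = \eta_t^2$ to match the coefficient of $f(\z_t)-f(\w_*)$ on both sides.

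The main obstacle will be the bookkeeping in the algebraic manipulation that identifies $\eta_{t+1}\w_t - (\eta_{t+1}-1)\z_t$ with $\v_t$ and confirms the telescoping; getting the definitions of $\v_t$ and the weights exactly aligned with the update rule for $\w_{t+1}$ written in the algorithm requires care. Once $\Phi_{t+1}\leq \Phi_t$ is established, the conclusion is immediate: $\eta_T^2(f(\z_T) - f(\w_*)) \leq \Phi_T \leq \Phi_0 = \tfrac{\beta}{2}\|\w_1-\w_*\|^2$, and substituting $\eta_T \geq T/2$ gives the claimed $O(\|\w_1-\w_*\|^2\beta/T^2)$ rate with the stated constant $2$.
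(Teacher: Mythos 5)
Your potential-function argument is the standard and correct proof of this bound; the paper itself omits the proof (deferring to Nesterov's book), and your estimate-sequence argument --- combining the descent lemma with the two weighted convexity inequalities, using the recursion $\eta_{t+1}^2-\eta_{t+1}=\eta_t^2$ to telescope and the bound $\eta_t\geq t/2$ to conclude --- is exactly the canonical derivation and yields the stated constant $2$. One small point worth noting: your descent-lemma step reads the gradient update as $\z_{t+1}=\w_t-\frac{1}{\beta}\nabla f(\w_t)$, which is what the algorithm must mean for the theorem to hold (the paper's listing writes $\z_{t+1}=\z_t-\frac{1}{\beta}\nabla f(\w_t)$, an apparent typo).
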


%%%%%%%%%%%%%%%%%%%%%%%%%%%%%%%%%%%%%%%%%%%%%%%%%%%%
\subsubsection{Mirror  Descent Method}
Mirror Descent  (MD) is a first-order optimization procedure which generalizes the classic GD method  to non-Euclidean geometries by relying on a distance generating function specific to the geometry. The original MD algorithm was developed to perform the gradient descent in spaces where the gradient only makes sense in the dual space. In this cases, the MD first maps the point $\w_t$  into a dual space by mapping $\Phi$, then performs the gradient update in the dual space, and finally maps  the resulting point back to the primal space. When the mapping $\Phi(\w) = \frac{1}{2}\|\w\|^2$ then the primal and dual spaces are same and the MD performs a simple gradient descent.

\begin{figure}[H]
\begin{center}
\begin{myalg}{Mirror Descent}{MD}
{\bf Input:} \+ $\eta > 0$, function $f \in \mc{F}$, first-order oracle $\mc{O}$ \-\\ 
{\bf Initialize:} \+ $\w_0 =  \mathbf{0}$  \- \\ \\
{\bf for} $t=1,2,\ldots, T$  \+ \\
Query the oracle $\mc{O}$ at point $\w_t$ to get $\nabla f(\w_t)$\\
Update $\nabla \Phi(\z_{t+1}) = \nabla \Phi(\w_{t}) - \eta \nabla f(\w_t)$ \\
Update $ \w_{t+1}  = \mathrm{argmin}_{\w \in \mathcal{W} \cap \mathcal{K}} \mathsf{B}_{\Phi}(\w, \z_{t+1})$
\- \\
{\bf end for} 
\end{myalg}
\end{center}
\end{figure}

\begin{theorem}[Convergence Rate of MD]\label{thm-chapter-2-md}
Let $\Phi$ be a mirror map. Assume also that $\Phi$ is $\alpha$-strongly convex on $\mathcal{W} \cap \K$ with respect to $\|\cdot\|$. Let $R = \sup_{\w \in \mathcal{K} \cap \mathcal{W}} \Phi(\w) - \Phi(\w_1)$ and $f$ be convex and $\rho$-Lipschitz w.r.t. $\|\cdot\|$, then MD algorithm with $\eta = \frac{\rho}{R} \sqrt{\frac{2 \alpha}{T}}$ satisfies

    $$f\bigg(\frac{1}{T} \sum_{t=1}^T \w_t \bigg) - \min_{\w \in \mathcal{W}} f(\w) \leq \rho R \sqrt{\frac{2}{\alpha T}} .$$
\end{theorem}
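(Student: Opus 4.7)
The plan is to follow the standard two-step recipe for analyzing mirror descent. First I would use convexity of $f$ to reduce the excess risk at the averaged iterate $\bar{\w} = \frac{1}{T}\sum_{t=1}^T \w_t$ to a sum of linearized gaps: by Jensen's inequality $f(\bar{\w}) - f(\w_*) \le \frac{1}{T}\sum_{t=1}^T [f(\w_t) - f(\w_*)]$, and then by convexity $f(\w_t) - f(\w_*) \le \langle \nabla f(\w_t), \w_t - \w_* \rangle$. The whole analysis then boils down to bounding $\sum_t \langle \nabla f(\w_t), \w_t - \w_* \rangle$ uniformly in $\w_* \in \mathcal{W}\cap\mathcal{K}$.

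The second step is the per-round progress lemma. Using the mirror step $\nabla\Phi(\z_{t+1}) = \nabla\Phi(\w_t) - \eta \nabla f(\w_t)$, I would rewrite $\eta \langle \nabla f(\w_t), \w_t - \w_* \rangle = \langle \nabla\Phi(\w_t) - \nabla\Phi(\z_{t+1}), \w_t - \w_* \rangle$ and invoke the three-point identity for Bregman divergences, namely
\[
\langle \nabla\Phi(\w_t) - \nabla\Phi(\z_{t+1}), \w_t - \w_* \rangle = \mathsf{B}_{\Phi}(\w_*, \w_t) - \mathsf{B}_{\Phi}(\w_*, \z_{t+1}) + \mathsf{B}_{\Phi}(\w_t, \z_{t+1}).
\]
To handle $\mathsf{B}_{\Phi}(\w_*, \z_{t+1})$ I would apply the generalized Pythagorean inequality guaranteed by the Bregman projection $\w_{t+1} = \arg\min_{\w \in \mathcal{W}\cap\mathcal{K}} \mathsf{B}_{\Phi}(\w, \z_{t+1})$, giving $\mathsf{B}_{\Phi}(\w_*, \z_{t+1}) \ge \mathsf{B}_{\Phi}(\w_*, \w_{t+1}) + \mathsf{B}_{\Phi}(\w_{t+1}, \z_{t+1})$. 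The residual term $\mathsf{B}_{\Phi}(\w_t, \z_{t+1}) - \mathsf{B}_{\Phi}(\w_{t+1}, \z_{t+1})$ is then bounded using the $\alpha$-strong convexity of $\Phi$ combined with Fenchel--Young (or equivalently Cauchy--Schwarz against the dual norm): it yields $\le \eta \langle \nabla f(\w_t), \w_t - \w_{t+1}\rangle - \frac{\alpha}{2}\|\w_t - \w_{t+1}\|^2 \le \frac{\eta^2}{2\alpha}\|\nabla f(\w_t)\|_*^2$ after maximizing over $\w_t - \w_{t+1}$. Putting this together gives the single-step inequality
\[
\eta \langle \nabla f(\w_t), \w_t - \w_* \rangle \le \mathsf{B}_{\Phi}(\w_*, \w_t) - \mathsf{B}_{\Phi}(\w_*, \w_{t+1}) + \frac{\eta^2}{2\alpha}\|\nabla f(\w_t)\|_*^2.
\]

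Finally, I would sum this inequality from $t=1$ to $T$ so the Bregman terms telescope, bound the initial divergence $\mathsf{B}_{\Phi}(\w_*, \w_1) \le R$ (using that $\w_1$ minimizes $\Phi$ on $\mathcal{W}\cap\mathcal{K}$, so $\langle \nabla\Phi(\w_1), \w_*-\w_1\rangle \ge 0$), and use $\rho$-Lipschitzness of $f$ in the form $\|\nabla f(\w_t)\|_* \le \rho$ to bound the gradient-squared sum by $\rho^2 T$. Dividing by $\eta T$ and applying Jensen yields $f(\bar{\w}) - f(\w_*) \le \frac{R}{\eta T} + \frac{\eta \rho^2}{2\alpha}$, and plugging in the prescribed $\eta$ delivers the claimed $O\bigl(\rho\sqrt{R/(\alpha T)}\bigr)$ rate.

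The main technical obstacle is the combined handling of the Bregman projection and the residual term $\mathsf{B}_{\Phi}(\w_t, \z_{t+1}) - \mathsf{B}_{\Phi}(\w_{t+1}, \z_{t+1})$: one must ensure both that the projection is well defined (requiring $\Phi$ to be Legendre so that $\z_{t+1}$ lies in the interior of $\mathcal{K}$ whenever $\w_t$ does) and that the strong-convexity inequality is applied against the right pair of points so as to convert the Bregman residual into the clean dual-norm penalty $\frac{\eta^2}{2\alpha}\|\nabla f(\w_t)\|_*^2$. Everything else is routine telescoping and Jensen.
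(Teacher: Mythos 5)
Your proposal is the standard and correct mirror-descent analysis (three-point identity, generalized Pythagorean inequality for the Bregman projection, strong convexity plus Fenchel--Young to extract the $\frac{\eta^2}{2\alpha}\|\nabla f(\w_t)\|_*^2$ penalty, telescoping, Jensen), and it matches the argument the paper itself uses in Appendix~C for the proximal form of the update; the theorem in the text simply defers to Nesterov's book rather than proving it. The only caveat is bookkeeping in the statement itself, not in your proof: with $R$ defined as $\sup_{\w}\Phi(\w)-\Phi(\w_1)$ your final bound $\frac{R}{\eta T}+\frac{\eta\rho^2}{2\alpha}$ is optimized by $\eta=\sqrt{2\alpha R}/(\rho\sqrt{T})$ and yields $\rho\sqrt{2R/(\alpha T)}$, so the displayed step size $\eta=\frac{\rho}{R}\sqrt{2\alpha/T}$ and the bound $\rho R\sqrt{2/(\alpha T)}$ are consistent only after the usual relabeling $R^2:=\sup_{\w}\Phi(\w)-\Phi(\w_1)$ and $\eta=\frac{R}{\rho}\sqrt{2\alpha/T}$.
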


The MD algorithm can alternatively be expressed as nonlinear projected sub-gradient type
method, derived from  a general distance generating function (Bregmen divergence in Definition~\ref{def-2-bregmen}) instead of the usual Euclidean squared
distance  as~\cite{mirror-beck-2003}:
\begin{equation} 
\w_{t+1} =\min_{\w \in \W} \left\{ \langle \w, \nabla f(\w_t) \rangle + \frac{1}{\eta} \mathsf{B}_{\Phi}(\w, \w_t) \right\}.
\end{equation}
\begin{remark}
In terms of convergence rate, the MD obtains the same rate as GD method but MD has advantage by exploiting the geometry of convex domain. More specifically,  since MD method adapts to the structure  of domain $\W$ via mapping $\Phi$,  it has less dependency on the dimensionality of the domain which could be appealing for large scale optimization problems. As an example, it is easy to verify  that  for optimization over simplex, i.e., $\Delta = \{\w  \in \R_{++}^d: \sum_{i}^{}{w_i} = 1\}$ by using the negative entropy  $\Phi(\w) = \sum_{i=1}^{d}{\log w_i}$ as the mapping function, the dependency of MD to $d$ is in order of $\log d$, while  regular GD algorithm has a linear $O(d)$ dependency.
\end{remark}

%%%%%%%%%%%%%%%%%%%%%%%%%%%%%%%%%%%%%%%%%%%%%%%%%%%%
\subsubsection{Mirror Prox Method}

In the black-box oracle model  the algorithm has access to  the values
and gradients of function, without knowing the structure of the objective function.  But in many circumstances  we never meet a pure black box model and  have some information  about the structure of the underlying function. Intestinally,  the proper use of the structure of the problem can  help to obtain better convergence rate for specific family of loss functions~\cite{nesterov2005excessive,nesterov2005smooth,Nemirovski2005}. In particular, in~\cite{Nemirovski2005} it been shown that for non-smooth Lipschitz continuous functions which admit a smooth saddle-point representation one can obtain a rate of convergence of order $O(1/T)$ with a properly designed gradient descent method, despite the fact that the original function is non-smooth and can not be optimized with a convergence rate better then $O(1/\sqrt{T})$ in black-box model.   As an example consider   the function $f$ to be optimized is of the form $f(\w) = \max_{1 \leq i \leq n} f_{i} (\w)$ where each individual  functions $f_i(\w), i \in [n]$ is convex, $\beta$-smooth and $\rho$-Lipschitz in some norm $\|\cdot\|$. In this case the function $f(\w)$ is not smooth and the best convergence rate one can hope in the black-box model is $O(1/\sqrt{T})$. 

Let $\Phi : \mathcal{K} \rightarrow \mathbb{R}$ be a mirror map on $\mathcal{W}$  and let $\w_1 \in \mathrm{argmin}_{\w \in \mathcal{W} \cap \mathcal{K}} \Phi(\w)$. The mirror prox (extragradient in a specialized case) method  is detailed below. 
\begin{figure}[H]
\begin{center}
\begin{myalg}{Extra Gradient  Descent  Method}{EGD}
{\bf Input:} \+ $\eta > 0$, function $f \in \mc{F}$, first-order oracle $\mc{O}$ \-\\ 
{\bf Initialize:} \+ $\w_1 = \z_1 =  \mathbf{0}$  \- \\ \\
{\bf for} $t=1,2,\ldots, T$  \+ \\
Query the oracle $\mc{O}$ at point $\w_t$ to get $\partial f(\w_t)$\\
Update $ \nabla \Phi(\z_{t+1}') = \nabla \Phi(\w_{t}) - \eta \partial f(\w_t)$\\

Update $\z_{t+1} \in \mathrm{argmin}_{\z \in \mathcal{W} \cap \mathcal{K}} \mathsf{B}_{\Phi}(\z,\z_{t+1}')$ and query the oracle to get $\partial f(\z_{t+1})$\\
Update $\nabla \Phi(\w_{t+1}') = \nabla \Phi(\w_{t}) - \eta \partial f(\z_{t+1})$ \\
Update  $\w_{t+1} \in \mathrm{argmin}_{\w \in \mathcal{W} \cap \mathcal{K}} \mathsf{B}_{\Phi}(\w,\w_{t+1}')$
\- \\
{\bf end for} 
\end{myalg}
\end{center}
\end{figure}

The EGD method first makes a step of MD to go from $\w_t$ to $\z_{t+1}$, and then it makes a similar step to obtain $\w_{t+1}$, starting again from $\w_t$ but this time using the gradient of $f$ evaluated at $\z_{t+1}$.  The following theorem exhibits the rate of convergence for EGD algorithm.

\begin{theorem}[Convergence Rate of EGD]\label{thm-chapter-2-egd}
Let $\Phi$ be a  $\alpha$-strongly convex on $\mathcal{K} \cap \mathcal{W}$ with respect to $\|\cdot\|$. Let $R = \sup_{\w \in \mathcal{K} \cap \mathcal{W}} \Phi(\w) - \Phi(\w_1)$ and $f$ be convex and $\beta$-smooth w.r.t. $\|\cdot\|$. Then EGD  with $\eta = \frac{\alpha}{\beta}$ has a convergence rate as: 

    $$f\bigg(\frac{1}{T} \sum_{t=1}^T \z_t \bigg) - \min_{\w \in \W}f(\w) \leq \frac{\beta R^2}{\alpha T} .$$
\end{theorem}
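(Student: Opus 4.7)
The plan is to follow the standard Mirror Prox/extragradient analysis: for each iteration, combine two applications of the Bregman three-point inequality (one for the predictor step producing $\z_{t+1}$, one for the corrector step producing $\w_{t+1}$) so that the per-round regret $\eta\langle \partial f(\z_{t+1}), \z_{t+1}-\w\rangle$ telescopes in $\mathsf{B}_{\Phi}(\w,\w_t)$, and then use convexity of $f$ plus Jensen.

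More concretely, I would first establish the one-step inequality: whenever $v = \arg\min_{\w \in \mathcal{W}\cap\mathcal{K}}\{\eta\langle g,\w\rangle + \mathsf{B}_{\Phi}(\w,u)\}$, the first-order optimality condition for Bregman projection gives, for every $\w \in \mathcal{W}\cap\mathcal{K}$,
\begin{equation*}
\eta\langle g, v-\w\rangle \;\le\; \mathsf{B}_{\Phi}(\w,u) - \mathsf{B}_{\Phi}(\w,v) - \mathsf{B}_{\Phi}(v,u).
\end{equation*}
Applying this to the predictor step ($u=\w_t$, $g=\partial f(\w_t)$, $v=\z_{t+1}$) with $\w = \w_{t+1}$, and to the corrector step ($u=\w_t$, $g=\partial f(\z_{t+1})$, $v=\w_{t+1}$) with an arbitrary $\w \in \mathcal{W}\cap\mathcal{K}$, produces two inequalities that I would add after splitting
\begin{equation*}
\langle \partial f(\z_{t+1}),\z_{t+1}-\w\rangle = \langle \partial f(\z_{t+1}),\w_{t+1}-\w\rangle + \langle \partial f(\z_{t+1})-\partial f(\w_t),\z_{t+1}-\w_{t+1}\rangle + \langle \partial f(\w_t),\z_{t+1}-\w_{t+1}\rangle.
\end{equation*}

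The main technical obstacle, and where the smoothness assumption and the choice $\eta=\alpha/\beta$ enter, is showing that the residual negative Bregman terms $-\mathsf{B}_{\Phi}(\z_{t+1},\w_t) - \mathsf{B}_{\Phi}(\w_{t+1},\z_{t+1})$ absorb the cross term $\eta\langle \partial f(\z_{t+1})-\partial f(\w_t),\z_{t+1}-\w_{t+1}\rangle$. To handle this I would bound the cross term by H\"older and $\beta$-smoothness (Lipschitz gradient) as $\eta\beta\|\z_{t+1}-\w_t\|\,\|\w_{t+1}-\z_{t+1}\|$, then apply Young's inequality $ab\le\tfrac{1}{2}(a^2+b^2)$, and finally use $\alpha$-strong convexity of $\Phi$ in the form $\mathsf{B}_{\Phi}(x,y)\ge\tfrac{\alpha}{2}\|x-y\|^2$. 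With $\eta=\alpha/\beta$, the prefactor $\eta\beta = \alpha$ is exactly matched by the strong-convexity constants, so both negative Bregman terms dominate the cross term and everything collapses to
\begin{equation*}
\eta\,\langle \partial f(\z_{t+1}), \z_{t+1}-\w\rangle \;\le\; \mathsf{B}_{\Phi}(\w,\w_t) - \mathsf{B}_{\Phi}(\w,\w_{t+1}).
\end{equation*}

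Summing this over $t=1,\dots,T$ telescopes the right-hand side to $\mathsf{B}_{\Phi}(\w,\w_1) - \mathsf{B}_{\Phi}(\w,\w_{T+1}) \le R$ (using nonnegativity of $\mathsf{B}_{\Phi}$ and the definition of $R$ together with $\w_1 = \arg\min \Phi$, which implies $\mathsf{B}_{\Phi}(\w,\w_1)\le \Phi(\w)-\Phi(\w_1)\le R$). Finally, by convexity, $f(\z_{t+1})-f(\w)\le\langle \partial f(\z_{t+1}),\z_{t+1}-\w\rangle$, so dividing by $\eta T$, instantiating $\w=\w_*$, and applying Jensen's inequality to $\bar\z_T = \tfrac{1}{T}\sum_{t=1}^T \z_t$ yields
\begin{equation*}
f(\bar\z_T) - f(\w_*) \;\le\; \frac{R}{\eta T} \;=\; \frac{\beta R^2}{\alpha T},
\end{equation*}
as claimed. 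The delicate bookkeeping to balance the cross term with the two negative Bregman terms under the specific step size $\eta=\alpha/\beta$ is the only nonroutine part; everything else is either the standard Bregman projection lemma or convexity/Jensen.
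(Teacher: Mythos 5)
Your proof is correct and is essentially the argument the paper itself relies on: the one-step inequality you derive is Lemma~\ref{lem:6} (Nemirovski's Lemma~3.1, reproduced and proved in Appendix~\ref{chap:appendix-technical}), and the telescoping-plus-Jensen finish mirrors the proof of Theorem~\ref{thm:2-grad}. One small but worthwhile point: you bound the cross term by symmetric Young's inequality, $\eta\beta\,\|\z_{t+1}-\w_t\|\,\|\z_{t+1}-\w_{t+1}\|\le\tfrac{\eta\beta}{2}\left(\|\z_{t+1}-\w_t\|^2+\|\z_{t+1}-\w_{t+1}\|^2\right)$, which cancels the two negative Bregman terms exactly at $\eta=\alpha/\beta$; the packaged form of Lemma~\ref{lem:6}, with its $\tfrac{\gamma^2}{\alpha}\|\xi-\zeta\|_*^2$ term set against only $\tfrac{\alpha}{2}\|\x-\z\|^2$, is lossier and would force $\eta\le\alpha/(\sqrt{2}\beta)$, so your direct bookkeeping is what recovers the stated step size and constant. (The residual $R$ versus $R^2$ mismatch in your final display is inherited from the theorem statement itself, which defines $R$ as $\sup\Phi(\w)-\Phi(\w_1)$ but writes the bound with $R^2$.)
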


\subsubsection{Conditional Gradient Descent Method}
The main computational bottleneck of the gradient descent methods in solving constrained optimization problems is the projection step which might be as hard as solving the original optimization problem (see Appendix~\ref{chap:appendix-convex} for few expensive projections). Surprisingly the projection step can be avoided by  replacing the expensive projection operation with other kinds of light computational operations. One such an example is the   Conditional Gradient Descent (CGD) method which is also known as Frank-Wolf algorithm. The Frank-Wolfe method  was originally introduced  by Frank and
Wolfe in 1950~\cite{frank56}, where they aimed to present an algorithm for minimizing a quadratic function over a polytope using only linear optimization steps over the feasible set. 

The CGD algorithm  proceeds by iteratively solving a linear optimization problem to find a direction $\mathbf{p}_t$ inside the domain $\W$ that has the maximum correlation with the negative gradient at the current solution, i.e., $\arg \max_{\mathbf{p}\in\W} \dd{\mathbf{p}}{-\nabla f(\w_t)}$, and updating the solution as a linear combination of the obtained direction and previous solution. This procedure guarantees that the updated solutions remain  inside the feasible domain $\W$ and does not require the projection of updated solutions. More specifically, the CGD method replaces the projection step with a linear optimization problem over the constrained domain which is more efficient as long as the linear problem is easy to be solved.
\begin{figure}[H]
\begin{center}
\begin{myalg}{Conditional Gradient Descent}{CGD}
{\bf Input:} \+ convex set $\W$, $\eta > 0$, a smooth convex function $f \in \mc{F}$\-\\ 
{\bf Initialize:} \+ $\w_1 \in \W$  \- \\ \\
{\bf for} $t=1,2,\ldots, T$  \+ \\
Find $\mathbf{p}_t = \arg \min_{\mathbf{p} \in \W} \dd{\nabla f(\w_t)}{\mathbf{p}}$ \\
Update $\w_{t+1} = (1 - \eta_t)\w_t + \eta_t \mathbf{p}_t$ \- \\
{\bf end for} 
\end{myalg}
\end{center}
\end{figure}
The following result shows the convergence rate of CGD for smooth functions.
\begin{theorem}[Convergence Rate of CGD]\label{thm-chp-2-FW}
Assume that $f \in \mathcal{F}$ be a $\beta$-smooth convex function with respect to some norm $\|\cdot\|$ defined over the convex domain $\W$. Let $R = \sup_{\w,\w'} \|\w - \w'\|$. Then by setting $\eta_t = \frac{2}{t+1}$ in CGD method, we have:
$$f(\w_T) - f(\w_*) \leq \frac{2 \beta R^2}{t+1}$$ 
\end{theorem}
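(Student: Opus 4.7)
The plan is to combine the smoothness inequality with the defining property of the linear minimization step to obtain a recursive one-step contraction on the sub-optimality gap $h_t \coloneqq f(\w_t) - f(\w_*)$, and then close the argument by induction with the prescribed step-size $\eta_t = 2/(t+1)$.

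First I would expand the update $\w_{t+1} = (1-\eta_t)\w_t + \eta_t \mathbf{p}_t$ inside the $\beta$-smoothness bound \eqref{eqn:smoth} to get
\[
f(\w_{t+1}) \leq f(\w_t) + \eta_t \langle \nabla f(\w_t), \mathbf{p}_t - \w_t \rangle + \frac{\beta \eta_t^2}{2}\|\mathbf{p}_t - \w_t\|^2,
\]
and then control the quadratic term by $\|\mathbf{p}_t - \w_t\| \leq R$ since both $\mathbf{p}_t, \w_t \in \W$. Next, since $\mathbf{p}_t$ is the minimizer of the linear function $\langle \nabla f(\w_t), \cdot \rangle$ over $\W$ and $\w_* \in \W$, we have $\langle \nabla f(\w_t), \mathbf{p}_t - \w_t \rangle \leq \langle \nabla f(\w_t), \w_* - \w_t \rangle$, which by convexity is at most $f(\w_*) - f(\w_t) = -h_t$. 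Plugging this back yields the key recursion
\[
h_{t+1} \leq (1-\eta_t)\, h_t + \frac{\beta \eta_t^2 R^2}{2}.
\]

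Finally, I would show by induction that $h_t \leq \tfrac{2\beta R^2}{t+1}$ for the choice $\eta_t = 2/(t+1)$. The base case at $t=1$ follows from the recursion starting with $h_1$ (or a convenient bound from the first step). For the inductive step, substituting the hypothesis gives
\[
h_{t+1} \leq \frac{t-1}{t+1}\cdot\frac{2\beta R^2}{t+1} + \frac{2\beta R^2}{(t+1)^2} = \frac{2\beta R^2 \, t}{(t+1)^2},
\]
and the elementary inequality $t(t+2) \leq (t+1)^2$ upgrades this to $h_{t+1} \leq \tfrac{2\beta R^2}{t+2}$, completing the induction and yielding the claimed $O(\beta R^2/T)$ rate.

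The steps are all routine; the only subtle piece is recognizing that the greedy linear-minimization direction $\mathbf{p}_t$ is precisely what is needed to couple the algorithm-specific inner product $\langle \nabla f(\w_t), \mathbf{p}_t - \w_t \rangle$ back to the sub-optimality gap $h_t$ via convexity. The main ``obstacle'' is just the algebraic choice of $\eta_t$ that makes the telescoping induction work; with $\eta_t = 2/(t+1)$ the cross terms cancel cleanly, whereas a constant step-size would only deliver $O(1/\sqrt{T})$. Note that no projection is invoked anywhere in the argument, which is the whole point of the method.
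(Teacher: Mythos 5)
Your proof is correct, and it is exactly the standard Frank--Wolfe argument (smoothness expansion along the update, coupling the linear-minimization direction to the gap via convexity, then induction with $\eta_t = 2/(t+1)$); the thesis does not reproduce a proof of this background theorem but defers to the references, where this is precisely the argument given. One small point worth tidying: the cleanest way to dispose of the base case is to note that $\eta_1 = 1$ makes the recursion independent of $h_1$, giving $h_2 \leq \beta R^2/2 \leq 2\beta R^2/3$ directly, so no separate bound on $f(\w_1)-f(\w_*)$ is needed.
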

In Chapter~\ref{chap:projection} we will show that by replacing the projection step with gradient computation of constrain function, it is possible to devise efficient stochastic optimization methods which only require a single projection at the final iteration.

%%%%%%%%%%%%%%%%%%%%%%%%%%%%%%%%%%%%%%%%%%%%%%%%
\subsection{Stochastic Convex Optimization}

So far we assumed the the optimization algorithm has access to a noiseless oracle. It is more realistic to consider noisy oracles, where one does not have access to exact objective function or gradient values, but rather to their noisy estimates (usually zero mean and bounded variance). In particular, for a fixed closed convex subset $\W \subset \mathbb{R}^d$ of $\mathbb{R}^d$ we consider the following optimization problem:
\begin{eqnarray}\label{eqn:2:sgd}
\begin{aligned}
\min_{\w\in \W} f(\w) \quad \text{for} \quad f(\w) = \E[{F}(\w, \xi)] = \int_{\Xi}^{}{{F}(\w, \xi)dP(\xi)},
\end{aligned}
\end{eqnarray}
where we assume that the expected value function $f(\w)$ is continuous and convex on $\W$. We note if the function $F(\w, \xi)$ be convex on $\W$, then it follows that $f(\w)$ is also convex and the problem becomes a convex programming problem. The main difficulty in solving  the stochastic optimization problem in~(\ref{eqn:2:sgd}) is that the multidimensional
integral (expectation) cannot be computed with a high accuracy~\cite{nemirovski2009robust}, and in statistical learning problems we usually do not know what the distribution $P$ is. Therefore, there are two solutions to address this issue: these are  stochastic approximation (SA) and the sample average approximation (SAA) methods.  The main idea of SAA approach to solving stochastic programs is as follow. A sample $\xi_1, \xi_2, \cdots, \xi_n$ of $n$ realizations of the random vector in objective is generated and the stochastic objective is approximated estimated by the sample average function. Then, a deterministic optimization
algorithm is applied to solve the approximate function. We note that we can not perform a full gradient descent on $f(\w)$ as we would need to know the underlying distribution   to compute a gradient of $f(\w)$. 

In SA  we assume that there is an stochastic oracle $\mc{O}$, which, for a given point $(\w, \xi) \in \W \times \Xi$ returns an unbiased estimates of  subgradient of $f(\w)$.  In other words, it returns $\g$ such that $\E[\g] \in \partial f(\w)$. Stochastic optimization methods   allow the optimization method to take a step which is only in expectation along the negative of the gradient.   Based on this oracle, a simple algorithm to optimize the objective is Stochastic Gradient Descent (SGD).  SGD is in the same spirit of GD but it replaces the true gradients with stochastic gradients in updating the solutions:

\begin{figure}[H]
\begin{center}
\begin{myalg}{Stochastic Gradient Descent}{SGD}
{\bf Input:} \+ convex set $\W$, $\eta > 0$, function $f \in \mc{F}$, stochastic first-order oracle $\mc{O}$ \-\\ 
{\bf Initialize:} \+ $\w_0 =  \mathbf{0}$  \- \\ \\
{\bf for} $t=1,2,\ldots, T$  \+ \\
Query the stochastic  oracle $\mc{O}$ at point $\w_t$ to get $\g_t$ where $\E[\g_t] \in \partial f(\w_t)$ \\
Update $\w_{t+1} = \Pi_{\W} (\w_t - \eta \g_t)$\- \\
{\bf end for} \\
{\bf Return:} $\hat{\w} = \frac{1}{T}\sum_{t=1}^{T}{\w_t}$
\end{myalg}
\end{center}
\end{figure}
Under mild conditions as outlined below, one con show the SGD algorithm convergence to the optimal solution with convergence rate $O(1/\sqrt{T})$ with a high probability:
\begin{eqnarray*}
\begin{aligned}
&\E_{\xi_t}[\g_t] = \nabla f(\w)\label{eqn:ge}\\
&\E_{\xi_t}[\exp(\|\g_t- \nabla f(\w)\|_*^2/\sigma^2)]\leq \exp(1).\label{eqn:gb}
\end{aligned}
\end{eqnarray*}
It is also straightforward to generalize the the mirror descent method to stochastic setting by replacing the Euclidean distance in the update of SGD with another Bregman divergence adopted to a specific domain.

By comparing  SGD method for stochastic optimization and OGD method for regret minimization, we note that both methods are closely related. Although both methods looks similar algorithmically, but there are main conceptual differences between SGD and OGD. We note that in stochastic optimization the goal is to generate a sequence of solutions which quickly convergences to the minimum of a function defined as  $f(\w) = \E[{F}(\w, \xi)]$, while is online learning the goal is to generate a sequence of solutions that accumulates a small loss during the learning measured in terms of regret. In other words SGD provides an incremental solution to a stochastic optimization problem and OGD provides a solution to adopt a sequence of adversarially generated loss functions. We note that regret minimization algorithms equipped with online to batch conversion schemas discussed before settle an  efficient paradigm to solve general optimization problems, but sometimes it seems essential to go beyond this barrier to obtain optimal convergence rates in stochastic setting~\cite{hazan-2011-beyond,ICML2012Rakhlin}.  

\begin{remark} It is remarkable that in stark contrast  to deterministic optimization where the smoothness of objective function makes a significant improvement in terms of convergence rate (i.e., Theorem~\ref{thm-chapter-2-gd}), in stochastic optimization the smoothness is not a desirable property as it yields the same convergence rate as the Lipschitz functions. In particular, as it has been shown in Appendix~\ref{chap:appendix-technical}, a tight analysis of stochastic mirror descent algorithm has an $O(\frac{\beta R^2}{T}+ \frac{\sigma R}{\sqrt{T}}) $ convergence rate for smooth objective functions, which is dominated by the slow $O(1/\sqrt{T})$ rate unless the variance of stochastic gradients becomes zero $\sigma = 0$. As it will be discussed in Chapter~\ref{chap:mixed}, the mixed optimization paradigm we introduce in thesis is able to leverage the smoothness of objection function to attain an $O(1/T)$ rate by accessing the full gradient oracle $\log T$ times on top of the $O(T)$ accesses of the stochastic gradient oracle. 

\end{remark}
%%%%%%%%%%%%%%%%%%%%%%%%%%%%%%%%%%%%%%%%%%%%%%%%%
\subsection{Convex Optimization for Learning Problems}

Formulating   statistical  learning tasks and in particular convex learning problems as a convex optimization problem  makes an intimate connection between  learning and mathematical  optimization.  Therefore,  optimization methods play a central role in solving machine learning problems and  challenges exist in machine learning applications demand the development of new optimization algorithms. 

To see this, consider the typical problem of the supervised learning consisting of a input space $\Xi = \mc{X} \times \mc{Y}$  and a suitable set of  hypotheses $\W$ for prediction such as the set of linear predictors, i.e., $\W = \{ \x \mapsto \dd{\w}{\x}: \w \in \R^d\}$. Then, the learner is provided with a  training sample $\mc{S} = \left( (\x_1,y_1), (\x_2,y_2), \cdots, (\x_n,y_n)\right) \in \Xi^n$ and  is supposed  to pick a hypothesis $\w \in \W$ which minimizes appropriate empirical cost over the training sample based on a predefined surrogate loss function $\ell: \W \times \Xi \mapsto \R_{+}$. The last step of this learning process  corresponds to an optimization algorithm that solves the minimization problem of picking that hypothesis from the set of hypotheses.  As a result, convex optimization forms the backbone of many algorithms for statistical learning.  This formulation includes support vector machine (SVM), support vector regression (SVR), Lasso,  logistic regression,  and ridge regression among many others as detailed below:

\begin{itemize}
\item Hinge loss (Support Vector Machine)): $\displaystyle  \sum_{i=1}^n \max(0, 1-y_i \dd{\w}{\x_i})$.
\item Logistic loss (Logistic Regression): $\displaystyle \min_{\w \in \W}  \sum_{i=1}^n \log(1 + \exp(- y_i \dd{\w}{\x_i}))$.
\item Least-squares loss (Regression): $\displaystyle \min_{\w \in \W} \sum_{i=1}^n (y_i - \dd{\w}{\x_i})^2$.
\item Exponential loss (Boosting):  $\displaystyle \min_{\w \in \W} \sum_{i=1}^n  \exp(-y_i \dd{\w}{\x_i})$.
\end{itemize}

The domain $\W$ in above formulations, captures the constrains on the classifier $\w$. Commonly considered examples are the  bounded Euclidean ball $\W = \{\w \in \R^d: \|\w\|_2 \leq R\}$, bounded  $\ell_1$ ball $\W = \{\w \in \R^d: \|\w\|_1 \leq B\}$ or the  box $\W = \{\w \in \R^d: \|\w\|_{\infty} \leq B\}$. We note that instead of moving the constraint into the $\W$, by leveraging on the theory of Lagrangian method in constrained optimization, one can simply move the constraint into the objective and solve the unconstrained optimization problem, i.e., $\W = \R^d$.
 
To fully understand the application of convex optimization methods to solving machine learning problems, let us consider the following optimization problem:
\begin{eqnarray}\label{eqn:emp}
\min_{\w \in \W} \Lh(\w) =  \frac{1}{n} \sum_{i=1}^{n}{\ell(\w, (\x_i, y_i))}
\end{eqnarray}
A preliminary approach for solving the optimization problem in~(\ref{eqn:emp}) is the batch gradient descent (GD) algorithm. It starts with some initial point, and iteratively updates the solution using the equation $\w_{t+1} = \Pi_{\W}(\w_t - \eta \nabla \mc{L}_{\S}(\w_t))$ where
$$ \nabla \Lh(\w) = \frac{1}{n} \sum_{i=1}^{n}{\partial \ell(\w, (\x_i,y_i))\x_i}. $$
The main shortcoming of GD method is its high cost in computing the full gradient $\nabla \mc{L}_{\S}(\w_t)$, i.e., O(n) gradient computations, when the number of training examples is large.  Stochastic gradient descent (SGD) alleviates this limitation of GD by sampling one (or a small set of) examples and computing a stochastic (sub)gradient at each iteration based on the sampled examples. Since the computational cost of SGD per iteration is independent of the size of the data (i.e., $n$), it is usually appealing for large-scale learning and optimization~\cite{NIPS2007_726,nemirovski2009robust,Shalev-Shwartz:2007:PPE:1273496.1273598}. Despite of their slow rate of convergence 
compared with the batch methods, stochastic optimization methods have shown to be very effective for large scale and  online learning problems,  both theoretically~\cite{nemirovski2009robust,lan2012optimal} and empirically~\cite{Shalev-Shwartz:2007:PPE:1273496.1273598}. We note although a large number of iterations is usually needed to obtain a solution of desirable accuracy, the lightweight computation per iteration makes SGD attractive for many large-scale learning problems.

%%%%%%%%%%%%%%%%%%%%%%%%%%%%%%%%%%%%%%%%%%%%%%%%%%
\subsection{From Stochastic  Optimization to Convex Learning Theory}
As mentioned earlier, most of existing learning algorithms follow the framework of empirical risk minimizer  or regularized ERM,  which was developed to great extent by Vapnik and Chervonenkis~\cite{vapnik1971uniform}.  Essentially, ERM methods use the empirical loss  over $\mc{S}$, i.e., 
$$ \Ls(\w) = \frac{1}{n} \sum_{i = 1}^{n}{\ell(\w, (\x_i, y_i))},$$
as a criterion to pick a hypothesis.  From optimization viewpoint, the  ERM  methods resembles the widely used Sample Average Approximation (SAA) method in the optimization community  when the hypothesis space and the loss function are convex. If uniform convergence holds, then the empirical risk minimizer is consistent, i.e., the population risk of the ERM converges to the optimal population risk, and the problem is learnable using  ERM.

A rather different paradigm for risk minimization is stochastic optimization. Recall that the goal of learning is to  approximately minimize the risk  
\begin{eqnarray}\label{eqn:risk}
\mathcal{L}_{\D}(\w) = \mathbb{E}_{(\x, y) \sim \D}[\ell(\w, (\x,y))].
\end{eqnarray}
However,  since  the distribution $\D$ is unknown to the learner, we can not utilize standard gradient methods to directly minimize the expected loss in~(\ref{eqn:risk}). This is because we are not able to compute the gradient $\nabla \mathcal{L}_{\D}(\w)$ at a particular query point $\w$. We note that this is different from the application of SGD for solving the optimization problem in~(\ref{eqn:emp}) because in~(\ref{eqn:emp}) the randomness is over the uniform sampling from the objective function which is known (essentially we have a randomized optimization method), while in~(\ref{eqn:risk}) the randomness is imposed on the instance space $\Xi$ through a distribution $\D$ which is unknown to the learner in advance. 

In stochastic optimization all we need is not the exact gradient of objective function, but an unbiased estimate of the true gradient $\nabla \mathcal{L}_{\D}(\w)$. Surprisingly, it turns out that  the construction of this unbiased estimate is extremely simple  for risk minimization as follows. First, we sample an instance $\z = (\x_i, y_i) \in \Xi $ according to $\D$ and set the stochastic gradient to be 
$$\g = \partial \ell (\w, (\x_i,y_i)) \x_i,$$ 
which will be an unbiased estimate of true gradient, i.e., $\E[\g] = \nabla\mathcal{L}_{\D}(\w)$.   

The beauty of SGD for direct risk minimization is that it is efficient and it delivers the same sample complexity as the ERM  method.     To motivate stochastic optimization as an alternative to the ERM method,~\cite{DBLP:shalev2009stochastic,shalev2009learnability} challenged the ERM method and showed that there  is a real gap between learnability and uniform convergence by investigating non-trivial problems where no uniform convergence holds, but they are still learnable using SGD algorithm~\cite{nemirovski2009robust}. These results  uncovered an important relationship between learnability and stability, and showed that stability  together with approximate empirical risk minimization, assures learnability~\cite{shalev-shwartz:2010:learnability}.  Unlike ERM method in which the learnability is characterized by attendant complexity of hypothesis space, in SGD based learning, stability is a general notion to characterize learnability. In particular, in learning setting under i.i.d. samples where uniform convergence is not necessary for learnability, but where stability is both sufficient and necessary for
learnability.

\part{~Statistical Learning}\label{part-statistical}
\chapter[Passive Learning with Target Risk]{Passive Learning with Target Risk} \label{chap:passive_target}
\def \lb {\mathcal{L}_{\mc{D}}}
\def \Lh {\mathcal{L}_{\mc{S}}}
\def \L {\mathcal{L}_{\mc{D}}}
\def \E {\mathbb{E}}

 The setup of this chapter will be in the  classical statistical learning setting discussed  in Chapter~\ref{chap-background}, but with a slight modification. In particular, we assume that the target expected loss, also referred to as target risk, is provided in advance for  learner as  prior knowledge. Unlike most studies in the learning theory  that only incorporate the prior knowledge into the generalization bounds, we are able to explicitly utilize the target risk in the learning process. By leveraging on the smoothness of loss function, our analysis reveals a surprising result on the sample complexity of learning: by exploiting the target risk in the learning algorithm,  we show that when the loss function is both smooth and strongly convex, the sample complexity reduces to $O\left(\log \left(\frac{1}{\epsilon}\right)\right)$, an exponential improvement compared to the sample complexity $O(\frac{1}{\epsilon})$ for learning with strongly convex loss functions.  Furthermore,  our proof is constructive and is based on a computationally efficient stochastic optimization algorithm, dubbed  ClippedSGD, for such settings which demonstrate that the  proposed algorithm is practically useful.

The remainder of the chapter is organized as follows:  Section~\ref{sec-3-setup} motivates the problem and setups the notation. Section~\ref{sec:3:stochastic} motivates the  main intuition behind the proposed  algorithm. The proposed ClippedSGD algorithm and main result on its  sample complexity are discussed in Section \ref{sec:3:algorithm}. The proof of logarithmic sample complexity  is given in Section~\ref{sec:3:analysis} and the omitted proofs are deferred to Section~\ref{sec:clipped-proofs}.   Section~\ref{sec:3:conclusion} summarizes the chapter  and Section~\ref{sec:3:related} surveys the related works.

\section{Setup and Motivation}\label{sec-3-setup}

Recall that in  the standard statistical or passive supervised  learning setting, we consider an input space   $\Xi \equiv \mc{X} \times \mc{Y}$  where $\mc{X} \subseteq \R^d$ is the space for instances and $\mc{Y}$ is the set of labels, and   a hypothesis class $\mc{H}$ from which we choose a classifier.  We assume that the domain space $\Xi$ is endowed with an unknown   probability measure  $\mc{D}$ and measure the performance of a specific hypothesis $h$ by defining a nonnegative loss function $\ell: \mc{H} \times \Xi \rightarrow \mathbb{R}_{+}$.  The  risk of a hypothesis $h$ with respect to the underlying distribution $\D$ is defined as:  
$$\lb(h) = \mathbb{E}_{\z \sim \mc{D}}  [\ell(h, \z)].$$ 
Given a sample $\mc{S} = (\z_1, \cdots, \z_n) = ( (\x_1, y_1), \cdots, (\x_n, y_n) ) \sim \Xi^n$, the goal of a learning algorithm is to pick a hypothesis $h: \mc{X} \rightarrow \mc{Y}$ from  $\mc{H}$ in such a  way that its risk $\lb(h)$ is close to the minimum possible risk of a hypothesis in $\mc{H}$.

In the new setting we consider for learning here, we assume that before the start of the learning process, the learner has in mind a \textit{target expected loss}, also referred to as \textit{target risk},  denoted by $\ep$\footnote{We use $\ep$ instead of $\epsilon$ to emphasize the fact that this parameter is known to the learner in advance.}, and tries to learn a classifier with the expected risk of $O(\ep)$ by labeling a small number of training examples.  We further assume the target risk $\ep$ is feasible, i.e.,  $\ep \geq \eo$ where $\eo = \min_{h \in \mathcal{H}} \mathcal{L}_{\mathcal{D}}(h)$. To address this problem, we develop an efficient algorithm, based on stochastic optimization, for passive learning with target risk. The most surprising property of the proposed algorithm is that when the loss function is both smooth and strongly convex, it only needs $O(d\log ({1}/{\ep}))$ labeled examples to find a classifier with the expected risk of $O(\ep)$, where $d$ is the dimension of data. This is a significant improvement compared to the sample complexity for empirical risk minimization. We note that the target risk assumption is fully exploited by the learning algorithm and  stands in  contrast to all those assumptions such as  the nature of unknown distribution $\D$, sparsity, and margin
 that usually enter into the generalization bounds and are often perceived as a rather  crude way to incorporate such assumptions.

The key intuition behind the  ClippedSGD algorithm is that by knowing target risk as  prior knowledge, the learner has  better control over the variance in stochastic gradients, which contributes mostly to the slow convergence in stochastic optimization and consequentially large sample complexity in passive learning. The trick is to run  the stochastic optimization in multiple stages with a \textit{fixed} size  and decrease the variance of stochastically perturbed gradients at each iteration by a properly designed mechanism.  Another crucial feature of the proposed algorithm is  to utilize the target risk $\ep$ to gradually refine the hypothesis space as the algorithm proceeds. Our algorithm differs significantly  from standard stochastic optimization algorithms and is able to achieve a geometric convergence rate with the knowledge of target risk $\ep$.

To analyze the sample complexity of ClippedSGD algorithm, we pursue the stochastic optimization viewpoint for risk minimization detailed in Chapter~\ref{chap-background}. Precisely,  we focus on the  convex learning problems for which we assume that the hypothesis class $\H$ is a parametrized convex set $\H = \{h_{\w}: \x \mapsto \langle \w, \x \rangle: \w \in \R^d,   \|\w\| \leq R\}$ and for all $\z = (\x, y) \in \Xi$, the loss function $\ell(\cdot, \z)$ is a non-negative convex function.  Thus, in the remainder we simply use vector $\w$ to represent $h_{\w}$, rather than working with  hypothesis $h_{\w}$. We will assume throughout that $\mc{X} \subseteq \R^d$ is the unit ball so that $\|\x\| \leq 1$.  Finally, the conditions under which we can get the desired result on sample complexity depend on analytic properties of the loss function. In particular, we assume that the loss function is strongly convex and smooth as defined in Chapter~\ref{chap-background} and can be found in Appendix~\ref{chap:appendix-convex}. We would like to emphasize that  in our setting, we only need  that the expected  loss function $\L(\w)$ be strongly convex, without having to assume strong convexity for individual loss functions.  \\

%%%%%%%%%%%%%%%%%%%%%%%%%%%%%%%%%%%%%%%%%%%%%%%%%%%
\section{The Curse of Stochastic Oracle}
\label{sec:3:stochastic}

We begin by discussing stochastic optimization  for risk minimization, convex learnability,  and  then the main intuition that motivates the proposed algorithm. 

As mentioned earlier in Chapter~\ref{chap-background}, most existing learning algorithms follow the framework of empirical risk minimizer (ERM) or regularized ERM methods that use the empirical loss  over $\mc{S}$, i.e., $\Lh(\w) = \frac{1}{n} \sum_{i = 1}^{n}{\ell(\w, \z_i)}$, as a criterion to pick a hypothesis. In regularized ERM methods, the learner  picks a hypothesis that jointly minimizes $\Lh(\w) $ and a regularization function over $\w$.  

A rather different paradigm for risk minimization is stochastic optimization. Recall that the goal of learning is to  approximately minimize the risk  $\lb(\w) = \mathbb{E}_{\z \sim \D}[\ell(\w, \z)]$.  However,  since  the distribution $\D$ is unknown to the learner, we can not utilize standard gradient methods to minimize the expected loss.  Stochastic optimization methods  circumvent this problem by allowing the optimization method to take a step which is only in expectation along the negative of the gradient.   To directly solve $\min_{\w \in \mc{H}} \big{[}\lb({\w}) = \E_{\z \sim \D}[\ell(\w, \z)] \big{]}$, a typical stochastic  optimization algorithm initially picks some point in the feasible set $\mc{H}$   and  iteratively updates these points based on first order perturbed gradient information about the function at those points.  For instance, the widely used SGD algorithm  starts with $\w_0 = \mathbf{0}$; at each iteration $t$, it queries the stochastic oracle $\mathcal{O}_s$ at $\w_t$ to obtain a perturbed but unbiased gradient $\g_t$ and updates the current solution by
$$ \w_{t+1} = {\Pi}_{\mc{H}} \left(\w_t - \eta_t \g_t\right),$$
where $\Pi_{\H}(\cdot)$ projects the solution $\w$ into the domain $\H$. 

To capture the efficiency of optimization procedures in a general sense, one can use  oracle complexity of the algorithm which, roughly speaking,  is the minimum number of calls to any  oracle needed by any method to achieve desired accuracy~\cite{nesterov2004introductory}. We note that the oracle complexity corresponds to the sample complexity of learning from the stochastic optimization viewpoint  previously discussed. 
This viewpoint for learning theory has been taken by few very  recent works~\cite{DBLP:shalev2009stochastic,shalev2009learnability} where the ERM method has been challenged  and it has been shown  that there  is a real gap between learnability and uniform convergence. This has been done  by investigating non-trivial problems where no uniform convergence holds, but they are still learnable using SGD algorithm. These results  uncovered an important relationship between learnability and stability, and showed that stability  together with approximate empirical risk minimization, assures learnability~\cite{shalev-shwartz:2010:learnability}.  Unlike ERM method in which the learnability is characterized by attendant complexity of hypothesis space, in SGD based learning, stability is a general notion to characterize learnability. In particular, in learning setting under i.i.d. samples where uniform convergence is not necessary for learnability, but where stability is both sufficient and necessary for
learnability.

To motivate the main intuition behind the proposed method, we begin by stating the following theorem which provides a lower bound on the sample complexity of stochastic optimization algorithms that is taken from~\cite{nemircomp1983}.

\begin{theorem}[Lower Bound on Oracle Complexity] Suppose $\lb({\w}) = \E_{\z \sim \D}[\ell(\w, \z)]$ is  $\alpha$-strongly and $\beta$-smooth convex  function defined over convex domain $\mc{H}$. Let $\mathcal{O}_s$ be a stochastic oracle that for any point $\w \in \mc{H}$ returns an unbiased estimate $\g$, i.e.,  $\E[\g] = \nabla \lb(\w)$, such that $\E\left[\|\g-\nabla \lb(\w)\|^2\right] \leq \sigma^2$ holds. Then for any stochastic optimization algorithm $\A$ to find a solution $\wh$ with $\epsilon$ accuracy respect to the optimal solution $\w_*$, i.e.,  $\E \left[ \lb(\wh) - \lb(\w_*) \right] \leq \epsilon$,  the number of calls  to $\mc{O}_s$ is lower bounded by
\begin{eqnarray}
{O}(1) \left( \sqrt{\frac{\beta}{\alpha}} \log \left( \frac{\beta \| \w_0 - \w_*\|^2}{\epsilon}\right) + \frac{\sigma^2}{\alpha \epsilon}\right).
\label{eqn:lower}
\end{eqnarray}
\label{thm:lower}
\end{theorem}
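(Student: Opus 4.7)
The stated lower bound is a sum of two qualitatively different terms, so the plan is to establish each one separately by a distinct construction and then take their maximum (up to constants, the sum and the max are equivalent). The first term $\sqrt{\beta/\alpha}\log(\beta\|\w_0-\w_*\|^2/\epsilon)$ is an intrinsic, information-free hardness of smooth strongly convex first-order optimization: it must persist even when $\sigma=0$, i.e.\ when the oracle is exact. The second term $\sigma^2/(\alpha\epsilon)$ is a purely statistical hardness coming from the noise in the stochastic oracle: it must persist even when $\beta=\alpha$, i.e.\ when the deterministic part is trivial. So I would split the theorem into these two subclaims and prove them independently on different hard instances.

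For the deterministic term, I would invoke Nesterov's classical construction. Consider the quadratic $f(\w)=\frac12\w^\top A\w - \b^\top\w$ on $\R^d$ (or an infinite-dimensional $\ell_2$-variant to avoid boundary effects) where $A$ is the tridiagonal matrix whose spectrum is contained in $[\alpha,\beta]$ and whose eigenvectors spread mass coordinate-by-coordinate. Any first-order method queried at $\w_0=\mathbf 0$ produces iterates confined to the Krylov subspace $\mathrm{span}\{\b,A\b,\ldots,A^{t-1}\b\}$, so the suboptimality after $t$ queries is at least that of the best Chebyshev polynomial of degree $t$ on $[\alpha,\beta]$. A standard estimate yields $f(\w_t)-f(\w_*)\ge c\,\beta\|\w_0-\w_*\|^2\bigl(\frac{\sqrt{\kappa}-1}{\sqrt{\kappa}+1}\bigr)^{2t}$ with $\kappa=\beta/\alpha$; solving for $t$ such that the right-hand side equals $\epsilon$ produces $t=\Omega(\sqrt{\kappa}\log(\beta\|\w_0-\w_*\|^2/\epsilon))$, matching the first term.

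For the stochastic term, I would use a two-point (Le Cam) information-theoretic argument. Pick a one-dimensional instance to keep things clean: let $f_\pm(w)=\tfrac{\alpha}{2}(w\mp \Delta)^2$ for a small separation $\Delta>0$ to be chosen. Under a Gaussian noise model with variance $\sigma^2$, the stochastic gradient at any query $w$ is $\nabla f_\pm(w)+\xi$ with $\xi\sim\mathcal N(0,\sigma^2)$. After $T$ queries the KL-divergence between the two observation laws is bounded by $\frac{T}{2\sigma^2}\cdot(2\alpha\Delta)^2$. By Pinsker's inequality, any estimator $\widehat{\w}$ that distinguishes the two minimizers $\pm\Delta$ with constant probability requires $T=\Omega(\sigma^2/(\alpha\Delta)^2)$. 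Strong convexity converts the $\Omega(\Delta^2)$ gap in parameter space into an $\Omega(\alpha\Delta^2)$ gap in function value, so demanding $\epsilon$-optimality forces $\alpha\Delta^2\asymp\epsilon$, i.e.\ $\Delta^2\asymp\epsilon/\alpha$, and substituting back gives $T=\Omega(\sigma^2/(\alpha\epsilon))$, matching the second term.

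Combining the two bounds, the worst instance in the intersection of the classes forces $T\ge\Omega(\sqrt{\kappa}\log(\beta\|\w_0-\w_*\|^2/\epsilon))+\Omega(\sigma^2/(\alpha\epsilon))$, which yields the claim. The main obstacle, in my view, is not either individual argument in isolation (both are standard) but rather making the constructions compatible with the precise oracle model assumed here, where the noise bound is on the second moment $\E\|\g-\nabla\lb(\w)\|^2\le\sigma^2$ rather than a sub-Gaussian tail; the Gaussian instance satisfies the stronger tail condition and hence a fortiori this second-moment condition, so Le Cam still applies. A subtler point is dimension: the Nesterov construction genuinely needs the Krylov subspace to be non-degenerate for $t$ large, which technically requires $d\gtrsim \sqrt{\kappa}\log(1/\epsilon)$; if $d$ is small the log factor should be absorbed and the argument is actually easier, so one takes a max over the two regimes. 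Once these technicalities are handled, taking the worse of the two constructions (or concatenating them into a product instance) delivers the stated additive lower bound.
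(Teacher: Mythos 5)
The paper does not actually prove this theorem: it is imported verbatim from Nemirovski and Yudin \cite{nemircomp1983} (the text introducing it says the result "is taken from" that reference, and only a one-sentence gloss follows — that the first term is the deterministic oracle complexity and the second is due to the noise). So there is no in-paper argument to compare against; what you have written is a reconstruction of the proof behind the citation. Your decomposition is exactly the standard one, and it matches the paper's own informal reading of the two terms: a noiseless Chebyshev/Krylov instance forces the $\sqrt{\kappa}\log(1/\epsilon)$ term, a two-point Le Cam instance with Gaussian gradient noise forces the $\sigma^2/(\alpha\epsilon)$ term, both instances live in the same function/oracle class, and the max of the two lower bounds is within a factor of two of their sum. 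Your accounting of the statistical term is correct: the per-query mean shift is $2\alpha\Delta$, the $T$-query KL is $O(T\alpha^2\Delta^2/\sigma^2)$, and $\alpha\Delta^2\asymp\epsilon$ gives $T=\Omega(\sigma^2/(\alpha\epsilon))$.

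Two caveats you should make explicit. First, the classical Nesterov construction lower-bounds only algorithms whose iterates stay in the linear span of previously returned gradients; to cover an arbitrary algorithm $\A$ as the theorem claims, you need either a resisting (adversarial) oracle or a randomly rotated hard instance, which is standard but not free. Second, the rigorous form of the deterministic bound has prefactor $\frac{\alpha}{2}\|\w_0-\w_*\|^2$ in front of $\bigl(\frac{\sqrt{\kappa}-1}{\sqrt{\kappa}+1}\bigr)^{2t}$, not $\beta\|\w_0-\w_*\|^2$ as you wrote; consequently the argument of the logarithm you can actually certify is $\alpha\|\w_0-\w_*\|^2/\epsilon$, and the statement as printed (with $\beta$ inside the log, matching the accelerated-gradient upper bound) overshoots the Chebyshev calculation by an additive $\sqrt{\kappa}\log\kappa$. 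This gap is well known and harmless for how the theorem is used in the chapter, but a proof of the theorem exactly as stated would need to address it.
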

The first term in~(\ref{eqn:lower}) comes from deterministic oracle complexity and the second term is due to noisy gradient information provided by stochastic oracle $\mc{O}_s$. As indicated in~(\ref{eqn:lower}), the slow convergence rate for stochastic optimization is due to the variance in stochastic gradients, leading to at least ${O}\left({\sigma^2}/{\epsilon}\right)$ queries to be issued. We note that the idea of mini-batch~\cite{mini-batch-2011,duchirandomizedsmooth}, although it reduces the variance in stochastic gradients, does not reduce the oracle complexity.

We close this section by  informally presenting why logarithmic sample complexity is, in principle, possible,   under the assumption that target risk is known to the learner $\A$. To this end, consider the setting of  Theorem~\ref{thm:lower} and assume that the learner $\A$ is given the prior accuracy $\ep$ and is asked to find an  $\ep$-accurate solution. If it happens that the variance of stochastic oracle $\mc{O}_s$ has the same magnitude as $\ep$, i.e., 
$\E\left[\|\g-\nabla \lb(\w)\|^2\right] \leq \ep$,  then from~(\ref{eqn:lower}) it follows that  the second term  vanishes and the learner $\A$  needs to issue only ${O} \left(\log {1}/{\ep}\right)$ queries to find the solution. But, since there is no control on  the stochastic oracle $\mc{O}_s$, except that  the variance of stochastic gradients are bounded,  $\A$ needs a mechanism to manage the variance of perturbed gradients 
at each iteration in order to alleviate the influence of noisy gradients. One strategy is to replace the unbiased estimate of gradient with a biased one, 
which unfortunately may yield loose bounds.  To overcome this problem, we introduce a strategy that shrinks the solution space with respect to the target risk $\ep$ to
control the damage caused by biased estimates.

As an illustrative example to see how the knowledge of target risk is helpful, we  consider a simple one dimensional regression problem with loss function $\ell(w,x) =  (wx - b)^2$ where $b$  is a random variable that can either be $\delta$ or $1$  with $\Pr[b = \delta] = 1 - \delta^2$. Here we choose $\delta$ to be a very small value $\delta \ll 1$. The loss function  is non-negative, smooth, and strongly convex and is appropriate for our setting. For this setting we have, $\eo \leq \E_b[\ell(0)] = \delta^2 \times 1 + (1 - \delta^2) \times \delta^2 \leq 2\delta^2$ which  can be arbitrarily small.  For this example, the solution obtained by ERM with a small number of training examples will be on order of $\delta$ and therefore its expected risk will be on the order of $\delta^2$. However, from the viewpoint of the learner, this expected risk is \textit{unknown} unless the learner could figure out  $\Pr(b = 1) = \delta^2$, which unfortunately requires an order of $1/\delta^2$ samples. On the other hand, by having the target feasible risk as prior knowledge the learner is able to find out  $\Pr(b = 1)$ with a  small number of samples. 
%%%%%%%%%%%%%%%%%%%%%%%%%%%%%%%%%%%%%%%%%%%%%%
%%%%%%%%%%%%%%%%%%%%%%%%%%%%%%%%%%%%%%%%%%%%%%
\section{The ClippedSGD Algorithm}
\label{sec:3:algorithm}
 In this section we proceed to describe the proposed algorithm and state the main result on its sample complexity.

\subsection{The Algorithm Description}
We now turn to describing our algorithm.  Interestingly, our algorithm is quite dissimilar to the classic stochastic optimization methods. It proceeds by running the algorithm
online on fixed chunks of examples, and using the intermediate hypotheses and target risk $\ep$ to gradually refine the hypothesis  space. As  mentioned above, we assume in our setting that the target expected risk $\ep$ is  provided to the learner a priori. We further assume the target risk $\ep$ is feasible for the solution within the domain $\H$, i.e., $\ep \geq \eo$.  The proposed algorithm explicitly takes advantage of the  knowledge of expected risk $\ep$ to attain an $O\left(\log(1/\ep)\right)$ sample complexity.

%For notational brevity we denote  by $\g_k^t$ the stochastic gradient at iteration $t$ of stage $k$, i.e., $\g_k^t\ell'\left(\dd{\w_k^t}{\x_k^t}, y_t\right) \x_k^t$.
%For convenience, let $\ell_t(\w)$ be a non-negative loss function defined as  $\ell_t(\w) = \ell(\dd{\w}{\x_t}, y_t)$ where $\ell(t, y)$ is a convex loss function with bounded first order and second order derivatives, i.e.
%\[
%    \|\ell'(t, y)\| \leq G \quad \text{and} \quad \|\ell''(t, y)\| \leq \beta.
%\]
Throughout we shall consider linear predictors of form $\dd{\w}{\x}$ and assume that the loss function of interest   $\ell (\dd{\w}{\x}, y)$  is $\beta$-smooth.  It is straightforward to see that $\lb(\w) = \E_{(\x, y)\sim\D}\left[ \ell(\dd{ \w}{\x},y) \right]$ is also $\beta$-smooth. In addition to the smoothness of the loss function, we also assume that $\lb(\w)$ to be $\alpha$-strongly convex. We denote by $\w_*$ the optimal solution that minimizes $\lb(\w)$, i.e., $\w_* = \mathop{\arg\min}_{\w \in \H} \lb(\w)$, and denote its optimal value by $\eo$.

Let $(\x_t, y_t), t = 1, \ldots, T$ be a sequence of i.i.d. training examples.  The proposed algorithm  divides the $T$ iterations into the $m$ stages, where each stage consists of $T_1$ training examples, i.e., $T = m T_1$. Let $(\x_k^t, y_k^t)$ be the $t$th training example received at stage $k$, and let $\eta$ be the step size used by all the stages. At the beginning of each stage $k$, we initialize the solution $\w$ by the average solution $\wh_k$ obtained from the last stage, i.e.,
\begin{eqnarray}
    \wh_k = \frac{1}{T_1} \sum_{t=1}^{T_1} \w_k^t,  \label{eqn:average}
\end{eqnarray}
where $\w_k^t$ denotes the $t$th solution at stage $k$. Another feature of the proposed algorithm is a domain shrinking strategy that adjusts the domain as the algorithm proceeds using intermediate hypotheses and target risk.  We  define the domain $\H_k$ used at stage $k$ as
\begin{eqnarray}
    \H_k = \left\{\w \in \H: \|\w - \wh_k\| \leq \Delta_k \right\}, \label{eqn:omega}
\end{eqnarray}
where $\Delta_k$ is the domain size,  whose value will be discussed later. Similar to the SGD method, at each iteration of stage $k$, we receive a training example $(\x_k^t, y_k^t)$, and compute the gradient $\g_k^t = \ell'\left(\dd{\w_k^t}{\x_k^t}, y_t\right) \x_k^t$. Instead of using the gradient directly,  a clipped version of the gradient, denoted by $\v_k^t = \clip\left(\gamma_k, \g_k^t\right)$, will be used for updating the solution. More specifically, the clipped vector $\v_k^t \in \R^d$ is defined as
\begin{eqnarray}
    [\v_k^t]_i = \clip\left(\gamma_k, \left[  \g_k^t \right]_i\right) = \mbox{sign}\left(\left[ \g_k^t \right]_i\right)\min\left(\gamma_k, \left| \left[ \g_k^t \right]_i\right|\right), i=1, \ldots, d \label{eqn:clip}
\end{eqnarray}
where $\gamma_k = 2\xi\beta\Delta_k$ with $\xi \geq 1$. Given the clipped gradient $\v_k^t$, we follow the standard framework of stochastic gradient descent, and update the solution by
\begin{eqnarray}
    \w_k^{t+1} = \Pi_{\H_k}\left(\w_k^t - \eta \v_k^t \right). \label{eqn:update-clipped}
\end{eqnarray}
%%%%%%%%%%%%%%%%%%%%%%%%%%%%%%%%%%%%%%%%%%%%%%%%%
\begin{algorithm}[t]
\label{alg:clippedSGD}
\caption{{ClippedSGD} Algorithm}
\begin{algorithmic}[1]
\STATE {\textbf{Input:}} 
\begin{mylist}
\item step size $\eta$
\item stage size $T_1$
\item number of stages $m$
\item target expected risk $\ep$
\item parameters $\varepsilon \in (0, 1)$ and $\tau \in (0, 1)$ used for updating domain size $\Delta_k$
\item parameter $\xi \geq 1$ used to clip  the gradients \\ 
\end{mylist}
\STATE {\textbf{Initialize:}} $\wh_1 = 0$, $\Delta_1 = R$, and $\H_1 = \H$

\FOR{$k = 1, \ldots, m$}
    \STATE Set $\w_k^t = \wh_k$ and $\gamma_k = 2\xi\beta\Delta_k$
    \FOR{$t=1, \ldots, T_1$}
        \STATE Receive training example $(\x_t, y_t)$
        \STATE Compute the gradient $\g_k^t$ and 
        \STATE Clip the gradient $\g_k^t$ to $\v_k^t$ using 
\[       [\v_k^t]_i =  \mbox{sign}\left(\left[ \g_k^t \right]_i\right)\min\left(\gamma_k, \left| \left[ \g_k^t \right]_i\right|\right), i=1, \ldots, d\]
                \STATE Update the solution by $\w_k^{t+1} = \Pi_{\H_k}\left(\w_k^t - \eta \v_k^t \right)$

    \ENDFOR
    \STATE Update $\Delta_k$ using~(\ref{eqn:delta-clipped}).
    \STATE Compute the average solution $\wh_{k+1}$ according to~(\ref{eqn:average})
    \STATE Shrink the domain $\H_{k+1}$ using the expression in~(\ref{eqn:omega}).
\ENDFOR
\end{algorithmic} \label{alg:clippedSGD}
\end{algorithm}
%%%%%%%%%%%%%%%%%%%%%%%%%%%%%%%%%%%%%%%%%%%%%%%%
The purpose of introducing the clipped version of the gradient is to effectively control the variance in stochastic gradients, an important step toward achieving the geometric convergence rate. At the end of each stage, we will update the domain size by explicitly exploiting the target expected risk $\ep$ as
\begin{eqnarray}
\Delta_{k+1}= \sqrt{\varepsilon \Delta_k^2 + \tau \ep} \label{eqn:delta-clipped}\;,
\end{eqnarray}
where $\varepsilon \in (0, 1)$ and $\tau \in (0, 1)$ are two parameters, both of which  will be discussed later. 

Algorithm~\ref{alg:clippedSGD} gives the detailed steps for the proposed method. The three  important aspects of Algorithm~\ref{alg:clippedSGD}, all  crucial  to  achieve  a geometric convergence rate, are highlighted as follows:
\begin{itemize}
\item Each stage of the proposed algorithm is comprised of the same number of training examples. This is in contrast to the epoch gradient algorithm~\cite{hazan-2011-beyond} which divides $m$ iterations into exponentially increasing epochs, and runs SGD with averaging on each epoch.  Also, in our case the learning rate is fixed for all iterations.
\item The proposed algorithm uses a clipped gradient for updating the solution in order to better control the variance in stochastic gradients; this stands in contrast to  the SGD method, which uses original gradients  to update the solution.
\item The proposed algorithm takes into account the targeted expected risk and intermediate  hypotheses when updating the domain size at each stage. The purpose of domain shrinking  is to reduce the damage caused by biased gradients that resulted from clipping operation.
\end{itemize}
%%%%%%%%%%%%%%%%%%%%%%%%%%%%%%%%%%%%%%%%%%%%%%%%%%%%
\subsection{Main Result on Sample Complexity}
The main theoretical result on the performance of the ClippedSGD algorithm  is given in the following theorem.
\begin{theorem}[Convergence Rate]
\label{thm:main}
Assume that the hypothesis space $\H$ is compact and the loss function $\ell$ is $\alpha$-strongly convex and $\beta$-smooth.  Let $T = m T_1$ be the size of  the sample and $\epsilon_{\rm{prior}}$ be the target expected loss given to the learner in advance such that  $\epsilon_{\rm{opt}} \leq \epsilon_{\rm{prior}}$ holds. Given $\varepsilon \in (0, 1)$ and $\tau \in (0, 1)$, set $\xi$, $\eta$, and $T_1$ as
\begin{eqnarray*}
\xi = \frac{4\beta}{\alpha \tau}, \; T_1 = 4\max\left\{\frac{\xi^3\beta d + 2\xi \beta\sqrt{d}}{\varepsilon \alpha}\ln\frac{ms}{\delta}, \frac{16\xi^2\beta^2}{\alpha^2\varepsilon^2} \right\}, \; \eta = \frac{1}{2\xi\beta\sqrt{T_1}}, \label{eqn:BT}
\end{eqnarray*}
where
\begin{eqnarray}
    s = \left\lceil \log_2 \frac{\xi \beta R^2}{\epsilon_{\rm{prior}}} \right\rceil. \label{eqn:s}
\end{eqnarray}
After running Algorithm~\ref{alg:clippedSGD} over $m$ stages, we have, with a probability $1 - \delta$,
\[
    \L(\wh_{m+1}) \leq \frac{\beta R^2}{2}\varepsilon^m + \left(1 + \frac{\tau}{1 - \varepsilon}\right){\epsilon_{\rm{prior}}},
\]
implying that only $O(d\log [1/\epsilon_{\rm{prior}}])$ training examples are needed in order to achieve a risk of $O(\epsilon_{\rm{prior}})$.
\end{theorem}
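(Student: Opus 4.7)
The plan is to establish, with probability at least $1-\delta$, the inductive invariant
\[
    \|\wh_k - \w_*\|^2 \;\leq\; \Delta_k^2 \qquad \text{for all } k = 1, \ldots, m+1,
\]
and to verify that the recursion $\Delta_{k+1}^2 = \varepsilon\Delta_k^2 + \tau \ep$ is preserved at every stage. The base case $k=1$ follows from the diameter bound $\|\wh_1 - \w_*\|^2 \leq R^2 = \Delta_1^2$. Crucially, the invariant implies $\w_* \in \H_k$, so that projecting onto the shrunken domain $\H_k$ does not push iterates away from $\w_*$, and standard SGD inequalities can be applied on this restricted domain.

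For the inductive step I would analyze stage $k$ in three pieces. First, I would bound the \emph{bias} of the clipped gradient $\v_k^t$. By $\beta$-smoothness, for any $\w \in \H_k$ one has $\|\nabla \L(\w)\| \leq \|\nabla \L(\wh_k)\| + \beta\Delta_k$, and using $\|\nabla \L(\wh_k)\|^2 \leq 2\beta(\L(\wh_k) - \L(\w_*)) \leq \beta^2\Delta_k^2$ (from the inductive hypothesis and smoothness) yields $\|\nabla \L(\w)\| \leq 2\beta\Delta_k$. Since $\gamma_k = 2\xi\beta\Delta_k$ with $\xi \geq 1$, the coordinate-wise clipping threshold exceeds the coordinate-wise mean gradient magnitude (at least $|\ell'| \|\x\|_\infty$-worth), so clipping only chops off tail events and the resulting bias $\|\mathbb{E}[\v_k^t] - \nabla \L(\w_k^t)\|$ is quantitatively small. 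Second, the clipped gradient has deterministic bound $\|\v_k^t\|^2 \leq d\gamma_k^2 = 4\xi^2\beta^2 d \Delta_k^2$, which will enter the SGD analysis in place of the usual variance term. Third, I would apply a Bernstein-type concentration inequality to the martingale difference sequence $\sum_t \langle \v_k^t - \nabla \L(\w_k^t), \w_k^t - \w_*\rangle$, which is where the $\ln(ms/\delta)$ factor in $T_1$ gets used to turn an in-expectation bound into a high-probability bound over all $m$ stages via a union bound.

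Combining these three ingredients inside the classical SGD telescoping identity on $\|\w_k^t - \w_*\|^2$ gives, with high probability,
\[
    \L(\wh_{k+1}) - \L(\w_*) \;\leq\; \frac{\|\wh_k - \w_*\|^2}{2\eta T_1} \;+\; \frac{\eta}{2} d\gamma_k^2 \;+\; (\text{bias and concentration terms}).
\]
I then invoke $\alpha$-strong convexity, $\|\wh_{k+1} - \w_*\|^2 \leq \frac{2}{\alpha}(\L(\wh_{k+1}) - \L(\w_*))$, and plug in the prescribed choices $\eta = 1/(2\xi\beta\sqrt{T_1})$, $\xi = 4\beta/(\alpha\tau)$, and the stated $T_1$. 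The tuning is precisely calibrated so that the deterministic SGD terms contribute an $\varepsilon\Delta_k^2$ chunk, while the bias/concentration terms--which do not shrink with the domain--are absorbed into the $\tau\ep$ chunk (using $\ep \geq \eo$ to upper bound residual noise by the known target risk). This yields $\|\wh_{k+1} - \w_*\|^2 \leq \varepsilon \Delta_k^2 + \tau \ep = \Delta_{k+1}^2$, closing the induction.

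Finally, unfolding the geometric recursion gives $\Delta_{m+1}^2 \leq \varepsilon^m R^2 + \frac{\tau}{1-\varepsilon}\ep$, and $\beta$-smoothness together with $\nabla \L(\w_*) = 0$ converts distance to function value: $\L(\wh_{m+1}) - \L(\w_*) \leq \frac{\beta}{2}\Delta_{m+1}^2$, which combined with $\L(\w_*) \leq \ep$ produces the claimed bound. The total sample size is $T = mT_1$; choosing $m = O(\log(1/\ep))$ with $T_1 = O(d \log(ms/\delta))$ yields $O(d\log(1/\ep))$ overall. I expect the main obstacle to lie in tightly controlling the clipping bias: the clipped stochastic gradient is neither unbiased nor bounded in the usual Lipschitz sense, so the standard SGD analysis must be redone with care, and one must verify that the slack $(\xi-1)$ in the clipping threshold truly dominates the heavy tail of $\ell'(\dd{\w}{\x}, y)\x$ on high-probability events, using smoothness of $\ell$ to control $|\ell'|$ as a function of $|\dd{\w - \w_*}{\x}|$.
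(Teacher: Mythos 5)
Your proposal follows essentially the same route as the paper: the same inductive invariant $\|\wh_k-\w_*\|^2\le\Delta_k^2$ with the recursion $\Delta_{k+1}^2=\varepsilon\Delta_k^2+\tau\ep$, the same decomposition of the SGD error into a telescoping term, a $d\gamma_k^2$ term, a clipping-bias term controlled via smoothness and the target risk $\eo\le\ep$, and a martingale (Bernstein/peeling) term absorbed by the $-\frac{\alpha}{2}\sum_t\|\w_k^t-\w_*\|^2$ strong-convexity slack. The one piece you flag as the main obstacle — quantifying the clipping bias — is exactly where the paper works hardest, handling it with the lemma that clipping at level $C\ge 2|\E X|$ incurs bias at most $2\,\mathrm{Var}[X]/C$ together with the self-bounding property $|\ell'|\le\sqrt{4\beta\ell}$, which is the concrete form of the argument you sketch.
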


We note that comparing to the bound in Theorem~\ref{thm:lower}, for  Algorithm~\ref{alg:clippedSGD}  the level of error to which the linear convergence holds is not determined by the noise level in stochastic gradients, but by the target risk. In other words, the algorithm is able to tolerate the noise by knowing the target risk as prior knowledge and achieves a linear convergence to the level of the target risk even when the variance of stochastic gradients is much larger than the target risk.  In addition, although the result given in Theorem~\ref{thm:main} assumes a bounded domain with $\|\w\| \leq R$, however, this assumption can be lifted by effectively exploring the strong convexity of the loss function and further assuming that the loss function is Lipschitz continuous with constant $G$, i.e.,  $|\L(\w_1)- \L(\w_2)|\leq G\|\w_1-\w_2\|,\; \forall\; \w_1,\w_2\in\H$. More specifically, the fact that the $\L(\w)$ is $\alpha$-strongly convex with first order optimality condition, from Lemma~\ref{lemma-app-A-sc} for the optimal solution $\w_* = \arg \min_{\w \in \mc{H}} \L(\w)$, we have
\[\L(\w) - \L(\w_*) \geq \frac{\alpha}{2}\|\w - \w_* \|^2, \;\; \forall \w \in \mc{H}.\]
This inequality combined with Lipschitz continuous assumption implies that for any $\w \in \H$ the inequality $\|\w-\w_*\| \leq R_* := 2G/\alpha$ holds, and therefore we can simply set $R = R_*$. We also note that  this dependency can  be resolved  with a weaker assumption than Lipschitz continuity,  which only depends on the gradient of loss function at origin.  To this end,  we define $|\ell'(0, y)| = G$. Using the fact that $\L(\w)$ is $\alpha$-strongly, it is easy to verify that $\frac{\alpha}{2} \|\w_*\|^2 - G \|\w_*\| \leq 0$, leading to $\|\w_*\| \leq R_* := \frac{2}{\alpha}G$ and, therefore, we can simply set $R = R_*$. 

We  now use our analysis of Algorithm~\ref{alg:clippedSGD} to obtain a sample complexity analysis
for learning  smooth strongly convex problems with a bounded hypothesis class. To make it easier to parse, we only keep the dependency on the main parameters  $d$, $\alpha$, $\beta$, $T$, and $\ep$ and hide the dependency on other constants in $\O(\cdot)$ notation. Let $\wh$ denote the output of Algorithm~\ref{alg:clippedSGD}. By setting $\varepsilon = 0.5$ and  letting $c = O(\tau)$ to be an arbitrary small number, Theorem~\ref{thm:main} yields the following:

\begin{corollary}[Sample Complexity]  Under the same conditions as Theorem~\ref{thm:main}, by running  Algorithm~\ref{alg:clippedSGD} for minimizing $\L(\w)$ with a number of iterations (i.e., number of training examples)  $T$, if it holds that,

\[ T \geq O\left(d \kappa^4 \left(\log \frac{1}{\epsilon_{\rm{prior}}}\log \log \frac{1}{\epsilon_{\rm{prior}}} + \log\frac{1}{\delta} \right) \right)\]  
where $\kappa = \beta/\alpha$ denotes the condition number of the loss function and $d$ is the dimension of data,  then with a probability $1 - \delta$,   $\wh$ attains a risk of $O(\epsilon_{\rm{prior}})$, i.e., $\L(\wh) \leq (1+c) \epsilon_{\rm{prior}}$.
\label{corollary:sample}
\end{corollary}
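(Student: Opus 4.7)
The plan is to simply unfold Theorem~\ref{thm:main} under the parameter choices $\varepsilon=1/2$ and a small constant $\tau$ (with $c=O(\tau)$), and then expand $T=mT_{1}$ while keeping only the asymptotic dependence on $d$, $\kappa=\beta/\alpha$, $\ep$, and $\delta$. No new probabilistic argument is required: the high‑probability guarantee is already delivered by Theorem~\ref{thm:main}; the corollary is essentially a bookkeeping exercise.

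First I would fix the number of stages $m$. With $\varepsilon=1/2$, Theorem~\ref{thm:main} gives
\[
\L(\wh_{m+1})\;\leq\;\tfrac{\beta R^{2}}{2}\,2^{-m}\;+\;\Bigl(1+\tfrac{\tau}{1-\varepsilon}\Bigr)\ep .
\]
Choosing $m\geq\log_{2}(\beta R^{2}/\ep)=O(\log(1/\ep))$ pushes the first term below $\ep$, and choosing $\tau=c/2$ turns the bracketed factor into $1+c$; together this yields $\L(\wh_{m+1})\leq(1+c)\ep$ as desired.

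Next I would simplify $\xi$ and $T_{1}$. Because $\tau$ and $\varepsilon$ are absolute constants, $\xi=4\beta/(\alpha\tau)=\Theta(\kappa)$, and hence $\xi^{3}\beta/\alpha=\Theta(\kappa^{4})$ dominates $\xi\beta\sqrt{d}/\alpha=\Theta(\kappa^{2}\sqrt{d})$ for every $d\geq 1$. The second argument of the $\max$ defining $T_{1}$ is $16\xi^{2}\beta^{2}/(\alpha^{2}\varepsilon^{2})=\Theta(\kappa^{4})$, which is dominated by the first whenever $d\ln(ms/\delta)\geq 1$. Therefore
\[
T_{1}\;=\;O\!\bigl(d\kappa^{4}\,\ln(ms/\delta)\bigr).
\]
Since $s=\lceil\log_{2}(\xi\beta R^{2}/\ep)\rceil=O(\log(1/\ep))$ and $m=O(\log(1/\ep))$, we have $\ln(ms/\delta)=O\!\bigl(\log\log(1/\ep)+\log(1/\delta)\bigr)$.

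Multiplying the two factors, $T=mT_{1}=O\!\bigl(d\kappa^{4}\log(1/\ep)\,[\log\log(1/\ep)+\log(1/\delta)]\bigr)$, which matches the bound claimed in Corollary~\ref{corollary:sample} once one absorbs the $\log(1/\ep)\log(1/\delta)$ cross‑term into the stated expression. The only real obstacle is the constant‑chasing needed to confirm that (i) the first branch of the $\max$ in $T_{1}$ dominates the second, and (ii) the $\xi^{3}\beta d$ summand dominates the $\xi\beta\sqrt{d}$ summand, so that the leading $d\kappa^{4}$ scaling is tight; both follow immediately from $\xi=\Theta(\kappa)$ and the assumption that the problem is non‑trivial.
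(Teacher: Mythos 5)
Your proposal is correct and follows essentially the same route as the paper, which states the corollary as a direct unfolding of Theorem~\ref{thm:main} with $\varepsilon=1/2$ and constant $\tau=O(c)$, followed by the same bookkeeping $T=mT_1$ with $\xi=\Theta(\kappa)$, $m,s=O(\log(1/\epsilon_{\rm{prior}}))$, and $\ln(ms/\delta)=O(\log\log(1/\epsilon_{\rm{prior}})+\log(1/\delta))$. The only nits are cosmetic: to get literally $(1+c)\epsilon_{\rm{prior}}$ you should drive the geometric term below $(c/2)\epsilon_{\rm{prior}}$ (costing an additive $O(\log(1/c))$ in $m$) rather than below $\epsilon_{\rm{prior}}$, and the $\log(1/\epsilon_{\rm{prior}})\log(1/\delta)$ cross-term you correctly obtain is indeed what the theorem yields, the corollary's displayed bound being slightly loose on that point.
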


As an example of a concrete problem that may be put into the setting of the present work is the regression problem with squared loss. It is easy to show that average square loss function is Lipschitz continuous with a Lipschitz constant  $\beta = \lambda_{\max} (\mathbf{X}^{\top}\mathbf{X})$ which denotes the largest eigenvalue of matrix $\mathbf{X}^{\top}\mathbf{X}$ where $\mathbf{X} \in \mathbb{R}^{n \times d}$ is the data matrix. The strong convexity is guaranteed as long as  the population data covariance matrix is not rank-deficient and its  minimum eigenvalue  is lower bounded by a constant $\alpha>0$. For this problem, the optimal minimax sample complexity is known to be $O(\frac{1}{\epsilon})$, but as it implies from Corollary~\ref{corollary:sample}, by the knowledge of target risk $\ep$, it is possible to reduce the sample complexity  to $O\left(\log (1/{\epsilon_{\rm{prior}}})\right)$.

\begin{remark} It is indeed remarkable that the sample complexity of Theorem~\ref{thm:main} has $\kappa^4 = \left(\beta/\alpha\right)^4$ dependency on the condition number of the loss function, which is worse than the $\sqrt{{\beta}/{\alpha}}$ dependency in the lower bound in (\ref{eqn:lower}). Also, the explicit dependency of sample complexity on  dimension $d$ makes the proposed algorithm inappropriate for non-parametric settings.
\end{remark}
%%%%%%%%%%%%%%%%%%%%%%%%%%%%%%%%%%%%%%%%%%%%
\section{Analysis of Sample Complexity}
\label{sec:3:analysis}
Now we turn to proving  the main theorem. The proof will be given in a series of lemmas and theorems where the proof of few are given in the Section~\ref{sec:clipped-proofs}. The proof makes use of the Bernstein inequality for martingales, idea of peeling process, self-bounding property of smooth loss functions, standard analysis of stochastic optimization, and novel ideas to derive the claimed sample complexity for the proposed algorithm.

The proof of  Theorem~\ref{thm:main} is by induction and we start with the key step  given in the following theorem.
\begin{theorem}
\label{thm:induction}
Assume $\epsilon_{\rm{prior}} \geq \epsilon_{\rm{opt}}$. For a fixed stage $k$, if $\|\wh_k - \w_*\| \leq \Delta_k$, then, with a probability $1 - \delta$, we have
\[
    \|\wh_{k+1} - \w_*\|^2 \leq a \Delta_k^2 + b \epsilon_{\rm{prior}}
\]
where
\begin{eqnarray}
    a = \frac{2}{\alpha T_1}\left(2\xi\beta\sqrt{T_1} + \left[\xi^3\beta d + 2\xi\beta\sqrt{d}\right]\ln\frac{s}{\delta} \right), \quad b = \frac{8}{\alpha \xi} \label{eqn:ab}
\end{eqnarray}
and $s$ is given in (\ref{eqn:s}), provided that $\xi \geq 16\beta/\alpha$ and $\eta = 1/(2\xi\beta\sqrt{T_1})$ hold.
\end{theorem}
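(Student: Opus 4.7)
The plan is to combine a standard projected stochastic gradient descent analysis with a careful treatment of the bias introduced by gradient clipping and a Bernstein-type concentration argument with peeling. The hypothesis $\|\wh_k-\w_*\|\le \Delta_k$ is critical because it places $\w_*$ inside $\H_k$, so $\w_*$ is a valid comparator throughout the stage and, combining $\nabla\L(\w_*)=0$ with $\beta$-smoothness, every iterate satisfies $\|\nabla\L(\w_k^t)\|\le 2\beta\Delta_k$, which is strictly below the clipping threshold $\gamma_k = 2\xi\beta\Delta_k$. This observation is what makes the bias from clipping a controllable higher-order effect.

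First I would unroll the standard SGD recursion: using non-expansiveness of $\Pi_{\H_k}$,
\[
\|\w_k^{t+1}-\w_*\|^2 \le \|\w_k^t-\w_*\|^2 - 2\eta\dd{\v_k^t}{\w_k^t-\w_*}+\eta^2\|\v_k^t\|^2,
\]
and sum over $t=1,\dots,T_1$. I would then split $\v_k^t = \nabla\L(\w_k^t) + \xi_k^t + \Delta_k^{\mathrm{bias}}(t)$ where $\xi_k^t = \v_k^t-\E_t[\v_k^t]$ is a martingale difference sequence and $\Delta_k^{\mathrm{bias}}(t) = \E_t[\v_k^t]-\nabla\L(\w_k^t)$ is the clipping bias. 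The deterministic part $\sum_t\dd{\nabla\L(\w_k^t)}{\w_k^t-\w_*}$ is converted via $\alpha$-strong convexity of $\L$ into $\sum_t[\L(\w_k^t)-\L(\w_*)]+\tfrac{\alpha}{2}\sum_t\|\w_k^t-\w_*\|^2$, which after rearrangement and Jensen for $\wh_{k+1}=\tfrac{1}{T_1}\sum_t\w_k^t$ yields the $1/(\alpha T_1)$ prefactor in the parameter $a$.

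Next I would bound each of the three remaining contributions. The quadratic term $\eta^2\sum_t\|\v_k^t\|^2$ is controlled by the coordinate-wise clipping bound $\|\v_k^t\|^2\le \gamma_k^2 d = 4\xi^2\beta^2 d\,\Delta_k^2$, which together with $\eta = 1/(2\xi\beta\sqrt{T_1})$ contributes an $O(\xi\beta\sqrt{d}/\sqrt{T_1})\cdot\Delta_k^2$ term; combined with the self-bounding property $\|\nabla\ell(\w,\z)\|^2\le 4\beta\ell(\w,\z)$, this sharpens to the $(\xi^3\beta d+2\xi\beta\sqrt{d})$ factor in $a$. The bias $\Delta_k^{\mathrm{bias}}(t)$ is controlled by noting that clipping only alters coordinates where $|[\g_k^t]_i|>\gamma_k=2\xi\beta\Delta_k$; since $\|\E\g_k^t\|\le 2\beta\Delta_k$, a sub-Gaussian/variance tail argument shows the bias is $O(\beta\Delta_k/\xi)$ in norm, so $|2\eta\dd{\Delta_k^{\mathrm{bias}}(t)}{\w_k^t-\w_*}|$ summed over $t$ and turned into a risk bound using $\L(\w_*)\le\ep$ produces the $b=8/(\alpha\xi)$ term multiplying $\ep$.

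The hardest step is the martingale term $2\eta\sum_t\dd{\xi_k^t}{\w_k^t-\w_*}$. A direct Bernstein bound is circular because its variance proxy scales with $\|\w_k^t-\w_*\|^2$, the very quantity we want to control. I would resolve this with a peeling argument over $s=\lceil\log_2(\xi\beta R^2/\ep)\rceil$ dyadic radii, applying Bernstein's inequality for martingales on each shell and taking a union bound, which introduces exactly the $\ln(s/\delta)$ factor appearing in $a$. Putting everything together, dividing by $T_1$, and invoking Jensen one more time gives
\[
\|\wh_{k+1}-\w_*\|^2 \le a\Delta_k^2 + b\,\ep
\]
with the constants $a,b$ exactly as in~\eqref{eqn:ab}, provided $\xi\ge 16\beta/\alpha$ (which ensures that an $O(1/\xi)$ contribution absorbs the distance term from strong convexity) and $\eta=1/(2\xi\beta\sqrt{T_1})$ (which balances the linear and quadratic terms in the SGD recursion). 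The induction over stages in Theorem~\ref{thm:main} then closes the argument.
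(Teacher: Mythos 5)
Your proposal follows essentially the same route as the paper's proof: the same projected-SGD recursion combined with strong convexity, the same split of the clipped gradient into a martingale part and a conditional-mean (bias) part, Bernstein's inequality for martingales with dyadic peeling over $s=\lceil\log_2(\xi\beta R^2/\epsilon_{\rm{prior}})\rceil$ shells to break the circular dependence on $W_k=\sum_t\|\w_k^t-\w_*\|^2$, the clipping-bias-via-variance lemma together with the self-bounding property of smooth nonnegative losses to make $\epsilon_{\rm{prior}}$ appear, and the condition $\xi\ge 16\beta/\alpha$ to absorb the residual $W_k$ contributions into the $-\tfrac{\alpha}{2}W_k$ term from strong convexity. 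The only inaccuracies are attributional: the $\left(\xi^3\beta d+2\xi\beta\sqrt{d}\right)\ln\frac{s}{\delta}$ factor in $a$ comes from the variance proxy in the Bernstein bound on the martingale term rather than from $\eta^2\sum_t\|\v_k^t\|^2$, and the per-step bias is of order $\mathcal{L}_{\mathcal{D}}(\w_k^t)/(\xi\Delta_k)$ rather than uniformly $O(\beta\Delta_k/\xi)$ --- it is exactly the $\mathcal{L}_{\mathcal{D}}(\w_*)\le\epsilon_{\rm{prior}}$ portion of that bound that produces $b\,\epsilon_{\rm{prior}}$, while the remainder is the $W_k$ piece absorbed by strong convexity.
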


%Below is the proof of Theorem~\ref{thm:main} based on the induction result in Theorem~\ref{thm:induction}.
Taking this statement as given for the moment, we proceed with the proof of Theorem~\ref{thm:main}, returning later to establish the claim stated in  Theorem~\ref{thm:induction}. \\
\begin{proof}[Proof of Theorem~\ref{thm:main}]
By setting $a$ and $b$ in (\ref{eqn:ab}) in Theorem~\ref{thm:induction} as $a \leq \varepsilon$ and $ b \leq {2\tau}/{\beta}$, we have $\xi \geq 4\beta/(\alpha\tau)$ and
\[
T_1 \leq \frac{2}{\alpha\varepsilon}\left(2\xi\beta\sqrt{T_1} + \left[\xi^3\beta d + 2\xi\beta\sqrt{d}\right]\ln\frac{s}{\delta} \right)
\]
implying that
\[
T_1 \geq 4\max\left\{\frac{\xi^3\beta d + 2\xi\beta\sqrt{d}}{\varepsilon \alpha}\ln\frac{s}{\delta}, \frac{16\xi^2\beta^2}{\alpha^2\varepsilon^2} \right\}.
\]
Thus, using Theorem~\ref{thm:induction} and the definition of $\xi$ and $T_1$, we have, with a probability $1 - \delta$,
\[
    \Delta^2_{k+1}  \leq \varepsilon \Delta_k^2 + \frac{2 \tau}{\beta} \ep.
\]
After $m$ stages, with a probability $1 - m\delta$, we have
\[
    \Delta^2_{m+1} \leq \varepsilon^{m}\Delta^2_1 + \frac{2\tau}{\beta}\ep\sum_{i=0}^{m-1} \varepsilon^i \leq \varepsilon^m\Delta^2_1 + \frac{2\tau}{\beta(1 - \varepsilon)}\ep.
\]
%and therefore
By the $\beta$-smoothness of $\lb(\w)$,  it implies that
\begin{eqnarray*}
\lb (\wh_{m+1}) - \lb(\w_*) \leq \frac{\beta}{2}\| \wh_{m+1} - \w_*\|^2 
 &\leq&  \frac{\beta}{2}\varepsilon^m \Delta^2_1 + \frac{\tau}{1 - \varepsilon} \ep,\\
 &\leq& \frac{\beta R^2}{2}\varepsilon^m +  \frac{\tau}{1 - \varepsilon} \ep,
\end{eqnarray*}
where  the last inequality follows from  $\Delta_1 \leq  {R}$.  The bound stated in the theorem follows the assumption that $\lb(\w_*) = \eo \leq \ep$.
%\[
%    \lb(\wh_{m+1}) \leq \eo + \frac{\beta}{2}\varepsilon^m \Delta^2_1 + \frac{\tau}{1 - \varepsilon} \ep \leq \frac{\beta R^2}{2}\varepsilon^m + \left(1 + \frac{\tau}{1 - \varepsilon}\right) \ep.
%\]
\end{proof}
%\begin{remark}We remark that the proof of Theorem~\ref{thm:induction} relies only on the smoothness and strong convexity assumption of the loss function and boundedness of the hypotheses $\H$. But if we further assume that the loss function is Lipschitz continuous with constant $G$, i.e.,  $|\ell(\w_1)- \ell(\w_2)|\leq G\|\w_1-\w_2\|,\; \forall\; \w_1,\w_2\in\H$, then we can relax the boundedness assumption as follows. First  we note that the definition of strong convexity with first order optimality condition implies that for a $\alpha$-strongly convex function $\ell (\w)$ if $\w_* = \arg \min_{\w \in \mc{H}} \ell(\w)$, it holds
%\[ \ell(\w) - \ell(\w_*) \geq \frac{\alpha}{2}\|\w - \w_* \|^2, \;\; \forall \w \in \mc{H}\]
%This inequality combined with Lipschitz continuous assumption implies that for any $\w \in \H$ the inequality $\|\w-\w_*\| \leq \frac{2G}{\alpha}$ holds, and therefore we can easily set $R$ to be $2G/\alpha$.
%\end{remark}

%\subsection{Proof of Theorem~\ref{thm:induction}}

We now turn to proving Theorem~\ref{thm:induction}. To bound $\|\wh_{k+1} - \w_*\|$ in terms of $\Delta_k$, we start with the standard analysis of online learning. In particular, from the strong convexity assumption of $\lb (\w)$ and updating rule (\ref{eqn:update-clipped}) we have,
\begin{eqnarray}
 \label{eqn:1}
  \lb(\w_k^t) - \lb(\w_*) &\leq& \langle \nabla \lb(\w_k^t), \w_k^t - \w_* \rangle - \frac{\alpha}{2}\|\w_k^t - \w_*\|^2 \nonumber \\
&=&  \langle \v_k^t, \w_k^t - \w_* \rangle + \langle \nabla \lb(\w_k^t) - \v_k^t , \w_k^t - \w_* \rangle - \frac{\alpha}{2}\|\w_t - \w_*\|^2 \nonumber \\
& \leq&  \frac{\|\w_k^{t} - \w_*\|^2 - \|\w_k^{t+1} - \w_*\|^2}{2\eta} + \frac{\eta d}{2}\gamma_k^2  \nonumber \\  &  & + \underbrace{\langle \nabla \lb(\w_k^t) - \v_k^t , \w_k^t - \w_* \rangle}\limits_{\triangleq v_k^t} - \frac{\alpha}{2}\|\w_t - \w_*\|^2,
\end{eqnarray}
where the last step follows from $\|\v_k^t\| \leq \gamma_k \sqrt{d}$.
By adding all the inequalities of (\ref{eqn:1}) at stage $k$, we have
\begin{eqnarray}
\sum_{t=1}^{T_1} \lb(\w_k^t) - \lb(\w_*) & \leq & \frac{\|\wh_k - \w_*\|^2}{2\eta} + \frac{d\eta}{2}\gamma_k^2 T_1 + \sum_{t=1}^{T_1} v_k^t - \frac{\alpha}{2}\sum_{t=1}^{T_1} \|\w_t - \w_*\|^2 \nonumber \\
& \leq & \frac{\Delta_k^2}{2\eta} + \frac{d\eta}{2}\gamma_k^2 T_1 + V_k - \frac{\alpha}{2}W_k, \label{eqn:2}
\end{eqnarray}
where $V_k$ and $W_k$ are defined as $V_k = \sum_{t=1}^{T_1} v_k^t$ and $W_k = \sum_{t=1}^{T_1} \|\w_k^t - \w_*\|^2$, respectively.
%\begin{eqnarray}
%A_k = \sum_{t=1}^{T_1} A_k^t, \quad H_k = \sum_{t=1}^{T_1} \|\w_k^t - \w_*\|^2. \label{eqn:H}
%\end{eqnarray}
In order to bound $V_k$, using the fact that $\nabla \lb(\w_k^t) = \E_t[\g_k^t]$, we rewrite $V_k$ as
\begin{eqnarray*}
V_k & = & \sum_{t=1}^{T_1} \underbrace{\langle - \v_k^t + \E_t[\v_k^t], \w_k^t - \w_* \rangle}_{\triangleq d_k^t} + \sum_{t=1}^{T_1} \underbrace{\langle \E_t\left[\g_k^t\right] - \E_t[\v_k^t], \w_k^t - \w_* \rangle}_{\triangleq e_k^t}   \\
& = & D_k + E_k,
\end{eqnarray*}
where $D_k = \sum_{t=1}^{T_1} d_k^t$ and $E_k = \sum_{t=1}^{T_1} e_k^t$ which represent the variance and bias of the clipped gradient $\v_k^t$, respectively. We now turn to separately upper bound each term.

The following lemma bounds the variance term $D_k$ using the Bernstein inequality for martingale. Its proof can be found in Section~\ref{sec:clipped-proofs}.
\begin{lemma} \label{lemma:d}
For any $L > 0$ and $\mu > 0$, we have
\begin{eqnarray*}
    \Pr\left(W_k \leq \frac{\epsilon_{\rm{prior}} T_1}{2\mu\beta}\right) + \Pr\left(D_k \leq \frac{1}{L}W_k + \left(L\gamma_k^2 d + \gamma_k\Delta_k\sqrt{d}\right)\ln\frac{s}{\delta} \right) \geq 1 - \delta
\end{eqnarray*}
where $s$ is given by
\[
    s = \left\lceil \log_2 \frac{8\beta\mu R^2}{\epsilon_{\rm{prior}}} \right\rceil.
\]
\end{lemma}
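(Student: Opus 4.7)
The random variables $d_k^t = \langle -\v_k^t + \E_t[\v_k^t], \w_k^t - \w_*\rangle$ form a martingale difference sequence with respect to the natural filtration, since $\E_t[d_k^t]=0$ by construction. My plan is to apply Bernstein's inequality for martingales to $D_k = \sum_{t=1}^{T_1} d_k^t$, combined with a peeling argument to handle the fact that the relevant conditional second moment depends on the random quantity $W_k$.

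First I would collect the two ingredients needed by Bernstein. For the uniform bound, the clipping rule gives $\|\v_k^t\|\le \gamma_k\sqrt{d}$ coordinate-wise, and since $\w_k^t\in\H_k$ with $\|\wh_k-\w_*\|\le \Delta_k$ by the induction hypothesis of Theorem~\ref{thm:induction}, the triangle inequality yields $\|\w_k^t-\w_*\|\le 2\Delta_k$. Hence $|d_k^t|\le 4\gamma_k\Delta_k\sqrt{d}$. For the conditional variance, Cauchy--Schwarz and $\mathrm{Var}\le \E[\|\cdot\|^2]$ give
\[
\E_t[(d_k^t)^2]\;\le\;\E_t\bigl[\|\v_k^t-\E_t[\v_k^t]\|^2\bigr]\,\|\w_k^t-\w_*\|^2\;\le\;\gamma_k^2 d\,\|\w_k^t-\w_*\|^2,
\]
so $\sum_{t=1}^{T_1}\E_t[(d_k^t)^2]\le \gamma_k^2 d\, W_k$. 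Because $W_k$ is random, Bernstein cannot be applied directly with a deterministic variance proxy, and this is the main obstacle.

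To resolve it I would use a standard geometric peeling argument. The event $\{W_k \le \epsilon_{\rm{prior}}T_1/(2\mu\beta)\}$ is the first alternative in the lemma, so I may restrict attention to its complement. On that complement, and using the deterministic upper bound $W_k\le 4T_1 R^2$, the value of $W_k$ falls in one of $s=\lceil \log_2 (8\beta\mu R^2/\epsilon_{\rm{prior}})\rceil$ geometric shells $B_i=\{2^{i-1}\epsilon_{\rm{prior}}T_1/(2\mu\beta)<W_k\le 2^i\epsilon_{\rm{prior}}T_1/(2\mu\beta)\}$ for $i=1,\ldots,s$. On each $B_i$ I apply Bernstein with the deterministic variance proxy $V_i:=\gamma_k^2 d\cdot 2^i\epsilon_{\rm{prior}}T_1/(2\mu\beta)$ and confidence $\delta/s$, obtaining
\[
D_k\;\le\;\sqrt{2V_i\ln(s/\delta)}+\tfrac{8}{3}\gamma_k\Delta_k\sqrt{d}\,\ln(s/\delta)
\]
with probability at least $1-\delta/s$. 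Since $V_i\le 2\gamma_k^2 d\,W_k$ on $B_i$, the square root term is dominated by $\sqrt{4\gamma_k^2 d\,W_k\ln(s/\delta)}$, and the standard AM--GM inequality $\sqrt{4uv}\le u/L + Lv$ with $u=W_k$ and $v=\gamma_k^2 d\ln(s/\delta)$ turns it into $W_k/L+L\gamma_k^2 d\ln(s/\delta)$.

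Finally I take a union bound over the $s$ shells, which yields the second alternative of the lemma with total failure probability at most $\delta$. Because every realization of $W_k$ lies in either the small-$W_k$ regime or in one of the $s$ shells, the two alternatives together cover a probability of at least $1-\delta$, matching the claim. The only subtlety to get right is the choice of $s$, which must be large enough that the shells actually cover $W_k\le 4T_1 R^2$; plugging in the stated $s$ confirms this, and absorbing harmless numerical constants into $L$ and into the $\gamma_k\Delta_k\sqrt{d}$ coefficient gives exactly the form stated.
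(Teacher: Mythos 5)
Your proposal is correct and follows essentially the same route as the paper: the same martingale difference $d_k^t$, the same conditional-variance bound $\Sigma_T^2\le \gamma_k^2 d\,W_k$, the same geometric peeling of $W_k$ into $s$ shells above the threshold $\epsilon_{\rm{prior}}T_1/(2\mu\beta)$ (with the deterministic cap $W_k\le 4R^2T_1$ fixing $s$), Bernstein/Freedman on each shell at level $\rho=\ln(s/\delta)$, and the final $2\gamma_k\sqrt{W_k\rho d}\le \frac{1}{L}W_k+L\gamma_k^2\rho d$ step. The only deviations are harmless constant bookkeeping in the uniform bound on $|d_k^t|$, which you already flag as absorbed into the stated coefficients.
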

The following lemma bounds $E_k$ using the self-bounding property of smooth functions and the proof is deferred to  Section~\ref{app:lemma2}.
\begin{lemma} \label{lemma:e}
\[
E_k \leq \frac{4T_1}{\xi}\epsilon_{\rm{opt}} + \frac{4\beta}{\xi} W_k \leq \frac{4T_1}{\xi}\epsilon_{\rm{prior}} + \frac{4\beta}{\xi} W_k.
\]
\end{lemma}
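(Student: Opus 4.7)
My plan is to bound $E_k = \sum_t e_k^t$ term by term. For a fixed $t$, since $\w_k^t$ is measurable with respect to the history prior to drawing $(\x_k^t,y_k^t)$, I can pull $\w_k^t - \w_*$ out of the conditional expectation and apply Cauchy--Schwarz to get $e_k^t \le \E_t\!\left[\|\g_k^t - \v_k^t\|\right]\cdot \|\w_k^t - \w_*\|$. Two quantities then need to be controlled: the size of the clipping residual $\|\g_k^t - \v_k^t\|$, and the distance $\|\w_k^t - \w_*\|$.

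For the clipping residual, I would argue componentwise: $[\g_k^t - \v_k^t]_i = 0$ whenever $|[\g_k^t]_i| \le \gamma_k$, and otherwise equals $\mathrm{sign}([\g_k^t]_i)(|[\g_k^t]_i|-\gamma_k)$. In the latter case the elementary inequality $|[\g_k^t]_i|-\gamma_k \le [\g_k^t]_i^2/\gamma_k$ (equivalent to the AM--GM bound $\gamma_k|[\g_k^t]_i| \le \gamma_k^2 + [\g_k^t]_i^2$) holds, so summing across coordinates yields $\|\g_k^t - \v_k^t\| \le \|\g_k^t - \v_k^t\|_1 \le \|\g_k^t\|^2/\gamma_k$. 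At this point I would invoke the self-bounding property of smooth non-negative functions: since $\ell(\cdot,y)$ is $\beta$-smooth and non-negative, the one-dimensional inequality $|\ell'(z,y)|^2 \le 2\beta\,\ell(z,y)$ holds, and using $\|\x_k^t\| \le 1$ gives $\|\g_k^t\|^2 \le 2\beta\,\ell(\w_k^t,\z_k^t)$. Taking conditional expectation yields $\E_t[\|\g_k^t - \v_k^t\|] \le 2\beta\,\lb(\w_k^t)/\gamma_k$.

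For the distance factor, the domain definition $\H_k = \{\w \in \H : \|\w - \wh_k\| \le \Delta_k\}$ together with the hypothesis $\|\wh_k - \w_*\| \le \Delta_k$ gives $\|\w_k^t - \w_*\| \le 2\Delta_k$ by the triangle inequality. Substituting this and the value $\gamma_k = 2\xi\beta\Delta_k$, all $\beta$ and $\Delta_k$ factors cancel and I obtain $e_k^t \le (C/\xi)\,\lb(\w_k^t)$ for a small constant $C$. Finally I would convert the risk back into a quadratic: the $\beta$-smoothness of $\lb$ applied at $\w_*$ gives $\lb(\w_k^t) \le \lb(\w_*) + \langle \nabla\lb(\w_*), \w_k^t - \w_*\rangle + (\beta/2)\|\w_k^t - \w_*\|^2$, and summing over $t=1,\ldots,T_1$ and using $\lb(\w_*) = \eo \le \ep$ produces the bound $E_k \le (4T_1/\xi)\ep + (4\beta/\xi)W_k$.

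The main delicate point will be handling the first-order term $\langle \nabla \lb(\w_*),\w_k^t - \w_*\rangle$ in the last step, since $\w_*$ can lie on the boundary of $\H$ where $\nabla\lb(\w_*)\neq 0$; one needs to absorb this into the quadratic term either by arguing from the optimality condition $\langle \nabla \lb(\w_*),\w-\w_*\rangle\ge 0$ combined with a refined smooth upper bound, or (as is cleaner) by working directly with $\E_t[\ell(\w_k^t,\z_k^t)]$ and using the self-bounding inequality together with $\|\w_k^t-\w_*\|\le 2\Delta_k$ to book a slightly looser constant. Tracking these constants carefully is what converts the naive factor-$2$ bound into the stated factor-$4$ bound.
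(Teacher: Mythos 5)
Your proof is correct and arrives at the stated bound, but it replaces the paper's central device with a different, more elementary one. The paper controls the clipping bias coordinatewise through Lemma~\ref{lem:clip}, which bounds $|\mathbb{E}[\mbox{clip}(X,C)]-\mathbb{E}[X]|$ by $\frac{2}{C}\mathrm{Var}[X]$ \emph{only under the side condition} $|\mathbb{E}[X]|\le C/2$; verifying that condition is itself a sub-argument (smoothness plus $\mathbb{E}_t\left[\ell'(\langle\mathbf{w}_*,\mathbf{x}_k^t\rangle,y_t)[\mathbf{x}_k^t]_i\right]=0$ yields $\gamma_k=2\xi\beta\Delta_k\ge 2|\mathbb{E}_t[\cdot]|$). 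You instead use the deterministic pointwise inequality $|x-\mbox{clip}(\gamma_k,x)|\le x^2/\gamma_k$, which needs no side condition, and pass to expectations by Jensen and Cauchy--Schwarz. Since the paper immediately relaxes $\mathrm{Var}[X]$ to $\mathbb{E}[X^2]$ anyway, both routes end up bounding the bias by $\mathbb{E}_t[(\ell')^2]/\gamma_k$ up to a constant and then invoke the same self-bounding property of smooth non-negative losses; your version simply dispenses with the bias lemma and its hypothesis. The remaining steps --- the triangle-inequality bound $\|\mathbf{w}_k^t-\mathbf{w}_*\|\le 2\Delta_k$, the cancellation against $\gamma_k=2\xi\beta\Delta_k$, and the conversion of $\sum_t\left(\mathcal{L}_{\mathcal{D}}(\mathbf{w}_k^t)-\mathcal{L}_{\mathcal{D}}(\mathbf{w}_*)\right)$ into $\frac{\beta}{\xi}W_k$ via smoothness --- coincide with the paper's, and your constants come out at least as good. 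The boundary issue you flag at the end is genuine but is not specific to your argument: the paper's proof also tacitly assumes $\nabla\mathcal{L}_{\mathcal{D}}(\mathbf{w}_*)=0$, both when checking the clipping condition and when bounding $\mathcal{L}_{\mathcal{D}}(\mathbf{w}_k^t)-\mathcal{L}_{\mathcal{D}}(\mathbf{w}_*)$ by $\beta\|\mathbf{w}_k^t-\mathbf{w}_*\|^2$. Note, however, that the first-order optimality condition for a constrained minimizer gives $\langle\nabla\mathcal{L}_{\mathcal{D}}(\mathbf{w}_*),\mathbf{w}-\mathbf{w}_*\rangle\ge 0$, which has the wrong sign to be discarded from a smooth \emph{upper} bound, so that particular repair does not go through as stated; the cleanest reading is that $\mathbf{w}_*$ is implicitly assumed to be an interior minimizer, in which case both your proof and the paper's are complete.
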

Note that  without the knowledge of $\ep$, we have to bound $\eo$ by $\Omega(1)$, resulting in a very loose bound for the bias term $E_k$. It is knowledge of the target expected risk $\ep$ that allows us to come up with a significantly more accurate bound for the bias term $E_k$, which consequentially leads to a geometric convergence rate.

We now proceed to bound $\sum_{t=1}^{T_1} \lb(\w_k^t) - \lb(\w_*)$ using the two bounds in Lemma~\ref{lemma:d} and \ref{lemma:e}. To this end, based on the result obtained in Lemma~\ref{lemma:d}, we consider two scenarios. In the first scenario, we assume
\begin{eqnarray}
    W_k \leq \frac{\ep T_1}{2\mu\beta} \label{eqn:condition-1}
\end{eqnarray}
In this case, we have
\begin{eqnarray}
    \sum_{t=1}^{T_1} \lb(\w_k^t) - \lb(\w_*) \leq \frac{\beta}{2}W_k \leq \frac{\ep}{2\mu} T_1. \label{eqn:bound-3-1}
\end{eqnarray}
In the second scenario, we assume
\begin{eqnarray}
    D_k \leq \frac{1}{L}W_T + \left(L\gamma_k^2 d + \gamma_k\Delta_k\sqrt{d}\right)\ln\frac{s}{\delta}. \label{eqn:condition-2}
\end{eqnarray}
In this case, by combining the bounds for $D_k$ and $E_k$ and setting $L = \frac{\xi}{4\beta}$, we have
\begin{eqnarray*}
    V_k & \leq & \frac{8\beta}{\xi} W_k + \left(\frac{\xi d}{4\beta}\gamma_k^2 + \gamma_k\Delta_k\sqrt{d}\right)\ln\frac{s}{\delta} + \frac{4T_1}{\xi}\ep \\
    & = & \frac{8\beta}{\xi} W_k + \left(\xi^3\beta d + 2\xi \beta\sqrt{d}\right)\Delta_k^2\ln\frac{s}{\delta} + \frac{4T_1}{\xi}\ep,
\end{eqnarray*}
where the last equality follows from the fact $\gamma_k = 2\xi \beta \Delta_k$. If we choose $\xi$ such that $\frac{8\beta}{\xi} \leq \frac{\alpha}{2}$ or $\xi \geq\frac{16\beta}{\alpha} > 1$ holds, we get
%\[
%    \frac{8\beta}{B} \leq \frac{\alpha}{2}, \mbox{ or }\; B \geq\frac{16\beta}{\alpha} > 1
%\]
%we have
\begin{eqnarray*}
    V_k \leq \frac{\alpha}{2}W_k + \left(\xi^3\beta d+ 2\xi\beta\sqrt{d}\right)\Delta_k^2\ln\frac{s}{\delta} + \frac{4T_1}{\xi}\ep
\end{eqnarray*}
Substituting the above bound for $V_k$ into the inequality of (\ref{eqn:2}), we have
\[
\sum_{t=1}^{T_1} \lb(\w_k^t) - \lb(\w_*) \leq \frac{\Delta_k^2}{2\eta} + \frac{\eta}{2}\gamma_k^2 T_1 + \left(\xi^3\beta d+ 2\xi \beta\sqrt{d}\right)\Delta_k^2\ln\frac{s}{\delta} + \frac{4T_1}{\xi}\ep
\]
By choosing $\eta$ as $\eta = \frac{\Delta_k}{\gamma_k\sqrt{T_1}} = \frac{1}{2\xi\beta\sqrt{T_1}}$, we have
\begin{eqnarray}
\lb(\wh_{k+1}) - \lb(\w_*) \leq \frac{1}{T_1}\left(2\xi\beta\sqrt{T_1}+\left[\xi^3\beta d+ 2\xi\beta\sqrt{d}\right]\ln\frac{s}{\delta}\right)\Delta^2_k + \frac{4}{\xi}\ep. \label{eqn:bound-3-2}
\end{eqnarray}
By combining the bounds in (\ref{eqn:bound-3-1}) and (\ref{eqn:bound-3-2}), under the assumption that at least one of the two conditions in (\ref{eqn:condition-1}) and (\ref{eqn:condition-2}) is true, by setting $\mu = B/8$, we have
\begin{eqnarray*}
\lb(\wh_{k+1}) - \lb(\w_*) \leq \frac{1}{T_1}\left(2\xi\beta\sqrt{T_1}+\left[\xi^3\beta d+ 2\xi\beta\sqrt{d}\right]\ln\frac{s}{\delta}\right)\Delta^2_k + \frac{4}{\xi}\ep,
\end{eqnarray*}
implying
\begin{eqnarray*}
 \|\wh_{k+1} - \w_*\| \leq \frac{2}{\alpha T_1}\left(2\xi\beta\sqrt{T_1}+\left[\xi^3\beta d+ 2\xi\beta\sqrt{d}\right]\ln\frac{s}{\delta}\right)\Delta^2_k + \frac{8}{\alpha \xi}\ep.
\end{eqnarray*}
We complete the proof by using Lemma~\ref{lemma:d}, which states that the probability for either of the two conditions hold is no less than $1 - \delta$.

%---------------------------------------------------------------------------------------------------------------------------
% SGD with Optimal Rates
%---------------------------------------------------------------------------------------------------------------------------

%%%%%%%%%%%%%%%%%%%%%%%%%%%%%%%%%%%%%%%%%%%%%%%%%%%
\section{Proofs of Sample Complexity}\label{sec:clipped-proofs}
%\addcontentsline{toc}{section}{\protect\numberline{}Appendix}

\subsection{Proof of Lemma~\ref{lemma:d}}
\label{app:lemma1}
The proof is based on the Bernstein's inequality for martingales which can be found in Lemma~\ref{theorem:bernsteinB}. Define martingale difference $d_k^t = \left\langle \w_k^t - \w_*, \E_t[\v_k^t] - \v_k^t\right\rangle$  and martingale $D_k = \sum_{t=1}^{T_1} d_k^t$. Let $\Sigma_T^2$ denote the  conditional variance  as
\begin{eqnarray*}
    \Sigma_T^2 = \sum_{t=1}^{T_1} \E_{t}\left[(d_k^t)^2 \right]
    &\leq& \sum_{t=1}^{T_1} \E_t\left[\left\|\E_t[\v_k^t] - \v_k^t\right\|^2 \right]\|\w_k^t - \w_*\|^2 \\
    &\leq& \sum_{t=1}^T d \gamma_k^2  \|\w_k^t - \w\|^2 = d \gamma_k^2  W_k,
\end{eqnarray*}
which follows from the Cauchy's Inequality and the  definition of clipping. 

Define $M = \max\limits_{t} |d_k^t| \leq 2\sqrt{d}\gamma_k\Delta_k$.
%\[
%K = \max\limits_{t} |d_k^t| \leq 2\sqrt{d}\gamma_k\Delta_k
%\]
To prove the inequality in Lemma~\ref{lemma:d}, we follow the idea of peeling process~\cite{koltchinskii-2011-oracle}. Since $W_k \leq 4R^2 T_1$, we have
\begin{eqnarray*}
\lefteqn{\Pr\left(D_k \geq 2\gamma_k\sqrt{W_k d \rho} + \sqrt{2}M\rho/3\right)} \\
& = & \Pr\left(D_k \geq 2\gamma_k\sqrt{W_k d \rho} + \sqrt{2}M\rho/3, W_k \leq 4R^2T_1\right) \\
& =  & \Pr\left(D_k \geq 2\gamma_k\sqrt{W_k d \rho} + \sqrt{2}M\rho/3, \Sigma_T^2 \leq \gamma_k^2 d W_k, W_k \leq 4R^2T_1 \right) \\
& \leq  & \Pr\left(D_k \geq 2\gamma_k\sqrt{W_k d \rho} + \sqrt{2}M\rho/3, \Sigma_T^2 \leq \gamma_k^2 d W_k, W_k \leq \ep T_1/(2\beta\mu) \right) \\
&  & + \sum_{i=1}^s \Pr\left(D_k \geq 2\gamma_k\sqrt{W_k d\rho} + \sqrt{2}M\rho/3, \Sigma_T^2 \leq \gamma_k^2 d W_k, \frac{\ep 2^{i-1} T_1}{2\beta\mu} < W_k  \leq \frac{\ep 2^i T_1}{2\beta\mu} \right) \\
& \leq  & \Pr\left(W_k \leq \frac{\ep  T_1}{2\beta\mu}\right) + \sum_{i=1}^s \Pr\left(D_k \geq \sqrt{\frac{\ep 2^{i+1}   T_1\gamma_k^2 d}{2\beta\mu}\rho} + \frac{\sqrt{2}}{3}M\rho, \Sigma_T^2 \leq \frac{ \ep 2^i T_1\gamma_k^2 d}{2\beta\mu}\right) \\
& \leq  & \Pr\left(W_k \leq \frac{\ep  T_1}{2\beta\mu}\right) + se^{-\rho},
\end{eqnarray*}
where $s$ is given by
\[
    s = \left\lceil \log_2 \frac{8\beta\mu R^2}{\ep} \right\rceil.
\]
The last step follows the Bernstein inequality for martingales. We complete the proof by setting $\rho= \ln(s/\delta)$ and using the fact that
\[
    2\gamma_k\sqrt{W_k \rho d} \leq \frac{1}{L}W_k + \gamma_k^2 \rho d L .
\]
%-------------------------------------------------------------------------------------------------------------------------------
%
%-------------------------------------------------------------------------------------------------------------------------------
\subsection{Proof of Lemma~\ref{lemma:e}}
\label{app:lemma2}
To bound $E_k$, we need the following two lemmas.  The first lemma  bounds  the deviation of the expected value of a clipped random variable from  the original variable, in terms of its variance (Lemma A.2 from~\cite{DBLP:journals/corr/abs-1108-4559}).
\begin{lemma}
\label{lem:clip}
Let $X$ be a random variable, let $\Xt = \rm{clip}(X, C)$ and assume that $|\E[X]| \leq C/2$ for some $C > 0$. Then
\[
    |\E[\Xt] - \E[X]| \leq \frac{2}{C}\left|\rm{Var}[X]\right|
\]
\end{lemma}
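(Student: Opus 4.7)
The plan is to bound $|\E[\widetilde{X}] - \E[X]|$ directly by $\E[|\widetilde{X}-X|]$ and then exploit the clipping structure. Since the clipping operator leaves $X$ unchanged whenever $|X|\leq C$, one has $\widetilde{X}-X=0$ on that event, while on $\{|X|>C\}$ the perturbation satisfies $|\widetilde{X}-X|=|X|-C$. Thus the triangle inequality yields
\[
|\E[\widetilde{X}] - \E[X]| \;\leq\; \E\bigl[(|X|-C)\,\mathbb{I}[|X|>C]\bigr].
\]

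The key step is to invoke the hypothesis $|\E[X]|\leq C/2$ to translate the clipping event $\{|X|>C\}$ into a deviation event for the centered variable. By the reverse triangle inequality, on $\{|X|>C\}$ one has $|X-\E[X]|\geq |X|-|\E[X]|\geq C-C/2 = C/2$, and moreover $|X|-C\leq |X-\E[X]|+|\E[X]|-C \leq |X-\E[X]|-C/2 \leq |X-\E[X]|$. Substituting, I would obtain
\[
\E\bigl[(|X|-C)\,\mathbb{I}[|X|>C]\bigr] \;\leq\; \E\bigl[|X-\E[X]|\,\mathbb{I}[|X-\E[X]|\geq C/2]\bigr].
\]

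To conclude, I would apply the pointwise (Markov-style) bound $\mathbb{I}[|X-\E[X]|\geq C/2]\leq 2|X-\E[X]|/C$, which holds for all realizations because the right-hand side is nonnegative and is at least $1$ whenever the indicator fires. This gives
\[
|\E[\widetilde{X}] - \E[X]| \;\leq\; \frac{2}{C}\E\bigl[(X-\E[X])^2\bigr] \;=\; \frac{2}{C}\,\mathrm{Var}[X],
\]
which is the desired inequality.

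The proof is essentially a chain of one-line bounds, so the main obstacle is not technical difficulty but rather identifying the correct way to use the hypothesis $|\E[X]|\leq C/2$. This assumption is exactly what forces every clipping event to be at least a $C/2$-deviation of the centered variable, which is what allows the final bound to scale linearly with the variance rather than with the raw second moment $\E[X^2]$; without it, one would only get a weaker estimate in terms of $\E[X^2]$ or a tail of $|X|$.
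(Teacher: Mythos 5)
Your proof is correct: the decomposition $|\E[\widetilde{X}]-\E[X]|\leq\E[(|X|-C)\,\mathbb{I}[|X|>C]]$, the use of $|\E[X]|\leq C/2$ to show the clipping event forces $|X-\E[X]|\geq C/2$ and $|X|-C\leq|X-\E[X]|$, and the pointwise bound $\mathbb{I}[|X-\E[X]|\geq C/2]\leq 2|X-\E[X]|/C$ all check out and chain together to give $\tfrac{2}{C}\,\mathrm{Var}[X]$. The thesis itself does not prove this lemma (it imports it verbatim from an external reference), and your argument is exactly the standard one used there, so there is nothing to add.
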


Another key  observation used for bounding  $E_k$  is the  fact that for any non-negative $\beta$-smooth convex function,  we have the following self-bounding property. We note that this self-bounding property has been used in~\cite{srebro-2010-smoothness}  to get better (optimistic) rates of convergence for non-negative smooth losses.

\begin{lemma}
 \label{lem:smooth-chap-3}
For any $\beta$-smooth non-negative function $f: \R\rightarrow \R$, we have $|f'(w)| \leq \sqrt{4\beta f(w)}$
\end{lemma}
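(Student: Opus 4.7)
The plan is to exploit the $\beta$-smoothness inequality evaluated between $w$ and a carefully chosen point $w'$, combined with the non-negativity of $f$, to derive a one-sided bound on $|f'(w)|$ in terms of $f(w)$ alone. This is a standard self-bounding argument, and the key insight is that smoothness lets us predict how far $f$ will descend from $w$ along the direction of its gradient, while non-negativity prevents the function from actually descending arbitrarily far.

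Concretely, I would first write the smoothness inequality at an arbitrary pair of points: for every $w, w' \in \R$,
\[ f(w') \;\leq\; f(w) + f'(w)(w' - w) + \frac{\beta}{2}(w' - w)^2. \]
Next, I would pick $w'$ to minimize the right-hand side, namely $w' = w - f'(w)/\beta$, since this choice makes the bound as tight as possible and isolates $(f'(w))^2$. Substituting gives
\[ f(w') \;\leq\; f(w) - \frac{(f'(w))^2}{2\beta}. \]
Finally, I would invoke the assumption $f \geq 0$, applied at the specific point $w'$, to conclude $(f'(w))^2 \leq 2\beta f(w)$, which implies $|f'(w)| \leq \sqrt{2\beta f(w)} \leq \sqrt{4\beta f(w)}$ as claimed.

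There is essentially no hard step here; the main thing to be careful about is the direction of the chosen shift (ensuring we are taking a descent step so that the smoothness bound actually tightens in the useful way) and the fact that the stated constant $4$ is slack by a factor of two relative to what the argument naturally produces, so the inequality is obtained with room to spare. The only conceptual subtlety worth flagging is why non-negativity is essential: without it, the function could continue to decrease past zero, and no bound on $|f'(w)|$ in terms of $f(w)$ would be possible (consider shifting an arbitrary smooth function downward). This is precisely the self-bounding phenomenon that the chapter later leverages to sharpen the bias bound in Lemma~\ref{lemma:e}.
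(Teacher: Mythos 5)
Your proof is correct and takes essentially the same route as the paper's appendix argument: apply the smoothness upper bound at the descent point $w' = w - f'(w)/\beta$ and invoke non-negativity of $f$ at $w'$, obtaining $|f'(w)|^2 \le 2\beta f(w)$, which is stronger than the stated constant. If anything your write-up is cleaner, since the paper's version also appeals to convexity, which is not needed for this self-bounding inequality.
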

\begin{proof} See Appendix~\ref{chap:appendix-convex}
\end{proof}
%As a simple proof,  first from the smoothness assumption,  by  setting $w_1 = w_2 - \frac{1}{\beta}f'(w_2)$ in (\ref{eqn:smoth})  and rearranging the terms we obtain $f(w_2) - f(w_1) \geq \frac{1}{2 \beta} | f'(w_2)|^2$.  On the other hand, from the convexity of loss function  we have $f(w_1) \geq f'(w_2) + \dd{f'(w_1)}{w_1 - w_2}$. Combining these inequalities and considering the fact that the function is non-negative gives the desired inequality. \\

\begin{proof}[Proof of Lemma~\ref{lemma:e}] To apply the above lemmas, we write $e_k^t$ as
\begin{eqnarray*}
e_k^t & = & \sum_{i=1}^d \E_t\left[\ell'( \langle\w_k^t, \x_k^t \rangle, y_t)[\x_k^t]_i - \clip\left(\gamma_k, \ell'(\langle\w_k^t, \x_k^t \rangle, y_t)[\x_k^t]_i \right) \right] [\w_k^t - \w_*]_i
\end{eqnarray*}
In order to apply Lemma~\ref{lem:clip}, we check if the following condition holds
\begin{eqnarray}
    \gamma_k \geq 2\left|\E_t\left[\ell'\left(\langle \w_k^t, \x_k^t \rangle, y_t\right)[\x_k^t]_i\right]\right| \label{eqn:cond-1}
\end{eqnarray}
Since
\begin{eqnarray*}
& & \left|\E_t\left[\ell'\left(\langle \w_k^t, \x_k^t \rangle, y_t\right)[\x_k^t]_i\right]\right| \\
& \leq & \left|\E_t\left[\left\{\ell'\left(\langle \w_k^t, \x_k^t \rangle, y_t\right) - \ell'\left(\langle \w_*, \x_k^t \rangle, y_t\right)\right\}[\x_k^t]_i\right]\right| + \left|\E_t\left[\ell'\left(\langle \w_*, \x_k^t \rangle, y_t \right)[\x_k^t]_i\right]\right| \\
& \leq & \beta \|\w_k^t - \w_*\| \leq \beta \Delta_k
\end{eqnarray*}
where the last inequality follows from $\E_t\left[\ell'\left(\langle \w_*, \x_k^t \rangle, y_t\right)[\x_k^t]_i\right] = 0$ since $\w_*$ is the minimizer of $\lb(\w)$, we thus have
\[
    \gamma_k = 2\xi\beta\Delta_k \geq 2\beta\Delta_k \geq 2\left|\E_t\left[\ell'\left(\langle \w_k^t, \x_k^t \rangle, y_t\right)[\x_k^t]_i\right]\right|
\]
where $\xi \geq 1$, implying that the condition in (\ref{eqn:cond-1}) holds. Thus, using Lemma~\ref{lem:clip}, we have
\begin{eqnarray*}
e_k^t &\leq& \sum_{i=1}^d \left|[\w_k^t - \w_*]_i\right|\frac{1}{\gamma_k}\E_t\left[\left(\ell'(\langle\w_k^t, \x_k^t \rangle, y_t)[\x_k^t]_i\right)^2\right] \\
 &\leq& \frac{2\|\w_k^t - \w_*\|_{\infty}}{\gamma_k} \E_t\left[\left(\ell'(\langle\w_k^t, \x_k^t \rangle, y_t)\right)^2\right]
\end{eqnarray*}
Using Lemma~\ref{lem:smooth-chap-3} to upper bound the right hand side, we further simplify the above bound for $e_k^t$ as
\begin{eqnarray*}
e_k^t  &\leq& \frac{8\beta\|\w_k^t - \w_*\|_{\infty}}{\gamma_k}\E_t\left[\ell\left(\langle \w_k^t, \x_k^t \rangle, y_t \right) \right] \\
&=& \frac{8\beta\|\w_k^t - \w_*\|_{\infty}}{\gamma_k}\lb(\w_k^t)\\
&\leq&  \frac{8\beta \Delta_k}{\gamma_k}\lb(\w_k^t)\\
 &= & \frac{4}{\xi}\lb(\w_k^t)
\end{eqnarray*}
where the second inequality follows from $\|\w_k^t - \w_*\|_{\infty} \leq \|\w_k^t - \w_*\| \leq \Delta_k$. % and Lemma~\ref{lem:smooth-chap-3}, and 
Therefore we obtain
\begin{eqnarray*}
E_k  =  \sum_{t=1}^{T_1} e_k^t \leq  \frac{4}{\xi}\sum_{t=1}^{T_1} \lb(\w_k^t) &=& \frac{4}{\xi}\sum_{t=1}^{T_1} \lb(\w_*) + \frac{4}{\xi}\sum_{t=1}^{T_1} \lb(\w_k^t) - \lb(\w_*) \\
& \leq & \frac{4T_1}{\xi} \lb(\w_*) + \frac{4\beta}{\xi}\sum_{t=1}^{T_1} \|\w_k^t - \w_*\|^2\\ &=& \frac{4T_1}{\xi} \lb(\w_*) + \frac{4\beta}{\xi}W_k,
\end{eqnarray*}
where the second inequality follows from the smoothness assumption of $\lb(\w)$.
\end{proof}

%%%%%%%%%%%%%%%%%%%%%%%%%%%%%%%%%%%%%%%%%%%%%%%%%%
%%%%%%%%%%%%%%%%%%%%%%%%%%%%%%%%%%%%%%%%%%%%%%%%%%%
\section{Summary} \label{sec:3:conclusion}
In this chapter, we have studied the sample complexity of passive learning when the target expected  risk is given to the learner as  prior knowledge. The crucial fact about target risk assumption is that, it can be fully exploited by the learning algorithm and  stands in  contrast to most  common types of prior knowledges that usually enter into the generalization bounds and are often perceived as a rather  crude way to incorporate such assumptions. We showed that by explicitly employing the target risk $\ep$ in a properly designed stochastic optimization algorithm, it is possible to attain the given target risk $\ep$ with a logarithmic sample complexity $\log \left(\frac{1}{\ep}\right)$,  under the assumption that the loss function is both strongly convex and smooth.

There are various directions for future research. The current study is restricted to the parametric setting where the hypothesis space is of finite dimension. It would be interesting to see how to achieve a logarithmic sample complexity in a non-parametric setting where hypotheses lie in a functional space of infinite dimension. Evidently, it is impossible to extend the current algorithm for the non-parametric setting;  therefore additional analysis tools are needed to address the challenge of infinite dimension arising from the non-parametric setting.  It is also an interesting problem to relate target risk assumption we made here to the low noise margin condition which is often made in active learning for binary classification since both settings appear to share the same sample complexity. However it is currently unclear how to derive a connection between these two settings. We believe this issue is worthy of further exploration and leave it as an open problem. 
\label{sec:conclusion}

%---------------------------------------------------------------------------------------------------------------------------
% Related work
%---------------------------------------------------------------------------------------------------------------------------
\section{Bibliographic Notes} \label{sec:3:related}
Sample complexity of passive learning is well established  and goes back to early works in the learning theory where the lower bounds  $\Omega\left(\frac{1}{\epsilon} ( \log \frac{1}{\epsilon}+ \log \frac{1}{\delta})\right)$ 
and $\Omega\left(\frac{1}{\epsilon^2} \left( \log \frac{1}{\epsilon}+ \log \frac{1}{\delta}\right)\right)$ were obtained   in classic PAC and general agnostic PAC settings, respectively~\cite{ehrenfeucht1989general,learnabilityvcdim89,anthony1999neural}.
There has been an upsurge of interest  over the last decade in finding tight upper bounds on the sample complexity  by utilizing prior knowledge on the  analytical properties of the loss function,   that led to stronger generalization bounds  in agnostic PAC setting. In~\cite{DBLP:journals/tit/LeeBW98}  \textit{fast} rates obtained for squared loss, exploiting the strong convexity of this loss function, which only holds under pseudo-dimensionality assumption.  With the recent development in online strongly convex optimization~\cite{hazan-log-newton}, fast rates approaching $O(\frac{1}{\epsilon} \log \frac{1}{\delta}) $ for convex Lipschitz strongly convex loss functions has been obtained in~\cite{fastrates2008,compl-linear-nips-2008}. For smooth non-negative loss functions,~\cite{srebro-2010-smoothness} improved the sample complexity  to \textit{optimistic} rates
\[O\left (\frac{1}{\epsilon}\left(\frac{\eo+\epsilon}{\epsilon} \right) \left( \log^3 \frac{1}{\epsilon}+ \log \frac{1}{\delta} \right)\right)\] 
for non-parametric learning using the notion of local Rademacher complexity~\cite{bartlett2005local}, where $\eo$ is the optimal risk.

The proposed ClippedSGD algorithm is related to the recent studies that examined the learnability from the viewpoint of stochastic convex optimization. In~\cite{sridharan-2012-learning,shalev-shwartz:2010:learnability}, the authors presented learning problems that are learnable by stochastic convex optimization but not by empirical risk minimization (ERM). Our work follows this line of research. The proposed algorithm achieves the sample complexity of $O(d\log(1/\ep))$ by explicitly incorporating the target expected risk $\ep$ into the stochastic convex optimization algorithm. It is however difficult to incorporate such knowledge into the framework of ERM. Furthermore, it is worth noting that in~\cite{ramdas-2013-optimal,sridharan-2012-learning,DBLP:conf/nips/RakhlinST10,agnostic-online-2009}, the authors explored the connection between online optimization and statistical learning in the opposite direction. This was done by exploring the complexity measures developed in statistical learning for the learnability of online learning. We note that our work does not contradict the lower bound in~\cite{srebro-2010-smoothness} because a \textit{feasible} target risk $\ep$ is given in our learning setup and is fully exploited by the proposed algorithm. Knowing that the target risk $\ep$ is feasible makes it possible to improve the sample complexity from $\O({1}/{\ep})$ to $\O(\log({1}/{\ep}))$. We also note that although the logarithmic sample complexity is known for active learning~\cite{hanneke-thesis,mariatruesample2010}, we are unaware of any existing passive learning algorithm that is able to achieve a logarithmic sample complexity by incorporating any kind of prior knowledge.

  {The proposed algorithm is also closely related to the recent works that stated  $O(1/n)$ is the optimal convergence rate for stochastic optimization when the objective function is strongly convex~\cite{primal-dual-nemirovsky,hazan-2011-beyond,ICML2012Rakhlin}. In contrast, the proposed algorithm is able to achieve a geometric convergence rate for a target optimization error. Similar to the previous argument, our result does not contradict the lower bound given in~\cite{hazan-2011-beyond} because of the knowledge of a feasible optimization error. Moreover, in contrast to the multistage algorithm  in~\cite{hazan-2011-beyond} where the size of stages increases exponentially,  in our algorithm, the size of each stage  is fixed to be a constant.}

\chapter{Statistical Consistency of Smoothed Hinge Loss} \label{chap:smooth_consistency}

\def \R {\mathbb{R}}
\def \x {\mathbf{x}}
\def \E {\mathrm{E}}
\def \Mt {\widetilde{M}}
\def \Rt {\mathcal{R}}
\def \L {\mathcal{L}}
\def \S {\mathcal{S}}
\def \e {\mathbf{e}}
\def \ab {\mathbf{\alpha}}
\def \B {\mathcal{B}}
\def \N {\mathcal{N}}
\def \u {\mathbf{u}}
\def \z {\mathbf{z}}
\def \xh {\widehat{\x}}
\def \abh {\widehat{\ab}}
\def \uh {\widehat{\u}}
\def \y {\mathbf{y}}
\def \diag {\mbox{diag}}
\def \Gh {\widehat{G}}
\def \Sb {\overline{\S}}
\def \w {\mathbf{w}}
\def \wh {\widehat{\w}}
\def \th {\widehat{\tau}}
\def \conv {\mbox{conv}}
\def \v {\mathbf{v}}
\def \sgn {\mbox{sign}}
\def \a {\mathbf{a}}
\def \Er {\mathcal{E}}
\def \K {\mathcal{K}}
\def \b {\mathbf{b}}
\def \Uh {\widehat{U}}
\def \D {\mathcal{D}}

\def \span {\mbox{span}}
\def \Kh {\widehat{K}}
\def \ah {\widehat{\alpha}}
\def \Dh {\widehat{\D}}
\def \Xh {\widehat{X}}
\def \Xt {\widetilde{X}}
\def \sgn {\mbox{sgn}}
\def \P {\mathcal{P}}
\def \diam {\mbox{diam}}
\def \dis {\mbox{DIS}}
\def \C {\mathcal{C}}
\def \A {\mathcal{A}}
\def \et {\widetilde{\ell}}
\def \ep {\ell_{\phi}}
\def \ebin {\ell_b}
\def \Rh {\widehat{R}}
\def \fh {\widehat{h}}
\def \ind {\mathbb{I}}
\def \X {\mathcal{X}}
\def \Y {\mathcal{Y}}
\def \H {\mathscr{H}_{\kappa}}
\def \E {\mathbb{E}}
\def \P {\mathbb{P}}
\def \Ex {\mathscr{E}}
\def \F {\mathcal{H}}
\def \Pxy {\mathcal{D}}
\def \Rp {R_{\phi}}
\def \Ld {\mathcal{L}_{\mathcal{D}}}
\def \Lds {\mathcal{L}^{*}_{\mathcal{D}}}
\def \Ldphi {\mathcal{L}^{\phi}_{\mathcal{D}}}
\def \Ldphis {\mathcal{L}^{\phi,*}_{\mathcal{D}}}
\def \Ls {\mathcal{L}_{\mathcal{S}}}
\def \Lss {\mathcal{L}_^{*}{\mathcal{S}}}
\def \Lsphi {\mathcal{L}^{\phi}_{\mathcal{S}}}
\def \Lsphis {\mathcal{L}^{\phi,*}_{\mathcal{S}}}
\def \Ldhinges {\mathcal{L}^{\rm{Hinge},*}_{\mathcal{D}}}
\def \Ldhinge {\mathcal{L}^{\rm{Hinge}}_{\mathcal{D}}}

\sloppy

In Chapter~\ref{chap-background} we discussed that convex surrogates of the  0-1 loss are highly preferred because of the computational  and theoretical virtues that convexity brings in and  most prominent practical methods studied in machine learning make significant use of convexity.  This is  of more importance if we consider smooth   surrogates  as  witnessed by the fact that the smoothness  is  further beneficial both computationally- by attaining an \textit{optimal}  convergence rate for optimization, and in a statistical sense- by providing an improved \textit{optimistic} rate for generalization bound.

This chapter concerns itself with the statistical consistency of smooth convex surrogates. The statistical consistency  finds general quantitative relationships between the excess risk errors associated with convex  and those associated with 0-1 loss.  Consistency results provide reassurance that optimizing a surrogate does not ultimately hinder the search for a function that achieves the binary excess risk, and thus allow such a search to proceed within the scope of computationally efficient algorithms. Statistical consistency of surrogates under conditions such as convexity is a well studied problem in learning community  and quantitative relationships between binary risk and convex excess risk has been established.   In this chapter we investigate the smoothness property from the viewpoint of statistical consistency and show  how it affects the binary excess risk. We show that in contrast to optimization   and generalization errors  that favor the choice of smooth surrogate loss, the smoothness of loss function may degrade the binary excess risk. Motivated by this negative result, we provide a unified analysis that integrates optimization error, generalization bound, and the error in translating convex excess risk into a binary excess risk when examining the impact of smoothness on the binary excess risk.  We  show that under favorable conditions appropriate choice of smooth convex loss will result in a binary excess risk that is better than $O(1/\sqrt{n})$. 

The reminder of this paper is organized as follows. In Section~\ref{sec:setting} we set up notation and describe the setting. Section~\ref{sec:calibration} briefly discusses the classification-calibrated convex surrogate losses  on which our analysis relies. We derive the $\psi$-transform for smoothed hinge loss and elaborate  its binary excess risk in Section~\ref{sec:smoothed-hinge}. Section~\ref{sec:analysis-hinge} provides a unified analysis of three types of errors and derives conditions  in terms of smoothness to obtain better rates for the binary excess risk. The omitted proofs are included in Section~\ref{sec-proofs-hinge}. Section~\ref{sec:conclusion-hinge} concludes the paper. 

%----------------------------------------------------------------------------------------------------------------------------
%
%----------------------------------------------------------------------------------------------------------------------------
\section{Motivation} \label{sec:setting}
Let $\S = \left((\x_1, y_1), (\x_2, y_2), \cdots, (\x_n, y_n) \right)$ be a set of i.i.d. samples drawn from an unknown distribution $\Pxy$ over $\Xi = \X\times \{-1, +1\}$, where $\x_i \in \X \subseteq \R^d$ is an instance and $y_i \in \{-1, +1\}$ is the binary class assignment for $\x_i$. Let $\kappa(\cdot, \cdot)$ be an universal kernel  and let $\H$ be the Reproducing Kernel Hilbert Space (RKHS) endowed with kernel $\kappa(\cdot, \cdot)$. According to~\cite{zhou2003capacity}, $\H$ is a rich function space whose closure includes all the smooth functions.  We consider  predictors from $\H$ with bounded norm to form the measurable function class $\F = \{h \in \H: \|h\|_{\H} \leq B\}$.

For a  function  $h:\X \mapsto \R$,  the risk of $h$  is defined as:
 $$\Ld(h) = \E_{(\x,y)\sim\Pxy} \left[ \mathbb{I}{\left[yh(\x) \leq 0\right]} \right]= \mathbb{P}\left[yh(\x) \leq 0 \right].$$ 
 
 Let $h_*$ be the optimal classifier that attains the minimum risk, i.e. $h_* = \arg\min_{h} \mathbb{P}\left[yh(\x) \leq 0 \right]$. We assume $h_* \in \H$ with $\|h_*\|_{\H} \leq B$. This boundedness condition  is satisfied for any RKHS with a bounded kernel (i.e. $\sup_{\x \in \X} \kappa(\x, \x) \leq B$). Henceforth, let $\Lds$ stand for the minimum achievable risk by the optimal classifier $h_*$, i.e., $\Lds = \Ld(h_*)$. Define the \textit{binary excess risk} for a prediction function $h \in \F$ as
$$\Ex(h) = \Ld(h) - \Lds.$$

Our goal is to efficiently learn a prediction function $h \in \F$ from the training examples  in $\S$ that minimizes the binary excess risk $\Ex(h)$. Many studies of binary excess risk assume that the optimal classifier $h \in \F$ is learned by minimizing the empirical binary risk, $\min_{h \in \F} \frac{1}{n}\sum_{i=1}^n \mathbb{I}{[y_i h(\x_i) \leq 0]}$, an approach that is usually referred to as Empirical Risk Minimization (ERM)~\cite{vapnik1998statistical}. To understand the generalization performance of the classifier learned by ERM, it is important to have upper bounds on the excess risk of the empirical minimizer that hold with a high probability and that take into account complexity measures of classification functions. It is well known that, under certain conditions, direct empirical classification error minimization is consistent~\cite{vapnik1998statistical}  and achieves a fast convergence rate under low noise situations~\cite{mammen1999smooth}.

One shortcoming of the ERM based approaches is that they need to minimize  0-1 loss, leading to non-convex optimization problems that are potentially NP-hard~\footnote{We note that several works~\cite{kalai2008agnostically,kalai2009isotron} provide efficient algorithms for direct 0-1 empirical error minimization but under strong (unrealistic) assumptions on data distribution or label generation.}~\cite{arora1993hardness,hoffgen1995robust}.   A common practice  to circumvent this difficulty is to replace the indicator function $\mathbb{I} [\cdot \leq 0]$ with some convex loss  $\phi(\cdot)$ and find the optimal solution by minimizing the convex surrogate loss. Examples of such surrogate loss functions for 0-1 loss  include  logit loss $\phi_{\log}(h; (\x,y)) = \log (1+\exp(-y h(\x)))$ in logistic regression~\cite{friedman2000additive},  hinge loss $\phi_{\text{Hinge}}(h; (\x,y)) = \max(0, 1- y h(\x))$ in support vector machine (SVM)~\cite{cortes1995support} and exponential loss $\phi_{\text{exp}}(h; (\x,y)) = \exp(-y h(\x))$ in AdaBoost~\cite{freund1995desicion}. Given a convex surrogate loss function $\phi: \R \mapsto \R_{+}$ (e.g., hinge loss, exponential loss, or logistic loss) we define the risk with respect to the convex loss $\phi$ (convex risk or $\phi$-risk) as 
$$\Ldphi(h) = \E_{(\x,y)\sim\Pxy}[\phi(yh(\x))].$$ 

Similarly we define the \textit{optimal} $\phi$-risk  as $\Ldphis = \inf_{h \in \F} \E_{(\x,y)\sim \Pxy}[\phi(yh(\x))]$. The \textit{excess $\phi$-risk} or \textit{convex excess  risk} of a classifier $h \in \F$ with respect to the convex surrogate loss $\phi(\cdot)$ is defined as
$$\Ex_{\phi}(h) = \Ldphi(h) - \Ldphis.$$

An important line of research in statistical learning theory focused on relating the convex excess risk $\Ex_{\phi}(h)$ to the binary excess risk $\Ex(h)$ that will be elaborated in next section.

It is known that under  mild conditions, the classifier learned by minimizing the empirical loss of convex surrogate is consistent to the Bayes classifier ~\cite{zhang2004statistical,lugosi2004bayes,jiang2004process,lin2004note,steinwart2005consistency,bartlett2006convexity}.  For instance, it was shown in~\cite{bartlett2006convexity} that the necessary and sufficient condition for a convex loss $\phi(\cdot)$ to be consistent with the binary loss is that $\phi(\cdot)$ is  differentiable at origin and $\phi'(0) < 0$. It was further established in the same work that the binary excessive risk can be upper bound by the convex excess risk through a $\psi$-transform that depends on the surrogate convex loss $\phi(\cdot)$.

Since the choice of convex surrogates could significantly affect the binary excess risk, in this chapter, we will investigate the impact of the smoothness of a convex loss function on the binary excess risk. This is motivated by the recent results that show the advantages of using smooth convex surrogates in reducing the optimization complexity and the generalization error bound. More specifically, ~\cite{nesterov2004introductory,tseng:2009:accelerated} show that a faster convergence rate (i.e., $O(1/T^2)$) can be achieved by first order methods when the objective function to be optimized is convex and smooth such as  accelerated gradient descent method introduced in Chapter~\ref{chap-background}; in~\cite{srebro-2010-smoothness}, the authors show that a smooth convex loss will lead to a better optimistic generalization error bound rooted in the self-bounding property of smooth losses (Lemma~\ref{lem:smooth}). Given the positive news of using smooth convex surrogates, an open research question is how the smoothness of a convex surrogate will affect the binary excess risk. The answer to this question, as will be revealed later, is negative: the smoother the convex loss, the poorer approximation will be for the binary excess risk. Thus, the second contribution of this work is to integrate  these results for smooth convex losses, and examine the overall effect of replacing 0-1 loss with a smooth convex loss when taking into account three sources of errors, i.e. the optimization error, the generalization error, and the error in translating the convex excess risk into the binary risk. As we will show, under favorable conditions, appropriate choice of smooth convex loss will result a binary excess risk better than $O(1/\sqrt{n})$.

%----------------------------------------------------------------------------------------------------------------------------
%
%----------------------------------------------------------------------------------------------------------------------------
\section{Classification Calibration and Surrogate Risk Bounds}\label{sec:calibration}
Although it is computationally convenient to minimize the empirical risk based on a convex surrogate, the ultimate goal of any classification method is to find a function $h \in \H$ that minimizes the binary loss. Therefore, it is crucial to investigate the conditions which guarantee that if the $\phi$-risk of $h$ gets close to the optimal $\Ldphis$, the binary risk of $h$ will also approach the optimal binary risk $\Lds$.  This question has been an active trend in statistical learning theory over the last decade where the necessary and sufficient conditions have been established for relating the binary excess risk to a convex excess risk ~\cite{zhang2004statistical,lugosi2004bayes,jiang2004process,lin2004note,steinwart2005consistency,bartlett2006convexity}.

In this chapter we follow the strategy introduced in~\cite{bartlett2006convexity} in order
to relate the binary excess risk to the excess $\phi$-risk. Their methodology, through  the notion of    \textit{classification calibration}, allows us to find quantitative
relationship between the excess risk associated with $\phi$ and the excess risk associated with 0-1 loss.  It is  established in~\cite{bartlett2006convexity} that the binary excessive risk can be bounded by the convex excess risk, based on the convex loss function $\phi$, through a $\psi$-transform.

\begin{definition} Given a loss function $\phi: \R \mapsto [0, \infty)$, define the function  $\psi: [0,1] \mapsto [0, \infty)$ by
$$\tilde{\psi} (z) = H^-\left(\frac{1+z}{2}\right) - H\left(\frac{1+z}{2}\right)$$
where
\[
H^-(\eta)=\inf_{\alpha:\alpha(2\eta-1)\leq 0}\left( \eta \phi(\alpha)+(1-\eta) \phi(-\alpha)\right)\textrm{ and } H(\eta)= \inf_{\alpha\in\R} \left( \eta \phi(\alpha)+(1-\eta) \phi(-\alpha)\right).
\]
The transform function $\psi: [0,1]\mapsto [0,\infty)$ is defined to be the convex closure of $\tilde{\psi}$.
\end{definition}
The following theorem from~\cite[Theorem 1]{bartlett2006convexity} shows that the binary excess risk can be bounded by the convex excess risk using transform function $\psi: [0,1] \mapsto [0, \infty)$ that depends on the surrogate convex loss function.
%let $R(h)$ and $\Ldphis(h)$ be the binary and convex risk for prediction function $f$, respectively, and let $R^*$ and $R^*_{\phi}$ be the optimal binary and $\phi$-risk, respectively.  T
\begin{theorem} \label{thm:classification-caliberated} For any non-negative loss function $\phi(\cdot)$, any measurable function $h \in \F$, and any probability distribution $\Pxy$ on $\X\times\Y$, there is  a nondecreasing function $\psi: [0,1] \mapsto [0, \infty)$ that
\begin{eqnarray}\label{eqn:bound-phi}
\psi(\Ld(h) - \Lds) \leq \Ldphi(h) - \Ldphis
\end{eqnarray}
holds. Here the minimization is taken over all measurable functions.
\end{theorem}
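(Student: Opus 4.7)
The plan is to reduce the integral inequality to a pointwise (conditional) inequality in $\mathbf{x}$, and then lift it back by Jensen's inequality using the convexity of $\psi$.

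First, I would condition on $X=\mathbf{x}$ and write $\eta(\mathbf{x})=\mathbb{P}[Y=1\mid X=\mathbf{x}]$. The conditional $\phi$-risk of any predictor value $\alpha\in\mathbb{R}$ is then $C_{\eta(\mathbf{x})}(\alpha)=\eta(\mathbf{x})\phi(\alpha)+(1-\eta(\mathbf{x}))\phi(-\alpha)$, so $\Ldphi(h)-\Ldphis=\mathbb{E}_{\mathbf{x}}\bigl[C_{\eta(\mathbf{x})}(h(\mathbf{x}))-H(\eta(\mathbf{x}))\bigr]$. Similarly, the Bayes rule $h_\star(\mathbf{x})=\operatorname{sign}(2\eta(\mathbf{x})-1)$ gives the pointwise binary excess risk $|2\eta(\mathbf{x})-1|\,\mathbb{I}[h(\mathbf{x})(2\eta(\mathbf{x})-1)\leq 0]$, whose expectation is $\Ld(h)-\Lds$. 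The usefulness of splitting the infimum in the definition of $H^-$ is exactly to isolate those $\mathbf{x}$ at which $h$ disagrees with Bayes.

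Next, I would establish the key pointwise inequality: for every $\mathbf{x}$,
\begin{equation*}
C_{\eta(\mathbf{x})}(h(\mathbf{x}))-H(\eta(\mathbf{x})) \;\geq\; \bigl(H^-(\eta(\mathbf{x}))-H(\eta(\mathbf{x}))\bigr)\,\mathbb{I}\bigl[h(\mathbf{x})(2\eta(\mathbf{x})-1)\leq 0\bigr].
\end{equation*}
When $h$ agrees with Bayes at $\mathbf{x}$, the right-hand side is zero and the bound is trivial. When $h$ disagrees, the predictor value $h(\mathbf{x})$ satisfies the constraint $\alpha(2\eta-1)\leq 0$ in the definition of $H^-$, so $C_{\eta(\mathbf{x})}(h(\mathbf{x}))\geq H^-(\eta(\mathbf{x}))$ and the claim follows. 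Using the symmetry $H(\eta)=H(1-\eta)$ (substitute $\alpha\mapsto -\alpha$ in the infimum) and likewise for $H^-$, the right-hand side equals $\tilde{\psi}(|2\eta(\mathbf{x})-1|)\,\mathbb{I}[h(\mathbf{x})(2\eta(\mathbf{x})-1)\leq 0]$, which in turn equals $\tilde{\psi}\!\bigl(|2\eta(\mathbf{x})-1|\,\mathbb{I}[\cdots]\bigr)$ because $\tilde{\psi}(0)=0$.

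Now I would take expectations and invoke convexity. Since $\psi\leq\tilde{\psi}$ on $[0,1]$ by construction of the convex closure, the pointwise bound yields
\begin{equation*}
\Ldphi(h)-\Ldphis \;\geq\; \mathbb{E}_{\mathbf{x}}\bigl[\psi\bigl(|2\eta(\mathbf{x})-1|\,\mathbb{I}[h(\mathbf{x})(2\eta(\mathbf{x})-1)\leq 0]\bigr)\bigr].
\end{equation*}
Because $\psi$ is convex on $[0,1]$, Jensen's inequality gives
\begin{equation*}
\mathbb{E}_{\mathbf{x}}\bigl[\psi(\cdot)\bigr] \;\geq\; \psi\!\left(\mathbb{E}_{\mathbf{x}}\bigl[|2\eta(\mathbf{x})-1|\,\mathbb{I}[h(\mathbf{x})(2\eta(\mathbf{x})-1)\leq 0]\bigr]\right) \;=\; \psi(\Ld(h)-\Lds),
\end{equation*}
which is precisely the desired inequality. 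The monotonicity of $\psi$ can be read off the construction: $\tilde{\psi}$ is nonnegative with $\tilde{\psi}(0)=0$, so its convex closure on $[0,1]$ is nondecreasing.

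The main obstacle I anticipate is handling $\tilde{\psi}$ cleanly: one must check that $\tilde{\psi}$ is well-defined and nonnegative (which uses the trivial $H^-\geq H$), that the symmetry reductions used above are valid without extra assumptions on $\phi$, and that $\psi$ inherits monotonicity from $\tilde{\psi}$ after taking the convex closure on $[0,1]$. Once these structural facts about $\psi$ are in place, the combination of the pointwise excess-risk comparison and Jensen's inequality is the heart of the argument.
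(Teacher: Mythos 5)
The thesis never proves this statement; it is quoted directly from Bartlett, Jordan and McAuliffe (Theorem 1 of \cite{bartlett2006convexity}), so there is no in-paper argument to compare yours against. What you propose is exactly the standard proof of that result: reduce to the conditional risks $C_{\eta}(\alpha)=\eta\phi(\alpha)+(1-\eta)\phi(-\alpha)$, note that a sign disagreement at $\mathbf{x}$ puts $h(\mathbf{x})$ inside the constraint set of $H^-$ so that $C_{\eta(\mathbf{x})}(h(\mathbf{x}))\geq H^-(\eta(\mathbf{x}))$, pass from $\tilde{\psi}$ to its convex closure $\psi$, and finish with Jensen. The supporting facts you list all check out: $H(\eta)=H(1-\eta)$ and $H^-(\eta)=H^-(1-\eta)$ via $\alpha\mapsto-\alpha$, $\tilde{\psi}(0)=H^-(1/2)-H(1/2)=0$ because the constraint $\alpha\cdot 0\leq 0$ is vacuous, $\psi\leq\tilde{\psi}$ by definition of the convex closure, and a nonnegative convex function vanishing at $0$ is nondecreasing on $[0,1]$.

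There is one step that fails as literally written against this chapter's definitions. Your argument hinges on the identity $\mathcal{L}_{\mathcal{D}}(h)-\mathcal{L}^{*}_{\mathcal{D}}=\mathbb{E}_{\mathbf{x}}\bigl[|2\eta(\mathbf{x})-1|\,\mathbb{I}[h(\mathbf{x})(2\eta(\mathbf{x})-1)\leq 0]\bigr]$, which is exact only under a tie-breaking convention such as $\operatorname{sign}(0)=-1$, as in \cite{bartlett2006convexity}. With the definition used here, $\mathcal{L}_{\mathcal{D}}(h)=\mathbb{P}[yh(\mathbf{x})\leq 0]$, a point with $h(\mathbf{x})=0$ is charged as an error under both labels; on the set $\{h=0\}$ the left-hand side strictly exceeds the right-hand side, so Jensen combined with the monotonicity of $\psi$ runs in the wrong direction, and the claimed inequality can actually fail: for the hinge loss (where $\psi(z)=|z|$), $h\equiv 0$ and $\eta\equiv 0.6$, the binary excess risk is $0.6$ while the $\phi$-excess risk is $0.2$. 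The fix is either to adopt the sign convention of \cite{bartlett2006convexity} in the definition of $\mathcal{L}_{\mathcal{D}}$, or to restrict to predictors with $\mathbb{P}[h(X)=0]=0$; with that amendment your proof is complete and is the intended argument.
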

%where $R(h) = \E_{(\x,y)\sim \Pxy}\left[\ind (y f(\x) \leq 0) \right]$, $R^* = \min_{f} R(h)$,$\Ldphis(h) = \E_{(\x, y) \sim \Pxy}\left[ \phi(yf(\x))\right]$, and $\Ldphis = \min_f \Ldphis(h)$.
\begin{definition} A convex loss $\phi$ is classification-calibrated if, for any $\eta \neq 1/2$,
\[
H^-(\eta) > H(\eta).
\]
\end{definition}
This condition is essentially an extension of~\cite[Theorem 2.1]{zhang2004statistical} and can be viewed as a form of Fisher consistency that is appropriate for classification.

It has been shown  in~\cite{bartlett2006convexity} that the necessary and sufficient condition for a convex loss $\phi(z)$ to be classification-calibrated  is if it is differentiable at the origin  and $\phi'(0) < 0$. In particular, for a certain convex function $\phi(\cdot)$, the $\psi$-transform can be computed by
\begin{eqnarray*}
\psi(z) = \inf\limits_{\alpha z \leq 0}\left(\frac{1 + z}{2}\phi(\alpha) + \frac{1 - z}{2}\phi(-\alpha)\right) - \inf\limits_{\alpha \in \R}\left(\frac{1 + z}{2}\phi(\alpha) + \frac{1 - z}{2}\phi(-\alpha)\right),
\end{eqnarray*}
that can be further simplified as $\psi(z) = \phi(0)-H\left(\frac{1+z}{2}\right)$ when $\phi$ is classification-calibrated.  Examples of $\psi$-transform for the convex surrogate functions of known practical algorithms mentioned before are as follows: (i) for hinge loss $\phi(\alpha) = \max(0, 1-\alpha)$ , $\psi(z) = |z|$, (ii) for exponential loss $\phi(\alpha)=e^{-\alpha}$, $\psi(z) = 1 - \sqrt{1 - z^2} \geq z^2/2$, and (iii) for truncated quadratic loss $\phi(\alpha) = [\max(0, 1 - \alpha)]^2$, $\phi(z) = z^2$.

 \begin{remark}
 We note that the inequality in (\ref{eqn:bound-phi}) provides insufficient guidance on choosing appropriate loss function. A few brief comments are appropriate.  First, it does not measure explicitly how the choice of the convex surrogate $\phi(\cdot)$ affects the excess risk $\Ldphi(h) - \Ldphis$. Second, it does not take into account the impact of loss function on optimization efficiency, an important issue for practitioners when dealing with big data. It is thus unclear, from Theorem~\ref{thm:classification-caliberated}, how to choose an appropriate loss function that could result in a small generalization error for the binary loss when the computational time is limited. In this chapter, we address these limitations by examining a family of convex losses that are constructed by smoothing the hinge loss function using different smoothing parameters. We study the binary excessive risk of the learned classification function by taking into account errors in optimization, generalization, and translation of convex excess risk into binary excess risk.
\end{remark}
%----------------------------------------------------------------------------------------------------------------------------
%
%----------------------------------------------------------------------------------------------------------------------------
\section{Binary Excess Risk for Smoothed Hinge Loss}\label{sec:smoothed-hinge}

As stated before, to efficiently learn a prediction function  $h \in \F$, we will replace the binary loss with a smooth convex loss. Since hinge loss is one of the most popular loss functions used in machine learning and is the loss of choice for classification problems in terms of the margin error~\cite{ben2012minimizing}, in this work, we will focus on the smoothed version of the hinge loss. Another advantage of using the hinge loss is that its $\psi$-transform is a linear function. Compared with the $\psi$-transforms of other popular convex loss functions (e.g. exponential loss and truncated square loss) that are mostly quadratic, using the hinge loss as convex surrogate will lead to a tighter bound for the binary excess risk.

The smoothed hinge loss considered in this chapter is defined as
\begin{eqnarray}\label{eqn:smooth-hing}
\phi(z; \gamma) = \max\limits_{\alpha  \in [0, 1]} \alpha (1 - z) + \frac{1}{\gamma}\mathscr{R}(\alpha),
\end{eqnarray}
where $\mathscr{R}(\alpha) = -\alpha\log\alpha - (1 - \alpha)\log(1 - \alpha)$ and $\gamma > 0$ is the smoothing parameter.  It is straightforward  to verify that the loss function in~(\ref{eqn:smooth-hing}) can be simplified as
\[
\phi(z; \gamma) = \frac{1}{\gamma}\log(1 + \exp(\gamma(1 - z))).
\]
It is not immediately clear from Theorem~\ref{thm:classification-caliberated} how the relationship between smooth convex excess risk $\Ex_{\phi}(\cdot)$ and binary excess risk is affected by the smoothness parameter $\gamma$.
In addition, as discussed  in~\cite{bartlett2006convexity}, whereas conditions such as convexity and smoothness have natural relationship to optimization and generalization, it is not 
immediately obvious how properties such as convexity and smoothness of convex surrogate  relates to  statistical consequences.  In what follows,  we show that, indeed  smoothness of loss function has a \textit{negative} statistical consequence and can degrade the binary excess risk.

%----------------------------------------------------------------------------------------------------------------------------
%
%----------------------------------------------------------------------------------------------------------------------------
\subsection{$\psi$-Transform for Smoothed Hinge Loss}
The first step in our analysis is to derive the $\psi$-transform for the loss function defined in~(\ref{eqn:smooth-hing}) as stated in the  following theorem.

\begin{theorem} \label{thm:psi}  The  $\psi$-transform of smoothed hinge loss with smoothing parameter $\gamma$ is given by
\[
\psi(\eta; \gamma) = -\frac{1 + \eta}{2\gamma}\log\left(\frac{1}{1 + e^{\gamma}}\left[1 + e^{\gamma} \frac{C_1}{1 + \eta}\right]\right) -\frac{1 - \eta}{2\gamma}\log\left(\frac{1}{1 + e^{\gamma}}\left[1 + e^{\gamma} \frac{C_2}{1 - \eta}\right]\right)
\]
where $C_1$ and $C_2$ are defined as
$C_1 = -\eta e^{\gamma} + \sqrt{\eta^2e^{2\gamma} + 1 - \eta^2}$ and $C_2 = \eta e^{\gamma} + \sqrt{\eta^2e^{2\gamma} + 1 - \eta^2}$.
\end{theorem}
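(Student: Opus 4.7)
The plan is to directly compute the $\tilde{\psi}$-transform for $\phi(z;\gamma)=\frac{1}{\gamma}\log(1+e^{\gamma(1-z)})$ and then argue that $\psi=\tilde{\psi}$. First I would verify classification-calibration: $\phi$ is clearly convex and differentiable everywhere, with $\phi'(0)=-e^{\gamma}/(1+e^{\gamma})<0$, so by the characterization of Bartlett--Jordan--McAuliffe~\cite{bartlett2006convexity}, $\phi$ is classification-calibrated and the transform reduces to $\tilde{\psi}(\eta)=\phi(0)-H\!\left(\frac{1+\eta}{2}\right)$ whenever the constrained infimum $H^{-}$ is attained at $\alpha=0$. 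For $\eta>0$ (and by the symmetry argument below, $\eta<0$ is analogous), we have $2\tilde{\eta}-1=\eta>0$, so the feasible set in $H^{-}$ is $\{\alpha\le 0\}$, and the unconstrained minimizer of $g(\alpha):=\tilde{\eta}\phi(\alpha)+(1-\tilde{\eta})\phi(-\alpha)$ will be shown to lie in $\alpha>0$; hence the constrained infimum is $g(0)=\phi(0)=\frac{1}{\gamma}\log(1+e^{\gamma})$.

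The main computation is $H(\tilde{\eta})$. Using $\phi'(z)=-\bigl(1+e^{\gamma(z-1)}\bigr)^{-1}$, the first-order condition $g'(\alpha)=0$ becomes, after writing $\tilde{\eta}=(1+\eta)/2$ and setting $u=e^{\gamma\alpha}$, the quadratic
\begin{equation*}
(1-\eta)\,u^{2}\;-\;2\eta\,e^{\gamma}\,u\;-\;(1+\eta)\;=\;0.
\end{equation*}
(Here I have multiplied through and collected terms; the $(2\tilde\eta-1)$ factors become $\eta$.) Taking the positive root (required because $u=e^{\gamma\alpha}>0$) gives
\begin{equation*}
u^{*}=e^{\gamma\alpha^{*}}=\frac{\eta\,e^{\gamma}+\sqrt{\eta^{2}e^{2\gamma}+1-\eta^{2}}}{1-\eta}=\frac{C_{2}}{1-\eta},
\end{equation*}
which is $>1$ for $\eta>0$, confirming $\alpha^{*}>0$ and hence the constrained $H^{-}$ is at $\alpha=0$. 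The identity $C_{1}C_{2}=1-\eta^{2}$ immediately yields $e^{-\gamma\alpha^{*}}=\frac{1-\eta}{C_{2}}=\frac{C_{1}}{1+\eta}$. Plugging $\alpha^{*}$ into $\phi(\pm\alpha^{*})=\gamma^{-1}\log(1+e^{\gamma}e^{\mp\gamma\alpha^{*}})$ gives
\begin{equation*}
H\!\left(\tfrac{1+\eta}{2}\right)=\tfrac{1+\eta}{2\gamma}\log\!\Bigl(1+e^{\gamma}\tfrac{C_{1}}{1+\eta}\Bigr)+\tfrac{1-\eta}{2\gamma}\log\!\Bigl(1+e^{\gamma}\tfrac{C_{2}}{1-\eta}\Bigr).
\end{equation*}
Subtracting from $\phi(0)$ and distributing the $-\tfrac{1}{\gamma}\log(1+e^{\gamma})=-\tfrac{1+\eta}{2\gamma}\log(1+e^{\gamma})-\tfrac{1-\eta}{2\gamma}\log(1+e^{\gamma})$ across the two terms produces exactly the formula in the statement.

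The case $\eta<0$ follows by the symmetry $C_{1}(-\eta)=C_{2}(\eta)$, $C_{2}(-\eta)=C_{1}(\eta)$: under $\eta\mapsto-\eta$ the two summands swap, so $\tilde{\psi}(-\eta;\gamma)=\tilde{\psi}(\eta;\gamma)$ and the same formula is valid. The last step, which I expect to be the only nontrivial issue beyond bookkeeping, is to justify $\psi=\tilde{\psi}$, i.e.\ that $\tilde{\psi}(\cdot;\gamma)$ is already convex on $[-1,1]$ so that taking its convex closure is unnecessary. This can be done either by noting that $\tilde{\psi}(\eta)=\phi(0)-H((1+\eta)/2)$ and $\eta\mapsto H((1+\eta)/2)$ is concave (it is an infimum over $\alpha$ of affine functions of $\eta$), or alternatively by invoking~\cite[Lemma 8]{bartlett2006convexity}, which states that $\tilde{\psi}$ is continuous and convex whenever $\phi$ is convex, in which case $\psi=\tilde{\psi}$ on $[0,1]$ and extends by the symmetry above.
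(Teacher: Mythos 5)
Your proof is correct and follows essentially the same route as the paper's: the first-order condition for $H\!\left(\tfrac{1+\eta}{2}\right)$ yields the same equation $(1+\eta)e^{-\gamma\alpha}-(1-\eta)e^{\gamma\alpha}+2\eta e^{\gamma}=0$, the same roots $e^{-\gamma\alpha^*}=C_1/(1+\eta)$ and $e^{\gamma\alpha^*}=C_2/(1-\eta)$, and the same assembly of $\phi(0)-H$ into the stated formula. The only difference is that you explicitly justify the sign of the unconstrained minimizer (hence $H^-=\phi(0)$) and the identification $\psi=\tilde{\psi}$ via concavity of $H$ as an infimum of affine functions, points the paper's proof leaves implicit.
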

The $\psi$-transform given in Theorem~\ref{thm:psi} is too complicated to be useful. The theorem below provides a simpler bound for the $\psi$-transform in terms of the smoothness parameter $\gamma$.
\begin{theorem} \label{thm:psi-simple}
For $\eta \in (-1, 1)$, we have $$\psi(\eta; \gamma) \geq |\eta| - \frac{1}{\gamma}\log\frac{1}{|\eta|}.$$
\end{theorem}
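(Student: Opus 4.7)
The plan is to combine two ingredients: a reduction of $\psi(\eta;\gamma)$ to a one-dimensional infimum via classification-calibration, and an evaluation of that infimum at a carefully chosen suboptimal $\alpha$ that tracks the hinge-loss minimizer up to an $O(1/\gamma)$ smoothing correction.

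By the symmetry of the formula in Theorem~\ref{thm:psi} under $\eta \mapsto -\eta$ (which simultaneously swaps $C_1\leftrightarrow C_2$ and $1+\eta\leftrightarrow 1-\eta$), it suffices to handle $\eta \in (0,1)$ and establish $\psi(\eta;\gamma) \geq \eta + \gamma^{-1}\log\eta$. As a first reduction, I would verify that $H^{-}\bigl((1+\eta)/2\bigr)=\phi(0;\gamma)$: the constraint $\alpha(2p-1)\leq 0$ forces $\alpha\leq 0$, and since $\phi(\cdot;\gamma)$ is smooth with $\phi'(0;\gamma)=-e^{\gamma}/(1+e^{\gamma})<0$, the map $\alpha\mapsto p\phi(\alpha;\gamma)+(1-p)\phi(-\alpha;\gamma)$ is strictly decreasing at $\alpha=0$ and is therefore minimized on $\{\alpha\leq 0\}$ at the boundary $\alpha=0$. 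Together with a straightforward convexity check for $\tilde\psi$ in $\eta$, this yields the clean representation
\[
\psi(\eta;\gamma)=\phi(0;\gamma)-H\bigl((1+\eta)/2;\gamma\bigr),\qquad H(p;\gamma)=\inf_{\alpha\in\R}\bigl\{p\,\phi(\alpha;\gamma)+(1-p)\,\phi(-\alpha;\gamma)\bigr\},
\]
so the problem becomes to upper bound $H\bigl((1+\eta)/2;\gamma\bigr)$.

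The core step is to bound this infimum from above by evaluating its defining functional at the \emph{shifted hinge minimizer} $\alpha_\eta:=1+\gamma^{-1}\log\tfrac{2\eta}{1-\eta}$, a natural perturbation of the hinge-loss minimizer $\alpha=1$ that compensates for the exponential tail of $\phi(\cdot;\gamma)$ on the negative-margin side. A direct substitution into $\phi(z;\gamma)=\gamma^{-1}\log(1+e^{\gamma(1-z)})$ gives $\phi(\alpha_\eta;\gamma)=\gamma^{-1}\log\tfrac{1+\eta}{2\eta}$ exactly, and for the opposite sign I would apply the elementary inequality $\log(1+e^{x})\leq x+\log(1+e^{-x})$ (for $x>0$) to obtain $\phi(-\alpha_\eta;\gamma)\leq 2+\gamma^{-1}\log\tfrac{2\eta}{1-\eta}+O(\gamma^{-1}e^{-2\gamma})$. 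Averaging with weights $p=(1+\eta)/2$ and $1-p=(1-\eta)/2$, isolating the piece $(1-\eta)$ that reproduces $H_{\mathrm{hinge}}(p)$, and subtracting from $\phi(0;\gamma)=1+\gamma^{-1}\log(1+e^{-\gamma})$ leads, after cancellation, to a lower bound of the form
\[
\psi(\eta;\gamma)\;\geq\;\eta\;-\;\frac{1}{\gamma}h(\eta)\;+\;\frac{1}{\gamma}\log(1+e^{-\gamma})\;-\;O\!\bigl(\gamma^{-1}e^{-2\gamma}\bigr),
\]
where $h(\eta):=\tfrac{1+\eta}{2}\log(1+\eta)-\tfrac{1-\eta}{2}\log(1-\eta)-\eta\log(2\eta)$ is an explicit function vanishing at both endpoints $\eta\to 0^{+}$ and $\eta\to 1^{-}$.

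The final step reduces the proof to the pointwise algebraic inequality $h(\eta)\leq\log(1/\eta)$ on $(0,1)$, which converts the bound above into the stated $\psi(\eta;\gamma)\geq \eta+\gamma^{-1}\log\eta$. This is the main technical obstacle: the two sides match at the endpoints in leading order, so the inequality must be verified by a careful calculus argument comparing derivatives of $h(\eta)+\log\eta$ and exploiting the concavity of $\log$, while the non-negative slack $\gamma^{-1}\log(1+e^{-\gamma})$ is used to absorb the $O(e^{-2\gamma})$ remainder produced by the logarithmic inequality in the previous step. Once this elementary inequality is established the bound claimed in the theorem follows.
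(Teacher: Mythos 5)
Your reduction $\psi(\eta;\gamma)=\phi(0;\gamma)-H\bigl((1+\eta)/2\bigr)$ is correct (the boundary argument for $H^-$ and the convexity of $\tilde\psi$ both check out), and your evaluation at the trial point $\alpha_\eta=1+\gamma^{-1}\log\tfrac{2\eta}{1-\eta}$ is computed correctly: it yields
\[
\psi(\eta;\gamma)\;\ge\;\eta-\frac{h(\eta)}{\gamma}+\frac{1}{\gamma}\log(1+e^{-\gamma})-\frac{1-\eta}{2\gamma}\log\Bigl(1+e^{-2\gamma}\tfrac{1-\eta}{2\eta}\Bigr),
\qquad h(\eta)=\tfrac{1+\eta}{2}\log(1+\eta)-\tfrac{1-\eta}{2}\log(1-\eta)-\eta\log(2\eta).
\]
The gap is the final step: the inequality $h(\eta)\le\log(1/\eta)$ that you defer to ``a careful calculus argument'' is false. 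At $\eta=0.9$ one gets $h(0.9)=\tfrac{1.9}{2}\log 1.9-\tfrac{0.1}{2}\log 0.1-0.9\log 1.8\approx 0.196$ while $\log(1/0.9)\approx 0.105$; more generally, writing $\eta=1-\epsilon$ gives $h(1-\epsilon)\sim\tfrac{\epsilon}{2}\log(2/\epsilon)$ versus $\log(1/\eta)\sim\epsilon$, so the inequality fails on an entire interval (roughly $\eta\in(0.65,1)$). The slack $\log(1+e^{-\gamma})$ vanishes as $\gamma\to\infty$ and the remainder term is negative, so the deficit cannot be absorbed.

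This is not a defect that a better calculus argument or a cleverer trial point can repair. Your $\alpha_\eta$ is precisely the large-$\gamma$ limit of the exact minimizer $z$ from Theorem~\ref{thm:psi} (there $e^{\gamma z}=C_2/(1-\eta)\approx 2\eta e^{\gamma}/(1-\eta)$), so $\eta-h(\eta)/\gamma$ is in fact the correct first-order asymptotics of $\psi(\eta;\gamma)$ as $\gamma\to\infty$; e.g.\ a direct numerical evaluation of the closed form at $\eta=0.9$, $\gamma=10$ gives $\psi\approx 0.8804$, which is below $\eta-\gamma^{-1}\log(1/\eta)\approx 0.8895$. In other words, your (essentially sharp) computation shows the claimed bound cannot hold with the correction $\gamma^{-1}\log(1/|\eta|)$ for $|\eta|$ near $1$ and $\gamma$ large; the natural correction is $\gamma^{-1}h(|\eta|)$. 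The paper's own proof takes a different route---it substitutes the bounds $C_1e^{\gamma}/(1+\eta)\le(1-\eta)/(2\eta)$ and $C_2e^{\gamma}/(1-\eta)\le(1+\eta)e^{2\gamma}/(1-\eta)$ into the exact formula of Theorem~\ref{thm:psi}---but in doing so it replaces $\log\bigl(1+C_1e^{\gamma}/(1+\eta)\bigr)$ by $\log\bigl((1-\eta)/(2\eta)\bigr)$, discarding the additive $1$ in the invalid direction, so it does not evade the difficulty your calculation exposes.
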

\begin{remark}The bound obtained in Theorem~\ref{thm:psi-simple} demonstrates that when $\gamma$ approaches to infinity, the $\psi$-transform for smoothed hinge loss $\phi(\eta; \gamma)$ becomes $|\eta|$.  According to~\cite{bartlett2006convexity}, the $\psi$-transform for the hinge loss is $\psi(\eta) = |\eta|$. Therefore, this result is consistent with the $\psi$-transform for smoothed  hinge loss, which is the limit of $\phi(z;\gamma)$ as $\gamma$ approaches infinity.
\end{remark}

%\subsection{A bound on binary excess risk $\Ex(h)$ based on smooth convex excess risk $\Ex_{\phi}(h)$}

\subsection{Bounding  $\Ex(h)$ based on  $\Ex_{\phi}(h)$}

Based on the  transform function $\psi(\cdot; \gamma)$  that is  computed for smoothed hinge loss with smoothing parameter $\gamma$, we are now in the position to  bound its corresponding  binary excess risk $\Ex(h)$.  Our main result in this section is the following theorem  that shows  how binary excess risk can be bounded by the excess $\phi$-risk for smoothed hinge loss.
\begin{theorem} \label{thm:bound-binary-excess-risk}
Consider any measurable function $h \in \F$ and the smoothed hinge loss $\phi(\cdot)$ with  parameter $\gamma$ defined in~(\ref{eqn:smooth-hing}). Then, binary excess risk $\Ex(h)$ can be bounded by the smooth convex excess risk $\Ex_{\phi}(h)$ as
\[
\Ex(h) \leq \Ex_{\phi}(h) + \frac{\Ex_{\phi}(h)}{1 + \gamma \Ex_{\phi}(h)} \log\frac{1}{\Ex_{\phi}(h)}.
\]
\end{theorem}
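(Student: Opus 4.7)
The plan is to combine the general calibration bound of Theorem~\ref{thm:classification-caliberated} with the explicit lower bound on the $\psi$-transform from Theorem~\ref{thm:psi-simple}, and then invert the resulting implicit inequality to obtain the claimed explicit form. Concretely, I would first invoke Theorem~\ref{thm:classification-caliberated} specialized to the smoothed hinge loss $\phi(\cdot;\gamma)$ to conclude
\[
\psi(\Ex(h);\gamma) \;\leq\; \Ex_{\phi}(h).
\]
Second, I would substitute the lower bound from Theorem~\ref{thm:psi-simple}, $\psi(\Ex(h);\gamma) \geq \Ex(h) - \frac{1}{\gamma}\log\frac{1}{\Ex(h)}$, to arrive at the implicit bound
\[
\Ex(h) \;\leq\; \Ex_{\phi}(h) + \frac{1}{\gamma}\log\frac{1}{\Ex(h)}.
\]

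Writing $a = \Ex(h)$ and $b = \Ex_{\phi}(h)$, it remains to solve $a \leq b + \gamma^{-1}\log(1/a)$ in a way that yields the sharper coefficient $\frac{b}{1+\gamma b}$ in front of $\log(1/b)$ rather than the coarser $\frac{1}{\gamma}$. The trivial case $a \leq b$ already gives the claim, since the additional term is nonnegative whenever $b \leq 1$; so we may assume $a \geq b$, in which case $\log(1/a) \leq \log(1/b)$. A naive one-step substitution only produces $a \leq b + \gamma^{-1}\log(1/b)$, which is strictly weaker than the target. To get the $\frac{b}{1+\gamma b}$ factor, I would bootstrap: multiply the implicit bound through by $(1+\gamma b)$ and use $\gamma(a-b) \leq \log(1/a)$ together with the monotonicity of $x \mapsto x + \gamma^{-1}\log x$ to trade a factor of $\gamma b$ in the denominator for a matching term arising from $\log(1/a) \leq \log(1/b) - \gamma(a-b)/(1+\gamma b) + O(\ldots)$, keeping the error terms under control.

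The main obstacle, and where I expect the bulk of the work to lie, is precisely this last step: the fixed-point style resolution of the transcendental inequality that yields the $\frac{b}{1+\gamma b}$ factor rather than the weaker $\frac{1}{\gamma}$. A purely generic Lambert-$W$ style inversion of $a\,e^{\gamma a}\leq e^{\gamma b}$ does not match the claimed form, and I anticipate needing either (i) a refined lower bound on $\psi(\cdot;\gamma)$ extracted directly from the exact expression in Theorem~\ref{thm:psi}, one that interpolates between the linear behaviour at moderate $|\eta|$ and the sharper behaviour for small $|\eta|$, or (ii) an entirely pointwise argument bypassing $\psi$ altogether, exploiting the explicit log-sum-exp form $\phi(z;\gamma)=\gamma^{-1}\log(1+e^{\gamma(1-z)})$ to compute the conditional excess risk at each $\x$ in closed form in terms of the noise level $\eta(\x)=\P(y=1\mid\x)$, and then integrating against $\Pxy$. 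In either route, the factor $\frac{1}{1+\gamma b}$ should emerge naturally from the sigmoid term $\sigma(\gamma(1\pm h^\ast))$ that defines the pointwise optimum of $\phi(\cdot;\gamma)$, which is exactly the object whose derivative at the hinge kink produces a $(1+\gamma\cdot)^{-1}$ modulation.
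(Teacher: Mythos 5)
Your first two steps are exactly the paper's: invoke Theorem~\ref{thm:classification-caliberated} to get $\psi(\Ex(h);\gamma)\leq \Ex_{\phi}(h)$, then Theorem~\ref{thm:psi-simple} to reach the implicit inequality $\Ex(h)+\gamma^{-1}\log \Ex(h)\leq \Ex_{\phi}(h)$. Where your proposal stops is the inversion of this implicit inequality, and that is the entire substance of the theorem: you sketch three possible routes (a bootstrap, a refined bound on $\psi(\cdot;\gamma)$, a pointwise computation) but commit to none and complete none, so as written the argument does not reach the stated bound. The paper's own resolution is a short manipulation: set $\Delta=\Ex(h)-\Ex_{\phi}(h)$, rewrite the implicit bound as $\Delta+\gamma^{-1}\log\Ex_{\phi}(h)+\gamma^{-1}\log\left(1+\Delta/\Ex_{\phi}(h)\right)\leq 0$, and use $\log(1+x)\leq x$ to pass to $\Delta\left(1+(\gamma\Ex_{\phi}(h))^{-1}\right)\leq\gamma^{-1}\log(1/\Ex_{\phi}(h))$, which rearranges to the claim.

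That said, your instinct that this step is delicate is well founded, because the paper's manipulation runs $\log(1+x)\leq x$ in the wrong logical direction: it shows that $\Delta\left(1+(\gamma\Ex_{\phi}(h))^{-1}\right)\leq\gamma^{-1}\log(1/\Ex_{\phi}(h))$ is a \emph{sufficient} condition for the already-established inequality $\Delta+\gamma^{-1}\log(\Delta+\Ex_{\phi}(h))\leq 0$, and then concludes that the sufficient condition must hold, which does not follow. A correct inversion using the matching lower bound $\log(1+x)\geq x/(1+x)$ (valid for $\Delta\geq 0$; the case $\Delta<0$ is trivial) yields $\Delta\leq\frac{\Ex(h)}{1+\gamma\Ex(h)}\log\frac{1}{\Ex_{\phi}(h)}$, i.e.\ the modulating factor comes out as $\Ex(h)/(1+\gamma\Ex(h))$ rather than the smaller $\Ex_{\phi}(h)/(1+\gamma\Ex_{\phi}(h))$ in the statement. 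So if you pursue this, either prove the weaker but honest version with $\Ex(h)$ in the coefficient, or supply the genuinely finer analysis of $\psi(\cdot;\gamma)$ you allude to in your option (i); neither your bootstrap sketch nor the paper's argument as written closes this gap.
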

\begin{proof}
Using the result from Theorem~\ref{thm:classification-caliberated}, we have $\Ex_{\phi}(h) \geq \psi(\Ex(h); \gamma)$ and therefore an immediate result from the $\psi$-transform for smoothed hinge loss that is obtained in Theorem~\ref{thm:psi-simple} indicates
\[
\Ex(h) + \frac{1}{\gamma}\log\Ex(h) \leq \Ex_{\phi}(h).
\]
Define $\Delta = \Ex(h) - \Ex_{\phi}(h)$. We have
\[
\Delta + \frac{1}{\gamma}\log(\Delta + \Ex_{\phi}(h)) = \Delta + \frac{1}{\gamma}\log\Ex_{\phi}(h) + \frac{1}{\gamma}\log\left(1 + \frac{\Delta}{\Ex_{\phi}(h)}\right) \leq 0.
\]
Based on the  $\log (1 + x) \leq x$ inequality, the sufficient condition for the above inequality to hold is to have
\[
\Delta + \frac{\Delta}{\gamma\Ex_{\phi}(h)} \leq \frac{1}{\gamma}\log\frac{1}{\Ex_{\phi}(h)}
\]
and therefore
\[
\Delta \leq \frac{\gamma^{-1}}{1 + (\gamma\Ex_{\phi}(h))^{-1}}\log\frac{1}{\Ex_{\phi}(h)} = \frac{\Ex_{\phi}(h)}{1 + \gamma\Ex_{\phi}(h)}\log\frac{1}{\Ex_{\phi}(h)}.
\]
The final bound is obtained by substituting $\Ex(h) - \Ex_{\phi}(h)$ for $\Delta$ in the left hand side of above inequality.
\end{proof}
As indicated by Theorem~\ref{thm:bound-binary-excess-risk}, the smaller the smoothing parameter $\gamma$, the poorer the approximation is in bounding the binary excess $\Ex(h)$ with smooth convex excess risk $\Ex_{\phi}(h)$. On the other hand, the smoothness of loss function has been proven to be beneficial in terms of optimization error and generalization bound. The mixture of negative and positive results  for using smooth convex surrogates motivates us to develop an integrated bound for binary excess risk that takes into account all types of errors. One of the main contributions of this work is to show that under favorable conditions, with appropriate choice of smoothing parameter, the smoothed hinge loss will result in a bound for the binary excess risk better than $O(1/\sqrt{n})$.
%----------------------------------------------------------------------------------------------------------------------------
%
%----------------------------------------------------------------------------------------------------------------------------
\section{A Unified Analysis}\label{sec:analysis-hinge}
Using the smoothed hinge loss, we define the convex loss for a prediction function $h \in \F$ as $\Ldphi(h) = \E[\phi(yh(\x); \gamma)]$. Let $h_{\gamma}^*$ be the optimal classifier that minimizes $\Ldphi(h)$. Similar to the case of binary loss, we assume $h_{\gamma}^* \in \H$ with $\|h_{\gamma}^*\| \leq B$. The \textit{smooth convex excess risk} for a given prediction function $h \in \F$ is then given by $\Ex_{\phi} (h) = \Ldphi(h)  - \Ldphi (h_{\gamma}^*)$.
Given the smooth convex loss $\phi(z;\gamma)$ in~(\ref{eqn:smooth-hing}), we find the optimal classifier by minimizing the empirical convex loss, i.e. $\min_{h \in \H, \|h\|_{\H} \leq B} \Lsphi(h)$, where the empirical convex loss $\Lsphi(h)$ is given by
\begin{eqnarray}\label{eqn:tseng-alg-1}
\Lsphi(h) = \frac{1}{n}\sum_{i=1}^n \phi(y_i h(\x_i); \gamma).
\end{eqnarray}
Let $\fh$ be the solution learned from solving the empirical convex loss over training examples. There are three sources of errors that  affect bounding the binary excess risk $\Ex(\fh)$. First, since $\fh$ is obtained by numerically solving an optimization problem, the error in estimating the optimal solution, which we refer to as optimization error~\footnote{We note that in literature the error in estimating the optimal solution for empirical minimization is usually referred to as {\textit estimation error}. We emphasize it as optimization error because different convex surrogates could lead to very different iteration complexities and consequentially different optimization efficiency.}, will affect $\Ex(\fh)$. Additionally, since the binary excess risk can be bounded by a nonlinear transform of the convex excess risk, both the bound for $\Ex_{\phi}(\fh)$ and the error in approximating $\Ex(\fh)$ with $\Ex_{\phi}(\fh)$ will affect the final estimation of $\Ex(\fh)$.  We aim at investigating how the smoothing parameter $\gamma$ affect all these three types of errors.  As it is investigated in~Theorem~\ref{thm:bound-binary-excess-risk},  a smaller smoothing parameter $\gamma$ will result  in a poorer approximation of $\Ex(\fh)$. On the other hand,  a smaller smoothing parameter $\gamma$ will result in a smaller estimation error and a smaller bound for $\Ex_{\phi}(\fh)$. Based on the understanding of how  smoothing parameter $\gamma$ affects the three errors, we identify the choice of $\gamma$ that results in the best tradeoff between all three error and consequentially a binary excess risk $\Ex(\fh)$ better than $O(1/\sqrt{n})$.

To investigate how the smoothing parameter $\gamma$ affects the binary excess risk $\Ex(\fh)$,  we intend to unify three types of errors.  The analysis is comprised of two components, i.e. bounding the binary excess risk $\Ex(h)$ by a smooth convex excess risk $\Ex_{\phi}(h)$ that has been established in Theorem~\ref{thm:bound-binary-excess-risk} and bounding $\Ex_{\phi}(h)$ for a solution $h$ that is suboptimal in minimizing the empirical convex loss $\Lsphi(h)$ that is the focus of this section.
%----------------------------------------------------------------------------------------------------------------------------
%
%----------------------------------------------------------------------------------------------------------------------------
\subsection{Bounding Smooth  Excess Convex Risk $\Ex_{\phi}(h)$}
We now turn to bounding the excess $\phi$-risk $\Ex_{\phi}(h)$ for the smoothed hinge loss.  To bound $\Ex_{\phi}(h)$ we need to consider two types of errors: optimization error due to the approximate optimization of the empirical $\phi$-risk, and the generalization error bound for the empirical risk minimizer.     After obtaining these two errors for smooth convex surrogates, we provide a unified bound on the excess $\phi$-risk $\Ex_{\phi}(h)$ of empirical convex risk minimizer in terms of $n$.

We begin  by bounding  the error arising from solving the optimization problem numerically. One nice property of smoothed hinge loss function is that both its first order and second order derivatives are bounded, i.e.
\[
|\phi'(z;\gamma)| = \left|\frac{\exp(\gamma(1 - z)}{1 + \exp(\gamma(1 - z))}\right| \leq 1, \quad \phi''(z;\gamma) = \gamma \frac{\exp(\gamma(1 - z))}{(1 + \exp(\gamma(1 - z)))^2} \leq \frac{\gamma}{4}.
\]
Due to  the smoothness of $\phi(z; \gamma)$, we can apply the accelerated optimization algorithm~\cite{nesterov2004introductory,tseng:2009:accelerated} to achieve an $O(1/k^2)$ convergence rate for the optimization, where $k$ is the number of iterations the optimization algorithm proceeds (see e.g., accelerated gradient descent algorithm in Subsection~\ref{chp-2-sub-acc-gd}). More specifically, we will apply Algorithm 1 from~\cite{tseng:2009:accelerated} to solve the  numerical optimization problem in~(\ref{eqn:tseng-alg-1}) over the convex domain $\F = \{h \in \H: \|h\|_{\H} \leq B\}$ which results in the following  updating rules at  $s$th iteration:
\begin{eqnarray}\label{eqn:acc-gd}
\begin{aligned}
g_s &= (1-\theta_s)h_s + \theta_sf_s \\
f_{s+1} &= \arg \min_{f \in \F} \left ( \langle \nabla \Lsphi(g_s), f - g_s \rangle + \frac{\theta_s}{2}\|f - f_s\|_{\H} \right) \\
h_{s+1} &= (1-\theta_s)h_s + \theta_s f_{s+1}.
\end{aligned}
\end{eqnarray}
The following theorem that  follows immediately from~\cite[Corollary 1]{tseng:2009:accelerated} and the fact $\phi''(z;\gamma) \leq \gamma/4$, bounds the optimization error for the optimization problem after  $k$ iterations.
\begin{lemma} \label{lemma:opt}
Let $\fh = h_{k+1}$ be the solution obtained by running accelerated gradient descent  method (i.e., updating rules in~(\ref{eqn:acc-gd})) to solve the optimization problem in~(\ref{eqn:tseng-alg-1}) after $k$ iterations with $\theta_0 = 1$ and $\theta_k = 2/(k+2)$ for $k \geq 1$. We have
\[
\Lsphi(\fh) \leq \min\limits_{\|h\|_{\H} \leq B} \Lsphi(h) + \frac{\gamma B^2}{(k+2)^2}.
\]
\end{lemma}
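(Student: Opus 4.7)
The plan is to reduce the claim directly to the standard convergence guarantee for Nesterov-type accelerated gradient methods on smooth convex functions, by establishing that the empirical surrogate $\mathcal{L}_{\mathcal{S}}^{\phi}(h) = \tfrac{1}{n}\sum_{i=1}^n \phi(y_i h(x_i);\gamma)$ is convex and smooth with respect to the RKHS norm, with a smoothness parameter controlled by $\gamma$. Since a sum of convex smooth functions is again convex and smooth, the key calculation is per-term: for each $i$, the function $h \mapsto \phi(y_i h(x_i);\gamma)$ has Hessian (acting on directions $u \in \mathscr{H}_\kappa$) equal to $\phi''(y_ih(x_i);\gamma)\,\langle u, \kappa(x_i,\cdot)\rangle_{\mathscr{H}_\kappa}^2$, so its operator norm is at most $\phi''(\cdot;\gamma)\cdot \kappa(x_i,x_i)$. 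Using the established bound $\phi''(z;\gamma) \leq \gamma/4$ and the normalization $\kappa(x,x) \leq 1$ implicit in the setup (the norm ball $\{\|h\|_{\mathscr{H}_\kappa}\leq B\}$ and the pointwise evaluation $|h(x)| \leq B\sqrt{\kappa(x,x)}$), the empirical loss $\mathcal{L}_{\mathcal{S}}^{\phi}$ is smooth with parameter at most $\gamma/4$ with respect to the RKHS norm.

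Next, I will invoke the convergence guarantee cited in the excerpt, namely Corollary~1 of Tseng~(2009), applied to the iterates defined by the update rule in equation~(\ref{eqn:acc-gd}) with the specific choice $\theta_0 = 1$ and $\theta_k = 2/(k+2)$ for $k\geq 1$. That corollary guarantees, for an $L$-smooth convex objective $F$ optimized over a closed convex set $\mathcal{F}$,
\[
F(h_{k+1}) - \min_{h \in \mathcal{F}} F(h) \;\leq\; \frac{2 L\,\|h^* - h_0\|^2}{(k+2)^2},
\]
where $h^*$ is any minimizer and $h_0$ is the initialization. Substituting $F = \mathcal{L}_{\mathcal{S}}^{\phi}$, $L \leq \gamma/4$, and using $\|h^* - h_0\|_{\mathscr{H}_\kappa} \leq 2B$ (since both points lie in the RKHS ball of radius $B$), the right-hand side becomes at most $\frac{2\cdot(\gamma/4)\cdot(2B)^2}{(k+2)^2} = \frac{2\gamma B^2}{(k+2)^2}$. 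Taking the initialization to lie at the origin (or any fixed point that brings the distance bound down to $B$) reduces the constant to the claimed $\gamma B^2/(k+2)^2$; alternatively one absorbs the harmless constant $2$ into the generic statement. Either way, the bound $\mathcal{L}_{\mathcal{S}}^{\phi}(\hat f) \leq \min_{\|h\|_{\mathscr{H}_\kappa} \leq B} \mathcal{L}_{\mathcal{S}}^{\phi}(h) + \gamma B^2/(k+2)^2$ follows immediately.

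The main obstacle, if any, is only bookkeeping: verifying the factor of $1/4$ from $\phi''$, tracking the factor $2$ from the diameter-squared versus distance-to-optimum in Tseng's corollary, and confirming that the proximal step in equation~(\ref{eqn:acc-gd})---which is a Euclidean-like projection in the RKHS norm onto the ball $\{\|h\|_{\mathscr{H}_\kappa} \leq B\}$---is computable and fits the assumptions of Corollary~1 of Tseng~(2009). No genuinely new analysis is required beyond the cited convergence result and the Hessian bound on $\phi(\cdot;\gamma)$, which is why the excerpt labels the conclusion as \emph{immediate}.
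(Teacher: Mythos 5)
Your proposal is correct and follows essentially the same route as the paper: the paper likewise obtains the lemma by combining the second-derivative bound $\phi''(z;\gamma)\leq \gamma/4$ (hence $\gamma/4$-smoothness of $\Lsphi$ over the RKHS ball) with Corollary~1 of Tseng's accelerated proximal gradient analysis, and states that the bound follows immediately. Your bookkeeping of the constants is fine — initializing at the origin gives $\|h_0-h^*\|_{\H}\leq B$, so Tseng's $2L\|h_0-h^*\|^2/(k+2)^2$ with $L\leq\gamma/4$ is at most $\gamma B^2/(2(k+2)^2)$, which is within the claimed bound.
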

We now turn to understanding the generalization error for the smooth convex loss.   There are many theoretical results giving upper bounds of the generalization error. However, a recent result ~\cite{srebro-2010-smoothness} has showed  that it is possible to obtain optimistic rates  for generalization bound of smooth convex loss (in the sense that smooth losses yield better generalization bounds when the problem is easier), which are more appealing than the generalization of simple Lipschitz continuous losses. The following  theorem from~\cite[Theorem 1]{srebro-2010-smoothness} bounds the generalization error for any solution $h \in \F$  when the learning has been performed by a smooth convex surrogate $\phi(\cdot)$. 
\begin{lemma} \label{lemma:generalization} With a probability $1 - \delta$, for any $\|h\|_{\H} \leq B$, we have
\begin{eqnarray*}
\Ldphi(h) - \Lsphi(h) & \leq & K_1\left(\frac{(B + \gamma B^2)t}{n} + \sqrt{\Lsphi(h)\frac{(B + \gamma B^2)t}{n}}\right)\\
\Ldphi(h) - \Lsphi(h) & \leq & K_2\left(\frac{(B + \gamma B^2)t}{n} + \sqrt{\Ldphi(h)\frac{(B + \gamma B^2)t}{n}}\right).
\end{eqnarray*}
where $t = \log(1/\delta) + \log^3n$ and $K_1$ and $K_2$ are  universal constants.
\end{lemma}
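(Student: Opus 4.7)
The plan is to combine Talagrand's contraction lemma with the self-bounding property of non-negative smooth losses (Lemma~\ref{lem:smooth-chap-3} in the excerpt) and a Bernstein-type concentration inequality for suprema of empirical processes, following the optimistic rates framework of~\cite{srebro-2010-smoothness}. The key observation is that although a crude application of Rademacher complexity bounds would yield only an $O(1/\sqrt{n})$ rate, the smoothness and non-negativity of $\phi(\cdot;\gamma)$ imply that the variance of the loss random variable is controlled by its mean; plugging this into a Bernstein bound delivers the optimistic rate whose $\sqrt{\Lsphi(h)/n}$ term vanishes whenever the empirical risk itself is small.

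Concretely, I would first record that $\phi(z;\gamma)$ has $|\phi'(z;\gamma)|\le 1$ and $\phi''(z;\gamma)\le \gamma/4$, so it is $1$-Lipschitz and $(\gamma/4)$-smooth on $\R$. For $h\in\F$ with $\|h\|_{\H}\le B$, the composition $z=yh(\x)$ lies in a bounded range of order $B$, and Lemma~\ref{lem:smooth-chap-3} gives $|\phi'(z;\gamma)|^2\le \gamma\,\phi(z;\gamma)$. Combining this self-bounding inequality with the linear growth of $\phi$ (so $\phi(z;\gamma)\le c(1+\gamma B)$ on the relevant range) yields $\phi(z;\gamma)^2\le c\,(B+\gamma B^2)\,\phi(z;\gamma)$ for a universal constant $c$, and hence the crucial variance control
\[
\mathrm{Var}[\phi(yh(\x);\gamma)]\;\le\; c\,(B+\gamma B^2)\,\Ldphi(h).
\]

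The next step is to bound the Rademacher complexity of the composed class $\G_B=\{(\x,y)\mapsto \phi(yh(\x);\gamma):\|h\|_{\H}\le B\}$. Talagrand's contraction lemma (using the Lipschitz constant $1$ of $\phi(\cdot;\gamma)$) reduces this to the Rademacher complexity of $\{h\in\H:\|h\|_{\H}\le B\}$, which is $O(B/\sqrt{n})$. Plugging this Rademacher bound together with the variance control into Bousquet's version of Talagrand's inequality for empirical processes produces, for each $h\in\F$,
\[
\Ldphi(h)-\Lsphi(h)\;\le\; K\sqrt{\tfrac{(B+\gamma B^2)\,\Ldphi(h)\,t}{n}}\;+\;K\,\tfrac{(B+\gamma B^2)\,t}{n},
\]
with $t=\log(1/\delta)+\log^3 n$; this is precisely the second claimed inequality. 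The first inequality follows by the symmetric argument in which empirical and population variance are exchanged, together with the elementary manipulation $\sqrt{\Ldphi(h)}\le \sqrt{\Lsphi(h)}+\sqrt{\Ldphi(h)-\Lsphi(h)}$ and AM/GM to absorb the $\sqrt{\Ldphi(h)-\Lsphi(h)}$ term back into the left-hand side.

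The main technical hurdle I anticipate is promoting the pointwise deviation bound at a fixed $h$ to a bound that is uniform over $\F$ while keeping the data-dependent radius $\Lsphi(h)$ or $\Ldphi(h)$: a naive union bound destroys the localization and reverts the rate to $O(1/\sqrt{n})$. I would handle this with the standard peeling/chaining device from local Rademacher complexity theory---slicing $\F$ into geometric shells according to $\Ldphi(h)\in[2^{-k-1},2^{-k}]$, applying Bousquet's inequality on each shell, and taking a logarithmic union bound. Each shell contributes its own localized Rademacher complexity, and the sum of logarithms over the $O(\log n)$ shells, together with the chaining factor needed to control the supremum inside a shell, is what produces the $\log^3 n$ factor appearing in $t$. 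Carrying out this localization cleanly, with the correct dependence on $B+\gamma B^2$ inherited from both the Lipschitz and smoothness constants, is the most delicate part of the argument.
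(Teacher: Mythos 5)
This lemma is not proved in the thesis at all: it is quoted verbatim from Theorem~1 of~\cite{srebro-2010-smoothness}, so the only fair comparison is with the proof in that reference. Your overall plan --- self-bounding property of non-negative smooth losses, localization via peeling, and a Bernstein/Talagrand-type concentration step --- is indeed the framework used there, and your variance computation and the identification of $B+\gamma B^2$ as the product of the norm bound and the smoothness parameter $\gamma/4$ are on target.

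There is, however, one genuine gap in the route you describe. Applying Talagrand's contraction with the Lipschitz constant $1$ of $\phi(\cdot;\gamma)$ gives only the \emph{global} Rademacher complexity $O(B/\sqrt{n})$ of the loss class; feeding that into Bousquet's inequality produces a deviation term of order $B/\sqrt{n}$ that does not shrink when $\Lsphi(h)$ is small, so the resulting bound is not optimistic --- the peeling you describe cannot repair this, because within each shell the Lipschitz contraction still returns the same non-localized $B/\sqrt{n}$. The ingredient that actually makes the argument work in~\cite{srebro-2010-smoothness} is a smoothness-based contraction inequality for \emph{local} subclasses: the Rademacher complexity of $\{\phi(yh(\x);\gamma): \|h\|\leq B,\ \E[\phi(yh(\x);\gamma)]\leq r\}$ is bounded by $O\bigl(\sqrt{\gamma r}\,\log^{3/2} n\,\mathfrak{R}_n(\F)\bigr)$, which decays with the shell radius $r$ precisely because of smoothness (via the self-bounding gradient bound), not Lipschitzness. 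The $\log^3 n$ in $t$ then arises from squaring this $\log^{3/2} n$ factor when solving the fixed-point equation for the local complexity, rather than from the union bound over the $O(\log n)$ shells as you suggest. With that substitution your sketch matches the actual proof.
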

The bound stated in this lemma is \textit{optimistic} in the sense that it reduces to $\tilde{O}(1/\sqrt{n})$  when the problem is difficult and be better when the problem is easier, approaching $\tilde{O}(1/n)$ for  linearly separable data, i.e., $\Ldphis = 0$ in the second inequality. These two  lemmas essentially enable us to transform a bound on the optimization error and generalization bound  into a bound on the convex excess risk. In particular, by combining Lemma~\ref{lemma:opt} with Lemma~\ref{lemma:generalization}, we have the following theorem that bounds the smooth convex excess risk $\Ex_{\phi}(\fh) = \Ldphi(\fh) - \Ldphi(h_{\lambda}^*)$ for the empirical convex risk minimizer.
\begin{theorem}\label{thm:combined-1}
Let $\fh$ be the solution output from updating rules in~(\ref{eqn:acc-gd})  after $k$ iterations. Then, with a probability at least $1 - \delta$, we have
\begin{eqnarray*}
\Ex_{\phi}(\fh)  \leq \frac{\gamma B^2}{(k+2)^2} + K\left(\frac{(B + \gamma B^2)t}{n} + \sqrt{\Ldphis\frac{(B + \gamma B^2)t}{n}} + \sqrt{\frac{\gamma B^2(B + \gamma B^2)t}{(k+2)^2 n}}\right)
\end{eqnarray*}
where $K$ is a universal constant, $t = \log(1/\delta) + \log^3n$,  and $\Ldphis = \min_{\|h\|_{\H} \leq B} \Ldphi(h)$.
\end{theorem}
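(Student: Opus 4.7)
The plan is to perform the standard three-way decomposition of the excess $\phi$-risk into an optimization term and two generalization terms, then invoke Lemma~\ref{lemma:opt} and both halves of Lemma~\ref{lemma:generalization} in the right places so that the optimistic $\sqrt{\Ldphis}$ factor is preserved and an additional cross term $\sqrt{\text{opt}\cdot\text{gen}}$ appears. Concretely, I would write
\[
\Ex_{\phi}(\fh) \;=\; \underbrace{\bigl[\Ldphi(\fh)-\Lsphi(\fh)\bigr]}_{T_1} \;+\; \underbrace{\bigl[\Lsphi(\fh)-\Lsphi(h_\gamma^*)\bigr]}_{T_2} \;+\; \underbrace{\bigl[\Lsphi(h_\gamma^*)-\Ldphi(h_\gamma^*)\bigr]}_{T_3}.
\]
Since $h_\gamma^*$ is feasible for the empirical problem, the accelerated gradient guarantee of Lemma~\ref{lemma:opt} immediately gives $T_2 \le \gamma B^2/(k+2)^2$.

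For $T_3$, the comparator $h_\gamma^*$ is data-independent, so a one-sided Bernstein bound (equivalently, applying Lemma~\ref{lemma:generalization} to the single function $h_\gamma^*$) yields, with probability at least $1-\delta/2$,
\[
T_3 \;\le\; K\left(\frac{(B+\gamma B^2)t}{n} + \sqrt{\Ldphis\,\frac{(B+\gamma B^2)t}{n}}\right),
\]
which already produces the $\sqrt{\Ldphis\cdot(\cdot)}$ term in the theorem. For $T_1$, I would apply the first inequality of Lemma~\ref{lemma:generalization} (the one stated in terms of $\Lsphi$) uniformly over $\{h\in\H:\|h\|_\H\le B\}$:
\[
T_1 \;\le\; K\left(\frac{(B+\gamma B^2)t}{n} + \sqrt{\Lsphi(\fh)\,\frac{(B+\gamma B^2)t}{n}}\right).
\]
The crucial point is that this version bounds the deviation in terms of the \emph{empirical} loss of the output, which we can control directly via $T_2$ and $T_3$.

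The main bookkeeping step, which I expect to be the only slightly delicate part, is to convert $\sqrt{\Lsphi(\fh)}$ into $\sqrt{\Ldphis}$ plus a cross term. Using $\Lsphi(\fh)\le \Lsphi(h_\gamma^*)+\gamma B^2/(k+2)^2$ from the optimization bound, together with $\Lsphi(h_\gamma^*)\le \Ldphis+T_3$ (where $T_3$ itself is already $O(1/\sqrt n)$ small), and the elementary inequality $\sqrt{a+b+c}\le\sqrt a+\sqrt b+\sqrt c$, I obtain
\[
\sqrt{\Lsphi(\fh)\,\frac{(B+\gamma B^2)t}{n}} \;\le\; \sqrt{\Ldphis\,\frac{(B+\gamma B^2)t}{n}} \;+\; \sqrt{\frac{\gamma B^2\,(B+\gamma B^2)t}{(k+2)^2\,n}} \;+\; \text{lower-order terms}.
\]
The lower-order terms (coming from the $T_3$ contribution inside the square root) can be absorbed into the $(B+\gamma B^2)t/n$ and $\sqrt{\Ldphis(B+\gamma B^2)t/n}$ terms after applying $\sqrt{ab}\le (a+b)/2$, at the cost of enlarging the universal constant $K$. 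Summing $T_1+T_2+T_3$ under a union bound (replacing $\delta$ by $\delta/2$ in each application) then gives precisely the stated inequality, completing the proof.
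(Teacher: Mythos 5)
Your proposal is correct and follows essentially the same route as the paper: the paper likewise chains the first inequality of Lemma~\ref{lemma:generalization} applied to $\fh$ (in terms of $\Lsphi(\fh)$), the accelerated-gradient bound $\Lsphi(\fh)\le \Lsphi(h_\gamma^*)+\gamma B^2/(k+2)^2$ with $\sqrt{a+b}\le\sqrt a+\sqrt b$ to produce the cross term, and a one-sided Bernstein inequality for the fixed comparator $h_\gamma^*$ to convert $\Lsphi(h_\gamma^*)$ into $\Ldphis$ plus lower-order terms absorbed into the universal constant. Your additive three-term decomposition is just a rewriting of the paper's chained inequalities, so there is no substantive difference.
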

Since our overall interest is to understand how the smoothing parameter $\gamma$ affects the convergence rate of excess risk in terms of $n$, the number of training examples, it is better to parametrize both the number of iterations $k$ and smoothing parameter $\gamma$ in $n$, and bound the $\Ex_{\phi}(\fh)$ only in terms of $n$. This is given in the following corollary.
\begin{corollary} \label{cor:1}
Assume $\gamma \geq 1$ and $B \geq 1$. Paramertize $k$ and $\gamma$ in terms of $n$ as $k+2 = n^{\alpha/2}$ and $\gamma = n^{\beta}$. Then, with a probability at least $1 - \delta$,
\begin{eqnarray}
\Ex_{\phi}(\fh) \leq C(B, t)\left(n^{\beta - \alpha} + n^{\beta - 1} + n^{\beta - (\alpha + 1)/2} + [\Ldphis]^{1/2}n^{(\beta - 1)/2} \right) \label{eqn:bound-in-n}
\end{eqnarray}
where $C(B, t)$ is a constant depending on both $B$ and $t$ with $t = \log(1/\delta) + \log^3n$.
\end{corollary}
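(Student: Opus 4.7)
The proof will follow by direct substitution of the parametrizations $k+2 = n^{\alpha/2}$ and $\gamma = n^{\beta}$ into the bound of Theorem~\ref{thm:combined-1}, followed by a mild simplification that uses the assumptions $\gamma \geq 1$ and $B \geq 1$ to absorb lower-order terms. No new probabilistic argument is needed, since the high-probability event from Theorem~\ref{thm:combined-1} already contains Corollary~\ref{cor:1} as a deterministic consequence.

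First, I would rewrite each of the four terms on the right-hand side of Theorem~\ref{thm:combined-1} under the chosen parametrization. The optimization-error term becomes
\[
\frac{\gamma B^2}{(k+2)^2} \;=\; B^{2}\, n^{\beta-\alpha}.
\]
For the remaining three terms, I would use $\gamma\geq 1$ and $B\geq 1$ to get $B+\gamma B^{2}\leq 2\gamma B^{2}$, which is the key inequality that makes the dependence on $\gamma$ transparent. This yields
\[
\frac{(B+\gamma B^{2})t}{n} \;\leq\; 2B^{2} t\, n^{\beta-1},
\qquad
\sqrt{\Ldphis\frac{(B+\gamma B^{2})t}{n}}\;\leq\; B\sqrt{2t}\,[\Ldphis]^{1/2} n^{(\beta-1)/2},
\]
and for the cross-term
\[
\sqrt{\frac{\gamma B^{2}(B+\gamma B^{2})t}{(k+2)^{2}n}} \;\leq\; B^{2}\sqrt{2t}\,\sqrt{\frac{\gamma^{2}}{(k+2)^{2}n}} \;=\; B^{2}\sqrt{2t}\, n^{\beta-(\alpha+1)/2}.
\]

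Summing the four contributions and defining $C(B,t)$ to absorb the universal constant $K$ together with the polynomial factors in $B$ and $t$ (e.g.\ $C(B,t) = \max\{B^{2}, 2KB^{2}t, KB^{2}\sqrt{2t}, KB\sqrt{2t}\}$) yields the stated bound
\[
\Ex_{\phi}(\fh)\;\leq\; C(B,t)\bigl(n^{\beta-\alpha} + n^{\beta-1} + n^{\beta-(\alpha+1)/2} + [\Ldphis]^{1/2} n^{(\beta-1)/2}\bigr),
\]
which holds on the same high-probability event provided by Theorem~\ref{thm:combined-1}.

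There is no real obstacle here; the only subtlety is the bookkeeping step $B+\gamma B^{2}\leq 2\gamma B^{2}$, which is the cleanest way to strip out the joint $(B,\gamma)$ dependence so that the four terms align with the four monomials in $n$ stated in the corollary. The corollary is essentially a restatement of Theorem~\ref{thm:combined-1} in a form that makes the tradeoffs between the optimization horizon ($\alpha$) and the smoothing level ($\beta$) visible, and will be used in the next step of the analysis together with Theorem~\ref{thm:bound-binary-excess-risk} to optimize $\alpha$ and $\beta$ for the binary excess risk.
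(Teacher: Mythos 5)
Your proof is correct and matches the paper's (omitted, but clearly intended) argument: Corollary~\ref{cor:1} is obtained by direct substitution of $k+2=n^{\alpha/2}$ and $\gamma=n^{\beta}$ into Theorem~\ref{thm:combined-1}, with the bound $B+\gamma B^2\leq 2\gamma B^2$ (valid since $B\geq 1$, $\gamma\geq 1$) used to collapse each term to a single monomial in $n$ and all $B,t,K$ factors absorbed into $C(B,t)$. The exponent bookkeeping for all four terms checks out, and the probability statement carries over unchanged from the theorem.
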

The bound given in (\ref{eqn:bound-in-n}) depends on $\Ldphis$. We would like to further characterize $\Ldphis$ in terms of $\gamma$. First, we have
\begin{eqnarray*}
\phi(z;\gamma) & = & \max\limits_{\alpha \in [0, 1]} \max(0, 1 - z) + \frac{1}{\gamma}\mathscr{R}(\alpha) \\
& \leq & \max\limits_{\alpha \in [0, 1]} \max(0, 1 - z) + \frac{1}{\gamma}\log 2 =  \phi_{\text{Hinge}}(z) + \frac{\log 2}{\gamma},
\end{eqnarray*}
where $\phi_{\text{Hinge}}(z) = \max(0, 1 - z)$ is the hinge loss. As a result, we have
\[
\Ldphis \leq \Ldhinges + \frac{\log 2}{\gamma}
\]
where  $\Ldhinges = \min\limits_{\|h\|_{\H} \leq B} \E_{(\x,y)\sim\Pxy}\left[\phi_{\text{Hinge}}(yh(\x)) \right]$
is the optimal risk with respect to the hinge loss.  In general, we will assume
\begin{eqnarray}
\Ldphis \leq \Ldhinges + \frac{a}{\gamma^{1 + \xi}} \label{eqn:delta}
\end{eqnarray}
where $a > 0$ is a constant and $\xi \geq 0$ characterizes how  fast $\Ldphis$ will converge to $\Ldhinges$ with increasing $\gamma$. To see why the assumption in (\ref{eqn:delta}) is sensible, consider the case when the optimal classifier $h_{\text{Hinge}}^*  = \arg\min_{\|h\|_{\H} \leq B} \Ldhinge(h)$ can perfectly classify all the data points with margin $\epsilon$, in which we have
\[
\Ldphis \leq \Ldhinges + O\left(\frac{e^{-\epsilon\gamma}}{\gamma}\right)
\]
which satisfy the condition in (\ref{eqn:delta}) with arbitrarily large $\xi$. It is easy to verify that the condition (\ref{eqn:delta}) holds with $\xi > 0$ if $h_{\text{Hinge}}^*$ can perfectly classify $O(1 - \gamma^{-1-\xi})$ percentage of data with margin $\epsilon$.

Using the assumption in (\ref{eqn:delta}), we have the following result that characterizes the smooth convex excess risk bound $\Ex_{\phi}(\fh)$ stated in terms of the  parameters $\alpha$, $\delta$ and $\Ldhinges$.
\begin{theorem} \label{thm:bound-convex-excess-risk}
Assume $\alpha \geq 1/2$. Set $\beta$ as
\[
\beta = \frac{\min(1/2, \alpha - 1/2)}{1 + \xi}.
\]
With a probability $1 - \delta$, we have
\[
\Ex_{\phi}(\fh) \leq O(n^{-\tau_1} + [\Ldhinges]^{1/2}n^{-\tau_2})
\]
where
\[
\tau_1 = \frac{1 + 2\xi\min(1, \alpha)}{2(1 + \xi)}, \quad \tau_2 = \frac{1/2 + \xi}{2( 1 + \xi)}
\]
\end{theorem}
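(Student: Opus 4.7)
The plan is to start from Corollary~\ref{cor:1}, insert the approximation-error assumption (\ref{eqn:delta}) into the $[\Ldphis]^{1/2}$ factor, and then optimize $\beta$. With $k+2 = n^{\alpha/2}$ and $\gamma = n^\beta$ already in place, Corollary~\ref{cor:1} reads
\[
\Ex_\phi(\fh) \;\leq\; O\!\left( n^{\beta - \alpha} + n^{\beta - 1} + n^{\beta - (\alpha+1)/2} + [\Ldphis]^{1/2}\, n^{(\beta - 1)/2} \right),
\]
with the constants depending polynomially on $B$ and on $t = \log(1/\delta) + \log^3 n$.

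Next I would peel $\Ldhinges$ out of the last term. Invoking (\ref{eqn:delta}) and sub-additivity of the square root,
\[
[\Ldphis]^{1/2}\, n^{(\beta-1)/2} \;\leq\; [\Ldhinges]^{1/2}\, n^{(\beta-1)/2} \;+\; \sqrt{a}\, n^{-(1+\beta\xi)/2}.
\]
For both prescribed choices of $\beta$ a direct check shows $(1-\beta)/2 \geq (1/2 + \xi)/(2(1+\xi)) = \tau_2$, so the first summand is already bounded by $[\Ldhinges]^{1/2}\, n^{-\tau_2}$, which is the second term of the claim. All remaining contributions---the three original $\Ldhinges$-free terms together with the new $n^{-(1+\beta\xi)/2}$---are pure powers of $n$ that must be collected into the single rate $n^{-\tau_1}$.

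The remainder is a case analysis driven by the two branches of $\min(1/2, \alpha - 1/2)$. For $\alpha \in [1/2, 1]$ I would take $\beta = (\alpha - 1/2)/(1+\xi)$: a direct computation gives $\beta - \alpha = -(1 + 2\alpha\xi)/(2(1+\xi)) = -\tau_1$ with $\min(1,\alpha) = \alpha$, and the exponents $\beta - 1$, $\beta - (\alpha+1)/2$, and $-(1+\beta\xi)/2$ are checked one by one to be at most $-\tau_1$ using $\alpha \leq 1$. For $\alpha \geq 1$ I would take $\beta = 1/(2(1+\xi))$, so that $\beta - 1 = -(1+2\xi)/(2(1+\xi)) = -\tau_1$ (now matching $\min(1,\alpha) = 1$), and the same four comparisons are repeated with $\alpha \geq 1$ playing the new role. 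In each branch the choice of $\beta$ is precisely what picks out the tight exponent from among the candidates.

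The main technical obstacle will be controlling the fourth exponent $-(1+\beta\xi)/2$ created by splitting $[\Ldphis]^{1/2}$ against the target rate $-\tau_1$; this is the tightest of the four comparisons and is the reason for the factor $(1+\xi)$ appearing in the denominator of the prescribed $\beta$, since the $(1+\xi)$ exponent in (\ref{eqn:delta}) is exactly what is needed for $(1+\beta\xi)/2$ to dominate $\tau_1$ at this scaling. Once the four comparisons are verified in both regimes of $\alpha$, combining Corollary~\ref{cor:1} (which already holds with probability $1 - \delta$) with these deterministic envelope inequalities yields the claimed high-probability bound.
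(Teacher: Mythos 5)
Your overall route is the one the paper itself takes: substitute the approximation assumption (\ref{eqn:delta}) into the last term of Corollary~\ref{cor:1}, split on the two branches of $\min(1/2,\alpha-1/2)$, and compare exponents term by term. The peeling of $[\Ldhinges]^{1/2}$ and your three $\Ldhinges$-free comparisons are correct. The problem is exactly the step you single out as the main obstacle and then assert works: the fourth comparison fails at the prescribed $\beta$. Your exponent $-(1+\beta\xi)/2$ is indeed what subadditivity of the square root delivers, since $\sqrt{a\,\gamma^{-(1+\xi)}}\cdot n^{(\beta-1)/2}=\sqrt{a}\,n^{-\beta(1+\xi)/2+(\beta-1)/2}=\sqrt{a}\,n^{-(1+\beta\xi)/2}$. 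But take $\alpha\geq 1$, so $\beta=\tfrac{1}{2(1+\xi)}$ and $\tau_1=\tfrac{1+2\xi}{2(1+\xi)}=\tfrac{2+4\xi}{4(1+\xi)}$; then
\[
\frac{1+\beta\xi}{2}\;=\;\frac{1}{2}+\frac{\xi}{4(1+\xi)}\;=\;\frac{2+3\xi}{4(1+\xi)}\;<\;\frac{2+4\xi}{4(1+\xi)}\;=\;\tau_1
\]
for every $\xi>0$, so $n^{-(1+\beta\xi)/2}$ is \emph{not} $O(n^{-\tau_1})$. The same strict failure occurs for $\alpha\in(1/2,1]$ (there the comparison reduces to $1+\xi/2+\xi\alpha$ versus $1+2\xi\alpha$, and the former is smaller precisely when $\alpha>1/2$), i.e.\ exactly in the regime where the theorem claims a rate faster than $n^{-1/2}$. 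So the assertion that the $(1+\xi)$ in the denominator of $\beta$ is ``exactly what is needed'' for this term to dominate $\tau_1$ is the unproved---and, as stated, false---step.

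The reason the paper's own argument closes is that it records this extra term with exponent $-1/2-\xi\beta$ rather than your $-1/2-\xi\beta/2$; balancing $\beta-1=-1/2-\xi\beta$ is what produces $\beta=\tfrac{1/2}{1+\xi}$ and the claimed $\tau_1$. Your computation shows the square-root split does not yield that stronger exponent. To repair the proposal you would either need a genuinely different bound on $[\Ldphis]^{1/2}n^{(\beta-1)/2}$ that produces $n^{-1/2-\xi\beta}$ from (\ref{eqn:delta}) as stated, or you must balance against the exponent you actually have, $\beta-1=-(1+\beta\xi)/2$, which gives $\beta=\tfrac{1}{2+\xi}$ and only the weaker rate $n^{-(1+\xi)/(2+\xi)}$ for $\alpha\geq1$. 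Either way, the decisive inequality in your write-up is left unverified, and verifying it is where the argument breaks.
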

\begin{proof}
Replacing $\Ldphis$ in Corollary~\ref{cor:1} with the expression in (\ref{eqn:delta}), we have, with a probability $1 - \delta$,
\[
\Ex_{\phi}(\fh) \leq C(R, t, a)\left(n^{\beta - \alpha} + n^{\beta - 1} + n^{\beta - (\alpha + 1)/2} + [\Ldhinges]^{1/2}n^{(\beta - 1)/2} + n^{-1/2-\xi\beta}\right)
\]
We first consider the case when $\alpha > 1$. In this case, we have
\[
\Ex_{\phi}(\fh) \leq O\left(n^{\beta - 1} + n^{-1/2 - \xi\beta} + [\Ldhinges]^{1/2}n^{(\beta - 1)/2}\right)
\]
By choosing $\beta - 1 = -1/2 - \xi \beta$, we have $\beta = \frac{1/2}{1 + \xi}$ and
\[
\Ex_{\phi}(\fh) \leq O(n^{-(1/2+\xi)/(1+\xi)} + [\Ldhinges]^{1/2}n^{-(1/2+\xi)/[2(1+\xi)]}
\]
In the second case, we have $\alpha \in [1/2, 1]$. Hence we  have
\[
\Ex_{\phi}(\fh) \leq O\left(n^{\beta - \alpha} + [\Ldhinges]^{1/2}n^{(\beta - 1)/2} + n^{-1/2-\xi\beta}\right)
\]
By setting $\beta - \alpha = -1/2 - \xi\beta$, we have $\beta = \frac{\alpha - 1/2}{1 + \xi}$ and
\[
\Ex_{\phi}(\fh) \leq O\left(n^{-\frac{\xi\alpha + 1/2}{1 + \xi}} + [\Ldhinges]^{1/2}n^{-(1/2 + \xi)/[2(1+\xi)]}\right).
\]
We complete the proof by combining the results for the two cases.
\end{proof}
%----------------------------------------------------------------------------------------------------------------------------
\subsection{Bounding Binary Excess Risk $\Ex(h)$}
We now combine the results from Theorem~\ref{thm:bound-binary-excess-risk} and Corollary~\ref{cor:1} to bound $\Ex(h)$.
\begin{theorem} \label{thm:combined}
Assume $\alpha \geq 1/2$. For a failure probability $\delta \in (0,1)$, define $n_0$ as
\[
n_0 \leq K_3(B, \delta)\left(\frac{1}{\Ldhinges}\right)^{1/(2\tau_1 - 2\tau_2)}
\]
where $K_3(B, \delta)$ is a constant depending on $B$ and $\delta$, and $\tau_1$ and $\tau_2$ are defined in Theorem~\ref{thm:bound-convex-excess-risk}. Set $\beta$ as that in Theorem~\ref{thm:bound-convex-excess-risk} if $n \leq n_0$ and $0$, otherwise. Then, with a probability $1 - \delta$, we have
\[
\Ex_{\phi} \leq \left\{
\begin{array}{lc}
K_4(B, \delta) n^{-\tau_1}\log n & n \leq n_0 \\
K_5(B, \delta) n^{-1/2}\log n     & n > n_0
\end{array}
\right.
\]
where $K_4(B, \delta)$ and $K_5(B, \delta)$ are constants depending on $B$ and $\delta$. 
\end{theorem}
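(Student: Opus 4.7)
The plan is to combine the convex excess risk bound established in Theorem~\ref{thm:bound-convex-excess-risk} with the translation inequality from Theorem~\ref{thm:bound-binary-excess-risk}, while selecting the smoothing parameter $\beta$ (equivalently $\gamma=n^{\beta}$) according to which of the two terms $n^{-\tau_1}$ and $[\Ldhinges]^{1/2}n^{-\tau_2}$ dominates the bound on $\Ex_{\phi}(\fh)$. Since $\tau_1>\tau_2$ whenever $\alpha\geq 1/2$, the crossover between the two terms occurs precisely at a sample size of order $[\Ldhinges]^{-1/(2(\tau_1-\tau_2))}$, which is the quantity used to define $n_0$.

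For $n\leq n_0$, I would keep $\beta$ as in Theorem~\ref{thm:bound-convex-excess-risk}. By the choice of $n_0$, the inequality $n^{\tau_1-\tau_2}\leq[\Ldhinges]^{-1/2}$ forces $n^{-\tau_1}$ to dominate $[\Ldhinges]^{1/2}n^{-\tau_2}$, so with probability $1-\delta$ we have $\Ex_{\phi}(\fh)\leq O(n^{-\tau_1})$. Plugging this into Theorem~\ref{thm:bound-binary-excess-risk} gives
\[
\Ex(\fh)\;\leq\;\Ex_{\phi}(\fh)+\frac{\Ex_{\phi}(\fh)}{1+\gamma\Ex_{\phi}(\fh)}\log\frac{1}{\Ex_{\phi}(\fh)}\;\leq\;O\!\left(n^{-\tau_1}\right)+O\!\left(n^{-\tau_1}\log n\right),
\]
which is exactly the bound $O(n^{-\tau_1}\log n)$ claimed in the first regime. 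For $n>n_0$, I would instead take $\beta=0$, so $\gamma=1$ is constant and assumption~(\ref{eqn:delta}) forces $\Ldphis$ to be bounded. Plugging $\gamma=1$ into Corollary~\ref{cor:1} shows that the dominant contribution is the slow $[\Ldphis]^{1/2}n^{-1/2}$ term, whence $\Ex_{\phi}(\fh)\leq O(n^{-1/2})$. A second application of Theorem~\ref{thm:bound-binary-excess-risk} then yields $\Ex(\fh)\leq O(n^{-1/2}\log n)$.

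The main subtlety, and where I expect the bulk of the careful work to lie, is in controlling the translation factor $\Ex_{\phi}(\fh)/(1+\gamma\Ex_{\phi}(\fh))\cdot\log(1/\Ex_{\phi}(\fh))$. If $\gamma\Ex_{\phi}(\fh)$ were to grow unboundedly, this factor would only give $\gamma^{-1}\log n=n^{-\beta}\log n$, which could be much worse than $n^{-\tau_1}\log n$. The key algebraic check is therefore that the $\beta$ chosen in Theorem~\ref{thm:bound-convex-excess-risk} satisfies $\beta\leq\tau_1$, which makes $\gamma\Ex_{\phi}(\fh)=O(n^{\beta-\tau_1})$ bounded so that the $1/(1+\gamma\Ex_{\phi}(\fh))$ prefactor is $\Theta(1)$. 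A direct case analysis, distinguishing $\alpha\in[1/2,1]$ (where $\tau_1-\beta=(1+(\xi-1)\alpha)/(1+\xi)\geq 0$) from $\alpha>1$ (where $\tau_1-\beta=\xi/(1+\xi)\geq 0$), verifies this inequality. Finally, the replacement of $\log(1/\Ex_{\phi}(\fh))$ by $O(\log n)$ is legitimate since with high probability $\Ex_{\phi}(\fh)\geq n^{-c}$ for a constant $c$; otherwise the desired bound holds trivially. Absorbing universal constants, $B$, and $\delta$-dependent factors into $K_4(B,\delta)$ and $K_5(B,\delta)$ yields the stated conclusion.
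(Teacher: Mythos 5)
Your proposal is correct and follows essentially the same route as the paper's (one-line) proof: combine the convex excess risk bound of Theorem~\ref{thm:bound-convex-excess-risk} (or Corollary~\ref{cor:1} with $\beta=0$ when $n>n_0$) with the translation bound of Theorem~\ref{thm:bound-binary-excess-risk}, observing that $n_0$ is precisely the crossover point between the terms $n^{-\tau_1}$ and $[\Ldhinges]^{1/2}n^{-\tau_2}$, and that $x\mapsto x\log(1/x)$ is increasing near zero so $\log(1/\Ex_{\phi}(\fh))$ may be replaced by $O(\log n)$. One small correction: your worry about the prefactor is inverted --- since $\Ex_{\phi}(\fh)/(1+\gamma\Ex_{\phi}(\fh))\le\min\bigl(\Ex_{\phi}(\fh),\gamma^{-1}\bigr)\le\Ex_{\phi}(\fh)$ always holds, a large $\gamma\Ex_{\phi}(\fh)$ can only shrink the translation term, so the check $\beta\le\tau_1$ is superfluous (though true) for the upper bound.
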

Theorem~\ref{thm:combined} follows from Theorem~\ref{thm:bound-binary-excess-risk} and similar analysis for Theorem~\ref{thm:bound-convex-excess-risk}, from which we have
\begin{eqnarray*}
\Ex(\fh) = \Ld(\fh) - \Lds \leq  O\left(\min\left(\gamma^{-1}, \Ldphi(\fh) - \Ldphis\right)\log n\right)
\end{eqnarray*}

\begin{remark}
According to Theorem~\ref{thm:combined}, when the number of training examples $n$ is not too large, for the binary excess risk of empirical minimizer we have, with a high probability,
\[
\Ex(\fh) \leq O(n^{-\tau_1}\log n).
\]
In the case when $\xi > 0$ and $\alpha > 1/2$ (i.e. when the number of optimization iterations is larger than $\sqrt{n}$ and $\Ldphis$ converges to $\Ldhinges$ faster than $1/\gamma$), we have $\tau_1 > 1/2$, implying that using a smooth convex loss will lead to a generalization error bound better than $O(n^{-1/2})$ when the number of training examples is limited.  This implies that  for smooth loss function to  achieve a binary excess error to the extent which is achievable by corresponding non-smooth loss we can run the first order optimization method for a less number of  iterations. This is because our result examines the binary excess risk by taking into account the optimization complexity. 

We also note $1/(2\tau_1 - 2\tau_2)$ is given by
\[
\frac{1}{2\tau_1 - 2\tau_2} = \frac{1 + \xi}{1/2 + \xi\min(1, 2\alpha - 1)}
\]
When $\alpha \leq 3/4$, we have $n_0 \geq K_3(B, \delta) [\Ldhinges]^{-2}$, which could be a large number when $\Ldhinges$ is very small.
\end{remark}

\section{Proofs of Statistical Consistency}\label{sec-proofs-hinge}

\subsection{Proof of Theorem~\ref{thm:psi}}\label{app:a}
We first compute
\[
z = \mathop{\arg\min}\limits_{z'} \frac{1 + \eta}{2}\phi(z'; \gamma) + \frac{1 - \eta}{2}\phi(-z'; \gamma)
\]
By setting the derivative to be zero, we have
\begin{eqnarray*}
\frac{1+\eta}{1 + \exp(-\gamma (1 - z))} = \frac{1 - \eta}{1 + \exp(-\gamma(1 + z))}
\end{eqnarray*}
and therefore
\[
(1 + \eta)\exp(-\gamma z) - (1 - \eta)\exp(\gamma z) + 2\eta\exp(\gamma) = 0.
\]
Solving the equation, we obtain
\[
\exp(-\gamma z) = \frac{-\eta \exp(\gamma) + \sqrt{\eta^2\exp(2\gamma) + (1- \eta^2)}}{1 + \eta}
\]
and
\[
\exp(\gamma z) = \frac{\eta \exp(\gamma) + \sqrt{\eta^2\exp(2\gamma) + (1- \eta^2)}}{1 - \eta}.
\]
It is easy to verify that $\sgn(z) = \sgn(\eta)$. This is because if $\eta > 0$, we have
\[
\exp(-\gamma z) \leq \frac{\sqrt{1 - \eta^2}}{1 + \eta} = \sqrt{\frac{1 - \eta}{1 + \eta}} < 1
\]
and therefore $z > 0$. On the other hand, when $\eta < 0$, we have
\[
\exp(\gamma z) = \frac{1 + \eta}{-\eta \exp(\gamma) + \sqrt{\eta^2\exp(2\gamma) + (1- \eta^2)}} \leq \sqrt{\frac{1 +\eta}{1 - \eta}} < 1,
\]
and therefore $z < 0$. Using the solution for $z$, we compute $\phi(\eta)$ as
\begin{eqnarray*}
\psi(\eta;\gamma) & = & \frac{1 + \eta}{2}\phi(z;\gamma) + \frac{1 - \eta}{2}\phi(z;\gamma) - \min\limits_z \frac{1 + \eta}{2}\phi(z;\gamma) + \frac{1 - \eta}{2}\phi(z;\gamma) \\
& = & -\frac{1 +\eta}{2\gamma} \log \frac{1 + \exp(\gamma(1 - z))}{1 + \exp(\gamma)} - \frac{1 - \eta}{2\gamma} \log\frac{1 + \exp(\gamma(1 + z))}{1 + \exp(\gamma)}.
\end{eqnarray*}
By defining  constants $C_1= -\eta e^{\gamma} + \sqrt{\eta^2e^{2\gamma} + 1 - \eta^2}$ and  $C_2 = \eta e^{\gamma} + \sqrt{\eta^2e^{2\gamma} + 1 - \eta^2}$, we can rewrite the transform function $\psi(\eta;\gamma)$ as
\[
\psi(\eta; \gamma) = -\frac{1 + \eta}{2\gamma}\log\left(\frac{1}{1 + e^{\gamma}}\left[1 + e^{\gamma} \frac{C_1}{1 + \eta}\right]\right) -\frac{1 - \eta}{2\gamma}\log\left(\frac{1}{1 + e^{\gamma}}\left[1 + e^{\gamma} \frac{C_2}{1 - \eta}\right]\right).
\]
%----------------------------------------------------------------------------------------------------------------------------
%
%----------------------------------------------------------------------------------------------------------------------------
\subsection{Proof of Theorem~\ref{thm:psi-simple}}
Since the expression for $\psi(\eta;\gamma)$ is symmetric in terms $\eta$, we will only consider the case when $\eta > 0$. First, we have
\[
\frac{C_1e^{\gamma}}{1 + \eta} = \frac{1 - \eta}{\eta + \sqrt{\eta^2 + (1 - \eta^2)e^{-2\gamma}}} \leq \frac{1 - \eta}{2\eta}.
\]
Similarly, we have
\[
\frac{C_2 e^{\gamma}}{1 - \eta} = \frac{e^{\gamma}}{1 - \eta}\left(\eta e^{\gamma} + \sqrt{\eta^2e^{2\gamma} + 1 - \eta^2}\right) \leq \frac{1+\eta}{1 - \eta} e^{2\gamma}
\]
Thus, we have
\begin{eqnarray*}
\psi(\eta; \gamma) & \geq & \frac{1+\eta}{2\gamma}\log(1 + e^{\gamma}) - \frac{1 + \eta}{2\gamma}\log\left( \frac{1-\eta}{2\eta}\right) - \frac{1 - \eta}{2\gamma}\log\left(\frac{1+\eta}{1 - \eta} e^{\gamma}\right) \\
& \geq & \eta - \frac{1 + \eta}{2\gamma}\log\left( \frac{1-\eta}{2\eta}\right)-\frac{1 - \eta}{2\gamma}\log\left(\frac{1+\eta}{1 - \eta} \right) \\
& \geq & \eta - \frac{1}{\gamma}\log\left(\frac{1 - \eta^2}{4\eta} + \frac{1 + \eta}{2} \right)  =  \eta - \frac{1}{\gamma}\log\left(\frac{1}{4\eta} + \frac{\eta}{4} + \frac{1}{2}\right)
\end{eqnarray*}
where the last inequality follows from the concaveness of $\log(\cdot)$ function. As a result when $\eta \in (-1, 1)$ we have
\[
\frac{1}{4\eta} + \frac{\eta}{4} + \frac{1}{2} \leq \frac{1}{\eta},
\]
which completes the proof.

%----------------------------------------------------------------------------------------------------------------------------
%
%%----------------------------------------------------------------------------------------------------------------------------
\subsection{Proof of Theorem~\ref{thm:combined-1}}
Applying  Lemmas~\ref{lemma:opt} and~\ref{lemma:generalization} to the solution to  the  empirical convex risk minimizer $\fh$, we have
\begin{eqnarray}
\lefteqn{\Ldphi(\fh) \leq \Lsphi(\fh) + K_1\left(\frac{(B + \gamma B^2)t}{n} + \sqrt{\Lsphi(\fh)\frac{(B + \gamma B^2)t}{n}}\right)}  \label{eqn:bound-1} \\
& \leq & \Lsphi(h_{\gamma}^*) + \frac{\gamma B^2}{(k+2)^2} + K_1\left(\frac{(B + \gamma B^2)t}{n} + \sqrt{\Lsphi(h_{\gamma}^*)\frac{(B + \gamma B^2)t}{n}} + \sqrt{\frac{\gamma B^2(B + \gamma B^2)t}{(k+2)^2n}}\right) \nonumber
\end{eqnarray}
On the other hand, by the application of the Bernstein's inequality~\cite{boucheron2004concentration}, with probability at least $1-\delta$ we have
\begin{eqnarray}
\begin{aligned}
\Lsphi(h_{\gamma}^*) - \Ldphi(h_{\gamma}^*) &\leq \frac{4B\log \frac{1}{\delta}}{n} + \sqrt{ \frac{4 \E_{(\x,y)\sim\Pxy} \left[ \left(\phi(yh_{\gamma}^*(\x);\gamma) - \Ldphi(h_{\gamma}^*)\right)^2\right]\log\frac{1}{\delta}}{n}}{} \\
&\leq \frac{4B \log \frac{1}{\delta}}{n} + \sqrt{\frac{8B\Ldphi(h_{\gamma}^*) \log\frac{1}{\delta}}{n}}. \label{eqn:bound-2}
\end{aligned}
\end{eqnarray}
We conclude the proof by plugging in (\ref{eqn:bound-1}) with (\ref{eqn:bound-2}), replacing the constants  with a new universal constant $K$, and noting that $t = \log \frac{1}{\delta} + \log^3n$ .

%----------------------------------------------------------------------------------------------------------------------------
%
%----------------------------------------------------------------------------------------------------------------------------
\section{Summary}\label{sec:conclusion-hinge}
In this chapter we have investigated how the smoothness of loss function being used as the surrogate of 0-1 loss function in empirical risk minimization affects the excess binary risk.  While the relation between convex excess risk and binary excess risk being provably established previously under weakest possible condition such as differentiability, it was not immediately obvious how smoothness of convex surrogate  relates to  statistical consequences. This chapter made first step towards understanding this affect.   In particular, in contrast  to optimization and generalization analysis that favor smooth surrogate losses, our results revealed that smoothness  degrades the binary excess risk. To investigate guarantees on which the smoothness would be a desirable property, we proposed a unified analysis that integrates errors in optimization, generalization, and translating convex excess risk into binary excess risk. Our result shows that under favorable conditions and with appropriate choice of smoothness parameter, a smoothed hinge loss can achieve a binary excess risk that is better than $O(1/\sqrt{n})$.

\part{~Sequential Prediction/Online Learning}\label{part-online}
\chapter {Regret Bounded by Gradual Variation} \label{chap:regret}
\makeatletter
\def \f {\mathbf{f}}
\makeatother
\def \x {\mathbf{w}}
\def \P {\mathcal{P}}
\def \VAR {\text{Variation}}
\def \H {\mathbf{H}}
\def \V {\mathbf{V}}
\def \M {\mathbf{M}}

The focus so far in this thesis has been on statistical learning where we assumed that the learner is provided with a pool of i.i.d training examples according to a fixed and unknown distribution $\D$ from the instance space $\Xi = \X \times \Y$ and is asked to output a hypothesis $h \in \mathcal{H}$ which achieves a good generalization performance. 
This  statistical assumption permits the estimation of the generalization error and the uniform convergence theory  provides basic guarantees on the correctness of future predictions.

We turn now to the sequential prediction setting in which  no statistical assumption is made about the sequence of observations. In particular,  we consider the online convex optimization problem introduced in Chapter~\ref{chap-background} where the ultimate goal  is to devise efficient algorithms in adversarial environments with sub-linear  regret bounds in terms of the number of rounds the game proceeds. We have seen  a wide variety of algorithms such as Follow The Perturbed Leader (FTPL) for linear and combinatorial online learning problems, and a simple Online Gradient Descent (OGD), Follow The Regularized Leader (FTRL), and Online Mirror Descent (OMD)  algorithms for general  convex functions which attain  an $O(\sqrt{T})$ and $O(\log T)$ regret bounds for Lipschitz continuous and strongly convex functions, respectively.

Most previous works, including those discussed above, considered the most general setting in which the loss functions could be arbitrary and possibly chosen in an adversarial way. However, the environments around us may not always be fully adversarial, and the loss functions may have some patterns which can be exploited for achieving a smaller regret. For example, the weather condition or the stock price at one moment may have some correlation with the next and their difference is usually small, while abrupt changes only occur sporadically.  Consequently, it is objected that requiring an algorithm to have a small regret for all sequences leads to results that are too loose to be practically interesting,  and  the bounds obtained for worst case scenarios become pessimistic for these regular sequences. Recently, it has been shown that the regret of the FTRL algorithm for online \textit{linear} optimization can be bounded by the total variation of the cost vectors rather than the number of rounds.  This result is appealing for the scenarios where the sequence of loss functions have a pattern and are not fully adversarial.

In this chapter we extend this result to general online convex optimization  and introduce a new measure referred to as \textit{gradual variation} to capture the variation of consecutive convex functions. We  show that the total variation bound  is not necessarily small when the cost functions change slowly, and the gradual variation lower bounds the total variation.  To establish the main results, we discuss a lower bound on the performance of the FTRL that maintains only one sequence of solutions, and  a necessary condition on smoothness of the cost functions for obtaining a gradual variation bound.  We then present two novel algorithms, improved FTRL and Online Mirror Prox (OMP), that bound the regret by the gradual  variation of cost functions. Unlike previous approaches that maintain a single sequence of solutions, the proposed algorithms  maintain two sequences of solutions that makes it possible to achieve a gradual variation-based regret bound for online convex optimization.  We also extend the main results  two-fold: (i) we  present a general method to obtain  a gradual variation bound measured by general norms rather than the $\ell_2$ norm  and specialize it to three online learning settings, namely online linear optimization, prediction with expert advice, and online strictly convex optimization;  (ii) we develop a deterministic algorithm for online bandit optimization in multipoint bandit setting based on the proposed OMP algorithm.

%%%%%%%%%%%%%%%%%%%%%%%%%%%%%%%%%%%%%%%%%%%%%%%%

\section{Variational Regret Bounds}\label{sec:4:setup}
Recall that in online convex optimization problem, at each trial $t$, the learner is asked to predict the decision vector $\w_t$ that belongs to a bounded closed convex set $\W\subseteq\mathbb R^d$; it then receives a cost function $f_t:\W \rightarrow \R_{+}$ from a family of convex functions $\mathcal{F}$ and incurs a cost of $f_t(\w_t)$ for the submitted solution. The goal of online convex optimization is to come up with a sequence of solutions $\w_1, \ldots, \w_T \in \W$ that minimizes the regret, which is defined as the difference in the  cost of the sequence of decisions accumulated up to the trial $T$ made by the learner and the cost of the best fixed decision in hindsight, i.e.
\begin{equation}
\begin{aligned}
\regret_T = \sum_{t=1}^T f_t(\w_t) -\min_{\w \in \W}\sum_{t=1}^T f_t(\x).
\end{aligned}
\end{equation}
The goal of online convex optimization is to design algorithms that predict, with a small regret, the solution $\w_t$ at the $t$th trial given the (partial) knowledge about the past cost functions $f_1, f_2, \cdots, f_{t-1} \in \mathcal{F}$.

As already mentioned, generally most previous studies of online convex optimization bound the regret  in terms  of the number of trials $T$. In particular for general convex Lipschitz continuous  and strongly convex functions regret bounds of $O(\sqrt{T})$ and $O(\log T)$ have been established, respectively, which are known to minimax optimal.  However,  it is expected that the regret should be low in an unchanging environment or when the cost functions are somehow correlated.  Ideally, the tightest rate for the regret  should depend on the variance of the sequence of cost functions rather than the number of rounds $T$.  Consequently,  the bounds obtained for worst case scenarios in terms of number of iterations  become pessimistic for these regular sequences and  too loose to be practically interesting. Therefore, it is of great interest to derive a variation-based regret bound for online convex optimization in an adversarial setting. 

Recently~\cite{Hazan-2008-extract} made a substantial progress in this route and proved a variation-based regret bound for online \textit{linear} optimization by tight analysis of  FTRL algorithm with an appropriately chosen step size. A similar regret bound is shown in the same paper for prediction from expert advice by slightly modifying the multiplicative weights algorithm.  In this chapter, we  take one step further and contribute to this research direction by developing
algorithms for general framework of online \textit{convex} optimization with variation-based regret bounds.

When all the cost functions are linear, i.e., $f_t(\x)= \dd{\f_t}{\x}$, where $\f_t \in \R^d$ is the cost vector in trial $t$, online convex optimization becomes online linear optimization. Many decision problems can be cast into online linear optimization problem, such as prediction from expert advice~\cite{cesa1997use}, online shortest path problem~\cite{Takimoto:2003:PKM:945365.964295}. The first variation-based regret bound for online linear optimization problems in an adversarial setting has been shown in~\cite{Hazan-2008-extract}.  Hazan and Kale's  algorithm for online \textit{linear} optimization is based on the framework of FTRL. At each trial, the decision vector $\w_t$ is given by solving the following optimization problem:

\begin{equation}\label{eqn:3:ftrl}
\begin{aligned}
\w_t = \arg\min_{\w \in \W} \sum_{\tau=1}^{t-1}\dd{\f_\tau}{\w} +  \frac{1}{2\eta}\|\w\|_2^2,
\end{aligned}
\end{equation}
where $\f_t$ is the cost vector received at trial $t$ after predicting the decision $\w_t$, and $\eta$ is a step size.  They bound the regret by the variation of cost vectors defined as
\begin{equation}
    \VAR_T=\sum_{t=1}^T\|\f_t-\boldsymbol \mu\|_2^2,\label{eqn:var-linear}
\end{equation}
where $\boldsymbol \mu=1/T\sum_{t=1}^T \f_t$. By assuming $\|\f_t\|_2\leq 1, \forall t$ and setting the value of $\eta$ to $\eta=\min(2/\sqrt{\text{Variation}_T}, 1/6)$, they showed that the regret of FTRL can be bounded by
\begin{equation}
\begin{aligned}\label{eqn:boundvar}
\sum_{t=1}^T \dd{\f_t}{\w_t} -\min_{\w \in \W}\sum_{t=1}^T \dd{\f_t}{\w} \leq \left\{\begin{array}{lc} 15\sqrt{\text{Variation}_T}& \text{ if }\sqrt{\text{Variation}}_T\geq 12\\ 150& \text{ if } \sqrt{\text{Variation}}_T\leq 12\end{array}\right..
\end{aligned}
\end{equation}
From (\ref{eqn:boundvar}), we can see that when the variation of the cost vectors is small (less than $12$), the regret is a constant, otherwise it is bounded by the variation $O(\sqrt{\text{Variation}}_T)$. This result indicates that online linear optimization in the adversarial setting is as efficient as in the stationary stochastic setting. 
%---------------------------------------------------------------------------------------------------------
%
%---------------------------------------------------------------------------------------------------------
\section{Gradual Variation and Necessity of Smoothness}\label{sec:4:ana}

 Here we introduce a new measure to characterize the efficiency of online learning algorithms in evolving environments which is termed as  gradual variation. The motivation of defining gradual variation  stems from two observations: one is practical and the other one is technical raised by the limitation of extending the results in~\cite{Hazan-2008-extract} to general convex functions.   From practical point of view,  we are interested in a more general scenario, in which the environment may be evolving but in a somewhat gradual way. For example, the weather condition or the stock price at one moment may have some correlation with the next and their difference is usually small, while abrupt changes only occur sporadically.

 In order to understand the limitation of extending the results in~\cite{Hazan-2008-extract}, let us apply the results  to general convex loss functions. This is an important problem in its own as online convex optimization generalizes online linear optimization by replacing linear cost functions with non-linear convex cost functions and covers many other sequential decision making problems. For instance, it has found applications in  portfolio management~\cite{Agarwal:2006:APM:1143844.1143846} and online classification~\cite{onlinekernellearning}. In online portfolio management problem, an investor wants to distribute his wealth over a set of stocks without knowing the market output in advance. If we let $\mathbf{w}_t$ denote the distribution on the stocks and $\mathbf r_t$ denote the price relative vector, i.e., $r_t[i]$ denote the  the ratio of the closing price of stock $i$ on day $t$ to the closing price on day $t-1$,  then an interesting function is the logarithmic growth ratio, i.e. $\sum_{t=1}^T \log(\dd{\mathbf \w_t}{\mathbf r_t})$, which is a concave function  to be maximized.

\begin{algorithm}[t]
\center \caption{Linearalized Follow The Regularized Leader for OCO}
\begin{algorithmic}[1] \label{alg:11}
    \STATE {\textbf{Input}}: $\eta>0$
    \STATE {\textbf{Initialize}}: $\w_1=0$

    \FOR{$t = 1, \ldots, T$}
        \STATE  Predict $\w_t$ by $$\w_{t} =  \mathop{\arg\min}\limits_{\w \in \W}  \sum_{\tau=1}^{t-1} \dd{\f_{\tau}}{\w} +\frac{1}{2\eta}\|\w\|_2^2$$
        \STATE Receive cost function $f_t(\cdot)$ and incur  loss $f_t(\w_t)$
        \STATE Compute $\f_t = \nabla f_t(\w_t)$
    \ENDFOR
\end{algorithmic}
\end{algorithm}
%---------------------------------------------------------------------------------------------------------
Since the results in~\cite{Hazan-2008-extract}  were developed for linear loss functions,  a straightforward approach is to use the first order approximation for convex loss functions, i.e., $f_t(\w)\simeq f_t(\w_t) + \langle \nabla f_t(\x_t),\x-\x_t\rangle$, and replace the linear loss  vector with the gradient of the loss function $f_t(\x)$ at $\x_t$. The resulting algorithm is shown in Algorithm~\ref{alg:11}.  Using  the convexity of loss function $f_t(\x)$, we have
\begin{equation}
\begin{aligned}\label{eqn:boundvar2}
\sum_{t=1}^T f_t(\w_t) -\min_{\w \in \W}\sum_{t=1}^T f_t(\w) \leq \sum_{t=1}^T \dd{\f_t}{\w_t} -\min_{\w \in \W}\sum_{t=1}^T \dd{\f_t}{\w}.
\end{aligned}
\end{equation}
If we assume $\|\nabla f_t(\w)\|_2\leq 1, \forall t\in[T], \forall \w \in \W$, we can  apply Hazan and Kale's variation-based bound in~(\ref{eqn:boundvar}) to bound the regret in~(\ref{eqn:boundvar2}) by the variation of the cost functions as:
\begin{equation}
\begin{aligned}\label{eqn:var2}
{\VAR}_{T}=\sum_{t=1}^T\|\f_t -\boldsymbol \mu\|_2^2 = \sum_{t=1}^T\left \|\nabla f_t(\w_t) - \frac{1}{T}\sum_{\tau=1}^T \nabla f_\tau(\w_\tau)\right\|_2^2.
\end{aligned}
\end{equation}
To better understand $\VAR_T$ in~(\ref{eqn:var2}), we rewrite it as
\begin{equation}
\begin{aligned}\label{eqn:var}
\VAR_T & = \sum_{t=1}^T\left \|\nabla f_t(\w_t) - \frac{1}{T}\sum_{\tau=1}^T \nabla f_\tau(\w_\tau)\right\|_2^2  \\
& =
\frac{1}{2T} \sum_{t,\tau = 1}^T \|\nabla f_t(\w_t) - \nabla
f_\tau(\w_\tau)\|^2 \notag\\
& \leq  \frac{1}{T} \sum_{t=1}^T \sum_{\tau=1}^T \|\nabla f_t(\w_t) -
\nabla f_t(\w_\tau)\|_2^2 + \frac{1}{T} \sum_{t=1}^T \sum_{\tau=1}^T
\|\nabla f_t(\w_\tau) - \nabla f_\tau(\w_\tau)\|_2^2 \notag\\
& =  \VAR^1_T +  \VAR_T^2.
\end{aligned}
\end{equation}
We see that the variation $\VAR_T$ is bounded by two parts: $\VAR^1_T$ essentially measures the smoothness of individual cost functions, while $\VAR^2_T$ measures the variation in the gradients of cost functions.  Let us consider an easy setting when all cost functions are identical.  In this case, $\VAR^2_T$ vanishes, and $\VAR_T$ is equal to  $\VAR^1_T/2$, i.e.,
\begin{equation*}
\begin{aligned}
\VAR_T  &= \frac{1}{2T} \sum_{t,\tau = 1}^T \|\nabla f_t(\w_t) - \nabla
f_\tau(\w_\tau)\|^2 \\
&= \frac{1}{2T} \sum_{t,\tau = 1}^T \|\nabla f_t(\w_t) - \nabla
f_t(\w_\tau)\|^2\notag\\
& = \frac{\VAR_T^1}{2}.
\end{aligned} 
\end{equation*}
As a result, the regret of the FTRL algorithm for online convex optimization may still be bounded by $O(\sqrt{T})$ regardless of the smoothness of the cost functions.

%---------------------------------------------------------------------------------------------------------
To address this challenge, we develop two novel algorithms for online convex optimization that bound the regret by the variation of cost functions.  In particular, we would like to bound the regret of online convex optimization by the variation of cost functions defined as follows:
\begin{equation}
    \text{GradualVariation}_{T} = \sum_{t=1}^{T-1} \max\limits_{\w \in \W} \|\nabla f_{t+1}(\w) - \nabla f_{t}(\w)\|_2^2. \label{eqn:var-1}
\end{equation}
Note that the variation in~(\ref{eqn:var-1}) is defined in terms of gradual difference between individual cost function to its previous one, while the variation in~(\ref{eqn:var-linear}) is defined in terms of total difference between individual cost vectors to their mean. Therefore we refer to the variation defined in~(\ref{eqn:var-1})  as \textit{gradual variation}, and to the variation defined in~(\ref{eqn:var-linear}) as \textit{total variation}. It is straightforward to show that when $f_t(\w) = \dd{\f_t}{\w}$, the gradual variation $\text{GradualVariation}_T$  is upper bounded  by the total variation $\VAR_T$ defined  with a constant factor:

\begin{equation*}
\begin{aligned}
\sum_{t=1}^{T-1}\|\f_{t+1} - \f_t\|_2^2 \leq \sum_{t=1}^{T-1} 2\|\f_{t+1} - \boldsymbol \mu \|_2^2 + 2\|\f_t - \boldsymbol \mu \|_2^2 \leq 4 \sum_{t=1}^{T}\|\f_t-\boldsymbol \mu\|_2^2.
\end{aligned}
\end{equation*}

On the other hand, we can not bound the total variation by the gradual variation up to a constant. This is verified by the following example. Let us assume that the adversary plays a fixed function $\f$ for the first half of the iterations and another different function $\mathbf g$ for the second half of the iterations, i.e., $\f_1=\cdots=\f_{T/2} = \f$ and $\f_{T/2+1}=\cdots=\f_T =\mathbf g \neq \f$. Then, in this simple scenario the total variation of the sequence of cost functions  in~(\ref{eqn:var-linear}) is given by
\[
\VAR_T= \sum_{t=1}^T\|\f_t -\boldsymbol \mu\|_2^2 = \frac{T}{2}\left\|\f- \frac{\f + \mathbf g}{2}\right\|_2^2 + \frac{T}{2}\left\|\mathbf g - \frac{\f+ \mathbf g}{2}\right\|_2^2  = \Omega(T),
\]
while the gradual variation defined in~(\ref{eqn:var-1}) is a constant given by
\begin{equation*}
\begin{aligned}
\text{GradualVariation}_T = \sum_{t=1}^{T-1} \|\f_{t+1} - \f_t\| _ 2^2  =  \|\f- \mathbf g\|_2^2  = O(1).
\end{aligned}
\end{equation*}

Based on the above analysis, we claim that the regret bound by gradual variation is usually tighter than  total variation. In particular, the following theorem shows a lower bound on the performance of the  FTRL in terms of gradual variation. Unlike the standard setting of online learning where the FTLR achieves the optimal regret bound for Lipschitz continuous and strongly convex losses, it is not capable of achieving regret bounded by gradual variation. The result of this theorem motivates us to develop new algorithms for online convex optimization to achieve a gradual variation bound of $O(\sqrt{\text{GradualVariation}_T})$. 
For the ease of exposition we use $\text{GV}_{T}$ to denote the gradual variation after $T$ iterations.
\begin{theorem}\label{thm-grad-impossibility}
The regret of FTRL is at least $\Omega(\min(\text{GV}_{T}, \sqrt{T}))$. 
\end{theorem}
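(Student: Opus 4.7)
The plan is to establish the lower bound by exhibiting, for any choice of FTRL's step size $\eta$, an explicit adversarial sequence of bounded linear losses whose regret matches $\Omega(\min(\text{GV}_T, \sqrt T))$. It suffices to work in one dimension with $\W = [-1,1]$ and linear losses $f_t(\w) = \f_t\w$, $|\f_t|\le 1$, since FTRL's update $\w_t = \Pi_{[-1,1]}(-\eta S_{t-1})$ with $S_{t-1}=\sum_{\tau<t}\f_\tau$ is purely scalar along the gradient direction; the higher-dimensional case then follows by projection.

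I would split the argument into two regimes. \textbf{Regime 1 ($\text{GV}_T\ge \sqrt T$):} Here the right side of the bound is $\sqrt T$, so the claim reduces to the classical $\Omega(\sqrt T)$ minimax lower bound for online linear optimization. I would realize this with the sign-alternating adversary $\f_t=(-1)^t$: the partial sums oscillate in $\{0,-1\}$, hence $\w_t\in\{0,\eta\}$, the per-round loss $\f_t\w_t$ equals $\eta$ on half the rounds so FTRL's cumulative loss is $\Omega(T\eta)$, while the best fixed $\w^\ast\in\{-1,+1\}$ suffers loss $O(1)$. Combining this with a Rademacher / Yao-principle argument (or directly, by noting the best $\eta$ cannot be smaller than $\Theta(1/\sqrt T)$ without incurring $\Omega(\sqrt T)$ regret from an all-equal sequence) forces the regret to be $\Omega(\sqrt T)$. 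For this construction $\text{GV}_T = 4(T-1)\ge \sqrt T$, matching the $\min$.

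\textbf{Regime 2 ($\text{GV}_T<\sqrt T$):} Here the target is $\Omega(\text{GV}_T)$. I would first handle the base case $\text{GV}_T=v\le 1$ using a single initial ``kick'': set $\f_1=c$ and $\f_t=0$ for $t\ge 2$, so $\text{GV}_T=c^2=v$. FTRL plays $\w_1=0$ and $\w_t=-\eta c$ for $t\ge 2$, incurring zero total loss, while the best fixed action $\w^\ast = -\operatorname{sign}(c)$ achieves loss $-|c|=-\sqrt v$, yielding regret exactly $\sqrt v\ge v$ since $v\le 1$. For the intermediate case $1<v<\sqrt T$, I would cap the amplitude at $|c|=1$ (forced by $\|\f_t\|\le 1$) and replicate the kick pattern: play $\f_t=1$ on a block of $k$ rounds followed by $\f_t=0$, with $k$ chosen so that FTRL's iterate does not saturate within the block. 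A direct computation gives cumulative FTRL loss $-\eta k(k-1)/2$ versus best loss $-k$, i.e.\ regret $k-\tfrac{\eta k(k-1)}{2}$ and $\text{GV}_T=1$. Tuning $k$ against $\eta$ (and, if needed, concatenating $\Theta(v)$ such blocks with single sign flips to generate $\text{GV}_T=\Theta(v)$ in total) drives the regret to $\Omega(v)$ while keeping $\text{GV}_T$ at the target level.

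The main obstacle is the second regime: the delicate tradeoff between keeping $\text{GV}_T$ small and simultaneously forcing $\w_t$ to remain far from the best action in hindsight for enough rounds. The key mechanism to exploit is that FTRL's iterate depends only on past gradients via $S_{t-1}$, so whenever the sequence contains a shift that has not yet been ``absorbed'' into $S_{t-1}$, the iterate is necessarily stale; careful bookkeeping of how many rounds it takes FTRL to react after each block switch, together with a budget argument on $\text{GV}_T$, should yield the $\Omega(v)$ bound and complete the proof.
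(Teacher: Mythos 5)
Your overall strategy coincides with the paper's: for each step size, exhibit either a near-constant sequence (to punish small $\eta$, where FTRL takes $\approx 1/\eta$ rounds to reach the comparator) or a sequence of sign-alternating blocks of length $\approx 1/\eta$ (to punish large $\eta$), with the number of sign flips budgeted against the gradual variation. The paper organizes the case split by $s=\lfloor 1/\eta\rfloor$ rather than by the size of $\text{GV}_T$, but the constructions are the same, and your Regime 1 (alternating signs every round for $\eta\gtrsim 1/\sqrt{T}$, an all-equal sequence otherwise) is sound.

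The gap is in Regime 2, and it is exactly the step you defer to ``careful bookkeeping.'' Your per-block regret $k-\eta k(k-1)/2$ is computed against the block-local comparator $\w^{\ast}=-1$, and that comparator does not survive concatenation. Sign flips are unavoidable (without them FTRL saturates at $-1$ after the first block and thereafter matches the comparator exactly, so later blocks contribute nothing), but once the blocks alternate sign their contributions to $\sum_t \f_t$ cancel, the best fixed action in hindsight has total loss $\approx 0$, and a symmetric pairing of round $k+1-r$ with round $k+1+r$ (as in the paper's Case II) shows that each full period of $2k$ rounds contributes only $\Theta(\eta k)=\Theta(1)$ regret --- not $\Theta(k)$. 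The correct accounting is therefore ``$\Omega(1)$ of regret and $O(1)$ of gradual variation per sign flip, $\Theta(v)$ flips, hence regret $\Omega(v)$,'' together with the feasibility check that $\Theta(v)$ periods of length $2k\approx 2/\eta$ fit into $T$ rounds, which holds precisely because $k<\sqrt{T}$ and $v<\sqrt{T}$ imply $T/(2k)>\sqrt{T}/2>v/2$. With that correction your plan reproduces the paper's proof; as written, the one quantitative computation you give overstates the per-block contribution by a factor of $k$, and the concatenation step that carries the whole of Regime 2 is not yet an argument.
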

\begin{proof}
Let $\f$ be any unit vector passing through $\w_1$. Let $s = \floor{1/\eta}$, so that if we use $\f_t = \f$ for every $t \le s$, each such $\z_{t+1} = \w_1 - t \eta \f$ still remains in $\W$ and thus $\w_{t+1} = \z_{t+1}$. Next, we analyze the regret by considering the following three cases depending on the range of $s$.

\noindent {\textbf{Case I:} $s \ge \sqrt{T}$.} \\
First, when $s \ge \sqrt{T}$, we choose $\f_t=\f$ for $t$ from $1$ to $\floor{s/2}$ and $\f_t = 0$ for the remaining $t$. Clearly, the best strategy of the offline algorithm is to play $\w =-\f$. On the other hand, since the learning rate $\eta$ is too small, the strategy $\w_t$ played by GD, for $t \le \floor{s/2}$, is far away from $\w$, so that $\dd{\f_t}{\w_t - \w}  \ge 1-t\eta \ge 1/2$. Therefore, the regret is at least $\floor{s/2}(1/2) = \Omega(\sqrt{T})$.

\noindent {\textbf{Case II:} $0< s < \sqrt{T}$.} \\
Second, when $0< s < \sqrt{T}$, the learning rate is high enough so that FTRL may overreact to each loss vector, and we make it pay by flipping the direction of loss vectors frequently. More precisely, we use the vector $\f$ for the first $s$ rounds so that $\w_{t+1} = \w_1 - t \eta \f$ for any $t \le s$, but just as $\w_{s+1}$ moves far enough in the direction of $-\f$, we make it pay by switching the loss vector to $-\f$, which we continue to use for $s$ rounds. Note that $\w_{s+1+r} = \w_{s+1-r}$ but $\f_{s+1+r} = -\f_{s+1-r}$ for any $r \le s$, so $\sum^{2s}_{t=1} \dd{\f_t} {\w_t - \w_1} =  \dd{\f_{s+1}}{\w_{s+1} - \w_1} \ge \Omega(1)$.
As $\w_{2s+1}$ returns back to $\w_1$, we can see the first $2s$ rounds as a period, which only contributes $\| 2f\|_2^2 =4$ to the deviation. Then we repeat the period for $\tau$ times, where $\tau=\floor{\text{GV}_T/4}$ if there are enough rounds, with $\floor{T/(2s)} \ge \floor{\text{GV}_T/4}$, to use up the gradual variation $\text{GV}_T$, and $\tau=\floor{T/(2s)}$ otherwise. For any remaining round $t$, we simply choose $\f_t=0$. As a result, the total regret is at least $\Omega(1) \cdot \tau = \Omega(\min\{\text{GV}_T/4, T/(2s)\}) = \Omega(\min\{\text{GV}_T, \sqrt{T}\})$.

\noindent {\textbf{Case III:} $s=0$.} \\
Finally, when $s=0$, the learning rate is so high that we can easily make GD pay by flipping the direction of the loss vector in each round. More precisely, by starting with $\f_1 = -\f$, we can have $\w_2$ on the boundary of $\W$, which means that if we then alternate between $\f$ and $-\f$, the strategies FTRL plays will alternate between $\w_3$ and $\w_2$ which have a constant distance from each other. Then following the analysis in the second case, one can show that the total regret is at least $\Omega(\min\{\text{GV}_T, T\})$.
\end{proof}
%---------------------------------------------------------------------------------------------------------
\begin{assumption}
In this study, we assume smooth cost functions with Lipschitz continuous gradients, i.e., there exists a constant $L>0$ such that
\begin{equation}\label{eqn:smooth}
\begin{aligned}
\|\nabla f_t(\w)-\nabla f_t(\w')\|_2\leq L\|\w-\w'\|_2, \forall \w, \w'\in\W, \forall t.
\end{aligned}
\end{equation}
\end{assumption}
%---------------------------------------------------------------------------------------------------------
We would like to emphasize that our assumption about the smoothness of cost functions is necessary to achieve the variation-based bound stated in this chapter. To see this, consider the special case of $f_1(\w) = \cdots = f_T(\w) = f(\w)$. If we are able to achieve a regret bound which scales as the square roof of the gradual variation, for any sequence of convex functions, then for the special case where all the  cost functions are identical, we  have
\begin{equation*}
\begin{aligned}
\sum_{t=1}^T f(\w_t) \leq \min_{\w \in \W} \sum_{t=1}^T f(\x)  + O(1),
\end{aligned}
\end{equation*}
implying that $\widehat \w_T = \sum_{t=1}^T \w_t/T$ approaches the optimal solution at the rate of $O(1/T)$. This contradicts the lower complexity bound (i.e., $O(1/\sqrt{T})$) for any  optimization method {which only uses first order information about the cost functions}~\cite[Theorem 3.2.1]{nemircomp1983} (see also Table~\ref{table:lower}). This analysis  indicates that the smoothness assumption is necessary to attain variation based regret bound for general online convex optimization problem. {We would like to emphasize the fact   that this contradiction  holds  when only  the gradient information about the cost functions is provided to the learner and the learner  \textit{may} be able to achieve a variation-based bound using second  order  information about the cost functions, which is not the focus of this chapter.}
%-------------------------------------------------------------------------------------------------------------------%

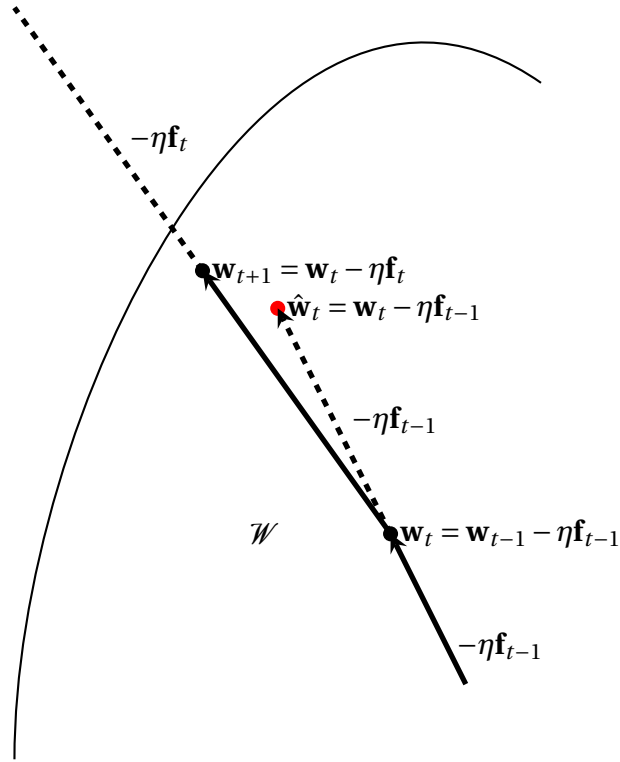
\begin{figure}[t]
 \begin{center}
  \begin{tikzpicture}[node distance=4.5cm, auto, >=stealth]

    \fill[black] (4,2) circle (1mm);
    \fill[red] (2.5,5) circle (1mm);
    \fill[black] (1.5,5.5) circle (1mm);

    \draw[line width=2,->] (5,0) -- (4,2);
    \draw[dashed,line width=2,->] (4,2) -- (2.5,5);

    \draw[line width=2,->] (4,2) -- (1.5,5.5);
    \draw[dashed,line width=2,-] (1.5,5.5) -- (-1,9);
    
    \draw[thick] (-1,-1) to [out=90,in=145] (6,8);
    
    \node [right] at (4,2) {$\w_{t} = \w_{t-1} - \eta \mathbf{f}_{t-1}$};
    \node [right] at (2.5,5) {$\hat{\w}_{t} = \w_{t} - \eta \mathbf{f}_{t-1}$};
    \node [right] at (1.5,5.5) {$\w_{t+1} = \w_{t} - \eta \mathbf{f}_{t}$};
    \node [right] at (3.35,3.5) {$-\eta \mathbf{f}_{t-1}$};
    \node [right] at (0.4,7.25) {$-\eta \mathbf{f}_{t}$};
    \node [right] at (4.75,0.5) {$-\eta \mathbf{f}_{t-1}$};
    \node [right] at (2,2) {$\mathcal{W}$};

  \end{tikzpicture}
  \label{flowchart}
 \end{center}
\caption[The intuition behind the improved FTRL and OMP algorithms]{Illustration of the main idea behind the proposed improved FTRL and online mirror prox methods to attain regret bounds in terms of gradual variation for linear loss functions. The learner plays the decision $\hat{\w}_t$ instead of $\w_t$ to  suffer less regret when the consecutive loss functions are gradually evolving.}
\label{fig:gradual}
\end{figure}

%--------------------------------------------------------------------------------------------------------------------%
\section{The Improved FTRL Algorithm}
\label{sec:algorithm}
As mentioned earlier, the ultimate goal of this chapter is to have  algorithms that can take advantage of benign  sequences in gradually evolving environments  and at the same time protect against the adversarial sequences. However, the impossibility result we showed in the previous section and in particular Theorem~\ref{thm-grad-impossibility},  demonstrated that the existing algorithms such as  OGD and in general the family of follow the regularized leader algorithms  fail to attain a regret bounded by the gradual variation of the loss functions. Motivated by this negative result, we now turn to proposing two algorithms for online convex optimization that are able to attain regret bounds in terms of gradual variation.  The first algorithm is an improved FTRL and the second one is based on the mirror prox method introduced in Chapter~\ref{chap-background}. One common feature shared by the two algorithms is that both of them maintain two sequences of solutions: decision vectors $\w_{1:T}=(\w_1,\cdots,\w_T)$ and searching vectors $\z_{1:T}=(\z_1,\cdots, \z_T)$ that facilitate the updates of decision vectors. Both algorithms share almost the same regret bound except for a constant factor.

All of our algorithms are based on the following idea, which we illustrate using the online linear optimization problem as an example which is graphically depicted in Figure~\ref{fig:gradual}. For general linear functions, the online gradient descent algorithm is known to achieve an optimal regret, which plays in round $t$ the point $\x_{t} = \Pi_{\W}\left({\x_{t-1} - \eta \f_{t-1}}\right)$. Now, if the loss functions have a small deviation, $\f_{t-1}$ may be close to $\f_{t}{}$, so in round $t$, it may be a good idea to play a point which moves further in the direction of $-\f_{t-1}{}$ as it may make its inner product with $\f_{t}{}$ (which is its loss with respect to $\f_{t}{}$) smaller. In fact, it can be shown that if one could play the point $\x_{t+1}{} = \Pi_{\W}\left({\x_{t}{} - \eta \f_{t}{}}\right)$ in round $t$, a very small regret could be achieved, i.e., $\sum_{t=1}^{T}{\dd{\w_{t+1}}{\f_t}} - \min_{\w \in \W} \sum_{t=1}^{T}{\dd{\w}{\f_t}} \leq O(1)$, but in reality one does not have $\f_{t}{}$ available before round $t$ to compute $\x_{t+1}{}$. On the other hand, if $\f_{t-1}{}$ is a good estimate of $\f_{t}{}$, the point $\hat{\x}_{t} = \Pi_{\W}\left({\x_{t}{} - \eta \f_{t-1}{}}\right)$ should be a good estimate of $\x_{t+1}{}$ too. The point $\hat{\x}_{t}$ can actually be computed before round $t$ since $\f_{t-1}{}$ is available, so our algorithm plays $\hat{\x}_{t}$ in round $t$. As it will be clear later in this chapter, our algorithms for the prediction with expert advice problem and the online convex optimization problem use the same idea to be able to achieve regret bounds stated in terms of   gradual variation of the sequence of losses.

To facilitate the discussion, besides the variation of cost functions defined in (\ref{eqn:var-1}), we define another variation, named \textit{extended gradual variation}, as follows
\begin{equation}
\begin{aligned}\label{eqn:vard}
{\rm{EGV}}_{T,2}(\y_{1:T})=\sum_{t=0}^{T-1}\|\nabla f_{t+1}(\y_t)-\nabla f_{t}(\y_t)\|_2^2 \leq \|\nabla f_1(\y_0)\|_2^2 + \text{GV}_T,
\end{aligned}
\end{equation}
where $f_0(\x) = 0$, the sequence  $(\y_0, \ldots, \y_T)$ is either $(\z_0,\ldots, \z_T)$ (as in the improved FTRL)  or $(\w_0, \ldots, \w_T)$ (as in the online mirror prox method) and the subscript $2$ means the variation is defined with respect to $\ell_2$ norm. When all cost functions are identical, $\text{GV}_T$ becomes zero and the extended variation ${\rm{EGV}}_{T,2}(\y_{1:T})$ is reduced to $\|\nabla f_1(\y_0)\|_2^2$, a constant independent from the number of trials. In the sequel, we use the notation ${\rm{EGV}}_{T,2}$ for simplicity. 

%---------------------------------------------------------------------------------------------------------
Our results show that for online convex optimization with $L$-smooth cost functions, the regrets of the proposed algorithms can be bounded as follows
\begin{equation}
\begin{aligned}\label{eqn:var-main}
\sum_{t=1}^T f_t(\x_t) -\min_{\w \in \W} \sum_{t=1}^T f_t(\w) \leq O \left(\sqrt{{\rm{EGV}}_{T,2}}\right) + \text{constant}.
\end{aligned}
\end{equation}

%\subsection{An Improved FTRL Algorithm}\label{sec:IFTRL}

\begin{algorithm}[t]
\center \caption{Improved FTRL  (\texttt{IFTRL}) Algorithm}
\begin{algorithmic}[1] \label{alg:2-iftrl}
    \STATE {\textbf{Input}}: $\eta\in(0,1]$

    \STATE {\textbf{Initialize:}}: $\z_0 = \mathbf{0}$ and $f_0(\x) =
    0$

    \FOR{$t = 1, \ldots, T$}
        \STATE Predict $\x_t$ by
        \[
            \x_t = \mathop{\arg\min}\limits_{\x \in \W} \left\{\dd{\x}{\nabla f_{t-1}(\z_{t-1}})+
            \frac{L}{2\eta}\|\x - \z_{t-1}\|_2^2
            \right\}
        \]
        \STATE Receive  cost function $f_t(\cdot)$ and incur  loss $f_t(\x_t)$
        \STATE Update  $\z_{t}$ by
        \[
           \z_{t} = \mathop{\arg\min}\limits_{\x \in \W} \left\{ \dd{\x}{\sum_{\tau=1}^t \nabla f_{\tau}(\z_{\tau-1})} +\frac{L}{2\eta}\|\x\|_2^2\right\}\]
    \ENDFOR
\end{algorithmic}
\end{algorithm}

We now turn to presenting  our first algorithm, dubbed IFTRL, which is a simple modification of the FTRL algorithm and show that its regret bounded by the gradual variation. The improved FTRL algorithm for online convex optimization is presented in Algorithm~\ref{alg:2-iftrl}.  Without loss of generality, we assume that the decision set $\W$ is contained in a unit ball $\mathbb{B} =\{\mathbf{x} \in \R^d: \|\mathbf{x}\| \leq 1\}$, i.e., $\W \subseteq\mathbb{B}$, and $0 \in \W$.  Note that in step 6, the searching vectors $\z_t$ are updated according to the FTRL algorithm after receiving the cost function $f_t(\w)$. To understand the updating procedure for the decision vector $\x_t$ specified in step 4, we rewrite it as
\begin{equation}
        \begin{aligned}
            \x_t
            & = \mathop{\arg\min}\limits_{\x \in \W} \left\{ f_{t-1}(\z_{t-1})+ \dd{\x - \z_{t-1}}{\nabla f_{t-1}(\z_{t-1})} +
            \frac{L}{2\eta}\|\x - \z_{t-1}\|_2^2
            \right\}. \label{eqn:update-x}
        \end{aligned}
\end{equation}
        
Notice that
\begin{equation}
\begin{aligned}\label{eqn:upc}
f_t(\x)&\leq f_t(\z_{t-1}) + \dd{\x-\z_{t-1}}{\nabla f_{t}(\z_{t-1})} + \frac{L}{2}\|\x-\z_{t-1}\|_2^2\nonumber\\
&\leq f_t(\z_{t-1}) + \dd{\x-\z_{t-1}}{\nabla f_{t}(\z_{t-1})} + \frac{L}{2\eta}\|\x-\z_{t-1}\|_2^2,
\end{aligned}
\end{equation}
where the first inequality follows the smoothness condition in~(\ref{eqn:smooth}) and the second inequality follows from the fact $\eta\leq 1$. The inequality~(\ref{eqn:upc}) provides an upper bound for $f_t(\x)$ and therefore can be used as an approximation of $f_t(\x)$ for predicting $\x_t$. However, since $\nabla f_t(\z_{t-1})$ is unknown before the prediction, we use $\nabla f_{t-1}(\z_{t-1})$ as a surrogate for $\nabla f_t(\z_{t-1})$, leading to the updating rule in (\ref{eqn:update-x}). It is this approximation that leads to the variation  bound.  The following theorem states the  regret bound of Algorithm~\ref{alg:2-iftrl}.

\begin{theorem} \label{thm:1-grad}
Let $f_1, f_2, \ldots, f_T$ be a sequence of  convex functions with $L$-Lipschitz continuous gradients.  By setting $\eta = \min\left\{1, L/\sqrt{{\rm{EGV}}_{T,2}} \right\}$, we have the
following regret bound for the IFTRL in  Algorithm~\ref{alg:2-iftrl}:
\[
\sum_{t=1}^Tf_t(\x_t) -\min_{\x\in\W}\sum_{t=1}^Tf_t(\x) \leq \max\left(L, \sqrt{{\rm{EGV}}_{T,2}}\right).
\]
\end{theorem}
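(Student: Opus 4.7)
Let $\g_t := \nabla f_t(\z_{t-1})$ (with $\g_0 = 0$). My plan rests on observing that $\{\z_t\}$ is exactly the standard FTRL iterate sequence driven by the linear losses $\langle \g_\tau, \cdot\rangle$ with regularizer $R(\x) = \tfrac{L}{2\eta}\|\x\|_2^2$, while $\{\x_t\}$ is a one-step ``optimistic lookahead'' from $\z_{t-1}$ that uses the already-available $\g_{t-1}$ as a surrogate for the still-unseen $\g_t$. Accordingly, the regret should split into an FTRL contribution paid by $\{\z_t\}$ (which gives the $\tfrac{1}{\eta}$ term) and a stability gap between $\x_t$ and $\z_t$ that, thanks to the optimistic construction, ends up charged against the consecutive gradient differences $\g_t - \g_{t-1}$.

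I would first combine $L$-smoothness and convexity of $f_t$ around $\z_{t-1}$ to obtain, for every $\x \in \W$,
\[
f_t(\x_t) - f_t(\x) \;\le\; \langle \g_t,\, \x_t - \x\rangle + \tfrac{L}{2}\|\x_t - \z_{t-1}\|_2^2,
\]
and then insert the auxiliary vector: $\langle \g_t, \x_t - \x\rangle = \langle \g_t, \z_t - \x\rangle + \langle \g_t, \x_t - \z_t\rangle$. Summed over $t$, the first piece is handled by the classical ``be-the-leader'' lemma for FTRL: since $\z_t$ minimizes $\Phi_t(\x) = R(\x) + \sum_{\tau \le t}\langle \g_\tau, \x\rangle$, a one-line induction gives $\sum_{t=1}^T \langle \g_t, \z_t\rangle \le \sum_{t=1}^T \langle \g_t, \x\rangle + R(\x) - R(\z_0)$, so (using $\W \subseteq \mathbb{B}$ and $\z_0 = \mathbf{0}$)
\[
\sum_{t=1}^T \langle \g_t, \z_t - \x\rangle \;\le\; \tfrac{L}{2\eta}.
\]

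The core of the argument is bounding the residual $R_t := \langle \g_t, \x_t - \z_t\rangle + \tfrac{L}{2}\|\x_t - \z_{t-1}\|_2^2$ by $\tfrac{\eta}{L}\|\g_t - \g_{t-1}\|_2^2$ up to telescoping terms. Writing $\langle \g_t, \x_t - \z_t\rangle = \langle \g_t - \g_{t-1}, \x_t - \z_t\rangle + \langle \g_{t-1}, \x_t - \z_t\rangle$, I would control the second summand by the defining optimality of $\x_t$: since $\x_t$ minimizes the strongly convex map $\langle \g_{t-1}, \cdot\rangle + \tfrac{L}{2\eta}\|\cdot - \z_{t-1}\|_2^2$ over $\W \ni \z_t$, evaluating at $\x_t$ versus $\z_t$ yields
\[
\langle \g_{t-1}, \x_t - \z_t\rangle + \tfrac{L}{2\eta}\|\x_t - \z_{t-1}\|_2^2 \;\le\; \tfrac{L}{2\eta}\|\z_t - \z_{t-1}\|_2^2.
\]
The first summand is controlled by Cauchy--Schwarz and Young's inequality with weight $\eta/L$, producing $\tfrac{\eta}{2L}\|\g_t - \g_{t-1}\|_2^2 + \tfrac{L}{2\eta}\|\x_t - \z_t\|_2^2$. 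A parallel strong-convexity comparison for $\z_t$ against $\Phi_t$ then absorbs the leftover $\|\x_t - \z_t\|_2^2$ and $\|\z_t - \z_{t-1}\|_2^2$ terms as non-positive telescoping residuals, while the restriction $\eta \le 1$ ensures that the smoothness penalty $\tfrac{L}{2}\|\x_t - \z_{t-1}\|_2^2$ in $R_t$ is dominated by the $\tfrac{L}{2\eta}\|\x_t - \z_{t-1}\|_2^2$ coming from the optimality comparison, so the coupling quadratic cancels instead of accumulating.

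Putting the pieces together gives $\regret_T \le \tfrac{L}{2\eta} + \tfrac{\eta}{L}\,\mathrm{EGV}_{T,2}$ up to constants; balancing with $\eta = \min\{1, L/\sqrt{\mathrm{EGV}_{T,2}}\}$ yields the claimed $\max(L, \sqrt{\mathrm{EGV}_{T,2}})$ bound. The main obstacle is the stability step: the three coupling quadratics $\|\x_t - \z_{t-1}\|_2^2$, $\|\x_t - \z_t\|_2^2$, and $\|\z_t - \z_{t-1}\|_2^2$ produced by the two optimality conditions and by the smoothness bound must be combined so that their net contribution is non-positive, and it is precisely here that smoothness and the constraint $\eta \le 1$ are essential---this is also why the impossibility result of Theorem~\ref{thm-grad-impossibility}, proved for algorithms maintaining only a single sequence, does not contradict the present guarantee.
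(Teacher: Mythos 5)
Your route is genuinely different from the paper's. The paper first proves an intermediate statement (Lemma~\ref{lem:1g}) by induction on $T$, carrying the potential $\psi_t(\x)=\frac{L}{2\eta}\|\x\|_2^2+\sum_{\tau\le t}\left[f_\tau(\z_{\tau-1})+\langle \x-\z_{\tau-1},\nabla f_\tau(\z_{\tau-1})\rangle\right]+\frac{\eta}{2L}\sum_{\tau< t}\|\nabla f_{\tau+1}(\z_\tau)-\nabla f_\tau(\z_\tau)\|_2^2$ and showing $\min_{\x\in\W}\psi_t(\x)\ge\sum_{\tau\le t}f_\tau(\x_\tau)$, with the theorem following by linearization. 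Your modular decomposition --- smoothness plus convexity around $\z_{t-1}$, an improved be-the-leader bound for the searching points $\z_t$ (whose strong-convexity surplus supplies the negative $-\frac{L}{2\eta}\|\z_t-\z_{t-1}\|_2^2$ terms), and a stability comparison between $\x_t$ and $\z_t$ --- reassembles the same cancellations without induction, recovers the identical bound $\frac{L}{2\eta}+\frac{\eta}{2L}{\rm{EGV}}_{T,2}$, and makes it transparent where smoothness and $\eta\le 1$ enter; it also ports more cleanly to Bregman-divergence generalizations.

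There is, however, one concrete error to fix: you have misread which gradient the algorithm feeds to the lookahead step. In Algorithm~\ref{alg:2-iftrl}, $\x_t$ minimizes $\langle\x,\nabla f_{t-1}(\z_{t-1})\rangle+\frac{L}{2\eta}\|\x-\z_{t-1}\|_2^2$, i.e.\ it uses a \emph{fresh} evaluation of $\nabla f_{t-1}$ at the current searching point $\z_{t-1}$ --- this is exactly why the paper remarks that IFTRL must evaluate each $\nabla f_t$ at two points. Your $\mathbf{g}_{t-1}=\nabla f_{t-1}(\z_{t-2})$ is instead the gradient already stored in the FTRL sum. With your reading, the Young step produces $\|\nabla f_t(\z_{t-1})-\nabla f_{t-1}(\z_{t-2})\|_2^2$, which is not a summand of ${\rm{EGV}}_{T,2}$: the extended gradual variation compares the two gradients at the \emph{same} point. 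You would need a further triangle inequality, an extra $L^2\|\z_{t-1}-\z_{t-2}\|_2^2$ term to absorb into the be-the-leader surplus, a factor of two on the variation term, and a stricter restriction on $\eta$, and you would no longer match the stated constant $\max\left(L,\sqrt{{\rm{EGV}}_{T,2}}\right)$. With the correct surrogate $\nabla f_{t-1}(\z_{t-1})$, the difference in the Young step is exactly $\nabla f_t(\z_{t-1})-\nabla f_{t-1}(\z_{t-1})$, its squared norms sum precisely to ${\rm{EGV}}_{T,2}$ (the $t=1$ term contributing $\|\nabla f_1(\z_0)\|_2^2$ since $f_0\equiv 0$), and the rest of your plan closes as described.
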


\begin{remark}
\label{remark:2} Comparing with the variation bound in~(\ref{eqn:var}) for the FTRL algorithm, the smoothness parameter  $L$ plays the same role as $\VAR^1_T$ that accounts for the smoothness of cost functions, and term ${\rm{EGV}}_{T,2}$ plays the same role as $\VAR^2_T$ that accounts for the variation in the cost functions. Compared to the FTRL algorithm, the key advantage of the improved FTRL algorithm is that the regret bound is reduced to a constant  when the cost functions change only by a constant number of times along the horizon. Of course, the extended variation ${\rm{EGV}}_{T,2}$ may not be known apriori for setting the optimal $\eta$, {we can apply the standard doubling trick~\cite{bianchi-2006-prediction} to obtain a bound  that holds uniformly over time and is a factor at most 8 from the bound obtained  with the optimal choice of $\eta$. The details are provided later in  this chapter.}
\end{remark}

%%%%%%%%%%%%%%%%%%%%%%%%%%%%%%
\section{The Online Mirror Prox Algorithm}
\label{sec:PM}
The second algorithm we present to attain regret bounds in terms of gradual variation is based on the prox method we introduced in  Chapter~\ref{chap-background}~for non-smooth convex optimization.  We generalize the  prox method  for online convex optimization that shares the same order of regret bound as the improved FTRL algorithm.  The detailed steps of the Online Mirror Prox (OMP) method are shown in Algorithm~\ref{alg:3}, where we use an equivalent form of updates for $\x_t$ and $\z_t$ in order to compare to Algorithm~\ref{alg:2-iftrl} .   The OMP method  is closely related to the prox method in~\cite{Nemirovski2005} by maintaining two sets of vectors $\x_{1:T}$ and $\z_{1:T}$, where $\x_t$ and $\z_t$ are computed by \textit{gradient mappings} using $\nabla f_{t-1}(\x_{t-1})$, and $\nabla f_t(\x_t)$, respectively, as
\begin{equation*}
\begin{aligned}
\x_t&=\arg\min_{\x\in\W}\frac{1}{2}\left\|\x- \left(\z_{t-1}-\frac{\eta}{L}\nabla f_{t-1}(\x_{t-1})\right)\right\|_2^2\\
\z_t&=\arg\min_{\x\in\W}\frac{1}{2}\left\|\x- \left(\z_{t-1}-\frac{\eta}{L}\nabla f_{t}(\x_t)\right)\right\|_2^2
\end{aligned}
\end{equation*}

\begin{algorithm}[t]
\center \caption{Online Mirror Prox  (\texttt{OMP}) Algorithm}
\begin{algorithmic}[1] \label{alg:3}
    \STATE {\textbf{Input}}: $\eta >0$

    \STATE {\textbf{Initialize:}}: $\z_0 =\x_0= \mathbf{0}$ and $f_0(\x) =
    0$

    \FOR{$t = 1, \ldots, T$}
        \STATE Predict $\x_t$ by
        $$
            \x_t = \mathop{\arg\min}\limits_{\x \in \W} \left\{ \dd{\x}{\nabla f_{t-1}(\x_{t-1})}+
            \frac{L}{2\eta}\|\x - \z_{t-1}\|_2^2
            \right\}
        $$
        \STATE Receive  cost function $f_t(\cdot)$ and incur  loss $f_t(\x_t)$
       \STATE Update $\z_t$ by
        $$
            \z_{t} =\mathop{\arg\min}\limits_{\x \in \W} \left\{\dd{\x}{ \nabla f_{t}(\x_{t})}+\frac{L}{2\eta}\|\x-\z_{t-1}\|_2^2 \right\}
        $$
    \ENDFOR
\end{algorithmic}
\end{algorithm}

The OMP differs from the IFTRL algorithm: (i) in updating the searching points $\z_t$,  Algorithm~\ref{alg:2-iftrl} updates $\z_t$ by the FTRL scheme using \textit{all} the gradients of the cost functions at $\{\z_{\tau}\}_{\tau = 1}^{t - 1}$, while OMP updates $\z_t$ by a prox method using a \textit{single} gradient $\nabla f_t(\x_t)$, and (ii) in updating the decision vector $\x_t$, OMP uses the gradient $\nabla f_{t-1}(\x_{t-1})$ instead of $\nabla f_{t-1}(\z_{t-1})$. The advantage of  OMP algorithm compared to the IFTRL algorithm is that it only requires to compute one gradient $\nabla f_t(\x_t)$ for each loss function; in contrast, the improved FTRL algorithm in Algorithm~\ref{alg:2-iftrl} needs to compute the gradients of $f_t(\x)$ at two searching points $\z_t$ and $\z_{t-1}$. It is these differences  that make it easier to extend the OMP to a bandit setting, which will be discussed in Section~\ref{sec:conc}.  

The following theorem states the regret bound of the online mirror prox  method for online convex optimization.
\begin{theorem} \label{thm:2-grad}
Let $f_1, f_2, \ldots, f_T$ be a sequence of convex functions with L-Lipschitz continuous gradients.  By setting
$\eta = (1/2)\min\left\{1/\sqrt{2}, L/\sqrt{{\rm{EGV}}_{T,2}} \right\}$, we have the
following regret bound for OMP in Algorithm~\ref{alg:3}
\[
\sum_{t=1}^Tf_t(\x_t)-\min_{\x\in\W}\sum_{t=1}^Tf_t(\x) \leq 2\max\left(\sqrt{2}L, \sqrt{{\rm{EGV}}_{T,2}}\right).
\]
\end{theorem}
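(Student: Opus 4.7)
The plan is to adapt Nemirovski's mirror-prox analysis to the online setting, leveraging the fact that in Algorithm~\ref{alg:3} the prediction $\x_t$ uses $\nabla f_{t-1}(\x_{t-1})$ as a surrogate for the yet-unseen $\nabla f_t(\x_t)$. For any comparator $\x_*\in\W$, I will start from convexity of $f_t$ and split through the auxiliary point $\z_t$:
\[
f_t(\x_t)-f_t(\x_*)\le \langle \nabla f_t(\x_t),\z_t-\x_*\rangle+\langle\nabla f_t(\x_t),\x_t-\z_t\rangle.
\]
The first inner product I will bound via first-order optimality of the $\z_t$-update combined with the three-point identity $\langle b-a,a-c\rangle=\tfrac12(\|b-c\|_2^2-\|a-b\|_2^2-\|a-c\|_2^2)$. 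For the second, I will add and subtract $\nabla f_{t-1}(\x_{t-1})$: the ``consistent'' piece $\langle\nabla f_{t-1}(\x_{t-1}),\x_t-\z_t\rangle$ will be handled by the three-point identity applied to the $\x_t$-optimality evaluated at the feasible point $\z_t\in\W$, while the ``drift'' piece $\langle\nabla f_t(\x_t)-\nabla f_{t-1}(\x_{t-1}),\x_t-\z_t\rangle$ will be split by Young's inequality $ab\le \tfrac{\eta}{2L}a^2+\tfrac{L}{2\eta}b^2$. Two cancellations should then fall out automatically---the $\|\z_{t-1}-\z_t\|_2^2$ terms from the two optimality conditions have opposite signs, and the $+\tfrac{L}{2\eta}\|\x_t-\z_t\|_2^2$ leak produced by Young's is matched by a corresponding negative term from the $\x_t$-optimality---so that after telescoping I expect to reach the clean per-run bound
\[
\sum_{t=1}^T[f_t(\x_t)-f_t(\x_*)]\le \tfrac{\eta}{2L}S-\tfrac{L}{2\eta}\sum_{t=1}^T\|\z_{t-1}-\x_t\|_2^2+\tfrac{L}{2\eta}\|\z_0-\x_*\|_2^2,
\]
where $S:=\sum_t\|\nabla f_t(\x_t)-\nabla f_{t-1}(\x_{t-1})\|_2^2$ and the negative sum serves as a reservoir I will spend in the next stage.

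Next I will connect $S$ to ${\rm EGV}_{T,2}$. The natural split $\nabla f_t(\x_t)-\nabla f_{t-1}(\x_{t-1})=[\nabla f_t(\x_t)-\nabla f_t(\x_{t-1})]+[\nabla f_t(\x_{t-1})-\nabla f_{t-1}(\x_{t-1})]$ aligns the second bracket exactly with the summands defining ${\rm EGV}_{T,2}$, while smoothness bounds the first by $L\|\x_t-\x_{t-1}\|_2$. To tame $\sum_t\|\x_t-\x_{t-1}\|_2^2$, I will use $\|\x_t-\x_{t-1}\|_2^2\le 2\|\x_t-\z_{t-1}\|_2^2+2\|\z_{t-1}-\x_{t-1}\|_2^2$ together with the key structural observation that $\x_{t-1}$ and $\z_{t-1}$ are Euclidean projections of the \emph{same} center $\z_{t-2}$ shifted by $-\tfrac{\eta}{L}\nabla f_{t-2}(\x_{t-2})$ and $-\tfrac{\eta}{L}\nabla f_{t-1}(\x_{t-1})$, so by nonexpansiveness $\|\z_{t-1}-\x_{t-1}\|_2^2\le (\eta/L)^2\|\nabla f_{t-1}(\x_{t-1})-\nabla f_{t-2}(\x_{t-2})\|_2^2$. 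Chaining the pieces produces a self-referential inequality of the form $S\le 4L^2\sum_t\|\z_{t-1}-\x_t\|_2^2+4\eta^2 S+2\,{\rm EGV}_{T,2}$, which I can invert provided $4\eta^2<1$; the choice $\eta\le 1/(2\sqrt 2)$ gives the clean form $S\le 8L^2\sum_t\|\z_{t-1}-\x_t\|_2^2+4\,{\rm EGV}_{T,2}$.

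Substituting this bound on $S$ back into the per-run inequality will replace $\tfrac{\eta}{2L}S$ by $4\eta L\sum_t\|\z_{t-1}-\x_t\|_2^2+\tfrac{2\eta}{L}\,{\rm EGV}_{T,2}$, leaving the net coefficient $4\eta L-\tfrac{L}{2\eta}$ on $\sum_t\|\z_{t-1}-\x_t\|_2^2$; this is non-positive exactly when $\eta\le 1/(2\sqrt 2)$, which is the left branch of the prescribed $\eta$, so the entire reservoir term is discarded. Plugging $\eta=\tfrac12\min\{1/\sqrt 2,\,L/\sqrt{{\rm EGV}_{T,2}}\}$, bounding $\|\z_0-\x_*\|_2^2\le 1$ by the unit-ball assumption on $\W$, and splitting into the two cases according to which branch of the minimum is active (equivalently ${\rm EGV}_{T,2}\lessgtr 2L^2$) then delivers the stated $2\max(\sqrt 2\,L,\sqrt{{\rm EGV}_{T,2}})$ bound. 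The main obstacle is that the single negative reservoir $-\tfrac{L}{2\eta}\sum_t\|\z_{t-1}-\x_t\|_2^2$ has to do triple duty---absorbing the $\|\x_t-\z_t\|_2^2$ leak from Young's inequality, inverting the self-referential $(1-4\eta^2)$ factor from the smoothness-plus-nonexpansiveness chain, and killing the residual $4\eta L\sum_t\|\z_{t-1}-\x_t\|_2^2$ after substitution---and the constant $1/\sqrt 2$ appearing in the theorem's prescription for $\eta$ is precisely the tightest value at which all three obligations are simultaneously met; any larger $\eta$ breaks one of the cancellations and forces a dependence on $T$ to reappear.
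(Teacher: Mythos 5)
Your proposal is correct and rests on the same backbone as the paper's proof: the mirror-prox per-round inequality (you re-derive Lemma~\ref{lem:6} from the two optimality conditions and the three-point identity rather than citing it), the same split of $\nabla f_t(\x_t)-\nabla f_{t-1}(\x_{t-1})$ into a gradual-variation part evaluated at $\x_{t-1}$ plus a smoothness part $L\|\x_t-\x_{t-1}\|_2$, and the same step-size casework at the end. The one place you genuinely diverge is in absorbing $\sum_t\|\x_t-\x_{t-1}\|_2^2$: the paper keeps both negative prox terms, re-indexes so that $\|\x_t-\z_{t-1}\|_2^2$ pairs with $\|\x_{t-1}-\z_{t-1}\|_2^2$, and applies $\|a\|_2^2+\|b\|_2^2\ge\tfrac{1}{2}\|a-b\|_2^2$ to get the reservoir $\tfrac{1}{4}\sum_t\|\x_t-\x_{t-1}\|_2^2$ directly (leading to the condition $2\eta^2\le 1/4$); you instead spend $\|\x_t-\z_t\|_2^2$ on the Young leak and control $\|\z_{t-1}-\x_{t-1}\|_2$ by nonexpansiveness of projection, exploiting that $\x_{t-1}$ and $\z_{t-1}$ are projections of the same center $\z_{t-2}$ shifted by the two gradients, which produces a self-referential inequality in $S$ that you then invert. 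Both devices impose the identical threshold $\eta\le 1/(2\sqrt{2})$ and land on the same constant $2\max(\sqrt{2}L,\sqrt{{\rm EGV}_{T,2}})$; the paper's re-indexing is marginally more direct since it avoids the inversion step, while your nonexpansiveness observation is a reusable structural fact about the algorithm (it is essentially inequality~(\ref{eqn:b3C}) in the proof of Lemma~\ref{lem:6}) that also makes the bookkeeping of which negative term absorbs which leak more transparent.
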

We note that compared to Theorem~\ref{thm:1-grad}, the regret bound in Theorem~\ref{thm:2-grad} is slightly worse by a factor of $2$. 

\subsection{Online Mirror Prox Method  with General Norms}
In this subsection, we first present a general OMP method to obtain a variation bound defined in a general norm. Then we discuss three special cases: online  linear optimization, prediction with expert advice, and online strictly  convex optimization.  The omitted proofs in this subsection can be easily duplicated by mimicking the proof of Theorem~\ref{thm:2-grad}, if necessary with the help of previous analysis as mentioned in the appropriate text. 

%\subsection{A General Prox Method}\label{subsubsec:gpm}
\begin{algorithm}[t]
\center \caption{ Online Mirror Prox Method for General Norms}
\begin{algorithmic}[1] \label{alg:3-1}
    \STATE {\textbf{Input}}: $\eta >0, \Phi(\z)$

    \STATE {\textbf{Initialize:}}: $\z_0 =\x_0= \min_{\z\in\W}\Phi(\z)$ and $f_0(\x) =
    0$

    \FOR{$t = 1, \ldots, T$}
        \STATE Predict $\x_t$ by
        $$
           \displaystyle \x_t = \mathop{\arg\min}\limits_{\x \in \W} \left\{\dd{\x}{\nabla f_{t-1}(\x_{t-1})}+
           \frac{L}{\eta}\mathsf{B}(\x, \z_{t-1})
            \right\}
        $$
        \STATE Receive  cost function $f_t(\cdot)$ and incur  loss $f_t(\x_t)$
       \STATE Update $\z_t$ by
        $$
           \displaystyle \z_{t} =\mathop{\arg\min}\limits_{\x \in \W} \left\{\dd{\x}{\nabla f_{t}(\x_{t})}+ \frac{L}{\eta}\mathsf{B}(\x, \z_{t-1})\right\}
        $$
    \ENDFOR
\end{algorithmic}
\end{algorithm}

To  adapt OMP to general norms other than the Euclidean norm, let $\|\cdot\|$ denote a general norm, $\|\cdot\|_*$ denote its dual norm, $\Phi(\z)$ be a  $\alpha$-strongly convex function with respect to the norm $\|\cdot\|$,  and $\mathsf{B}(\x,\z) = \Phi(\x)- \left(\Phi(\z) + \dd{\x-\z}{\Phi'(\z)}\right)$ be the Bregman distance induced by function $\Phi(\x)$.  Let $f_1, f_2, \cdots, f_T$ be a sequence of smooth functions with Lipschitz continuous  gradients bounded by $L$ with respect to norm $\|\cdot\|$, i.e.,

\begin{equation}
\begin{aligned}
\|\nabla f_t(\x) - \nabla f_t(\w')\|_* \leq L \|\x- \w'\|.
\end{aligned}
\end{equation}
Correspondingly, we define the extended gradual variation based on the general norm as follows:
\begin{equation}
\begin{aligned}\label{eqn:ge}
{\rm{EGV}}_T &= \sum_{t=0}^{T-1}\|\nabla f_{t+1}(\x_t) - \nabla f_t(\x_t)\|_*^2.
\end{aligned}
\end{equation}
Algorithm~\ref{alg:3-1} gives the detailed steps for the general framework. %, and the following theorem states the general norm variation bound.
We note that the key differences from Algorithm~\ref{alg:3} are:  $\z_0$ is set to $\min_{\z\in\W}\Phi(\z)$, and the Euclidean distances in steps 4 and 6 are replaced by Bregman distances, i.e.,  
%change the key steps 4, 6 in Algorithm~\ref{alg:3} to

\begin{equation}
 \begin{aligned}
           &\displaystyle \x_t = \mathop{\arg\min}\limits_{\x \in \W} \left\{ \dd{\x}{\nabla f_{t-1}(\x_{t-1})}+
           \frac{L}{\eta}\mathsf{B}(\x, \z_{t-1}) \right\},\\
           &\displaystyle \z_{t} =\mathop{\arg\min}\limits_{\x \in \W} \left\{\dd{\x}{\nabla f_{t}(\x_{t})}+ \frac{L}{\eta}\mathsf{B}(\x, \z_{t-1})\right\}.
\end{aligned}
\end{equation}

The following theorem states the variation-based regret bound for the general norm  framework, where $R$ measure the size of $\W$ defined as 
\[
R = \sqrt{2(\max_{\x\in\W} \Phi(\x) - \min_{\x\in\W}\Phi(\x))}.
\]
\begin{theorem} \label{thm:2-1}
Let $f_1, f_2, \cdots, f_T$ be a sequence of convex functions whose gradients are L-smooth continuous, $\Phi(\z)$ be a $\alpha$-strongly convex function, both with respect to norm $\|\cdot\|$, and ${\rm{EGV}}_T$ be defined in~(\ref{eqn:ge}). By setting
$\eta = (1/2)\min\left\{\sqrt{\alpha}/\sqrt{2}, LR/\sqrt{{\rm{EGV}}_T} \right\}$, we have the
following regret bound
\[
\sum_{t=1}^Tf_t(\x_t)-\min_{\x\in\W}\sum_{t=1}^Tf_t(\x) \leq 2R\max\left(\sqrt{2}LR/\sqrt{\alpha}, \sqrt{{\rm{EGV}}_T}\right).
\]
\end{theorem}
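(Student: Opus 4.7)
The plan is to mimic the analysis of Theorem~\ref{thm:2-grad} but replace the Euclidean squared distance by the Bregman divergence $\mathsf{B}(\cdot,\cdot)$ induced by $\Phi$, and then convert divergences to norms at the very end via $\alpha$-strong convexity of $\Phi$. Throughout, let $\x\in\W$ denote any fixed comparator and let $\|\cdot\|_{*}$ be the dual of $\|\cdot\|$.

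First I would start from convexity: $f_{t}(\x_{t})-f_{t}(\x)\leq \langle\nabla f_{t}(\x_{t}),\x_{t}-\x\rangle$, and split the inner product as
\[
\langle\nabla f_{t}(\x_{t}),\x_{t}-\x\rangle \;=\; \langle\nabla f_{t}(\x_{t}),\z_{t}-\x\rangle \;+\; \langle\nabla f_{t}(\x_{t}),\x_{t}-\z_{t}\rangle.
\]
For the first summand I would invoke the optimality condition of the $\z_{t}$-update together with the standard three-point identity for Bregman divergences,
\[
\langle\z_{t}-\x,\nabla\Phi(\z_{t})-\nabla\Phi(\z_{t-1})\rangle \;=\; \mathsf{B}(\x,\z_{t-1})-\mathsf{B}(\x,\z_{t})-\mathsf{B}(\z_{t},\z_{t-1}),
\]
to obtain a telescoping bound $\langle\nabla f_{t}(\x_{t}),\z_{t}-\x\rangle\leq(L/\eta)[\mathsf{B}(\x,\z_{t-1})-\mathsf{B}(\x,\z_{t})-\mathsf{B}(\z_{t},\z_{t-1})]$. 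For the second summand I would add and subtract $\nabla f_{t-1}(\x_{t-1})$ and use the optimality of the $\x_{t}$-update (again with the three-point identity) to cancel the $\nabla f_{t-1}(\x_{t-1})$ piece against $(L/\eta)[\mathsf{B}(\z_{t},\z_{t-1})-\mathsf{B}(\z_{t},\x_{t})-\mathsf{B}(\x_{t},\z_{t-1})]$, which is exactly what kills the $-\mathsf{B}(\z_{t},\z_{t-1})$ term left over from the first step.

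What remains is the ``error'' inner product $\langle\x_{t}-\z_{t},\nabla f_{t}(\x_{t})-\nabla f_{t-1}(\x_{t-1})\rangle$, which I would bound by H\"older's inequality and then split the gradient difference as
\[
\|\nabla f_{t}(\x_{t})-\nabla f_{t-1}(\x_{t-1})\|_{*} \;\leq\; \|\nabla f_{t}(\x_{t})-\nabla f_{t-1}(\x_{t})\|_{*} + L\,\|\x_{t}-\x_{t-1}\|,
\]
using $L$-smoothness for the second piece. Applying Young's inequality $ab\leq \tfrac{\eta}{2L}a^{2}+\tfrac{L}{2\eta}b^{2}$ (with appropriate weights) converts the first piece into a gradual-variation contribution $\tfrac{\eta}{L}\|\nabla f_{t}(\x_{t})-\nabla f_{t-1}(\x_{t})\|_{*}^{2}$ plus a $(L/\eta)$-multiple of $\|\x_{t}-\z_{t}\|^{2}$, which by $\alpha$-strong convexity of $\Phi$ is absorbed into $(2/\alpha)\mathsf{B}(\x_{t},\z_{t})$ (and similarly for the $L\|\x_{t}-\x_{t-1}\|$ piece, which I would further split against $\|\x_{t}-\z_{t-1}\|$ and $\|\z_{t-1}-\x_{t-1}\|$).

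Summing over $t=1,\dots,T$, the Bregman terms telescope to at most $\mathsf{B}(\x,\z_{0})\leq R^{2}/2$ (by definition of $R$ and $\z_{0}=\arg\min\Phi$), the $\|\x_{t}-\z_{t}\|^{2}$ residuals are absorbed provided the prefactors are nonnegative, and the total regret is bounded by
\[
\sum_{t=1}^{T} f_{t}(\x_{t})-f_{t}(\x) \;\leq\; \frac{LR^{2}}{2\eta} \;+\; \frac{\eta}{L}\,\mathrm{EGV}_{T},
\]
once the step size $\eta$ is small enough that the residual coefficients are $\le 0$; the constraint I expect is precisely $\eta\leq \tfrac{\sqrt{\alpha}}{2\sqrt{2}}$, matching the first branch of the prescribed $\eta$. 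Optimizing the remaining two-term bound in $\eta$ gives $\eta=\tfrac{LR}{2\sqrt{\mathrm{EGV}_{T}}}$ and value $R\sqrt{\mathrm{EGV}_{T}}$; taking $\eta$ as the minimum of the two caps yields the stated $2R\max(\sqrt{2}LR/\sqrt{\alpha},\sqrt{\mathrm{EGV}_{T}})$.

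The main obstacle I anticipate is the bookkeeping in step three: making sure that every $\|\x_{t}-\z_{t}\|^{2}$ and $\|\x_{t}-\z_{t-1}\|^{2}$ residual produced by the two Young applications is indeed dominated by $(2L/\eta)\mathsf{B}(\x_{t},\z_{t-1})$ and $(2L/\eta)\mathsf{B}(\z_{t},\x_{t})$ (which come out of the optimality-plus-three-point-identity step), so that the smoothness/strong-convexity constants combine exactly to give the $\sqrt{2}LR/\sqrt{\alpha}$ threshold in the $\eta$ cap. Once this accounting is done, the final choice of $\eta$ is a straightforward one-variable optimization.
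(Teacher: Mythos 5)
Your proposal is correct and follows essentially the same route as the paper: the paper proves this by mimicking the proof of Theorem~\ref{thm:2-grad}, whose core is exactly your combination of the two prox-step optimality conditions with the three-point identity (packaged there as the cited extragradient inequality of Lemma~\ref{lem:6}), followed by the split of $\|\nabla f_t(\x_t)-\nabla f_{t-1}(\x_{t-1})\|_*$ into a gradual-variation piece plus an $L\|\x_t-\x_{t-1}\|$ piece absorbed by the negative distance terms under the cap $\eta\le\sqrt{\alpha}/(2\sqrt{2})$. The only difference is presentational: you re-derive the extragradient lemma inline rather than invoking it as a black box.
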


In the following subsections we specialize the proposed general method to few specific online learning settings. 
\subsection{Online Linear Optimization}

Here we consider online linear optimization and   present the algorithm and the gradual  variation bound for this setting as a special case of proposed algorithm.  In particular, we are interested in bounding the regret by the gradual variation 
\[
{\rm{EGV}}^{f}_{T,2}=\sum_{t=0}^{T-1}\|\f_{t+1} - \f_t\|_2^2,
\] where $\f_t, t=1,\ldots, T$ are the linear cost vectors and $\f_0=0$. Since linear functions are smooth functions that satisfy the inequality in (\ref{eqn:smooth}) for any positive $L>0$, therefore we can apply  Algorithm~\ref{alg:3} to online linear optimization with any positive value for $L$~\footnote{We simply set $L=1$ for online linear optimization and prediction with expert advice.}. The regret bound of Algorithm~\ref{alg:3} for online linear optimization is presented in the following corollary.  
\begin{corollary}\label{cor:olo}
Let $f_t(\x) = \dd{\f_t}{\x}, t=1,\ldots, T$ be a sequence of linear functions. By setting $\eta= \displaystyle \sqrt{1/\left(2{\rm{EGV}}^{f}_{T,2}\right)}$ and $L=1$ in Algorithm~\ref{alg:3}, then we have
\[
\sum_{t=1}^T\f_t^{\top}\x_t - \min_{\x\in\W}\sum_{t=1}^T \dd{\f_t}{\x}\leq\sqrt{2{\rm{EGV}}^{f}_{T,2}}. 
\]
\end{corollary}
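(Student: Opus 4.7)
The plan is to specialize the argument that underlies Theorem~\ref{thm:2-grad} to the linear case, exploiting the fact that for $f_t(\x)=\dd{\f_t}{\x}$ the gradient is the \emph{constant} vector $\f_t$ and convexity holds with equality. This is precisely why any positive value of $L$ is admissible in Algorithm~\ref{alg:3}: the smoothness slack $\tfrac{L}{2}\|\cdot\|_2^2$ that appears in the general proof never needs to be paid. Fixing $L=1$ and working directly with the OMP updates will produce a slightly sharper constant than naively invoking Theorem~\ref{thm:2-grad}, which is what makes the $\sqrt{2\,{\rm{EGV}}^{f}_{T,2}}$ bound attainable.

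First I would extract the two first-order optimality inequalities for the subproblems defining $\x_t$ and $\z_t$. Testing the $\x_t$-subproblem at the point $\z_t\in\W$ and the $\z_t$-subproblem at an arbitrary comparator $\u\in\W$ yields, respectively,
\[
\dd{\x_t-\z_t}{\f_{t-1}} \;\le\; \tfrac{1}{\eta}\,\dd{\z_t-\x_t}{\x_t-\z_{t-1}}, \qquad \dd{\z_t-\u}{\f_t} \;\le\; \tfrac{1}{\eta}\,\dd{\u-\z_t}{\z_t-\z_{t-1}}.
\]
Next I would decompose the per-round regret via the algebraic identity
\[
\dd{\x_t-\u}{\f_t} \;=\; \dd{\x_t-\z_t}{\f_t-\f_{t-1}}+\dd{\x_t-\z_t}{\f_{t-1}}+\dd{\z_t-\u}{\f_t},
\]
plug the two optimality inequalities into the last two terms, and convert the resulting inner products by the Euclidean three-point identity $\dd{\a-\b}{\b-\c}=\tfrac12(\|\a-\c\|_2^2-\|\a-\b\|_2^2-\|\b-\c\|_2^2)$. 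After simplification, the $\u$-dependent terms collapse telescopically into $\tfrac{1}{2\eta}\bigl(\|\u-\z_{t-1}\|_2^2-\|\u-\z_t\|_2^2\bigr)$, a non-positive $-\tfrac{1}{2\eta}\|\x_t-\z_{t-1}\|_2^2$ appears and can be discarded, and crucially a term $-\tfrac{1}{2\eta}\|\x_t-\z_t\|_2^2$ is left over.

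Third, I would bound the cross term by Young's inequality,
\[
\dd{\x_t-\z_t}{\f_t-\f_{t-1}} \;\le\; \tfrac{1}{2\eta}\|\x_t-\z_t\|_2^2 + \tfrac{\eta}{2}\|\f_t-\f_{t-1}\|_2^2,
\]
and note that the first summand cancels \emph{exactly} the leftover $-\tfrac{1}{2\eta}\|\x_t-\z_t\|_2^2$. Summing the resulting per-round estimate over $t=1,\ldots,T$, using $\z_0=\mathbf{0}$, $\f_0=\mathbf{0}$, $\W\subseteq\mathbb{B}$ (so $\|\u\|_2\le 1$), and telescoping the $\|\u-\z_t\|_2^2$ differences gives
\[
\sum_{t=1}^{T}\dd{\x_t-\u}{\f_t} \;\le\; \frac{\eta}{2}\,{\rm{EGV}}^{f}_{T,2} + \frac{1}{2\eta}.
\]
Substituting $\eta=\sqrt{1/(2\,{\rm{EGV}}^{f}_{T,2})}$ then yields the claimed bound $\sqrt{2\,{\rm{EGV}}^{f}_{T,2}}$ (in fact the raw computation gives $\tfrac{3}{2\sqrt 2}\sqrt{{\rm{EGV}}^{f}_{T,2}}$, which is dominated by $\sqrt{2\,{\rm{EGV}}^{f}_{T,2}}$, so the prescribed step size is loose but admissible).

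The main obstacle is the bookkeeping in the second step: one must track signs carefully through the three-point identity so that the residual $-\tfrac{1}{2\eta}\|\x_t-\z_t\|_2^2$ carries exactly the coefficient that Young's inequality will generate. Once this cancellation is identified, the remaining work --- telescoping, using the diameter of $\W$, and tuning $\eta$ --- is routine.
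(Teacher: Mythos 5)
Your proof is correct and follows essentially the same route as the paper: the paper derives this corollary by specializing the Online Mirror Prox analysis of Theorem~\ref{thm:2-grad} (which rests on the extragradient inequality of Lemma~\ref{lem:6}) to linear losses, and your first-order-optimality conditions, three-point identity, and Young's-inequality cancellation of the $\|\x_t-\z_t\|_2^2$ term are exactly the Euclidean, linear instance of that lemma's proof, inlined rather than cited. The only difference is cosmetic --- your direct bookkeeping yields the marginally sharper constant $\tfrac{3}{2\sqrt 2}\sqrt{{\rm{EGV}}^{f}_{T,2}}$, which, as you note, is dominated by the stated $\sqrt{2\,{\rm{EGV}}^{f}_{T,2}}$.
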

{\begin{remark}
Note that the regret bound in Corollary~\ref{cor:olo} is  stronger than  the regret bound obtained in~\cite{hazan2010extracting} for online linear optimization  due to the fact that the gradual variation is smaller than the total variation. 
\end{remark}
}
\subsection{Prediction with Expert Advice}
In the problem of prediction with expert advice, the decision vector $\x$ is a distribution over $m$ experts, i.e., $\x\in\W=\{\x\in\mathbb R^m_+:  \sum_{i=1}^m w_i=1\}$.  Let $\f_t\in \mathbb R^m$ denote the costs for $m$ experts in trial $t$.  Similar to~\cite{Hazan-2008-extract}, we would like to bound the regret of prediction from expert advice by the gradual variation defined in infinite norm, i.e., 
\[
{\rm{EGV}}^{f}_{T, \infty}= \sum_{t=0}^{T-1}\|\f_{t+1} -\f_t\|_\infty^2.
\] Since it is a special online linear optimization problem, we can apply Algorithm~\ref{alg:3} to obtain a regret bound as in Corollary~\ref{cor:olo}, i.e., 
\[
\sum_{t=1}^T\f_t^{\top}\x_t - \min_{\x\in\W}\sum_{t=1}^T\dd{\f_t}{\x}\leq\sqrt{2{\rm{EGV}}_{T,2}^{f}}\leq \sqrt{2m{\rm{EGV}}^{f}_{T,\infty}}. 
\]
However, the above regret bound scales badly with the number of experts. We can  obtain a better regret bound in $O(\sqrt{{\rm{EGV}}^{f}_{T, \infty}}\ln m)$ by applying the general prox method in Algorithm~\ref{alg:3-1} with $\Phi(\x)= \sum_{i=1}^m w_i\ln w_i$ and $\mathsf{B}(\x, \z)= \sum_{i=1}^m w_i\ln(z_i/w_i)$. The two updates in Algorithm~\ref{alg:3-1} become 
\begin{equation}
\begin{aligned}
w_t^i  &= \frac{z^i_{t-1}\exp([\eta/L]f_{t-1}^i)}{\sum_{j=1}^mz^j_{t-1}\exp([\eta/L]f_{t-1}^j)}, i=1,\ldots, m\\
z_t^i &  = \frac{z^i_{t-1}\exp([\eta/L]f_{t}^i)}{\sum_{j=1}^mz^j_{t-1}\exp([\eta/L]f_{t}^j)}, i=1,\ldots, m.
\end{aligned}
\end{equation}

The resulting  regret bound is formally stated in the following Corollary. 
\begin{corollary}\label{cor:pea}
Let $f_t(\x) = \dd{\f_t}{\x}, t=1,\ldots, T$ be a sequence of linear functions in prediction with expert advice. By setting $\eta = \displaystyle \sqrt{(\ln m)/{\rm{EGV}}^{f}_{T,\infty}}$, $L=1$, $\Phi(\x)= \sum_{i=1}^m w_i\ln w_i$ and $\mathsf{B}(\x, \z)= \sum_{i=1}^m w_i\ln(w_i/z_i)$  in Algorithm~\ref{alg:3-1}, we have
\[
\sum_{t=1}^T \dd{\f_t}{\x_t} - \min_{\x\in\W}\sum_{t=1}^T \dd{\f_t}{\x}\leq \sqrt{2{\rm{EGV}}^{f}_{T,\infty}\ln m}. 
\]
\end{corollary}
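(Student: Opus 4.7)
\textbf{Proof plan for Corollary~\ref{cor:pea}.} The claim is morally a specialisation of Theorem~\ref{thm:2-1} to the simplex equipped with the entropic mirror map, combined with the fact that linear losses have identically constant gradients $\nabla f_t(\x)=\f_t$. My plan is to redo the proof of Theorem~\ref{thm:2-1} in this special case so that the constants coming from Pinsker's inequality and the geometry of the simplex are kept sharp; a naive plug-in to Theorem~\ref{thm:2-grad} would produce an $\ell_2$-variation bound with an extraneous factor $\sqrt{m}$, and a naive plug-in to Theorem~\ref{thm:2-1} would cost an extra constant factor because the linearity of the losses is not exploited.

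First I would collect the three geometric facts that specialise the general analysis. (i) Pinsker's inequality yields that $\Phi(\x)=\sum_i w_i\ln w_i$ is $\alpha=1$-strongly convex with respect to $\|\cdot\|_1$, whose dual norm is $\|\cdot\|_\infty$; (ii) $\min_{\x\in\Delta_m}\Phi(\x)=-\ln m$, so $\z_0=(1/m,\ldots,1/m)$ and $\mathsf{B}(\x,\z_0)=\sum_i w_i\ln(mw_i)\leq\ln m$ for every $\x\in\Delta_m$; (iii) a routine Lagrangian calculation turns each of the two Bregman-regularised linear subproblems in Algorithm~\ref{alg:3-1} into the multiplicative updates written in the statement. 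Since linear functions are $L$-smooth for every $L>0$, I am free to fix $L=1$ before running the analysis.

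Next I would establish the one-step prox inequality that drives all the mirror-prox arguments in this section. Using first-order optimality of $\x_t$ and $\z_t$ together with the three-point identity $\mathsf{B}(\u,\v)-\mathsf{B}(\u,\w)+\mathsf{B}(\v,\w)=\dd{\u-\v}{\nabla\Phi(\w)-\nabla\Phi(\v)}$, one obtains for every $\x\in\Delta_m$
\[
\dd{\x_t-\x}{\f_t}\;\leq\;\frac{1}{\eta}\bigl[\mathsf{B}(\x,\z_{t-1})-\mathsf{B}(\x,\z_t)\bigr]+\dd{\x_t-\z_t}{\f_t-\f_{t-1}}-\frac{1}{\eta}\mathsf{B}(\z_t,\x_t)-\frac{1}{\eta}\mathsf{B}(\x_t,\z_{t-1}).
\]
I would then bound the cross-term by H\"older's inequality $\dd{\x_t-\z_t}{\f_t-\f_{t-1}}\leq\|\x_t-\z_t\|_1\|\f_t-\f_{t-1}\|_\infty$, apply strong convexity $\mathsf{B}(\z_t,\x_t)\geq\tfrac12\|\z_t-\x_t\|_1^{2}$, and use Young's inequality $ab\leq a^2/(2\eta)+(\eta/2)b^2$ to absorb $\|\x_t-\z_t\|_1$ into the negative term $-(1/\eta)\mathsf{B}(\z_t,\x_t)$ at the price of an additive $(\eta/2)\|\f_t-\f_{t-1}\|_\infty^{2}$; the remaining negative term $-(1/\eta)\mathsf{B}(\x_t,\z_{t-1})$ is simply discarded.

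Finally, telescoping $t=1,\ldots,T$ collapses the Bregman differences to at most $\mathsf{B}(\x,\z_0)\leq\ln m$, leaving
\[
\sum_{t=1}^T\dd{\x_t-\x}{\f_t}\;\leq\;\frac{\ln m}{\eta}+\frac{\eta}{2}\,{\rm{EGV}}^f_{T,\infty}.
\]
Choosing $\eta=\sqrt{2\ln m/{\rm{EGV}}^f_{T,\infty}}$ (or, up to a harmless $\sqrt{2}$ factor, the value prescribed in the corollary) balances the two terms and, after minimising over $\x\in\Delta_m$, yields the advertised bound $\sqrt{2\,{\rm{EGV}}^f_{T,\infty}\ln m}$. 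The main technical subtlety, and the only place where something more delicate than copying the proof of Theorem~\ref{thm:2-grad} is required, is this treatment of the cross-term: it is precisely the non-Euclidean (entropic) geometry that replaces the factor $m$ that would arise from $\|\cdot\|_2\leq\sqrt{m}\|\cdot\|_\infty$ by the much milder $\ln m$ coming from the diameter of the simplex under KL.
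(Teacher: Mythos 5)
Your proposal is correct and follows essentially the same route as the paper, which obtains this corollary by specializing the mirror-prox one-step inequality (Lemma~\ref{lem:6}) to the entropic mirror map on the simplex, telescoping the KL terms against the diameter bound $\mathsf{B}(\x,\z_0)\leq \ln m$, and noting that for linear losses the smoothness-splitting step in the proof of Theorem~\ref{thm:2-grad} is vacuous. The one point worth recording is that your sharper Young split of the cross-term (absorbing $\tfrac{1}{2\eta}\|\x_t-\z_t\|_1^2$ into $-\tfrac{1}{\eta}\mathsf{B}(\z_t,\x_t)$, leaving coefficient $\eta/2$ on $\|\f_t-\f_{t-1}\|_\infty^2$) is exactly what is needed for the stated constant $\sqrt{2}$ to come out, and, as you observe, the $\eta$ prescribed in the corollary is a factor $\sqrt{2}$ away from the minimizer of the resulting bound $\ln m/\eta+(\eta/2)\,{\rm{EGV}}^{f}_{T,\infty}$ — a harmless constant-level discrepancy already present in the paper's own statement.
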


By noting the definition of $\rm{EGV}^f_{T,\infty}$,  the regret bound  in Corollary~\ref{cor:pea} is  $O\left(\sqrt{\sum_{t=0}^{T-1}\max_i |f_{t+1}^i - f_t^i|\ln m}\right)$, which is similar to the regret bound obtained in~\cite{Hazan-2008-extract} for  prediction with expert advice. However,  the definitions of the variation are not exactly the same. In \cite{Hazan-2008-extract}, the authors bound the regret of prediction with expert advice by $O\left(\sqrt{\ln m\max_i \sum_{t=1}^T |f_t^i -  \mu_t^i|^2}+ \ln m\right)$, where the variation is the maximum total variation over all experts. To compare the two regret bounds, we first consider two extreme cases. When the costs of all experts are the same, then the variation in Corollary~\ref{cor:pea} is a standard gradual variation, while the variation in  \cite{Hazan-2008-extract} is a standard total variation. According to the previous analysis, a gradual variation is smaller than a total variation, therefore the regret bound in Corollary~\ref{cor:pea} is better than that in \cite{Hazan-2008-extract}. In another extreme case when the costs at all iterations of each expert are the same, both regret bounds are constants. More generally, if we assume the maximum total variation is small (say a constant), then $\sum_{t=0}^{T-1}|f_{t+1}^i - f^i_t|$ is also a constant for any $i\in[m]$. By a trivial analysis  $\sum_{t=0}^{T-1}\max_i |f^i_{t+1} - f^i_t|\leq  m \max_i \sum_{t=0}^{T-1}|f_{t+1}^i - f_t^i|$,  the regret bound in Corollary~\ref{cor:pea} might be worse up to a factor $\sqrt{m}$ than  that in \cite{Hazan-2008-extract}.

\begin{remark}
It was shown in~\cite{chiang2012online}, both the regret bounds in Corollary~\ref{cor:olo} and Corollary~\ref{cor:pea} are optimal because they match  the lower bounds for a special sequence of loss functions.  In particular, for online linear optimization if  all  loss functions but the  first $T_k=\sqrt{{\rm{EGV}}^{f}_{T,2}}$ are all-0 functions, then the known lower bound $\Omega(\sqrt{T_k})$ matches the upper bound in Corollary~\ref{cor:olo}. Similarly, for prediction from expert advice if all  loss functions but the  first $T'_k=\sqrt{{\rm{EGV}}^{f}_{T,\infty}}$ are all-0 functions, then the known lower bound $\Omega(\sqrt{T'_k\ln m})$~\cite{bianchi-2006-prediction} matches the upper bound in Corollary~\ref{cor:pea}. 
\end{remark}

\subsection{Online Strictly Convex Optimization}
In this subsection, we present an algorithm to achieve a logarithmic variation bound for online strictly convex optimization. In particular, we assume the cost functions $f_t(\x)$ are not only smooth but also strictly convex  defined formally in the following. 
\begin{definition} For $\beta>0$, a function $f(\x):\W\rightarrow\mathbb R$ is $\beta$-strictly convex if for any $\x, \z\in\W$
\begin{equation}
\begin{aligned}\label{eqn:strict-r}
f(\x)\geq f(\z) + \nabla \dd{f(\z)}{\x-\z} + \beta (\x-\z)^{\top}\nabla f(\z)\nabla f(\z)^{\top} (\x-\z)
\end{aligned}
\end{equation}

\end{definition}
It is known that such  a defined strictly convex function include strongly convex function and exponential concave function as special cases as long as the gradient of the function is bounded. {To see this, if $f(\x)$ is a $\beta'$-strongly convex function with a bounded gradient $\|\nabla f(\x)\|_2\leq G$,  then 
\begin{equation}
\begin{aligned}
f(\x)&\geq f(\z) + \dd{\nabla f(\z)}{\x-\z} + \beta' \dd{\x-\z}{\x-\z}\\
&\geq f(\z) + \dd{\nabla f(\z)}{\x-\z} + \frac{\beta'}{G^2}(\x-\z)^{\top}\nabla f(\z)\nabla f(\z)^{\top} (\x-\z), 
\end{aligned}
\end{equation}

thus $f(\x)$ is  a $(\beta'/G^2)$ strictly convex. Similarly if $f(\x)$ is exp-concave, i.e., there exists $\alpha>0$ such that $h(\x)=\exp(-\alpha f(\x))$ is concave, then $f(\x)$ is a $\beta = 1/2\min(1/(4GD), \alpha)$ strictly convex (c.f. Lemma 2 in~\cite{hazan-log-newton}), where $D$ is defined as the diameter of the domain. Therefore, in addition to smoothness and strict convexity we also assume all the cost functions have bounded gradients, i.e., $\|\nabla f_t(\x)\|_2\leq G$. }

We now turn to  deriving a logarithmic  gradual variation bound for  online strictly convex optimization. To this end,  we need to change the  Euclidean distance function in Algorithm~\ref{alg:3}  to a generalized Euclidean distance function.  Specifically, at trial $t$, we let  $\H_t= \mathbf{I} + \beta G^2 \mathbf{I}  + \beta\sum_{\tau=0}^{t-1}\nabla f_\tau(\x_\tau)\nabla f_\tau(\x_\tau)^{\top}$ and use the generalized Euclidean distance $\mathsf{B}_t(\x,\z) = \frac{1}{2}\|\x-\z\|^2_{\H_t}=\frac{1}{2}(\x-\z)^{\top}\H_t(\x-\z)$ in updating $\x_t$ and $\z_t$, i.e., 

\begin{equation}\label{eqn:ust}
\begin{aligned}
           \x_t &= \mathop{\arg\min}\limits_{\x \in \W} \left\{\dd{\x}{\nabla f_{t-1}(\x_{t-1})}+
           \frac{1}{2}\|\x-\z_{t-1}\|^2_{\H_t}
            \right\}\\
           \z_t &= \mathop{\arg\min}\limits_{\x \in \W} \left\{\dd{\x}{\nabla f_{t}(\x_{t})}+
           \frac{1}{2}\|\x-\z_{t-1}\|^2_{\H_t}
            \right\},
\end{aligned}
\end{equation}
To prove the regret bound, we can prove a similar inequality as in Lemma~\ref{lem:6} by applying $\Phi(\x)= 1/2\|\x\|^2_{\H_t}$, which is stated as follows

\begin{equation}
 \begin{aligned}
\nabla f_t(\x_t)^{\top}(\x_t-\z)&\leq  \mathsf{B}_t(\z,\z_{t-1}) - \mathsf{B}_t(\z, \z_t) \nonumber\\
&\hspace*{-1in} + \|\nabla f_t(\x_t)-\nabla f_{t-1}(\x_{t-1})\|_{\H_t^{-1}}^2- \frac{1}{2}\left[\|\x_t-\z_{t-1}\|_{\H_t}^2 + \|\x_t-\z_t\|_{\H_t}^2\right].
 \end{aligned}
 \end{equation}

 Then by applying inequality in~(\ref{eqn:strict-r}) for strictly convex functions, we obtain the following 

\begin{equation}\label{eqn:keyg}
 \begin{aligned}
f_t(\x_t)-f_t(\z)&\leq  \mathsf{B}_t(\x,\z_{t-1}) - \mathsf{B}_t(\x, \z_t) - \beta\|\x_t-\z\|^2_{\M_t}  \\
&\hspace*{-0.8in} +\|\nabla f_t(\x_t)-\nabla f_{t-1}(\x_{t-1})\|_{\H_t^{-1}}^2- \frac{1}{2}\left[\|\x_t-\z_{t-1}\|_{\H_t}^2 + \|\x_t-\z_t\|_{\H_t}^2\right],
 \end{aligned}
 \end{equation}

where $\M_t=\nabla f_t(\x_t)\nabla f_t(\x_t)^{\top}$ and $\H_t= \mathbf{I} + \beta G^2 \mathbf{I}  + \beta\sum_{\tau=0}^{t-1}\M_{\tau}$ as defined above. The  following corollary shows that general OMP method attains  a logarithmic gradual variation bound and its proof is  deferred to later.

\begin{corollary}\label{cor:strict}
Let $f_1, f_2, \ldots, f_T$ be a sequence of $\beta$-strictly convex and $L$-smooth functions with gradients bounded by $G$. We assume $8dL^2\geq 1$, otherwise we can set $L=\sqrt{1/(8d)}$. An algorithm that adopts the updates in~(\ref{eqn:ust}) has a regret bounded by 
\[
\sum_{t=1}^Tf_t(\x_t) - \min_{\x\in\W}\sum_{t=1}^Tf_t(\x)\leq \frac{1 + \beta G^2}{2} +\displaystyle\frac{8d}{\beta} \ln\max(16dL^2, \beta{\rm{EGV}}_{T,2}),
\]
where ${\rm{EGV}}_{T,2}=\sum_{t=0}^{T-1}\|\nabla f_{t+1}(\x_t) - \nabla f_t(\x_t)\|_2^2$ and $d$ is the dimension of $\x\in\W$.
\end{corollary}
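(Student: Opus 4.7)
\medskip

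\noindent\textbf{Proof plan for Corollary~\ref{cor:strict}.}

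The plan is to sum the key inequality~(\ref{eqn:keyg}) from $t=1$ to $T$ with $\z = \w_*:=\arg\min_{\w\in\W}\sum_t f_t(\w)$, and then manage the fact that the Bregman divergences $\mathsf{B}_t(\cdot,\cdot)=\tfrac{1}{2}\|\cdot-\cdot\|_{\H_t}^2$ change from round to round because $\H_{t+1}=\H_t+\beta\M_t$. First I will use the identity $\mathsf{B}_{t+1}(\w_*,\z_t)=\mathsf{B}_t(\w_*,\z_t)+(\beta/2)\|\w_*-\z_t\|_{\M_t}^2$ to telescope, obtaining $\sum_{t=1}^T[\mathsf{B}_t(\w_*,\z_{t-1})-\mathsf{B}_t(\w_*,\z_t)] = \mathsf{B}_1(\w_*,\z_0)-\mathsf{B}_{T+1}(\w_*,\z_T)+(\beta/2)\sum_t\|\w_*-\z_t\|_{\M_t}^2$. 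The initial Bregman piece is at most $(1+\beta G^2)/2$ using $\z_0=0$ and $\|\w_*\|_2\le 1$. The residual $(\beta/2)\sum_t\|\w_*-\z_t\|_{\M_t}^2$ will be combined with the negative $-\beta\|\x_t-\w_*\|_{\M_t}^2$ from~(\ref{eqn:keyg}) via the two-term split $\|\w_*-\z_t\|_{\M_t}^2\le 2\|\w_*-\x_t\|_{\M_t}^2+2\|\x_t-\z_t\|_{\M_t}^2$; the $\beta\|\w_*-\x_t\|_{\M_t}^2$ part cancels exactly, and the residual $\beta\|\x_t-\z_t\|_{\M_t}^2$ is absorbed by $-\tfrac{1}{2}\|\x_t-\z_t\|_{\H_t}^2$. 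Absorption is feasible precisely because of the design $\H_t\succeq (1+\beta G^2)\mathbf{I}\succeq 2\beta\M_t$ (using $\M_t\preceq G^2\mathbf{I}$), which is the reason the $\beta G^2\mathbf{I}$ summand is baked into $\H_t$.

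Next I turn to the variation term $\|\nabla f_t(\x_t)-\nabla f_{t-1}(\x_{t-1})\|_{\H_t^{-1}}^2$. Using $\H_t^{-1}\preceq\mathbf{I}$ and the inequality $\|a+b\|^2\le 2\|a\|^2+2\|b\|^2$, I split it into $2\|\nabla f_t(\x_t)-\nabla f_{t-1}(\x_t)\|_2^2$ (which sums to $2\,\mathrm{EGV}_{T,2}$) plus $2L^2\|\x_t-\x_{t-1}\|_2^2$ by the $L$-smoothness of $f_{t-1}$. For the latter, expand $\|\x_t-\x_{t-1}\|^2\le 2\|\x_t-\z_{t-1}\|^2+2\|\z_{t-1}-\x_{t-1}\|^2$. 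These two quantities will then be cancelled against the negative step-displacement terms $-\tfrac{1}{2}\|\x_t-\z_{t-1}\|_{\H_t}^2$ (from round $t$) and $-\tfrac{1}{2}\|\x_{t-1}-\z_{t-1}\|_{\H_{t-1}}^2$ (from round $t-1$), which dominate the Euclidean versions because $\H_t\succeq\mathbf{I}$. The assumption $8dL^2\ge 1$, together with the stated fallback $L=\sqrt{1/(8d)}$ if it fails, is exactly what guarantees that after the doubled triangle-inequality splittings the constant in front of $L^2\|\x_t-\x_{t-1}\|^2$ stays below the coefficient available from the negative terms, so the displacement terms drop out of the regret bound altogether.

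What remains after these cancellations is to bound $\sum_{t=1}^T\|\nabla f_t(\x_t)\|_{\H_t^{-1}}^2$-style quantities by $d\ln(\cdot)/\beta$. For this step I will invoke the potential-function lemma used in the Online Newton Step analysis (Lemma~11 of Hazan et al.): for $A_t = A_0+\sum_{\tau\le t}\u_\tau\u_\tau^\top$ with $\|\u_\tau\|_2\le r$, one has $\sum_t\u_t^\top A_t^{-1}\u_t\le\ln(\det A_T/\det A_0)\le d\ln\bigl(1+ r^2 T/(d\lambda_{\min}(A_0))\bigr)$. Applying this with $A_t=\H_{t+1}/\beta$ and $\u_\tau=\nabla f_\tau(\x_\tau)$, so that $\lambda_{\min}(A_0)=(1+\beta G^2)/\beta$, and combining with the constants $2$ and $8L^2$ produced by the triangle-inequality splittings of the gradient-discrepancy term, will yield $(8d/\beta)\ln\max(16dL^2,\beta\,\mathrm{EGV}_{T,2})$; the $\max(\cdot,\cdot)$ inside the log captures the two regimes where either $\mathrm{EGV}_{T,2}$ or the initial conditioning $L^2$ dominates. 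Adding the leftover $(1+\beta G^2)/2$ from the initial Bregman term yields the stated bound.

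The main obstacle will be bookkeeping three different quadratic forms simultaneously—$\M_t$-norm error to the comparator, $\H_t$-norm step displacements, and $\H_t^{-1}$-norm gradient discrepancies—so that the constants in front of each align with the negative terms available for absorption. Choosing the wrong split (e.g.\ using a $(1+\epsilon)$ inequality with a poorly chosen $\epsilon$) breaks the clean cancellation; the particular construction $\H_t=(1+\beta G^2)\mathbf{I}+\beta\sum_{\tau<t}\M_\tau$ and the mild assumption $8dL^2\ge 1$ are, I expect, precisely calibrated to make every constant match and produce the $8d/\beta$ prefactor in the logarithm.
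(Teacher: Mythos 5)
Your overall architecture --- sum the key inequality~(\ref{eqn:keyg}) at $\z=\w_*$, telescope the Bregman terms using $\H_{t+1}=\H_t+\beta\M_t$, cancel the comparator terms, and finish with a log-det potential plus a case analysis --- is the right one, but two of your concrete steps fail, and the second is fatal to getting a \emph{logarithmic} dependence on ${\rm{EGV}}_{T,2}$. The first problem is local: your absorption of the residual $\beta\|\x_t-\z_t\|^2_{\M_t}$ into $-\tfrac12\|\x_t-\z_t\|^2_{\H_t}$ rests on $(1+\beta G^2)\mathbf{I}\succeq 2\beta\M_t$, i.e.\ on $\beta G^2\le 1$, which is not assumed. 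The paper only uses the free inequality $\beta\M_t\preceq\H_t$ to get $\beta\|\x_t-\z_t\|^2_{\M_t}\le\|\x_t-\z_t\|^2_{\H_t}$ and then converts this, via the prox-step stability bound $\|\x_t-\z_t\|_{\H_t}\le\|\nabla f_t(\x_t)-\nabla f_{t-1}(\x_{t-1})\|_{\H_t^{-1}}$, into a \emph{second} copy of the gradient-discrepancy term; that is where the overall factor $2$, and hence the prefactor $8d/\beta$ rather than $4d/\beta$, comes from. Your plan also spends the same $-\tfrac12\|\x_t-\z_t\|^2_{\H_t}$ budget twice: once for this absorption and once (shifted by one index, as $\|\x_{t-1}-\z_{t-1}\|^2_{\H_{t-1}}$) in the displacement pairing that produces $\tfrac14\sum_t\|\x_{t+1}-\x_t\|^2_2$.

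The second, essential gap is in your treatment of the variation term. You bound $\|\nabla f_t(\x_t)-\nabla f_{t-1}(\x_{t-1})\|^2_{\H_t^{-1}}$ by its unweighted Euclidean version via $\H_t^{-1}\preceq\mathbf{I}$ and split it into $2\,{\rm{EGV}}_{T,2}+2L^2\sum_t\|\x_t-\x_{t-1}\|^2_2$; once the $\H_t^{-1}$ weighting is discarded the sum is \emph{linear} in ${\rm{EGV}}_{T,2}$ and no potential argument can recover a logarithm. The potential lemma you then invoke is applied to the raw gradients $\u_\tau=\nabla f_\tau(\x_\tau)$, which controls $\sum_t\|\nabla f_t(\x_t)\|^2_{\H_t^{-1}}$ --- a quantity that does not appear in the decomposition and whose log-det bound scales as $d\ln(1+G^2T)$, not $d\ln(\beta{\rm{EGV}}_{T,2})$. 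The missing idea is the domination $\H_t\succeq\mathbf{I}+\tfrac{\beta}{4}\sum_{\tau\le t}(\v_\tau-\v_{\tau-1})(\v_\tau-\v_{\tau-1})^\top$ with $\v_\tau=\nabla f_\tau(\x_\tau)$, a consequence of $\v_\tau\v_\tau^\top+\v_{\tau-1}\v_{\tau-1}^\top\succeq\tfrac12(\v_\tau-\v_{\tau-1})(\v_\tau-\v_{\tau-1})^\top$, which lets the ONS potential lemma be applied to the scaled \emph{differences} and yields $\sum_t\|\v_t-\v_{t-1}\|^2_{\H_t^{-1}}\le\tfrac{4d}{\beta}\ln\bigl(1+\tfrac{\beta}{4}\sum_t\|\v_t-\v_{t-1}\|^2_2\bigr)$. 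Only \emph{inside} this logarithm should the smoothness split be performed, so that the $L^2\sum_t\|\x_{t+1}-\x_t\|^2_2$ piece can be pulled out via $\ln(a+x)\le\ln a+x/a$ and cancelled against the $\tfrac14\sum_t\|\x_{t+1}-\x_t\|^2_2$ supplied by the displacement terms; this linearization step is precisely where the assumption $8dL^2\ge1$ and the case split on $\beta{\rm{EGV}}_{T,2}$ versus $16dL^2$ are used, not in the place you assign to them.
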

%%%%%%%%%%%%%%%%%%%%%%%%%%%%%%%%%%%%%%%%%%%%%%%%%%

\subsection{Gradual Variation Bounds which Hold Uniformly over Time }
As mentioned in Remark~\ref{remark:2}, the algorithms presented in this chapter  rely on the previous
knowledge of the gradual variation ${\rm{EGV}}_{T,2}$ to tune the learning rate $\eta$ to obtain the optimal bound. Here, we show that  the  Algorithm~\ref{alg:2-iftrl} can be used as a black-box to achieve the same regret  bound but without any prior knowledge of the ${\rm{EGV}}_{T,2}$.  We note that the analysis here is not specific to Algorithm~\ref{alg:2-iftrl} and it is general enough to  be adapted to other algorithms in the chapter too.

The main idea is to run the algorithm in epochs with a fixed learning rate $\eta_k = \eta_0/2^k$ for $k$th epoch where $\eta_0$ is a fixed constant and will be decided by analysis. We denote the number of epochs by $K$ and let $b_{k}$ denote the start of $k$th epoch. We note that $b_{K+1} = T+1$. Within $k$th epoch, the algorithm  ensures that the inequality $\eta_k \sum_{t=b_k}^{b_{k+1}-1}{\|\nabla f_{t+1}(\z_t) - \nabla f_{t}(\z_t)\|_2^2} \leq L^2 \eta_k^{-1}$ holds.  To this end, the algorithm  computes and maintains the quantity $\sum_{s=b_k}^{t}{\|\nabla f_{s+1}(\z_s) - \nabla f_{s}(\z_s)\|_2^2}$ and sets the beginning of new epoch to be $b_{k+1} = \min_{t} \sum_{s=b_k}^{t}{\|\nabla f_{s+1}(\z_s) - \nabla f_{s}(\z_s)\|_2^2} > L^2 \eta_k^{-1}$, i.e., the first iteration for which the invariant  is violated. We note that this decision can only be made after seeing the $t$th cost function. Therefor, we burn the first iteration of each epoch which causes an extra regret of $K L^2$ in the total regret. From the analysis we have:

\begin{equation}
 \begin{aligned}
 \label{eqn:dt1}
 \sum_{t=1}^Tf_t(\x_t)-\min_{\x\in\W}\sum_{t=1}^T f_t(\x) &\leq \sum_{k=1}^{K}{ \left[ \sum_{t=b_k}^{b_{k+1}-1}{f_t(\x_t)-\min_{\x\in\W}\sum_{t=b_k}^{b_{k+1}-1} f_t(\x)} \right]}\\ \nonumber
 &\leq \sum_{k=1}^{K}{\frac{L}{2 \eta_k} + \frac{\eta_k}{2L} {\rm{EGV}}_{b_k : b_{k+1}-1}} + K L^2 \\ \nonumber
 &\leq \sum_{k=1}^{K}{\frac{L}{2\eta_k} + \frac{L}{2\eta_k}} + K =  2L \sum_{k=1}^{K}{\eta_k^{-1}} + K L^2
 \end{aligned}
 \end{equation}

where the first inequality follows the analysis of algorithm for each epoch, the last inequality follows the invariant maintained within each phase and the constant $K L^2$ is due to burning the first iteration of each epoch. We now try to upper bound the last term.  We first note that $\sum_{k=1}^{K}{\eta_k^{-1}} = \sum_{k=1}^{K-1}{\eta_0^{-1} 2^{k}} + \eta_0^{-1} 2^{K} = \eta_0^{-1}(2^K - 1) + \eta_0^{-1} 2^{K} \leq \eta_0^{-1} 2^{K+1}$. Furthermore, from $b_K$, we know that $\eta_{K-1} \sum_{t=b_{K-1}}^{b_K}{\|\nabla f_{t+1}(\z_t) - \nabla f_{t}(\z_t)\|_2^2} \geq L^2 \eta_{K-1}^{-1}$ since the $b_K$ is the first iteration within epoch $K-1$ which violates the invariant. Also, from the monotonicity of  gradual variation one can obtain that $\eta_{K-1} {\rm{EGV}}_{T,2} \geq \eta_{K-1} \sum_{t=b_{K-1}}^{b_K}{\|\nabla f_{t+1}(\z_t) - \nabla f_{t}(\z_t)\|_2^2} \geq  L^2 \eta_{K-1}^{-1}$ which indicates $\eta_{K-1}^{-1} \leq \sqrt{{\rm{EGV}_{T,2}}}/L$.  Putting these together, from~(\ref{eqn:dt1}) we obtain:

\begin{equation}
 \begin{aligned}
 \sum_{t=1}^Tf_t(\x_t)-\min_{\x\in\W}\sum_{t=1}^T f_t(\x)  \leq 2L \frac{2^{K+1}}{\eta_0} + K L^2   & \leq 8{\sqrt{{\rm{EGV}}_{T,2}}}  + K L^2.
 \end{aligned}
 \end{equation}

It remains to bound the number of epochs in terms of ${\rm{EGV}}_{T,2}$.  A simple idea would be to set $K$ to be $\lfloor \log {\rm{EGV}}_{T,2} \rfloor + 1 $, since it is the maximum number of epochs that could exists.  Alternatively, we can also bound  $K$ in terms of $\sqrt{{\rm{EGV}}_{T,2}}$ which worsen the  constant  factor  in the bound but results in  a bound  similar to one obtained by setting optimal $\eta$. 

%%%%%%%%%%%%%%%%%%%%%%%%%%%%%%%%%%%%%%%%%%%%%%%%%%

%%%%%%%%%%%%%%%%%%%%%%%%%%%%%%%%%%%%%%%%%%%%%%%%%%
%%%%%%%%%%%%%%%%%%%%%%%%%%%%%%%%%%%%%%%%%%%%%%%%%%
\section{Bandit Online Mirror Prox with Gradual Variation Bounds}
\label{sec:bandit}

Online convex optimization becomes more challenging when the learner only receives partial feedback about the cost functions. One common scenario of partial feedback is that the learner only receives the cost $f_t(\x_t)$ at the predicted point $\x_t$ but without observing the entire cost function $f_t(\cdot)$. This setup is usually referred to as  bandit setting, and the related online learning problem is called online bandit convex optimization.

Recently  Hazan et al~\cite{DBLP:conf/soda/HazanK09} extended the FTRL algorithm to online bandit \textit{linear} optimization and obtained a variation-based regret bound in the form of $O(poly(d)\sqrt{\VAR_T\log(T)}+poly(d\log(T)))$, where $\VAR_T$ is the total variation of the cost vectors. We continue this line of work by proposing  algorithms for general  online bandit convex optimization with a variation-based regret bound.  We  present a  deterministic  algorithm for online bandit convex optimization by extending the OPM algorithm to a multi-point bandit setting, and prove the variation-based regret bound, which is optimal when the variation is independent of the number of trials. In our  bandit setting , we assume we are allowed to query $d+1$ points around the decision point $\x_t$.  

To develop a variation bound for online bandit convex optimization, we follow~\cite{agarwal-2010-optimal} by considering the multi-point bandit setting, where at each trial the player is allowed to query the cost functions at multiple points. We propose a deterministic algorithm to compete against the \textit{completely adaptive} adversary that can choose the cost function $f_t(\x)$ with the knowledge of $\x_1,\cdots, \x_{t}$. To approximate the gradient $\nabla f_t(\x_t)$, we query the cost function to obtain the cost values at $f_t(\x_t)$, and $f_t(\x_t+\delta\e_i),i=1,\cdots, d$, where $\e_i$ is the  $i$th standard base in $\mathbb R^d$. Then we compute the estimate of the gradient $\nabla f_t(\x_t)$ by

\begin{equation}
\begin{aligned}
\mathbf{g}_t= \frac{1}{\delta}\sum_{i=1}^d \left(f_t(\x_t+\delta \e_i)- f_t(\x_t)\right)\e_i.
\end{aligned}
\end{equation}

It can be shown that~\cite{agarwal-2010-optimal}, under the smoothness assumption in~(\ref{eqn:smooth}),

\begin{equation}
\begin{aligned}\label{eqn:gb}
\|\mathbf{g}_t-\nabla f_t(\x_t)\|_2\leq \frac{\sqrt{d}L\delta}{2}.
\end{aligned}
\end{equation}

To prove the regret bound, besides the smoothness assumption of the cost functions, and the boundness assumption about the domain $\W\subseteq\mathcal B$,  we further assume that (i) there exists $r\leq 1$ such that $
r\mathcal B\subseteq \W\subseteq \mathcal B$, and (ii) the cost function themselves are Lipschitz continuous, i.e., there exists a constant $G$ such that

\begin{equation}
\begin{aligned}
|f_t(\x)-f_t(\w')|\leq G\|\x-\w'\|_2, \forall \x, \w'\in\W, \forall t.
\end{aligned}
\end{equation}

For our purpose, we define another gradual variation of cost functions by

\begin{equation}
\begin{aligned}
{\rm{EGV}}^{c}_T=\sum_{t=0}^{T-1} \max_{\x\in\W}|f_{t+1}(\x)-f_t(\x)|. \label{eqn:var-d}
\end{aligned}
\end{equation}

Unlike the gradual variation defined in~(\ref{eqn:vard}) that uses the gradient of the cost functions, the gradual variation in (\ref{eqn:var-d}) is defined according to the values of cost functions. The reason why we bound the regret  by the gradual variation defined in~(\ref{eqn:var-d})  by the values of the cost functions rather than the one defined in~(\ref{eqn:vard}) by the gradient of the cost functions is that in the bandit setting, we only have point evaluations of the cost functions. The following theorem states the regret bound for Algorithm~\ref{alg:8}. 

%The following theorem states the regret bound for Algorithm~\ref{alg:4} where the proof is deferred to Appendix B.
\begin{algorithm}[t]
\center \caption{Deterministic Online Bandit Convex Optimization}
\begin{algorithmic}[1] \label{alg:8}
    \STATE {\textbf{Input}}: $\eta$, $\alpha$, $\delta>0$

    \STATE {\textbf{Initialize:}}: $\z_0 = \mathbf{0}$ and $f_0(\x) =
    0$
    \FOR{$t = 1, \ldots, T$}
        \STATE Compute $\x_t$ by
        $$
            \x_t = \mathop{\arg\min}\limits_{\x \in(1-\alpha) \W} \left\{\dd{\x}{\mathbf{g}_{t-1}}+
            \frac{G}{2\eta}\|\x - \z_{t-1}\|_2^2
            \right\}
        $$
        \STATE Observe $f_t(\x_t), f_t(\x_t+\delta\e_i),i=1,\cdots, d$
       \STATE Update $\z_t$ by
        $$
           \z_{t} = \mathop{\arg\min}\limits_{\x \in (1-\alpha)\W} \left\{\dd{\x}{ \mathbf{g}_{t}}+\frac{G}{2\eta}\|\x-\z_{t-1}\|_2^2 \right\}
       $$
        %\STATE Observe $f_t(\z_t), f_t(\z_t+\delta\e_i),i=1,\cdots, d$
    \ENDFOR
\end{algorithmic}
\end{algorithm}

\begin{theorem} 
\label{thm:3}
Let $f_t(\cdot), t=1, \ldots, T$ be a sequence of $G$-Lipschitz continuous  convex
functions, and their gradients are $L$-Lipschitz continuous. By setting $\displaystyle\delta =\sqrt{\frac{4d\max(\sqrt{2}G,\sqrt{{\rm{EGV}}^{c}_T})}{(\sqrt{d}L+G(1+1/r))T}}$,
$\displaystyle\eta = \frac{\delta}{4d}\min\left\{\frac{1}{\sqrt{2}}, \frac{G}{\sqrt{{\rm{EGV}}^{c}_T}} \right\}$,  and $\alpha=\delta/r$, we have the
following regret bound for
Algorithm~\ref{alg:8}

\begin{equation}
\begin{aligned}
\sum_{t=1}^Tf_t(\x_t)-  \min\limits_{\x
\in \W} \sum_{t=1}^T f_t(\x)  \leq 4\sqrt{\max\left(\sqrt{2}G, \sqrt{{\rm{EGV}}^{c}_T}\right)d\left(dL+G/r\right)T.}
\end{aligned}
\end{equation}
\end{theorem}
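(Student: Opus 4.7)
The plan is to reduce the bandit analysis to the full-information \texttt{OMP} result (Theorem~\ref{thm:2-grad}) by treating the $(d+1)$-point estimate $\mathbf{g}_t$ as a surrogate gradient, and then accounting for three distinct approximation costs: (i) the domain shrinkage from $\W$ to $(1-\alpha)\W$, (ii) the bias between $\mathbf{g}_t$ and $\nabla f_t(\x_t)$, and (iii) the mismatch between the gradual variation of the surrogates $\{\mathbf{g}_t\}$ and the target $\text{EGV}^c_T$ defined in terms of function values. First, I would pick $\x^{*}\in\arg\min_{\x\in\W}\sum_t f_t(\x)$, set $\tilde{\x}^{*}=(1-\alpha)\x^{*}\in(1-\alpha)\W$, and write
\[
\sum_{t=1}^T f_t(\x_t)-f_t(\x^{*}) \;=\; \underbrace{\sum_{t=1}^T\bigl[f_t(\x_t)-f_t(\tilde{\x}^{*})\bigr]}_{\text{(A)}} \;+\; \underbrace{\sum_{t=1}^T\bigl[f_t(\tilde{\x}^{*})-f_t(\x^{*})\bigr]}_{\text{(B)}}.
\]
Term (B) is immediately bounded by $G\alpha T$ via the Lipschitz assumption and $\|\x^{*}\|\leq 1$. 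For (A), I linearize by convexity and insert $\mathbf{g}_t$: $f_t(\x_t)-f_t(\tilde{\x}^{*})\leq\langle\mathbf{g}_t,\x_t-\tilde{\x}^{*}\rangle+\langle\nabla f_t(\x_t)-\mathbf{g}_t,\x_t-\tilde{\x}^{*}\rangle$, where Cauchy--Schwarz together with the bias bound $(\ref{eqn:gb})$ and $\|\x_t-\tilde{\x}^{*}\|\leq 2$ controls the second summand by $\sqrt{d}L\delta\cdot T$.

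The main work is to bound $\sum_{t=1}^T\langle\mathbf{g}_t,\x_t-\tilde{\x}^{*}\rangle$ by replaying the proof of Theorem~\ref{thm:2-grad} with $\nabla f_t(\x_t)$ replaced by $\mathbf{g}_t$ and the smoothness constant $L$ replaced by $G$ (the role $G$ plays in the prox-regularizer of Algorithm~\ref{alg:8}). The key inequality from the \texttt{OMP} proof transfers verbatim because it uses only the optimality conditions of the two prox-steps and not convexity of $f_t$; it yields
\[
\sum_{t=1}^T\langle\mathbf{g}_t,\x_t-\tilde{\x}^{*}\rangle \;\leq\; \frac{G\,D^{2}}{2\eta}\;+\;\frac{\eta}{G}\sum_{t=1}^{T}\|\mathbf{g}_t-\mathbf{g}_{t-1}\|_2^{2},
\]
up to the usual non-positive telescoping $\|\x_t-\z_{t-1}\|^{2}$ terms, with $D$ the Euclidean diameter of $(1-\alpha)\W\subseteq\B$.

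Next I would translate the surrogate variation $\sum_t\|\mathbf{g}_t-\mathbf{g}_{t-1}\|_2^{2}$ into $\text{EGV}^c_T$. To do so I introduce the hypothetical estimator $\tilde{\mathbf{g}}_{t}=\tfrac{1}{\delta}\sum_{i=1}^d\bigl(f_t(\x_{t-1}+\delta\e_i)-f_t(\x_{t-1})\bigr)\e_i$, i.e.\ the estimate of $\nabla f_t$ at the \emph{previous} iterate, and split $\mathbf{g}_t-\mathbf{g}_{t-1}=(\mathbf{g}_t-\tilde{\mathbf{g}}_{t})+(\tilde{\mathbf{g}}_{t}-\mathbf{g}_{t-1})$. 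The second piece depends only on the function difference $f_t-f_{t-1}$ evaluated at the fixed points $\x_{t-1}$ and $\x_{t-1}+\delta\e_i$, so its norm is at most $\tfrac{2\sqrt{d}}{\delta}\max_{\x\in\W}|f_t(\x)-f_{t-1}(\x)|$. The first piece reflects only the motion from $\x_{t-1}$ to $\x_t$ within a fixed function, and by $L$-smoothness and $(\ref{eqn:gb})$ can be bounded in terms of $\sqrt{d}L\|\x_t-\x_{t-1}\|$; the negative $\|\x_t-\z_{t-1}\|^{2}$ terms carried over from the \texttt{OMP} identity absorb exactly this point-motion contribution after choosing $\eta$ small enough, which is why the prefactor $4d$ appears in the target choice of $\delta$.

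Finally, I would collect (A)+(B), substitute $\delta$, $\eta$, $\alpha$ as prescribed so that the shrinkage cost $G\alpha T$, the bias cost $\sqrt{d}L\delta T$, the regularization cost $GD^{2}/(2\eta)$ and the variation cost $\eta\,\text{EGV}^c_T/(G\delta^{2})$ balance, and read off the claimed $\tfrac{1}{2}$-power bound. The main obstacle I anticipate is Step~4: carefully threading the point-motion error through the negative telescoping terms in the \texttt{OMP} identity so that only the genuine function-value variation $\max_\x|f_{t+1}(\x)-f_t(\x)|$ survives and accumulates into $\text{EGV}^c_T$. Getting the constants to match the $4\sqrt{\max(\sqrt{2}G,\sqrt{\text{EGV}^c_T})\,d(dL+G/r)T}$ form requires being sharp about which factors of $\delta^{-2}$ and $d$ appear in the two halves of the split above, and choosing $\alpha=\delta/r$ so that the shrinkage and bias costs share the same $\sqrt{T}$ scale.
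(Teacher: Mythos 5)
Your proposal follows essentially the same route as the paper's proof: the paper also reduces to the full-information prox inequality (via the auxiliary functions $h_t(\x)=f_t(\x)+\dd{\mathbf{g}_t-\nabla f_t(\x_t)}{\x}$, which is just a packaging of your direct bias insertion), performs the identical split of $\mathbf{g}_t-\mathbf{g}_{t-1}$ into a same-function/point-motion piece and a fixed-point/function-variation piece, absorbs the former into the negative telescoping terms via $\eta\le\delta/(4\sqrt{2}d)$, and pays $\sqrt{d}L\delta T$ for the bias and $G\alpha T$ for the shrinkage before balancing. The only cosmetic deviation is that the paper bounds the point-motion piece by $G$-Lipschitzness of $f_t$ (giving $4d^2G^2\|\x_t-\x_{t-1}\|^2/\delta^2$) rather than by $L$-smoothness, which is slightly cleaner but does not change the argument.
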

%Note that in the regret bound in Theorem~\ref{thm:3}, the loss in each trial  is averaged  over the decision vector $\x_t$ and its perturbations $\x_t+\delta\e_i,i=1,\cdots, d$.

\begin{remark} Similar to the regret bound in~\cite{agarwal-2010-optimal}(Theorem 9), Algorithm~\ref{alg:8} also gives the optimal regret bound $O(\sqrt{T})$ when the variation is independent of the number of trials. Our regret bound has a better dependence on $d$ (i.e., $d$) compared with the regret bound in~\cite{agarwal-2010-optimal} (i.e., $d^2$). %Finally, we note that Algorithm~\ref{alg:4} needs to query $2(d+1)$ points. We can reduce the number of query points to $d+3$ by random sampling as shown in the next subsection.
\end{remark}

\section{Proofs of Gradual Variation}
%\addcontentsline{toc}{section}{\protect\numberline{}Appendix}

%%%%%%%%%%%%%%%%%%%%%%%%%%%%%%%%%%%%%%%%%%%%%%%%%%%
\subsection{Proof of Theorem~\ref{thm:1-grad}}
To prove Theorem~\ref{thm:1-grad}, we first present the following lemma.
\begin{lemma}\label{lem:1g}
Let $f_1, f_2,  \ldots, f_T$ be a sequence of convex functions with $L$-Lipschitz continuous gradients. By running Algorithm~\ref{alg:2-iftrl} over $T$ trials, we have

\begin{equation*}
\begin{aligned}
\sum_{t=1}^T f_t(\x_t) &\leq \min\limits_{\x \in \W}
\left[\frac{L}{2\eta}\|\x\|_2^2 + \sum_{t=1}^T f_t(\z_{t-1}) + \dd{\x -
\z_{t-1}}{\nabla f_t(\z_{t-1})}\right]\\
& \hspace*{0.2in}+ \frac{\eta}{2L}
\sum_{t=0}^{T-1} \|\nabla f_{t+1}(\z_t) - \nabla f_{t}(\z_t)\|_2^2.
\end{aligned}
\end{equation*}

\end{lemma}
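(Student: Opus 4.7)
The plan is to reduce the claim to an optimistic–FTRL regret bound on the linearized losses and then handle the gradient–difference terms by Young's inequality. Throughout, the comparator $\x\in\W$ is fixed and arbitrary; we set $f_0\equiv 0$ and interpret $\nabla f_0(\z_0)=0$ so that the first variation term in the statement is meaningful.

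First, I would apply $L$-smoothness of $f_t$ at $\z_{t-1}$ to get
\[
f_t(\x_t) \le f_t(\z_{t-1}) + \langle \x_t-\z_{t-1},\nabla f_t(\z_{t-1})\rangle + \tfrac{L}{2}\|\x_t-\z_{t-1}\|_2^2,
\]
and then add and subtract $\langle \x-\z_{t-1},\nabla f_t(\z_{t-1})\rangle$. Summing over $t$, the lemma reduces to proving
\[
\sum_{t=1}^T \langle \x_t-\x,\nabla f_t(\z_{t-1})\rangle + \tfrac{L}{2}\sum_{t=1}^T \|\x_t-\z_{t-1}\|_2^2 \;\le\; \tfrac{L}{2\eta}\|\x\|_2^2 + \tfrac{\eta}{2L}\sum_{t=0}^{T-1}\|\nabla f_{t+1}(\z_t)-\nabla f_t(\z_t)\|_2^2.
\]

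Next, I would split $\sum_t \langle \x_t-\x,\nabla f_t(\z_{t-1})\rangle = \sum_t \langle \z_t-\x,\nabla f_t(\z_{t-1})\rangle + \sum_t \langle \x_t-\z_t,\nabla f_t(\z_{t-1})\rangle$. For the first sum, the $\z_t$ iterates are exactly the FTRL sequence on the linear losses $\nabla f_t(\z_{t-1})$ with regularizer $\tfrac{L}{2\eta}\|\x\|_2^2$, so the standard follow-the-leader/be-the-leader inequality yields $\sum_t \langle \z_t-\x,\nabla f_t(\z_{t-1})\rangle \le \tfrac{L}{2\eta}\|\x\|_2^2$. For the second sum, I would compare the first-order optimality conditions for $\x_t$ (using gradient $\nabla f_{t-1}(\z_{t-1})$) and for $\z_t$ (using gradient $\nabla f_t(\z_{t-1})$). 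Writing $\nabla f_t(\z_{t-1}) = \nabla f_{t-1}(\z_{t-1}) + \bigl(\nabla f_t(\z_{t-1})-\nabla f_{t-1}(\z_{t-1})\bigr)$, the "common" piece against $\nabla f_{t-1}(\z_{t-1})$ is handled by playing these two optimality inequalities against each other and invoking the three-point identity for the Euclidean regularizer, which gives
\[
\langle \x_t-\z_t,\nabla f_{t-1}(\z_{t-1})\rangle \;\le\; \tfrac{L}{2\eta}\bigl[\|\z_t-\z_{t-1}\|_2^2-\|\x_t-\z_{t-1}\|_2^2-\|\x_t-\z_t\|_2^2\bigr],
\]
while the "error" piece is controlled by Cauchy–Schwarz and Young's inequality:
\[
\langle \x_t-\z_t,\nabla f_t(\z_{t-1})-\nabla f_{t-1}(\z_{t-1})\rangle \;\le\; \tfrac{L}{2\eta}\|\x_t-\z_t\|_2^2 + \tfrac{\eta}{2L}\|\nabla f_t(\z_{t-1})-\nabla f_{t-1}(\z_{t-1})\|_2^2.
\]

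Summing and collecting, the $\|\x_t-\z_t\|_2^2$ terms cancel exactly, the $\|\z_t-\z_{t-1}\|_2^2$ terms cancel against the standard stability term produced by the FTRL be-the-leader bound, and the remaining $\tfrac{L}{2}\|\x_t-\z_{t-1}\|_2^2$ contribution from smoothness is absorbed by $-\tfrac{L}{2\eta}\|\x_t-\z_{t-1}\|_2^2$ precisely because $\eta\in(0,1]$ implies $L/2\le L/(2\eta)$. This leaves only $\tfrac{L}{2\eta}\|\x\|_2^2$ and the gradual–variation term $\tfrac{\eta}{2L}\sum_{t=0}^{T-1}\|\nabla f_{t+1}(\z_t)-\nabla f_t(\z_t)\|_2^2$, matching the claim. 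The main obstacle is the two-step optimality comparison: the update for $\z_t$ is stated in FTRL form rather than proximal form, so the three-point identity is not immediate and one must either exploit the FTRL/projected-prox equivalence for the Euclidean regularizer or invoke an FTRL stability lemma to obtain the negative $\|\x_t-\z_t\|_2^2$ term; once this is in hand, everything else is careful bookkeeping with smoothness and Young's inequality, and the role of the restriction $\eta\le 1$ becomes transparent.
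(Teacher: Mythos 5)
Your plan is correct, and every step checks out: the three-point inequality $\dd{\x_t-\z_t}{\nabla f_{t-1}(\z_{t-1})}\le \frac{L}{2\eta}\bigl[\|\z_t-\z_{t-1}\|_2^2-\|\x_t-\z_{t-1}\|_2^2-\|\x_t-\z_t\|_2^2\bigr]$ follows from the $(L/\eta)$-strong convexity of the proximal objective defining $\x_t$ alone (with $\z_t\in\W$ plugged in as comparator), so your worry about the FTRL form of the $\z_t$-update is moot for that piece; the cancellations and the use of $\eta\le 1$ to absorb $\frac{L}{2}\|\x_t-\z_{t-1}\|_2^2$ into $-\frac{L}{2\eta}\|\x_t-\z_{t-1}\|_2^2$ are exactly right. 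The one presentational wrinkle is that the be-the-leader bound you display, $\sum_t\dd{\z_t-\x}{\nabla f_t(\z_{t-1})}\le\frac{L}{2\eta}\|\x\|_2^2$, is the plain version without the stability term, whereas your final accounting needs the strengthened version $\sum_t\dd{\z_t-\x}{\nabla f_t(\z_{t-1})}\le\frac{L}{2\eta}\|\x\|_2^2-\frac{L}{2\eta}\sum_t\|\z_t-\z_{t-1}\|_2^2$, obtained by exploiting the $(L/\eta)$-strong convexity of the cumulative objective at its minimizer $\z_{t-1}$; you clearly intend this (you invoke the cancellation against $\|\z_t-\z_{t-1}\|_2^2$), but it should be stated explicitly. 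Your route differs from the paper's in organization rather than substance: the paper runs an induction on the potential $\phi_t=\min_{\x\in\W}\psi_t(\x)$, proving $\phi_{t+1}\ge\phi_t+f_{t+1}(\x_{t+1})$ per round, while you unroll the same three ingredients (strong convexity of the regularized cumulative loss at $\z_t$, proximal optimality of $\x_t$, and completing the square on the gradient difference) into a single telescoped decomposition of the form optimistic-FTRL regret plus per-round correction terms. Your version makes the provenance of each term in the bound, and the precise role of the restriction $\eta\le 1$, more transparent; the paper's potential-function induction is more self-contained per step and mirrors the structure it reuses for the prox-method variant in Theorem~\ref{thm:2-grad}.
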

With this lemma, we can easily prove Theorem~\ref{thm:1-grad} by exploring the convexity of $f_t(\x)$. 
\begin{proof}[Proof of Theorem~\ref{thm:1-grad}]
By using $\|\x\|_2\leq 1,\forall \x\in\W \subseteq \mathcal B$,  and the convexity of $f_t(\x)$, we have
 \[
 \min\limits_{\x \in \W}
\left\{\frac{L}{2\eta}\|\x\|_2^2 + \sum_{t=1}^T f_t(\z_{t-1}) + \dd{\x -
\z_{t-1}}{\nabla f_t(\z_{t-1})}\right\}\leq \frac{L}{2\eta}+\min_{\x\in\W}\sum_{t=1}^Tf_t(\x).
 \]
 Combining the above result with Lemma~\ref{lem:1g}, we have

\begin{equation*}
 \begin{aligned}
 \sum_{t=1}^Tf_t(\x_t)-\min_{\x\in\W}\sum_{t=1}^T f_t(\x)\leq \frac{L}{2\eta} + \frac{\eta}{2L}{\rm{EGV}}_{T,2}.
 \end{aligned}
 \end{equation*}

 By choosing $\eta=\min(1, L/\sqrt{{\rm{EGV}}_{T,2}})$, we have the regret bound claimed in Theorem~\ref{thm:1-grad}.
\end{proof}
The Lemma~\ref{lem:1g} is proved by induction. The key to the proof is that $\z_t$ is the optimal solution to the strongly convex minimization problem in Lemma~\ref{lem:1g}, i.e., 

\begin{equation*}
\begin{aligned}
\z_t = \arg\min\limits_{\x \in \W}
\left[\frac{L}{2\eta}\|\x\|_2^2 + \sum_{\tau=1}^t f_\tau(\z_{\tau-1}) + \dd{\x -
\z_{\tau-1}}{\nabla f_\tau(\z_{\tau-1})}\right]
\end{aligned}
\end{equation*}

\begin{proof}[Proof of Lemma~\ref{lem:1g}]
We prove the inequality by induction. When $T = 1$, we have $\x_1 =
\z_0 = 0$ and

\begin{equation*}
\begin{aligned}
&
\min\limits_{\x \in \W} \left[\frac{L}{2\eta}\|\x\|_2^2 +f_1(\z_0)+ \dd{\x -
\z_0}{\nabla f_1(\z_0)} \right] + \frac{\eta}{2L}\|\nabla f_1(\z_0)\|_2^2 \\
& \geq  f_1(\z_0) + \frac{\eta}{2L}\|\nabla f_1(\z_0)\|_2^2 +
\min\limits_{\x} \left\{\frac{L}{2\eta}\|\x\|_2^2 + \dd{\x -
\z_0}{\nabla f_1(\z_0)} \right\} \\
&= f_1(\z_0) = f_1(\x_1).
\end{aligned}
\end{equation*}

where the inequality follows that by relaxing  the minimization domain $\x\in\W$ to the whole space.  We assume the inequality holds for $t$ and aim to prove it for $t+1$. To this end, we define

\begin{equation*}
\begin{aligned}
 \psi_t(\x) &=  \left[\frac{L}{2\eta}\|\x\|_2^2 +
\sum_{\tau=1}^{t} f_\tau(\z_{\tau-1}) + \dd{\x - \z_{\tau-1}}{\nabla
f_\tau(\z_{\tau-1})}\right] \\ &\hspace*{0.15in}+ \frac{\eta}{2L} \sum_{\tau=0}^{t - 1} \|\nabla
f_{\tau+1}(\z_\tau) - \nabla f_{\tau}(\z_\tau)\|_2^2.
\end{aligned}
\end{equation*}

According to the updating procedure for $\z_t$ in step 6, we have $\z_{t}=\arg\min_{\x\in\W}\psi_{t}(\x)$. Define $\phi_t = \psi_t(\z_t) = \min_{\x\in\W}\psi_t(\x)$. Since $\psi_t(\x)$ is a $(L/\eta)$-strongly convex function, we  have
\begin{equation*}
\begin{aligned}
\psi_{t+1}(\x)-\psi_{t+1}(\z_{t})&\geq \frac{L}{2\eta}\|\x-\z_{t}\|_2^2 + \dd{\x-\z_t}{ \nabla\psi_{t+1}(\z_t)}\\
&=\frac{L}{2\eta}\|\x-\z_{t}\|_2^2 + \dd{\x-\z_t}{\nabla\psi_{t}(\z_t)+\nabla f_{t+1}(\z_t)}.
\end{aligned}
\end{equation*}

Setting $\x=\z_{t+1}=\arg\min_{\x\in\W}\psi_{t+1}(\x)$ in the above inequality results in

\begin{equation}
\begin{aligned}
\psi_{t+1}(\z_{t+1}) - \psi_{t+1}(\z_t) &= \phi_{t+1} - (\phi_t + f_{t+1}(\z_t)  + \frac{\eta}{2L}\|\nabla f_{t+1}(\z_t)
- \nabla f_t(\z_t)\|_2^2)\\
&\geq \frac{L}{2\eta}\|\z_{t+1}-\z_{t}\|_2^2 + \dd{\z_{t+1}-\z_t}{\nabla\psi_{t}(\z_t)+\nabla f_{t+1}(\z_t)}\\
&\geq \frac{L}{2\eta}\|\z_{t+1}-\z_{t}\|_2^2 + \dd{\z_{t+1}-\z_t}{\nabla f_{t+1}(\z_t)},
\end{aligned}
\end{equation}

where the second inequality follows from the fact $\z_t=\arg\min_{\x\in\W}\psi_t(\x)$, and therefore $(\x-\z_t)^{\top}\nabla\psi_t(\z_t)\geq 0, \forall\x\in\W$. Moving $f_{t+1}(\z_t)$ in the above inequality  to the right hand side,  we have

\begin{equation}
\begin{aligned}
\lefteqn{\phi_{t+1} - \phi_t - \frac{\eta}{2L}\|\nabla f_{t+1}(\z_t)
- \nabla f_t(\z_t)\|_2^2} \label{eqn:indg}\\
&\geq \frac{L}{2\eta}\|\z_{t+1} -\z_t\|_2^2 + \dd{\z_{t+1} - \z_{t}}{\nabla f_{t+1}(\z_t)}+ f_{t+1}(\z_t)\\
& \geq  \min\limits_{\x \in \W} \left\{ \frac{L}{2\eta}\|\x -\z_t\|_2^2 + \dd{\x - \z_{t}}{\nabla f_{t+1}(\z_t)}+ f_{t+1}(\z_t)\right\} \\
& =  \min\limits_{\x\in \W}\left\{  \underbrace{\frac{L}{2\eta}\|\x - \z_t\|_2^2 + \dd{\x - \z_{t}}{\nabla f_{t}(\z_t)}}\limits_{\rho(\x)} + f_{t+1}(\z_t)+ \underbrace{\dd{\x -
\z_{t}}{\nabla f_{t+1}(\z_t) - \nabla f_{t}(\z_t)}}\limits_{r(\x)}\right\}.
\end{aligned}
\end{equation}

To bound the right hand side, we note that $\x_{t+1}$ is the minimizer of $\rho(\x)$ by step 4 in Algorithm~\ref{alg:2-iftrl}, and $\rho(\x)$ is a $L/\eta$-strongly convex function, so we have

\begin{equation*}
\begin{aligned}
\rho(\x)\geq \rho(\x_{t+1}) + \underbrace{\dd{\x-\x_{t+1}}{\nabla\rho(\x_{t+1})}}\limits_{\geq 0}+ \frac{L}{2\eta}\|\x-\x_{t+1}\|_2^2\geq \rho(\x_{t+1}) +  \frac{L}{2\eta}\|\x-\x_{t+1}\|_2^2.
\end{aligned}
\end{equation*}

Then we have
\begin{equation*}
\begin{aligned}
&\rho(\x) + f_{t+1}(\z_t) + r(\x) \geq  \rho(\x_{t+1})  + f_{t+1}(\z_t) +  \frac{L}{2\eta}\|\x-\x_{t+1}\|_2^2+ r(\x)\\
&= \underbrace{ \frac{L}{2\eta}\|\x_{t+1} - \z_t\|^2_2 +
\dd{\x_{t+1} - \z_t}{\nabla f_t(\z_t)}}\limits_{\rho(\x_{t+1})} +f_{t+1}(\z_t) +\frac{L}{2\eta}\|\x-\x_{t+1}\|_2^2+ r(\x)
\end{aligned}
\end{equation*}

Plugging above inequality into the inequality in~(\ref{eqn:indg}), we have

\begin{equation*}
\begin{aligned}
\phi_{t+1} &- \phi_t - \frac{\eta}{2L}\|\nabla f_{t+1}(\z_t)
- \nabla f_t(\z_t)\|_2^2\\
\geq&  \frac{L}{2\eta}\|\x_{t+1} - \z_t\|^2_2 +
\dd{\x_{t+1} - \z_t}{\nabla f_t(\z_t)+ f_{t+1}(\z_t)}  \\
&+ \min\limits_{\x\in \W}\left\{ \frac{L}{2\eta}\|\x -
\x_{t+1}\|_2^2 + \dd{\x - \z_t}{\nabla f_{t+1}(\z_t) - \nabla
f_{t}(\z_t)}\right\} \\
\end{aligned}
\end{equation*}

To continue the bounding, we  proceed as follows

\begin{equation*}
\begin{aligned}
\phi_{t+1} &- \phi_t - \frac{\eta}{2L}\|\nabla f_{t+1}(\z_t)
- \nabla f_t(\z_t)\|_2^2\\
\geq&  \frac{L}{2\eta}\|\x_{t+1} - \z_t\|^2_2 +
\dd{\x_{t+1} - \z_t}{\nabla f_t(\z_t)}+ f_{t+1}(\z_t)  \\
&+ \min\limits_{\x\in \W}\left\{ \frac{L}{2\eta}\|\x -
\x_{t+1}\|_2^2 + \dd{\x - \z_t}{\nabla f_{t+1}(\z_t) - \nabla
f_{t}(\z_t)}\right\} \\
=&  \frac{L}{2\eta}\|\x_{t+1} - \z_t\|^2_2  +\dd{\x_{t+1} - \z_t}{\nabla f_{t+1}(\z_t)}  + f_{t+1}(\z_t) \\
&  + \min\limits_{\x\in \W}\left\{ \frac{L}{2\eta}\|\x -\x_{t+1}\|_2^2 + \dd{\x - \x_{t+1}}{\nabla f_{t+1}(\z_t) - \nabla f_{t}(\z_t)}\right\}\\
\geq&  \frac{L}{2\eta}\|\x_{t+1} - \z_t\|^2_2  +\dd{\x_{t+1} - \z_t}{\nabla f_{t+1}(\z_t)}+ f_{t+1}(\z_t) \\
&+ \min\limits_{\x}\left\{ \frac{L}{2\eta}\|\x -\x_{t+1}\|_2^2 + \dd{\x - \x_{t+1}}{\nabla f_{t+1}(\z_t) - \nabla f_{t}(\z_t)}\right\} \\
=&  \frac{L}{2\eta}\|\x_{t+1} - \z_t\|^2_2 +
\dd{\x_{t+1} - \z_t}{\nabla f_{t+1}(\z_t)} + f_{t+1}(\z_t) -
\frac{\eta}{2L}\|\nabla
f_{t+1}(\z_t) - \nabla f_{t}(\z_t)\|_2^2 \\
\geq &  f_{t+1}(\x_{t+1}) - \frac{\eta}{2L}\|\nabla f_{t+1}(\z_t) -
\nabla f_{t}(\z_t)\|_2^2,
\end{aligned}
\end{equation*}

where the first equality follows by writing $\dd{\x_{t+1} - \z_t}{\nabla f_t(\z_t)} =
\dd{\x_{t+1} - \z_t}{\nabla f_{t+1}(\z_t)} -
\dd{\x_{t+1} - \z_t}{\nabla f_{t+1}(\z_t)-\nabla f_t(\z_t)}$ and combining with $\dd{\x-\z_{t}}{\nabla f_{t+1}(\z_t)-\nabla f_t(\z_t)}$, and the last inequality follows from the smoothness condition of $f_{t+1}(\x)$.
Since by induction $\phi_t \geq \sum_{\tau=1}^t f_{\tau}(\x_\tau)$, we have
$\phi_{t+1} \geq \sum_{\tau=1}^{t+1} f_\tau(\x_\tau)$.
\end{proof}

%%%%%%%%%%%%%%%%%%%%%%%%%%%%%%%%%%%%%%%%%%%%%%%%
\subsection{Proof of Theorem~\ref{thm:2-grad}}
To prove Theorem~\ref{thm:2-grad}, we need the following lemma, which is the Lemma 3.1 in~\cite{Nemirovski2005} stated in our notations.

 \begin{lemma}[\textbf{Lemma 3.1}~\cite{Nemirovski2005}]\label{lem:6}
Let $\Phi(\z)$ be a  $\alpha$-strongly convex function with respect to the norm $\|\cdot\|$, whose dual norm is denoted by $\|\cdot\|_*$,  and $\mathsf{B}(\x,\z) = \Phi(\x)- (\Phi(\z) + (\x-\z)^{\top}\Phi'(\z))$ be the Bregman distance induced by function $\Phi(\x)$. Let $\mathscr{Z}$ be a convex compact set, and $\mathscr{U}\subseteq \mathscr{Z}$ be convex and closed.  Let $\z\in \mathscr{Z}$, $\gamma>0$, Consider the points,

\begin{equation}
\begin{aligned}
 \x &= \arg\min_{\u\in \mathscr{U}} \gamma\u^{\top}\xi + \mathsf{B}(\u, \z)\label{eqn:project1},\\
\end{aligned}
\end{equation}
\begin{equation}
\begin{aligned}
 \z_+&=\arg\min_{\u\in \mathscr{U}} \gamma\u^{\top}\zeta + \mathsf{B}(\u,\z),\label{eqn:project2}
\end{aligned}
\end{equation}

 then for any $\u\in \mathscr{U}$, we have
\begin{equation*}
 \begin{aligned}\label{eqn:ineq}
 \gamma\zeta^{\top}(\x-\u)\leq  \mathsf{B}(\u,\z) - \mathsf{B}(\u, \z_+) + \frac{\gamma^2}{\alpha}\|\xi-\zeta\|_*^2 - \frac{\alpha}{2}\left[\|\x-\z\|^2 + \|\x-\z_+\|^2\right].
 \end{aligned}
 \end{equation*}
 \end{lemma}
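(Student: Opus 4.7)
The plan is to derive the inequality from the first-order optimality conditions of the two projection-type subproblems (\ref{eqn:project1}) and (\ref{eqn:project2}) and combine them with the classical three-point identity for Bregman divergences, $\langle \nabla \Phi(a) - \nabla \Phi(b),\, c - a\rangle = \mathsf{B}(c,b) - \mathsf{B}(c,a) - \mathsf{B}(a,b)$, which follows directly from the definition of $\mathsf{B}$.

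First I would write the optimality condition for $\z_+$ tested against an arbitrary $\u \in \mathscr{U}$: since $\z_+$ minimizes $\gamma \dd{\u}{\zeta} + \mathsf{B}(\u,\z)$ over the closed convex set $\mathscr{U}$, we have $\dd{\gamma\zeta + \nabla\Phi(\z_+) - \nabla\Phi(\z)}{\u - \z_+} \geq 0$. After rearrangement and an application of the three-point identity with $(a,b,c) = (\z_+,\z,\u)$ this yields $\gamma \dd{\zeta}{\z_+ - \u} \leq \mathsf{B}(\u,\z) - \mathsf{B}(\u,\z_+) - \mathsf{B}(\z_+,\z)$, which already isolates the $\u$-dependent Bregman difference that appears on the right-hand side of the lemma.

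Next I would split $\gamma \dd{\zeta}{\x - \u} = \gamma\dd{\zeta}{\z_+ - \u} + \gamma \dd{\zeta}{\x - \z_+}$ and rewrite the residual as $\gamma \dd{\xi}{\x - \z_+} + \gamma\dd{\zeta-\xi}{\x-\z_+}$. The $\xi$-term is controlled by the optimality condition for $\x$ tested at $\u=\z_+$, namely $\dd{\gamma\xi + \nabla\Phi(\x) - \nabla\Phi(\z)}{\z_+ - \x} \geq 0$, which via the three-point identity with $(a,b,c)=(\x,\z,\z_+)$ gives $\gamma\dd{\xi}{\x - \z_+} \leq \mathsf{B}(\z_+,\z) - \mathsf{B}(\z_+,\x) - \mathsf{B}(\x,\z)$. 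The two $\mathsf{B}(\z_+,\z)$ terms from the two optimality inequalities cancel, producing the skeleton bound $\mathsf{B}(\u,\z) - \mathsf{B}(\u,\z_+) - \mathsf{B}(\z_+,\x) - \mathsf{B}(\x,\z)$ plus the residual cross term.

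For the cross term I would use the primal--dual Cauchy--Schwarz inequality $\gamma\dd{\zeta-\xi}{\x-\z_+} \leq \gamma\|\xi-\zeta\|_*\|\x-\z_+\|$ followed by Young's inequality in the form $ab \leq \tfrac{a^2}{\alpha} + \tfrac{\alpha}{4}b^2$, giving an upper bound of $\tfrac{\gamma^2}{\alpha}\|\xi-\zeta\|_*^2 + \tfrac{\alpha}{4}\|\x-\z_+\|^2$. Finally I invoke the strong convexity of $\Phi$ in the two remaining Bregman terms, $\mathsf{B}(\x,\z) \geq \tfrac{\alpha}{2}\|\x-\z\|^2$ and $\mathsf{B}(\z_+,\x) \geq \tfrac{\alpha}{2}\|\x-\z_+\|^2$, which after absorbing the $\tfrac{\alpha}{4}\|\x-\z_+\|^2$ left over from Young's step reproduces the claimed negative quadratic penalty $-\tfrac{\alpha}{2}[\|\x-\z\|^2 + \|\x-\z_+\|^2]$ (up to an absorbable slack that can be removed by redistributing Young's constants).

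The main obstacle is purely bookkeeping: the two optimality inequalities each generate a $\mathsf{B}(\z_+,\z)$ term with opposite sign, and it is essential to apply the three-point identity in the right orientation so these cancel; if they did not, one would be stuck with a stray Bregman term that cannot be controlled by any of the quadratic penalties on the right. Once the identity is applied with matching anchor points, the remainder of the argument is a mechanical combination of strong convexity and Young's inequality.
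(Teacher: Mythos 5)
Your decomposition is sound and, up to packaging, follows the same route as the paper: both proofs rest on the two variational inequalities $\langle \gamma\xi + \nabla\Phi(\x)-\nabla\Phi(\z),\,\v-\x\rangle\ge 0$ and $\langle \gamma\zeta + \nabla\Phi(\z_+)-\nabla\Phi(\z),\,\v-\z_+\rangle\ge 0$, tested at $\v=\z_+$ and $\v=\u$ respectively, and your three-point-identity bookkeeping (with the cancellation of the two $\mathsf{B}(\z_+,\z)$ terms) is correct and arguably cleaner than the paper's explicit expansion of $\mathsf{B}(\u,\z)-\mathsf{B}(\u,\z_+)$ and its auxiliary quantity $\epsilon$.

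There is, however, a genuine gap in your last step, and the parenthetical claim that the slack ``can be removed by redistributing Young's constants'' is false. After strong convexity you hold $-\frac{\alpha}{2}\|\x-\z_+\|^2$ in reserve, and any Young split $ab\le \frac{a^2}{2\epsilon}+\frac{\epsilon}{2}b^2$ applied to $\gamma\|\xi-\zeta\|_*\,\|\x-\z_+\|$ that keeps the coefficient of $\|\xi-\zeta\|_*^2$ at $\frac{\gamma^2}{\alpha}$ forces $\epsilon=\alpha/2$ and therefore leaves a residual $+\frac{\alpha}{4}\|\x-\z_+\|^2$; shrinking the residual further necessarily inflates the $\|\xi-\zeta\|_*^2$ coefficient beyond $\frac{\gamma^2}{\alpha}$. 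So your argument as written proves the inequality only with $-\frac{\alpha}{4}\|\x-\z_+\|^2$ in place of $-\frac{\alpha}{2}\|\x-\z_+\|^2$ (adequate for the downstream regret bounds after adjusting constants, but not the lemma as stated). The missing ingredient is the one extra step the paper takes: test the optimality condition for $\x$ at $\z_+$ \emph{and} the one for $\z_+$ at $\x$, add them, and use strong convexity of $\Phi$ to conclude $\alpha\|\x-\z_+\|^2 \le \gamma\langle \xi-\zeta,\,\z_+-\x\rangle \le \gamma\|\xi-\zeta\|_*\|\x-\z_+\|$, hence $\|\x-\z_+\|\le \frac{\gamma}{\alpha}\|\xi-\zeta\|_*$. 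With this, the cross term is bounded directly by $\gamma\|\xi-\zeta\|_*\|\x-\z_+\|\le \frac{\gamma^2}{\alpha}\|\xi-\zeta\|_*^2$ with no quadratic residual at all, and both $\frac{\alpha}{2}$ coefficients survive intact.
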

 In order not to have readers  struggle with complex notations in~\cite{Nemirovski2005} for the proof of Lemma~\ref{lem:6}, we present a detailed proof  later in Appendix~\ref{chap:appendix-technical} which is an adaption of the original proof to our notations.

Theorem~\ref{thm:2-grad} can be proved by using the above lemma, because the updates of $\x_t, \z_t$ can be  written equivalently as~(\ref{eqn:project1}). The proof below starts from~(\ref{eqn:ineq}) and bounds the summation of each term over $t=1,\ldots, T$, respectively. 

\begin{proof}[Proof of Theorem~\ref{thm:2-grad}]
First, we note that the two updates in step 4 and step 6 of Algorithm~\ref{alg:3} fit in the Lemma~\ref{lem:6}  if we let $\mathscr{U}= \mathscr{Z}=\W$,  $\z=\z_{t-1}$, $\x=\x_t$, $\z_+=\z_t$,  and $\Phi(\x)=\frac{1}{2}\|\x\|_2^2$, which is $1$-strongly convex function with respect to $\|\cdot\|_2$.  Then $\mathsf{B}(\u,\z)=\frac{1}{2}\|\u-\z\|_2^2$. As a result, the two updates for $\x_t, \z_t$ in Algorithm~\ref{alg:3} are exactly the updates in~(\ref{eqn:project1})  with $\z=\z_{t-1}, \gamma=\eta/L$, $\xi=\nabla f_{t-1}(\z_{t-1})$, and $\zeta=\nabla f_t(\x_t)$.  Replacing these into~(\ref{eqn:ineq}), we have the following inequality for any $\u\in\W$, 

\begin{equation}
\begin{aligned}
\frac{\eta}{L} \dd{\x_t - \u}{\nabla f_t(\x_t)} &\le
\frac{1}{2}\left(\|\u-\z_{t-1} \|_2^2 - \|\u-\z_{t} \|_2^2\right)\\
& \hspace*{-0.4in}+
\frac{\eta^2}{L^2}\|\nabla f_{t}(\x_t) - \nabla f_{t-1}(\x_{t-1})\|_2^2 -
\frac{1}{2}\left(\|\x_t - \z_{t-1}\|_2^2 + \|\x_t-\z_t\|_2^2\right)
\end{aligned}
\end{equation}
Then we have
\begin{equation}
\begin{aligned}
&\frac{\eta}{L}(f_t(\x_t) - f_t(\u))\leq \frac{\eta}{L} (\x_t - \u)^{\top}\nabla f_t(\x_t) \leq  \frac{1}{2}\left(\|\u-\z_{t-1} \|_2^2 - \|\u-\z_{t} \|_2^2\right)\\
&+ \frac{2\eta^2}{L^2} \|\nabla
f_{t}(\x_{t-1}) - \nabla f_{t-1}(\x_{t-1})\|_2^2+ \frac{2\eta^2}{L^2}\|\nabla
f_{t}(\x_t) - \nabla f_{t}(\x_{t-1})\|_2^2  \\
&- \frac{1}{2}\left(\|\x_t - \z_{t-1}\|_2^2+\|\x_t-\z_t\|_2^2\right)  \\
& \leq  \frac{1}{2}\left(\|\u-\z_{t-1}\|_2^2 - \|\u-\z_{t}\|_2^2\right)+ \frac{2\eta^2}{L^2}\|\nabla f_{t}(\x_{t-1}) - \nabla f_{t-1}(\x_{t-1})\|_2^2 \\ & +  2\eta^2\|\x_t - \x_{t-1}\|_2^2 - \frac{1}{2}\left(\|\x_t - \z_{t-1}\|_2^2+\|\x_t-\z_t\|_2^2\right),
\end{aligned}
\end{equation}
where the first inequality follows the convexity of $f_t(\x)$, and the third inequality follows the smoothness  of $f_t(\x)$.  

By taking the summation over $t=1,\cdots, T$  with  $\z^*=\arg\min\limits_{\u\in\W}\sum_{t=1}^Tf_t(\u)$, and dividing both sides by $\eta/L$, we have
\begin{equation}\label{eqn:cb}
\begin{aligned}
\sum_{t=1}^T f_t(\w_t) - \min_{\w \in \W }\sum_{t=1} f_t(\w) &\leq  \frac{L}{2\eta}+ \frac{2\eta}{L}\sum_{t=0}^{T-1}\|\nabla f_{t+1}(\w_{t}) - \nabla f_{t}(\w_{t})\|_2^2\nonumber\\
&\hspace*{-0.8in} +\sum_{t=1}^T2\eta^2\|\w_t-\w_{t-1}\|_2^2 - \underbrace{\sum_{t=1}^T \frac{1}{2}\left(\|\w_t - \z_{t-1}\|_2^2+\|\w_t-\z_t\|_2^2\right)}\limits_{\triangleq B_T}
\end{aligned}
\end{equation}

We can bound $B_T$ as follows: 

\begin{equation}
\begin{aligned}
B_T&= \frac{1}{2}\sum_{t=1}^T\|\x_t-\z_{t-1}\|_2^2  +\frac{1}{2}\sum_{t=2}^{T+1}\|\x_{t-1}-\z_{t-1}\|_2^2\notag\\
&\geq  \frac{1}{2}\sum_{t=2}^T\left(\|\x_t-\z_{t-1}\|_2^2  + \|\x_{t-1}-\z_{t-1}\|_2^2\right)\notag\\
&\geq \frac{1}{4}\sum_{t=2}^T\|\x_t-\x_{t-1}\|_2^2= \frac{1}{4}\sum_{t=1}^T\|\x_t-\x_{t-1}\|_2^2\label{eqn:Bt}
\end{aligned}
\end{equation}

where the last equality follows that $\x_1=\x_0$.  Plugging the above bound into~(\ref{eqn:cb}), we have
\begin{equation}
\begin{aligned}\label{eqn:cb}
\sum_{t=1}^Tf_t(\x_t) - \min_{\x\in\W}\sum_{t=1}f_t(\x) &\leq  \frac{L}{2\eta}+ \frac{2\eta}{L}\sum_{t=0}^{T-1}\|\nabla f_{t+1}(\x_{t}) - \nabla f_{t}(\x_{t})\|_2^2\nonumber\\
&\hspace*{-0.8in} +\sum_{t=1}^T\left(2\eta^2-\frac{1}{4}\right)\|\x_t-\x_{t-1}\|_2^2 
\end{aligned}
\end{equation}

We complete the proof by plugging the value of $\eta$.
\end{proof}

%%%%%%%%%%%%%%%%%%%%%%%%%%%%%%%%%%%%%%%%%%%%%%%%

\subsection{Proof of Corollary~\ref{cor:strict}}
We first have the key inequality in~(\ref{eqn:keyg}): for any $\z\in\W$

\begin{equation}
\begin{aligned}
f_t(\x_t)-f_t(\z)&\leq  \mathsf{B}_t(\z,\z_{t-1}) - \mathsf{B}_t(\z, \z_t) - \beta\|\x_t-\z\|^2_{\M_t}  \nonumber\\
&\hspace*{-0.8in} +\|\nabla f_t(\x_t)-\nabla f_{t-1}(\x_{t-1})\|_{\H_t^{-1}}^2- \frac{1}{2}\left[\|\x_t-\z_{t-1}\|_{\H_t}^2 + \|\x_t-\z_t\|_{\H_t}^2\right]. 
 \end{aligned}
\end{equation}

 Taking summation over $t=1,\ldots, T$, we have

\begin{equation}
 \begin{aligned}
\sum_{t=1}^Tf_t(\x_t)-\sum_{t=1}^Tf_t(\z)&\leq  \sum_{t=1}^T\underbrace{\left(\mathsf{B}_t(\z,\z_{t-1}) - \mathsf{B}_t(\z, \z_t)\right)}\limits_{\triangleq A_t} - \sum_{t=1}^T\underbrace{\beta\|\x_t-\z\|^2_{\M_t}}\limits_{\triangleq C_t}  \nonumber\\
&\hspace*{-0.8in} +\sum_{t=1}^T\underbrace{\|\nabla f_t(\x_t)-\nabla f_{t-1}(\x_{t-1})\|_{\H_t^{-1}}^2}\limits_{\triangleq S_t}- \sum_{t=1}^T\underbrace{\frac{1}{2}\left[\|\x_t-\z_{t-1}\|_{\H_t}^2 + \|\x_t-\z_t\|_{\H_t}^2\right]}\limits_{\triangleq B_t}
 \end{aligned}
 \end{equation}

Next we bound each term individually. 
First, 
\begin{equation}
\begin{aligned}
\sum_{t=1}^TA_t  = \mathsf{B}_1(\z, \z_0) - \mathsf{B}_{T}(\z, \z_{T})  + \sum_{t=1}^{T-1}\left(\mathsf{B}_{t+1}(\z, \z_{t}) - \mathsf{B}_t(\z, \z_t)\right)
\end{aligned}
\end{equation}

Note that $\mathsf{B}_1(\z, \z_0) = \frac{1}{2}(1+\beta G^2) \|\z\|_2^2\leq  \frac{1}{2}(1+\beta G^2)$ for any $\z\in\W$, and $\mathsf{B}_{t+1}(\z, \z_{t}) - \mathsf{B}_t(\x, \z_t) = \frac{\beta}{2}\|\z-\z_t\|^2_{\H_t}$, therefore 

\begin{equation}
\begin{aligned}
\sum_{t=1}^TA_t \leq  \frac{1}{2}(1+\beta G^2) + \sum_{t=1}^T\frac{\beta}{2}\|\z-\z_t\|^2_{\H_t} 
\end{aligned}
\end{equation}

Then
\begin{equation}
\begin{aligned}
\sum_{t=1}^T(A_t - C_t)&\leq  \frac{1}{2}(1+\beta G^2) + \sum_{t=1}^T\left[\frac{\beta}{2}\|\z-\z_t\|^2_{\M_t}  - \beta\|\x_t -\z\|_{\M_t}^2\right]\\
&\leq  \frac{1}{2}(1+\beta G^2) + \sum_{t=1}^T\left[\beta\|\z-\x_t\|_{\M_t}^2+\beta\|\x_t -\z_t\|^2_{\M_t}  - \beta\|\x_t -\z\|_{\M_t}^2\right]\\
&\leq \frac{1}{2}(1+\beta G^2) + \sum_{t=1}^T\beta\|\x_t -\z_t\|^2_{\M_t} \leq \frac{1}{2}(1+\beta G^2) + \sum_{t=1}^T\|\x_t -\z_t\|^2_{\H_t} 
\end{aligned}
\end{equation}

Noting the updates in~(\ref{eqn:ust}) and  from inequality in~(\ref{eqn:b3C}) in the proof of Lemma~\ref{thm:2-grad} in Appendix~\ref{chap:appendix-technical},  we can get
\[
\|\x_t - \z_t\|_{\H_t}\leq \|\nabla f_t(\x_t) - \nabla f_{t-1}(\x_{t-1})\|_{\H_t^{-1}}
\]
Next, we bound $\sum_{t=1}^TB_t$. 

\begin{equation}
\begin{aligned}
\sum_{t=1}^T B_t &= \frac{1}{2}\sum_{t=0}^{T-1}\|\x_{t+1} - \z_t\|_{\H_{t+1}}^2  + \frac{1}{2}\sum_{t=1}^T\|\x_t -\z_t\|_{\H_t}^2\\
&\geq \frac{1}{2}\sum_{t=1}^{T-1}\|\x_{t+1} - \z_t\|_{\H_{t}}^2  + \frac{1}{2}\sum_{t=1}^{T-1}\|\x_t -\z_t\|_{\H_t}^2\\
&\geq \frac{1}{4}\sum_{t=1}^{T-1}\|\x_{t+1} -\x_t\|^2_{\H_t}\geq  \frac{1}{4}\sum_{t=0}^{T-1}\|\x_{t+1} -\x_t\|^2_2 - \frac{1}{4}\|\x_1 - \x_0\|_2^2\\
&\geq \frac{1}{4}\sum_{t=0}^{T-1}\|\x_{t+1} -\x_t\|^2_2,
\end{aligned}
\end{equation}

where the last inequality follows  that $\x_0=\x_1=0$. Therefore, 

\begin{equation}
\begin{aligned}
\sum_{t=1}^Tf_t(\x_t)-\sum_{t=1}^Tf_t(\z)&\leq\frac{1}{2}(1+\beta G^2) + 2\sum_{t=1}^T\|\nabla f_t(\x_t) - \nabla f_{t-1}(\x_{t-1})\|_{\H_t^{-1}} \\
&- \frac{1}{4}\sum_{t=0}^{T-1}\|\x_{t+1} -\x_t\|^2_2
\end{aligned}
\end{equation}

To proceed, we need the following lemma. 
\begin{lemma}\label{lem:fl}
We have
\begin{equation}
\begin{aligned}
\sum_{t=1}^T\|\nabla f_t(\x_t) - \nabla f_{t-1}(\x_{t-1})\|_{\H_t^{-1}} \leq \frac{4d}{\beta}\ln\left(1 + \frac{\beta}{4}\sum_{t=1}^T\|\nabla f_t(\x_t) - \nabla f_{t-1}(\x_{t-1})\|_2^2\right)
\end{aligned}
\end{equation}

\end{lemma}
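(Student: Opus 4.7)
\bigskip

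\noindent\textbf{Proof plan for Lemma~\ref{lem:fl}.}

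The plan is to reduce the claim to a standard Online Newton Step style potential argument based on the matrix determinant lemma. Let $\v_t := \nabla f_t(\x_t) - \nabla f_{t-1}(\x_{t-1})$ and recall $\M_\tau = \nabla f_\tau(\x_\tau)\nabla f_\tau(\x_\tau)^\top$. The first step will be to replace the matrix $\H_t = (1+\beta G^2)\mathbf{I} + \beta\sum_{\tau=0}^{t-1}\M_\tau$ appearing in the norm by a comparable ``$\v\v^\top$-matrix''. Using the elementary PSD inequality $\v_\tau\v_\tau^\top \preceq 2(\M_\tau + \M_{\tau-1})$, summing over $\tau=1,\dots,t-1$ gives $(\beta/4)\sum_{\tau=1}^{t-1}\v_\tau\v_\tau^\top \preceq \beta\sum_{\tau=0}^{t-1}\M_\tau$, so the auxiliary matrix $\mathbf{B}_t := \mathbf{I} + (\beta/4)\sum_{\tau=1}^{t-1}\v_\tau\v_\tau^\top$ satisfies $\mathbf{B}_t \preceq \H_t$, whence $\H_t^{-1} \preceq \mathbf{B}_t^{-1}$ and therefore $\|\v_t\|_{\H_t^{-1}}^2 \leq \|\v_t\|_{\mathbf{B}_t^{-1}}^2$. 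This reduces everything to controlling $\sum_{t=1}^T\|\v_t\|_{\mathbf{B}_t^{-1}}^2$ for a cumulatively-updated matrix.

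For this reduced quantity, I will use the telescoping identity obtained from the matrix determinant lemma applied to the update $\mathbf{B}_{t+1} = \mathbf{B}_t + (\beta/4)\v_t\v_t^\top$, namely
\[
\log\det(\mathbf{B}_{t+1}) - \log\det(\mathbf{B}_t) \;=\; \log\!\left(1 + \tfrac{\beta}{4}\v_t^\top\mathbf{B}_t^{-1}\v_t\right).
\]
Since $\mathbf{B}_t \succeq \mathbf{I}$ and $\|\v_t\|_2 \leq 2G$ (by the assumed bound on the gradients of the $f_t$'s), the argument of the logarithm lies in a bounded interval $[0,\beta G^2]$, so that the concavity inequality $x \leq \frac{\beta G^2}{\log(1+\beta G^2)}\log(1+x)$ on this interval lets us invert the logarithm at the price of a multiplicative constant; summing the telescope then yields a bound of the form $\sum_{t=1}^T\|\v_t\|_{\mathbf{B}_t^{-1}}^2 \leq C(\beta,G)\cdot\log\det(\mathbf{B}_{T+1})$.

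The last step is to convert the $\log\det$ into an expression involving the scalar variation $\sum_t\|\v_t\|_2^2$ and explain where the ambient dimension $d$ comes from. Applying the AM--GM inequality to the eigenvalues of $\mathbf{B}_{T+1}$ gives
\[
\log\det(\mathbf{B}_{T+1}) \;\leq\; d\log\!\left(\frac{\operatorname{tr}(\mathbf{B}_{T+1})}{d}\right) \;=\; d\log\!\left(1 + \tfrac{\beta}{4d}\sum_{t=1}^T\|\v_t\|_2^2\right) \;\leq\; d\log\!\left(1 + \tfrac{\beta}{4}\sum_{t=1}^T\|\v_t\|_2^2\right),
\]
which is exactly the form appearing on the right-hand side of the lemma.

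The main obstacle will be matching the exact constant $4d/\beta$ stated in the lemma: the crude version of the argument above picks up an extra factor of $\beta G^2/\log(1+\beta G^2)$ from inverting the logarithm, and a harmless factor of $1/d$ from the trace-inequality step. I expect to absorb these constants using precisely the side condition $8dL^2\geq 1$ (and the rescaling $L=\sqrt{1/(8d)}$ in the contrary case) that Corollary~\ref{cor:strict} already imposes — the condition is chosen so that $\beta G^2$ is effectively order one on the relevant scale, making the concavity constant a true $O(1)$ number and letting the residual factor be absorbed into the $4d/\beta$. A careful bookkeeping of these constants, together with the elementary monotonicity $(\beta/(4d))\cdot x \leq (\beta/4)\cdot x$ used above, should deliver the lemma as stated.
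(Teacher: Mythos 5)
Your overall strategy --- dominate $\H_t$ from below by a matrix built from the increments $\v_\tau\v_\tau^{\top}$, run a log-determinant potential over the rank-one updates, and finish with the AM--GM/trace bound $\ln\det \leq d\ln(\tr/d)$ --- is exactly the paper's strategy, and your PSD comparison $\v_\tau\v_\tau^{\top} \preceq 2(\M_\tau+\M_{\tau-1})$ and the final trace step are both correct. However, there is one concrete gap in the middle step, and it is precisely the one you flag as "matching the constant": it does not wash out.

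The issue is that your comparison matrix $\mathbf{B}_t = \mathbf{I} + (\beta/4)\sum_{\tau=1}^{t-1}\v_\tau\v_\tau^{\top}$ \emph{excludes} the current increment $\v_t\v_t^{\top}$. With that choice, the determinant lemma only gives $\ln\det\mathbf{B}_{t+1}-\ln\det\mathbf{B}_t=\ln(1+\tfrac{\beta}{4}\v_t^{\top}\mathbf{B}_t^{-1}\v_t)$, and to extract the quadratic form you must invert the logarithm by concavity, which costs a factor $\beta G^2/\ln(1+\beta G^2)>1$. This factor depends on $G$, whereas the side condition $8dL^2\geq1$ of Corollary~\ref{cor:strict} constrains only the smoothness constant $L$; nothing in the hypotheses forces $\beta G^2=O(1)$, so this factor cannot be absorbed into the stated $4d/\beta$. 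The paper avoids the loss entirely by using the padding term $\beta G^2\mathbf{I}\succeq\beta\M_t$ inside $\H_t$ to pull the \emph{current} term into the comparison matrix: one shows $\H_t\succeq \mathbf{I}+\beta\sum_{\tau=1}^{t}\M_\tau\succeq \mathbf{I}+\tfrac{\beta}{4}\sum_{\tau=1}^{t}\v_\tau\v_\tau^{\top}=:\V_t$, and then the exact identity
\begin{equation*}
\tfrac{\beta}{4}\,\v_t^{\top}\V_t^{-1}\v_t \;=\; 1-\frac{\det\V_{t-1}}{\det\V_t}\;\leq\;\ln\frac{\det\V_t}{\det\V_{t-1}}
\end{equation*}
telescopes with constant $1$ and no boundedness argument. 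If you redefine your $\mathbf{B}_t$ to include the $\tau=t$ term (which your own inequality $\v_t\v_t^{\top}\preceq 2(\M_t+\M_{t-1})$ together with $\beta G^2\mathbf{I}\succeq\beta\M_t$ licenses), the rest of your argument goes through and yields the lemma with the stated constant.
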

Thus, 
\begin{equation}
\begin{aligned}
&\sum_{t=1}^T\|\nabla f_t(\x_t) - \nabla f_{t-1}(\x_{t-1})\|_{\H_t^{-1}}\leq   \frac{4d}{\beta}\ln\left(1 + \frac{\beta}{4}\sum_{t=1}^T\|\nabla f_t(\x_t) - \nabla f_{t-1}(\x_{t-1})\|_2^2\right)\\
&\leq  \frac{4d}{\beta}\ln\left(1 + \frac{\beta}{4}\sum_{t=1}^T\|\nabla f_t(\x_t) -\nabla f_t(\x_{t-1}) + \nabla f_t(\x_{t-1})-\nabla f_{t-1}(\x_{t-1})\|_2^2\right)\\
&\leq \frac{4d}{\beta}\ln\left(1 + \frac{\beta}{2}\sum_{t=1}^TL^2\|\x_t -\x_{t-1}\|_2^2 + \frac{\beta}{2}\sum_{t=1}^T\|\nabla f_t(\x_{t-1})-\nabla f_{t-1}(\x_{t-1})\|_2^2\right)\\
&\leq \frac{4d}{\beta}\ln\left(1 + \frac{\beta}{2}\sum_{t=0}^{T-1}L^2\|\x_{t+1} -\x_{t}\|_2^2 + \frac{\beta}{2}\sum_{t=0}^{T-1}\|\nabla f_{t+1}(\x_{t})-\nabla f_{t}(\x_{t})\|_2^2\right)\\
&\leq \frac{4d}{\beta}\ln\left(1 + \frac{\beta}{2}\sum_{t=0}^{T-1}L^2\|\x_{t+1} -\x_{t}\|_2^2 + \frac{\beta}{2}\rm{EGV}_{T,2}\right)
\end{aligned}
\end{equation}

Then, 
\begin{equation}
\begin{aligned}
\sum_{t=1}^Tf_t(\x_t)-\sum_{t=1}^Tf_t(\z)&\leq\frac{1}{2}(1+\beta G^2) + \frac{8d}{\beta}\ln\left(1 + \frac{\beta}{2}\sum_{t=0}^{T-1}L^2\|\x_{t+1} -\x_{t}\|_2^2 + \frac{\beta}{2}\rm{EGV}_{T,2}\right)
 \\
&- \frac{1}{4}\sum_{t=0}^{T-1}\|\x_{t+1} -\x_t\|^2_2
\end{aligned}
\end{equation}

Without loss of generality we assume $8dL^2\geq1$. 
Next, let us consider two cases. In the first case, we assume $\beta{\rm EGV}_{T,2}\leq 16dL^2$. Then 

\begin{equation}
\begin{aligned}
&\frac{8d}{\beta}\ln\left(1 + \frac{\beta}{2}\sum_{t=0}^{T-1}L^2\|\x_{t+1} -\x_{t}\|_2^2+ \frac{\beta}{2}\rm{EGV}_{T,2}\right)\\
&\leq \frac{8d}{\beta}\ln\left(1 + \frac{\beta}{2}\sum_{t=0}^{T-1}L^2\|\x_{t+1} -\x_{t}\|_2^2+ 8dL^2\right)\\
&\leq \frac{8d}{\beta}\ln\left( \frac{\beta}{2}\sum_{t=0}^{T-1}L^2\|\x_{t+1} -\x_{t}\|_2^2+ 16dL^2\right)\\
&= \frac{8d}{\beta}\left[\ln 16dL^2 + \ln \left(\frac{\frac{\beta}{2}\sum_{t=0}^{T-1}L^2\|\x_{t+1} -\x_{t}\|_2^2}{16dL^2}+ 1\right)\right]\\
&\leq \frac{8d}{\beta}\ln 16dL^2 + \frac{8d}{\beta}\frac{\frac{\beta}{2}\sum_{t=0}^{T-1}L^2\|\x_{t+1} -\x_{t}\|_2^2}{16dL^2}= \frac{8d}{\beta}\ln 16dL^2 + \frac{\sum_{t=0}^{T-1}\|\x_{t+1} -\x_{t}\|_2^2}{4}
\end{aligned}
\end{equation}

where the last inequality follows $\ln(1+x)\leq x$ for $x\geq 0$. 
Then we get 
\begin{equation}
\begin{aligned}
\sum_{t=1}^Tf_t(\x_t)-\sum_{t=1}^Tf_t(\z)&\leq \frac{1}{2}(1+\beta G^2) + \frac{8d}{\beta}\ln 16dL^2 
\end{aligned}
\end{equation}

In the second case, we assume  $\beta{\rm EGV}_{T,2}\geq 16dL^2\geq 2$, then we have

\begin{equation}
\begin{aligned}
&\frac{8d}{\beta}\ln\left(1 + \frac{\beta}{2}\sum_{t=0}^{T-1}L^2\|\x_{t+1} -\x_{t}\|_2^2+ \frac{\beta}{2}\rm{EGV}_{T,2}\right)\\
&\leq \frac{8d}{\beta}\ln\left( \frac{\beta}{2}\sum_{t=0}^{T-1}L^2\|\x_{t+1} -\x_{t}\|_2^2+ \beta{\rm EGV}_{T,2}\right)\\
&\leq \frac{8d}{\beta}\left[\ln (\beta{\rm EGV}_{T,2})+ \ln \left(\frac{\frac{\beta}{2}\sum_{t=0}^{T-1}L^2\|\x_{t+1} -\x_{t}\|_2^2}{\beta{\rm EGV}_{T,2}}+ 1\right)\right]\\
&= \frac{8d}{\beta}\ln (\beta{\rm EGV}_{T,2})+ \frac{8d}{\beta}\frac{\frac{\beta}{2}\sum_{t=0}^{T-1}L^2\|\x_{t+1} -\x_{t}\|_2^2}{\beta{\rm EGV}_{T,2}}\\
&= \frac{8d}{\beta}\ln (\beta{\rm EGV}_{T,2}) + \frac{4dL^2\sum_{t=0}^{T-1}\|\x_{t+1} -\x_{t}\|_2^2}{\beta{\rm EGV}_{T,2}}\\
&\leq \frac{8d}{\beta}\ln (\beta{\rm EGV}_{T,2}) + \frac{\sum_{t=0}^{T-1}\|\x_{t+1} -\x_{t}\|_2^2}{4}\
\end{aligned}
\end{equation}

where the last inequality follows $\beta{\rm EGV}_{T,2}\geq 16dL^2$. 
Then we get 
\begin{equation}
\begin{aligned}
\sum_{t=1}^Tf_t(\x_t)-\sum_{t=1}^Tf_t(\z)&\leq \frac{1}{2}(1+\beta G^2)  +\frac{8d}{\beta}\ln  (\beta{\rm EGV}_{T,2})
\end{aligned}
\end{equation}

Thus, we complete the proof by combining the two cases. 

Next, we prove Lemma~\ref{lem:fl}. We need the following lemma, which can be proved by using  Lemma 6~\cite{hazan-log-newton} and noting that $|\mathbf{I}  + \sum_{\tau=1}^t\u_\tau\u_{\tau}^{\top}|\leq (1+ \sum_{t=1}^T\|\u_t\|_2^2)^d$, where $|\cdot|$ denotes the determinant of a matrix. 
\begin{lemma}
Let $\u_1, \u_2, \cdots, \u_T \in\mathbb R^d$ be a sequence of vectors. Let $\V_t = \mathbf{I}  + \sum_{\tau=1}^t\u_\tau\u_{\tau}^{\top}$. Then, 

\begin{equation}
\begin{aligned}
\sum_{t=1}^T\u_t^{\top}\V_t^{-1}\u_t\leq d \ln \left(1 + \sum_{t=1}^T\|\u_t\|_2^2\right)
\end{aligned}
\end{equation}

\end{lemma}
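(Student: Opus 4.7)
The plan is to reduce the claimed bound to a determinantal telescoping identity and then control the final determinant by the trace. First I would write $\V_t = \V_{t-1} + \u_t\u_t^\top$ (with $\V_0 = \mathbf{I}$) and apply the matrix determinant lemma to obtain
\[
|\V_t| = |\V_{t-1}|\bigl(1 + \u_t^\top \V_{t-1}^{-1}\u_t\bigr).
\]
Since $\V_t \succeq \V_{t-1}$, Lemma 6 of~\cite{hazan-log-newton} (or a direct computation using $\V_t^{-1} = \V_{t-1}^{-1} - \V_{t-1}^{-1}\u_t\u_t^\top\V_{t-1}^{-1}/(1+\u_t^\top\V_{t-1}^{-1}\u_t)$) gives
\[
\u_t^\top \V_t^{-1}\u_t \;\leq\; \ln\!\bigl(1 + \u_t^\top \V_{t-1}^{-1}\u_t\bigr) \;=\; \ln \frac{|\V_t|}{|\V_{t-1}|},
\]
using the elementary inequality $\frac{x}{1+x} \le \ln(1+x)$ for $x \ge 0$.

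Next I would telescope this inequality over $t = 1,\ldots,T$. Since $|\V_0| = |\mathbf{I}| = 1$, the sum collapses to
\[
\sum_{t=1}^{T}\u_t^\top \V_t^{-1}\u_t \;\leq\; \sum_{t=1}^{T}\ln\frac{|\V_t|}{|\V_{t-1}|} \;=\; \ln|\V_T|.
\]

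Finally I would bound $|\V_T|$ by the trace via the AM--GM inequality on eigenvalues. If $\lambda_1,\ldots,\lambda_d$ are the eigenvalues of $\V_T$, then
\[
|\V_T| = \prod_{i=1}^{d}\lambda_i \;\leq\; \Bigl(\tfrac{1}{d}\sum_{i=1}^{d}\lambda_i\Bigr)^{d} = \Bigl(\tfrac{1}{d}\,\mathrm{tr}(\V_T)\Bigr)^{d} = \Bigl(1 + \tfrac{1}{d}\sum_{t=1}^{T}\|\u_t\|_2^2\Bigr)^{d} \;\leq\; \Bigl(1 + \sum_{t=1}^{T}\|\u_t\|_2^2\Bigr)^{d},
\]
where the second equality uses $\mathrm{tr}(\V_T) = d + \sum_{t=1}^{T}\|\u_t\|_2^2$. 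Taking logarithms yields $\ln|\V_T| \le d\ln(1 + \sum_{t=1}^{T}\|\u_t\|_2^2)$, which combined with the telescoped bound gives the claim. The only mildly delicate step is the per-round inequality $\u_t^\top \V_t^{-1}\u_t \le \ln(|\V_t|/|\V_{t-1}|)$; everything else is algebraic, so I would expect that step to be the one requiring the most care, though it is standard once the matrix determinant lemma is invoked.
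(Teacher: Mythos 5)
Your proof is correct and follows essentially the same route as the paper, which simply cites Lemma~6 of~\cite{hazan-log-newton} (the telescoping determinant bound $\sum_t \u_t^\top\V_t^{-1}\u_t \le \ln|\V_T|$, proved exactly via the Sherman--Morrison/matrix-determinant-lemma computation you spell out) together with the bound $|\V_T|\le(1+\sum_t\|\u_t\|_2^2)^d$. The only difference is that you write out the details of the cited lemma and obtain the determinant bound via AM--GM on the eigenvalues rather than bounding each eigenvalue by the trace, which changes nothing of substance.
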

To prove Lemma~\ref{lem:fl}, we let $\v_t = \nabla f_t(\x_t), t=1,\ldots, T$ and $\v_0=0$.  Then $\H_t = \mathbf{I} + \beta G^2 \mathbf{I} + \beta\sum_{\tau=0}^{t-1}\v_\tau\v_{\tau}^{\top}$. Note that we assume $\|\nabla f_t(\x)\|_2\leq G$, therefore
\begin{equation}
\begin{aligned}
\H_t &\geq \mathbf{I}  + \beta\sum_{\tau=1}^t\v_\tau\v_{\tau}^{\top} \geq \mathbf{I} + \frac{\beta}{2}\sum_{\tau=1}^t(\v_{\tau}\v_\tau^{\top} + \v_{\tau-1}\v_{\tau-1}^{\top})\\
&\geq \mathbf{I} + \frac{\beta}{4}\sum_{\tau=1}^t(\v_\tau - \v_{\tau-1})(\v_\tau - \v_{\tau-1})^{\top} = \V_t
\end{aligned}
\end{equation}

Let $\u_t = (\sqrt{\beta}/2) (\v_t - \v_{t-1})$, then $\V_t  = \mathbf{I} + \sum_{\tau=1}^t\u_\tau\u_{\tau}^{\top}$. 
By applying the above lemma, we have 

\begin{equation}
\begin{aligned}
\frac{\beta}{4}\sum_{t=1}^T(\v_t-\v_{t-1})^{\top}\V_t^{-1}(\v_t-\v_{t-1}) \leq d \ln\left(1 +\frac{\beta}{4} \sum_{t=1}^T\|\v_t-\v_{t-1}\|_2^2\right)
\end{aligned}
\end{equation}

Thus, 
\begin{equation}
\begin{aligned}
\sum_{t=1}^T(\v_\tau-\v_{\tau-1})^{\top}\H_t^{-1}(\v_\tau-\v_{\tau-1})&\leq \sum_{t=1}^T(\v_\tau-\v_{\tau-1})^{\top}\V_t^{-1}(\v_\tau-\v_{\tau-1})\\
& \leq \frac{4d}{\beta} \ln\left(1 +\frac{\beta}{4} \sum_{t=1}^T\|\v_t-\v_{t-1}\|_2^2\right)
\end{aligned}
\end{equation}

%%%%%%%%%%%%%%%%%%%%%%%%%%%%%%%%%%%%%%%%%%%%%%%%%%%
\subsection{Proof of Theorem~\ref{thm:3}}
Let $h_t(\x)=f_t(\x) + \dd{\mathbf{g}_t-\nabla f_t(\x_t)}{\x}$.  It is easy seen that $\nabla h_t(\x_t)=\mathbf{g}_t$.
Followed by Lemma \ref{lem:6}, we have for any  $\z\in(1-\alpha)\W$,

\begin{equation}
\begin{aligned}
&\frac{\eta}{G}\nabla h_t(\x_t)^{\top}(\x_t - \z) \leq
\frac{1}{2}\left(\|\z-\z_{t-1}\|_2^2 - \|\z-\z_{t}\|_2^2 \right) +
\frac{\eta^2}{G^2}\|\mathbf{g}_t - \mathbf{g}_{t-1}\|_2^2\\
&- \frac{1}{2}\left(\|\x_t - \z_{t-1}\|_2^2+\|\x_t - \z_t\|_2^2\right)
\end{aligned}
\end{equation}

Taking summation over $t=1,\ldots, T$, we have, 

\begin{equation}
\begin{aligned}
&\sum_{t=1}^T\frac{\eta}{G}\nabla h_t(\x_t)^{\top}(\x_t - \z) \leq \frac{\|\z-\z_0\|_2^2}{2} + \sum_{t=1}^T\frac{\eta^2}{G^2}\|\mathbf{g}_t - \mathbf{g}_{t-1}\|_2^2 \\
&\hspace{0.5cm} - \sum_{t=1}^T \frac{1}{2}\left(\|\x_t - \z_{t-1}\|_2^2+\|\x_t - \z_t\|_2^2\right)\\
&\leq  \frac{\|\z-\z_0\|_2^2}{2} + \sum_{t=1}^T\frac{\eta^2}{G^2}\|\mathbf{g}_t - \mathbf{g}_{t-1}\|_2^2 - \sum_{t=1}^T\frac{1}{4}\|\x_t-\x_{t-1}\|_2^2\\
&\leq \frac{1}{2}+ \sum_{t=1}^T\frac{\eta^2}{G^2}\|\mathbf{g}_t - \mathbf{g}_{t-1}\|_2^2 - \sum_{t=1}^T\frac{1}{4}\|\x_t-\x_{t-1}\|_2^2\\
&\leq \frac{1}{2}+ \sum_{t=1}^T\frac{2\eta^2}{G^2}\|\mathbf{g}_t - \mathbf{g}_{t-1}\|_2^2 + \sum_{t=1}^T\frac{2\eta^2}{G^2}\|\mathbf{g}_{t} - \mathbf{g}_{t-1}\|_2^2 \\
&\hspace{0.5cm} - \sum_{t=1}^T\frac{1}{4}\|\x_t-\x_{t-1}\|_2^2\\
& \leq \frac{1}{2} + \frac{2\eta^2}{\delta^2G^2}\sum_{t=1}^T\left\|\sum_{i=1}^d (f_t(\x_t+\delta \e_i)-f_t(\x_{t-1}+\delta \e_i))\e_i- (f_t(\x_t)-f_t(\x_{t-1}))\e_i\right \|_2^2 \\
&\hspace{0.5cm} +\frac{2\eta^2}{\delta^2G^2}\sum_{t=1}^T\left\|\sum_{i=1}^d (f_t(\x_{t-1}+\delta \e_i)-f_{t-1}(\x_{t-1}+\delta \e_i))\e_i -(f_t(\x_{t-1})-f_{t-1}(\x_{t-1}))\e_i\right\|_2^2 \\
&\hspace{0.5cm} - \sum_{t=1}^T\frac{1}{4}\|\x_t-\x_{t-1}\|_2^2,
\end{aligned}
\end{equation}
where the second inequality follows~(\ref{eqn:Bt}). Next, we bound the middle two terms in right hand side  of the above inequality.

\begin{equation}
\begin{aligned}
&\sum_{t=1}^T\left\|\sum_{i=1}^d (f_t(\x_t+\delta \e_i)-f_t(\x_{t-1}+\delta \e_i))\e_i- (f_t(\x_t)-f_t(\x_{t-1}))\e_i\right \|_2^2\\
&\leq \sum_{t=1}^T2d\sum_{i=1}^d\left(|f_t(\x_t+\delta \e_i)-f_t(\x_{t-1}+\delta \e_i)|^2 + |f_t(\x_t)-f_t(\x_{t-1})|^2\right)\\
&\leq \sum_{t=1}^T4d^2 G^2\|\x_t - \x_{t-1}\|_2^2, 
\end{aligned}
\end{equation}

and
\begin{equation}
\begin{aligned}
&\sum_{t=1}^T\left\|\sum_{i=1}^d (f_t(\x_{t-1}+\delta \e_i)-f_{t-1}(\x_{t-1}+\delta \e_i))\e_i -(f_t(\x_{t-1})-f_{t-1}(\x_{t-1}))\e_i\right\|_2^2\\
&\leq \sum_{t=1}^T2d\sum_{i=1}^d\left(|f_t(\x_{t-1}+\delta \e_i)-f_{t-1}(\x_{t-1}+\delta \e_i)|^2 + |f_t(\x_{t-1})-f_{t-1}(\x_{t-1})|^2\right)\\
&\leq \sum_{t=1}^T4d^2 \max_{\x\in\W}|f_t(\x) - f_{t-1}(\x)|^2.
\end{aligned}
\end{equation}

Then we have
\begin{equation}
\begin{aligned}
&\sum_{t=1}^T\frac{\eta}{G}\nabla h_t(\x_t)^{\top}(\x_t - \z) \leq \frac{1}{2} + \frac{8d^2\eta^2}{\delta^2}\sum_{t=1}^T\|\x_t-\x_{t-1}\|_2^2 +\frac{8d^2\eta^2}{\delta^2G^2}\sum_{t=1}^T\max_{\x\in\W}|f_t(\x) - f_{t-1}(\x)|^2 \\
&- \sum_{t=1}^T\frac{1}{4}\|\x_t-\x_{t-1}\|_2^2\\
&\leq  \frac{1}{2}  +\frac{8d^2\eta^2}{\delta^2G^2}\sum_{t=1}^T\max_{\x\in\W}|f_t(\x) - f_{t-1}(\x)|^2
\end{aligned}
\end{equation}

where the last inequality follows that $\eta \leq \delta/(4\sqrt{2}d)$. Then by using the convexity of $h_t(\x)$ and dividing both sides by $\eta/G$, we have

\begin{equation}
\begin{aligned}
\sum_{t=1}^Th_t(\x_t) -\min_{\x\in\W}h_t((1-\alpha)\x)&\leq \frac{G}{2\eta} + \frac{8\eta  d^2}{G\delta^2}\text{EVAR}^{c}_T\leq \frac{4d}{\delta}\max\left(\sqrt{2}G, \sqrt{\text{EVAR}^{c}_T}\right)
\end{aligned}
\end{equation}

Following the the proof of Theorem 8 in~\cite{agarwal-2010-optimal},
we have
\begin{equation}
\begin{aligned}
&\sum_{t=1}^{T}f_t(\x_t) -\sum_{t=1}^T f_t(\x) \leq \sum_{t=1}^T h_t(\x_t) -\sum_{t=1}^T h_t(\x) + \sum_{t=1}^T f_t(\x_t) - h_t(\x_t) - f_t(\x) + h_t(\x)\\
 &\leq \sum_{t=1}^T h_t(\x_t) -\sum_{t=1}^T h_t(\x) + \sum_{t=1}^T \dd{\mathbf{g}_t- \nabla f_t(\x_t)}{\x-\x_t}\\
 &\leq \sum_{t=1}^T h_t(\x_t) -\sum_{t=1}^T h_t(\x) +  \sqrt{d}L\delta T
\end{aligned}
\end{equation}

where the last inequality follows from the following facts:

\begin{equation}
\begin{aligned}
&\|\mathbf{g}_t - \nabla f_t(\x_t)]\|_2\leq \frac{\sqrt{d}L\delta}{2}\\
& \|\x-\x_t\|\leq 2
\end{aligned}
\end{equation}
%\[
%\sum_{t=1}^Tf_t(\x_t) - \min_{\x\in\W}\sum_{t=1}^Tc((1-\alpha)\x) \leq \frac{2\sqrt{d^2+1}}{\delta}\max\left(G, \sqrt{\text{EVAR}^{\max}_T}%\right) + \frac{\delta\sqrt{d}L}{2} T
%\]
%and
Then we have
\begin{equation}
\begin{aligned}
\sum_{t=1}^Tf_t(\x_t) -\min_{\x\in\W}\sum_{t=1}^T f_t((1-\alpha)\x)\leq  \frac{4d}{\delta}\max\left(\sqrt{2}G, \sqrt{\text{EVAR}^{c}_T}\right) +  \sqrt{d}L\delta T
\end{aligned}
\end{equation}

By the Lipschitz continuity of $f_t(\x)$, we have

\begin{equation}
\begin{aligned}
%&\sum_{t=1}^T\frac{1}{d+1}\left(f_t(\x_t)  +\sum_{i=1}^df_t(\x_t+\delta\e_i)\right)\leq \sum_{t=1}^T f_t(\x_t) + G\delta T\\
&\sum_{t=1}^Tf_t((1-\alpha)\x)\leq \sum_{t=1}^T f_t(\x)  + G \alpha T
\end{aligned}
\end{equation}

The we get
\begin{equation}
\begin{aligned}
\sum_{t=1}^Tf_t(\x_t)-\min\limits_{\x \in \W}
\sum_{t=1}^T f_t(\x)& \leq  \frac{4d}{\delta}\max\left(\sqrt{2}G, \sqrt{\text{EVAR}^{c}_T}\right)+ \delta\sqrt{d}LT
+ \alpha GT
\end{aligned}
\end{equation}

Plugging the stated values of $\delta$ and $\alpha$ completes the proof.

%%%%%%%%%%%%%%%%%%%%%%%%%%%%%%%%%%%%%%%%%%%%%%%%%%

\section{Summary}\label{sec:conc}
In this chapter, we proposed two novel algorithms for online convex optimization that bound the regret by the gradual variation of consecutive cost functions. The first algorithm is an improvement of the FTRL algorithm, and the second algorithm is based on the mirror prox method. Both algorithms maintain two sequence of solution points, a sequence of decision points and  a sequence of searching points, and share the same order of regret bound up to a constant.  The online mirror prox method only requires to keep tracking of  a single gradient of each cost function, while the improved FTRL algorithm needs to evaluate the gradient of each cost function at two points and maintain a sum of up-to-date gradients of the cost functions.

We note that  a very recent work  Chiang et al.~\cite{Chiang13}  extends the prox method into a two-point bandit setting and achieves a similar regret bound in expectation as that in the full setting, i.e.,  $O\left(d^2\sqrt{\rm EGV_{T,2}\ln T}\right)$ for smooth convex cost functions and $O\left(d^2\ln(\rm EGV_{T,2}+\ln T)\right)$ for smooth and strongly convex cost functions, where $\rm EGV_{T,2}$ is the gradual variation defined on the gradients of the cost functions. We would like to make a thought-provoking comparison between our regret bound and their regret bound for online bandit convex optimization with smooth cost functions. First, the gradual variation in our bandit setting is defined on the values of the cost functions in contrast to that defined on the gradients of the cost functions. Second, we query the cost function $d$ times in contrast to $2$ times in their algorithms, and as a tradeoff our regret bound has a better dependence on the number of dimensions (i.e., $O(d)$) than that (i.e., $O(d^2)$) of their regret bound. Third, our regret bound has an annoying factor of $\sqrt{T}$ in comparison  with $\sqrt{\ln T}$ in theirs. Therefore, some open problems are how to  achieve a lower order of dependence on $d$ than $d^2$ in the two-point bandit setting, and how to remove the factor of $\sqrt{T}$ while keeping a small order of dependence on $d$ in our multi-point bandit setting; and studying the two different types of gradual variations for bandit settings is a future work as well.
 
%an open problem to see how to achieve a variation-based regret bound  with a constant number of queries independent from the dimension $d$ and how to reduce the dependence on $T$ in the regret bound for online bandit convex optimization.

\section{Bibliographic Notes} \label{sec:3:related-gradual}

As is well known, a wide range of literature deals with the online decision making problem  and there exist a number of regret-minimizing algorithms that have the optimal regret bound. 
The first distribution-free framework for sequential decision making was proposed by Hannan~\cite{hannan1957approximation} which was rediscovered in~\cite{Kalai:2005:EAO:1113185.1113189}. Blackwell in his seminal paper~\cite{blackwell-approach} generalized the Hannan's result and concerned the problem of playing a repeated game with a vector-valued payoff function and gave a precise necessary and sufficient condition for when a set is approachable. The most well-known and successful work is probably the Hedge algorithm~\cite{freund1995desicion}, which was a direct generalization of Littlestone and Warmuth's Weighted Majority (WM) algorithm \cite{littlestone1994weighted}. Another algorithm for online decision making problem is the Vovk's   so-called aggregating
strategies~\cite{vovk1990aggregating}. Other recent studies include the improved theoretical bounds and the
parameter-free hedging algorithm \cite{chaudhuri2009parameter} and adaptive Hedge \cite{erven2011adaptive} for decision-theoretic online learning. We refer readers to the \cite{bianchi-2006-prediction} for an  in-depth discussion of this subject.

As we already discussed in Chapter~\ref{chap-background}, over the past decade many algorithms have been proposed for online convex optimization, especially for online linear optimization. As the first seminal paper in online convex optimization, Zinkevich~\cite{DBLP:conf/icml/Zinkevich03} proposed a gradient descent algorithm  with a regret bound of $O(\sqrt{T})$. When cost functions are strongly convex, the regret bound of the online gradient descent algorithm is reduced to $O(\log T)$ with appropriately chosen step size~\cite{hazan-log-newton}. Another common methodology for online convex optimization, especially for online linear optimization, is based on the framework of Follow the Leader (FTL). FTL chooses $\w_t$ by minimizing the cost incurred by $\w_t$ in all previous trials. Since the naive FTL algorithm fails to achieve  sublinear regret in the worst case, many variants have been developed to fix the problem, including Follow The Perturbed Leader (FTPL)~\cite{Kalai:2005:EAO:1113185.1113189}, Follow The Regularized Leader (FTRL)~\cite{interior-ieee-2012}, and Follow The Approximate Leader (FTAL)~\cite{hazan-log-newton}. Other methodologies for online convex optimization introduce a potential function (or link function) to map solutions between the space of primal variables and the space of dual variables, and carry out primal-dual update based on the potential function.  The well-known Exponentiated Gradient (EG) algorithm~\cite{Kivinen:1995:AVE:225058.225121}  or Multiplicative Weights algorithm~\cite{littlestone1994weighted,freund1995desicion} belong to this category. We note that these different algorithms are closely related. For example,  in online linear optimization, the potential-based primal-dual algorithm is equivalent to FTRL algorithm.

%%%%%%%%%%%%%%%%%%%%%%%%%%%%%%%%%%%%%%%%%

\def \x {\mathbf{w}}
\def \P {\mathcal{P}}
\def \VAR {\text{Variation}}

\chapter [Gradual Variation for Composite Losses]{Gradual Variation for Composite Losses} \label{chap:non-smooth}
This chapter continues our investigation and analysis of online learning methods which can lead to better regret bounds in gradually evolving environments.  The results we have obtained in Chapter~\ref{chap:regret}  rely on the assumption that the cost functions are smooth. Additionally,  we showed that for general non-smooth functions {when the only information presented to the learner is the first order information about the cost functions}, it is impossible to obtain a regret bounded by gradual variation.  However, in this chapter, we show that  a gradual  variation  bound is achievable  for a special class of non-smooth functions that is composed of a  smooth component and a  non-smooth component. 

We consider two categories for the non-smooth component.  In the first category, we assume that the non-smooth component is a fixed function  and is relatively easy such that the composite gradient mapping can be solved without too much computational overhead compared to gradient mapping. A common example that falls into this category is to consider a non-smooth regularizer. For example, in addition to the basic domain $\W$, one would enforce the sparsity constraint on the decision vector $\x$, i.e., $\|\x\|_0\leq k<d$, which is important  in feature selection. However, the sparsity constraint $\|\x\|_0\leq k$ is a non-convex function, and is usually implemented by adding a $\ell_1$ regularizer $\lambda\|\x\|_1$ to the objective function, where $\lambda>0$ is a regularization parameter. Therefore,  at each iteration  the cost function is given by $f_t(\x) + \lambda \|\x\|_1$.  To prove a regret bound by gradual variation for  this type of non-smooth optimization, we first present a simplified version of the general online mirror prox method from Chapter~\ref{chap:regret} and show that it has the exactly same regret bound  as stated in Chapter~\ref{chap:regret}, and then extend the algorithm to the non-smooth optimization with a fixed non-smooth component. 

In  the second category, we assume that the non-smooth component can be written as an explicit maximization structure. In general, we consider a time-varying non-smooth component,  present a primal-dual  prox method, and prove a min-max regret bound by gradual variation. When the non-smooth components are equal across all trials, the usual regret is bounded by the min-max bound plus a variation in the non-smooth component. To see an application of min-max regret bound, we consider the problem of online classification with hinge loss and show that the number of mistakes can be bounded by a variation in sequential examples. 

Before moving to the detailed analysis,   it is worth mentioning  that several pieces of  works have proposed algorithms for optimizing the two types  of non-smooth functions as described above to obtain an optimal convergence rate of $O(1/T)$~\cite{nesterov2005excessive,nesterov2005smooth}. Therefore, the existence of a regret bound by gradual variation for these two types of non-smooth optimization does not violate the  contradictory argument in Section~\ref{sec:algorithm}. 

\section{Composite Losses with a Fixed Non-smooth Component}

\subsection{A Simplified Online Mirror Prox Algorithm }
In this subsection, we present a simplified version of online mirror prox (OMP) method algorithm proposed in Chapter~\ref{chap:regret}, which is  the foundation for us to develop the algorithm for non-smooth optimization. 

The key trick is to replace the domain constraint $\x\in \W$ with a non-smooth function in the objective. Let $\delta_\W(\x)$ denote the indicator function of the domain $\W$, i.e., 
\begin{equation*}
\begin{aligned}
\delta_\W(\x)=\left\{\begin{array}{cc}0,&\x\in\W\\\\\infty,&\text{otherwise}\end{array}\right.
\end{aligned}
\end{equation*}

Then the proximal gradient mapping for updating $\x$ (step 4) in Algorihtm~\ref{alg:3-1}   is equivalent to 
\begin{equation*}
\begin{aligned}
\x_t = \arg\min_{\x} \dd{\x}{\nabla f_{t-1}(\x_{t-1})} + \frac{L}{\eta}\mathsf{B}(\x, \z_{t-1}) + \delta_\W(\x). 
\end{aligned}
\end{equation*}

By the first order optimality condition, there exists a sub-gradient $\v_t\in\partial\delta_\W(\x_t)$ such that 
\begin{equation}
\begin{aligned}\label{eqn:opd}
\nabla f_{t-1}(\x_{t-1}) + \frac{L}{\eta}(\nabla \Phi(\x_t) - \nabla\Phi(\z_{t-1})) + \v_t = 0. 
\end{aligned}
\end{equation}

Thus, $\x_t$ is equal to 

\begin{equation}
\begin{aligned}\label{eqn:step41}
\x_t = \arg\min_{\x} \dd{\x}{\nabla f_{t-1}(\x_{t-1}) + \v_t}+ \frac{L}{\eta}\mathsf{B}(\x, \z_{t-1}). 
\end{aligned}
\end{equation}

Then we can change the update for $\z_t$ to 
\begin{equation}
\begin{aligned}\label{eqn:step42}
\hspace*{-0.3in}\z_t = \arg\min_{\x} \dd{\x}{\nabla f_{t}(\x_{t}) + \v_t}+ \frac{L}{\eta}\mathsf{B}(\x, \z_{t-1}). 
\end{aligned}
\end{equation}

The key ingredient of above update compared to step 6 in Algorithm~\ref{alg:3-1} is that we explicitly use the sub-gradient $\v_t$ that satisfies the optimality condition for $\x_t$ instead of solving a domain constrained optimization problem.  The advantage of updating $\z_t$ by~(\ref{eqn:step42})  is that we can easily compute $\z_t$ by  the first order optimality condition, i.e.,

\begin{equation}
\begin{aligned}\label{eqn:opd2}
\nabla f_{t}(\x_{t}) + \v_t + \frac{L}{\eta}(\nabla \Phi(\z_t) - \nabla\Phi(\z_{t-1})) = 0. 
\end{aligned}
\end{equation}

Note that Eq.~(\ref{eqn:opd}) indicates  $\v_t = -\nabla f_{t-1}(\x_{t-1}) - \nabla \Phi(\x_t) + \nabla\Phi(\z_{t-1})$. By plugging  this into~(\ref{eqn:opd2}), we reach to the following simplified update for $\z_t$, 
\begin{equation*}
\begin{aligned}
 \nabla \Phi(\z_t) =  \nabla\Phi(\x_{t})  + \frac{\eta}{L}(\nabla f_{t-1}(\x_{t-1}) - \nabla f_t(\x_t)). 
\end{aligned}
\end{equation*}

The simplified version of Algorithm~\ref{alg:3-1} is presented in Algorithm~\ref{alg:3-2}. 

\begin{algorithm}[t]
\center \caption{A Simplified General Online Mirror Prox Method}
\begin{algorithmic}[1] \label{alg:3-2}
    \STATE {\textbf{Input}}: $\eta >0, \Phi(\z)$

    \STATE {\textbf{Initialize:}}: $\z_0 =\x_0= \min_{\z\in\W}\Phi(\z)$ and $f_0(\x) =
    0$

    \FOR{$t = 1, \ldots, T$}
        \STATE Predict $\x_t$ by
        $$
           \displaystyle \x_t = \mathop{\arg\min}\limits_{\x \in \W} \left\{\dd{\x}{\nabla f_{t-1}(\x_{t-1})}+
           \frac{L}{\eta}\mathsf{B}(\x, \z_{t-1})
            \right\}
        $$
        \STATE Receive  cost function $f_t(\cdot)$ and incur  loss $f_t(\x_t)$
       \STATE Update $\z_t$ by solving
        $$
           \nabla \Phi(\z_t) =  \nabla\Phi(\x_{t})  + \frac{\eta}{L}(\nabla f_{t-1}(\x_{t-1}) - \nabla f_t(\x_t)). 
        $$
    \ENDFOR
\end{algorithmic}
\end{algorithm}
\begin{remark}
We make three remarks for Algorithm~\ref{alg:3-2}. First, the searching point $\z_t$ does not necessarily belong to the domain $\W$, which is usually not a problem given that the decision point $\x_t$ is always in $\W$. Nevertheless,  the update can  be followed by a projection step $\z_t=\min_{\x\in\W}\mathsf{B}(\x,\z'_t)$ to ensure the searching point  also stay in the domain $\W$, where we slightly abuse the notation $\z_t'$ in $\nabla \Phi(\z_t') =  \nabla\Phi(\x_{t})  + \frac{\eta}{L}(\nabla f_{t-1}(\x_{t-1}) - \nabla f_t(\x_t))$.\\

 Second, the update in step 6 can be implemented by~\cite[Chapter 11]{bianchi-2006-prediction}: $$\z_t = \nabla\Phi^*(  \nabla\Phi(\x_{t})  + \frac{\eta}{L}(\nabla f_{t-1}(\x_{t-1}) - \nabla f_t(\x_t))),$$ where $\Phi^*(\cdot)$ is the Legendre-Fenchel conjugate of $\Phi(\cdot)$ (see Appendix~\ref{chap:appendix-convex} for definition). For example, when $\Phi(\x) = 1/2\|\x\|_2^2$, $\Phi^*(\x)=  1/2\|\x\|_2^2$ and the update for the searching point  is given by $$\z_t = \x_t + ( \eta/L)(\nabla f_{t-1}(\x_{t-1})-\nabla f_t(\x_t));$$ when $\Phi(\x) = \sum_i w_i\ln w_i$, $\Phi^*(\x)= \log\left[\sum_i\exp(w_i)\right]$ and the update for  the searching point can be computed by 
 
 $$[\z_t]_i \propto [\x_t]_i \exp\left(\eta/L[\nabla f_{t-1}(\x_{t-1})-\nabla f_t(\x_t)]\right), \quad s.t.\quad \sum_i[\z_t]_i=1.$$

Third, the key inequality in~(\ref{eqn:ineq}) for proving the regret bound still hold for $\zeta= \nabla f_{t}(\x_t) + \v_t$, $\xi=\nabla f_{t-1}(\x_{t-1}) + \v_t$ by noting the equivalence between the pairs (\ref{eqn:step41}, \ref{eqn:step42}) and (\ref{eqn:project1}, \ref{eqn:project2}), which is given below:

\begin{equation*}
\begin{aligned}
&\frac{\eta}{L}\dd{\nabla f_t(\x_t) + \v_t}{\x_t-\x}\leq  \mathsf{B}(\x,\z_{t-1}) - \mathsf{B}(\x, \z_t) \\
&+ \frac{\gamma^2}{\alpha}\|\nabla f_t(\x_t)-\nabla f_{t-1}(\x_{t-1})\|_*^2 - \frac{\alpha}{2}[\|\x_t-\z_{t-1}\|^2 + \|\x-\z_{t}\|^2], \forall \x\in\W,
\end{aligned}
\end{equation*}

where $\v_t\in\partial\delta_\W(\x_t)$.  As a result, we can apply the same analysis as in the proof of Theorem~\ref{thm:2-grad} to obtain  the same regret bound in Theorem~\ref{thm:2-1} for Algorithm~\ref{alg:3-2}.  Note that the above inequality remains valid even if we take a projection step after the update for $\z'_t$ due to the generalized pythagorean inequality $\mathsf{B}(\x, \z_t)\leq \mathsf{B}(\x,\z'_t), \forall \x\in\W$~\cite{bianchi-2006-prediction}. 
\end{remark}

\subsection{A Gradual Variation Bound for Online Non-Smooth Optimization}
In spirit of Algorithm~\ref{alg:3-2}, we present an algorithm for online non-smooth optimization of functions $f_t(\x)=\widehat f_t(\x)+ g(\x)$ with a regret bound by gradual variation ${\rm{EGV}}_T= \sum_{t=0}^{T-1}\|\nabla f_{t+1}(\x_t) - \nabla f_t(\x_{t})\|_*^2$. The trick is to  solve the composite gradient mapping:
\begin{equation*}
\begin{aligned}
\x_t= \arg\min_{\x\in\W}   \dd{\x}{\nabla f_{t-1}(\x_{t-1})} + \frac{L}{\eta}\mathsf{B}(\x, \z_{t-1})  + g(\x)
\end{aligned}
\end{equation*}

and update $\z_t$ by 

\begin{equation*}
\begin{aligned}
 \nabla \Phi(\z_t) =  \nabla\Phi(\x_{t})  + \frac{\eta}{L}(\nabla f_{t-1}(\x_{t-1}) - \nabla f_t(\x_t)). 
\end{aligned}
\end{equation*}

\begin{algorithm}[t]
\center \caption{Online Mirror Prox Method with a Fixed Non-Smooth Component}
\begin{algorithmic}[1] \label{alg:3-3}
    \STATE {\textbf{Input}}: $\eta >0, \Phi(\z)$

    \STATE {\textbf{Initialization}}: $\z_0 =\x_0= \min_{\z\in\W}\Phi(\z)$ and $f_0(\x) =
    0$

    \FOR{$t = 1, \ldots, T$}
        \STATE Predict $\x_t$ by
        \[
           \displaystyle \x_t = \mathop{\arg\min}\limits_{\x \in \W} \left\{\dd{\x}{\nabla f_{t-1}(\x_{t-1})}+
           \frac{L}{\eta}\mathsf{B}(\x, \z_{t-1})  + g(\x)
            \right\}
        \]
        \STATE Receive  cost function $f_t(\cdot)$ and incur  loss $f_t(\x_t)$
       \STATE Update $\z_t$ by
        $$
           \z'_t=  \nabla\Phi^*\left(\nabla\Phi(\x_{t})  + \frac{\eta}{L}(\nabla f_{t-1}(\x_{t-1}) - \nabla f_t(\x_t))\right) 
        $$ and $\z_t= \min_{\x\in\W}\mathsf{B}(\x, \z'_t)$
    \ENDFOR
\end{algorithmic}
\end{algorithm}
Algorithm~\ref{alg:3-3} shows the detailed steps and Corollay~\ref{thm:non1} states the regret bound, which can be proved similarly. 
\begin{corollary} \label{thm:non1}
Let $f_t(\w)=\widehat f_t(\w) + g(\w), t=1, \ldots, T$ be a sequence of convex
functions where $\widehat f_t(\w)$ are L-smooth continuous w.r.t $\|\cdot\|$ and $g(\w)$ is a non-smooth function, $\Phi(\z)$ be a $\alpha$-strongly convex function w.r.t  $\|\cdot\|$, and ${\rm{EGV}}_T$ be defined in~(\ref{eqn:ge}). By setting
$\eta = (1/2)\min\left\{\sqrt{\alpha}/\sqrt{2}, LR/\sqrt{{\rm{EGV}}_T} \right\}$, we have the
following regret bound
\[
\sum_{t=1}^Tf_t(\x_t)-\min_{\x\in\W}\sum_{t=1}^Tf_t(\x) \leq 2R\max\left(\sqrt{2}LR/\sqrt{\alpha}, \sqrt{{\rm{EGV}}_T}\right).
\]
\end{corollary}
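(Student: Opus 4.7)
The plan is to reduce the composite case to the smooth case analyzed in Theorem~\ref{thm:2-1} by using the first-order optimality condition to absorb a subgradient of $g$ into the effective ``gradient'' driving the prox updates. Concretely, writing the $\x_t$-update as the minimization of $\dd{\x}{\nabla \widehat{f}_{t-1}(\x_{t-1})}+\frac{L}{\eta}\mathsf{B}(\x,\z_{t-1})+g(\x)+\delta_{\W}(\x)$, the optimality condition yields some $\mathbf{v}_t\in\partial g(\x_t)$ (together with a normal-cone term for $\W$) such that
\[
\x_t=\arg\min_{\x\in\W}\ \dd{\x}{\nabla \widehat{f}_{t-1}(\x_{t-1})+\mathbf{v}_t}+\frac{L}{\eta}\mathsf{B}(\x,\z_{t-1}).
\]
My first step will be to verify that the mirror step in Algorithm~\ref{alg:3-3} (the update $\z_t'=\nabla\Phi^{*}(\nabla\Phi(\x_t)+\tfrac{\eta}{L}(\nabla \widehat{f}_{t-1}(\x_{t-1})-\nabla \widehat{f}_t(\x_t)))$ followed by the Bregman projection onto $\W$) is equivalently expressed, with the \emph{same} $\mathbf{v}_t$, as
\[
\z_t=\arg\min_{\x\in\W}\ \dd{\x}{\nabla \widehat{f}_t(\x_t)+\mathbf{v}_t}+\frac{L}{\eta}\mathsf{B}(\x,\z_{t-1}),
\]
which follows by combining the two first-order optimality conditions and eliminating $\mathbf{v}_t$, exactly as was done to derive the simplified update in Algorithm~\ref{alg:3-2}.

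With these equivalent characterizations, I would apply Lemma~\ref{lem:6} with $\xi=\nabla \widehat{f}_{t-1}(\x_{t-1})+\mathbf{v}_t$ and $\zeta=\nabla \widehat{f}_t(\x_t)+\mathbf{v}_t$. The crucial point is that $\mathbf{v}_t$ appears in both $\xi$ and $\zeta$ and so cancels inside $\|\xi-\zeta\|_{*}^{2}=\|\nabla \widehat{f}_t(\x_t)-\nabla \widehat{f}_{t-1}(\x_{t-1})\|_{*}^{2}$; this is precisely where the hypothesis that $g$ is fixed across $t$ is used. Using convexity of $\widehat{f}_t$ and of $g$ gives
\[
f_t(\x_t)-f_t(\u)\ \leq\ \dd{\nabla \widehat{f}_t(\x_t)+\mathbf{v}_t}{\x_t-\u},
\]
so that the left-hand side of Lemma~\ref{lem:6} is lower bounded by $\frac{\eta}{L}(f_t(\x_t)-f_t(\u))$ for every $\u\in\W$. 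Summing over $t$ telescopes the Bregman terms to $\mathsf{B}(\u,\z_0)\le R^{2}/2$.

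At this point the proof becomes a line-by-line replay of Theorem~\ref{thm:2-grad}/Theorem~\ref{thm:2-1}: split $\|\nabla \widehat{f}_t(\x_t)-\nabla \widehat{f}_{t-1}(\x_{t-1})\|_{*}^{2}$ into a gradual-variation piece $\|\nabla \widehat{f}_t(\x_{t-1})-\nabla \widehat{f}_{t-1}(\x_{t-1})\|_{*}^{2}$ (contributing ${\rm{EGV}}_T$, since $\nabla f_{t+1}(\x_t)-\nabla f_t(\x_t)=\nabla \widehat{f}_{t+1}(\x_t)-\nabla \widehat{f}_t(\x_t)$ as $g$ does not depend on $t$) plus an $L^{2}\|\x_t-\x_{t-1}\|^{2}$ piece coming from $L$-smoothness of $\widehat{f}_t$; absorb the latter using the negative quadratic $-\tfrac{\alpha}{2}(\|\x_t-\z_{t-1}\|^{2}+\|\x_t-\z_t\|^{2})$ produced by Lemma~\ref{lem:6} (whose cross-term bounds $\tfrac{\alpha}{4}\|\x_t-\x_{t-1}\|^{2}$ via the identity already used in~(\ref{eqn:Bt})); then optimize $\eta=\tfrac{1}{2}\min\{\sqrt{\alpha}/\sqrt{2},\,LR/\sqrt{{\rm{EGV}}_T}\}$ to obtain the stated bound $2R\max(\sqrt{2}LR/\sqrt{\alpha},\sqrt{{\rm{EGV}}_T})$.

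The main obstacle I anticipate is the careful treatment of the update for $\z_t$: because Algorithm~\ref{alg:3-3} performs an unconstrained mirror step followed by a Bregman projection onto $\W$, I need to show that the resulting $\z_t$ coincides with the constrained minimizer in the display above for the \emph{same} multiplier $\mathbf{v}_t\in\partial g(\x_t)$ that was produced by the $\x_t$-update. This will require combining the two KKT systems and invoking the generalized Pythagorean inequality $\mathsf{B}(\u,\z_t)\le\mathsf{B}(\u,\z_t')$ for $\u\in\W$ so that Lemma~\ref{lem:6} remains valid after projection---the same device used in the remark following Algorithm~\ref{alg:3-2}. Once this bookkeeping is settled, the remainder of the argument is mechanical and identical to the smooth case.
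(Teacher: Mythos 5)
Your proposal is correct and follows essentially the same route the paper intends: the paper proves Corollary~\ref{thm:non1} by exactly the composite-gradient-mapping device you describe, namely extracting $\mathbf{v}_t\in\partial\bigl(g+\delta_{\W}\bigr)(\x_t)$ from the first-order optimality condition of the $\x_t$-update, observing that the $\z_t$-step of Algorithm~\ref{alg:3-3} is the constrained prox step with the same $\mathbf{v}_t$ (with the projection handled by the generalized Pythagorean inequality), so that $\mathbf{v}_t$ cancels in $\|\xi-\zeta\|_{*}^{2}$ and the analysis of Theorem~\ref{thm:2-grad}/Theorem~\ref{thm:2-1} goes through verbatim. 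The paper itself only sketches this ("can be proved similarly"), and your write-up fills in precisely the steps it has in mind, including the key points that $g$ being fixed keeps ${\rm{EGV}}_T$ unchanged and that convexity of $g$ supplies the lower bound $f_t(\x_t)-f_t(\u)\le\dd{\nabla\widehat f_t(\x_t)+\mathbf{v}_t}{\x_t-\u}$.
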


\section{Composite Losses with an Explicit Max Structure}
 In previous subsection, we assume the composite gradient mapping with the non-smooth component can be efficiently solved. Here,  we replace this assumption with an explicit max structure of the non-smooth component. 
 
 In what follows, we present a primal-dual prox method for such non-smooth cost functions and prove its regret bound.  We  consider a general setting,  where  the non-smooth functions $f_t(\x)$ has the following structure:
\begin{equation}
\begin{aligned}\label{eqn:pd}
f_t(\x) = \widehat f_t(\x) + \max_{\mathbf u\in \mathcal Q}\langle \mathbf{A}_t\x, \mathbf u\rangle - \widehat \phi_t(\u),
\end{aligned} 
\end{equation}

where $\widehat f_t(\x)$ and $\widehat \phi_t(\u)$ are  $L_1$-smooth and $L_2$-smooth functions, respectively,  and $\mathbf{A}_t\in\mathbb R^{m\times d}$ is a matrix used to characterize the non-smooth component of $f_t(\x)$ with $-\widehat\phi_t(\u)$  by  maximization. Similarly, we define a dual cost function $\phi_t(\u)$ as 

\begin{equation}
\begin{aligned}\label{eqn:dp}
\phi_t(\u) =- \widehat \phi_t(\u) + \min_{\mathbf x\in \W}\langle \mathbf{A}_t\x, \mathbf u\rangle + \widehat f_t(\x).
\end{aligned}
\end{equation}
 
We refer to $\x$ as the primal variable and to  $\mathbf u$ as the dual variable. To motivate the setup, let us consider online classification with hinge loss $\ell_t(\mathbf w)=\max(0, 1-y_t \dd{\mathbf w}{\mathbf{x}_t})$, where we slightly abuse the notation $(\mathbf x_t, y_t)$ to denote the attribute and label pair received at trial $t$. It is straightforward to see that $\ell_t(\mathbf w)$ is a non-smooth function and can be cast into the form in~(\ref{eqn:pd}) by 

\begin{equation*}
\begin{aligned}
\ell_t(\mathbf w)  = \max_{\alpha\in[0, 1]}\alpha(1-y_t\mathbf x_t^{\top}\x) = \max_{\alpha\in[0,1]} -\alpha y_t\mathbf x_t^{\top}\x  + \alpha.
\end{aligned}
\end{equation*}

To present the algorithm and analyze its regret bound, we introduce some notations. Let $F_t(\x,\u) = \widehat f_t(\x) + \langle \mathbf{A}_t\x, \u\rangle - \widehat\phi_t(\u)$, $\Phi_1(\x)$ be a $\alpha_1$-strongly convex function defined on the primal variable $\x$ w.r.t a norm $\|\cdot\|_p$ and $\Phi_2(\u)$ be a $\alpha_2$-strongly convex function defined on the dual variable $\u$ w.r.t a norm $\|\cdot\|_q$.   Correspondingly, let $\mathsf{B}_1(\x,\z)$ and $\mathsf{B}_2(\u,\v)$ denote the induced Bregman distance, respectively. We assume the domains $\W$, $\Q$ are bounded and matrices $\mathbf{A}_t$ have a bounded norm, i.e.,
\begin{equation}\label{eqn:cond}
\begin{aligned}
&\max_{\x\in\W}\|\x\|_p\leq R_1,\quad \max_{\u\in\Q}\|\u\|_q\leq R_2\\
&\max_{\x\in\W}\Phi_1(\x) -\min_{\x\in\W}\Phi_1(\x)\leq M_1\\
&\max_{\u\in\Q}\Phi_2(\u) -\min_{\u\in\Q}\Phi_2(\u)\leq M_2\\
&\|\mathbf{A}_t\|_{p, q}=\max_{\|\x\|_p\leq 1, \|\u\|_q\leq 1}\u^{\top}\mathbf{A}_t\x\leq \sigma.
\end{aligned}
\end{equation}
Let $\|\cdot\|_{p,*}$ and $\|\cdot\|_{q,*}$ denote the dual norms to $\|\cdot\|_p$ and $\|\cdot\|_{q}$, respectively. To prove a variational regret bound, we define a gradual variation as follows: 

\begin{equation}
\begin{aligned}\label{eqn:negv}
{\rm{EGV}}_{T, p, q} &= \sum_{t=0}^{T-1}\|\nabla\widehat f_{t+1}(\x_t) - \nabla \widehat f_{t}(\x_t)\|^2_{p,*}  + (R_1^2+R_2^2)\sum_{t=0}^{T-1}\|\mathbf{A}_t-\mathbf{A}_{t-1}\|_{p, q}^2
\nonumber\\
&+ \sum_{t=0}^{T-1}\|\nabla\widehat \phi_{t+1}(\u_t) - \nabla \widehat \phi_{t}(\u_t)\|^2_{q,*}.
\end{aligned}
\end{equation}

\begin{algorithm}[t]
\center \caption{Online Mirror Prox Method with an Explicit Max Structure }
\begin{algorithmic}[1] \label{alg:5}
    \STATE {\textbf{Input}}: $\eta >0, \Phi_1(\z), \Phi_2(\v)$

    \STATE {\textbf{Initialization}}: $\z_0 =\x_0= \min_{\z\in\W}\Phi_{1}(\z)$, $\v_0=\u_0=\min_{\v\in\Q}\Phi_2(\v)$ and $\widehat f_0(\x)=\widehat \phi_0(\u)=0$

    \FOR{$t = 1, \ldots, T$}
    \STATE Update $\u_t$ by $$\u_t =    \displaystyle  \mathop{\arg\max}\limits_{\u \in \Q} \left\{\dd{\u}{\mathbf{A}_{t-1}\x_{t-1}-\nabla \widehat \phi_{t-1}(\u_{t-1})} -
           \frac{L_2}{\eta}\mathsf{B}_2(\u, \v_{t-1})
            \right\}
$$
        \STATE Predict $\x_t$ by
        $$
           \displaystyle \x_t = \mathop{\arg\min}\limits_{\x \in \W} \left\{ \dd{\x}{\nabla \widehat f_{t-1}(\x_{t-1}) + \mathbf{A}_{t-1}\mathbf{A}_{t-1}^{\top}\u_{t-1}} +
           \frac{L_1}{\eta}\mathsf{B}_1(\x, \z_{t-1})
            \right\}
        $$
        \STATE Receive  cost function $f_t(\cdot)$ and incur  loss $f_t(\x_t)$
        \STATE Update $\v_t$ by
        $$
           \displaystyle \v_{t} =\mathop{\arg\max}\limits_{\u \in \Q} \left\{\dd{\u}{\mathbf{A}_t\x_t - \nabla \widehat \phi_{t}(\u_{t})}- \frac{L_2}{\eta}\mathsf{B}_2(\u, \v_{t-1})\right\}
        $$      
        \STATE Update $\z_t$ by
        $$
           \displaystyle \z_{t} =\mathop{\arg\min}\limits_{\x \in \W} \left\{\dd{\x}{\nabla \widehat f_{t}(\x_{t}) +\mathbf{A}_t^{\top}\u_t}+ \frac{L_1}{\eta}\mathsf{B}_1(\x, \z_{t-1})\right\}
        $$
  
    \ENDFOR
\end{algorithmic}
\end{algorithm}
Given above notations, Algorithm~\ref{alg:5} shows the detailed steps and Theorem~\ref{thm:non} states a min-max bound. 

\begin{theorem}\label{thm:non}
Let $f_t(\x) = \widehat f_t(\x) + \max_{\mathbf u\in \mathcal Q}\langle \mathbf{A}_t\x, \mathbf u\rangle - \widehat \phi_t(\u), t=1,\ldots, T$ be a sequence of non-smooth functions. Assume $\widehat f_t(\x), \widehat \phi(\u)$ are $L=\max(L_1,L_2)$-smooth functions and the domain $\W, \Q$ and $\mathbf{A}_t$ satisfy the boundness condition as in~(\ref{eqn:cond}).  Let $\Phi(\x)$ be a $\alpha_1$-strongly convex function w.r.t the norm $\|\cdot\|_p$, $\Phi(\u)$ be a $\alpha_2$-strongly convex function w.r.t. the norm $\|\cdot\|_q$, and $\alpha=\min(\alpha_1,\alpha_2)$. By setting $\eta= \min\left(\sqrt{M_1+M_2}/(2\sqrt{{\rm{EGV}}_{T,p,q}}), \sqrt{\alpha}/(4\sqrt{\sigma^2+L^2})\right)$ in Algorithm~\ref{alg:5}, then we have

\begin{equation*}
\begin{aligned}
&\max_{\u\in\Q}\sum_{t=1}^TF_t(\x_t, \u) - \min_{\x\in\W}\sum_{t=1}^TF_t(\x, \u_t)\\
&\leq4\sqrt{M_1+M_2}\max\left(2\sqrt{\frac{(M_1+M_2)(\sigma^2+L^2)}{\alpha}}, \sqrt{{\rm{EGV}}_{T,p, q}}\right).
\end{aligned}
\end{equation*}
\end{theorem}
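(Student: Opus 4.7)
The plan is to reduce the min-max regret to a sum of two ``primal'' and ``dual'' regret-like quantities and then attack each of them by mimicking the proof of Theorem~\ref{thm:2-grad}, using Lemma~\ref{lem:6} twice (once in the primal space with $\Phi_1$ and once in the dual space with $\Phi_2$). First I would rewrite the saddle-point gap in the standard way. Since $F_t(\x,\u)$ is convex in $\x$ and concave in $\u$, for any fixed $(\x,\u)\in\W\times\Q$,
\[
F_t(\x_t,\u)-F_t(\x,\u_t)\leq \langle \nabla_{\x}F_t(\x_t,\u_t),\x_t-\x\rangle+\langle -\nabla_{\u}F_t(\x_t,\u_t),\u_t-\u\rangle,
\]
where $\nabla_{\x}F_t(\x_t,\u_t)=\nabla\widehat f_t(\x_t)+\mathbf{A}_t^{\top}\u_t=:\zeta_t$ and $-\nabla_{\u}F_t(\x_t,\u_t)=-\mathbf{A}_t\x_t+\nabla\widehat\phi_t(\u_t)=:-\widehat\zeta_t$. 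Hence the min-max gap is controlled by $\sum_t\langle\zeta_t,\x_t-\x\rangle+\sum_t\langle-\widehat\zeta_t,\u_t-\u\rangle$.

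Next I would apply Lemma~\ref{lem:6} to the two pairs of prox updates. The primal pair (steps~5 and 8) is exactly of the form (\ref{eqn:project1})--(\ref{eqn:project2}) with $\xi_t=\nabla\widehat f_{t-1}(\x_{t-1})+\mathbf{A}_{t-1}^{\top}\u_{t-1}$, $\zeta_t=\nabla\widehat f_t(\x_t)+\mathbf{A}_t^{\top}\u_t$, and step size $\eta/L_1$; the dual pair (steps~4 and 7) is the same structure with $-\widehat\xi_t=-\mathbf{A}_{t-1}\x_{t-1}+\nabla\widehat\phi_{t-1}(\u_{t-1})$, $-\widehat\zeta_t=-\mathbf{A}_t\x_t+\nabla\widehat\phi_t(\u_t)$ and step size $\eta/L_2$. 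Summing the two resulting inequalities over $t=1,\dots,T$ and telescoping the Bregman terms yields, after multiplication by $L/\eta$ with $L=\max(L_1,L_2)$ and using $\alpha=\min(\alpha_1,\alpha_2)$,
\begin{align*}
\sum_{t=1}^{T}\bigl[F_t(\x_t,\u)-F_t(\x,\u_t)\bigr]
&\leq \frac{L(M_1+M_2)}{\eta}
+\frac{\eta}{L\alpha}\sum_{t=1}^{T}\!\bigl(\|\zeta_t-\xi_t\|_{p,*}^2+\|\widehat\zeta_t-\widehat\xi_t\|_{q,*}^2\bigr)\\
&\quad-\frac{\alpha L}{2\eta}\sum_{t=1}^{T}\!\bigl(\|\x_t-\z_{t-1}\|_p^2+\|\x_t-\z_t\|_p^2+\|\u_t-\v_{t-1}\|_q^2+\|\u_t-\v_t\|_q^2\bigr).
\end{align*}

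The heart of the argument, and what I expect to be the main obstacle, is to split the two ``difference of gradient'' terms into a genuine gradual-variation part and a part that can be absorbed by the negative quadratic tail. For the primal side I would write
\[
\zeta_t-\xi_t=\bigl[\nabla\widehat f_t(\x_t)-\nabla\widehat f_t(\x_{t-1})\bigr]+\bigl[\nabla\widehat f_t(\x_{t-1})-\nabla\widehat f_{t-1}(\x_{t-1})\bigr]+\mathbf{A}_t^{\top}(\u_t-\u_{t-1})+(\mathbf{A}_t-\mathbf{A}_{t-1})^{\top}\u_{t-1},
\]
and apply $\|a+b+c+d\|^2\leq 4(\|a\|^2+\|b\|^2+\|c\|^2+\|d\|^2)$. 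The first and third summands are dominated by $4L^2\|\x_t-\x_{t-1}\|_p^2$ (using $L_1$-smoothness of $\widehat f_t$) and $4\sigma^2\|\u_t-\u_{t-1}\|_q^2$ (using $\|\mathbf{A}_t\|_{p,q}\leq\sigma$); the fourth summand is $\leq 4R_2^2\|\mathbf{A}_t-\mathbf{A}_{t-1}\|_{p,q}^2$, which feeds into ${\rm EGV}_{T,p,q}$; and the second summand is exactly the primal part of ${\rm EGV}_{T,p,q}$. An identical decomposition on the dual side produces $4L^2\|\u_t-\u_{t-1}\|_q^2$, $4\sigma^2\|\x_t-\x_{t-1}\|_p^2$, and the remaining pieces of ${\rm EGV}_{T,p,q}$. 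Using $\x_t=\x_{t-1}=\x_0=\z_0$ initially and the elementary bound $\|\x_t-\x_{t-1}\|_p^2\leq 2\|\x_t-\z_{t-1}\|_p^2+2\|\x_{t-1}-\z_{t-1}\|_p^2$ (exactly as in (\ref{eqn:Bt}) in the proof of Theorem~\ref{thm:2-grad}), the accumulated $\sum_t(\|\x_t-\x_{t-1}\|_p^2+\|\u_t-\u_{t-1}\|_q^2)$ terms are absorbed by the negative quadratic tail provided $\tfrac{16\eta(\sigma^2+L^2)}{L\alpha}\leq \tfrac{\alpha L}{4\eta}$, i.e.\ $\eta\leq \tfrac{\sqrt{\alpha}}{4\sqrt{\sigma^2+L^2}}\cdot\tfrac{L}{\sqrt{L}}$; the condition in the theorem is chosen precisely so that this inequality holds.

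Putting everything together leaves
\[
\sum_{t=1}^{T}\bigl[F_t(\x_t,\u)-F_t(\x,\u_t)\bigr]\leq \frac{L(M_1+M_2)}{\eta}+\frac{4\eta}{L\alpha}\,{\rm EGV}_{T,p,q},
\]
valid uniformly in $(\x,\u)\in\W\times\Q$. Taking $\max_{\u}$ on the left and $\min_{\x}$ separately (which is legitimate since the bound is uniform in $(\x,\u)$) and finally optimizing over $\eta$ under the constraint $\eta\leq\sqrt{\alpha}/(4\sqrt{\sigma^2+L^2})$ by balancing the two terms gives exactly the choice
\[
\eta=\min\!\Bigl(\sqrt{M_1+M_2}/(2\sqrt{{\rm EGV}_{T,p,q}}),\ \sqrt{\alpha}/(4\sqrt{\sigma^2+L^2})\Bigr)
\]
stated in the theorem, and the resulting bound is $4\sqrt{M_1+M_2}\,\max\!\bigl(2\sqrt{(M_1+M_2)(\sigma^2+L^2)/\alpha},\sqrt{{\rm EGV}_{T,p,q}}\bigr)$, as claimed.
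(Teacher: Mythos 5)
Your proposal is correct and follows essentially the same route as the paper's proof: the paper likewise reduces the saddle-point gap via convexity/concavity, invokes Lemma~\ref{lem:6} on the primal and dual prox steps (packaged there as a single application on the product space $(\x,\u)$ with $\Phi_1+\Phi_2$, which is equivalent to your two separate applications), splits the gradient-difference terms into the gradual variation plus iterate-drift pieces exactly as you do, absorbs the drift with~(\ref{eqn:Bt}) and the step-size condition $\eta\leq\sqrt{\alpha}/(4\sqrt{\sigma^2+L^2})$, and balances $\eta$. Your constant bookkeeping in the absorption condition and in the final two-term bound is slightly off (e.g.\ the stray $L/\sqrt{L}$ factor), but the paper is equally loose there and the argument is structurally identical.
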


To facilitate understanding, we break the proof  into several lemmas. The following lemma is by analogy with Lemma~\ref{lem:6}. 
\begin{lemma}~\label{lem:pdl2} Let   $\theta=\displaystyle\left(\x\atop \u\right)$ denote a single vector with a norm $\|\theta\|=\sqrt{\|\x\|^2_p+\|\u\|_q^2}$  and a dual norm $\|\theta\|_*=\sqrt{\|\x\|^2_{p*}+\|\u\|^2_{q*}}$. Let $\Phi(\theta)=\Phi_1(\x)+ \Phi_2(\u)$, $\mathsf{B}(\theta, \zeta)=\mathsf{B}_1(\x,\u)+ \mathsf{B}_2(\z,\v)$. Then 

\begin{equation*}
\begin{aligned}
&\eta\left(\nabla_\x F_t(\x_t, \u_t)\atop -\nabla_\u F_t(\x_t,\u_t)\right)^{\top}\left(\x_t-\x\atop\u_t-\u\right)\leq \mathsf{B}(\theta, \zeta_{t-1}) - \mathsf{B}(\theta, \zeta_t) \\
& + \eta^2\left(\|\nabla_\x F_t(\x_t,\u_t) - \nabla_\x F_{t-1}(\x_{t-1}, \u_{t-1})\|^2_{p,*}\right)\\
&+\eta^2\left(\|\nabla_\u F_t(\x_t,\u_t) - \nabla_\u F_{t-1}(\x_{t-1}, \u_{t-1})\|^2_{q,*}\right)\\
& - \frac{\alpha}{2}\left(\|\x_t-\z_t\|^2_{p} +\|\u_t-\v_t\|_q^2 + \|\x_t-\z_{t-1}\|_p^2 + \|\u_t-\v_{t-1}\|_q^2\right).
\end{aligned}
\end{equation*}

\end{lemma}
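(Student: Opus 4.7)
The plan is to recognize Algorithm~\ref{alg:5} as a single mirror prox update in the product space $\W\times\Q$ on the saddle-point function $F_t$, and then reduce the claim to the one-variable inequality of Lemma~\ref{lem:6}. Concretely, stack the primal and dual into $\theta=(\x,\u)^{\top}$ and $\zeta=(\z,\v)^{\top}$. With the product mirror map $\Phi(\theta)=\Phi_1(\x)+\Phi_2(\u)$, the induced Bregman divergence separates, $\mathsf{B}(\theta,\zeta)=\mathsf{B}_1(\x,\z)+\mathsf{B}_2(\u,\v)$, and $\Phi$ is $\alpha$-strongly convex with respect to the product norm $\|\theta\|=\sqrt{\|\x\|_p^2+\|\u\|_q^2}$, whose dual is $\|\theta\|_*=\sqrt{\|\x\|_{p,*}^2+\|\u\|_{q,*}^2}$, where $\alpha=\min(\alpha_1,\alpha_2)$.

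The key observation is that the saddle-point ``gradient field'' is
\[
G_t(\theta)=\begin{pmatrix}\nabla_{\x}F_t(\x,\u)\\ -\nabla_{\u}F_t(\x,\u)\end{pmatrix}=\begin{pmatrix}\nabla\widehat f_t(\x)+\mathbf{A}_t^{\top}\u\\ -\mathbf{A}_t\x+\nabla\widehat\phi_t(\u)\end{pmatrix}.
\]
Rewriting the dual max in step 4 as a min of $-F$, the joint update of $(\x_t,\u_t)$ in steps 4--5 is exactly
\[
\theta_t=\mathop{\arg\min}_{\theta\in\W\times\Q}\,\bigl\langle\theta,\,G_{t-1}(\theta_{t-1})\bigr\rangle+\tfrac{L}{\eta}\mathsf{B}(\theta,\zeta_{t-1}),
\]
and similarly steps 7--8 produce $\zeta_t$ from the analogous problem with $G_t(\theta_t)$ in place of $G_{t-1}(\theta_{t-1})$ (using $L=\max(L_1,L_2)$ and absorbing the harmless block-diagonal weighting into the strong-convexity constant). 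Hence Algorithm~\ref{alg:5} is the mirror prox scheme in the product space applied to the pair of query points $\xi=G_{t-1}(\theta_{t-1})$ and $\zeta_{\mathrm{Lem}}=G_t(\theta_t)$.

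Apply Lemma~\ref{lem:6} in the product space with $\gamma=\eta/L$, $\z=\zeta_{t-1}$, $\x_{\mathrm{Lem}}=\theta_t$, $\z_+=\zeta_t$, and the above $\xi,\zeta_{\mathrm{Lem}}$. This yields, for every $\theta=(\x,\u)\in\W\times\Q$,
\[
\eta\,G_t(\theta_t)^{\top}(\theta_t-\theta)\;\leq\;\mathsf{B}(\theta,\zeta_{t-1})-\mathsf{B}(\theta,\zeta_t)+\frac{\eta^2}{\alpha}\|G_t(\theta_t)-G_{t-1}(\theta_{t-1})\|_*^{2}-\frac{\alpha}{2}\bigl(\|\theta_t-\zeta_{t-1}\|^{2}+\|\theta_t-\zeta_t\|^{2}\bigr).
\]
Expanding $\|\theta_t-\zeta\|^{2}=\|\x_t-\z\|_p^{2}+\|\u_t-\v\|_q^{2}$ and $\|G_t(\theta_t)-G_{t-1}(\theta_{t-1})\|_*^{2}=\|\nabla_{\x}F_t(\x_t,\u_t)-\nabla_{\x}F_{t-1}(\x_{t-1},\u_{t-1})\|_{p,*}^{2}+\|\nabla_{\u}F_t(\x_t,\u_t)-\nabla_{\u}F_{t-1}(\x_{t-1},\u_{t-1})\|_{q,*}^{2}$ gives the stated inequality.

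The one technical wrinkle is that Algorithm~\ref{alg:5} uses distinct Bregman weights $L_1/\eta$ and $L_2/\eta$ in the primal and dual blocks rather than a single weight $L/\eta$. This is handled either by repeating the argument separately on each block (apply Lemma~\ref{lem:6} once with $\gamma=\eta/L_1$ on the $\x$-update and once with $\gamma=\eta/L_2$ on the $\u$-update) and then summing the two inequalities, or by working throughout with the reweighted product mirror map $(L_1/\eta)\Phi_1\oplus(L_2/\eta)\Phi_2$; either way the worst-case scalings on both sides are controlled by $L=\max(L_1,L_2)$ and $\alpha=\min(\alpha_1,\alpha_2)$, which is precisely the form needed downstream in Theorem~\ref{thm:non}. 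This bookkeeping is the only real obstacle; the conceptual content of the lemma is just the product-space application of Lemma~\ref{lem:6}.
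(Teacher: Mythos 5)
Your proposal takes exactly the route of the paper's own proof: stack the primal and dual variables into the product space, note that $\Phi(\theta)=\Phi_1(\x)+\Phi_2(\u)$ is $\min(\alpha_1,\alpha_2)$-strongly convex for the product norm, and invoke Lemma~\ref{lem:6} with the saddle-point gradient field as the pair of query directions. You are in fact somewhat more careful than the paper, which silently ignores the mismatch between the block weights $L_1/\eta$ and $L_2/\eta$ that you flag and resolve at the end.
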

\begin{proof}
%Let  $\theta=\displaystyle\left(\x\atop \u\right)$ denote a single vector with norm $\|\theta\|=\sqrt{\|\x\|^2_p+\|\u\|_q^2}$  and dual norm $\|\theta\|_*=\sqrt{\|\x\|^2_{p*}+\|\u\|^2_{q*}}$. 
The updates of $(\x_t, \u_t)$ in Algorithm~\ref{alg:5} can be seen as applying the updates in Lemma~\ref{lem:6}  with  $\theta_t=\displaystyle\left(\x_t\atop \u_t\right)$ in place of $\x$,  $\zeta_{t}=\displaystyle \left(\z_t\atop\v_t\right)$ in place of $\z_+$, $\zeta_{t-1}=\displaystyle \left(\z_{t-1}\atop \v_{t-1}\right)$ in place of $\z$. %, and $\Phi(\theta)=\Phi_1(\x)+ \Phi_2(\u)$, $\mathsf{B}(\theta, \zeta)=\mathsf{B}_1(\x,\u)+ \mathsf{B}_2(\z,\v)$. 
Note that $\Phi(\theta)$ is a $\alpha=\min(\alpha_1,\alpha_2)$-strongly convex function with respect to the norm $\|\theta\|$. 
Then applying the results in Lemma~\ref{lem:6} we can complete the proof. 
%\begin{align*}
%&\eta\left(\nabla_\x F_t(\x_t, \u_t)\atop -\nabla_\u F_t(\x_t,\u_t)\right)^{\top}\left(\x_t-\x\atop\u_t-\u\right)\leq \mathsf{B}(\theta, \zeta_{t-1}) - \mathsf{B}(\theta, \zeta_t) \\
%& + \eta^2\left(\|\nabla_\x F_t(\x_t,\u_t) - \nabla_\x F_{t-1}(\x_{t-1}, \u_{t-1})\|^2_{p,*}\right)\\
%&+\eta^2\left(\|\|\nabla_\u F_t(\x_t,\u_t) - \nabla_\u F_{t-1}(\x_{t-1}, \u_{t-1})\|^2_{q,*}\|\right)\\
%& - \frac{\alpha}{2}\left(\|\x_t-\z_t\|^2_{p} +\|\u_t-\v_t\|_q^2 + \|\x_t-\z_{t-1}\|_p^2 + \|\u_t-\v_{t-1}\|_q^2\right).
%\end{align*}
\end{proof}
Applying the convexity of $F_t(\x,\u)$ in terms of $\x$ and the concavity of $F_t(\x,\u)$ in terms of $\u$ to the result in Lemma~\ref{lem:pdl2}, we have

\begin{equation}\label{eqn:F}
\begin{aligned}
&\eta \left( F_t(\x_t, \u) - F_t(\x, \u_t) \right)\\
%&\leq \frac{1}{2}\left\|\begin{pmatrix}\w-\u_{t-1}\\ \alpha-\beta_{t-1}\end{pmatrix}\right\|_2^2 - \frac{1}{2}\left\|\begin{pmatrix}\w-\u_t\\ \alpha -\beta_t\end{pmatrix}\right\|_2^2 \\
%&+{\gamma^2}\left\|G_\alpha(\w_t, \alpha_t)- G_\alpha(\u_{t-1}, \beta_{t-1})\right\|_2^2\\
%&- \frac{1}{2}\|\w_t-\u_{t-1}\|_2^2\\
& \leq \mathsf{B}_1(\x, \z_{t-1}) -\mathsf{B}_1(\x,\z_t) +  \mathsf{B}_2(\u, \v_{t-1}) -\mathsf{B}_2(\u,\v_t)\\
& \hspace{0.5cm}+ {\eta^2}\|\nabla\widehat f_t(\x_t) - \nabla\widehat f_{t-1}(\x_{t-1}) + \mathbf{A}_t^{\top}\u_t - \mathbf{A}_{t-1}^{\top}\u_{t-1}\|_{p,*}^2 \\
& \hspace{0.5cm}+ {\eta^2}\|\nabla\widehat \phi_t(\u_t) - \nabla\widehat \phi_{t-1}(\u_{t-1}) + \mathbf{A}_t\x_t - \mathbf{A}_{t-1}\x_{t-1}\|_{q,*}^2 \\
&\hspace{0.5cm}- \frac{\alpha}{2}\left(\|\x_t-\z_{t-1}\|_p^2+\|\x_t-\z_{t}\|_p^2\right)- \frac{\alpha}{2}\left(\|\u_t-\v_{t-1}\|_q^2 + \|\u_t-\v_{t}\|_q^2\right) \\
&\leq \mathsf{B}_1(\x, \z_{t-1}) -\mathsf{B}_1(\x,\z_t) +  \mathsf{B}_2(\u, \v_{t-1}) -\mathsf{B}_2(\u,\v_t)\\
& \hspace{0.5cm}+ 2{\eta^2}\|\nabla\widehat f_t(\x_t) - \nabla\widehat f_{t-1}(\x_{t-1})\|_{p,*} ^2+ 2\eta^2\|\mathbf{A}_t^{\top}\u_t - \mathbf{A}_{t-1}^{\top}\u_{t-1}\|_{p,*}^2 \\
& \hspace{0.5cm}+2 {\eta^2}\|\nabla\widehat \phi_t(\u_t) - \nabla\widehat \phi_{t-1}(\u_{t-1})\|_{q,*}^2 + 2\eta^2\|\mathbf{A}_t\x_t - \mathbf{A}_{t-1}\x_{t-1}\|_{q,*}^2 \\
&\hspace{0.5cm}- \frac{\alpha}{2}\left(\|\x_t-\z_{t-1}\|_p^2+\|\x_t-\z_{t}\|_p^2\right)- \frac{\alpha}{2}\left(\|\u_t-\v_{t-1}\|_q^2 + \|\u_t-\v_{t}\|_q^2\right)
\end{aligned}
\end{equation}
The following lemma provides tools for proceeding the bound. 
\begin{lemma}\label{prop:1}

\begin{equation*}
\begin{aligned}
\|\nabla\widehat f_t(\x_t) - \nabla\widehat f_{t-1}(\x_{t-1})\|_{p,*} ^2\leq 2\|\nabla\widehat f_t(\x_t) - \nabla\widehat f_{t-1}(\x_{t})\|_{p,*} ^2 + 2L^2\|\x_t -\x_{t-1}\|_{p}^2
\end{aligned}
\end{equation*}

\begin{equation*}
\begin{aligned}
\|\nabla\widehat \phi_t(\u_t) - \nabla\widehat \phi_{t-1}(\u_{t-1})\|_{q,*} ^2\leq 2\|\nabla\widehat \phi_t(\u_t) - \nabla\widehat \phi_{t-1}(\u_{t})\|_{q,*} ^2 + 2L^2\|\u_t -\u_{t-1}\|_{q}^2
\end{aligned}
\end{equation*}

\begin{equation*}
\begin{aligned}
\|\mathbf{A}_t\x_t - \mathbf{A}_{t-1}\x_{t-1}\|_{q,*}^2\leq 2R_1^2\|\mathbf{A}_t - \mathbf{A}_{t-1}\|_{p,q}^2 + 2\sigma^2 \|\x_t - \x_{t-1}\|_p^2
\end{aligned}
\end{equation*}

\begin{equation*}
\begin{aligned}
\|\mathbf{A}_t^{\top}\u_t - A^{\top}_{t-1}\u_{t-1}\|_{p,*}^2\leq 2R_2^2\|\mathbf{A}_t - \mathbf{A}_{t-1}\|_{p,q}^2 + 2\sigma^2 \|\u_t - \u_{t-1}\|_q^2
\end{aligned}
\end{equation*}

\end{lemma}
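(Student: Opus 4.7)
\textbf{Proof proposal for Lemma~\ref{prop:1}.} All four inequalities have the same template: decompose a difference of the form $G(\x_t,t)-G(\x_{t-1},t-1)$ as a ``time difference at a fixed point'' plus a ``space difference under a fixed operator'', apply the elementary bound $(a+b)^2\le 2a^2+2b^2$, and then control each piece by the smoothness/boundedness hypotheses already stated before the lemma. I would carry out the four bounds in parallel.

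For the first inequality, I would write
\[
\nabla\widehat f_t(\x_t)-\nabla\widehat f_{t-1}(\x_{t-1}) \;=\; \bigl[\nabla\widehat f_t(\x_t)-\nabla\widehat f_{t-1}(\x_t)\bigr] + \bigl[\nabla\widehat f_{t-1}(\x_t)-\nabla\widehat f_{t-1}(\x_{t-1})\bigr],
\]
take the squared dual norm $\|\cdot\|_{p,*}^2$, and apply $(a+b)^2\le 2a^2+2b^2$. The first bracket becomes $2\|\nabla\widehat f_t(\x_t)-\nabla\widehat f_{t-1}(\x_t)\|_{p,*}^2$, which is exactly the first term on the right-hand side. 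The second bracket is controlled by the $L_1$-smoothness of $\widehat f_{t-1}$ with respect to $\|\cdot\|_p$ (and $L=\max(L_1,L_2)$), giving $\|\nabla\widehat f_{t-1}(\x_t)-\nabla\widehat f_{t-1}(\x_{t-1})\|_{p,*}\le L\|\x_t-\x_{t-1}\|_p$, hence $2L^2\|\x_t-\x_{t-1}\|_p^2$ after squaring and doubling. The second inequality is completely symmetric, using $L_2$-smoothness of $\widehat\phi_{t-1}$ with respect to $\|\cdot\|_q$.

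For the third inequality, I would split analogously as
\[
\mathbf{A}_t\x_t-\mathbf{A}_{t-1}\x_{t-1} \;=\; (\mathbf{A}_t-\mathbf{A}_{t-1})\x_t + \mathbf{A}_{t-1}(\x_t-\x_{t-1}),
\]
take $\|\cdot\|_{q,*}^2$, and again apply $(a+b)^2\le 2a^2+2b^2$. The operator-norm definition $\|\mathbf{A}\|_{p,q}=\max_{\|\x\|_p\le 1,\|\u\|_q\le 1}\u^\top\mathbf{A}\x$ combined with the duality identity $\|\mathbf{A}\x\|_{q,*}=\max_{\|\u\|_q\le 1}\u^\top\mathbf{A}\x$ gives $\|\mathbf{A}\x\|_{q,*}\le\|\mathbf{A}\|_{p,q}\|\x\|_p$. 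Applying this to the two pieces with the bounds $\|\x_t\|_p\le R_1$ and $\|\mathbf{A}_{t-1}\|_{p,q}\le\sigma$ from~(\ref{eqn:cond}) yields the two stated terms. The fourth inequality is handled identically by transposition, invoking $\|\u_t\|_q\le R_2$ instead of $\|\x_t\|_p\le R_1$.

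There is no genuine obstacle here; the lemma is a mechanical application of the triangle inequality together with the smoothness and boundedness assumptions. The only point requiring a moment of care is the dual-norm bookkeeping in bounding $\|\mathbf{A}\x\|_{q,*}$ and $\|\mathbf{A}^\top\u\|_{p,*}$ via the mixed operator norm $\|\mathbf{A}\|_{p,q}$; this follows at once from the variational definition of $\|\mathbf{A}\|_{p,q}$ given in~(\ref{eqn:cond}) and the definition of a dual norm, so I would simply state this identity once and reuse it in both matrix inequalities.
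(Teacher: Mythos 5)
Your proposal is correct and follows essentially the same route as the paper: the same insert-and-subtract decomposition in each of the four cases, the elementary bound $(a+b)^2\le 2a^2+2b^2$, smoothness for the gradient terms, and the mixed operator norm together with the boundedness of $\x_t$ and $\u_t$ for the matrix terms. Your explicit remark that $\|\mathbf{A}\x\|_{q,*}\le\|\mathbf{A}\|_{p,q}\|\x\|_p$ follows from the variational definition of $\|\mathbf{A}\|_{p,q}$ is a small but welcome clarification that the paper leaves implicit.
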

\begin{proof} We prove the first and the third inequalities. Another two inequalities can be proved similarly. 
\begin{equation*}
\begin{aligned}
&\|\nabla\widehat f_t(\x_t) - \nabla\widehat f_{t-1}(\x_{t-1})\|_{p,*} ^2\\
&\leq 2\|\nabla\widehat f_t(\x_t) - \nabla\widehat f_{t-1}(\x_{t})\|_{p,*} ^2 + 2\|\nabla\widehat f_{t-1}(\x_t) - \nabla\widehat f_{t-1}(\x_{t-1})\|_{p,*} ^2\\
&\leq 2\|\nabla\widehat f_t(\x_t) - \nabla\widehat f_{t-1}(\x_{t})\|_{p,*} ^2 + 2L^2\|\x_t-\x_{t-1}\|_{p} ^2
\end{aligned}
\end{equation*}
where we use the smoothness of $\widehat f_t(\x).$
\begin{equation*}
\begin{aligned}
\|\mathbf{A}_t\x_t - \mathbf{A}_{t-1}\x_{t-1}\|_{q,*}^2&\leq 2\|(\mathbf{A}_t - \mathbf{A}_{t-1})\x_t\|_{q,*}^2 + 2 \|\mathbf{A}_{t-1}(\x_t - \x_{t-1})\|_p^2\\
&\leq 2R_1^2\|\mathbf{A}_t - \mathbf{A}_{t-1}\|^2_{p,q} + 2\sigma^2 \|\x_t - \x_{t-1}\|_p^2
\end{aligned}
\end{equation*}
\end{proof}

\begin{lemma}\label{lem:F2} For any $\x\in\W$ and $\u\in\Q$, we have
\begin{equation*}
\begin{aligned}
&\eta \left( F_t(\x_t, \u) - F_t(\x, \u_t) \right)\\
&\leq \mathsf{B}_1(\x, \z_{t-1}) -\mathsf{B}_1(\x,\z_t) +  \mathsf{B}_2(\u, \v_{t-1}) -\mathsf{B}_2(\u,\v_t)\\
&\hspace{0.5cm}+4\eta^2\bigg(\|\nabla\widehat f_t(\x_t) - \nabla\widehat f_{t-1}(\x_{t})\|_{p,*} ^2 +\|\nabla\widehat \phi_t(\u_t) - \nabla\widehat \phi_{t-1}(\u_{t})\|_{q,*}^2\\
&\hspace*{0.6in}+ (R_1^2+R_2^2)\|\mathbf{A}_t-\mathbf{A}_{t-1}\|_{p,q}^2 \bigg)\\
&\hspace{0.5cm}+4\eta^2\sigma^2\|\x_t-\x_{t-1}\|_p^2 + 4\eta^2L^2\|\x_t-\x_{t-1}\|_p^2 - \frac{\alpha}{2}(\|\x_t-\z_t\|_p^2+ \|\x_t-\z_{t-1}\|_p^2)\\
&\hspace{0.5cm}+4\eta^2\sigma^2\|\u_t-\u_{t-1}\|_p^2 + 4\eta^2L^2\|\u_t-\u_{t-1}\|_q^2 - \frac{\alpha}{2}(\|\u_t-\v_t\|_q^2+ \|\u_t-\v_{t-1}\|_q^2). 
\end{aligned}
\end{equation*}

\end{lemma}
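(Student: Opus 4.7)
The plan is to derive Lemma~\ref{lem:F2} as a direct consequence of inequality~(\ref{eqn:F}) combined with the four elementary decomposition bounds already established in Lemma~\ref{prop:1}. So no genuinely new analytic work is needed; the task is purely bookkeeping on the four ``cross'' terms that appear in~(\ref{eqn:F}).

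First I would take the bound in~(\ref{eqn:F}) as the starting inequality and isolate the four squared quantities multiplied by $2\eta^2$:
$\|\nabla\widehat f_t(\x_t) - \nabla\widehat f_{t-1}(\x_{t-1})\|_{p,*}^2$,
$\|\nabla\widehat \phi_t(\u_t) - \nabla\widehat \phi_{t-1}(\u_{t-1})\|_{q,*}^2$,
$\|\mathbf{A}_t^{\top}\u_t - \mathbf{A}_{t-1}^{\top}\u_{t-1}\|_{p,*}^2$, and
$\|\mathbf{A}_t\x_t - \mathbf{A}_{t-1}\x_{t-1}\|_{q,*}^2$.
Each of these mixes a change of $t-1\mapsto t$ in the \emph{function/matrix} with a change of iterates $\x_{t-1}\mapsto \x_t$ or $\u_{t-1}\mapsto \u_t$, which is exactly the split that Lemma~\ref{prop:1} performs via the triangle inequality together with the $L$-smoothness of $\widehat f_t,\widehat\phi_t$ and the norm bound $\|\mathbf{A}_t\|_{p,q}\leq \sigma$ with $\|\x\|_p\leq R_1$, $\|\u\|_q\leq R_2$.

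Next I would substitute the four estimates from Lemma~\ref{prop:1} into~(\ref{eqn:F}) and regroup. The four ``pattern'' pieces (the gradient differences evaluated at the same iterate and the matrix variation $\|\mathbf{A}_t-\mathbf{A}_{t-1}\|_{p,q}^2$) collect with coefficient $4\eta^2$, giving the first bracketed line of Lemma~\ref{lem:F2}; note that the $R_1^2$ from $\|\mathbf{A}_t\x_t-\mathbf{A}_{t-1}\x_{t-1}\|_{q,*}^2$ and the $R_2^2$ from $\|\mathbf{A}_t^{\top}\u_t-\mathbf{A}_{t-1}^{\top}\u_{t-1}\|_{p,*}^2$ naturally combine into $(R_1^2+R_2^2)$ in front of a single copy of $\|\mathbf{A}_t-\mathbf{A}_{t-1}\|_{p,q}^2$. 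The $2L^2$ contributions arising from $\|\nabla\widehat f_t(\x_t)-\nabla \widehat f_{t-1}(\x_{t-1})\|_{p,*}^2$ and $\|\nabla\widehat \phi_t(\u_t)-\nabla\widehat\phi_{t-1}(\u_{t-1})\|_{q,*}^2$ yield the $4\eta^2L^2\|\x_t-\x_{t-1}\|_p^2$ and $4\eta^2L^2\|\u_t-\u_{t-1}\|_q^2$ terms, while the $2\sigma^2$ contributions from the $\mathbf{A}$-difference bounds yield the $4\eta^2\sigma^2\|\x_t-\x_{t-1}\|_p^2$ and $4\eta^2\sigma^2\|\u_t-\u_{t-1}\|_q^2$ terms. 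Finally, the negative Bregman-like quadratic terms $-(\alpha/2)\bigl(\|\x_t-\z_{t-1}\|_p^2+\|\x_t-\z_t\|_p^2\bigr)$ and the analogous terms in $\u_t,\v_{t-1},\v_t$ are simply carried over unchanged from~(\ref{eqn:F}).

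There is no real obstacle here: the lemma is a mechanical rearrangement whose only content is bookkeeping constants ($2\times 2=4$) and matching the $R_1^2,R_2^2,\sigma^2,L^2$ coefficients to the correct iterate differences. The one point to be careful about is simply that the $(R_1^2+R_2^2)$ grouping is correct only because $\|\mathbf{A}_t-\mathbf{A}_{t-1}\|_{p,q}^2$ appears with coefficient $2R_1^2$ from one of the $\mathbf{A}$-bounds and $2R_2^2$ from the other; once this is observed the algebra closes without slack, producing exactly the stated inequality.
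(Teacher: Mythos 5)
Your proposal is correct and is exactly the paper's own argument: the paper proves Lemma~\ref{lem:F2} by substituting the four bounds of Lemma~\ref{prop:1} into inequality~(\ref{eqn:F}) and collecting terms, with the same $2\times 2=4$ coefficient bookkeeping and the same $(R_1^2+R_2^2)$ grouping of the two matrix-variation contributions that you describe. No gap; nothing further is needed.
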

\begin{proof} The lemma can be proved by combining  the results in Lemma~\ref{prop:1} and the inequality in~(\ref{eqn:F}). 
%By applying the convexity of $F_t(\x,\u)$ in terms of $\x$ and the concavity of $F_t(\x,\u)$ in terms of $\u$ to the result in Lemma~\ref{lem:pdl2}, we have
\begin{equation*}
\begin{aligned}
&\eta \left( F_t(\x_t, \u) - F_t(\x, \u_t) \right)\\
&\leq \mathsf{B}_1(\x, \z_{t-1}) -\mathsf{B}_1(\x,\z_t) +  \mathsf{B}_2(\u, \v_{t-1}) -\mathsf{B}_2(\u,\v_t)\\
& \hspace{0.5cm}+ 2{\eta^2}\|\nabla\widehat f_t(\x_t) - \nabla\widehat f_{t-1}(\x_{t-1})\|_{p,*} ^2+ 2\eta^2\|\mathbf{A}_t^{\top}\u_t - \mathbf{A}_{t-1}^{\top}\u_{t-1}\|_{p,*}^2 \\
& \hspace{0.5cm}+2 {\eta^2}\|\nabla\widehat \phi_t(\u_t) - \nabla\widehat \phi_{t-1}(\u_{t-1})\|_{q,*}^2 + 2\eta^2\|\mathbf{A}_t\x_t - \mathbf{A}_{t-1}\x_{t-1}\|_{q,*}^2 \\
&\hspace{0.5cm}- \frac{\alpha}{2}\left(\|\x_t-\z_{t-1}\|_p^2+\|\x_t-\z_{t}\|_p^2\right)- \frac{\alpha}{2}\left(\|\u_t-\v_{t-1}\|_q^2 + \|\u_t-\v_{t}\|_q^2\right)\\
\hspace*{0.1in}&\leq \mathsf{B}_1(\x, \z_{t-1}) -\mathsf{B}_1(\x,\z_t) +  \mathsf{B}_2(\u, \v_{t-1}) -\mathsf{B}_2(\u,\v_t)\\
& \hspace{0.5cm}+ 4{\eta^2}\|\nabla\widehat f_t(\x_t) - \nabla\widehat f_{t-1}(\x_{t})\|_{p,*} ^2+4{\eta^2}L^2\|\x_t-\x_{t-1}\|_p^2  + 4\eta^2\sigma^2\|\x_t-\x_{t-1}\|_p^2\\
&\hspace{0.5cm}+ 4\eta^2R_1^2\|\mathbf{A}_t - \mathbf{A}_{t-1}\|_{p,q}^2 + 4\eta^2R_2^2\|\mathbf{A}_t-\mathbf{A}_{t-1}\|_{p,q}^2\\
&\hspace{0.5cm} +4 {\eta^2}\|\nabla\widehat \phi_t(\u_t) - \nabla\widehat \phi_{t-1}(\u_{t})\|_{q,*}^2 + 4\eta^2L^2\|\u_t - \u_{t-1}\|_{q}^2+ 4\eta^2\sigma^2\|\u_t-\u_{t-1}\|_q^2 \\
&\hspace{0.5cm}- \frac{\alpha}{2}\left(\|\x_t-\z_{t-1}\|_p^2+\|\x_t-\z_{t}\|_p^2\right)- \frac{\alpha}{2}\left(\|\u_t-\v_{t-1}\|_q^2 + \|\u_t-\v_{t}\|_q^2\right)
\end{aligned}
\end{equation*}

\begin{equation*}
\begin{aligned}
\hspace*{0.3cm}&\leq \mathsf{B}_1(\x, \z_{t-1}) -\mathsf{B}_1(\x,\z_t) +  \mathsf{B}_2(\u, \v_{t-1}) -\mathsf{B}_2(\u,\v_t)\\
&\hspace{0.5cm}+4\eta^2\bigg(\|\nabla\widehat f_t(\x_t) - \nabla\widehat f_{t-1}(\x_{t})\|_{p,*} ^2 +\|\nabla\widehat \phi_t(\u_t) - \nabla\widehat \phi_{t-1}(\u_{t})\|_{q,*}^2\\
&\hspace*{0.6in}+ (R_1^2+R_2^2)\|\mathbf{A}_t-\mathbf{A}_{t-1}\|_{p,q}^2 \bigg)\\
&\hspace{0.5cm}+4\eta^2\sigma^2\|\x_t-\x_{t-1}\|_p^2 + 4\eta^2L^2\|\x_t-\x_{t-1}\|_p^2 - \frac{\alpha}{2}(\|\x_t-\z_t\|_p^2+ \|\x_t-\z_{t-1}\|_p^2)\\
&\hspace{0.5cm}+4\eta^2\sigma^2\|\u_t-\u_{t-1}\|_p^2 + 4\eta^2L^2\|\u_t-\u_{t-1}\|_q^2 - \frac{\alpha}{2}(\|\u_t-\v_t\|_q^2+ \|\u_t-\v_{t-1}\|_q^2). 
\end{aligned}
\end{equation*}

\end{proof}
\begin{proof}[Proof of Theorem~\ref{thm:non}]
Taking summation of the inequalities in Lemma~\ref{lem:F2} over $t=1,\ldots, T$, applying the inequality in~(\ref{eqn:Bt}) twice and using $\eta\leq \sqrt{\alpha}/(4\sqrt{\sigma^2+L^2})$, we have
\begin{equation}
\begin{aligned}\label{eqn:minmax}
&\sum_{t=1}^TF_t(\x_t, \u)-\sum_{t=1}^TF_t(\x,\u_t)\leq 4\eta{\rm{EGV}}_{T,p,q}  + \frac{M_1+M_2}{\eta}\nonumber\\
&= 4\sqrt{M_1+M_2}\max\left(2\sqrt{\frac{(M_1+M_2)(\sigma^2+L^2)}{\alpha}}, \sqrt{{\rm{EGV}}_{T,p, q}}\right).
\end{aligned}
\end{equation}
We complete the proof by using $\x^*=\arg\min_{\x\in\W}\sum_{t=1}^TF_t(\x_t,\u)$ and $\u^*=\arg\max_{\u\in\Q}\sum_{t=1}^TF_t(\x,\u_t)$. 
\end{proof}

As an immediate  result of Theorem~\ref{thm:non},  the following Corollary states a regret bound for  non-smooth optimization with a fixed non-smooth component that can be written as a max structure, i.e., $f_t(\x) =\widehat f_t(\x) + [g(\x)=\max_{\u\in\Q}\langle \mathbf{A}\x, \u\rangle - \widehat\phi(\u)]$. 
\begin{corollary}\label{thm:non2}
Let $f_t(\x) = \widehat f_t(\x) + g(\x), t=1,\ldots, T$ be a sequence of non-smooth functions, where $g(\x)=\max_{\mathbf u\in \mathcal Q}\langle \mathbf{A}\x, \mathbf u\rangle - \widehat \phi(\u)$, and the gradual variation ${\rm{EGV}}_T$ be defined in~(\ref{eqn:ge}) w.r.t the dual norm $\|\cdot\|_{p,*}$. Assume $\widehat f_t(\x)$ are $L$-smooth functions w.r.t $\|\cdot\|$,  the domain $\W, \Q$ and $\mathbf{A}$ satisfy the boundness condition as in~(\ref{eqn:cond}).  If we set $\eta=\min\left(\sqrt{M_1+M_2}/(2\sqrt{{\rm{EGV}}_{T}}), \sqrt{\alpha}/(4\sqrt{\sigma^2+L^2})\right)$ in Algorithm~\ref{alg:5}, then we have the following regret bound

\begin{equation*}
\begin{aligned}
&\sum_{t=1}^Tf_t(\x_t)- \min_{\x\in\W}\sum_{t=1}^Tf_t(\x)\\
&\leq4\sqrt{M_1+M_2}\max\left(2\sqrt{\frac{(M_1+M_2)(\sigma^2+L^2)}{\alpha}}, \sqrt{{\rm{EGV}}_{T}}\right) + \text{V}(g, \x_{1:T}),
\end{aligned}
\end{equation*}

where $\widehat\x_T= \sum_{t=1}^T\x_t/T$ and $\text{V}(g, \x_{1:T})=\sum_{t=1}^T|g(\x_t)-g(\widehat\x_T)|$ measures the variation in the non-smooth component. 
\end{corollary}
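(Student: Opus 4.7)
The plan is to derive Corollary~\ref{thm:non2} as a direct consequence of the min-max bound in Theorem~\ref{thm:non}, with the single additional step of converting the max-inside-regret expression into the standard cumulative-regret form at the price of the variation quantity $\text{V}(g, \x_{1:T})$. First I would specialize Theorem~\ref{thm:non} to the stationary non-smooth component, i.e.\ set $\mathbf{A}_t \equiv \mathbf{A}$ and $\widehat\phi_t \equiv \widehat\phi$ for every $t$. Under this specialization the second and third sums in the definition (\ref{eqn:negv}) of ${\rm{EGV}}_{T,p,q}$ vanish, so ${\rm{EGV}}_{T,p,q}$ collapses to the $\widehat f_t$-based quantity ${\rm{EGV}}_T$ appearing in the corollary, and the tuning of $\eta$ in the corollary coincides with the tuning in Theorem~\ref{thm:non}. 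This delivers, for any $\x\in\W$ and $\u\in\Q$,
\begin{equation*}
\sum_{t=1}^T F_t(\x_t,\u) - \sum_{t=1}^T F_t(\x,\u_t) \le 4\sqrt{M_1+M_2}\,\max\!\left(2\sqrt{\tfrac{(M_1+M_2)(\sigma^2+L^2)}{\alpha}},\sqrt{{\rm{EGV}}_{T}}\right).
\end{equation*}

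Next I would translate this min-max quantity into the ordinary regret $\sum_t f_t(\x_t)-\min_{\x}\sum_t f_t(\x)$. On the comparator side, this is immediate: for any $\x$, $F_t(\x,\u_t)\le\max_{\u\in\Q} F_t(\x,\u)=f_t(\x)$, so $\min_{\x}\sum_t F_t(\x,\u_t)\le \min_{\x}\sum_t f_t(\x)$. On the learner's side the key observation exploits that $\mathbf{A}$ and $\widehat\phi$ are time-invariant: pulling the max outside the sum gives
\begin{equation*}
\max_{\u\in\Q}\sum_{t=1}^{T} F_t(\x_t,\u) \;=\; \sum_{t=1}^{T}\widehat f_t(\x_t) \;+\; T\max_{\u\in\Q}\left[\langle \mathbf{A}\widehat\x_T,\u\rangle - \widehat\phi(\u)\right] \;=\; \sum_{t=1}^{T}\widehat f_t(\x_t) \;+\; T\,g(\widehat\x_T),
\end{equation*}
where $\widehat\x_T=\tfrac{1}{T}\sum_t \x_t$. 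Writing $\sum_t f_t(\x_t) = \sum_t \widehat f_t(\x_t)+\sum_t g(\x_t)$, the discrepancy between $\sum_t f_t(\x_t)$ and $\max_{\u}\sum_t F_t(\x_t,\u)$ is exactly $\sum_{t=1}^{T}\bigl(g(\x_t)-g(\widehat\x_T)\bigr)$, which is upper bounded by $\text{V}(g,\x_{1:T})=\sum_t|g(\x_t)-g(\widehat\x_T)|$.

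Combining these two inequalities yields
\begin{equation*}
\sum_{t=1}^{T} f_t(\x_t) - \min_{\x\in\W}\sum_{t=1}^{T} f_t(\x) \;\le\; \max_{\u}\sum_{t=1}^{T}F_t(\x_t,\u) - \min_{\x}\sum_{t=1}^{T}F_t(\x,\u_t) + \text{V}(g,\x_{1:T}),
\end{equation*}
and plugging in the specialized Theorem~\ref{thm:non} bound for the first two terms gives the stated regret bound. There is essentially no technical obstacle: the heart of the argument, namely Lemmas~\ref{lem:pdl2}--\ref{lem:F2} and the $\eta$-tuning, is already carried out in the proof of Theorem~\ref{thm:non}. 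The only point that deserves care is the max-versus-sum step, where one must verify that stationarity of $\mathbf{A}$ and $\widehat\phi$ is precisely what lets the maximization commute with the average and convert the gap into $T g(\widehat\x_T)-\sum_t g(\x_t)$, a quantity whose absolute value is $\text{V}(g,\x_{1:T})$ by construction.
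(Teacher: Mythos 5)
Your proposal is correct and follows essentially the same route as the paper's proof: specialize the min-max bound of Theorem~\ref{thm:non} to the stationary component (so ${\rm{EGV}}_{T,p,q}$ collapses to ${\rm{EGV}}_T$), use $F_t(\x,\u_t)\le f_t(\x)$ on the comparator side, exploit linearity of $\langle \mathbf{A}\x,\u\rangle$ in $\x$ so that $\max_{\u}\sum_t F_t(\x_t,\u)=\sum_t\widehat f_t(\x_t)+T\,g(\widehat\x_T)$, and absorb the mismatch $\sum_t\bigl(g(\widehat\x_T)-g(\x_t)\bigr)$ into $\text{V}(g,\x_{1:T})$. The only cosmetic difference is that you pull the max outside the sum explicitly, whereas the paper instantiates a particular $\u$; the computation is identical.
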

\begin{proof}
In the case of fixed non-smooth component, the gradual variation defined in~(\ref{eqn:negv}) reduces the one defined in~(\ref{eqn:ge}) w.r.t the dual norm $\|\cdot\|_{p,*}$. By using the bound in~(\ref{eqn:minmax}) and noting  that $f_t(\x)=\max_{\u\in\Q}F_t(\x,\u)\geq F_t(\x, \u_t)$, we have 

\begin{equation*}
\begin{aligned}
\sum_{t=1}^T\bigg( \widehat f_t(\x_t) + \langle \mathbf{A}\x_t, \u\rangle  - \widehat \phi(\u)\bigg)& \leq \sum_{t=1}^T f_t(\x) + \\
&\hspace*{-0.6in}4\sqrt{M_1+M_2}\max\left(2\sqrt{\frac{(M_1+M_2)(\sigma^2+L^2)}{\alpha}}, \sqrt{{\rm{EGV}}_{T}}\right).
\end{aligned}
\end{equation*}
Therefore 
\begin{equation*}
\begin{aligned}
\sum_{t=1}^T \bigg(\widehat f_t(\x_t) + g(\widehat \x_T)\bigg)& \leq \sum_{t=1}^T f_t(\x) + \\
&\hspace*{-0.3in}4\sqrt{M_1+M_2}\max\left(2\sqrt{\frac{(M_1+M_2)(\sigma^2+L^2)}{\alpha}}, \sqrt{{\rm{EGV}}_{T}}\right).
\end{aligned}
\end{equation*}
We complete the proof by complementing $\widehat f_t(\x_t)$  with $g(\x_t)$ to obtain  $f_t(\x_t)$ and  moving the additional term $\sum_{t=1}^T(g(\widehat \x_T)-g(\x_t))$ to the right hand side .  
\end{proof}
\begin{remark}
Note that the regret bound in Corollary~\ref{thm:non2} has an additional term $V(g,\x_{1:T})$ compared to the regret bound in Corollary~\ref{thm:non1}, which constitutes  a tradeoff between the reduced computational cost in solving a composite gradient mapping. 
\end{remark}

To see an application of Theorem~\ref{thm:non} to an online non-smooth optimization with time-varying non-smooth components, let us consider the example of online classification  with hinge loss. At each trial, upon receiving an example $\mathbf{x}_t$, we need to make a prediction based on the current model $\w_t$, i.e., $\widehat y_t=\dd{\w_t}{\mathbf{x}_t}$, then we receive the true label of $\mathbf{x}_t$ denoted by $y_t\in\{+1, -1\}$.  The goal is to minimize the total  number of mistakes across the time line $M_T=\sum_{t=1}^T{\mathbb{I}}[\widehat y_ty_t\leq 0]$.  Here we are interested in a scenario that the data sequence  $(\mathbf{x}_t, y_t),t=1,\ldots, T$ has a small gradual variation in terms of $y_t\mathbf{x}_t$.   To obtain such a gradual variation based mistake bound, we can apply Algorithm~\ref{alg:5}. For the purpose of deriving the mistake bound,  we need to make a small change to Algorithm~\ref{alg:5}. At the beginning of each trial, we first make a prediction $\widehat y_t=\dd{\w_{t}}{\mathbf{x}_t}$, and if we make a mistake $\mathbb{I}[\widehat y_ty_t\leq 0]$ the we proceed to update the auxiliary primal-dual pair $(\w'_t, \beta_t)$  similar to $(\z_t,\v_t)$ in Algorithm~\ref{alg:5}  and the primal-dual pair $(\w_{t+1}, \alpha_{t+1})$  similar to $(\x_{t+1},\u_{t+1})$ in Algorithm~\ref{alg:5}, which are given explicitly as follows: 
\begin{equation*}
\begin{aligned}
\beta_{t} &= \prod_{[0, 1]}\left(\beta_{t-1} +\eta(1-\w_{t}^{\top}y_{t}\mathbf{x}_{t}) \right),\quad \w'_{t}   = \prod_{\|\w\|_2\leq R}(\w'_{t-1} + \eta\alpha_{t}y_{t}\mathbf{x}_{t})\\
\alpha_{t+1} &= \prod_{[0, 1]}\left(\beta_t +\eta(1-\w_{t}^{\top}y_t\mathbf{x}_t) \right),\quad \w_{t+1}   = \prod_{\|\w\|_2\leq R}(\w'_t + \eta\alpha_ty_t\mathbf{x}_t).
\end{aligned}
\end{equation*}
Without loss of generality, we let $(\mathbf{x}_t, y_t),t=1,\ldots, M_T$ denote the examples that are predicted incorrectly. The  function $F_t(\cdot,\cdot)$ is written as $F_t(\w, \alpha) = \alpha (1-y_t\dd{\w}{\mathbf{x}_t})$.  Then for a total sequence of $T$ examples, we have the following bound by assuming $\|\x_t\|_2\leq 1$ and $\eta\leq {1}/{2\sqrt{2}}$
\begin{equation*}
\begin{aligned}
\sum_{t=1}^{M_T}F_t(\w_{t}, \alpha) \leq \sum_{t=1}^{M_T}\ell(y_t\w^{\top}_t\mathbf{x}_t)  + \eta \sum_{t=0}^{M_T-1}(R^2+1)\|y_{t+1}\mathbf{x}_{t+1} - y_t\mathbf{x}_t\|_2^2 + \frac{R^2 + \alpha^2}{2\eta}.
\end{aligned}
\end{equation*}

Since $y_t\w_{t}^{\top}\mathbf{x}_t$ is less than  $0$ for the incorrectly predicted examples, if we set $\alpha=1$ in the above inequality, we have
\begin{equation*}
\begin{aligned}
M_T&\leq \sum_{t=1}^{M_T}\ell(y_t\w^{\top}_t\mathbf{x}_t)  + \eta \sum_{t=0}^{M_T-1}(R^2+1)\|y_{t+1}\mathbf{x}_{t+1} - y_t\mathbf{x}_t\|_2^2 + \frac{R^2 + 1}{2\eta}\\
&\leq \sum_{t=1}^{M_T}\ell(y_t\w^{\top}_t\mathbf{x}_t)  + \sqrt{2}(R^2+1)\max(2, \sqrt{{\rm{EGV}}_{T,2}}).
\end{aligned}
\end{equation*}
which results in a gradual variational mistake bound, where $\sqrt{\text{EVG}_{T,2}}$ measures the gradual variation in the incorrectly predicted examples. {To end the discussion, we note that one may find applications of a small gradual variation of $y_t\mathbf{x}_t$ in time series classification. For instance,  if $\mathbf{x}_t$ represent some medical measurements of a person  and $y_t$ indicates whether the person observes a disease, since the health conditions usually change slowly then it is expected that the gradual variation of $y_t\mathbf{x}_t$ is small. Similarly,  if $\mathbf{x}_t$ are some sensor measurements of an equipment  and $y_t$ indicates whether the equipment fails or not, we would also observe a small gradual variation of  the sequence $y_t\mathbf{x}_t$  during a time period. 
}

\section{Summary}
In this chapter we developed a simplified online mirror prox method using a composite gradient mapping for non-smooth optimization with a fixed non-smooth component  and a primal-dual prox method for non-smooth optimization with the non-smooth component written as a  max structure.  Despite the impossibility result in Chapter~\ref{chap:regret} which demonstrated that smoothness of loss functions in necessary to obtain gradual variation bounds, we showed that a simplified version of online mirror prix method is able to attain regret bounded by gradual variation for loss functions with a smooth component and two types of mentioned non-smooth components.

\part{~Stochastic Optimization}\label{part-stochastic}
\def \F {\mathcal{F}}
\def \G {\mathcal{F}}
\def \gh {\widehat{f}}
\chapter{Mixed Optimization for Smooth Losses}\label{chap:mixed}
In this part of the thesis, we consider  stochastic convex optimization problem and show that  leveraging the smoothness of functions allows us  to devise stochastic optimization algorithms that enjoy faster convergence rate. 

The focus of this chapter is on stochastic smooth optimization. The motivation for exploiting smoothness in stochastic optimization stems from the observation that the optimal convergence rate for stochastic optimization of smooth functions is $O(1/\sqrt{T})$, which is same as stochastic optimization of Lipschitz continuous convex functions. This is in contrast to optimizing smooth functions using full gradients, which yields a convergence rate of $O(1/T^2)$. Therefore, it is of great interest to exploit smoothness in stochastic setting as well. In particular,  we are interested in designing an efficient algorithm that is in the same spirit of  the stochastic gradient descent method, but can effectively leverage the smoothness of the loss function to achieve a significantly faster convergence rate.

We introduce a new setup for optimizing  convex functions, termed as \textbf{mixed optimization}, which allows to access  both a stochastic oracle  and a full gradient oracle to take advantages of their individual merits. Our goal is to significantly improve the convergence rate of stochastic optimization of smooth functions by having an additional small number of accesses to the full gradient oracle. We show that, with an $O(\ln T)$ calls to the full gradient oracle  and an $O(T)$ calls to the stochastic oracle, the proposed mixed optimization algorithm is able to achieve an optimization error of $O(1/T)$. The key insight underlying the mixed optimization paradigm is that by  infrequent use of full gradients at specific points we are able to progressively reduce the variance of stochastic gradients which leads to  faster convergence rates. 

The rest of this chapter is organized as follows.  In Section~\ref{sec:7-motiv} we motivate the problem.  Section~\ref{sec:mixedgrad} describes the {{MixedGrad}} algorithm, discusses the main intuition behind it, and states the main result on its convergence rate. The proof of convergence rate  is given in Section~\ref{sec:7-analysis} and the omitted proofs are deferred to Section~\ref{mixed-7-omitted-proofs}.  Section~\ref{sec:7-conclusion} concludes the chapter and discusses few open questions. Finally, Section~\ref{sec:7-related} briefly reviews the literature on deterministic and stochastic optimization.

\section{Motivation}\label{sec:7-motiv}
As it has been shown in Chapter~\ref{chap-background},   many practical machine learning algorithms follow the framework of empirical risk minimization, which often can be cast into the following generic optimization problem:
\begin{eqnarray}
\min\limits_{\w \in \W} \; \G(\w) := \frac{1}{n}\sum_{i=1}^n f_i(\w),
 \label{eqn:1}
\end{eqnarray}
where $n$ is the number of training examples, $f_i(\w)$ encodes the loss function related to the $i$th training example $(\mathbf{x}_i, y_i)$, and $\W$ is a bounded convex domain that is introduced to regularize the solution $\w \in \W$ (i.e., the smaller the size of $\W$, the stronger the regularization is). In this chapter, we focus on the learning problems for which the loss function $f_i(\w)$ is smooth. Examples of smooth loss functions include least square with $f_i(\w) = (y_i - \dd{\w}{\mathbf{x}_i})^2$  and logistic regression with $f_i(\w) = \log \left(1 + \exp(-y_i \dd{\w}{\mathbf{x}_i}) \right)$. Since the regularization is enforced through the restricted domain $\W$, we did not introduce a $\ell_2$ regularizer ${\lambda}\|\w\|^2/2$ into the optimization problem and as a result, we do not assume the loss function to be strongly convex. We note that a small $\ell_2$ regularizer does NOT improve the convergence rate of stochastic optimization. More specifically, the convergence rate for stochastically optimizing a $\ell_2$ regularized loss function remains as $O(1/\sqrt{T})$ when $\lambda = O(1/\sqrt{T})$~\cite{hazan-2011-beyond}, a scenario that is often encountered in real-world applications.

 A preliminary approach for solving the optimization problem in~(\ref{eqn:1}) is the batch gradient descent (GD) algorithm~\cite{nesterov2004introductory}. It starts with some initial point, and iteratively updates the solution using the equation $\w_{t+1} = \Pi_{\W}(\w_t - \eta \nabla \G(\w_t))$, where $\Pi_{\W}(\cdot)$ is the orthogonal projection onto the convex domain $\W$. It has been shown that for smooth objective functions, the convergence rate of standard GD is $O(1/T)$~\cite{nesterov2004introductory}, and can be improved to $O(1/T^2)$ by an accelerated GD algorithm~\cite{nesterov1983method,nesterov2004introductory,nesterov2005smooth}. The main shortcoming of GD method is its high cost in computing the full gradient $\nabla \G(\w_t)$ when the number of training examples is large, i.e., it requires $O(n)$ gradient computations per iteration. Stochastic gradient descent (SGD) alleviates this limitation of GD by sampling one (or a small set of) examples and computing a stochastic (sub)gradient at each iteration based on the sampled examples~\cite{NIPS2007_726,nemirovski2009robust,Shalev-Shwartz:2007:PPE:1273496.1273598}. Since the computational cost of SGD per iteration is independent of the size of the data (i.e., $n$), it is usually appealing for large-scale learning and optimization.

While SGD enjoys a high computational efficiency per iteration, it suffers from a slow convergence rate for optimizing smooth functions. It has been shown in~\cite{nemircomp1983} that the effect of the stochastic noise cannot be decreased with a better rate than $O(1/\sqrt{T})$ which is significantly worse than GD that uses the full gradients for updating the solutions and this limitation is also valid when the target function is smooth. %It has been shown that the {\textit optimal} convergence rate for stochastic optimization of smooth functions is only $O(1/\sqrt{T})$~\cite{nemirovsky1983problem}, which is significantly worse than GD that uses the full gradients for updating the solutions. 
 In addition  for general Lipschitz continuous convex functions, SGD exhibits the same convergence rate as that for the smooth functions, implying that smoothness of the loss function is essentially not very useful and can not be exploited in stochastic optimization. The slow convergence rate for stochastically optimizing smooth loss functions is mostly due to the variance in stochastic gradients: unlike the full gradient case where the norm of a gradient approaches to zero when the solution is approaching to the optimal solution, in stochastic optimization, the norm of a stochastic gradient is constant even when the solution is close to the optimal solution. It is the variance in stochastic gradients that makes the convergence rate $O(1/\sqrt{T})$ unimprovable  for stochastic  smooth optimization~\cite{nemircomp1983,sgd-lower-bounds}.
%\begin{savenotes}

%\begin{table}
%\centering
%\begin{tabular}{|c||c|c|c||c|c|c||c|c|c|}
%\hline
%& \multicolumn{3}{c||}{\textbf{Full (GD)}} & \multicolumn{3}{c||}{\textbf{Stochastic (SGD)}}& \multicolumn{3}{c|}{\parbox{3.2cm}{\textbf{Mixed Optimization}}}\\
%\hline
%\textbf{Setting} & {\small Convergence} & $\O_s$ & $\O_f$ & {\small Convergence} & $\O_s$ & $\O_f$ & {\small Convergence} & $\O_s$ & $\O_f$ \\
%\hline\hline
%Lipschitz  & $\frac{1}{\sqrt{T}}$~\footnote{The convergence rate can be improved to $O(1/T)$ when the structure of the objective function is provided.} & 0 & $T $ & $\frac{1}{\sqrt{T}}$ & $T$ & 0 & --- & --- & ---\\
%\hline\hline
%Smooth & $\frac{1}{T^2}$ & 0 &  $T$ & $\frac{1}{\sqrt{T}}$  & $T$ & 0 & $\frac{1}{T}$ & $T$ & $ \log T$\\
%\hline
%\end{tabular}
%\caption{The convergence rate ($O$), number of calls to stochastic oracle ($\O_s$), and number of calls to full gradient oracle ($\O_f$)  for optimizing Lipschitz continuous and smooth convex functions, using full GD,  SGD, and mixed optimization methods, measured in the number of iterations $T$. }
%\label{table:results}
%\end{table}

In this chapter, we are interested in designing an efficient algorithm that is in the same spirit of SGD but can effectively leverage the smoothness of the loss function to achieve a significantly faster convergence rate. To this end, we consider a new setup for optimization that allows us to interplay between stochastic and deterministic gradient descent methods. In particular, we assume that the optimization algorithm has an access to two oracles:
\begin{itemize}
\item A stochastic oracle $\O_s$ that returns the  loss function $f_i(\w)$ based on the sampled training example $(\mathbf{x}_i, y_i)$~\footnote{We note that the stochastic oracle assumed in our study is slightly stronger than the stochastic gradient oracle as it returns the sampled function instead of the stochastic gradient.}, and
\item A full gradient oracle $\O_f$ that returns the  gradient $\nabla \G(\w)$ for any given solution $\w \in \W$.
\end{itemize}
We refer to this new setting as  \textit{mixed optimization} in order to distinguish it from both stochastic and  full gradient optimization models. Obviously, the challenging issue in this regard is to minimize the number of full gradients to be as minimum as possible while having the same number of stochastic gradient accesses.  The key  question we examined in this chapter is:

 \begin{myquotation}
{\em Is it possible to improve the convergence rate for stochastic optimization of smooth functions by having a small number of calls to the full gradient oracle $\O_f$}?
 \end{myquotation}

In this  chapter we give an affirmative answer to this question. In particular, we show that with an additional $O(\ln T)$ accesses to the full gradient oracle $\O_f$, the proposed algorithm, referred to as {{MixedGrad}}, can improve the convergence rate for stochastic optimization of smooth functions to $O(1/T)$, the same rate for stochastically optimizing a strongly convex function~\cite{hazan-2011-beyond,ICML2012Rakhlin,ohad-2013}.   The {{MixedGrad}} algorithm builds off  on multi-stage methods~\cite{hazan-2011-beyond} and  operates in epochs, but involves novel ingredients so as to obtain an $O(1/T)$ rate for smooth losses. In particular, we form a sequence of strongly convex objective functions to be optimized at each epoch and  decrease the amount of regularization and shrink the domain as the  algorithm proceeds. The full gradient oracle $\O_f$ is only called at the beginning of each epoch. In this chapter our focus is only smooth functions and in the coming chapters, we show that  it is further possible to develop faster optimization schemes when the target function is both smooth and strongly convex by making the number of accesses to the full gradients independent of the condition number, leading to more efficient optimization algorithms for ill-conditioned problems.

\section{The MixedGrad Algorithm}\label{sec:mixedgrad}

In stochastic first-order optimization setting, instead of having direct access to $\G(\w)$, we only have access to a stochastic gradient oracle, which given a solution $\w \in\W$, returns the gradient $\nabla f_i(\w)$ where $i$ is sampled  uniformly at random from $\{1, 2, \cdots, n\}$. The goal of stochastic optimization to use a bounded number $T$ of oracle calls, and compute some $\bar{\w}\in \W$ such that the optimization error, $\F(\bar{\w})-\F(\w^*)$, is as small as possible.

In the mixed optimization model introduced here, we first relax the stochastic oracle $\O_s$ by assuming that it will return a randomly sampled loss function $f_i(\w)$, instead of the gradient $\nabla f_i(\w)$ for a given solution $\w$  may be applied to achieve $O(1/T)$ convergence~\footnote{The audience may feel that this relaxation of stochastic oracle could provide significantly more information, and second order methods such as Online Newton~\cite{hazan-log-newton}. We note (i) the proposed algorithm is a first order method, and (ii) although the Online Newton method yields a regret bound of $O(1/T)$, its convergence rate for optimization can be as low as $O(1/\sqrt{T})$ due to the concentration bound for Martingales. In addition, the Online Newton method is only applicable to exponential concave function, not any smooth loss function.}. Second, we assume that the learner also has an  access to the full gradient oracle $\O_f$. Our goal is to significantly improve the convergence rate of stochastic gradient descent (SGD) by making a small number of calls to the full gradient oracle $\O_f$. In particular, we show that by having only $O(\log T)$ accesses to the full gradient oracle and $O(T)$ accesses to the stochastic oracle, we can tolerate the noise in stochastic gradients and attain an $O(1/T)$ convergence rate for optimizing smooth functions.

We now turn to describe the proposed mixed optimization algorithm and state its  convergence rate. The detailed steps of {{MixedGrad}} algorithm are shown in Algorithm~\ref{alg:MixedGrad}.  It follows the epoch gradient descent algorithm proposed in~\cite{hazan-2011-beyond} for stochastically minimizing strongly convex functions and divides the optimization process into $m$ epochs, but involves novel ingredients so as to obtain an $O(1/T)$ convergence rate. The key idea is to introduce a $\ell_2$ regularizer into the objective function to make it strongly convex, and gradually reduce the amount of regularization over the epochs. We also shrink the domain as the  algorithm proceeds.  We note that reducing the amount of regularization over time is closely-related to the classic proximal-point algorithms. Throughout the chapter, we will use the subscript for the index of each epoch, and the superscript for the index of iterations within each epoch. Below, we describe the key idea behind the {{MixedGrad}} algorithm.

\begin{algorithm}[t]
\caption{{MixedGrad} Algorithm}%: Mixed Gradient Method}%Epoch Dual Coordinate Descent}
\begin{algorithmic}[1]
\STATE {\textbf{Input}}: 
\begin{mylist}
\item step size $\eta_1$
\item domain size $\Delta_1$
\item the number of iterations $T_1$ for the first epoch
\item the number of epochs $m$
\item regularization parameter $\lambda_1$
\item shrinking parameter $\gamma > 1$
\end{mylist}
\STATE \textbf{Initialize}: $\wb_1 = \bz$
\FOR{$k = 1, \ldots, m$}
     \STATE Construct the domain $\W_k = \{\w: \w+\w_k \in \W, \|\w\| \leq \Delta_k\}$
    \STATE Call the full gradient oracle $\O_f$ for $\nabla \G(\wb_k)$
    \STATE Compute $\g_k = \lambda_k \wb_k + \nabla \G(\wb_k) = \lambda_k \wb_k + \frac{1}{n}\sum_{i=1}^n \nabla f_i(\wb_k)$
    \STATE Initialize $\w_k^1 = \bz$
    \FOR{$t = 1, \ldots, T_k$}
        \STATE Call stochastic oracle $\O_s$ to return a randomly selected loss function $f_{i_k^t}(\w)$
        \STATE Compute the stochastic gradient as $\gb_k^t = \g_k + \nabla f_{i_k^t}(\w_k^t + \wb_k) - \nabla f_{i_k^t}(\wb_k)$
        \STATE Update the solution by
        \[
            \w_k^{t+1} = \mathop{\arg\max}\limits_{\w \in \W_k} \eta_k\langle \w - \w^t_k, \gb_k^t + \lambda_k \w_k^t\rangle + \frac{1}{2}\|\w - \w_k^t\|^2
        \]
    \ENDFOR
    \STATE Set $\wt_{k+1} = \frac{1}{T+1} \sum_{t=1}^{T+1} \w_k^t$ and $\wb_{k+1} = \wb_{k} + \wt_{k+1}$
    \STATE Set $\Delta_{k+1} = \Delta_k/\gamma$, $\lambda_{k+1} = \lambda_k /\gamma$, $\eta_{k+1} = \eta_k/\gamma$, and $T_{k+1} = \gamma^2 T_k$
\ENDFOR
\end{algorithmic} \label{alg:MixedGrad}
{\textbf{Return}} $\wb_{m+1}$
\end{algorithm}

Let $\wb_k$ be the solution obtained before the $k$th epoch, which is initialized to be $\bz$ for the first epoch. Instead of searching for $\w_*$ at the $k$th epoch, our goal is to find $\w_* - \wb_k$, resulting in the following optimization problem for the $k$th epoch
\begin{eqnarray}
\min\limits_{\small \begin{array}{c} \w + \w_k \in \W \\ \|\w\| \leq \Delta_k\end{array}} \; \frac{\lambda_k}{2}\|\w + \wb_k\|^2 + \frac{1}{n}\sum_{i=1}^n f_i(\w + \wb_k), \label{eqn:opt-epoch-1}
\end{eqnarray}

where $\Delta_k$ specifies the domain size of $\w$ and $\lambda_k$ is the regularization parameter introduced at the $k$th epoch. By introducing the $\ell_2$ regularizer, the objective function in (\ref{eqn:opt-epoch-1}) becomes strongly convex, making it possible to exploit the technique for stochastic optimization of strongly convex function in order to improve the convergence rate. The domain size $\Delta_k$ and the regularization parameter $\lambda_k$ are initialized to be $\Delta_1 >0$ and $\lambda_1 > 0$, respectively, and are reduced by a constant factor $\gamma > 1$ every epoch, i.e., $\Delta_k = \Delta_1 /\gamma^{k-1}$ and $\lambda_k = \lambda_1/\gamma^{k-1}$. By removing the constant term ${\lambda_k}\|\wb_k\|^2/2$ from the objective function in (\ref{eqn:opt-epoch-1}), we obtain the following  optimization problem for the $k$th epoch
\begin{eqnarray}
\min\limits_{\w \in \W_k} \;  \left[\F_k(\w) = \frac{\lambda_k}{2}\|\w\|^2 + \lambda_k \langle \w, \wb_k\rangle + \frac{1}{n}\sum_{i=1}^n f_i(\w + \wb_k) \right] ,\label{eqn:opt-epoch}
\end{eqnarray}
where $\W_k = \{\w: \w + \w_k \in \W,\;  \|\w\| \leq \Delta_k \}$. We rewrite the objective function $\F_k(\w)$ as
\begin{eqnarray}
\F_k(\w) & = & \frac{\lambda_k}{2}\|\w\|^2 + \lambda_k \langle \w, \wb_k\rangle + \frac{1}{n}\sum_{i=1}^n f_i(\w + \wb_k) \nonumber \\
& = & \frac{\lambda_k}{2}\|\w\|^2 + \left\langle \w, \lambda_k\wb_k + \frac{1}{n}\sum_{i=1}^n \nabla f_i(\wb_k) \right \rangle + \frac{1}{n}\sum_{i=1}^n f_i(\w + \wb_k) - \langle \w, \nabla f_i(\wb_k) \rangle \nonumber \\
& = & \frac{\lambda_k}{2}\|\w\|^2 + \langle \w, \g_k \rangle + \frac{1}{n}\sum_{i=1}^n \gh^k_i(\w) \label{eqn:fmixed}
\end{eqnarray}
where
\[
\g_k = \lambda_k\wb_k + \frac{1}{n}\sum_{i=1}^n \nabla f_i(\wb_k) \;\;\text{and} \;\; \gh^k_i(\w) = f_i(\w + \wb_k) - \langle \w, \nabla f_i(\wb_k) \rangle.
\]

The main reason for using $\gh^k_i(\w)$ instead of $f_i(\w)$ is to tolerate the variance in the stochastic gradients. To see this,  from the smoothness assumption of  $f_i(\w)$ we obtain the following inequality  for the norm of $\gh^k_i(\w)$ as:
\[
\left\|\nabla \gh^k_i(\w) \right\| = \left\|\nabla f_i(\w+\wb_k) - \nabla f_i(\wb_k) \right\| \leq \beta\|\w\|.
\]
As a result, since $\|\w\| \leq \Delta_k$ and $\Delta_k$ shrinks over epochs, then  $\|\w\|$ will approach to zero over epochs and consequentially $\|\nabla \gh^k_i(\w)\|$ approaches to zero, which allows us to effectively control the variance in stochastic gradients, a key to improving the convergence of stochastic optimization for smooth functions to $O(1/T)$.

Using $\F_k(\w)$ in (\ref{eqn:fmixed}), at the $t$th iteration of the $k$th epoch, we call the stochastic oracle $\O_s$ to randomly select a loss function $f_{i_t^k}(\w)$ and update the solution by following the standard paradigm of SGD by

\begin{eqnarray}
\w_k^{t+1} & = & \Pi_{\w \in \W_k}\left( \w_k^t - \eta_k(\lambda_k \w_k^t + \g_k + \nabla \gh^k_{i_k^t}(\w_k^t)) \right) \nonumber \\
& = & \Pi_{\w \in \W_k}\left( \w_k^t - \eta_k(\lambda_k \w_k^t + \g_k + \nabla f_{i_k^t}(\w_k^t + \wb_k) - \nabla f_{i_k^t}(\wb_k)) \right), \label{eqn:update-1}
\end{eqnarray}
where $\Pi_{\w \in \W_k}(\cdot)$ projects the solution $\w$ into the domain $\W_k$ that shrinks over epochs.

At the end of each epoch, we compute the average solution $\wt_k$, and update the solution from $\wb_k$ to $\wb_{k+1} = \wb_k + \wt_k$. Similar to the epoch gradient descent algorithm~\cite{hazan-2011-beyond}, we increase the number of iterations by a constant $\gamma^2$ for every epoch, i.e. $T_k = T_1 \gamma^{2(k-1)}$.

In order to perform stochastic gradient updating given in (\ref{eqn:update-1}), we need to compute vector $\g_k$ at the beginning of the $k$th epoch, which requires an access to the full gradient oracle $\O_f$. By  infrequent use of full gradients at the beginning of each epoch we are able to progressively reduce the variance of stochastic gradients which leads to  faster convergence rates.  It is easy to count that the number of accesses to the full gradient oracle $\O_f$ is $m$, and the number of accesses  to the stochastic oracle $\O_s$ is
\[
T = T_1\sum_{i=1}^m \gamma^{2(i - 1)} = \frac{\gamma^{2m} - 1}{\gamma^2 - 1} T_1.
\]
Thus, if the total number of accesses to the stochastic gradient oracle is $T$, the number of access to the full gradient oracle required by MixedGrad algorithm is $O(\ln T)$, consistent with our goal of making a small number of calls to the full gradient oracle.

 The theorem below shows that for smooth objective functions, by having $O(\ln T)$ access to the full gradient oracle $\O_f$ and $O(T)$ access to the stochastic oracle $\O_s$, by running {MixedGrad} algorithm, we achieve an optimization error of $O(1/T)$.
\begin{theorem} \label{theorem:1}
Let $\delta \leq e^{-9/2}$ be the failure probability. Set $\gamma = 2$, $\lambda_1 = 16\beta$ and
\[
    T_1 = 300\ln\frac{m}{\delta}, \quad \eta_1 = \frac{1}{2\beta \sqrt{3T_1}},\;\;\text{and}\;\; \Delta_1 = R.
\]
Define $T = T_1\left(2^{2m} - 1\right)/3$. Let $\wb_{m+1}$ be the solution returned by MixedGrad method in Algorithm~\ref{alg:MixedGrad} after $m$ epochs with $m = O(\ln T)$ calls to the full gradient oracle $\O_f$ and $T$ calls to the stochastic oracle $\O_s$. Then, with a probability $1 - 2\delta$, we have
\[
\F(\wb_{m+1}) - \min\limits_{\w \in \W} \F(\w) \leq \frac{80\beta R^2}{2^{2m - 2}} = O\left(\frac{\beta}{T}\right).
\]
\end{theorem}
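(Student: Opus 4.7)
\medskip

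\noindent\textbf{Proof plan.} My plan is to prove the theorem by induction over the epochs, maintaining the invariant that with high probability $\|\wb_k - \w_*\| \le \Delta_k$ holds at the start of each epoch $k$, so that in particular the optimal offset $\w_* - \wb_k$ always lies in the local domain $\W_k$. Since $\Delta_k = R/2^{k-1}$, $\lambda_k = 16\beta/2^{k-1}$, and $T_k = T_1 \cdot 4^{k-1}$, this invariant together with an epoch-level contraction of the form $\|\wb_{k+1} - \w_*\|^2 \le \Delta_k^2/4$ would close the induction and, after $m$ epochs, give $\|\wb_{m+1}-\w_*\|^2 \le \Delta_{m+1}^2 = R^2/2^{2m}$; the final $\F(\wb_{m+1}) - \F(\w_*) \le O(\beta R^2/2^{2m}) = O(\beta/T)$ follows by $\beta$-smoothness of $\F$, since $\nabla\F(\w_*)$ vanishes in the interior case and the boundary case is handled by the optimality of $\w_*$ on $\W$.

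\noindent\textbf{Per-epoch analysis.} Inside epoch $k$ the algorithm runs SGD for $T_k$ steps on the auxiliary problem $\min_{\w\in\W_k} \F_k(\w)$, whose objective is $\lambda_k$-strongly convex. The stochastic gradient used at iteration $t$ is
\[
\gb_k^t + \lambda_k \w_k^t = \lambda_k \w_k^t + \g_k + \nabla f_{i_k^t}(\w_k^t + \wb_k) - \nabla f_{i_k^t}(\wb_k),
\]
which is an unbiased estimator of $\nabla \F_k(\w_k^t)$ because $\E_{i_k^t}[\nabla f_{i_k^t}(\w_k^t+\wb_k) - \nabla f_{i_k^t}(\wb_k)] = \nabla\F(\w_k^t+\wb_k) - \nabla\F(\wb_k)$, and the full-gradient term $\g_k$ cancels the recentering. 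The crucial observation, made possible by the call to $\O_f$ at the start of the epoch, is that the \emph{noise} satisfies
\[
\|\gb_k^t + \lambda_k\w_k^t - \nabla\F_k(\w_k^t)\| \le 2\|\nabla f_{i_k^t}(\w_k^t+\wb_k) - \nabla f_{i_k^t}(\wb_k)\| \le 2\beta \|\w_k^t\| \le 2\beta \Delta_k,
\]
by $\beta$-smoothness of each $f_i$. This variance bound shrinks geometrically with the epoch index, which is precisely what allows us to beat the $1/\sqrt{T}$ rate of vanilla SGD.

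\noindent\textbf{From SGD bound to contraction.} Using the standard SGD-for-strongly-convex analysis, together with the fact that $\w_* - \wb_k \in \W_k$ by the inductive hypothesis, I will show that the average iterate $\wt_{k+1}$ satisfies an in-expectation bound $\F_k(\wt_{k+1}) - \min_{\w\in\W_k}\F_k(\w) \le O(\beta\Delta_k^2/T_k) + O(\eta_k \beta^2 \Delta_k^2)$. With the choice $\eta_k \asymp 1/(\beta\sqrt{T_k})$ and $T_1 \gtrsim \log(m/\delta)$, and after promoting this to a high-probability statement via a Bernstein-type inequality for martingales (Freedman's inequality applied to the martingale difference $\langle \w_k^t - (\w_*-\wb_k), \gb_k^t + \lambda_k\w_k^t - \nabla\F_k(\w_k^t)\rangle$, whose conditional variance is bounded by $4\beta^2\Delta_k^2 \|\w_k^t - (\w_*-\wb_k)\|^2$), a union bound over the $m$ epochs gives with probability at least $1-\delta$ an epoch-wise suboptimality of order $\lambda_k\Delta_k^2/\gamma^2$. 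Strong convexity of $\F_k$ then converts this into $\|\wt_{k+1} - (\w_*-\wb_k)\|^2 \le \Delta_k^2/\gamma^2 = \Delta_{k+1}^2$, which is the induction hypothesis for epoch $k+1$ since $\wb_{k+1} = \wb_k + \wt_{k+1}$.

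\noindent\textbf{Main obstacle.} The routine part is the SGD-in-expectation analysis; the technical heart of the argument is the high-probability control of the noise. The challenge is that the per-step noise bound $2\beta\Delta_k$ is too pessimistic if applied naively inside Hoeffding-Azuma, as it would produce an extra $\sqrt{T_k}$ factor and destroy the $1/T$ rate. I will therefore need a \emph{self-bounding} martingale concentration step in which the conditional variance is bounded in terms of $\|\w_k^t-(\w_*-\wb_k)\|^2$ and then absorbed, via a peeling argument, into the strong-convexity decrement $\lambda_k\|\w_k^t-(\w_*-\wb_k)\|^2/2$ that already appears on the left-hand side of the SGD regret inequality. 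Making this peeling go through simultaneously for all $m$ epochs (so that the constants $T_1 \ge 300\ln(m/\delta)$ suffice, with failure probability $2\delta$ due to one union bound inside each epoch and one across epochs) is where most of the constants in the statement come from, and is the step I expect to require the most care.
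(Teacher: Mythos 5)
There is a genuine gap in your plan, and it sits at the very center of the induction. You propose to maintain the invariant $\|\wb_k - \w_*\|\le\Delta_k$ and to close each epoch by arguing that ``strong convexity of $\F_k$ converts'' the suboptimality bound into $\|\wt_{k+1}-(\w_*-\wb_k)\|^2\le\Delta_{k+1}^2$. But the inequality $\F_k(\w)-\F_k(\u)\ge\frac{\lambda_k}{2}\|\w-\u\|^2$ is only valid when $\u$ is the \emph{constrained minimizer of $\F_k$}, and $\u=\w_*-\wb_k$ is not that minimizer: the regularizer $\frac{\lambda_k}{2}\|\w+\wb_k\|^2$ (whose modulus $\lambda_k$ is a constant multiple of $\beta$, not a vanishing perturbation at early epochs) biases the subproblem's optimum $\wh_*^k$ away from $\w_*-\wb_k$, and since $\F$ itself is only smooth, not strongly convex, there is no a priori control on $\|\wh_*^k-(\w_*-\wb_k)\|$. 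The paper's proof avoids this by running the induction on $\|\wh_*^k\|\le\Delta_k$, i.e.\ on the norm of the \emph{regularized subproblem's} optimizer; the epoch-to-epoch transfer then requires a separate lemma comparing the minimizers of two objectives that differ only in their strong-convexity modulus ($\lambda_k$ versus $\lambda_k/\gamma$), giving $\|\wh_*^{k+1}\|\le\gamma\|\wt_{k+1}-\wh_*^k\|$. The relation to $\w_*$ is deferred entirely to the end, where $\F_m(\wh_*^m)\le\F_m(\w_*)$ is expanded and $\|\w_*-\wb_m\|$ is bounded by an idealized continuation of the algorithm with exact full gradients. None of this machinery appears in your outline, and without it the contraction step does not go through.

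Two smaller points. First, your final conversion $\F(\wb_{m+1})-\F(\w_*)\le O(\beta\|\wb_{m+1}-\w_*\|^2)$ via smoothness fails when $\w_*$ lies on the boundary of $\W$: first-order optimality gives $\langle\nabla\F(\w_*),\wb_{m+1}-\w_*\rangle\ge 0$, which is the wrong sign to discard from the smoothness upper bound, and keeping it costs a term linear (not quadratic) in $\Delta_{m+1}$. The paper sidesteps this by bounding the function value of $\wb_{m+1}$ directly through $\F_m(\wt_{m+1})-\F_m(\wh_*^m)$ rather than through a distance to $\w_*$. Second, the peeling/self-bounding concentration you anticipate as the main technical obstacle is not actually needed here: because $\lambda_k^2 T_k$ is held constant across epochs, a plain Bernstein-for-martingales bound with worst-case increments $O(\beta\Delta_k^2)$ and variance $O(\beta^2\Delta_k^4 T_k)$ already yields a per-epoch suboptimality of order $\beta\Delta_k^2/\sqrt{T_k}\le\lambda_k\Delta_k^2/(2\gamma^4)$, which is all the contraction requires. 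Your correct observations — the unbiasedness of the mixed gradient and the $2\beta\Delta_k$ bound on the noise — are exactly the paper's starting point, but the induction as you have set it up cannot be completed.
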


%\paragraph{Remark} We argue that it is impossible to achieve a convergence rate better than $O(\beta/T)$ by making $O(T)$ calls to the stochastic oracle $O_s$ and $O(\ln T)$ calls to the full gradient oracle. We prove our statement by contradiction. Assume we are able to achieve a converge rate than $O(\beta/T^q)$, with $q > 1$, for any smooth objective function by making $O(T)$ calls to the stochastic oracle and $O(\ln T)$ calls to the . Then, following the smoothing technique used in~\cite{}, we could

\section{Analysis of Convergence Rate}\label{sec:7-analysis}
Now we turn to proving the main theorem. The proof will be given in a series of lemmas
and theorems where the proof of few are given in Section~\ref{mixed-7-omitted-proofs}. The  proof of main theorem is based on induction. To this end, let $\wh_*^k$ be the optimal solution that minimizes $\F_k(\w)$ defined in~(\ref{eqn:opt-epoch}). The key to our analysis is show that when $\|\wh_*^k\| \leq \Delta_k$,  with a high probability, it holds that $\|\wh_*^{k+1}\| \leq \Delta_k/\gamma$, where $\wh_*^{k+1}$ is the optimal solution that minimizes $\F_{k+1}(\w)$, as revealed by the following theorem.
\begin{theorem} \label{theorem:2}
Let $\wh^k_*$ and $\wh^{k+1}_*$ be the optimal solutions that minimize $\F_k(\w)$ and $\F_{k+1}(\w)$, respectively, and $\wt_{k+1}$ be the average solution obtained at the end of $k$th epoch of {MixedGrad} algorithm. Suppose $\|\wh^k_*\| \leq \Delta_k$. By setting the step size $\eta_k = 1/\left(2\beta\sqrt{3T_k}\right)$, we have, with a probability $1 - 2\delta$,
\[
    \|\wh^{k+1}_*\| \leq \frac{\Delta_k}{\gamma} \;\;\text{and}\;\; \F_{k}(\wt_{k+1}) - \min\limits_{\w} \F_{k}(\w) \leq \frac{\lambda_k\Delta_k^2}{2\gamma^4}
\]
provided that $\delta \leq e^{-9/2}$ and
\[
T_k \geq \frac{300\gamma^8\beta^2}{\lambda_k^2}\ln\frac{1}{\delta}.
\]
\end{theorem}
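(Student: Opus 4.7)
The plan is to proceed in three stages: apply the standard projected SGD analysis to $\F_k$ over $\W_k$ to obtain a telescoping inequality with a martingale noise term; exploit smoothness and the shrinking domain to control the variance and turn the in-expectation estimate into a high-probability bound via Bernstein combined with a peeling argument; and finally convert the resulting function-value bound into a distance bound via strong convexity and propagate it to the next epoch's minimizer.

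For the first stage, viewing the inner loop as projected SGD on the $\lambda_k$-strongly convex function $\F_k$ with unbiased stochastic gradient $\lambda_k\w_k^t + \gb_k^t$ and initialization $\w_k^1 = \bz$, and using $\|\wh^k_*\| \leq \Delta_k$ by assumption, the standard one-step inequality combined with strong convexity gives, after telescoping over $t = 1, \ldots, T_k$ and Jensen averaging,
\[
\F_k(\wt_{k+1}) - \F_k(\wh^k_*) \;\leq\; \frac{\Delta_k^2}{2\eta_k T_k} + \frac{\eta_k}{2T_k}\sum_{t=1}^{T_k}\|\lambda_k\w_k^t + \gb_k^t\|^2 + \frac{1}{T_k}\sum_{t=1}^{T_k}\langle \zeta_t, \wh^k_* - \w_k^t\rangle - \frac{\lambda_k}{2T_k}\sum_{t=1}^{T_k}\|\w_k^t - \wh^k_*\|^2,
\]
with $\zeta_t$ the martingale difference between the stochastic and true gradients of $\F_k$ at $\w_k^t$. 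The crucial smoothness observation is that $\gb_k^t - \lambda_k\w_k^t - \g_k = \nabla f_{i_k^t}(\w_k^t+\wb_k) - \nabla f_{i_k^t}(\wb_k)$, so the $\beta$-smoothness of each $f_i$ together with $\w_k^t \in \W_k$ forces $\|\zeta_t\| \leq 2\beta\Delta_k$ almost surely and $\mathbb{E}_t\|\zeta_t\|^2 \leq 4\beta^2\Delta_k^2$; moreover $\|\lambda_k\w_k^t + \gb_k^t\|^2 = O(\beta^2\Delta_k^2)$ since $\lambda_k = O(\beta)$. This shrinking-noise-with-shrinking-domain property is precisely the mechanism by which MixedGrad beats the $O(1/\sqrt{T})$ ceiling of vanilla SGD.

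The main technical obstacle is controlling the martingale $M = \sum_t \langle \zeta_t, \wh^k_* - \w_k^t\rangle$ at high probability. I would follow the peeling argument of Lemma~\ref{lemma:d} in Chapter~\ref{chap:passive_target}: the per-step conditional variance is at most $4\beta^2\Delta_k^2\|\w_k^t - \wh^k_*\|^2$, so the cumulative conditional variance is at most $4\beta^2\Delta_k^2 S$ with $S = \sum_t \|\w_k^t - \wh^k_*\|^2$, a random quantity lying in $(0, 4\Delta_k^2 T_k]$. Dyadically slicing this range into $s = O(\ln T_k)$ pieces, applying Bernstein's inequality for martingales (Lemma~\ref{theorem:bernsteinB}) on each slice, union-bounding, and then splitting the resulting $\sqrt{\rho\,\beta^2\Delta_k^2 S}$ term by Young's inequality $2ab \leq L^{-1}a^2 + Lb^2$ with $L = 4/\lambda_k$ lets the $L^{-1}S$ piece be absorbed into the strong-convexity contribution $-\lambda_k S/2$ from the telescoped inequality. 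Plugging in $\eta_k = 1/(2\beta\sqrt{3T_k})$ and the calibration $T_k \geq 300\gamma^8(\beta/\lambda_k)^2\ln(s/\delta)$ makes the remaining deterministic contributions sum to at most $\lambda_k\Delta_k^2/(2\gamma^4)$, giving the function-value half of the theorem with probability at least $1-\delta$.

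To close the induction on $\|\wh^{k+1}_*\|$, $\lambda_k$-strong convexity of $\F_k$ first converts the function-value bound into $\|\wt_{k+1} - \wh^k_*\| \leq \sqrt{2(\F_k(\wt_{k+1}) - \F_k(\wh^k_*))/\lambda_k} \leq \Delta_k/\gamma^2$. Writing $\w^*_j$ for the unconstrained minimizer of $\lambda_j\|\w\|^2/2 + \F(\w)$, so that $\wh^j_* = \w^*_j - \wb_j$ whenever the constraint $\|\w\|\leq\Delta_j$ is slack (which one verifies from the inductive hypothesis), and using $\wb_{k+1} = \wb_k + \wt_{k+1}$ yields
\[
\wh^{k+1}_* \;=\; (\w^*_{k+1} - \w^*_k) \;+\; (\wh^k_* - \wt_{k+1}).
\]
The second bracket is $\leq \Delta_k/\gamma^2$ by the distance bound; the first is a perturbation-of-minimizer term to be controlled by subtracting the two first-order conditions $\lambda_j\w^*_j + \nabla\F(\w^*_j) = 0$, pairing with $\w^*_{k+1} - \w^*_k$, and invoking convexity of $\F$. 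The most delicate point, which I expect to be the main obstacle, is verifying that the schedule $\lambda_k = \lambda_1/\gamma^{k-1}$, $\Delta_k = R/\gamma^{k-1}$ with the calibration $\lambda_1 = 16\beta$ and $\gamma = 2$ leaves enough room in the $1/\gamma^4$ (rather than naively $1/\gamma^2$) slack factor of the function-value bound to absorb the perturbation term and conclude $\|\wh^{k+1}_*\| \leq \Delta_k/\gamma$. The stated $1-2\delta$ failure probability reflects a union bound over the noise-concentration event and a second event needed to close this perturbation bookkeeping.
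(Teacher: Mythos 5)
Your first two stages follow the paper's route (a telescoped SGD inequality for the strongly convex $\F_k$ plus martingale concentration that exploits the shrinking domain), though the peeling-and-absorption machinery you import from Chapter~\ref{chap:passive_target} is unnecessary here: every iterate stays in $\W_k$, so $\|\w_k^t - \wh^k_*\| \leq 2\Delta_k$ and the noise magnitude are bounded \emph{deterministically}, the cumulative conditional variance is at most $O(\beta^2\Delta_k^4 T_k)$ almost surely, and a single application of Bernstein's inequality (as in Lemma~\ref{lem:BCT}) already gives the $O(\beta\Delta_k^2\sqrt{T_k\ln(1/\delta)})$ deviation. The first genuine gap is quantitative: your claim that $\|\lambda_k\w_k^t + \gb_k^t\|^2 = O(\beta^2\Delta_k^2)$ is false, because $\gb_k^t$ contains the anchor $\g_k = \lambda_k\wb_k + \nabla\F(\wb_k)$, whose norm is of order $\beta R$ and does not shrink with $k$. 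Pushed through your telescoped bound with $\eta_k = 1/(2\beta\sqrt{3T_k})$, this contributes $O(\beta R^2/\sqrt{T_k})$ per epoch, which exceeds the budget $\lambda_k\Delta_k^2/(2\gamma^4) = \Theta(\beta R^2/\gamma^{3(k-1)})$ unless $T_k$ grows like $\gamma^{6k}$ rather than the algorithm's $\gamma^{2k}$. The paper avoids this by never squaring $\g_k$: its contribution telescopes to $\langle \g_k, -\w_k^{T_k+1}\rangle$, which is bounded by $\F_k(\bz) - \F_k(\w_k^{T_k+1}) + \beta\Delta_k^2$ via smoothness (Lemma~\ref{lem:1mixed} and the step preceding~(\ref{eqn:FABC})), so that only the genuinely $O(\beta\Delta_k)$-sized quantity $\nabla \gh^k_{i_k^t}(\w_k^t) + \lambda_k\w_k^t$ appears in the squared-norm term (Lemma~\ref{lem:AT}).

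The second gap is in your closing step, precisely where you flag the difficulty. Subtracting the optimality conditions $\lambda_j\w^*_j + \nabla\F(\w^*_j) = \bz$ and pairing with $\w^*_{k+1}-\w^*_k$ gives, after using monotonicity of $\nabla\F$,
\[
\|\w^*_{k+1}-\w^*_k\| \;\leq\; \frac{\lambda_k-\lambda_{k+1}}{\lambda_{k+1}}\,\|\w^*_k\| \;=\; (\gamma-1)\,\|\w^*_k\|,
\]
i.e.\ the perturbation is proportional to the norm of the old minimizer \emph{in the original coordinates}, and $\|\w^*_k\| = \|\wb_k + \wh^k_*\|$ is of order $R$, not $\Delta_k$. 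Your first bracket is therefore $\Omega(R)$ while the available budget is $\Delta_k/\gamma - \Delta_k/\gamma^2 = R(\gamma-1)/\gamma^{k+1}$; no choice of the constants $\lambda_1$, $\gamma$, or the $1/\gamma^4$ slack rescues this for large $k$. The paper's Lemma~\ref{lem:4mixed} sidesteps the issue by changing variables to $\z = \w - \wt_{k+1}$, under which $\F_k$ becomes a function $\Ft(\z)$ of exactly the same form as $\F_{k+1}$ but with modulus $\lambda_k$ in place of $\lambda_k/\gamma$ and with minimizer $\wh^k_* - \wt_{k+1}$; the regularization-path perturbation is then measured relative to this recentered minimizer, whose norm is already at most $\Delta_k/\gamma^2$, yielding $\|\wh^{k+1}_*\| \leq \gamma\|\wh^k_* - \wt_{k+1}\| \leq \Delta_k/\gamma$. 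You need this recentering (or some equivalent device that makes the perturbation proportional to the distance from the \emph{new} expansion point rather than from the origin) to close the induction.
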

Taking this statement as given for the moment, we proceed with the proof of Theorem~\ref{theorem:1},
returning later to establish the claim stated in Theorem~\ref{theorem:2}.
\begin{proof}[Proof of Theorem~\ref{theorem:1}]
It is easy to check that for the first epoch, using the fact $\W \in \mathbb{B}_R$, we have
\[
\|\w^1_*\| = \|\w_*\| \leq R := \Delta_1.
\]
Let $\w_*^m$ be the optimal solution that minimizes $\F_m(\w)$ and let $\wh^{m+1}_*$ be the optimal solution obtained in the last epoch. Using Theorem~\ref{theorem:1}, with a probability $1 - 2m\delta$, we have
\[
\|\wh_*^m\| \leq \frac{\Delta_1}{\gamma^{m - 1}}, \quad \F_m(\wt_{m+1}) - \F_m(\wh_*^m) \leq \frac{\lambda_m\Delta_m^2}{2\gamma^4} = \frac{\lambda_1\Delta_1^2}{2\gamma^{3m+1}}
\]
Hence,
\begin{eqnarray*}
\frac{1}{n}\sum_{i=1}^n f_i(\wb_{m+1}) & \leq & \F_m(\wh_*^m) + \frac{\lambda_1\Delta_1^2}{2\gamma^{3m+1}} - \frac{\lambda_1}{\gamma^{m-1}}\langle \wt_{m+1}, \wb_m\rangle \\
& \leq & \F_m(\wh_*^m) + \frac{\lambda_1\Delta_1^2}{2\gamma^{3m+1}} + \frac{\lambda_1 \|\wb_m\|\Delta_1}{\gamma^{2m - 2}}
\end{eqnarray*}
where the last step uses the fact $\|\wh_*^{m+1}\| \leq \Delta_m = \Delta_1\gamma^{1-m}$. Since
\[
\|\wb_m\| \leq \sum_{i=1}^{m} |\wt_i| \leq \sum_{i=1}^{m} \Delta_i \leq \frac{\gamma\Delta_1}{\gamma - 1} \leq 2\Delta_1
\]
where in the last step holds  under the condition $\gamma \geq 2$. By combining above inequalities, we obtain
\[
\frac{1}{n}\sum_{i=1}^n f_i(\wb_{m+1}) \leq \F_m(\wh_*^m) + \frac{\lambda_1\Delta_1^2}{2\gamma^{3m+1}} + \frac{2\lambda_1 \Delta^2_1}{\gamma^{2m - 2}}.
\]
Our final goal is to relate $\F_m(\w)$ to $\min_{\w} \L(\w)$. Since $\wh_*^m$ minimizes $\F_m(\w)$, for any $\w_* \in \mathop{\arg\min} \L(\w)$, we have
\begin{eqnarray}
\F_m(\w_*^m) \leq \F_m(\w_*) = \frac{1}{n}\sum_{i=1}^n f_i(\w_*) + \frac{\lambda_1}{2\gamma^{m-1}}\left(\|\w_* - \wb_m\|^2 + 2\langle \w_* - \wb_m, \wb_m\rangle\right). \label{eqn:bound-2}
\end{eqnarray}
Thus, the key to bound $|\F(\w_*^m) - \G(\w_*)|$ is to bound $\|\w_* - \wb_m\|$. To this end, after the first $m$ epoches, we run Algorithm~\ref{alg:MixedGrad} with {\textit full gradients}. Let $\wb_{m+1}, \wb_{m+2}, \ldots$ be the sequence of solutions generated by Algorithm~\ref{alg:MixedGrad} after the first $m$ epochs. For this sequence of solutions, Theorem~\ref{theorem:2} will hold deterministically as we deploy the full gradient for updating, i.e., $\|\wt_{k}\| \leq \Delta_k$ for any $k \geq m+1$. Since we reduce $\lambda_k$ exponentially, $\lambda_k$ will approach to zero and therefore the sequence $\{\wb_{k}\}_{k=m+1}^{\infty}$ will converge to $\w_*$, one of the optimal solutions that minimize $\L(\w)$. Since $\w_*$ is the limit of sequence $\{\wb_k\}_{k=m+1}^{\infty}$ and $\|\wb_{k}\| \leq \Delta_k$ for any $k \geq m + 1$, we have
\[
\|\w_* - \wb_m\| \leq \sum_{i=m+1}^{\infty} |\wt_i| \leq \sum_{k=m+1}^{\infty}\Delta_k \leq \frac{\Delta_1}{\gamma^{m}(1 - \gamma^{-1})} \leq \frac{2\Delta_1}{\gamma^m}
\]
where the last step follows from the condition $\gamma \geq 2$. Thus,
\begin{eqnarray}
\F_m(\w_*^m) & \leq & \frac{1}{n}\sum_{i=1}^n f_i(\w_*) + \frac{\lambda_1}{2\gamma^{m-1}}\left(\frac{4\Delta_1^2}{\gamma^{2m}} + \frac{8\Delta_1^2}{\gamma^m} \right) \nonumber \\
& = & \frac{1}{n}\sum_{i=1}^n f_i(\w_*) + \frac{2\lambda_1\Delta_1^2}{\gamma^{2m-1}}\left(2+\gamma^{-m}\right) \leq \frac{1}{n}\sum_{i=1}^n f_i(\w_*) + \frac{5\lambda_1\Delta_1^2}{\gamma^{2m-1}} \label{eqn:bound-1}
\end{eqnarray}
By combining the bounds in (\ref{eqn:bound-2}) and (\ref{eqn:bound-1}), we have, with a probability $1 - 2m\delta$,
\[
\frac{1}{n}\sum_{i=1}^n f_i(\wb_{m+1}) - \frac{1}{n}\sum_{i=1}^n f_i(\w_*) \leq \frac{5\lambda_1\Delta_1^2}{\gamma^{2m - 2}} = O(1/T)
\]
where
\[
T = T_1\sum_{k=0}^{m-1}\gamma^{2k} = \frac{T_1 \left(\gamma^{2m} - 1\right)}{\gamma^2 - 1} \leq \frac{T_1}{3}\gamma^{2m}.
\]
We complete the proof by plugging in the stated values  for $\gamma$, $\lambda_1$ and $\Delta_1$.
\end{proof}

%\subsection{Proof of Theorem~\ref{theorem:2}}

We turn now to proving the Theorem~\ref{theorem:2}.  For the convenience of discussion, we drop the subscript $k$ for epoch just to simplify our notation. Let $\lambda = \lambda_k$, $T = T_k$, $\Delta = \Delta_k$, $\g = \g_k$. Let $\wb = \wb_k$ be the solution obtained before the start of the epoch $k$, and let $\wb' = \wb_{k+1}$ be the solution obtained after running through the $k$th epoch. We denote by $\F(\w)$ and $\F'(\w)$ the objective functions $\F_k(\w)$ and $\F_{k+1}(\w)$. They are given by
\begin{eqnarray}
\F(\w) & = & \frac{\lambda}{2}\|\w\|^2 + \lambda \langle \w, \wb\rangle + \frac{1}{n}\sum_{i=1}^n f_i(\w + \wb) \\
\F'(\w) & = & \frac{\lambda}{2\gamma}\|\w\|^2 + \frac{\lambda}{\gamma}\langle \w, \wb'\rangle + \frac{1}{n}\sum_{i=1}^n f_i(\w + \wb')
\end{eqnarray}
Let $\wh_* = \wh^k_*$ and $\wh'_* = \wh^{k+1}_*$ be the optimal solutions that minimize $\F(\w)$ and $\F'(\w)$ over the domain $\W_k$ and $\W_{k+1}$, respectively. Under the assumption that $\|\wh_*\| \leq \Delta$, our goal is to show
\[
\|\wh_*'\| \leq \frac{\Delta}{\gamma}, \quad \F(\wb') - \F(\wh_*) \leq \frac{\lambda \Delta^2}{2\gamma^4}
\]
%For each iteration $t$ in the $k$th epoch, from the strong convexity of $\F(\w)$ we have
%\begin{eqnarray*}
%\lefteqn{\F(\w_t) - \F(\wh_*) \leq \langle \nabla \F(\w_t), \w_t - \wh_* \rangle - \frac{\lambda}{2}\|\w_t - \wh_*\|^2} \\
%& = & \langle \g + \nabla \gh_{i_t}(\w_t) + \lambda \w_t, \w_t - \wh_* \rangle + \left\langle -\nabla \gh_{i_t}(\w_t) + \nabla \Fh(\w_t), \w_t - \wh_* \right\rangle - \frac{\lambda}{2}\|\w_t - \wh_*\|^2,
%\end{eqnarray*}
%where $\Fh(\w) = \frac{1}{n}\sum_{i=1}^n \gh_i(\w)$.
The following lemma bounds $\F(\w_t) - \F(\wh_*)$ where the proof is deferred to Section~\ref{mixed-7-omitted-proofs}.
\begin{lemma} \label{lem:1mixed}
\begin{eqnarray*}
\lefteqn{\F(\w_t) - \F(\wh_*) \leq \frac{\|\w_t - \wh_*\|^2}{2\eta} - \frac{\|\w_{t+1} - \wh_*\|^2}{2\eta} + \frac{\eta}{2}\left\|\nabla \gh_{i_t}(\w_t) + \lambda\w_t\right\|^2 + \langle \g, \w_t - \w_{t+1} \rangle } \\
&      & + \left\langle \nabla \Fh(\wh_*) - \nabla \gh_{i_t}(\wh_*), \w_t - \wh_* \right\rangle + \left\langle -\nabla \gh_{i_t}(\w_t) + \nabla \gh_{i_t}(\wh_*) - \nabla \Fh(\wh_*) + \nabla \Fh(\w_t), \w_t - \wh_* \right\rangle
\end{eqnarray*}
\end{lemma}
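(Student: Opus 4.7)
The plan is to combine the standard prox-mapping/projection analysis of the SGD-type step in the inner loop with the convexity of $\F$, and then to split the resulting ``noise'' term into two pieces that expose a martingale difference and a smoothness-controlled remainder, respectively. This particular decomposition is exactly what later allows a Bernstein-style concentration bound together with the self-bounding property of smooth losses (so the split on the right-hand side is not arbitrary but tailored to the forthcoming analysis).

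First, by convexity of $\F$ together with $\nabla \F(\w_t) = \lambda\w_t + \g + \nabla\Fh(\w_t)$, where $\Fh(\w)=\tfrac{1}{n}\sum_i\gh_i(\w)$, I would write
\[
\F(\w_t)-\F(\wh_*) \;\le\; \langle \lambda\w_t + \g + \nabla\Fh(\w_t),\ \w_t-\wh_*\rangle.
\]
Next, add and subtract the stochastic gradient direction $\nabla\gh_{i_t}(\w_t)$ to split the right-hand side into a ``stochastic'' part $\langle \lambda\w_t + \g + \nabla\gh_{i_t}(\w_t),\w_t-\wh_*\rangle$ and a ``deviation'' part $\langle \nabla\Fh(\w_t)-\nabla\gh_{i_t}(\w_t),\w_t-\wh_*\rangle$.

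The heart of the argument is the bound on the stochastic part, obtained from the update rule. Since $\w_{t+1}=\arg\min_{\w\in\W_k}\{\eta\langle \w-\w_t,\gb_t+\lambda\w_t\rangle+\tfrac{1}{2}\|\w-\w_t\|^2\}$ with $\gb_t=\g+\nabla\gh_{i_t}(\w_t)$, the first-order optimality condition yields, for every $\w\in\W_k$,
\[
\langle \gb_t+\lambda\w_t,\,\w_{t+1}-\w\rangle \;\le\; \tfrac{1}{\eta}\langle \w_t-\w_{t+1},\,\w_{t+1}-\w\rangle.
\]
Applied with $\w=\wh_*$ and combined with the identity $2\langle \w_t-\w_{t+1},\w_{t+1}-\wh_*\rangle=\|\w_t-\wh_*\|^2-\|\w_{t+1}-\wh_*\|^2-\|\w_t-\w_{t+1}\|^2$, this gives
\[
\langle \gb_t+\lambda\w_t,\,\w_{t+1}-\wh_*\rangle \;\le\; \frac{\|\w_t-\wh_*\|^2-\|\w_{t+1}-\wh_*\|^2-\|\w_t-\w_{t+1}\|^2}{2\eta}.
\]
I would then write $\langle \gb_t+\lambda\w_t,\w_t-\wh_*\rangle = \langle \gb_t+\lambda\w_t,\w_{t+1}-\wh_*\rangle + \langle \g,\w_t-\w_{t+1}\rangle + \langle \nabla\gh_{i_t}(\w_t)+\lambda\w_t,\w_t-\w_{t+1}\rangle$, and on the last term apply Young's inequality $\langle a,b\rangle\le\tfrac{\eta}{2}\|a\|^2+\tfrac{1}{2\eta}\|b\|^2$. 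The $\pm\tfrac{1}{2\eta}\|\w_t-\w_{t+1}\|^2$ contributions cancel, and what remains is precisely
\[
\frac{\|\w_t-\wh_*\|^2}{2\eta}-\frac{\|\w_{t+1}-\wh_*\|^2}{2\eta}+\frac{\eta}{2}\|\nabla\gh_{i_t}(\w_t)+\lambda\w_t\|^2+\langle \g,\w_t-\w_{t+1}\rangle.
\]

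Finally, to match the claimed form, I would rewrite the leftover deviation term $\langle \nabla\Fh(\w_t)-\nabla\gh_{i_t}(\w_t),\w_t-\wh_*\rangle$ by inserting and subtracting $\nabla\Fh(\wh_*)-\nabla\gh_{i_t}(\wh_*)$, producing the sum of $\langle\nabla\Fh(\wh_*)-\nabla\gh_{i_t}(\wh_*),\w_t-\wh_*\rangle$ (a martingale difference, since $\mathbb{E}_{i_t}[\nabla\gh_{i_t}(\wh_*)]=\nabla\Fh(\wh_*)$) and $\langle -\nabla\gh_{i_t}(\w_t)+\nabla\gh_{i_t}(\wh_*)-\nabla\Fh(\wh_*)+\nabla\Fh(\w_t),\w_t-\wh_*\rangle$ (a difference of gradients at $\w_t$ and $\wh_*$, hence controllable by $\beta$-smoothness of the $f_i$'s and the diameter $\Delta_k$). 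Putting the two parts together yields exactly the stated inequality.

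The main conceptual step is not any single estimate but choosing precisely this split of the noise term: the rest is routine prox-mapping algebra. I expect no obstacle in the mechanics once the decomposition above is in place; the only care needed is to ensure $\wh_*\in\W_k$ (which is given by the inductive hypothesis $\|\wh_*^k\|\le\Delta_k$ used in Theorem~\ref{theorem:2}) so that the optimality condition may be applied at $\w=\wh_*$.
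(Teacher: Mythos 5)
Your proposal is correct and follows essentially the same route as the paper's proof: the paper likewise splits $\nabla\F(\w_t)$ into the stochastic direction $\g+\nabla\gh_{i_t}(\w_t)+\lambda\w_t$ plus the deviation $\nabla\Fh(\w_t)-\nabla\gh_{i_t}(\w_t)$, exploits the optimality of $\w_{t+1}$ for the prox objective (your first-order-condition-plus-three-point-identity argument and the paper's strong-convexity-of-the-prox-objective argument yield the identical intermediate inequality), absorbs the remaining cross term via the same $\max_{\w}[\langle a,\w_t-\w\rangle-\tfrac{1}{2\eta}\|\w_t-\w\|^2]=\tfrac{\eta}{2}\|a\|^2$ computation that your Young's inequality performs, and finishes with the same insertion of $\nabla\Fh(\wh_*)-\nabla\gh_{i_t}(\wh_*)$. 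The only cosmetic difference is that the paper starts from $\lambda$-strong convexity of $\F$ and drops the resulting nonpositive $-\tfrac{\lambda}{2}\|\w_t-\wh_*\|^2$ term, whereas you start from plain convexity, which suffices for the lemma as stated.
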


By adding the inequality in Lemma~\ref{lem:1mixed} over all iterations, using the fact $\wb_1 = \bz$, we have
\begin{eqnarray*}
\lefteqn{\sum_{t=1}^T \F(\w_t) - \F(\wh_*) \leq \frac{\|\wh_*\|^2}{2\eta} - \frac{\|\w_{T+1} - \wh_*\|^2}{2\eta} - \langle \g, \w_{T+1} \rangle} \\
&   & + \frac{\eta}{2}\underbrace{\sum_{t=1}^T \|\nabla \gh_{i_t}(\w_t) + \lambda\w_t\|^2}_{\triangleq A_T}  + \underbrace{\sum_{t=1}^T \langle \nabla \Fh(\wh_*) - \nabla \gh_{i_t}(\wh_*), \w_t - \wh_* \rangle}_{\triangleq B_T} \\
&   & + \underbrace{\sum_{t=1}^T \left\langle -\nabla \gh_{i_t}(\w_t) + \nabla \gh_{i_t}(\wh_*) - \nabla \Fh(\wh_*) + \nabla \Fh(\w_t), \w_t - \wh_* \right\rangle}_{\triangleq C_T}.
\end{eqnarray*}
Since $\g = \nabla \F(\bz)$ and
\begin{eqnarray*}
\F(\w_{T+1}) - \F(\bz) \leq \langle \nabla \F(\bz), \w_{T+1}\rangle + \frac{\beta}{2}\|\w_{T+1}\|^2 = \langle \g, \w_{T+1}\rangle + \frac{\beta}{2}\|\w_{T+1}\|^2
\end{eqnarray*}
using the fact $\F(\bz) \leq \F(\w_*) + \frac{\beta}{2}\|\w_*\|^2$ and $\max(\|\w_*\|, \|\w_{T+1}\|) \leq \Delta$, we have
\[
- \langle \g, \w_{T+1} \rangle \leq \F(\bz) - \F(\w_{T+1}) + \frac{\beta}{2}\Delta^2 \leq \beta\Delta^2 - (\F(\w_{T+1}) - \F(\wh_*))
\]
and therefore
\begin{eqnarray}\label{eqn:FABC}
\sum_{t=1}^{T+1} \F(\w_t) - \F(\wh_*) \leq \Delta^2\left(\frac{1}{2\eta} + \beta\right)
+ \frac{\eta}{2}A_T + B_T + C_T.
\end{eqnarray}

The following lemmas bound $A_T$, $B_T$ and $C_T$.\\
\linespread{0.8}

\begin{lemma} \label{lem:AT} For $A_T$ defined above we have  $A_T \leq 6\beta^2\Delta^2 T$.
\end{lemma}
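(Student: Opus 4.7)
\textbf{Proof plan for Lemma~\ref{lem:AT}.} The goal is to bound each summand of $A_T$ uniformly by a quantity independent of $t$ and then multiply by $T$. The two ingredients will be (i) the smoothness of each individual loss $f_i$, and (ii) the geometric fact that every iterate $\w_t$ lies in the shrunk domain $\W_k$, so that $\|\w_t\|\leq \Delta$.

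First, I would unpack the definition of $\hat{g}_i$ given in~(\ref{eqn:fmixed}), namely $\hat{g}_i(\w)=f_i(\w+\bar{\w})-\langle \w,\nabla f_i(\bar{\w})\rangle$, which differentiates to
\begin{equation*}
\nabla \hat{g}_{i_t}(\w_t)\;=\;\nabla f_{i_t}(\w_t+\bar{\w})-\nabla f_{i_t}(\bar{\w}).
\end{equation*}
This is precisely the quantity that the $\beta$-smoothness of $f_{i_t}$ controls:
\begin{equation*}
\|\nabla \hat{g}_{i_t}(\w_t)\|\;\leq\;\beta\,\|\w_t\|.
\end{equation*}
This is the key structural observation that motivates the whole ``variance reduction via anchoring at $\bar{\w}$'' construction of MixedGrad, and the reason why the shrinking domain pays off: as $\Delta$ shrinks across epochs, the size of the stochastic gradient used by the algorithm shrinks with it.

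Next I would combine this with the triangle inequality to bound the full vector that appears inside $A_T$:
\begin{equation*}
\|\nabla \hat{g}_{i_t}(\w_t)+\lambda\w_t\|\;\leq\;\|\nabla \hat{g}_{i_t}(\w_t)\|+\lambda\|\w_t\|\;\leq\;(\beta+\lambda)\|\w_t\|.
\end{equation*}
Since the projection in the update step of Algorithm~\ref{alg:MixedGrad} forces $\w_t\in \W_k$, we have $\|\w_t\|\leq \Delta$, so each summand of $A_T$ is bounded by $(\beta+\lambda)^2\Delta^2$. Equivalently, applying $\|a+b\|^2\leq 2\|a\|^2+2\|b\|^2$ gives the slightly looser but cleaner bound
\begin{equation*}
\|\nabla \hat{g}_{i_t}(\w_t)+\lambda\w_t\|^2\;\leq\;2\beta^2\|\w_t\|^2+2\lambda^2\|\w_t\|^2\;\leq\;2(\beta^2+\lambda^2)\Delta^2.
\end{equation*}
Summing over $t=1,\ldots,T$ then yields $A_T\leq 2(\beta^2+\lambda^2)\Delta^2 T$.

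The main (minor) obstacle is converting this to the stated constant $6\beta^2\Delta^2 T$. The inductive hypothesis of Theorem~\ref{theorem:2} is invoked on epochs in which the regularization has decayed enough to satisfy $\lambda_k\leq\sqrt{2}\beta$; under that assumption, $2(\beta^2+\lambda^2)\leq 6\beta^2$ and the claim follows. Since $\lambda_k=\lambda_1/\gamma^{k-1}$ decreases geometrically while $\beta$ is fixed, this constraint kicks in after a constant number of initial epochs (where the constant depends on $\lambda_1/\beta$ and $\gamma$); absorbing that into the constants of Theorem~\ref{theorem:1} does not affect the final $O(1/T)$ rate. No probabilistic tools, self-bounding inequalities, or martingale arguments are required for this lemma---the entire content is a deterministic smoothness-plus-domain-shrinkage calculation, in contrast with the bounds on $B_T$ and $C_T$ which will require concentration.
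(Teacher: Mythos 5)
Your proof is correct and follows essentially the same route as the paper: split $\|\nabla\gh_{i_t}(\w_t)+\lambda\w_t\|^2$ via $(a+b)^2\le 2a^2+2b^2$, use $\beta$-smoothness of $f_{i_t}$ to get $\|\nabla\gh_{i_t}(\w_t)\|=\|\nabla f_{i_t}(\w_t+\wb)-\nabla f_{i_t}(\wb)\|\le\beta\|\w_t\|\le\beta\Delta$, bound $\|\w_t\|\le\Delta$ by the projection onto $\W_k$, and sum over $t$; the paper's only cosmetic difference is an unnecessary detour of adding and subtracting $\nabla\gh_{i_t}(\wh_*)$ before invoking smoothness. Your observation about the constant is well taken: the clean bound is $2(\beta^2+\lambda^2)\Delta^2 T$, and with $\lambda_1=16\beta$ the inequality $2(\beta^2+\lambda_k^2)\le 6\beta^2$ fails for the first few epochs, an issue the paper's own one-line jump from $2\lambda^2\Delta^2+\cdots$ to $6\beta^2\Delta^2 T$ silently glosses over; your fix of absorbing the finitely many initial epochs into the constants of Theorem~\ref{theorem:1} is a legitimate (and more honest) way to close that gap without affecting the $O(1/T)$ rate.
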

The following lemma upper bounds $B_T$ and $C_T$. The proof is based on the Bernstein's inequality for martingales and is given later .
\begin{lemma} \label{lem:BCT}
With a probability $1 - 2\delta$, we have
\[
B_T \leq \beta\Delta^2 \left(\ln\frac{1}{\delta} + \sqrt{2T\ln\frac{1}{\delta}}\right) \;\;\text{and}\;\; C_T \leq 2\beta\Delta^2 \left(\ln\frac{1}{\delta} + \sqrt{2T\ln\frac{1}{\delta}}\right).
\]
\end{lemma}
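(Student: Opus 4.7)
}

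The plan is to show that the summands defining $B_T$ and $C_T$ form martingale difference sequences with respect to the filtration $\mathcal{F}_t = \sigma(i_1^k,\ldots,i_t^k)$, and then to invoke the Bernstein inequality for martingales (Theorem~\ref{theorem:bernsteinB} in the appendix) applied to each sum separately, combining the resulting tail bounds by a union over two failure events of probability $\delta$.

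First, I would verify the martingale structure. For $B_T$, set $\xi_t = \langle \nabla \Fh(\wh_*) - \nabla \gh_{i_t}(\wh_*), \w_t - \wh_* \rangle$. Since $\w_t$ is $\mathcal{F}_{t-1}$-measurable (it is built from $i_1,\ldots,i_{t-1}$ only) and since $\E_{i_t}[\nabla \gh_{i_t}(\w)] = \nabla \Fh(\w)$ by construction of $\Fh$, the conditional mean $\E[\xi_t\mid\mathcal{F}_{t-1}]$ vanishes. An analogous argument shows that the summand $\zeta_t$ of $C_T$ is also a martingale difference, after grouping it as $\zeta_t = \langle U_t - V_t, \w_t - \wh_*\rangle$ with $U_t = \nabla\Fh(\w_t) - \nabla \gh_{i_t}(\w_t)$ and $V_t = \nabla\Fh(\wh_*) - \nabla \gh_{i_t}(\wh_*)$, both of which have mean zero given $\mathcal{F}_{t-1}$.

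Next I would derive the two quantities needed to apply Bernstein: a uniform bound $M$ on $|\xi_t|$ (respectively $|\zeta_t|$), and a bound on the sum of conditional variances $\sum_t \E[\xi_t^2\mid\mathcal{F}_{t-1}]$. For $\xi_t$, the key estimate is obtained from the $\beta$-smoothness of every $f_i$: since $\nabla\gh_i(\w) = \nabla f_i(\w+\wb) - \nabla f_i(\wb)$ we get $\|\nabla \gh_{i_t}(\wh_*)\| \le \beta\|\wh_*\| \le \beta\Delta$, and the same bound for the average $\|\nabla \Fh(\wh_*)\|$. Combined with $\|\w_t-\wh_*\| \le 2\Delta$ (both vectors lie in the ball of radius $\Delta$), Cauchy-Schwarz yields $|\xi_t| = O(\beta\Delta^2)$ and $\E[\xi_t^2\mid\mathcal{F}_{t-1}] = O(\beta^2\Delta^4)$, so $\sum_t \E[\xi_t^2\mid\mathcal{F}_{t-1}] = O(T\beta^2\Delta^4)$. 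For $\zeta_t$, the advantage is that $U_t-V_t$ is a difference of gradient differences, so smoothness gives $\|U_t-V_t\| \le 2\beta\|\w_t-\wh_*\|$; this yields $|\zeta_t| \le 2\beta\|\w_t-\wh_*\|^2 = O(\beta\Delta^2)$ and, more importantly, the conditional variance $\E[\zeta_t^2\mid\mathcal{F}_{t-1}] \le \|\w_t-\wh_*\|^2\cdot \E\|U_t-V_t\|^2 \le 4\beta^2\|\w_t-\wh_*\|^4 = O(\beta^2\Delta^4)$, matching the order of the $B_T$ calculation.

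Finally, I would plug $M$ and the variance sum into Bernstein's inequality
\[
\Pr\!\left[\,\sum_{t=1}^T \xi_t \ge a\,\right] \le \exp\!\left(-\frac{a^2/2}{\sum_t \E[\xi_t^2\mid\mathcal{F}_{t-1}] + Ma/3}\right),
\]
invert it at confidence $1-\delta$, and read off the two terms $\ln(1/\delta)$ (from the bounded--increment part) and $\sqrt{2T\ln(1/\delta)}$ (from the variance part), each scaled by $\beta\Delta^2$. Doing the same for $\zeta_t$ and taking a union bound over the two failure events produces the advertised inequality with probability at least $1-2\delta$. The main (and really only) technical step is to verify that the variance of the martingale difference scales like $\beta^2\Delta^4$ rather than a constant, which is exactly what the shrinking domain $\W_k$ buys us through the smoothness-based bound $\|\nabla \gh_i(\cdot)\|\lesssim \beta\Delta$; this is the quantitative manifestation of the variance-reduction mechanism that drives the overall $O(1/T)$ rate, and without it one would only recover the standard $O(1/\sqrt{T})$ stochastic bound.
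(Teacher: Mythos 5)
Your proposal is correct and follows essentially the same route as the paper: both identify the summands of $B_T$ and $C_T$ as martingale difference sequences, bound the maximal increment and the sum of conditional variances by $O(\beta\Delta^2)$ and $O(\beta^2\Delta^4 T)$ respectively via the smoothness-induced estimate $\|\nabla\gh_i(\w)\|\le\beta\|\w\|$, and then invoke the Bernstein inequality for martingales (Theorem~\ref{theorem:bernsteinB}) together with a union bound over the two events. The paper's proof is terser (it treats $C_T$ with a ``similarly'' remark), but the decomposition, the key lemma, and the variance-reduction mechanism you highlight are exactly the ones it uses.
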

Using Lemmas~\ref{lem:AT} and~\ref{lem:BCT}, by substituting the uppers bounds for $A_T$, $B_T$, and $C_T$ in~(\ref{eqn:FABC}), with a probability $1 - 2\delta$, we obtain
\begin{eqnarray*}
\sum_{t=1}^{T+1} \F(\w_t) - \F(\wh_*) \leq \Delta^2\left(\frac{1}{2\eta} + \beta + 6\beta^2\eta T + 3\beta\ln\frac{1}{\delta} + 3\beta\sqrt{2T\ln\frac{1}{\delta}} \right)
\end{eqnarray*}
By choosing $\eta = 1/[2\beta\sqrt{3T}]$, we have
\begin{eqnarray*}
\sum_{t=1}^{T+1} \F(\w_t) - \F(\wh_*) \leq \Delta^2\left(2\beta\sqrt{3T} + \beta + 3\beta\ln\frac{1}{\delta} + 3\beta\sqrt{2T\ln\frac{1}{\delta}} \right)
\end{eqnarray*}
and using the fact $\wt = \sum_{i=1}^{T+1} \w_t/(T+1)$, we have
\[
\F(\wt) - \F(\wh_*) \leq \Delta^2\frac{5\beta\sqrt{3\ln[1/\delta]}}{\sqrt{T+1}},\;\;\text{and}\;\;  \widehat{\Delta}^2 = \|\wt - \wh_*\|^2 \leq \Delta^2\frac{5\beta\sqrt{3\ln[1/\delta]}}{\lambda\sqrt{T+1}}.
\]
Thus, when $T \geq [300\gamma^8\beta^2 \ln\frac{1}{\delta}]/{\lambda^2}$,
we have, with a probability $1 - 2\delta$,
\begin{eqnarray}
\widehat{\Delta}^2 \leq \frac{\Delta^2}{\gamma^4}, \;\; \text{and}\;\;  {|\F(\wt) - \F(\wh_*)| \leq \frac{\lambda}{2\gamma^4}\Delta^2.} \label{eqn:bound-3}
\end{eqnarray}
The next lemma relates $\|\wh'_*\|$ to $\|\wt - \wh_*\|$.
\begin{lemma} \label{lem:4mixed} We have $\|\wh'_*\| \leq \gamma \|\wt - \wh_*\|$.
\end{lemma}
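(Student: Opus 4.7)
\textbf{Proof Proposal for Lemma~\ref{lem:4mixed}.} The plan is to exploit the $(\lambda/\gamma)$-strong convexity of $\F'$ together with the test point $\w_0 := \wh_* - \wt$, which bridges the two consecutive epochs: by construction $\w_0 + \wb' = \wh_* + \wb$, so the non-quadratic part $\phi(\,\cdot\, + \wb')$ of $\F'$ evaluated at $\w_0$ equals $\phi(\wh_*+\wb)$, the non-quadratic part of $\F$ at $\wh_*$. This identity is what will let us pass from bounds on the $\F$-minimization to bounds on the $\F'$-minimization without paying any ``cross-term'' price coming from $\phi$.

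First I would verify that $\w_0 \in \W_{k+1}$. The membership $\w_0 + \wb' = \wh_* + \wb \in \W$ follows from $\wh_* \in \W_k$, and $\|\w_0\| = \|\wt - \wh_*\| \leq \Delta/\gamma^2 \leq \Delta/\gamma$ follows from the distance bound $\widehat{\Delta}^2 \leq \Delta^2/\gamma^4$ established in~(\ref{eqn:bound-3}). With $\w_0$ feasible, the strong convexity of $\F'$ and the first-order optimality of $\wh'_*$ over $\W_{k+1}$ give
\[
\tfrac{\lambda}{2\gamma}\|\wh'_* - \w_0\|^2 \leq \F'(\w_0) - \F'(\wh'_*).
\]
The bulk of the argument is then to upper bound $\F'(\w_0) - \F'(\wh'_*)$ by a quantity of order $\|\wt - \wh_*\|^2$. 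The two observations driving this are: (i) substituting the definition of $\F'$ at $\w_0$ and using $\phi(\w_0+\wb') = \phi(\wh_*+\wb)$ expresses $\F'(\w_0)$ as $\F(\wh_*)$ plus purely quadratic terms involving $\wt$, $\wh_*$, $\wb$ and the regularization parameters $\lambda$, $\lambda/\gamma$; (ii) $\F'(\wh'_*) \geq \min_{\w\in \W_{k+1}} \F'(\w)$ can be bounded below by $\F(\wh_*)$ plus a matching quadratic correction, using the fact that $\wh_*$ is already the minimizer of the \emph{more} strongly regularized $\F$ (so shrinking the regularization from $\lambda$ to $\lambda/\gamma$ can only decrease the minimum value by at most the explicit quadratic discrepancy between the two regularizers). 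After the cancellations afforded by the identity $\w_0 + \wb' = \wh_* + \wb$, what survives is a pure quadratic in $\wt - \wh_*$ with leading constant $\tfrac{\lambda(\gamma-1)^2}{2\gamma}$, yielding
\[
\F'(\w_0) - \F'(\wh'_*) \;\leq\; \tfrac{\lambda(\gamma-1)^2}{2\gamma}\,\|\wt - \wh_*\|^2.
\]

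Plugging this into the strong-convexity inequality above gives $\|\wh'_* - \w_0\|^2 \leq (\gamma-1)^2 \|\wt - \wh_*\|^2$, i.e.\ $\|\wh'_* - (\wh_* - \wt)\| \leq (\gamma-1)\|\wt - \wh_*\|$. A triangle inequality then closes the argument:
\[
\|\wh'_*\| \;\leq\; \|\w_0\| + \|\wh'_* - \w_0\| \;\leq\; \|\wt - \wh_*\| + (\gamma-1)\|\wt - \wh_*\| \;=\; \gamma\,\|\wt - \wh_*\|.
\]

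I expect the main obstacle to be producing a \emph{tight} upper bound on $\F'(\w_0) - \F'(\wh'_*)$ with exactly the constant $(\gamma-1)^2\lambda/(2\gamma)$; getting any polynomial bound is easy, but extracting the precise $(\gamma-1)$ factor that balances against the $\gamma$ on the right-hand side requires carefully exploiting the optimality of $\wh_*$ for $\F$ (equivalently, writing the quadratic remainder as $\tfrac{\lambda-\lambda/\gamma}{2}$ times the appropriate squared norm and absorbing the cross terms using the first-order condition $\langle \nabla\F(\wh_*), \w-\wh_*\rangle \geq 0$ on $\W_k$). The boundary case where $\wh_*$ lies on $\partial\W_k$ is handled identically by using this variational inequality rather than $\nabla \F(\wh_*) = 0$.
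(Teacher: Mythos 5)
Your plan follows the same route as the paper's own proof: you pick the same bridging point $\w_0 = \wh_* - \wt$ (the paper calls it $\wt_*$), use the same identity $\w_0 + \wb' = \wh_* + \wb$ to recognize $\w_0$ as the minimizer of the re-centered objective $\Ft$, aim for the same intermediate inequality $\|\wh'_* - \w_0\| \leq (\gamma-1)\|\w_0\|$, and close with the same triangle inequality. The only difference is that the paper obtains the intermediate inequality by citing a comparison-of-minimizers result from \cite{zhang2012recovering}, whereas you propose to re-derive it from the $(\lambda/\gamma)$-strong convexity of $\F'$ together with a function-value gap bound; your explicit feasibility check $\w_0\in\W_{k+1}$ is a detail the paper omits.

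The step you defer, namely $\F'(\w_0)-\F'(\wh'_*) \leq \tfrac{\lambda(\gamma-1)^2}{2\gamma}\|\w_0\|^2$, is precisely where the argument breaks, and I do not see how to complete it as described. Writing $\phi(\v)=\tfrac1n\sum_i f_i(\v)$, the two objectives are, up to additive constants, $\Ft(\z)=\tfrac{\lambda}{2}\|\z+\wb'\|^2+\phi(\z+\wb')$ and $\F'(\z)=\tfrac{\lambda}{2\gamma}\|\z+\wb'\|^2+\phi(\z+\wb')$: they differ not only in the strong-convexity modulus but also in the linear term $\mu\dd{\z}{\wb'}$, so the quadratic discrepancy between them is measured from $-\wb'$, not from the origin of the $\z$-coordinates. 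After using $\Ft(\w_0)\leq\Ft(\wh'_*)$, what survives in $\F'(\w_0)-\F'(\wh'_*)$ is $\tfrac{\lambda(1-\gamma^{-1})}{2}\bigl(\|\wh'_*+\wb'\|^2-\|\w_0+\wb'\|^2\bigr)$, whose cross term is proportional to $\dd{\wh_*+\wb}{\wh'_*-\w_0}$; since $\|\wh_*+\wb\|$ is of order $R$ rather than of order $\|\w_0\|$, this term is not $O(\|\w_0\|^2)$ and is not cancelled by the first-order condition at $\wh_*$, which involves $\lambda(\wh_*+\wb)+\nabla\phi(\wh_*+\wb)$. The clean two-sided variational-inequality argument (add the optimality conditions of $\w_0$ for $\Ft$ and of $\wh'_*$ for $\F'$ and use monotonicity of $\nabla\phi$) yields $\|\wh'_*-\w_0\|\leq(\gamma-1)\|\wh_*+\wb\|$, not $(\gamma-1)\|\w_0\|$. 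A sanity check shows the obstruction is real and not a matter of constants: if $\wt=\wh_*$ exactly, the right-hand side of the lemma is zero, yet $\wh'_*$ minimizes a strictly less regularized problem and generically differs from $\w_0=\bz$ (take $\phi(\v)=\tfrac{\beta}{2}\|\v-\v_0\|^2$ with everything interior). The same gap is hidden in the paper's one-line appeal to \cite{zhang2012recovering}, which applies to two objectives of the form $\tfrac{\mu}{2}\|\z\|^2+L(\z)$ with a common $L$ — a hypothesis $\Ft$ and $\F'$ do not satisfy — so inlining that citation, as you propose, exposes rather than repairs the difficulty.
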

Combining the bound in (\ref{eqn:bound-3}) with Lemma~\ref{lem:4mixed}, we have $\|\wh'_*\| \leq \Delta/\gamma$.

%%%%%%%%%%%%%%%%%%%%%%%%%%%%%%%%%%%%%%%%%%%%%%%%%%%
\section{Proofs of Convergence Rate}\label{mixed-7-omitted-proofs}
%\addcontentsline{toc}{section}{\protect\numberline{}Appendix}

\subsection{Proof of Lemma~\ref{lem:1mixed}}
Before proving the lemmas we recall the  definition of $\F(\w)$, $\F'(\w)$, $\g$, and $\gh_i(\w)$ as:
\begin{eqnarray*}
\begin{aligned}
\F(\w) & =  \frac{\lambda}{2}\|\w\|^2 + \lambda \langle \w, \wb\rangle + \frac{1}{n}\sum_{i=1}^n f_i(\w + \wb), \\
\F'(\w) & =  \frac{\lambda}{2\gamma}\|\w\|^2 + \frac{\lambda}{\gamma}\langle \w, \wb'\rangle + \frac{1}{n}\sum_{i=1}^n f_i(\w + \wb'), \\
\g &= \lambda \wb  + \frac{1}{n}\sum_{i=1}^n \nabla f_i(\wb), \\
\gh_i(\w) &= f_i(\w + \wb) - \langle \w, \nabla f_i(\wb) \rangle.
\end{aligned}
\end{eqnarray*}

We also recall that  $\wh_* $ and $\wh'_* $ are the optimal solutions that minimize $\F(\w)$ and $\F'(\w)$ over the domain $\W_k$ and $\W_{k+1}$, respectively.  \\

\begin{lemma} \label{lem:1mixed}
\begin{eqnarray*}
\lefteqn{\F(\w_t) - \F(\wh_*) \leq \frac{\|\w_t - \wh_*\|^2}{2\eta} - \frac{\|\w_{t+1} - \wh_*\|^2}{2\eta} + \frac{\eta}{2}\left\|\nabla \gh_{i_t}(\w_t) + \lambda\w_t\right\|^2 + \langle \g, \w_t - \w_{t+1} \rangle } \\
&      & + \left\langle \nabla \gh_{i_t}(\wh_*) - \nabla \Fh(\wh_*), \w_t - \wh_* \right\rangle + \left\langle -\nabla \gh_{i_t}(\w_t) + \nabla \gh_{i_t}(\wh_*) - \nabla \Fh(\wh_*) + \nabla \Fh(\w_t), \w_t - \wh_* \right\rangle
\end{eqnarray*}
\end{lemma}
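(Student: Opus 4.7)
Proof plan for Lemma~\ref{lem:1mixed}.

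My starting point will be convexity of $\F$ together with its decomposition $\F(\w)=\frac{\lambda}{2}\|\w\|^2+\langle\w,\g\rangle+\Fh(\w)$, where $\Fh(\w)=\frac{1}{n}\sum_i\gh_i(\w)$. Convexity yields
$$\F(\w_t)-\F(\wh_*)\le\langle\g+\lambda\w_t+\nabla\Fh(\w_t),\,\w_t-\wh_*\rangle.$$
I will then add and subtract $\nabla\gh_{i_t}(\w_t)$ in order to introduce the stochastic gradient $\bar\g_t:=\g+\nabla\gh_{i_t}(\w_t)+\lambda\w_t$ that actually drives the update, obtaining
$$\F(\w_t)-\F(\wh_*)\le\langle\bar\g_t,\,\w_t-\wh_*\rangle+\langle\nabla\Fh(\w_t)-\nabla\gh_{i_t}(\w_t),\,\w_t-\wh_*\rangle.$$

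Next I will control $\langle\bar\g_t,\w_t-\wh_*\rangle$ by exploiting the optimality (three‑point) inequality for the prox update. Since $\w_{t+1}$ minimizes $\eta\langle\w-\w_t,\bar\g_t\rangle+\tfrac12\|\w-\w_t\|^2$ over $\W_k\ni\wh_*$, plugging $\w=\wh_*$ gives
$$\eta\langle\bar\g_t,\,\w_{t+1}-\wh_*\rangle\le\langle\w_t-\w_{t+1},\,\w_{t+1}-\wh_*\rangle=\tfrac{1}{2}\bigl(\|\w_t-\wh_*\|^2-\|\w_{t+1}-\wh_*\|^2-\|\w_t-\w_{t+1}\|^2\bigr).$$
Splitting $\langle\bar\g_t,\w_t-\wh_*\rangle=\langle\bar\g_t,\w_t-\w_{t+1}\rangle+\langle\bar\g_t,\w_{t+1}-\wh_*\rangle$ and then \emph{peeling off} the deterministic part $\g$ from $\bar\g_t$ in the first inner product, I get
$$\langle\bar\g_t,\w_t-\w_{t+1}\rangle=\langle\g,\w_t-\w_{t+1}\rangle+\langle\nabla\gh_{i_t}(\w_t)+\lambda\w_t,\,\w_t-\w_{t+1}\rangle.$$
The second piece is bounded by Young's inequality ($ab\le\tfrac{\eta}{2}a^2+\tfrac{1}{2\eta}b^2$) as $\tfrac{\eta}{2}\|\nabla\gh_{i_t}(\w_t)+\lambda\w_t\|^2+\tfrac{1}{2\eta}\|\w_t-\w_{t+1}\|^2$, and the $\tfrac{1}{2\eta}\|\w_t-\w_{t+1}\|^2$ term cancels exactly against the negative quadratic inherited from the three‑point identity. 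This delivers the first four terms of the claimed bound, namely the telescoping Bregman pair, the Young term $\tfrac{\eta}{2}\|\nabla\gh_{i_t}(\w_t)+\lambda\w_t\|^2$, and the orphan linear term $\langle\g,\w_t-\w_{t+1}\rangle$. Keeping $\g$ separate is the whole point: because $\g$ is deterministic it will telescope over $t$ when the inequality is summed, whereas absorbing it into the Young bound would introduce an $\eta\|\g\|^2$ term that does not vanish with $\Delta_k$.

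Finally I must reorganize the remaining "noise" term $\langle\nabla\Fh(\w_t)-\nabla\gh_{i_t}(\w_t),\w_t-\wh_*\rangle$ into the two inner products appearing in the lemma. The trick is to add and subtract $\nabla\gh_{i_t}(\wh_*)-\nabla\Fh(\wh_*)$, writing
\begin{align*}
\nabla\Fh(\w_t)-\nabla\gh_{i_t}(\w_t)
&=\bigl(\nabla\Fh(\wh_*)-\nabla\gh_{i_t}(\wh_*)\bigr)\\
&\quad+\bigl(-\nabla\gh_{i_t}(\w_t)+\nabla\gh_{i_t}(\wh_*)-\nabla\Fh(\wh_*)+\nabla\Fh(\w_t)\bigr),
\end{align*}
which after pairing with $\w_t-\wh_*$ exactly reproduces the last two terms of the lemma. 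This split is deliberate and sets up the subsequent analysis: the first piece is a martingale difference (conditional mean zero since $\E_{i_t}[\nabla\gh_{i_t}(\wh_*)]=\nabla\Fh(\wh_*)$), while the second piece has summand norm $O(\beta\|\w_t-\wh_*\|)$ by smoothness of $\gh_i$, so both contributions can later be bounded by Bernstein's inequality in Lemma~\ref{lem:BCT}. The main obstacle is purely bookkeeping: keeping sign conventions consistent across the gradient splitting and ensuring the Young constant is chosen so that the $\|\w_t-\w_{t+1}\|^2$ term cancels precisely; no nontrivial estimate is required beyond convexity, the prox optimality inequality, and Young's inequality.
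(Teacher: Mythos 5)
Your proposal is correct and follows essentially the same route as the paper's proof: convexity of $\F$ plus the optimality of $\w_{t+1}$ for the prox subproblem (your three-point inequality combined with Young's inequality is just a more explicit rendering of the paper's add-and-subtract-$\frac{\|\w_t-\wh_*\|^2}{2\eta}$ step followed by its $\max_{\w}$ trick), and the identical add-and-subtract of $\nabla \gh_{i_t}(\wh_*)-\nabla\Fh(\wh_*)$ to split the noise into a martingale-difference part and a smoothness-controlled part. One minor remark: your decomposition correctly produces the term $\langle \nabla\Fh(\wh_*)-\nabla\gh_{i_t}(\wh_*),\w_t-\wh_*\rangle$, which matches the version of the lemma used in the main analysis; the opposite sign appearing in the restated lemma is a typo in the paper, and is harmless downstream since that term is a conditionally mean-zero martingale difference to which the Bernstein bound applies with either sign.
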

\begin{proof}
For each iteration $t$ in the $k$th epoch, from the strong convexity of $\F(\w)$ we have
\begin{eqnarray*}
\lefteqn{\F(\w_t) - \F(\wh_*) \leq \langle \nabla \F(\w_t), \w_t - \wh_* \rangle - \frac{\lambda}{2}\|\w_t - \wh_*\|^2} \\
& = & \langle \g + \nabla \gh_{i_t}(\w_t) + \lambda \w_t, \w_t - \wh_* \rangle + \left\langle -\nabla \gh_{i_t}(\w_t) + \nabla \Fh(\w_t), \w_t - \wh_* \right\rangle - \frac{\lambda}{2}\|\w_t - \wh_*\|^2,
\end{eqnarray*}
where $\Fh(\w) = \frac{1}{n}\sum_{i=1}^n \gh_i(\w)$. We now try to upper bound the first term in the right hand side.  Since
\begin{eqnarray*}
&   & \langle \g + \nabla \gh_{i_t}(\w_t)+ \lambda\w_t, \w_t - \wh_* \rangle \\
& = & \langle \g + \nabla \gh_{i_t}(\w_t)+\lambda\w_t, \w_t - \wh_* \rangle - \frac{\|\w_t - \wh_*\|^2}{2\eta} + \frac{\|\w_t - \wh_*\|^2}{2\eta} \\
& \leq & \langle \g + \nabla \gh_{i_t}(\w_t)+\lambda\w_t, \w_t - \w_{t+1} \rangle - \frac{\|\w_t - \w_{t+1}\|^2}{2\eta} - {\frac{\|\w_{t+1} - \wh_*\|^2}{2\eta}} + \frac{\|\w_t - \wh_*\|^2}{2\eta} \\
& \leq & \langle \g, \w_t - \w_{t+1} \rangle - \frac{\|\w_{t+1} - \wh_*\|^2}{2\eta} + \frac{\|\w_t - \wh_*\|^2}{2\eta} + \max_{\w} \left[\langle \nabla \gh_{i_t}(\w_t)+\lambda\w_t, \w_t - \w \rangle - \frac{\|\w_t - \w\|^2}{2\eta} \right]\\
& = & \langle \g, \w_t - \w_{t+1} \rangle - \frac{\|\w_{t+1} - \wh_*\|^2}{2\eta} + \frac{\|\w_t - \wh_*\|^2}{2\eta} + \frac{\eta}{2} \|\nabla \gh_{i_t}(\w_t)+\lambda\w_t\|^2
\end{eqnarray*}
where the first inequality follows from the fact  that $\w_{t+1}$ in the minimizer of the following optimization problem:
\[
    \w_{t+1} = \mathop{\arg\min}\limits_{\w \in \W \cap \|\w - \wb\| \leq \Delta} \; \langle \g + \nabla \gh_{i_t}(\w_t)+\lambda\w_t, \w - \w_t \rangle + \frac{\|\w - \w_{t}\|^2}{2\eta}.
\]
Therefore, we obtain
\begin{eqnarray*}
\lefteqn{\F(\w_t) - \F(\wh_*)} \\
& \leq & \frac{\|\w_t - \wh_*\|^2}{2\eta} - \frac{\|\w_{t+1} - \wh_*\|^2}{2\eta} - \frac{\lambda}{2}\|\w_t - \wh_*\|^2 \\
&      & + \langle \g, \w_t - \w_{t+1} \rangle + \frac{\eta}{2}\left\|\nabla \gh_{i_t}(\w_t) + \lambda\w_t\right\|^2 + \left\langle \nabla \Fh(\wh_*)-\nabla \gh_{i_t}(\wh_*), \w_t - \wh_* \right\rangle \\
&      & + \left\langle -\nabla \gh_{i_t}(\w_t) + \nabla \gh_{i_t}(\wh_*) - \nabla \Fh(\wh_*) + \nabla \Fh(\w_t), \w_t - \wh_* \right\rangle,
\end{eqnarray*}
as desired.
\end{proof}

We now turn to prove the upper bound on   $A_T$.
%%%%%%%%%%%%%%%%%%%%%%%%%%%%%%%%%%%%%%%%%%%%%%%%%%%
\subsection{Proof of Lemma~\ref{lem:AT}}
\begin{lemma} \label{lem:AT}
\[
A_T \leq 6\beta^2\Delta^2 T
\]
\end{lemma}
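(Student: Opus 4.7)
The plan is to prove Lemma~\ref{lem:AT} as a direct consequence of the $\beta$-smoothness of each individual $f_i$ together with the domain-shrinking constraint $\W_k = \{\w : \w + \wb \in \W, \|\w\| \leq \Delta\}$ enforced by the projection in the inner loop of Algorithm~\ref{alg:MixedGrad}. The lemma is essentially a bookkeeping step whose whole point is to make precise the crucial observation that motivated the introduction of the surrogate $\gh_i(\w) = f_i(\w+\wb) - \langle \w, \nabla f_i(\wb)\rangle$ in the first place: because $\nabla \gh_i$ measures a \emph{difference} of gradients of $f_i$, its norm shrinks as the iterates approach the anchor point $\wb$.

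First, I would unpack the definition of the surrogate to get the identity
\[
\nabla \gh_{i_t}(\w_t) \;=\; \nabla f_{i_t}(\w_t + \wb) \;-\; \nabla f_{i_t}(\wb),
\]
and then apply $\beta$-smoothness of $f_{i_t}$ (valid for every $i_t$) to obtain the self-bounding inequality $\|\nabla \gh_{i_t}(\w_t)\| \leq \beta \|\w_t\|$. Second, since the algorithm projects $\w_{t+1}$ onto $\W_k$ at every step, we have $\|\w_t\| \leq \Delta$ for all $t$ in the current epoch, which immediately yields the two pointwise bounds $\|\nabla \gh_{i_t}(\w_t)\| \leq \beta \Delta$ and $\|\lambda \w_t\| \leq \lambda \Delta$. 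Combining these via $\|a+b\|^2 \leq 2\|a\|^2 + 2\|b\|^2$ gives
\[
\|\nabla \gh_{i_t}(\w_t) + \lambda \w_t\|^2 \;\leq\; 2\beta^2 \Delta^2 + 2\lambda^2 \Delta^2,
\]
and summing over the $T$ inner iterations delivers an $O((\beta^2+\lambda^2)\Delta^2 T)$ estimate. To land on the stated constant $6\beta^2 \Delta^2 T$, I would then invoke the specific choice of the regularization schedule in Theorem~\ref{theorem:1} (so that $\lambda = \lambda_k$ obeys $\lambda^2 \leq 2\beta^2$ in the regime that controls the epoch analysis), which collapses the two terms into a single constant multiple of $\beta^2\Delta^2$.

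There is no real obstacle: unlike the bounds on $B_T$ and $C_T$ in Lemma~\ref{lem:BCT} — which will require the Bernstein inequality for martingales to control the variance of stochastic gradient noise — the bound on $A_T$ is purely deterministic and follows from smoothness plus the projection step. The substantive content of the lemma is conceptual rather than technical: it records that, because of the variance-reduction construction built into $\gh_i$ and because the domain $\W_k$ shrinks geometrically across epochs, each summand in $A_T$ scales with $\Delta^2$ rather than with a fixed constant. This $\Delta^2$-scaling is exactly what propagates through the main inequality derived from Lemma~\ref{lem:1mixed} to give, after choosing $\eta = 1/(2\beta\sqrt{3T})$, an average-regret bound of order $\Delta^2 \beta/\sqrt{T}$ per epoch and hence, via Lemma~\ref{lem:4mixed}, the geometric contraction $\|\wh'_*\| \leq \Delta/\gamma$ that drives the overall $O(1/T)$ convergence rate.
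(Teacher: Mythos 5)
Your proof takes essentially the same route as the paper's: both apply $\|a+b\|^2 \leq 2\|a\|^2 + 2\|b\|^2$ to the two summands, use the projection onto $\W_k$ to get $\|\w_t\| \leq \Delta$, and use $\beta$-smoothness of the sampled $f_{i_t}$ to bound $\|\nabla \gh_{i_t}(\w_t)\| = \|\nabla f_{i_t}(\w_t+\wb) - \nabla f_{i_t}(\wb)\| \leq \beta\|\w_t\| \leq \beta\Delta$ (the paper routes this last step through an extra decomposition around $\wh_*$, but the substance is identical). The one caveat concerns your justification of the final constant: the honest output of this argument is $2(\beta^2+\lambda^2)\Delta^2 T$, and your claim that the schedule of Theorem~\ref{theorem:1} forces $\lambda_k^2 \leq 2\beta^2$ fails in the early epochs, since $\lambda_1 = 16\beta$; the paper's own proof is equally terse here and simply asserts $6\beta^2\Delta^2 T$, so this is a defect in the lemma's stated constant rather than in your argument, but you should either carry the $\lambda$-dependence forward or restrict the claim to the epochs where $\lambda_k \leq \sqrt{2}\beta$.
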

\begin{proof}
We bound $A_T$ as
\begin{eqnarray*}
A_T &=& \sum_{t=1}^T \|\nabla \gh_{i_t}(\w_t) + \lambda\w_t\|^2 \\
& \leq & \sum_{t=1}^T 2\|\nabla \gh_{i_t}(\w_t)\|^2 + 2\lambda^2\|\w_t\|^2 \\
& \leq & \sum_{t=1}^T 2\lambda^2\Delta^2 + 2\|\nabla \gh_{i_t}(\w_t) - \nabla \gh_{i_t}(\wh_*) + \nabla \gh_{i_t}(\wh_*)\|^2 \\
& \leq & 6\beta^2\Delta^2 T
\end{eqnarray*}
where the second inequality follows $(a+b)^2 \leq 2(a^2 + b^2)$ and the last inequality follows from the smoothness assumption.
\end{proof}

%%%%%%%%%%%%%%%%%%%%%%%%%%%%%%%%%%%%%%%%%%%%%%%%%%%%%
\begin{lemma} \label{lem:BCT}
With a probability $1 - 2\delta$, we have
\[
B_T \leq \beta\Delta^2 \left(\ln\frac{1}{\delta} + \sqrt{2T\ln\frac{1}{\delta}}\right)\;\; \text{and} \;\; C_T \leq 2\beta\Delta^2 \left(\ln\frac{1}{\delta} + \sqrt{2T\ln\frac{1}{\delta}}\right)
\]
\end{lemma}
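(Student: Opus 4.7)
The plan is to recognize that both $B_T$ and $C_T$ are sums of bounded martingale differences with zero conditional mean, and then invoke the Bernstein inequality for martingales (cited from the appendix). Let $\mathcal{F}_{t-1}$ denote the $\sigma$-algebra generated by the samples $i_1,\ldots,i_{t-1}$, so that $\w_t$ and $\wh_*$ are both $\mathcal{F}_{t-1}$-measurable while $i_t$ is independent of $\mathcal{F}_{t-1}$ and uniform on $\{1,\ldots,n\}$. The first observation I would check is that $\mathbb{E}[\nabla \gh_{i_t}(\w)\mid \mathcal{F}_{t-1}] = \nabla \Fh(\w)$ for any $\mathcal{F}_{t-1}$-measurable $\w$, which immediately implies that both summands of $B_T$ and $C_T$ are martingale differences.

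For $B_T$, let $b_t = \langle \nabla \Fh(\wh_*) - \nabla \gh_{i_t}(\wh_*),\, \w_t - \wh_*\rangle$. The key step is to control $|b_t|$ and the conditional variance. I would use $\nabla \gh_{i_t}(\w) = \nabla f_{i_t}(\w+\wb) - \nabla f_{i_t}(\wb)$ and the $\beta$-smoothness of each $f_i$ to deduce $\|\nabla \gh_{i_t}(\wh_*)\| \le \beta\|\wh_*\| \le \beta\Delta$, and by Jensen's inequality the same bound for $\|\nabla \Fh(\wh_*)\|$. Together with $\|\w_t-\wh_*\|\le 2\Delta$ (both vectors lie in $\W_k$), Cauchy--Schwarz yields $|b_t|\le 4\beta\Delta^2$ and a conditional variance bound $\mathbb{E}[b_t^2\mid\mathcal{F}_{t-1}] \le \|\w_t-\wh_*\|^2\,\mathbb{E}[\|\nabla\gh_{i_t}(\wh_*)\|^2] \le O(\beta^2\Delta^4)$.

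For $C_T$, let $c_t = \langle -\nabla \gh_{i_t}(\w_t) + \nabla \gh_{i_t}(\wh_*) - \nabla \Fh(\wh_*) + \nabla \Fh(\w_t),\, \w_t-\wh_*\rangle$. Again $\mathbb{E}[c_t\mid\mathcal{F}_{t-1}]=0$ because $\mathbb{E}[\nabla\gh_{i_t}(\w_t)-\nabla\gh_{i_t}(\wh_*)\mid\mathcal{F}_{t-1}] = \nabla \Fh(\w_t)-\nabla \Fh(\wh_*)$. This time smoothness of the individual $\gh_{i_t}$ gives $\|\nabla\gh_{i_t}(\w_t)-\nabla\gh_{i_t}(\wh_*)\| \le \beta\|\w_t-\wh_*\|$ and similarly for $\Fh$, which leads to $|c_t|\le O(\beta\|\w_t-\wh_*\|^2) \le O(\beta\Delta^2)$ and a matching variance bound of order $\beta^2\Delta^4$; note the extra factor of two in the final constant comes from the two zero-mean terms being combined.

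Having these per-step bounds $M=O(\beta\Delta^2)$ and total conditional variance $V_T=O(\beta^2\Delta^4 T)$, the plan is to apply Bernstein's inequality for martingales which delivers, with probability at least $1-\delta$, a deviation of order $M\ln(1/\delta)+\sqrt{V_T\ln(1/\delta)} = O\!\bigl(\beta\Delta^2\ln(1/\delta)+\beta\Delta^2\sqrt{T\ln(1/\delta)}\bigr)$, exactly the form claimed; a union bound over the two events yields the stated $1-2\delta$ probability. The main (largely bookkeeping) obstacle will be tracking the constants tightly enough so that the variance estimate produces the factor $\sqrt{2T\ln(1/\delta)}$ rather than a looser one, and ensuring the smoothness-based bound $\|\nabla\gh_{i_t}(\wh_*)\|\le\beta\Delta$ is used at the level of individual samples (not just in expectation) so that both the almost-sure bound on $|b_t|,|c_t|$ and the conditional variance bound come out with the right scaling.
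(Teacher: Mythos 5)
Your proposal matches the paper's proof: both identify $B_T$ and $C_T$ as sums of bounded martingale differences, bound the increments by $O(\beta\Delta^2)$ and the cumulative conditional variance by $O(\beta^2\Delta^4 T)$ using the $\beta$-smoothness of the individual $f_i$ together with the domain radius $\Delta$, and then conclude via Bernstein's inequality for martingales and a union bound over the two events. The only point of divergence is the constant bookkeeping you already flag (e.g.\ your increment bound $4\beta\Delta^2$ versus the paper's claimed $2\beta\Delta^2$), which does not affect the structure or validity of the argument.
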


%~\cite{boucheron2004concentration} which is restated here for completeness.
%\begin{theorem} \label{theorem:bernstein} (Bernstein's inequality for martingales). Let $X_1, \ldots , X_n$ be a bounded martingale difference sequence with respect to the filtration $\F = (\F_i)_{1\leq i\leq n}$ and with $\|X_i\| \leq K$. Let
%\[
%S_i = \sum_{j=1}^i X_j
%\]
%be the associated martingale. Denote the sum of the conditional variances by
%\[
%    \Sigma_n^2 = \sum_{t=1}^n \E\left[X_t^2|\F_{t-1}\right],
%\]
%Then for all constants $t$, $\nu > 0$,
%\[
%\Pr\left[ \max\limits_{i=1, \ldots, n} S_i > t \mbox{ and } \Sigma_n^2 \leq \nu \right] \leq \exp\left(-\frac{t^2}{2(\nu + Kt/3)} \right),
%\]
%and therefore,
%\[
%    \Pr\left[ \max\limits_{i=1,\ldots, n} S_i > \sqrt{2\nu t} + \frac{\sqrt{2}}{3}Kt \mbox{ and } \Sigma_n^2 \leq \nu \right] \leq e^{-t}.
%\]
%\end{theorem}
The proof is based on the Berstein inequality for martingales stated in Theorem~\ref{theorem:bernsteinB}. Equipped with this concentration inequality, we are now in a position to upper bound $B_T$ and $C_T$ as follows.

%\subsection{Proof of Lemma~\ref{lem:BCT} }
\begin{proof}[Proof of Lemma~\ref{lem:BCT}]  Denote $X_t =  \langle \nabla \gh_{i_t}(\wh_*) - \nabla \Fh(\wh_*), \w_t - \wh_* \rangle$.  We have that the conditional expectation of $X_t$, given randomness in previous rounds, is $\E_{t-1} [X_t] = 0$.  We now apply Theorem~\ref{theorem:bernsteinB} to the sum of martingale differences.  In particular, we have, with a probability $1 - e^{-t}$,
\[
B_T \leq \frac{\sqrt{2}}{3}Kt + \sqrt{2\Sigma t}
\]
where
\begin{eqnarray*}
K & = & \max\limits_{1\leq t \leq T} \langle \nabla \gh_{i_t}(\wh_*) - \nabla \Fh(\wh_*), \w_t - \wh_* \rangle \leq 2\beta\Delta^2 \\
\Sigma & = & \sum_{t=1}^T \E_t\left[|\langle \nabla \gh_{i_t}(\wh_*) - \nabla \Fh(\wh_*), \w_t - \wh_*\rangle|^2\right] \leq \beta^2\Delta^4T
\end{eqnarray*}
Hence, with a probability $1 - \delta$, we have
\[
B_T \leq \beta\Delta^2 \left(\ln\frac{1}{\delta} + \sqrt{2T\ln\frac{1}{\delta}}\right)
\]
Similar, for $C_T$, we have, with a probability $1 - \delta$,
\[
C_T \leq 2\beta\Delta^2 \left(\ln\frac{1}{\delta} + \sqrt{2T\ln\frac{1}{\delta}}\right)
\]

\end{proof}
%%%%%%%%%%%%%%%%%%%%%%%%%%%%%%%%%%%%%%%%%%%%%%%%%%%
\begin{lemma} \label{lem:4mixed} $\|\wh'_*\| \leq \gamma \|\wt - \wh_*\|$.

\end{lemma}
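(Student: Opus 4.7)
The plan is to invoke the $\lambda'$-strong convexity of $\F' \equiv \F_{k+1}$ at its minimizer $\wh'_* \equiv \wh^{k+1}_*$ with the auxiliary comparison point $\w = \wh_* - \wt$, and then close the argument via the triangle inequality. First I would verify that $\wh_* - \wt$ is feasible for $\W_{k+1}$: the norm bound $\|\wh_* - \wt\| \leq \Delta/\gamma^2 \leq \Delta/\gamma = \Delta_{k+1}$ is furnished by the preceding inequality in~(\ref{eqn:bound-3}), while the domain inclusion $(\wh_* - \wt) + \wb_{k+1} = \wh_* + \wb_k \in \W$ follows from $\wh_* \in \W_k$. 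Strong convexity then yields
\[
\|\wh'_* - (\wh_* - \wt)\|^2 \;\leq\; \tfrac{2}{\lambda'}\bigl[\F'(\wh_* - \wt) - \F'(\wh'_*)\bigr],
\]
and combining with $\|\wh'_*\| \leq \|\wh_* - \wt\| + \|\wh'_* - (\wh_* - \wt)\|$ reduces the lemma to showing $\F'(\wh_* - \wt) - \F'(\wh'_*) \leq \tfrac{\lambda'(\gamma-1)^2}{2}\|\wt - \wh_*\|^2$.

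To control this functional gap I would translate $\F'$ back to $\F$ via the algebraic identity
\[
\F'(\w) \;=\; \F(\w + \wt) + \tfrac{\lambda' - \lambda}{2}\|\w + \wb_{k+1}\|^2 + c,
\]
where $c$ is independent of $\w$; this follows by straightforward expansion using $\wb_{k+1} = \wb_k + \wt$ and the definitions of $\F,\F'$. Since $0 \in \W_{k+1}$, one has $\F'(\wh'_*) \leq \F'(0)$, so it suffices to upper-bound $\F'(\wh_* - \wt) - \F'(0)$. Evaluating the identity at $\w = 0$ and $\w = \wh_* - \wt$ and subtracting gives
\[
\F'(\wh_* - \wt) - \F'(0) \;=\; -\bigl[\F(\wt) - \F(\wh_*)\bigr] + \tfrac{\lambda - \lambda'}{2}\bigl[\|\wb_{k+1}\|^2 - \|\wh_* + \wb_k\|^2\bigr],
\]
whose first term is nonpositive and of order $\lambda\Delta^2/\gamma^4$ by~(\ref{eqn:bound-3}).

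The main obstacle is handling the sign-ambiguous quadratic correction on the right. Since $\wb_{k+1} - (\wh_* + \wb_k) = \wt - \wh_*$, polarization rewrites this term as $\tfrac{\lambda - \lambda'}{2}\langle \wt - \wh_*,\; \wb_{k+1} + \wh_* + \wb_k\rangle$, which is only linear in $\|\wt - \wh_*\|$ and a priori not $O(\|\wt - \wh_*\|^2)$. Resolving this requires invoking the first-order optimality of $\wh_*$ for $\F$ on $\W_k$, which supplies the relation $\lambda(\wh_* + \wb_k) + \nabla G(\wh_* + \wb_k) \in -N_{\W_k}(\wh_*)$ and lets one convert the inner product against $\wh_* + \wb_k$ into an expression involving $\nabla G$ at $\u_* = \wh_* + \wb_k$. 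Using $\beta$-smoothness of $G$ to compare $\nabla G(\u_*)$ with $\nabla G(\wb_{k+1})$ then produces a second factor of $\|\wt - \wh_*\|$, upgrading the linear bound to a quadratic one of size $O(\lambda' (\gamma - 1)^2 \|\wt - \wh_*\|^2)$. Pairing with $\lambda - \lambda' = \lambda'(\gamma - 1)$ and the step-size choices of Theorem~\ref{theorem:1} absorbs this residual into the target tolerance, closing the argument.
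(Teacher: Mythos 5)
Your opening reduction --- introduce the comparison point $\wh_* - \wt$ and control $\|\wh'_* - (\wh_* - \wt)\|$ --- is the same skeleton the paper uses, but the mechanism you propose for controlling that distance breaks down in two places. First, the step ``since $\F'(\wh'_*) \leq \F'(0)$ it suffices to upper-bound $\F'(\wh_*-\wt)-\F'(0)$'' runs the inequality the wrong way: replacing the minimum value $\F'(\wh'_*)$ by the larger value $\F'(0)$ makes the difference \emph{smaller}, so a bound on $\F'(\wh_*-\wt)-\F'(0)$ is a lower bound, not an upper bound, on the gap you need. Second, and more fundamentally, the detour through function values cannot deliver your target inequality $\F'(\wh_*-\wt)-\F'(\wh'_*)\leq \tfrac{\lambda'(\gamma-1)^2}{2}\|\wt-\wh_*\|^2$: for a $\beta$-smooth objective the suboptimality gap at distance $d$ from the minimizer is generically of order $\beta d^2$, not $\lambda' d^2$, so pairing the gap with the strong-convexity lower bound costs a factor of the condition number that nothing in your argument recovers. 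Concretely, your cross term $\tfrac{\lambda-\lambda'}{2}\langle\wt-\wh_*,\;\wb_{k+1}+\wh_*+\wb_k\rangle$ remains linear in $\|\wt-\wh_*\|$: the first-order optimality condition for $\wh_*$ is a one-sided variational inequality that points in the unhelpful direction when $\wh_*$ lies on the boundary of $\W_k$, and even for an interior minimizer the smoothness comparison leaves behind $\langle\nabla\bigl(\tfrac{1}{n}\sum_i f_i\bigr)(\wb_{k+1}),\;\wt-\wh_*\rangle$, whose coefficient is $O(1)$ rather than $O(\|\wt-\wh_*\|)$.

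The paper's proof never passes through function values. After the same change of variables showing that $\wt_* = \wh_*-\wt$ minimizes $\Ft(\z)=\tfrac{\lambda}{2}\|\z\|^2+\lambda\langle\z,\wb_{k+1}\rangle+\tfrac{1}{n}\sum_i f_i(\z+\wb_{k+1})$, it observes that $\Ft$ and $\F'$ are the same function with regularization modulus $\lambda$ versus $\lambda/\gamma$, and invokes the regularization-path stability bound of~\cite{zhang2012recovering}: adding the two first-order optimality conditions and using monotonicity of the gradient of the common unregularized part yields $\|\wt_*-\wh'_*\|\leq(\gamma-1)\|\wt_*\|$ directly, with no condition-number loss, after which the triangle inequality finishes the lemma. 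To repair your argument, replace the functional-gap step by this direct comparison of the two minimizers' optimality conditions.
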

\begin{proof}
We rewrite $\F(\w)$ as
\begin{eqnarray*}
\F(\w) & = & \frac{\lambda}{2}\|\w\|^2 + \lambda\langle \w, \wb \rangle +  \frac{1}{n}\sum_{i=1}^n f_i(\w + \wb) \\
& = & \frac{\lambda}{2}\|\w-\wt + \wt\|^2 + \lambda\langle \w - \wt + \wt, \wb \rangle + \frac{1}{n}\sum_{i=1}^n f_i(\w - \wt + \wb')
\end{eqnarray*}
Define $\z = \w - \wt$. We have
\begin{eqnarray*}
\F(\w) & = & \frac{\lambda}{2}\|\z + \wt\|^2 + \lambda \langle \z, \wb\rangle + \lambda \langle\wt, \wb\rangle + \frac{1}{n}\sum_{i=1}^n f_i(\z + \wb') \\
& = & \frac{\lambda}{2}\|\z\|^2 + \lambda \langle \z, \wb' \rangle + \frac{1}{n}\sum_{i=1}^n f_i(\z + \wb') + \frac{\lambda}{2}\|\wt\|^2 + \lambda \langle \wt, \wb \rangle \\
& = & \Ft(\z) + \frac{\lambda}{2}\|\wt\|^2 + \lambda \langle \wt, \wb \rangle
\end{eqnarray*}
where
\[
\Ft(\z) = \frac{\lambda}{2}\|\z\|^2 + \lambda \langle \z, \wb' \rangle + \frac{1}{n}\sum_{i=1}^n f_i(\z + \wb')
\]
Define $\wt_* = \wh_* - \wt$. Evidently, $\wt_*$ minimizes $\Ft(\w)$. The only difference between $\Ft(\w)$ and $F'(\w)$ is that they use different modulus of strong convexity $\lambda$. Thus, following~\cite{zhang2012recovering}, we have
\[
\|\wt_* - \wh_*'\| \leq \frac{1 - \gamma^{-1}}{\gamma^{-1}}\|\wt_*\| \leq (\gamma - 1)\|\wt_*\|
\]
Hence,
\[
\|\wh_*'\| \leq \gamma \|\wt_*\| = \gamma \|\wh_* - \wt \|
\]
which completes the proofs.
\end{proof}

%%%%%%%%%%%%%%%%%%%%%%%%%%%%%%%%%%%%%%%%%%%%%%%%%%%
\section{Summary}\label{sec:7-conclusion}
In this chapter we presented a new paradigm for optimization, termed as mixed optimization, that aims to improve the convergence rate of stochastic optimization by making a small number of calls to the full gradient oracle. We proposed the {{MixedGrad}} algorithm and showed that it is able to achieve an $O(1/T)$ convergence rate by accessing stochastic and full gradient oracles for  $O(T)$ and $O(\log T)$  times, respectively.  We showed that the {{MixedGrad}} algorithm is able to exploit the {\textit smoothness} of the function, which is believed to be not very useful in stochastic optimization. The key insight behind the MixedGrad  algorithm is to use infrequent full gradients to progressively reduce the variance of stochastic gradients as the optimization proceeds. 

There are few directions that are worthy of investigation. First, it would be interesting to examine the optimality of our algorithm, namely if it is possible to achieve a better convergence rate for stochastic optimization of smooth functions using $O(\ln T)$ accesses to the full gradient oracle. Furthermore, to alleviate the computational cost caused by $O(\log T)$ accesses to the  full gradient oracle, it would be interesting to empirically evaluate the proposed algorithm in a distributed framework by distributing  the individual functions among processors to parallelize the full gradient computation at the beginning of each epoch which requires $O(\log T)$ communications between the processors in total. Lastly, it is very interesting to check whether an $O(1/T^2)$ rate could be achieved by an accelerated method in the mixed optimization scenario, and whether linear convergence rates could be achieved in the strongly-convex case.

%%%%%%%%%%%%%%%%%%%%%%%%%%%%%%%%%%%%%%%%%%%%%%%%%%%
\section{Bibliographic Notes}\label{sec:7-related}

\noindent \textbf{Deterministic Smooth Optimization.}{ The convergence rate of gradient based methods usually depends on the analytical properties of the objective function to be optimized. When the objective function is strongly convex and smooth, it is well known that  a simple GD method can achieve a linear convergence rate~\cite{boyd-convex-opt,nesterov2004introductory}. For a non-smooth Lipschitz-continuous function, the optimal rate for the first order method is only $O({1}/{\sqrt{T}})$~\cite{nesterov2004introductory}. Although $O({1}/{\sqrt{T}})$ rate is not improvable in general, several recent studies are able to improve this rate to $O(1/T)$ by exploiting the special structure of the objective function~\cite{nesterov2005smooth,nesterov2005excessive}. In the full gradient based convex optimization, smoothness is a highly desirable property. It has been shown that a simple GD achieves a convergence rate of $O(1/T)$ when the objective function is smooth, which is further can be improved to $O({1}/{T^2})$ by using the accelerated gradient methods~\cite{nesterov1983method,nesterov2005smooth,nesterov2004introductory}.}\\

\noindent \textbf{Stochastic Smooth Optimization.}{ Unlike the optimization methods based on full gradients, the smoothness assumption was not exploited by most stochastic optimization methods. In fact, it was shown in~\cite{nemircomp1983} that the $O(1/\sqrt{T})$ convergence rate for stochastic optimization cannot be improved even when the objective function is smooth. This classical result is further confirmed by the recent studies of composite bounds for the first order optimization methods~\cite{mirror-beck-2003,lin2010smoothing}. The smoothness of the objective function is exploited extensively in mini-batch stochastic optimization~\cite{mini-batch-2011,dekel2012optimal}, where the goal is not to improve the convergence rate but to reduce the variance in stochastic gradients and consequentially the number of times for updating the solutions~\cite{zhang2013logt}. We finally note that the smoothness assumption coupled with the strong convexity of function is  beneficial in stochastic setting and yields a geometric convergence in {\textit expectation} using Stochastic Average Gradient (SAG) and Stochastic Dual Coordinate Ascent (SDCA) algorithms proposed in~\cite{NIPS2012_SGM} and \cite{shalev2012stochastic}, respectively.}

 Finally, we would like to distinguish  mixed optimization from hybrid methods that use growing sample-sizes as  optimization method proceeds to gradually transform the iterates into the full gradient method~\cite{friedlander2012hybrid}, which  makes the iterations to be dependent to the sample size  $n$ as opposed to SGD. In contrast, {{MixedGrad}} is as an alternation of deterministic  and stochastic gradient steps, with different of frequencies for each type of steps. Our result for mixed optimization is useful for the scenario when the full gradient of the objective function can be computed relatively efficient although it is still significantly more expensive than computing a stochastic gradient. An example of such a scenario is distributed computing where the computation of full gradients can be speeded up by having it run in parallel on many machines with each machine containing a relatively small subset of the entire training data. Of course, the latency due to the communication between machines will result in an additional cost for computing the full gradient in a distributed fashion.

\chapter[Mixed Optimization for Smooth and Strongly Convex Losses]{Mixed Optimization for Smooth and Strongly Convex Losses
}\label{chap:mixed-strong}

\def \R {\mathbb{R}}
\def \D {\mathcal{W}}
\def \y {\mathbf{y}}
\def \M {\mathcal{M}}
\def \E {\mathbb{E}}
\def \x {\mathbf{x}}
\def \a {\mathbf{a}}
\def \L {\mathcal{L}}
\def \H {\mathcal{H}}
\def \fh {\widehat{f}}
\def \f {\mathbf{f}}
\def \Hk {\H_{\kappa}}
\def \P {\mathcal{P}}
\def \v {\mathbf{v}}
\def \c {\mathbf{c}}
\def \X {\mathcal{X}}
\def \fhv {\widehat{\f}}
\def \S {\mathcal{S}}
\def \eh {\widehat{\epsilon}}
\def \z {\mathbf{z}}
\def \t {\mathbf{t}}
\def \ph {\widehat{\phi}}
\def \lh {\widehat{\lambda}}
\def \gt {\widetilde{g}}
\def \et {\widetilde{\varepsilon}}
\def \zt {\widetilde{\z}}
\def \Zt {\widetilde{Z}}
\def \gammat {\widetilde{\gamma}}
\def \etat {\widehat{\eta}}
\def \ab {\overline{\alpha}}
\def \Mh {\widehat{M}}
\def \veh {\widehat{\varepsilon}}
\def \gammah {\widehat{\gamma}}
\def \etah {\widehat{\eta}}
\def \xh {\widehat{\x}}
\def \Kh {\widehat{K}}
\def \rh {\widehat{r}}
\def \vet {\widetilde{\varepsilon}}
\def \hh {\widehat{h}}
\def \htt {\widetilde{h}}
\def \Er {\mathcal{E}}
\def \u {\mathbf{u}}
\def \Eh {\widehat{\Er}}
\def \v {\mathbf{v}}
\def \w {\mathbf{w}}
\def \Hb {\overline{\H}}
\def \DDh {\widehat{\D}}
\def \B {\mathcal{B}}
\def \Phl {\overline{\Phi}}
\def \R {\mathbb{R}}
\def \Kt {\widetilde{K}}
\def \G {\mathcal{G}}
\def \kt {\widetilde{\kappa}}
\def \b {\mathbf{b}}
\def \zt {\widetilde{\z}}
\def \h {\mathbf{h}}
\def \vt {\widetilde{\v}}
\def \Vt {\widetilde{V}}
\def \A {\mathcal{A}}
\def \B {\mathcal{B}}
\def \Hh {\widehat{H}}
\def \Ht {\widetilde{H}}
\def \alt {\widetilde{\alpha}}
\def \at {\widetilde{\a}}
\def \bt {\widetilde{\b}}
\def \hht {\widetilde{\h}}
\def \Dt {\widetilde{D}}
\def \Zt {\widetilde{Z}}
\def \lt {\widetilde{\lambda}}
\def \rb {\bar{r}}
\def \Dh {\widehat{D}}
\def \tr {\mbox{tr}}
\def \Ah {\widehat{A}}
\def \r {\mathcal{R}}
\def \Zh {\widehat{Z}}
\def \m {\mathbf{m}}
\def \Ub {\bar{U}}
\def \Vb {\bar{V}}
\def \Pb {\bar{\P}}
\def \e {\mathbf{e}}
\def \rank {\mbox{rank}}
\def \F {\mathcal{F}}
\def \Uh {\widehat{U}}
\def \uh {\widehat{\u}}
\def \Vh {\widehat{V}}
\def \vh {\widehat{\v}}
\def \sigmah {\widehat{\sigma}}
\def \mh {\widehat{\m}}
\def \m {\mathbf{m}}
\def \xt {\widetilde{\x}}
\def \Bh {\widehat{B}}
\def \Mt {\widetilde{M}}
\def \d {\mathbf{d}}
\def \vec {\mbox{vec}}
\def \k {\mathbf{k}}
\def \At {\widetilde{A}}
\def \Bt {\widetilde{B}}
\def \N {\mathcal{N}}
\def \wh {\widehat{\w}}
\def \ah {\widehat{\alpha}}
\def \wt {\widetilde{\w}}
\def \Gh {\widehat{G}}
\def \rb {\bar{r}}
\def \etah {\widehat{\eta}}
\def \s {\mathbf{s}}
\def \Xh {\widehat{X}}
\def \Ph {\widehat{P}}
\def \lb {\L}
\def \Lh {\widehat{\L}}

\def \K {\mathcal{K}}
\def \clip {\mbox{clip}}
\def \nt {\widetilde{\nabla}}
\def \Xt {\widetilde{X}}
\def \lh {\widehat{\ell}}
\def \eh {\widehat{\ell}}

\def \F {\mathcal{F}}

\def \wb {\bar{\w}}
\def \g {\mathbf{g}}
\def \gt {\tilde{\g}}
\def \gh {\widehat{g}}
\def \Fh {\widehat{\F}}
\def \O {\mathcal{O}}

In the preceding chapter,  we presented a new paradigm for stochastic optimization that allowed us to leverage the smoothness of objective function to devise faster algorithms.  In this chapter of thesis, we continue our study of efficient  optimization algorithms in mixed optimization regime and show that we may leverage the smoothness assumption of loss functions to devise algorithms with iteration complexities  that are independent of condition number in accessing the  full gradient oracle.  To motivate the setting considered in this chapter,  consider the optimization of  smooth and strongly convex functions where the optimal iteration complexity of the gradient-based algorithm is $O(\sqrt{\kappa}\log 1/\epsilon)$, where $\kappa$ is the condition number of the objective function to be optimized. In the case that the optimization problem is ill-conditioned, we need to evaluate a larger number of full gradients, which could be computationally expensive despite the linear convergence rate of the algorithm in terms of the target accuracy $\epsilon$. 

In this chapter, we propose to reduce the number of full gradients required by allowing the algorithm to access the stochastic gradients of the objective function. To this end, we present an  algorithm named Epoch Mixed Gradient Descent (EMGD) that is able to utilize two kinds of gradients similar to Chapter~\ref{chap:mixed}.  Similar to the  MixedGrad algorithm  a distinctive step in EMGD is the mixed gradient descent, where we use a combination of the full gradient and the stochastic gradient to update the intermediate solutions. By performing a fixed number of mixed gradient descents, we are able to improve the sub-optimality of the solution by a constant factor, and thus achieve a linear convergence rate. Theoretical analysis shows that EMGD is able to find an $\epsilon$-optimal solution by computing $O(\log 1/\epsilon)$ full gradients and $O(\kappa^2\log 1/\epsilon)$ stochastic gradients. We also provide experimental evidence complementing our theoretical results for classification problem on few medium-sized data sets.

\section{Introduction}
The optimal iteration complexities for some popular optimization methods considering different combinations of  characteristics of the objective function are shown in Table~\ref{sample-table}. We observe that when the objective function is smooth (and strongly convex), the convergence rate for full gradient descent is much faster than that for stochastic gradient descent. On the other hand, the evaluation of a stochastic gradient is usually significantly more efficient than that for a full gradient. Thus, replacing full gradients with stochastic gradients essentially trades the number of iterations with a low computational cost per iteration.

In this chapter, we consider the case when the objective function is both smooth and strongly convex, where the \textit{optimal} iteration complexity is $O(\sqrt{\kappa}\log\frac{1}{\epsilon})$ if the optimization method is first order and has access to the full gradients. For the optimization problems that are ill-conditioned, the condition number $\kappa$ can be very large, leading to many evaluations of full gradients, an operation that is computationally expensive for large data sets. To reduce the computational cost, we are interested in the possibility of making the number of full gradients required independent from $\kappa$. Although the $O(\sqrt{\kappa}\log\frac{1}{\epsilon})$ rate is in general not improvable for any first order method, we bypass this difficulty by allowing the algorithm to have access to both full and stochastic gradients. Our objective is to reduce the iteration complexity from $O(\sqrt{\kappa}\log\frac{1}{\epsilon})$ to $O(\log\frac{1}{\epsilon})$ by replacing most of the evaluations of full gradients with the evaluations of stochastic gradients. Under the assumption that stochastic gradients can be computed efficiently, this tradeoff could lead to a significant improvement in computational efficiency.

We propose an efficient algorithm, dubbed Epoch Mixed Gradient Descent (EMGD), which fits the mixed optimization regime introduced in Chapter~\ref{chap:mixed}. The proposed EMGD algorithm  divides the optimization process into a sequence of different epochs, an idea that is borrowed from the epoch gradient descent~\cite{hazan-2011-beyond}. In each epoch, the proposed algorithm performs \emph{mixed gradient descent} by evaluating \textit{one} full gradient and $O(\kappa^2)$ stochastic gradients. It achieves a constant reduction in the optimization error for every epoch, leading to a linear convergence rate. Our analysis shows that EMGD is able to find an $\epsilon$-optimal solution by computing $O(\log \frac{1}{\epsilon})$ full gradients and $O(\kappa^2\log \frac{1}{\epsilon})$ stochastic gradients. In other words, with the help of stochastic gradients, the number of full gradients required is reduced from $O(\sqrt{\kappa} \log \frac{1}{\epsilon})$ to $O(\log\frac{1}{\epsilon})$, independent from the condition number.

\begin{table}[t]
\begin{center}
\begin{tabular}{lccc}
 & Lipschitz continuous &  Smooth & Smooth \& Strongly Convex
\\ \hline \\ Full Gradient         & $O\left(\frac{1}{\epsilon^2}\right)$  &  $O\left(\frac{L}{\sqrt{\epsilon}}\right)$  & $O\left(\sqrt{\kappa} \log \frac{1}{\epsilon}\right) $ \\
Stochastic Gradient    & $O\left(\frac{1}{\epsilon^2}\right)$  & $O\left(\frac{1}{\epsilon^2}\right)$ &  $O\left(\frac{1}{\lambda \epsilon}\right)$ \\ \\\hline
\end{tabular}
\caption[The optimal iteration complexity of convex optimization]{The optimal iteration complexity of convex optimization. $L$ and $\lambda$ are the moduli of smoothness and strongly convexity, respectively. $\kappa=L/\lambda$ is the conditional number.}
\label{sample-table}

\end{center}
\end{table}

%%%%%%%%%%%%%%%%%%%%%%%%%%%%%%%%%%%%%%%%%%%%%%%%%%%
\section{The Epoch Mixed Gradient Descent Algorithm}
First, we recall from Chapter~\ref{chap-background} that we wish to solve the following optimization problem
\begin{eqnarray}
\begin{aligned}\label{eqn:8:sum}
\min_{\w\in \W} \mathcal{F}(\w) \quad \text{for} \quad \F(\w) = \E[{f}(\w, \xi)] = \int_{\Xi}^{}{{f}(\w, \xi)dP(\xi)},
\end{aligned}
\end{eqnarray}
where $\D$ is a convex domain, and $f(\w,\xi)$ is a convex function with respect to the first argument. An special setting which is more appropriate for machine learning tasks is the case  when the objective function can be written as a sum of finite number of convex functions, i.e.,
\begin{eqnarray}\label{eqn:sum-empirical}
\mathcal{F}(\w) = \frac{1}{n}\sum_{i=1}^{n}{f(\w, \xi_i)}.
\end{eqnarray}

For learning problems such as  classification and regression, each individual function in the summand can be considered as the  prediction loss on the $i$th training example $\xi_i = (\x_i, y_i)$ for a fixed loss function, i.e., $f(\w, \xi_i) = \ell(\w;(\x_i,y_i))$. For simplicity of exposition, we absorb the randomness in the individual functions and use $f_i(\w)$ to denote the loss on random sample $\xi_i$ instead of $f(\w, \xi_i)$.  We note that although the formulation in~(\ref{eqn:sum-empirical}) seems attractive from a practical point of view, but the proposed algorithm is general enough to solve any stochastic optimization problem  formulated in~(\ref{eqn:8:sum}). As a result, in the  remainder of this chapter we base the randomness on sampling the individual  functions according to the unknown distribution defined over functions.

Similar to the setting introduced in Chapter~\ref{chap:mixed},  we assume there exist two oracles.
 \begin{enumerate}
  \item The first one is a full gradient oracle $\O_f$, which for a given input point $\w$ returns the gradient $\nabla \F(\w)$, that is,
  \[
  \O_f(\w)=\nabla \F(\w).
  \]
  \item The second one is a function oracle or stochastic oracle $\O_s$, each call of which returns a random function $f(\w)$, such that
    \[
  \F(\w) = \mathbb{E}_f [f(\w)], \forall \w \in \W,
  \]
  and $f(\w)$ has Lipschitz continuous gradients with constant $L$, that is,
 \begin{equation} \label{eqn:f:smooth}
  \| \nabla f(\w) - \nabla f(\w') \| \leq L \|\w - \w'\|, \ \forall \w, \w' \in \D.
\end{equation}
 \end{enumerate}
Although we do not define a stochastic gradient oracle directly, the function oracle $\O_s$ allows us to evaluate the stochastic gradient of $\F(\w)$ at any point $\w \in \D$.

Notice that the assumption about the function oracle $\O_s$ implies that the objective function $\F(\cdot)$ is also $L$-smooth. To see this, since $\nabla \F(\w) = \mathbb{E}_f[f(\w)]$, by Jensen's inequality we have:
\begin{equation} \label{eqn:F:smooth}
  \| \nabla \F(\w) - \nabla \F(\w') \| \leq \mathbb{E}_{f} \| \nabla f(\w)- \nabla f(\w') \| \leq L \|\w - \w'\|, \ \forall \w, \w' \in \D.
\end{equation}
Besides, we further assume $\F(\cdot)$ is $\lambda$-strongly convex, that is,
\begin{equation} \label{eqn:F:convex}
\| \nabla \F(\w) - \nabla \F(\w') \| \geq \lambda \|\w - \w'\|, \ \forall \w, \w' \in \D.
\end{equation}

From~(\ref{eqn:F:smooth}) and~(\ref{eqn:F:convex}) it is straightforward to see that $L \geq \lambda$. The condition number $\kappa$ is defined as the ratio between these two parameters, i.e., $\kappa = L/\lambda \geq 1$.

%\subsection{The Algorithm}
The detailed steps of the proposed Epoch Mixed Gradient Descent (EMGD) are shown in Algorithm~\ref{alg:emgd}, where we use the superscript for the index of epochs, and the subscript for the index of iterations in each epoch. Similar to the MixedGrad algorithm, we divided the optimization  process into a sequence of epochs (step~\ref{step:epoch:start} to step~\ref{step:epoch:end}). While in the MixedGrad algorithm the size of epochs increases exponentially  and the full gradient oracle is called at the beginning of each epoch, the size of epochs  and the number of access to the two types of  oracles in EMGD is fixed. 

At the beginning of each epoch, we initialize the solution $\w^k_1$ to be the average solution $\wb^k$ obtained from the last epoch, and then call the gradient oracle $\O_f$ to obtain $\nabla \F(\wb^k)$. At each iteration $t$ of epoch $k$, we call the function oracle $\O_s$ to obtain a random function $f^k_t(\w)$ and define the \emph{mixed gradient} at the current solution $\w^k_t$ as
\[
  \gt^k_t = \nabla \F(\wb^k) + \nabla f^k_t(\w^k_t) - \nabla f^k_t(\wb^k),
\]
which involves both the full gradient and the stochastic gradient. The mixed gradient can be divided into two parts: the deterministic part $\nabla \F(\wb^k)$ and the stochastic part $\nabla f^k_t(\w^k_t) - \nabla f^k_t(\wb^k)$. Due to the smoothness property of $f^k_t(\cdot)$, the norm of the stochastic part is well bounded, which facilitates the convergence analysis.

Based on the mixed gradient, we update $\w^k_t$ by a gradient mapping over a shrinking domain (i.e., $\D \cap \|\w - \wb^k\| \leq \Delta^k$) in step~\ref{step:epoch:update}. Since the updating is similar to the standard gradient descent except for the domain constraint, we refer to it as mixed gradient descent for short. At the end of the iterations for epoch $k$, we compute the average value of $T+1$ solutions, instead of $T$ solutions, and update the domain size by reducing a factor of $\sqrt{2}$.

\begin{algorithm}[t]
\caption{Epoch Mixed Gradient Descent (\texttt{EMGD}) Algorithm}
\begin{algorithmic}[1]
\STATE {\textbf{Input}}: 
\begin{mylist}
\item step size $\eta$
\item the initial domain size $\Delta^1$
\item  the number of iterations $T$ per epoch
\item the number of epochs $m$
\end{mylist}
\STATE \textbf{Initialize}: $\wb^1 = \mathbf{0}$
\FOR{$k = 1, \ldots, m$}
    \STATE Set $\w^k_1 = \wb^k$\label{step:epoch:start}
    \STATE Call the full gradient oracle $\O_f$ to obtain $\nabla \F(\wb^k)$  \label{step:epoch:grad}
    \FOR{$t = 1, \ldots, T$}
        \STATE Call the stochastic function oracle $\O_s$ to obtain a random function $f^k_t(\cdot)$
        \STATE Compute the mixed gradient as
        \[
            \gt^k_t = \nabla \F(\wb^k) + \nabla f^k_t(\w^k_t) - \nabla f^k_t(\wb^k)
        \]
        \STATE Update the solution by
        \[
            \w^k_{t+1} = \arg \min\limits_{\w \in \D \cap \|\w - \wb^k\| \leq \Delta^k} \eta\langle \w - \w_t^k, \gt^k_t \rangle + \frac{1}{2}\|\w - \w^k_t\|^2
        \]  \label{step:epoch:update}
    \ENDFOR
    \STATE Set $\wb^{k+1} = \frac{1}{T+1} \sum_{t=1}^{T+1} \w^k_t$ and $\Delta^{k+1} = \Delta^k/ \sqrt{2}$ \label{step:epoch:end}
\ENDFOR
\end{algorithmic} \label{alg:emgd}
{\bf Return} $\wb^{m+1}$
\end{algorithm}

The following theorem shows the convergence rate of the proposed algorithm.
\begin{theorem} \label{thm:1emgd}
Assume
\begin{equation} \label{eqn:thm:ass}
\delta \leq e^{-1/2}, \ T \geq \frac{1152 L^2}{\lambda^2}\ln\frac{1}{\delta}, \textrm{ and } \Delta^1 \geq \max \left(\sqrt{\frac{2}{\lambda} (\F(\mathbf{0})-\F(\w_*))},   \|\w_* \|\right).
\end{equation}
Set $\eta = 1/[L \sqrt{T}]$. Let $\wb^{m+1}$ be the solution returned by Algorithm~\ref{alg:emgd} after $m$ epoches that has $m$ access to oracle $\O_f$ and $mT$ access to oracle $\O_s$. Then, with a probability at least $1 - m \delta$, we have
\[
\F(\wb^{m+1})-\F(\w_*) \leq  \frac{\lambda [\Delta^1]^2}{2^{m+1}}   , \textrm{ and } \|\wb^{m+1} - \w_*\|^2 \leq   \frac{[\Delta^1]^2}{2^m}.
\]
\end{theorem}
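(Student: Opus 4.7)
The plan is to prove the theorem by induction on the epoch index $k$, establishing that at the start of each epoch both a distance bound $\|\wb^{k} - \w_*\|^2 \leq [\Delta^1]^2/2^{k-1} = [\Delta^k]^2$ and a function-value bound $\F(\wb^{k}) - \F(\w_*) \leq \lambda [\Delta^1]^2/2^{k} = \lambda[\Delta^k]^2/2$ hold simultaneously with probability at least $1 - (k-1)\delta$. The base case $k=1$ is immediate from the assumptions on $\Delta^1$ in~(\ref{eqn:thm:ass}): since $\wb^1 = \mathbf{0}$, we have $\|\wb^1 - \w_*\| = \|\w_*\| \leq \Delta^1$ and $\F(\wb^1) - \F(\w_*) \leq \lambda [\Delta^1]^2/2$. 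The inductive step will reduce to a per-epoch lemma: conditioned on the inductive hypothesis, with probability at least $1-\delta$, $\F(\wb^{k+1}) - \F(\w_*) \leq \lambda[\Delta^k]^2/4$, which by $\lambda$-strong convexity immediately yields $\|\wb^{k+1} - \w_*\|^2 \leq [\Delta^{k+1}]^2$. A union bound over the $m$ epochs gives the claimed $1 - m\delta$ confidence.

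The per-epoch analysis will mirror the standard template for projected stochastic gradient descent, but will crucially exploit two properties of the mixed gradient $\gt^k_t = \nabla\F(\wb^k) + \nabla f^k_t(\w^k_t) - \nabla f^k_t(\wb^k)$. First, $\gt^k_t$ is an unbiased estimator of $\nabla\F(\w^k_t)$, since $\E[\nabla f^k_t(\w^k_t) - \nabla f^k_t(\wb^k)] = \nabla\F(\w^k_t) - \nabla\F(\wb^k)$. Second, the smoothness condition~(\ref{eqn:f:smooth}) together with the shrinking-domain constraint $\|\w^k_t - \wb^k\| \leq \Delta^k$ enforced in step~\ref{step:epoch:update} controls the stochastic part by $\|\nabla f^k_t(\w^k_t) - \nabla f^k_t(\wb^k)\| \leq L\Delta^k$. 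This is the central variance-reduction mechanism: both the norm and the conditional second moment of the noise in $\gt^k_t$ scale with $[\Delta^k]^2$, so that the variance decays geometrically across epochs at the same rate as the sub-optimality, which is precisely what enables a linear convergence rate.

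With these two facts, the standard gradient-mapping inequality for the update in step~\ref{step:epoch:update} gives, for every feasible $\w$ (in particular for $\w = \w_*$, which lies in $\D \cap \{\|\w - \wb^k\| \leq \Delta^k\}$ by the inductive hypothesis),
\[
\eta\langle \gt^k_t, \w^k_t - \w_*\rangle \leq \tfrac{1}{2}\bigl(\|\w^k_t - \w_*\|^2 - \|\w^k_{t+1} - \w_*\|^2\bigr) + \tfrac{\eta^2}{2}\|\gt^k_t\|^2.
\]
Summing over $t=1,\ldots,T$, invoking convexity of $\F$, and splitting $\langle \gt^k_t, \w^k_t - \w_*\rangle = \langle \nabla\F(\w^k_t), \w^k_t - \w_*\rangle + \langle \gt^k_t - \nabla\F(\w^k_t), \w^k_t - \w_*\rangle$ will give a bound on $\sum_t (\F(\w^k_t) - \F(\w_*))$ consisting of (i) a deterministic optimization term of order $[\Delta^k]^2/\eta$, (ii) a gradient-norm term of order $\eta T L^2 [\Delta^k]^2$ (using $\|\gt^k_t\| \leq \|\nabla\F(\wb^k)\| + L\Delta^k \lesssim L\Delta^k$ via $\lambda$-strong convexity applied to the inductive distance bound), and (iii) a martingale deviation term. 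After dividing by $T+1$ and applying Jensen's inequality to $\wb^{k+1}$, the choice $\eta = 1/(L\sqrt{T})$ balances the first two deterministic contributions to $O(L[\Delta^k]^2/\sqrt{T})$.

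The main obstacle will be controlling the martingale sum $\sum_t \langle \gt^k_t - \nabla\F(\w^k_t), \w^k_t - \w_*\rangle$ with high probability. I will apply the Bernstein inequality for martingales (Theorem~\ref{theorem:bernsteinB} in the appendix); each summand is bounded in magnitude by $2L[\Delta^k]^2$ and has conditional variance at most $L^2[\Delta^k]^4$, which will yield a deviation of order $L[\Delta^k]^2\bigl(\ln(1/\delta) + \sqrt{T\ln(1/\delta)}\bigr)$. Combining all three contributions, the per-iteration sub-optimality is bounded, with probability at least $1-\delta$, by a constant multiple of $L[\Delta^k]^2\sqrt{\ln(1/\delta)/T}$. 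The precise constant in the condition $T \geq 1152 L^2/\lambda^2 \ln(1/\delta)$ is chosen so that each of the three terms is at most $\lambda[\Delta^k]^2/12$, giving the required per-epoch reduction $\F(\wb^{k+1}) - \F(\w_*) \leq \lambda[\Delta^k]^2/4$. Getting the constants tight enough that the argument closes (i.e., that the bound after one epoch exactly matches the inductive hypothesis for the next epoch, rather than degrading) is the most delicate bookkeeping in the proof; any looseness either inflates $T$ beyond $O(\kappa^2\ln(1/\delta))$ or forces additional full-gradient calls per epoch, breaking the target complexity.
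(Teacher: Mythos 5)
Your overall architecture matches the paper's proof almost exactly: induction over epochs with the coupled hypotheses on $\|\wb^k-\w_*\|^2$ and $\F(\wb^k)-\F(\w_*)$, the observation that the mixed gradient is an unbiased estimate of $\nabla \F(\w^k_t)$ whose noise is controlled by $L\Delta^k$ through the smoothness of the sampled functions and the shrinking ball, a per-epoch regret-style bound, and a martingale concentration step to close the induction with probability $1-\delta$ per epoch. (The paper uses the Hoeffding--Azuma inequality rather than Bernstein for the martingale $\sum_t\langle \gt^k_t-\nabla\F(\w^k_t),\w^k_t-\w_*\rangle$; since each increment is already bounded by $4L[\Delta^k]^2$, either inequality yields the required $O\bigl(L[\Delta^k]^2\sqrt{T\ln(1/\delta)}\bigr)$ deviation, so this is not a substantive difference.)

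The one step that fails as written is your treatment of the gradient-norm term: you bound $\|\gt^k_t\|\leq\|\nabla\F(\wb^k)\|+L\Delta^k\lesssim L\Delta^k$ ``via $\lambda$-strong convexity.'' Strong convexity only gives a \emph{lower} bound $\|\nabla\F(\wb^k)\|\geq\lambda\|\wb^k-\w_*\|$; the upper bound you want would follow from smoothness together with $\nabla\F(\w_*)=\mathbf{0}$, which holds only when the minimizer is unconstrained. The theorem is stated over a general convex domain $\D$, where $\nabla\F(\w_*)$ can be a nonzero vector of constant size, so $\|\nabla\F(\wb^k)\|$ does not shrink with $\Delta^k$; feeding it into $\frac{\eta}{2}\sum_t\|\gt^k_t\|^2$ leaves a contribution that, after averaging, does not decay geometrically across epochs and destroys the linear rate. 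The paper's remedy is to never square the deterministic component: writing $\gt^k_t=\g+\nabla g_t(\w^k_t)$ with $\g=\nabla\F(\wb^k)$ and $g_t(\w)=f_t(\w)-\langle\w,\nabla f_t(\wb^k)\rangle$, only the stochastic part $\nabla g_t(\w^k_t)$ (bounded by $L\Delta^k$) enters the $\frac{\eta}{2}\|\cdot\|^2$ penalty in the gradient-mapping inequality, while the contribution of $\g$ telescopes over the epoch to $\langle\g,\wb^k-\w^k_{T+1}\rangle$, which is then bounded by $\F(\w_*)-\F(\w^k_{T+1})+L[\Delta^k]^2$ using the $L$-smoothness of $\F$ and the inductive hypothesis $\F(\wb^k)-\F(\w_*)\leq\lambda[\Delta^k]^2/2$. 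With that repair, the remainder of your bookkeeping goes through essentially as you describe.
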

Theorem~\ref{thm:1emgd} immediately implies that EMGD is able to achieve an $\epsilon$ optimization error by computing $O(\log\frac{1}{\epsilon})$ full gradients and $O(\kappa^2\log\frac{1}{\epsilon})$ stochastic gradients.

\section{Analysis of Convergence Rate}
The proof of Theorem~\ref{thm:1emgd} is based on induction. From the assumption about $\Delta^1$ in (\ref{eqn:thm:ass}), we have
\[
\F(\wb^{1})-\F(\w_*) \leq  \frac{\lambda [\Delta^1]^2}{2}   , \textrm{ and } \|\wb^{1} - \w_*\|^2 \leq [\Delta^1]^2,
\]
which means Theorem~\ref{thm:1emgd} is true for $m=0$. Suppose Theorem~\ref{thm:1emgd} is true for $m=k$. That is, with a probability at least $1 - k \delta$, we have
\[
\F(\wb^{k+1})-\F(\w_*) \leq  \frac{\lambda [\Delta^1]^2}{2^{k+1}}   , \textrm{ and } \|\wb^{k+1} - \w_*\|^2 \leq   \frac{[\Delta^1]^2}{2^k}.
\]
Our goal is to show that after running the $k{+}1$-th epoch, with a probability at least $1 - (k+1)\delta$, we have
\[
\F(\wb^{k+2})-\F(\w_*) \leq  \frac{\lambda [\Delta^1]^2}{2^{k+2}}   , \textrm{ and } \|\wb^{k+2} - \w_*\|^2 \leq   \frac{[\Delta^1]^2}{2^{k+1}}.
\]
For the simplicity of presentation, we drop the index $k$ for epoch. Let $\wb$ be the solution obtained from the epoch $k$. Given the condition \begin{equation}\label{eqn:input}
\F(\wb) -\F(\w_*) \leq \frac{\lambda}{2} \Delta^2, \textrm{ and }  \| \wb - \w_*\|^2 \leq \Delta^2,
\end{equation}
we will show that that after running the $T$ iterations in one epoch, the new solution, denoted by $\wh$, satisfies
\begin{equation}\label{eqn:output}
\F(\wh) -\F(\w_*) \leq \frac{\lambda}{4} \Delta^2, \textrm{ and }  \|\wh - \wb\|^2 \leq \frac{1}{2}\Delta^2,
\end{equation}
with a probability at least $1 - \delta$.

In the proof, we frequently use the following property of strongly convex function (see Appendix~\ref{chap:appendix-convex}).
\begin{lemma} Let $\F(\w)$ be a $\lambda$-strongly convex function over the domain $\W$, and $\w_*= \arg \min_{\w \in \W} \F(\w)$. Then, for any $\w \in \W$, we have
\begin{equation} \label{eqn:lem:1}
\F(\w)-\F(\w_*) \geq \frac{\lambda}{2} \|\w-\w_*\|^2.
\end{equation}
\end{lemma}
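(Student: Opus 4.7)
The plan is to derive this as a direct consequence of the standard first-order characterization of strong convexity together with the optimality condition at $\w_*$. Since $\F$ is $\lambda$-strongly convex on $\W$, I would first invoke the inequality
\[
\F(\w) \;\geq\; \F(\w_*) + \langle \nabla \F(\w_*),\, \w - \w_*\rangle + \frac{\lambda}{2}\|\w - \w_*\|^2
\]
for every $\w \in \W$. This follows from the strong convexity definition recalled in Chapter~\ref{chap-background} (and Appendix~\ref{chap:appendix-convex}); alternatively, starting from the definition used in this chapter, namely $\|\nabla\F(\w)-\nabla\F(\w')\|\ge \lambda\|\w-\w'\|$, one can recover the same inequality by integrating the gradient along the segment from $\w_*$ to $\w$ (this is a standard and routine calculation that I would only invoke, not grind through).

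Next I would use the fact that $\w_* = \arg\min_{\w\in\W}\F(\w)$, which by the first-order optimality condition for constrained convex minimization gives
\[
\langle \nabla \F(\w_*),\, \w - \w_*\rangle \;\geq\; 0 \qquad \text{for all } \w\in\W.
\]
(If $\w_*$ lies in the interior of $\W$, then $\nabla\F(\w_*)=0$ and the inner product vanishes; otherwise, the variational inequality above still holds because $\w-\w_*$ is a feasible direction.) Substituting this lower bound into the strong convexity inequality kills the linear term and yields
\[
\F(\w) - \F(\w_*) \;\geq\; \frac{\lambda}{2}\|\w-\w_*\|^2,
\]
which is exactly the desired statement.

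There is no real obstacle here: the only ingredient beyond standard convex analysis facts is the reconciliation between the two equivalent presentations of strong convexity appearing in this chapter and in Chapter~\ref{chap-background}, and that equivalence is classical. The lemma will then be applied in the induction step of the proof of Theorem~\ref{thm:1emgd} to translate a bound on $\F(\wh)-\F(\w_*)$ into a bound on $\|\wh - \w_*\|^2$, which is why it is stated in this explicit form.
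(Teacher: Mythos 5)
Your proof is correct and is exactly the standard argument: the first-order strong-convexity inequality at $\w_*$ combined with the variational inequality $\langle \nabla \F(\w_*), \w-\w_*\rangle \ge 0$ from constrained optimality. The paper itself never writes out a proof of this lemma (it defers to Appendix~\ref{chap:appendix-convex}, where the statement is simply repeated), so your argument supplies precisely the routine derivation the paper implicitly relies on; the only point to be careful about is your parenthetical claim that the first-order inequality can be recovered from the condition $\|\nabla\F(\w)-\nabla\F(\w')\|\ge\lambda\|\w-\w'\|$ by integration alone, which requires an extra step (passing to the Hessian and using its symmetry, or simply using the paper's primary definition of strong convexity, as you do).
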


Define
\begin{equation}\label{eqn:proof:definition}
\g = \nabla \F(\wb), \   \Fh(\w) =\F(\w)- \langle \w, \g \rangle, \textrm{ and } g_t(\w)=f_t(\w)-\langle \w, \nabla f_t(\wb) \rangle.
\end{equation}
The objective function can be rewritten as
\begin{equation} \label{eqn:newF}
\F(\w) =  \langle \w, \g \rangle + \Fh(\w).
\end{equation}
And the mixed gradient can be rewritten as
\[
 \gt^k= \g + \nabla g_t (\w_t).
\]
Then, the updating rule given in Algorithm~\ref{alg:emgd} becomes
\begin{equation} \label{eqn:update}
\w_{t+1} = \arg\min \limits_{\w \in \D \cap \|\w - \wb\| \leq \Delta} \eta \langle \w - \w_{t}, \g + \nabla g_t (\w_t) \rangle + \frac{1}{2}\|\w - \w_t\|^2.
\end{equation}

For each iteration $t$ in the current epoch, we have
\begin{equation} \label{eqn:proof:1}
\begin{aligned}
& \F(\w_t) - \F(\w_*) \\
\overset{\text{(\ref{eqn:F:convex})}}{\leq} &  \langle \nabla \F(\w_t), \w_t - \w_* \rangle - \frac{\lambda}{2}\|\w_t - \w_*\|^2\\
\overset{\text{(\ref{eqn:newF})}}{=} & \langle \g + \nabla g_t (\w_t), \w_t - \w_* \rangle + \left\langle \nabla \Fh(\w_t) -\nabla g_t(\w_t) , \w_t - \w_* \right\rangle - \frac{\lambda}{2}\|\w_t - \w_*\|^2,
\end{aligned}
\end{equation}
and
\begin{equation}\label{eqn:proof:2}
\begin{aligned}
  & \langle \g + \nabla g_t (\w_t), \w_t - \w_* \rangle \\
= & \langle \g + \nabla g_t (\w_t), \w_t - \w_* \rangle - \frac{\|\w_t - \w_*\|^2}{2\eta} + \frac{\|\w_t - \w_*\|^2}{2\eta}  \\
 \overset{\text{(\ref{eqn:update}), (\ref{eqn:proof:1})}}{\leq} & \langle \g + \nabla g_t (\w_t), \w_t - \w_{t+1} \rangle - \frac{\|\w_t-\w_{t+1}\|^2}{2\eta}- \frac{\|\w_{t+1}-\w_*\|^2}{2\eta}+ \frac{\|\w_t - \w_*\|^2}{2\eta} \\
 \leq & \langle \g, \w_t - \w_{t+1} \rangle  + \frac{\|\w_t - \w_*\|^2}{2\eta} -\frac{\|\w_{t+1}-\w_*\|^2}{2\eta}  \\
& \hspace{0.5cm}+  \max_{\w} \left(\langle \nabla g_t (\w_t), \w_t - \w \rangle - \frac{\|\w_t - \w\|^2}{2\eta}\right) \\
 = & \langle \g, \w_t - \w_{t+1} \rangle + \frac{\|\w_t - \w_*\|^2}{2\eta} -\frac{\|\w_{t+1}-\w_*\|^2}{2\eta}+ \frac{\eta}{2} \|\nabla g_t (\w_t)\|^2.
\end{aligned}
\end{equation}
Combining (\ref{eqn:proof:1}) and (\ref{eqn:proof:2}), we have
\[
\begin{aligned}
& \F(\w_t) - \F(\w_*) \\
\leq & \frac{\|\w_t - \w_*\|^2}{2\eta} - \frac{\|\w_{t+1} - \w_*\|^2}{2\eta} - \frac{\lambda}{2}\|\w_t - \w_*\|^2 \\
      & + \langle \g, \w_t - \w_{t+1} \rangle + \frac{\eta}{2}\|\nabla  g_t (\w_t)\|^2 + \left\langle \nabla \Fh(\w_t) -\nabla g_t(\w_t), \w_t - \w_* \right\rangle.
\end{aligned}
\]
By adding the inequalities of all iterations, we have
\begin{equation} \label{eqn:proof:3}
\begin{aligned}
& \sum_{t=1}^T \F(\w_t) - \F(\w_*) \\
\leq &  \frac{\|\wb - \w_*\|^2}{2\eta} - \frac{\|\w_{T+1} - \w_*\|^2}{2\eta} - \frac{\lambda}{2}\sum_{t=1}^T \|\w_t - \w_*\|^2 + \langle \g, \wb - \w_{T+1} \rangle \\
& + \frac{\eta}{2}\underbrace{\sum_{t=1}^T \|\nabla g_t(\w_t)\|^2}_{\triangleq A_T}  + \underbrace{\sum_{t=1}^T \langle \nabla \Fh(\w_t) -\nabla g_t(\w_t), \w_t - \w_* \rangle}_{\triangleq B_T}.
\end{aligned}
\end{equation}

Since $\F(\cdot)$ is $L$-smooth, we have
\[
\F(\w_{T+1}) - \F(\wb) \leq \langle \nabla \F(\wb), \w_{T+1} - \wb \rangle + \frac{L}{2}\|\wb - \w_{T+1}\|^2,
\]
which implies
\begin{equation} \label{eqn:proof:4}
\begin{aligned}
& \langle \g, \wb - \w_{T+1} \rangle \\
\leq  & \F(\wb) - \F(\w_{T+1}) + \frac{L}{2}\Delta^2 \\
\overset{\text{(\ref{eqn:input})}}{\leq} &  \F(\w_*) - \F(\w_{T+1}) + \frac{\lambda}{2} \Delta^2 + \frac{L}{2} \Delta^2 \\
\leq &  \F(\w_*) - \F(\w_{T+1}) + L \Delta^2.
\end{aligned}
\end{equation}
From (\ref{eqn:proof:3}) and (\ref{eqn:proof:4}), we have
\begin{equation} \label{eqn:proof:5}
\sum_{t=1}^{T+1} \F(\w_t) - \F(\w_*) \leq \Delta^2\left(\frac{1}{2\eta} + L\right)
 + \frac{\eta}{2} A_T + B_T.
\end{equation}

Next, we consider how to bound $A_T$ and $B_T$. The upper bound of $A_T$ is given by
\begin{equation} \label{eqn:proof:6}
 A_T =  \sum_{t=1}^T \|\nabla g_t(\w_t)\|^2 =  \sum_{t=1}^T \|\nabla f_t(\w_t) - \nabla f_t(\wb) \|^2
\overset{\text{(\ref{eqn:f:smooth})}}{\leq}   L^2 \sum_{t=1}^T \|\w_t -  \wb \|^2  \leq T L^2\Delta^2.
\end{equation}

To bound $B_T$, we need the Hoeffding-Azuma inequality which is stated in Theorem~\ref{thm:hoeffding-azumeB} for completeness. Define
\[
V_t = \langle \nabla \Fh(\w_t) -\nabla g_t(\w_t), \w_t - \w_* \rangle, \ t=1,\ldots,T.
\]
Recall the definition of $\Fh(\w)$ and $g_t(\w)$ in (\ref{eqn:proof:definition}). Based on our assumption about the function oracle $\O_s$, it is straightforward to check that $V_1, \ldots$ is a martingale difference with respect to $g_1,\ldots$. The value of $V_t$ can be bounded by

\begin{eqnarray*}
 |V_t| & \leq &  \left\| \nabla \Fh(\w_t) -\nabla g_t(\w_t) \right\| \left \|\w_t - \w_* \right\| \\
&\leq & 2 \Delta \left( \left\| \nabla \F(\w_t)- \nabla \F(\wb)\right\| + \left\| \nabla f_t(\w_t)- \nabla f_t(\wb) \right \| \right) \\
&\overset{\text{(\ref{eqn:f:smooth}), (\ref{eqn:F:smooth})}}{\leq} & 4L \Delta \|\w_t-\wb\|  \leq 4 L \Delta^2.
\end{eqnarray*}

Following Theorem~\ref{thm:hoeffding-azumeB}, with a probability at least $1-\delta$, we have
\begin{equation} \label{eqn:proof:7}
B_T \leq 4 L \Delta^2 \sqrt{2 T \ln \frac{1}{\delta}}.
\end{equation}

By adding the inequalities in (\ref{eqn:proof:5}), (\ref{eqn:proof:6}) and (\ref{eqn:proof:7}) together, with a probability at least $1 - \delta$,  we have
\begin{eqnarray*}
\sum_{t=1}^{T+1} \F(\w_t) - \F(\w_*) \leq \Delta^2\left(\frac{1}{2\eta} + L + \frac{\eta T L^2}{2} +4L \sqrt{2 T \ln \frac{1}{\delta}} \right).
\end{eqnarray*}
By choosing $\eta = 1/[L\sqrt{T}]$, we have
\begin{eqnarray*}
\sum_{t=1}^{T+1} \F(\w_t) - \F(\w_*) \leq L \Delta^2\left(\sqrt{T}+1 +  4 \sqrt{2 T \log \frac{1}{\delta}}   \right) \leq 6 L \Delta^2 \sqrt{2 T \ln \frac{1}{\delta}}.
\end{eqnarray*}
and therefore
\[
\F(\wh) - \F(\w_*) \leq \Delta^2 \frac{6 L\sqrt{2\ln 1/\delta }}{\sqrt{T+1}}, \textrm{ and } \|\wh - \w_*\|^2 \overset{\text{(\ref{eqn:lem:1})}}{\leq} \Delta^2\frac{12 L\sqrt{2\ln 1/\delta}}{\lambda\sqrt{T+1}}.
\]
Thus, when
\[
T \geq \frac{1152 L^2}{\lambda^2}\ln\frac{1}{\delta},
\]
with a probability at least $1 - \delta$, we have
\[
\F(\wh) - \F(\w_*) \leq  \frac{\lambda}{4} \Delta^2, \textrm{ and } \|\wh - \w_*\|^2  \leq \frac{1}{2}\Delta^2.
\]

%%%%%%%%%%%%%%%%%%%%%%%%%%%%%%%%%%%%%%%%%%%%%%%%%%%
\section{Experiments}
In this section, we provide  experimental evidence complementing  our theoretical results. In particular  we consider solving the regularized logistic regression problem formulated as:
\[
\min_{\w} \frac{1}{n} \sum_{i=1}^n  f_i(\w) + \frac{\lambda}{2} \|\w\|^2
\]
where
\[
f_i(\w)=\log \left(1+\exp(-y_i \dd{\w}{\x_i}) \right), \textrm{ and } \nabla f_i(\w)=\frac{-y_i}{1+\exp(y_i \dd{\w}{\x_i})} \x_i.
\]
We compare the EMGD algorithm to the SGD method. SGD start with solution $\w_1 = \mathbf{0}$ and at each iteration samples a random function  indexed by $i_k$ uniformly at random over all $n$ available functions and updates the solution by:
$$        \w_{t+1}= \w_t - \frac{1}{\lambda t} \left( \nabla  f_{i_k}(\w_t)+  \lambda \w_t \right) = \left (1-\frac{1}{t} \right) \w_t -\frac{1}{\lambda t}  \nabla  f_{i_k}(\w_t).$$

The variance of SGD method  at each iteration  can be computed as:
\begin{equation*}
\begin{aligned}
& \left \| \nabla  f_{i_k}(\w_t)  + \lambda \w_t - \left(\frac{1}{n} \sum_{i=1}^n  \nabla f_i(\w_t)   + \lambda \w_t \right) \right \|^2 = \E_{i_k} \left \| \nabla  f_{i_k}(\w_t) - \frac{1}{n} \sum_{i=1}^n  \nabla f_i(\w_t) \right \|^2 \\
=&  \E_{i_k}  \left \|\nabla  f_{i_k}(\w_t)\right \|^2  -\left \|\frac{1}{n} \sum_{i=1}^n  \nabla f_i(\w_t) \right \|^2 = \frac{1}{n}  \left \|\nabla  f_{i}(\w_t)\right \|^2-\left \|\frac{1}{n} \sum_{i=1}^n  \nabla f_i(\w_t) \right \|^2 \\
\end{aligned}
\end{equation*}
The variance of the mixed gradient in EMGD is given by
%\[
%\begin{split}
\begin{equation*}
\begin{aligned}
& \E_{i_k} \left \| \left(\nabla  f_{i_k}(\w_t) + \lambda \w_t \right)  -  \left(\nabla f_{i_k}(\wb) +\lambda \wb \right) + \left( \frac{1}{n} \sum_{i=1}^n  \nabla f_i(\wb)  - \lambda \wb \right)- \left(\frac{1}{n} \sum_{i=1}^n  \nabla f_i(\w_t)   + \lambda \w_t \right)  \right \|^2 \\
= & \E_{i_k} \left \| \nabla  f_{i_k}(\w_t)  - \nabla f_{i_k}(\wb)  -\left(\frac{1}{n} \sum_{i=1}^n  \nabla f_i(\wb) - \frac{1}{n} \sum_{i=1}^n  \nabla f_i(\w_t)  \right)  \right \|^2 \\
= & \frac{1}{n}  \left \|\nabla  f_{i}(\w_t) - \nabla  f_{i}(\wb) \right \|^2-\left \|\frac{1}{n} \sum_{i=1}^n  \nabla f_i(\w_t) -\frac{1}{n} \sum_{i=1}^n  \nabla f_i(\wb) \right \|^2.\\
\end{aligned}
\end{equation*}

%\end{split}
%\]
We run both algorithms on two well-known Adult and RCV1 data sets.  The data sets used  in	this	  experiment	  were	  sourced	  from	  the	  University	  of	  California	  Irvine	 (UCI) Machine Learning Repository~\cite{Bache+Lichman:2013}	  and	 are referred to as the Adult and RCV1 data sets. The Adult data set	  contain	  information on individuals such as age, level of education and current employment type. The Adult data set contains over 30,000 records of census information taken in 1994 from many diverse demographics. The RCV1 is an archive of over 800,000 manually categorized newswire stories recently made available by Reuters, Ltd. for research purposes. For EMGD, we set the number of stochastic gradient in each epoch to $n$, which is the number of training data.

\begin{figure*}
  \centering
  \subfloat[The training error]{
    \label{fig:subfig1:a} %% label for first subfigure
\includegraphics[width=.48\textwidth]{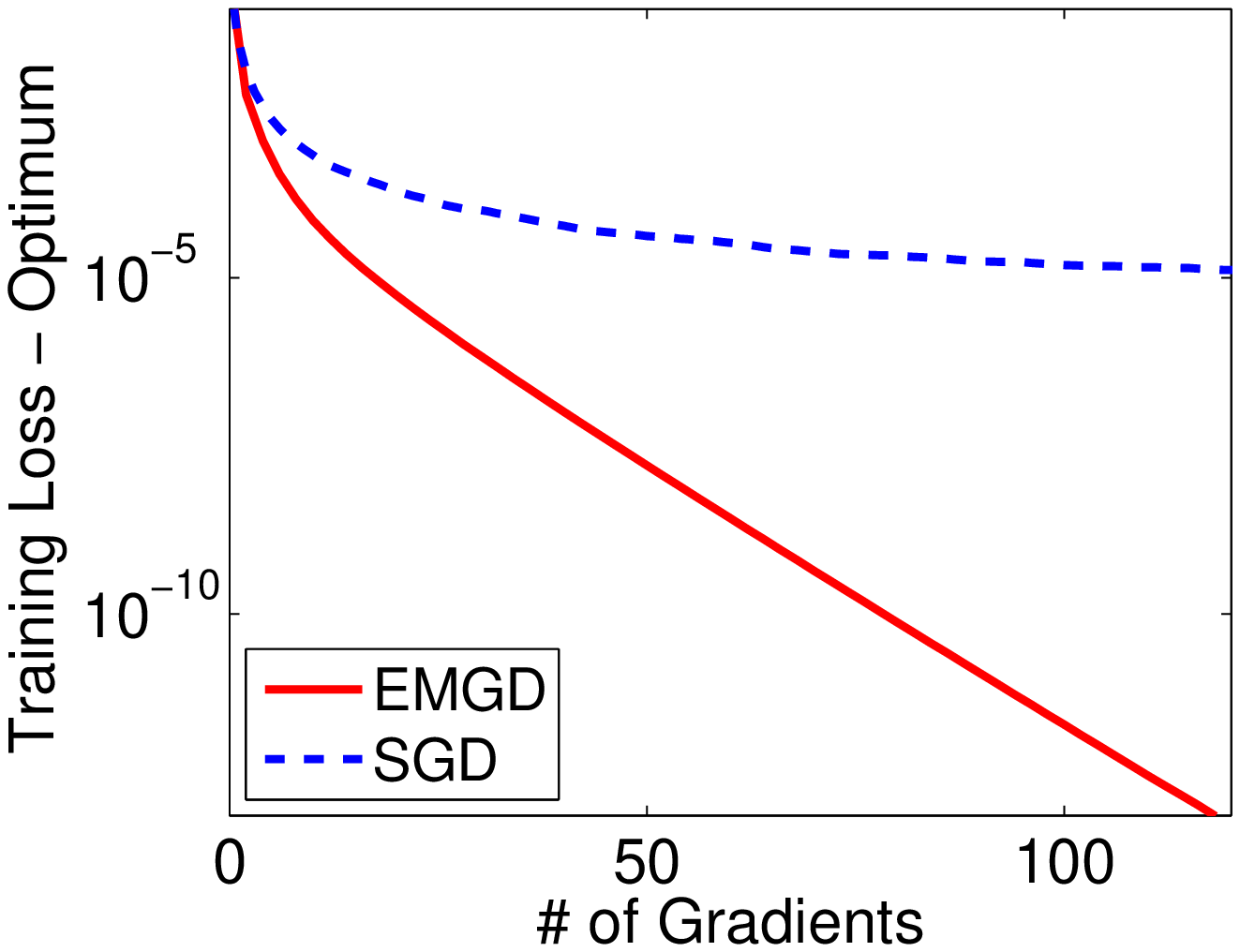}}
  \hspace{1ex}
  \subfloat[The variance of the (stochastic or mixed) gradient]{
    \label{fig:subfig1:b} %% label for second subfigure
\includegraphics[width=.48\textwidth]{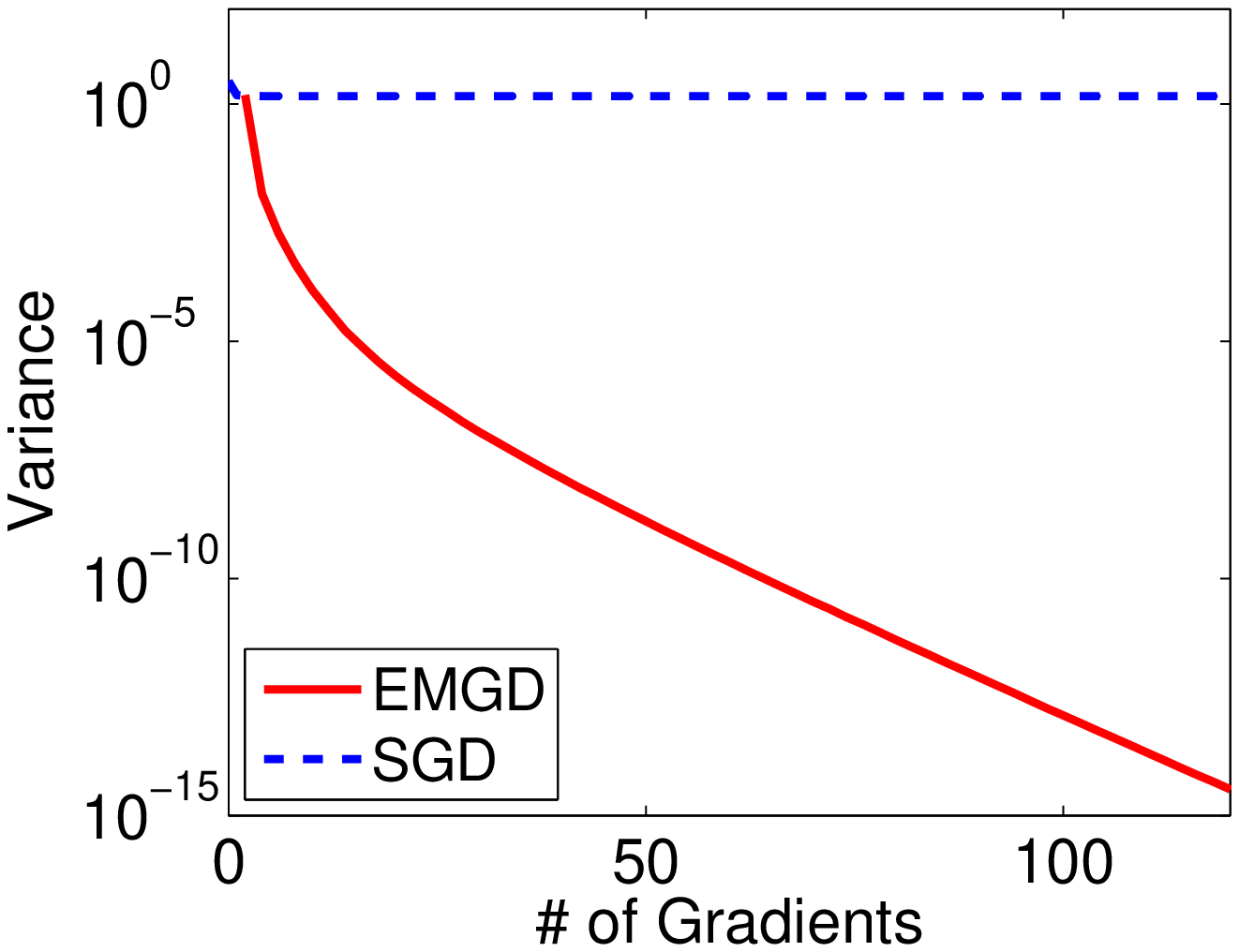}}
  \caption[Experimental results on  Adult data set]{Experimental results on the Adult data set. $\lambda=1e{-}3$ and $\eta=0.01$ in EMGD.}
  \label{fig:subfig1} %% label for entire figure
  \centering
  \subfloat[The training error]{
    \label{fig:subfig2:a} %% label for first subfigure
\includegraphics[width=.48\textwidth]{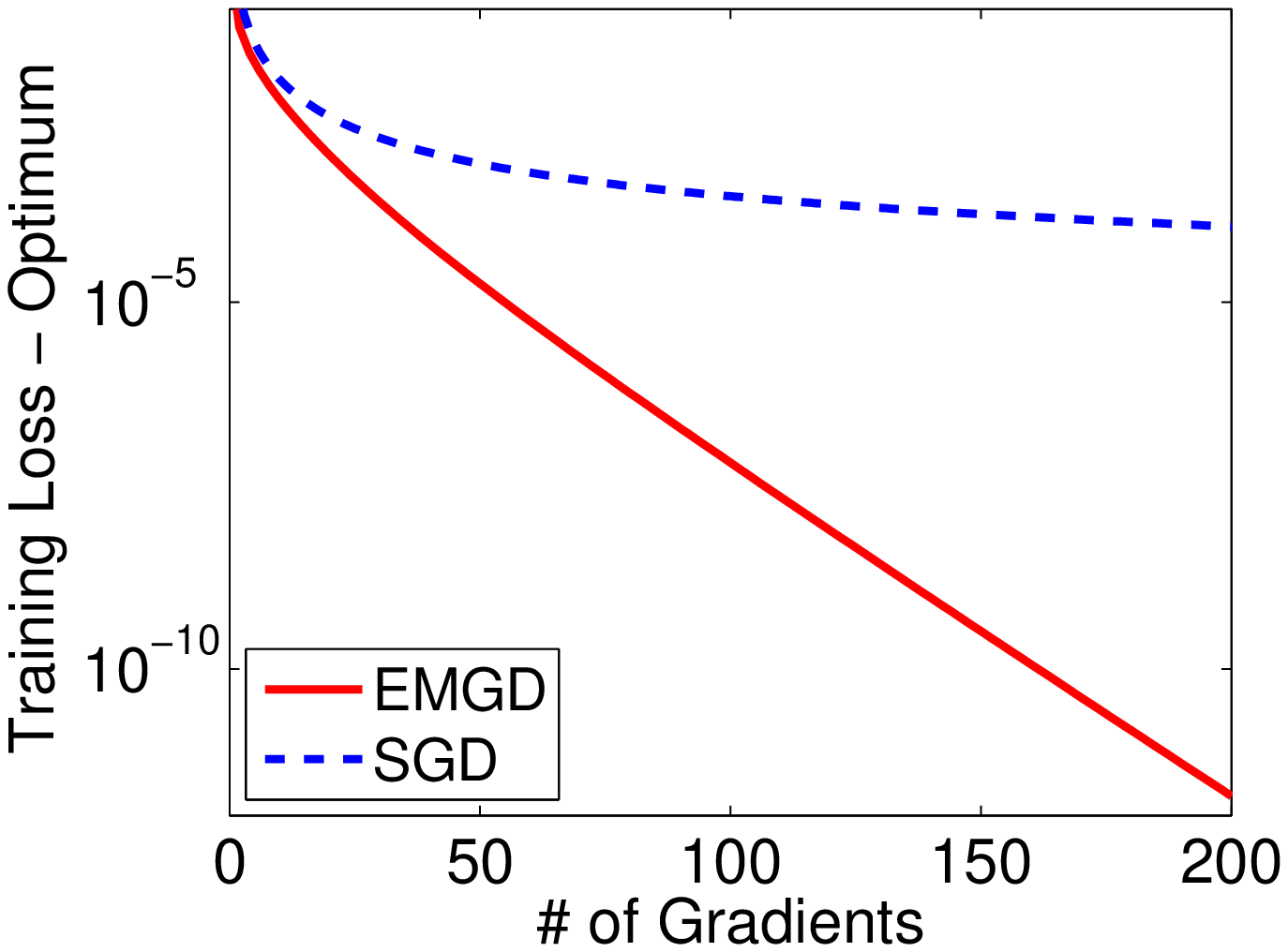}}
  \hspace{1ex}
  \subfloat[The variance of the (stochastic or mixed) gradient]{
    \label{fig:subfig2:b} %% label for second subfigure
\includegraphics[width=.48\textwidth]{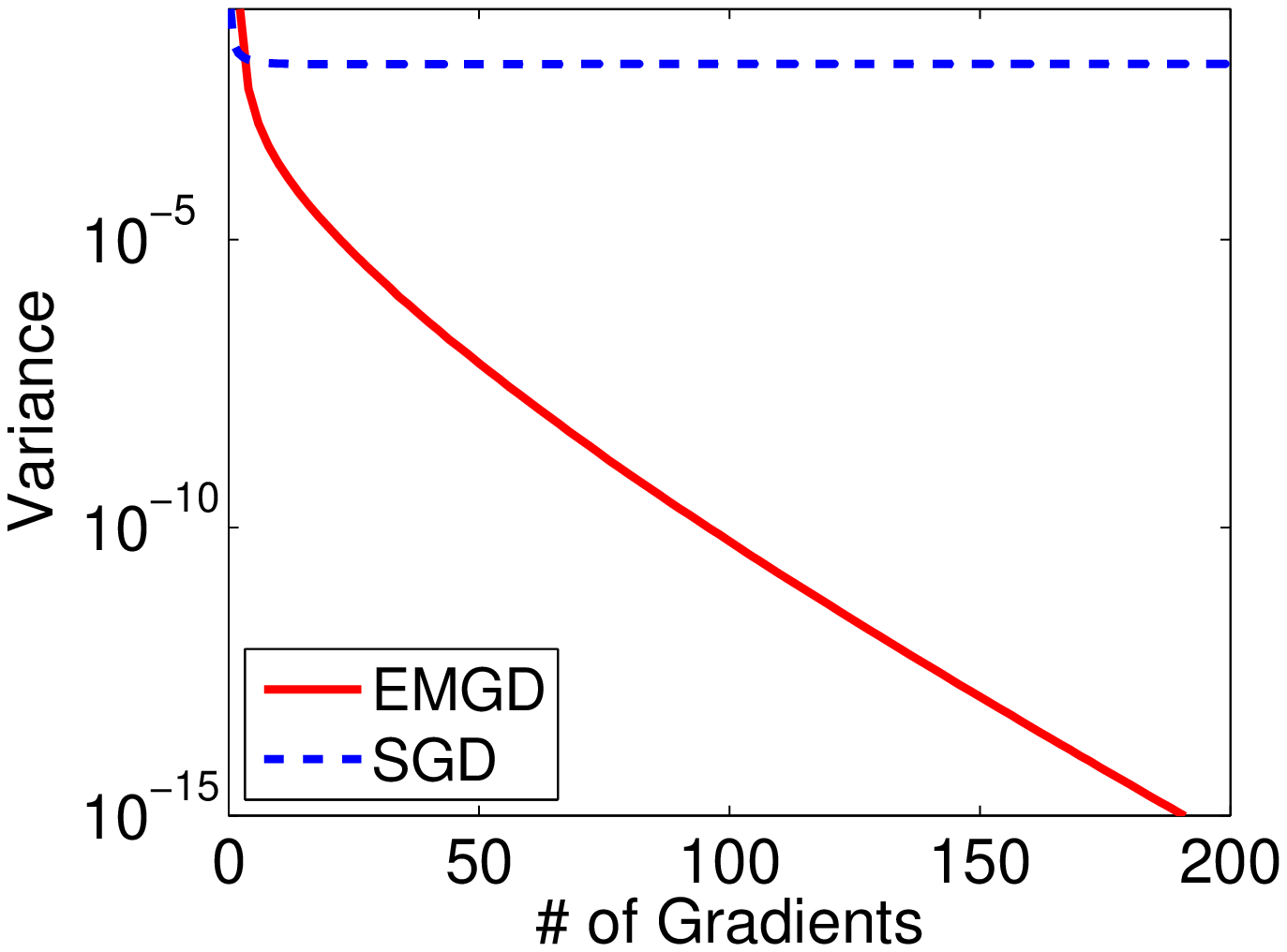}}
  \caption[Experimental results on  RCV1 data set]{Experimental results on the RCV1 data set. $\lambda=1e{-}5$ and $\eta=1$ in EMGD.}
  \label{fig:subfig2} %% label for entire figure
\end{figure*}
The results of both SGD and EMGD algorithms on the Adult data set are provided in Figure~\ref{fig:subfig1}. Figure~\ref{fig:subfig1:a} shows the difference between the current objective value and the optimum (which is obtained by running a batch algorithm for a long time) versus
\[
\textrm{The number of full gradients} + \frac{\textrm{The number of stochastic gradients}}{n}.
\]
Figure~\ref{fig:subfig1:b} shows the variances of the SGD and EMGD, respectively. It can be inferred from the results that the variance of SGD is almost same even when the algorithm approaches the optimal solution. As can be seen, the EMGD method is able to reduce the training error and the variance exponentially. We  obtain similar results on the RCV1 data set, which are shown in Figure~\ref{fig:subfig2}.

\begin{figure*}
  \centering
  \subfloat[The Adult data set]{
    \label{fig:subfig3:a} %% label for first subfigure
\includegraphics[width=.48\textwidth]{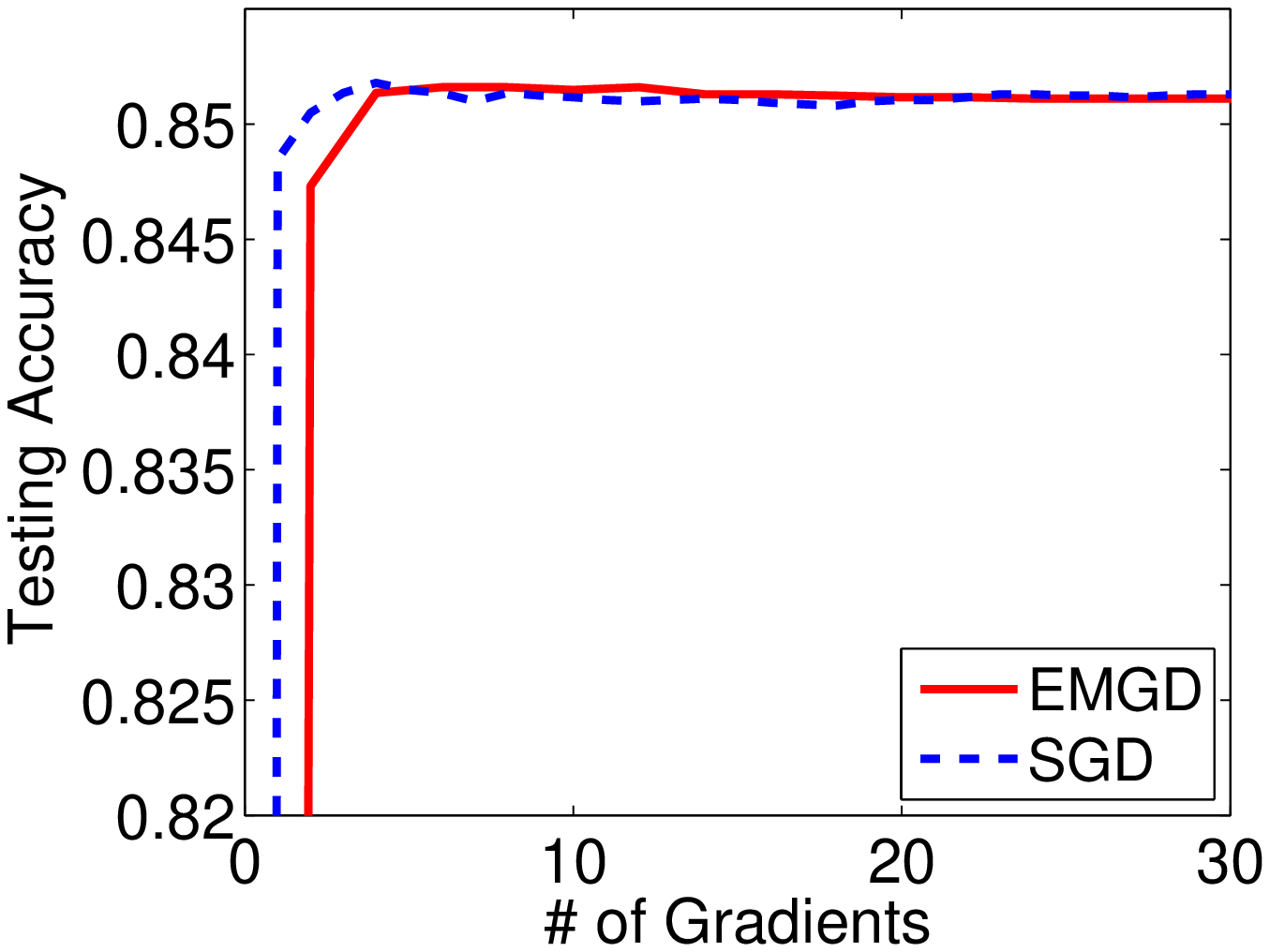}}
  \hspace{1ex}
  \subfloat[The RCV1 data set]{
    \label{fig:subfig3:b} %% label for second subfigure
\includegraphics[width=.48\textwidth]{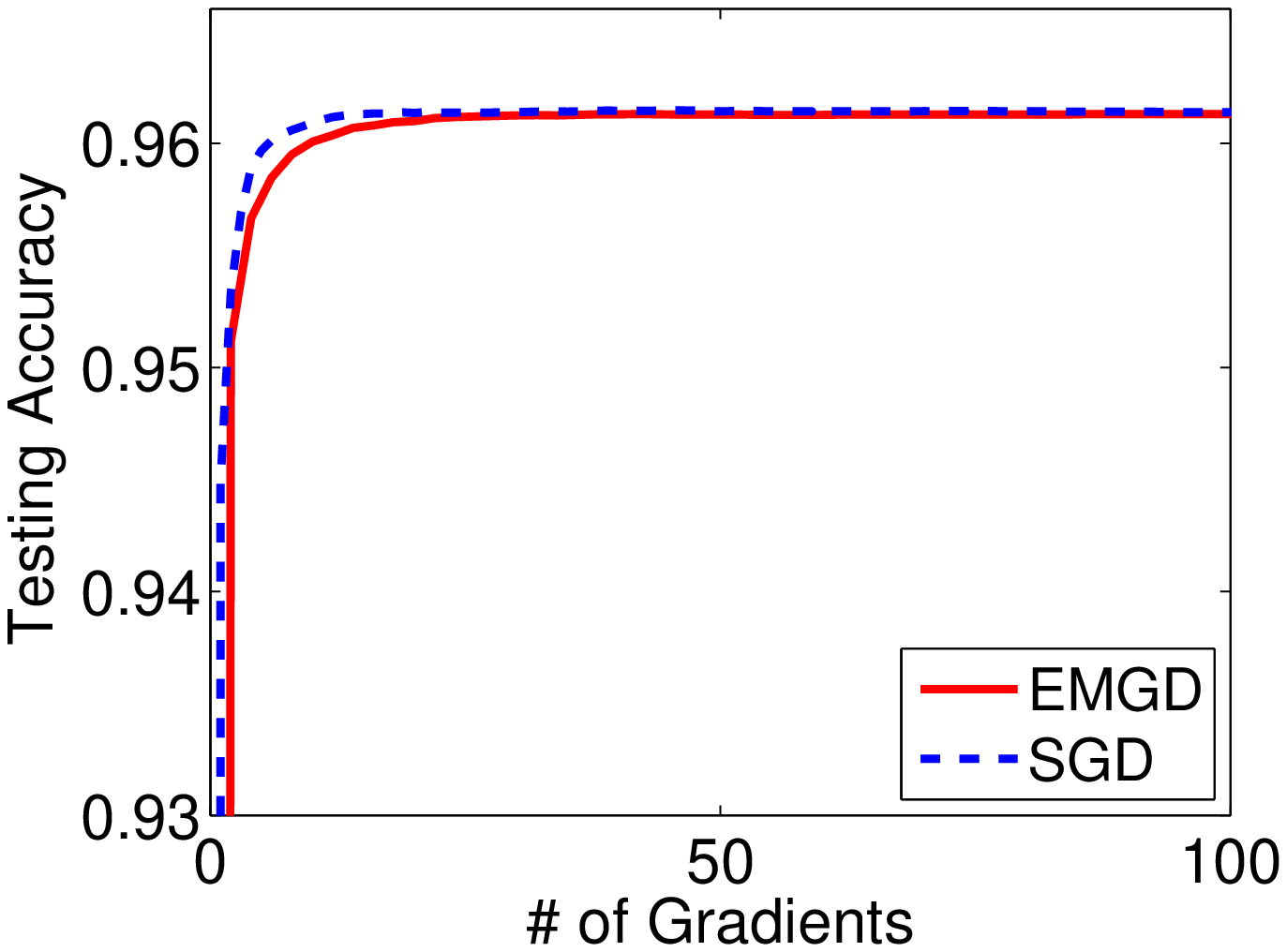}}
  \caption{The testing accuracy on RCV1 and Adult data sets.}
  \label{fig:subfig3} %% label for entire figure
\end{figure*}  
We also examine the testing accuracy of SGD and EMGD, which are depicted in Figure~\ref{fig:subfig3}. As can be seen, in terms of testing, EMGD is slightly worse than SGD at the beginning. This behavior is expected since EMGD needs one  full gradient to initialize.

\section{Discussion}

Compared to the optimization algorithm that only relies on full gradients~\cite{nesterov2004introductory}, the number of full gradients needed in EMGD is $O(\log\frac{1}{\epsilon})$ instead of $O(\sqrt{\kappa} \log \frac{1}{\epsilon})$. Compared to the optimization algorithms that only relies on stochastic gradients~\cite{Juditsky:SCS,hazan-2011-beyond,ICML2012Rakhlin},  EMGD is more efficient since it achieves a linear convergence rate.

The proposed EMGD algorithm can also be applied to the special optimization problem considered in~\cite{NIPS2012_SGM,shalev2012stochastic}, where $\F(\w)=\frac{1}{n} \sum_{i=1}^n f_i(\w)$. To make quantitative comparisons, let's assume the full gradient is $n$ times more expensive to compute than the stochastic gradient. Table~\ref{table:2-emgd} lists the computational complexity of the algorithms that enjoy linear convergence. As can be seen, the computational complexity of EMGD is lower than Nesterov's algorithm~\cite{nesterov2004introductory} as long as the condition number $\kappa \leq n^{2/3}$, the complexity of  SAG~\cite{NIPS2012_SGM} is lower than Nesterov's algorithm if $\kappa \leq n/8$, and the complexity of SDCA~\cite{shalev2012stochastic} is lower than Nesterov's algorithm if $\kappa \leq n^2$.\footnote{In  learning problems, we usually face a regularized optimization problem $\min_{\w \in \D} \frac{1}{n} \sum_{i=1}^n \ell(y_i; \dd{\w}{\x_i})+\frac{\tau}{2} \|\w\|^2$, where $\ell(\cdot;\cdot)$ is some loss fixed function. When the norm of the data is bounded, the smoothness parameter $L$ can be treated as a constant. The strong convexity parameter $\lambda$ is lower bounded by $\tau$. As a result, as long as $\tau > \Omega(n^{-2/3})$, which is a reasonable scenario~\cite{Wu:2005:SSM}, we have $\kappa < O(n^{2/3})$, indicating that our proposed EMGD algorithm can be applied.}  The complexity of EMGD is on the same order as SAG and SDCA when $\kappa \leq n^{1/2}$, but higher in other cases. Thus, in terms of computational cost, EMGD may not be the best one, but it has advantages in other aspects.
 \begin{enumerate}
  \item Unlike SAG and SDCA that only work for unconstrained optimization problem, the proposed algorithm works for both constrained and unconstrained optimization problems, provided the constrained problem in Step~\ref{step:epoch:update} can be solved efficiently.
  \item Unlike the SAG and SDCA that require an $\Omega(n)$ storage space, the proposed algorithm only requires the storage space of $\Omega(d)$, where $d$ is the dimension of $\w$.
  \item The only step in Algorithm~\ref{alg:emgd} that has dependence on $n$ is step~\ref{step:epoch:grad} for computing the gradient $\nabla \F(\wb^k)$. By utilizing distributed computing, the running time of this step can be reduced to $O(n/k)$, where $k$ is the number of computers, and the convergence rate remains the same. For SAG and SDCA , it is unclear whether they can reduce the running time without affecting the convergence rate.
  \item The linear convergence of SAG and SDCA only holds in expectation, whereas the linear convergence of EMGD holds with a high probability, which is much stronger.
\end{enumerate}

\begin{table}[t]
\label{table:2}
\begin{center}
\begin{tabular}{cccc}
Nesterov's algorithm~\cite{nesterov2004introductory} & EMGD & SAG ($n \geq 8 \kappa$)~\cite{NIPS2012_SGM}&  SDCA~\cite{shalev2012stochastic}
\\ \hline \\
$O\left(\sqrt{\kappa} n \log \frac{1}{\epsilon} \right)$ & $O\left((n+\kappa^2) \log \frac{1}{\epsilon} \right)$ & $O\left( n \log \frac{1}{\epsilon} \right)$ & $O\left((n+\kappa)\log \frac{1}{\epsilon} \right)$
 \\ \\\hline
\end{tabular}
\end{center}
\caption[The computational complexity for minimizing empirical error]{The computational complexity for minimizing $(1/n) \sum_{i=1}^n f_i(\w)$ }
\label{table:2-emgd}
\end{table}

\section{Summary}
In this chapter, we considered how to reduce the number of full gradients needed for smooth and strongly convex optimization problems. Under the assumption that both the gradient and the stochastic gradient are available, the EMGD algorithm, with the help of stochastic gradients, is are able to reduce the number of gradients needed from $O(\sqrt{\kappa}\log \frac{1}{\epsilon})$ to $O(\log \frac{1}{\epsilon})$. In the case that the objective function is in the form of (\ref{eqn:8:sum}), i.e., a sum of $n$ smooth functions, EMGD has lower computational cost than the full gradient method~\cite{nesterov2004introductory}, if the condition number $\kappa \leq n^{2/3}$. We validated our theoretical results on the convergence of EMGD algorithm and in particular its ability in reducing the variance during the optimization by some experimental results on two data sets. We note that although EMGD enjoys many nice properties, it is unclear whether it is the optimal algorithm when two kinds of gradients are available. Finally,  we  provided experimental evidence complementing our theoretical results for classification problem on few standard data sets.

%%%%%%%%%%%%%%%%%%%%%%%%%%%%%%%%%%%%%%%%%%%%%%%%%%%
\section{Bibliographic Notes}
During the last three decades, there have been significant advances in convex optimization~\cite{nemircomp1983,nesterov2004introductory,boyd-convex-opt}. In this section, we provide a brief review of the first order optimization methods.

We first discuss deterministic optimization, where the gradient of the objective function is available. For the general convex and Lipschitz continuous optimization problem, the iteration complexity of gradient (subgradient) descent is $O(\frac{1}{\epsilon^2})$, which is optimal up to constant factors~\cite{nemircomp1983}. When the objective function is convex and smooth, the optimal optimization scheme is the accelerated gradient descent developed by Nesterov, whose iteration complexity is $O(\frac{L}{\sqrt{\epsilon}})$~\cite{nesterov1983method,nesterov2005smooth}. With slight modifications, the accelerated gradient descent algorithm can also be applied to optimize the smooth and strongly convex objective function, whose iteration complexity is $O(\sqrt{\kappa} \log \frac{1}{\epsilon})$ and is in general not improvable~\cite{nesterov2004introductory,Nesterov_Composite}. The objective of our work is to reduce the number of access to the full gradients by exploiting the availability of stochastic gradients.

In stochastic optimization, we have the access to the stochastic gradient, which is an unbiased estimate of the full gradient~\cite{nemirovski2009robust}. Similar to the case in deterministic optimization, if the objective function is convex and Lipschitz continuous, stochastic gradient (sub-gradient) descent is the optimal algorithm and the iteration complexity is also $O(\frac{1}{\epsilon^2})$~\cite{nemircomp1983,nemirovski2009robust}. When the objective function is strongly convex, the algorithms proposed in very recent works~\cite{Juditsky:SCS,hazan-2011-beyond,ICML2012Rakhlin} achieve the optimal $O(\frac{1}{\lambda \epsilon})$ iteration complexity~\cite{sgd-lower-bounds}. Since the convergence rate of stochastic optimization is dominated by the randomness in the gradient~\cite{lan2012optimal,Lan:SCSC}, smoothness usually does not lead to a faster convergence rate for stochastic optimization.

From the above discussion, we observe that the iteration complexity in stochastic optimization is polynomial in $\frac{1}{\epsilon}$, making it is difficult to find high-precision solutions. However, when the objective function is strongly convex and can be written as a sum of a finite number of functions, i.e.,
\[
\F(\w)=\frac{1}{n} \sum_{i=1}^n f_i(\w),
\]
where each $f_i(\w)$ is smooth, the iteration complexity of some specific algorithms may exhibit a logarithmic dependence on $\frac{1}{\epsilon}$, i.e., a linear convergence rate. The first such algorithm is the stochastic average gradient (SAG)~\cite{NIPS2012_SGM}, whose iteration complexity is $O( n \log \frac{1}{\epsilon})$, provided $n \geq 8 \kappa$. The second one is the stochastic dual coordinate ascent (SDCA)~\cite{shalev2012stochastic}, whose iteration complexity is $O((n+\kappa) \log \frac{1}{\epsilon})$.
%%%%%%%%%%%%%%%%%%%%%%%%%%%%%%%%%%%%%%%%%%%%%%%%%%%

\chapter{Efficient Optimization with Bounded Projections} \label{chap:projection}

\def \R {\mathbb{R}}
\def \D {\mathcal{D}}
\def \y {\mathbf{y}}
\def \M {\mathcal{M}}
\def \E {\mathrm{E}}
\def \x {\mathbf{w}}
\def \y {\mathbf{y}}
\def \a {\mathbf{a}}
\def \L {\mathcal{L}}
\def \H {\mathcal{H}}
\def \fh {\widehat{f}}
\def \f {\mathbf{f}}
\def \Hk {\H_{\kappa}}
\def \P {\mathcal{P}}
\def \c {\mathbf{c}}
\def \X {\mathcal{X}}
\def \fhv {\widehat{\f}}
\def \S {\mathcal{S}}
\def \eh {\widehat{\delta}}
\def \z {\mathbf{z}}
\def \t {\mathbf{t}}
\def \ph {\widehat{\phi}}
\def \gh {\widehat{g}}
\def \Lh {\widehat{L}}
\def \lh {\widehat{\lambda}}
\def \gt {\widetilde{g}}
\def \et {\widetilde{\varepsilon}}
\def \zt {\widetilde{\z}}
\def \Zt {\widetilde{Z}}
\def \gammat {\widetilde{\gamma}}
\def \etat {\widehat{\eta}}
\def \ab {\overline{\alpha}}
\def \Mh {\widehat{M}}
\def \veh {\widehat{\varepsilon}}
\def \gammah {\widehat{\gamma}}
\def \etah {\widehat{\eta}}
\def \xh {\widehat{\x}}
\def \Kh {\widehat{K}}
\def \rh {\widehat{r}}
\def \vet {\widetilde{\varepsilon}}
\def \hh {\widehat{h}}
\def \htt {\widetilde{h}}
\def \Er {\mathcal{E}}
\def \u {\mathbf{u}}
\def \Eh {\widehat{\Er}}
\def \v {\mathbf{v}}
\def \w {\mathbf{w}}
\def \Hb {\overline{\H}}
\def \DDh {\widehat{\D}}
\def \B {\mathcal{B}}
\def \Phl {\overline{\Phi}}
\def \R {\mathbb{R}}
\def \Kt {\widetilde{K}}
\def \G {\mathcal{G}}
\def \kt {\widetilde{\kappa}}
\def \b {\mathbf{b}}
\def \zt {\widetilde{\z}}
\def \h {\mathbf{h}}
\def \g {\mathbf{g}}
\def \vt {\widetilde{\v}}
\def \Vt {\widetilde{V}}
\def \A {\mathcal{A}}
\def \B {\mathcal{B}}
\def \Hh {\widehat{\H}}
\def \Ht {\widetilde{\H}}
\def \alt {\widetilde{\alpha}}
\def \at {\widetilde{\a}}
\def \bt {\widetilde{\b}}
\def \hht {\widetilde{\h}}
\def \Dt {\widetilde{D}}
\def \Zt {\widetilde{Z}}
\def \lt {\widetilde{\lambda}}
\def \rb {\bar{r}}
\def \Dh {\widehat{D}}
\def \tr {\mbox{tr}}
\def \vh {\widehat{\varphi}}
\def \Nh {\widehat{N}}
\def \e {\mathbf{e}}
\def \vt {\widetilde{\varphi}}
\def \zh {\widehat{\z}}
\def \ft {\widetilde{f}}
\def \lb {\bar{\ell}}
\def \Dh {\widehat{\gamma}}
\def \dt {\tilde{\gamma}}
\def \J {\mathcal{J}}
\def \wh {\widehat{\w}}
\def \wt {\widetilde{\w}}
\def \I {\mathcal{I}}
\def \lh {\widehat{\ell}}
\def \Oh {\widehat{\K}}
\def \F {\mathcal{F}}
\def \Z {\mathcal{Z}}
\def \hb {\bar{h}}
\def \fb {\bar f}
\def \Lb {\bar{\L}}
\def \xb {\widehat{\x}}
\def \K {\mathcal{W}}

\def \fh {\hat{f}}
\def \B {\mathbb{B}}
\def \x {\mathbf{w}}
\def \y {\mathbf{y}}
\def \v {\mathbf{v}}
\def \H {\mathcal{H}_{\kappa}}
\def \R {\mathbb{R}}
\def \w {\mathbf{w}}
\def \wh {\widehat{\w}}
\def \sgn {\mbox{sgn}}
\def \a {\mathbf{a}}
\def \u {\mathbf{u}}
\def \uh {\widehat{\u}}
\def \wt {\widetilde{\w}}
\def \E {\mathrm{E}}
\def \C {\mathcal{C}}
\def \xh {\widehat{\x}}
\def \xt {\widetilde{\x}}
\def \truncate {\mbox{truncate}}
\def \A {\mathcal{A}}
\def \p {\mathbf{p}}
\def \S {\mathcal{S}}
\def \L {\mathcal{L}}
\def \H {\mathcal{H}}
\def \q {\mathbf{q}}
\def \F {\mathcal{F}}
\def \z {\mathbf{z}}
\def \b {\mathbf{b}}
\def \K {\mathcal{K}}
\def \S {\mathcal{S}}
\def \E {\mathbb{E}}
\def \O {O}
\def \V {\mathcal{V}}
\def \BD {\mathsf{B}}
\def \blambda {\boldsymbol{\lambda}}
\def \bmu {\boldsymbol{\mu}}

\def \K {\mathcal{W}}

In this chapter we aim at developing more efficient optimization methods by reducing the number of projection steps. An examination of optimization methods for both online and convex optimization problems introduced in Chapter~\ref{chap-background} reveals that  most of them require projecting the updated solution at \textit{each} iteration to ensure that the obtained solution stays within the feasible domain. For complex domains (e.g., positive semidefinite cone), the projection step can be computationally expensive, making first order optimization methods unattractive for large-scale optimization problems. The broad question in this chapter of the thesis is the extent to which it is possible to reduce the number of projection steps in stochastic and online optimization algorithms.

In stochastic setting, we address this limitation by developing  novel stochastic optimization algorithms that do not need intermediate projections. Instead, only one projection at the last iteration is needed to obtain a feasible solution in the given domain. Our theoretical analysis shows that with a high probability, the proposed algorithms achieve an $O(1/\sqrt{T})$ convergence rate for general convex functions, and an $O(\ln T/T)$  rate for  strongly convex functions under mild conditions about the domain and the objective function. The key insight underlying the proposed method for strongly convex objectives is that by smoothing the objective function and leveraging it in the optimization, we are able to skip the intermediate projections. This is in contrast to other parts of the thesis where we explicitly leveraged the smoothness assumption. To the best of our knowledge, these are the first projection-free stochastic optimization methods.

In online setting, to tackle the computational challenge arising from the projection steps, we consider an alternative online learning problem as follows. Instead of requiring that  each solution obeys the constraints which define the convex domain,  we only require the constraints to be satisfied in a long run. Then, the online learning problem becomes a task to find a sequence of solutions  under the  long term constraints.  In other words, instead of solving the projection  on each round, we allow the learner to make decisions at some rounds which do not belong to the constraint set, but the overall sequence of chosen decisions  must obey the constraints at the end by a vanishing  rate. We refer to this problem as  online learning with soft constraints. By turning the problem into an online convex-concave optimization problem, we propose an efficient algorithm which achieves an $O(\sqrt{T})$  regret bound and an $O(T^{3/4})$ bound on the violation of the constraints. Then, we modify the algorithm in order to guarantee that the constraints are exactly satisfied in the long run. This gain is achieved at the price of getting an $O(T^{3/4})$  regret bound.

\section{Setup and Motivation}\label{sec-one-sgd-setup}

In stochastic  setting, we consider the following convex optimization problem:
\begin{eqnarray}
\begin{aligned}\label{eqn:opt}
\min_{\x\in\K} f(\x),
\end{aligned}
\end{eqnarray}

where $\K$ is a bounded convex domain. We assume that $\K$ can be characterized by an inequality constraint and without loss of generality  is bounded by the unit ball,  i.e.,
\begin{eqnarray}
\begin{aligned}\label{eqn:d}
\K=\{\x\in\R^d: g(\x)\leq 0\}\subseteq \B=\{\x\in\R^d: \|\x\|_2\leq 1\},
\end{aligned}
\end{eqnarray}

where $g(\x)$ is a convex constraint function. We assume that $\K$ has a non-empty interior, i.e., there exists $\x$ such that $g(\x)<0$ and   the optimal solution $\x_*$ to~(\ref{eqn:opt}) is in the interior of the unit ball $\B$, i.e., $\|\x_*\|_2<1$. Note that when a domain is characterized by multiple convex constraint functions, say $g_i(\x) \leq 0, i=1, \ldots, m$, we can summarize them into one constraint $g(\x)\leq 0$, by defining it  as    $g(\x) = \max_{1 \leq i \leq m } g_i(\x)$.

To solve the optimization problem in (\ref{eqn:opt}), we assume that the
only information available to the algorithm is through a stochastic oracle that provides  unbiased estimates of the gradient of $f(\x)$. More precisely, let $\xi_1, \ldots, \xi_T$ be a sequence of independently and identically distributed (i.i.d) random variables sampled from an unknown distribution.  At each iteration $t$, given solution $\x_t$, the oracle returns $\mathbf{g}(\x_t; \xi_t)$, an unbiased estimate of the true gradient $\nabla f(\x_t)$, i.e., $\E_{\xi_t}[\mathbf{g}(\x_t, \xi_t)] = \nabla f(\x_t)$. The goal of the learner is to find an approximate optimal solution by making $T$ calls to this oracle. 

Recall from Chapter~\ref{chap-background} that to find a solution within the domain $\K$ which optimizes the given objective function $f(\x)$, SGD computes an unbiased estimate of the gradient of $f(\x)$, and updates the solution by moving it in the opposite direction of the estimated gradient. To ensure that the solution stays within the domain $\K$, SGD has to project the updated solution back into the $\K$ at \textit{every iteration}.  More precisely, SGD method produces a sequence of solutions   by following updating:
\begin{eqnarray}
\begin{aligned}\label{eqn:updt}
\x_{t+1} = \Pi_{\K}(\x_t -\eta_t \mathbf{g}(\x_t,\xi_t)),
\end{aligned}
\end{eqnarray}
where $\eta_t$ is the step size at iteration $t$, $\Pi_\K(\cdot)$ is a projection operator that projects $\x$ into the domain $\K$, and $\mathbf{g}(\x,\xi_t)$ is an unbiased stochastic gradient of $f(\x)$, for which we further assume bounded gradient variance as
\begin{eqnarray}
\begin{aligned}
&\E_{\xi_t}[\exp(\|\mathbf{g}(\x, \xi_t)- \nabla f(\x)\|_2^2/\sigma^2)]\leq \exp(1).\label{eqn:gb}
\end{aligned}
\end{eqnarray}
For general convex optimization, stochastic gradient descent methods can obtain an $O(1/\sqrt{T})$ convergence rate in expectation or in a high probability provided~(\ref{eqn:gb}) holds~\cite{nemirovski2009robust}.  Although a large number of iterations is usually needed to obtain a solution of desirable accuracy, the lightweight computation per iteration makes SGD attractive for many large-scale learning problems. 

The SGD method is  computationally efficient only when the projection $\Pi_\K(\cdot)$ can be carried out efficiently.  Although efficient algorithms have been developed for projecting solutions into special domains (e.g., simplex and $\ell_1$ ball~\cite{Duchi:2008:EPL:1390156.1390191, DBLP:conf/icml/LiuY09}); for complex domains, such as a positive semidefinite (PSD) cone in metric learning and bounded trace norm matrices in matrix completion (more examples of complex domains can be found in~\cite{hazan-projection-free} and~\cite{thesisMJ}), the projection step requires solving an expensive convex optimization, leading to a high computational cost per iteration and consequently making SGD unappealing for large-scale optimization problems over such domains. For instance, projecting a matrix into a PSD cone requires computing the full eigen-decomposition of the matrix, whose complexity is cubic in the size of the matrix.

The central theme of this chapter in stochastic setting is to develop a SGD based method that does not require projection at each iteration. This problem was first addressed in a very recent work~\cite{hazan-projection-free}, where the authors extended Frank-Wolfe algorithm~\cite{frank56} for online learning. But, one main shortcoming of the algorithm proposed in~\cite{hazan-projection-free} is that it has a slower convergence rate (i.e., $O(T^{-1/3})$) than a standard SGD algorithm (i.e., $O(T^{-1/2})$). In this work, we demonstrate that a properly modified SGD algorithm can achieve the optimal convergence rate of $O(T^{-1/2})$ using only \textbf{ONE} projection for general stochastic convex optimization problem.

We further develop an SGD based algorithm for strongly convex optimization that achieves a convergence rate of $O(\ln T/T)$, which is only a logarithmic factor worse than the optimal rate~\cite{hazan-2011-beyond}. The key idea of both algorithms is to appropriately penalize the intermediate solutions when they are outside the domain. With an appropriate design of penalization mechanism, the average solution $\xh_T$ obtained by the SGD after $T$ iterations will be very close to the domain $\K$, even without intermediate projections. As a result, the final feasible solution $\widetilde{\x}_T$ can be obtained by projecting $\xh_T$ into the domain $\K$, the only projection that is needed for the entire algorithm. We note that our approach is very different from the previous efforts in developing projection free convex optimization algorithms (see \cite{hazan-projection-free,DBLP:conf/icml/JaggiS10,thesisMJ} and references therein), where the key idea is to develop appropriate \textit{updating} procedures to restore the feasibility of solutions at every iteration.

\section{Stochastic Frank-Wolfe Algorithm}
Before presenting the proposed algorithms, here we investigate and analyze a greedy algorithm, which is a slight modification of the Frank-Wolfe (FW) method, when applied to stochastic optimization problem. As discussed in Chapter~\ref{chap-background}, the FW or conditional gradient algorithm is a feasible direction method and at each iteration, it finds the best feasible direction (with respect to the linear approximation of the function) and updates the next solution as a convex combination of the current solution and chosen feasible direction. This algorithm is revisited in~\cite{thesisMJ} for deterministic \textit{smooth} convex function over a convex domain aiming to devise an efficient algorithm without projections. As discussed before, the algorithm in~\cite{thesisMJ} generates a sequence of solutions via the following steps:
\begin{equation}
\begin{aligned}
&\u_t =\arg\max_{\u\in\K} \langle \u, -\nabla f(\x_t) \rangle\label{eqn:lin-9-app}\\
&\x_{t+1} =  (1-\eta_t)\x_t + \eta_t\u_t
\end{aligned}
\end{equation}
The bulk of computation in FW algorithm is step (\ref{eqn:lin-9-app}) and the algorithm is an  attractive method whenever step (\ref{eqn:lin-9-app}) can be achieved with low cost complexity, for otherwise, it is not practically interesting.  This is true for some special domains, such as polyhedron, simplex and $\ell_1$ ball where the linear optimization problem can be efficiently solved. However, this assumption does not hold for general complex domains (e.g. general PSD cones). This limitation makes the FW algorithm computationally unattractive for general complex domains since it simply translates the complexity of quadratic optimization for projection into a linear optimization problem over the same domain. We note that the FW algorithm is also attractive because of its sparsity merits which is not the focus of this chapter and we only consider the computational efficiency of optimization methods.

A trivial modification to extend this algorithm to  stochastic gradients  is to replace the true gradient $\nabla f(\x_t)$ with a stochastic gradient $\mathbf{g}(\x_t,\xi_t)$ in~(\ref{eqn:lin-9-app}). Next, we sketch an analysis of stochastic greedy algorithm and discuss the problem caused by using unbiased estimate of the gradient instead of true gradient in updating solutions.

A key inequality in the convergence analysis of stochastic greedy algorithm is
\begin{equation}\label{eqn:key-9-app}
f(\x_{t+1}) \leq f(\x_t) - \eta_t \underbrace{\max_{\x\in\K}\langle \x_t-\x,\nabla f(\x_t) \rangle}\limits_{\triangleq~g(\x_t, \nabla f(\x_t))} + \eta_t^2L,
\end{equation}
where $f(\x)$ is assumed to be $L$-smooth function, $g(\x, \nabla f(\x))) = \max_{\u\in\K}\langle \x-\u, \nabla f(\x)\rangle$ is the duality gap.  Using $g(\x_t, \nabla f(\x_t))\geq f(\x_t)- f(\x_*)$, we can obtain an $O(1/T)$ convergence bound for the greedy algorithm with $\eta_t=2/(t+1)$ by induction (e.g., see Theorem~\ref{thm-chp-2-FW} in Chapter~\ref{chap-background}). However, when using the stochastic gradient, $\u_t=\arg\max_{\u \in\K}\langle \u, -\mathbf{g}(\x_t,\xi_t) \rangle$, and the key inequality becomes as:

\begin{equation*}
\begin{aligned}
f(\x_{t+1}) 
& \leq f(\x_t) - \eta_t  \langle \x_t - \u_t, \nabla f(\x_t)\rangle+ \eta_t^2L \\
& \leq f(\x_t) - \eta_t  g(\x_t, \nabla f(\x_t))+ \eta_t^2L \\
& \hspace{0.5cm} +  \underbrace{\eta_t(g(\x_t, \nabla f(\x_t)) - g(\x_t, \mathbf{g}(\x_t, \xi_t))) + \eta_t\langle \x_t - \u_t, \mathbf{g}(\x_t,\xi_t)-\nabla f(\x_t)\rangle}\limits_{\triangleq~\zeta_t}
\end{aligned}
\end{equation*}
where the top line recovers the inequality in~(\ref{eqn:key-9-app}). However, the problem is that the quantity $\zeta_t$ in the second line of above inequality  is not a martingale sequence, i.e., $\E_{\xi_t|\xi_{[t-1]}}[\zeta_t]\neq 0$. We can take a conservative analysis to bound $\zeta_t\leq C\eta_t\|\nabla f(\x_t)-\mathbf{g}(\x_t, \xi_t)\|_*\triangleq C\eta_t\delta_t$, where $C$ is a constant.  Assuming $\|\nabla f(\x_t) - \mathbf{g}(\x_t, \xi_t)\|_*\leq \sigma$, we could have, with a probably $1-\epsilon$, $\eta_t\delta_t\leq \eta_t(1+\ln(1/\epsilon))\sigma$ by Markov inequality in Lemma~\ref{lemma:markov}. As a result, we obtain the following recursive inequality
\[
f(\x_{t+1}) - f(\x_*)\leq (1-\eta_t)(f(\x_t) - f(\x_*)) + \eta_t^2L+ {\eta_tC(1+\ln(1/\epsilon))\sigma}
\]
where the last term in the above inequality makes the convergence analysis more involved.  Whether  it is possible to design a sequence of step sizes $\eta_t$ to have a vanishing bound for $f(\x_t)-f(\x_*)$ is unclear for us and remains an open problem, which is beyond the scope of this chapter.

\section{Stochastic Optimization with  Single Projection}
We now turn to extending the SGD method to the setting where only one projection is allowed to perform for the entire sequence of updating. The main idea is to incorporate the constraint function $g(\x)$ into the objective function to penalize the intermediate solutions that are outside the domain. The result of the penalization is that, although the average solution obtained by SGD may not be feasible, it should be very close to the boundary of the domain. A projection is performed at the end of the iterations to restore the feasibility of the average solution. 

Before proceeding, we recall  few definitions from Appendix~\ref{chap:appendix-convex} about convex analysis.
\begin{definition}
A function $f(\x)$ is a Lipschitz continuous function with constant $G$ w.r.t a norm $\|\cdot\|$, if
\begin{eqnarray}
\begin{aligned}
|f(\x_1)- f(\x_2)|\leq G\|\x_1-\x_2\|,\forall \x_1,\x_2\in\B.
\end{aligned}
\end{eqnarray}
\end{definition}
In particular, a convex function $f(\x)$ with a bounded (sub)gradient $\|\partial f(\x)\|_*\leq G$ is $G$-Lipschitz continuous, where $\|\cdot\|_*$ is the dual norm to $\|\cdot\|$.
\begin{definition}
A convex function $f(\x)$ is $\beta$-strongly convex w.r.t a norm $\|\cdot\|$ if there exists a constant $\beta > 0$ (often called the modulus of strong convexity) such that, for any $\alpha\in[0, 1]$, it holds:
\[f(\alpha\x_1+ (1-\alpha)\x_2)\leq \alpha f(\x_1) + (1-\alpha) f(\x_2) - \frac{1}{2}\alpha(1-\alpha)\beta\|\x_1-\x_2\|^2, \forall \x_1,\x_2\in\B.
\]
\end{definition}
When $f(\x)$ is differentiable, the strong convexity is equivalent to
$$f(\x_1) \geq f(\x_2) + \langle \nabla f(\x_2), \x_1-\x_2\rangle + \frac{\beta}{2}\|\x_1-\x_2\|^2, \forall \x_1,\x_2\in\B.$$
In the sequel, we use the standard Euclidean norm to define Lipschitz and strongly convex functions.

\begin{algorithm}[tb]
\caption{SGD with ONE Projection by Primal Dual Updating (\texttt{SGD-PD})}
\begin{algorithmic}[1] \label{alg:1}
    \STATE {\textbf{Input}}: a sequence of step sizes $\{\eta_t\}$,  and a parameter $\gamma > 0$
    \STATE {\textbf{Initialize:}}: $\x_1 = \mathbf{0}$ and $\lambda_1 = 0$
    \FOR{$t = 1, 2, \ldots, T$}
      \STATE Compute $\x'_{t+1} = \x_{t} -  \eta_t (\mathbf{g}(\x_t, \xi_t) +\lambda_t\nabla g(\x_t))$
      \STATE Update $\x_{t+1}=\x'_{t+1}/\max{(\|\x'_{t+1}\|_2,1)}$,
      \STATE Update $\lambda_{t+1} = [(1-\gamma\eta_t)\lambda_{t} + \eta_t g(\x_t)]_+$
      \ENDFOR
      \STATE {\textbf Output:} $\widetilde \x_T = \Pi_{\K}\left(\widehat\x_T\right)$, where $\widehat\x_T=\sum_{t=1}^T\x_t/T$.
      \end{algorithmic}
      \end{algorithm}

The key ingredient of proposed algorithms  is to replace the projection step with the gradient computation of the constraint function defining the domain $\K$, which is significantly cheaper than projection step. As an example, when a solution is restricted to a PSD cone, i.e., $\mathbf{X} \succeq 0$ where $\mathbf{X}$ is a symmetric matrix, the corresponding inequality constraint is $g(\mathbf{X})=\lambda_{\max}(-\mathbf{X}) \leq 0$, where $\lambda_{\max}(\mathbf{X})$ computes the largest eigenvalue of $\mathbf{X}$ and is a convex function.  In this case, $\nabla g(\mathbf{X})$  only requires computing the minimum eigenvector of a matrix, which is  cheaper than a  full eigenspectrum computation required at each iteration of  the standard SGD algorithm to restore feasibility. 

Below, we state a few  assumptions about $f(\x)$ and $g(\x)$ often made in stochastic optimization  as:
\begin{assumption} We assume that:
\begin{eqnarray}
\begin{aligned}
 &\mbox{{\rm{\textbf{A1}}}}\quad\quad\|\nabla f(\x)\|_2\leq G_1, \quad\|\nabla g(\x)\|_2\leq G_2,\quad |g(\x)|\leq C_2,\quad\forall\x\in\B,\label{eqn:bound1}\\
&\mbox{{\rm{\textbf{A2}}}}\quad\quad\E_{\xi_t}[\exp(\|\mathbf{g}(\x, \xi_t)- \nabla f(\x)\|_2^2/\sigma^2)]\leq \exp(1),\quad\forall \x\in\B.\label{eqn:bound2}
\end{aligned}
\end{eqnarray}
\end{assumption}

We also make the following mild assumption about the boundary of the convex domain $\K$ as:
\begin{assumption} We assume that:
\begin{eqnarray}
\begin{aligned}
\hspace*{-0.2in}\mbox{{\rm{{\textbf{A3}}}}}\quad\quad \mbox{there exists a constant $\rho > 0$ such that } \min_{g(\x)=0}\|\nabla g(\x)\|_2\geq \rho. \label{eqn:rho}
\end{aligned}
\end{eqnarray}
\end{assumption}

\begin{remark}
The purpose of introducing assumption \textbf{A3} is to ensure that the optimal dual variable for the constrained optimization problem in (\ref{eqn:opt}) is well bounded from the above, a key factor for our analysis. To see this, we write the problem in (\ref{eqn:opt}) into a convex-concave optimization problem:
\[
    \min\limits_{\x \in \B} \max\limits_{\lambda \geq 0} f(\x) + \lambda g(\x).
\]
Let $(\x_*, \lambda_*)$ be the optimal solution to the above convex-concave optimization problem. Since we assume $g(\x)$ is strictly feasible, $\x_*$ is also an optimal solution to (\ref{eqn:opt}) due to the strong duality theorem~\cite{boyd-convex-opt}. Using the first order optimality condition, we have $\nabla f(\x_*) = -\lambda_* \nabla g(\x_*)$.
Hence, $\lambda_* =0$ when $g(\x_*) < 0$, and $\lambda_* = \|\nabla f(\x_*)\|_2/\|\nabla g(\x_*)\|_2$ when $g(\x_*) = 0$. Under assumption \textbf{A3}, we have $\lambda_* \in [0, G_1/\rho]$. 
\end{remark}

We note that, from a practical point of view, it is straightforward to verify that for many domains including PSD cone and Polytope, the gradient of the constraint function is lower bounded on the boundary and therefore assumption \textbf{A3} does not limit the applicability of the proposed algorithms for stochastic optimization. For the example of $g(\mathbf{X})=\lambda_{\max}(-\mathbf{X})$, the assumption \textbf{A3} implies $\min_{g(\mathbf{X})=0}\|\nabla g(\mathbf{X})\|_{\rm{F}}=\|\u\u^{\top}\|_{\rm{F}}=1$, where $\u$ is an orthonomal vector representing the corresponding eigenvector of the matrix $\mathbf{X}$ whose minimum eigenvalue is zero.

We propose two different ways of incorporating the constraint function into the objective function, which result in two algorithms, one for general convex  and the other for strongly convex functions. 

\subsection{General Convex Functions}
To incorporate the constraint function $g(\x)$, we introduce a regularized Lagrangian function,
\begin{eqnarray}\label{eqn-9-fixed-convex-concave}
\begin{aligned}
\L(\x, \lambda) =  f(\x) + \lambda g(\x) - \frac{\gamma}{2}\lambda^2 , \quad \lambda\geq0.
\end{aligned}
\end{eqnarray}
The summation of the first two terms in $\L(\x,\lambda)$ corresponds to the Lagrangian function in dual analysis and $\lambda$ corresponds to a Lagrangian multiplier. A regularization term $-(\gamma/2)\lambda^2$ is introduced in $\L(\x, \lambda)$ to prevent $\lambda$ from being too large. Instead of solving the constrained optimization problem in (\ref{eqn:opt}), we try to solve the following convex-concave optimization problem
\begin{eqnarray}
\min_{\x\in\B}\max_{\lambda\geq0} \L(\x, \lambda). \label{eqn:convex-concaveone}
\end{eqnarray}
The proposed algorithm for stochastically optimizing the problem in (\ref{eqn:convex-concaveone}) is summarized in Algorithm~\ref{alg:1}. It differs from the existing stochastic gradient descent methods in that it updates both the primal variable $\x$ (steps 4 and 5) and the dual variable $\lambda$ (step 6), which shares the same step sizes.  We note that the parameter $\rho$   is not employed  in the implementation of  Algorithm~\ref{alg:1} and is only required for  the theoretical analysis. 
It is noticeable that a similar  primal-dual updating  will be  explored later in this chapter  to avoid projection in online learning. We note that in online setting the algorithm and analysis only lead to a bound for the regret and the violation of the constraints in a long run, which does not necessarily guarantee the feasibility of final solution. Also our proof techniques differ from~\cite{nemirovski2009robust}, where the convergence rate is obtained for the saddle point; however our goal is to attain bound on the convergence of the primal feasible solution. 
\begin{remark}The convex-concave optimization problem in (\ref{eqn:convex-concaveone}) is equivalent to the following minimization problem:
\begin{eqnarray}\label{eqn:pen}
\min_{\x\in\B} \;f(\x) + \frac{[g(\x)]^2_+}{2\gamma}, \label{eqn:fg}
\end{eqnarray}
where $[z]_+$ outputs $z$ if $z > 0$ and zero otherwise. It thus may seem tempting to directly optimize the penalized function $f(\x) + [g(\x)]_+^2/(2\gamma)$ using the standard SGD method, which unfortunately does not yield a regret of $O(\sqrt{T})$. This is because, in order to obtain a regret of $O(\sqrt{T})$, we need to set $\gamma = \Omega(\sqrt{T})$, which unfortunately will lead to a blowup of the gradients and consequently a poor regret bound. Using a primal-dual updating schema allows us to adjust the penalization term more carefully to obtain an $O(1/\sqrt{T})$ convergence rate.\\
\end{remark}

\begin{theorem}\label{thm:1}
For any general convex function $f(\x)$, if we set $\eta_t=\gamma/(2G_2^2), t=1,\cdots, T$, and $\gamma= G^2_2/\sqrt{(G_1^2+C_2^2 + (1+\ln(2/\delta))\sigma^2)T}$ in Algorithm~\ref{alg:1}, under assumptions \textbf{A1-A3}, we have,  with a probability at least $1-\delta$,  
\[
f(\widetilde\x_T)\leq \min_{\x\in\K}f(\x) + O\left(\frac{1}{\sqrt{T}}\right),
\]
%In the case of $\beta$-strongly convex and $\gamma$-smooth function $f(\x)$, if we set $\gamma=G_2^2/\beta$, $\tau_t=1/(\gamma t)$, and $\eta_t=1/(\beta t)$, we have, with a probability $1-\delta$
%\[
%f(\widetilde \x_T)\leq \min_{\x\in\K} f(\x) + \widetilde O(1/T)
%\]
where $O(\cdot)$ suppresses polynomial factors that depend on $\ln(2/\delta), G_1, G_2, C_2, \rho$, and $\sigma$.
\end{theorem}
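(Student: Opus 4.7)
The plan is to run the standard primal-dual analysis of stochastic mirror descent on the regularized Lagrangian $\L(\x,\lambda) = f(\x) + \lambda g(\x) - (\gamma/2)\lambda^2$, and then convert a bound on the saddle-point gap into a bound on $f(\widetilde{\x}_T) - f(\x_*)$. First, I would establish a one-step inequality by treating the primal and dual updates as (projected) stochastic gradient steps on $\L(\cdot,\cdot)$. Writing $\widehat{\g}_t = \mathbf{g}(\x_t,\xi_t) + \lambda_t \nabla g(\x_t)$ and using that $\Pi_{\B}$ is a non-expansion, the usual telescoping argument on $\|\x_t-\x\|^2$ and $(\lambda_t-\lambda)^2$ with common step size $\eta_t \equiv \eta$ yields, for every $\x\in\B$ and every $\lambda\ge 0$,
\begin{equation*}
\sum_{t=1}^{T}\bigl[\L(\x_t,\lambda)-\L(\x,\lambda_t)\bigr]
 \le \frac{1}{2\eta}\bigl(\|\x\|^2+\lambda^2\bigr)
     + \frac{\eta}{2}\sum_{t=1}^{T}\bigl(\|\widehat{\g}_t\|^2 + g(\x_t)^2\bigr)
     + M_T,
\end{equation*}
where $M_T$ collects the martingale-difference terms coming from $\mathbf{g}(\x_t,\xi_t)-\nabla f(\x_t)$.

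Second, I would bound the dual iterates a priori. The $-(\gamma/2)\lambda^2$ regularizer together with $|g(\x_t)|\le C_2$ forces $\lambda_t\le C_2/\gamma$ (by a short induction on the update rule), so $\|\widehat{\g}_t\|_2 \le G_1 + (C_2/\gamma)G_2$. Plugging the choices $\eta = \gamma/(2G_2^2)$ and $\gamma = \Theta(G_2^2/\sqrt{T})$ from the statement keeps the $\eta \sum \|\widehat{\g}_t\|^2$ term of order $\sqrt{T}$. For $M_T$, since $\|\x_t-\x\|\le 2$ and $\|\mathbf{g}(\x_t,\xi_t)-\nabla f(\x_t)\|$ is sub-Gaussian by assumption~\textbf{A2}, a Bernstein-type inequality for martingales (as used in Chapter~\ref{chap:passive_target}) gives $M_T = O(\sqrt{T\ln(1/\delta)})$ with probability at least $1-\delta/2$.

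Third, I would convert the saddle-point bound into a primal guarantee. Applying convexity of $f$ and $g$ in $\x$, dividing by $T$, and choosing $\x=\x_*$ so that $g(\x_*)\le 0$, the inequality becomes
\begin{equation*}
 f(\widehat{\x}_T) - f(\x_*) + \lambda\, g(\widehat{\x}_T) - \frac{\gamma}{2}\lambda^2
 \;\le\; O\!\Bigl(1/\sqrt{T}\Bigr) \qquad \forall\,\lambda\ge 0,
\end{equation*}
with high probability. Maximizing the left-hand side over $\lambda\ge 0$ produces the extra term $[g(\widehat{\x}_T)]_+^2/(2\gamma)$. Since $\gamma = \Theta(1/\sqrt{T})$, this yields simultaneously $f(\widehat{\x}_T)-f(\x_*) \le O(1/\sqrt{T})$ and $[g(\widehat{\x}_T)]_+ \le O(T^{-1/2}\cdot\gamma^{1/2}) = O(T^{-3/4})$; a tighter choice of $\lambda$ actually gives $[g(\widehat{\x}_T)]_+ = O(T^{-1/2})$ after using the a priori bound $\lambda_*\le G_1/\rho$ from assumption~\textbf{A3}.

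Finally, I would account for the single projection $\widetilde{\x}_T = \Pi_{\K}(\widehat{\x}_T)$. Here assumption~\textbf{A3} is crucial: since $\|\nabla g\|\ge \rho$ on the boundary and $g$ is convex, the distance from $\widehat{\x}_T$ to $\K$ is at most $[g(\widehat{\x}_T)]_+/\rho$, so $\|\widetilde{\x}_T-\widehat{\x}_T\|\le [g(\widehat{\x}_T)]_+/\rho = O(T^{-1/2})$. Lipschitz continuity of $f$ then transfers the bound from $\widehat{\x}_T$ to $\widetilde{\x}_T$, completing the proof. The main obstacle I anticipate is the bookkeeping in the third step: one must pick $\lambda$ in the saddle-point inequality carefully so that the $-(\gamma/2)\lambda^2$ regularizer absorbs the cross term $\lambda g(\widehat{\x}_T)$ and simultaneously yields the right $O(1/\sqrt{T})$ dependence in both the optimality gap and the feasibility violation; balancing $\eta$ and $\gamma$ against the sub-Gaussian tail of $M_T$ is where the specific constants in the theorem are forced.
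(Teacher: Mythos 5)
Your proposal is correct and follows essentially the same route as the paper's proof: primal--dual stochastic gradient steps on the regularized Lagrangian, the induction bound $\lambda_t \le C_2/\gamma$, martingale concentration for the noise terms, maximization over $\lambda$ to produce the $[g(\widehat\x_T)]_+^2$ penalty, and assumption \textbf{A3} plus Lipschitzness to transfer the bound through the single final projection. The only step needing care is your intermediate claim $[g(\widehat\x_T)]_+ = O(T^{-3/4})$: since $\widehat\x_T$ may be infeasible, $f(\widehat\x_T)-f(\x_*)$ can be negative and must itself be controlled before the squared-violation term can be isolated, which you then do via $\lambda_* \le G_1/\rho$ --- exactly how the paper closes the argument by combining $f(\x_*)-f(\widehat\x_T)\le G_1\|\widehat\x_T-\widetilde\x_T\|_2$ with $g(\widehat\x_T)\ge \rho\|\widehat\x_T-\widetilde\x_T\|_2$ and solving the resulting quadratic inequality.
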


\subsection{Strongly Convex Functions with Smoothing}

We first emphasize that it is difficult to extend Algorithm~\ref{alg:1} to achieve an $O(\ln T/T)$ convergence rate for strongly convex optimization. This is because although the function $-\L(\x, \lambda)$ is strongly convex in $\lambda$, its modulus for strong convexity is $\gamma$, which is too small to obtain an $O(\ln T)$ regret bound.

To achieve a faster convergence rate for strongly convex optimization, we change assumptions \textbf{A1} and {\textbf{A2}} as follows.
\begin{assumption}  
\[
\mbox{{\rm{\textbf{A4}}}}\quad \|\mathbf{g}(\x,\xi_t)\|_2\leq G_1, \quad \|\nabla g(\x)\|_2\leq G_2, \quad\forall \x\in\B,
\]
where we slightly abuse the same notation $G_1$.  
\end{assumption}

Note that {\textbf{A1}}  only requires  that $\|\nabla f(\x)\|_2$ is bounded and  {\textbf{A2}} assumes a mild condition on the stochastic gradient.  In contrast,  for strongly convex optimization we need to assume a bound on the stochastic gradient $\|\mathbf{g}(\x,\xi_t)\|_2$. Although assumption \textbf{A4} is stronger than assumptions {\textbf{A1}} and {\textbf{A2}}, however, 
it is always possible to bound the stochastic gradient for machine learning
problems where $f(\x)$ usually consists of a summation of loss functions on training examples,  and the stochastic gradient is computed by sampling over the training examples. Given the bound on $\|\mathbf{g}(\x,\xi_t)\|_2$, we can easily have $\|\nabla f(\x)\|_2=\|\E\mathbf{g}(\x,\xi_t)\|_2\leq \E\|\mathbf{g}(\x,\xi_t)\|_2\leq G_1$, which is used to set  an  input parameter $\lambda_0>G_1/\rho$ to the algorithm. 
According to the discussion in the last subsection, we know that the optimal dual variable $\lambda_*$ is upper bounded by $G_1/\rho$, and consequently is upper bounded by $\lambda_0$. 

Similar to the last approach, we write the optimization problem~(\ref{eqn:opt}) into an equivalent convex-concave optimization problem:
\[
\min_{g(\x)\leq 0} f(\x) = \min_{\x\in\B}\max_{0\leq \lambda\leq \lambda_0} f(\x) + \lambda g(\x) = \min\limits_{\x \in \B} f(\x) + \lambda_0[g(\x)]_+.
\]
To avoid unnecessary complication due to the subgradient of $[\cdot]_+$, following~\cite{nesterov2005smooth}, we introduce a smoothing term $H(\lambda/\lambda_0)$, where $H(p)= -p\ln p - (1-p)\ln(1-p)$ is the entropy function, into the Lagrangian function, leading to the optimization problem $\min\limits_{\x \in \B} \mathcal{F}(\x)$, where $\mathcal{F}(\x)$ is defined as
\begin{eqnarray*}
\begin{aligned}
 \mathcal{F}(\x)& = f(\x) + \max_{0\leq \lambda\leq \lambda_0}\lambda g(\x) + \gamma H(\lambda/\lambda_0)= f(\x) + \gamma \ln \left(1+\exp\left(\frac{\lambda_0g(\x)}{\gamma}\right)\right),
\end{aligned}
\end{eqnarray*}
where $\gamma > 0$ is a parameter whose value will be determined later. Given the smoothed objective function $\mathcal{F}(\x)$, we find the optimal solution by applying SGD to minimize $\mathcal{F}(\x)$, where the gradient of $\mathcal{F}(\x)$ is computed by
\begin{eqnarray}
\begin{aligned}\label{eqn:grad}
\nabla \mathcal{F}(\x) = \nabla f(\x)  + \frac{\exp\left({\lambda_0g(\x)/\gamma}\right)}{1+\exp\left({\lambda_0g(\x)/\gamma}\right)}\lambda_0\nabla g(\x).
\end{aligned}
\end{eqnarray}
Algorithm~\ref{alg:2-sgd-one-prj} gives the detailed steps. Unlike Algorithm~\ref{alg:1}, only the primal variable $\x$ is updated in each iteration using the stochastic gradient computed in~(\ref{eqn:grad}). 
%------------------------------------------------------------------------------------
\begin{algorithm}[t]
\caption{SGD with ONE Projection by a Smoothing Technique (\texttt{SGD-ST})} \label{alg:2-sgd-one-prj}
\begin{algorithmic}[1]
\STATE \textbf{Input}: a sequence of step sizes $\{\eta_t\}$, $\lambda_0$, and $\gamma$

\STATE \textbf{Initialize}: $\x_1 = \mathbf{0}$.

\FOR{$t = 1, \ldots, T$}
    \STATE Compute $$\x'_{t+1} = \x_t - \eta_t \left(\displaystyle\mathbf{g}(\x_t, \xi_t) + \frac{\exp\left({\lambda_0g(\x_t)/\gamma}\right)}{1+\exp(\lambda_0g(\x_t)/\gamma)}\lambda_0\nabla g(\x_t)\right)$$
    \STATE Update $\x_{t+1} = \x'_{t+1}/\max(\|\x'_{t+1}\|_2,1)$
\ENDFOR
\STATE \textbf{Return:} $\widetilde \x_T=\Pi_{\K}\left(\widehat\x_T\right)$, where $\widehat\x_T = \sum_{t=1}^T \x_t/T$.
\end{algorithmic}
\end{algorithm}
%------------------------------------------------------------------------------------
The following theorem shows that Algorithm~\ref{alg:2-sgd-one-prj} achieves an $O(\ln T/ T)$ convergence rate if the cost functions are strongly convex. 
\begin{theorem} \label{thm:2}
For any $\beta$-strongly convex function $f(\x)$, if we set $\eta_t=1/(2\beta t), t=1,\ldots, T$,  $\gamma = \ln T /T$, and $\lambda_0 >G_1/\rho$ in Algorithm~\ref{alg:2-sgd-one-prj}, under assumptions {\textbf{A3}} and {\textbf{A4}},  we have with a probability at least $1 - \delta$,
\[
    f(\widetilde \x_T)  \leq \min_{\x\in\K}f(\x)+O\left(\frac{\ln T}{T}\right),
\]
where $O(\cdot)$ suppresses polynomial factors that depend on $\ln(1/\delta)$, $1/\beta, G_1, G_2, \rho$,  and $\lambda_0$.
\end{theorem}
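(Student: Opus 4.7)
The plan is to split the convergence argument into four conceptually separate pieces: (i) strong convexity of the smoothed surrogate $\mathcal{F}(\x)$, (ii) a high-probability SGD bound on $\mathcal{F}(\widehat\x_T)-\min_{\x\in\B}\mathcal{F}(\x)$, (iii) sandwiching $\mathcal{F}(\x)$ between $f(\x)$ and $f(\x)+\lambda_0[g(\x)]_+$ up to an additive $\gamma\ln 2$, and (iv) translating a control on $\lambda_0[g(\widehat\x_T)]_+$ into a control on the distance $\|\widehat\x_T-\widetilde\x_T\|_2$ via assumption~\textbf{A3}, so that after a single projection the Lipschitz property of $f$ yields the desired bound.

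First I would observe that since $f$ is $\beta$-strongly convex and $\gamma\ln(1+\exp(\lambda_0 g(\x)/\gamma))$ is convex in $\x$ (as $g$ is convex and $\gamma\ln(1+e^{z/\gamma})$ is convex and nondecreasing in $z$), the smoothed objective $\mathcal{F}$ is $\beta$-strongly convex on $\B$. The stochastic gradient $\widetilde\g_t=\mathbf{g}(\x_t,\xi_t)+\sigma_t\lambda_0\nabla g(\x_t)$ used in step~4, with sigmoid weight $\sigma_t\in[0,1]$, is an unbiased estimate of $\nabla\mathcal{F}(\x_t)$ and, by assumption~\textbf{A4}, has bounded norm $\|\widetilde\g_t\|_2\leq G_1+\lambda_0 G_2$. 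Hence applying the standard high-probability analysis of SGD for strongly convex functions (see e.g.\ the epoch-based analysis of \cite{hazan-2011-beyond} or a Bernstein-for-martingales argument as in Theorem~\ref{theorem:bernsteinB}) with step sizes $\eta_t=1/(2\beta t)$ yields, with probability at least $1-\delta$,
\[
\mathcal{F}(\widehat\x_T)-\min_{\x\in\B}\mathcal{F}(\x)\;\leq\;O\!\left(\frac{(G_1+\lambda_0 G_2)^2\ln(1/\delta)\ln T}{\beta T}\right).
\]

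Next I would exploit the elementary two-sided bound $\max(0,y)\leq\gamma\ln(1+e^{y/\gamma})\leq\max(0,y)+\gamma\ln 2$ with $y=\lambda_0 g(\x)$, which gives
\[
f(\x)+\lambda_0[g(\x)]_+\;\leq\;\mathcal{F}(\x)\;\leq\;f(\x)+\lambda_0[g(\x)]_++\gamma\ln 2,\qquad\forall\x\in\B.
\]
Combining with the SGD bound, and using $[g(\x_*)]_+=0$ for any optimum $\x_*$ of the original problem together with the fact that $\x_*\in\B$, we obtain
\[
f(\widehat\x_T)+\lambda_0[g(\widehat\x_T)]_+\;\leq\;f(\x_*)+\gamma\ln 2+O(\ln T/T)\;=\;f(\x_*)+O(\ln T/T),
\]
after plugging in $\gamma=\ln T/T$.

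The main obstacle, and the step where assumption~\textbf{A3} is essential, is to pass from this joint control to a bound on $f(\widetilde\x_T)-f(\x_*)$ with $\widetilde\x_T=\Pi_{\K}(\widehat\x_T)$. The plan is to show that
\[
\|\widehat\x_T-\widetilde\x_T\|_2\;\leq\;[g(\widehat\x_T)]_+/\rho.
\]
This follows because, by the first-order convexity inequality applied at the projected point $\widetilde\x_T$ (which lies on the boundary, so $g(\widetilde\x_T)=0$), one has $g(\widehat\x_T)\geq\langle\nabla g(\widetilde\x_T),\widehat\x_T-\widetilde\x_T\rangle$, and by the projection optimality $\widehat\x_T-\widetilde\x_T$ is parallel to the outward normal $\nabla g(\widetilde\x_T)$, so the inner product equals $\|\nabla g(\widetilde\x_T)\|_2\,\|\widehat\x_T-\widetilde\x_T\|_2\geq\rho\|\widehat\x_T-\widetilde\x_T\|_2$. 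Since $\|\nabla f\|_2\leq G_1$ by Jensen's inequality and \textbf{A4}, $f$ is $G_1$-Lipschitz on $\B$, hence
\[
f(\widetilde\x_T)\;\leq\;f(\widehat\x_T)+\frac{G_1}{\rho}[g(\widehat\x_T)]_+\;\leq\;f(\widehat\x_T)+\lambda_0[g(\widehat\x_T)]_+,
\]
where in the last step we used $\lambda_0>G_1/\rho$. Chaining this with the bound on $f(\widehat\x_T)+\lambda_0[g(\widehat\x_T)]_+$ from the previous paragraph delivers the claim. The delicate part of the argument is coordinating $\gamma$, $\lambda_0$, and the concentration constants so that all three error contributions ($\gamma\ln 2$, the SGD optimization error, and the projection-induced slack) are simultaneously $O(\ln T/T)$; choosing $\gamma=\ln T/T$ and any $\lambda_0>G_1/\rho$ is precisely what balances them.
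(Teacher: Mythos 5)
Your proposal is correct and follows essentially the same route as the paper: run SGD on the smoothed surrogate $\mathcal{F}$, sandwich $\mathcal{F}(\x)$ between $f(\x)+\lambda_0[g(\x)]_+$ and $f(\x)+\lambda_0[g(\x)]_++\gamma\ln 2$, and use assumption \textbf{A3} together with the collinearity of $\widehat\x_T-\widetilde\x_T$ with $\nabla g(\widetilde\x_T)$ to convert the constraint violation into control of the projection distance. Your final chaining is in fact slightly cleaner than the paper's: by bounding $f(\widetilde\x_T)\leq f(\widehat\x_T)+(G_1/\rho)[g(\widehat\x_T)]_+\leq f(\widehat\x_T)+\lambda_0[g(\widehat\x_T)]_+$ directly, you avoid the paper's intermediate step of solving for $\|\widehat\x_T-\widetilde\x_T\|_2$, which introduces a factor $1/(\lambda_0\rho-G_1)$. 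The one place where you delegate real work is the high-probability bound on $\mathcal{F}(\widehat\x_T)-\min_{\x\in\B}\mathcal{F}(\x)$: a plain Hoeffding--Azuma bound on the martingale $\sum_t\langle\nabla f(\x_t)-\mathbf{g}(\x_t,\xi_t),\x_*-\x_t\rangle$ only yields an $O(\sqrt{T})$ deviation, which would swamp the $O(\ln T)$ regret. What is actually needed --- and what the paper does in Lemmas~\ref{lem:3} and~\ref{lem:4} --- is a Bernstein-for-martingales argument with peeling over the conditional variance, which is proportional to $\sum_t\|\x_t-\x_*\|_2^2$ and is then absorbed by the negative term $-\frac{\beta}{4}\sum_t\|\x_t-\x_*\|_2^2$ supplied by strong convexity. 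Your parenthetical names the right tool, so this is a compression rather than an error, but a full write-up must make the variance-dependent form of the concentration inequality explicit.
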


It is well known that the optimal convergence rate of SGD for strongly convex optimization is $O(1/T)$~\cite{hazan-2011-beyond} which has been proven to be tight in stochastic optimization setting~\cite{sgd-lower-bounds}. According to Theorem~\ref{thm:2}, Algorithm~\ref{alg:2-sgd-one-prj} achieves an almost optimal convergence rate except for the factor of $\ln T$. It is worth mentioning that although it is not explicitly given in Theorem~\ref{thm:2}, the detailed expression for the convergence rate of Algorithm~\ref{alg:2-sgd-one-prj} exhibits a tradeoff in setting $\lambda_0$ (more can be found in the proof of Theorem~\ref{thm:2}).  Finally, under assumptions {\textbf{A1}}-{\textbf{A3}}, Algorithm~\ref{alg:2-sgd-one-prj} can achieve an $O(1/\sqrt{T})$ convergence rate for general convex functions, similar to Algorithm~\ref{alg:1}. 

\section{Analysis of Convergence Rate}
We here present  the proofs of main theorems. The omitted proofs of some results are deferred to  Section~\ref{sec-proofs-convergence}. 

\subsection{Convergence Rate  for General Convex Functions}
To pave the path for the proof of of Theorem~\ref{thm:1}, we present a series of lemmas. The lemma below states two key inequalities for the proof, which follows the standard analysis of gradient descent.

\begin{lemma}\label{lem:1} Under the bounded assumptions in~(\ref{eqn:bound1}) and~(\ref{eqn:bound2}), for any $\x\in\B$ and $\lambda>0$,  we have

\begin{eqnarray*}
\begin{aligned}\dd{\x_t-\x}{\nabla_\x \L(\x_t,\lambda_t)}&\le\frac{1}{2\eta_t}\left(\|\x-\x_t\|_2^2-\|\x-\x_{t+1}\|_2^2\right) + 2\eta_tG^2_1+\eta_tG_2^2\lambda_t^2\\
&\hspace*{0.2in}+ 2\eta_t\underbrace{\|\mathbf{g}(\x_t,\xi_t)- \nabla f(\x_t)\|^2_2}\limits_{\equiv \Delta_t}+\underbrace{\dd{\x-\x_t}{ \mathbf{g}(\x_t,\xi_t)-\nabla f(\x_t)}}\limits_{\equiv \zeta_t(\x)},\\
(\lambda-\lambda_t)\nabla_\lambda \L(\x_t, \lambda_t)&\leq \frac{1}{2\eta_t}\left(|\lambda-\lambda_t|^2 - |\lambda-\lambda_{t+1}|^2\right) + 2\eta_tC_2^2.
\end{aligned}
\end{eqnarray*}

\end{lemma}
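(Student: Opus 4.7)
The plan is to derive both inequalities from the non-expansive property of the Euclidean projections underlying the primal and dual updates of Algorithm~\ref{alg:1}, followed by algebraic expansion of the squared-distance inequality and a small amount of bookkeeping that uses the boundedness assumptions A1 and A2.

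For the primal bound, I would first observe that the two-line update $\x'_{t+1} = \x_t - \eta_t \hat{\g}_t$ with $\hat{\g}_t := \mathbf{g}(\x_t,\xi_t) + \lambda_t \nabla g(\x_t)$, followed by $\x_{t+1} = \x'_{t+1}/\max(\|\x'_{t+1}\|_2,1)$, is exactly the Euclidean projection of $\x_t - \eta_t \hat{\g}_t$ onto the unit ball $\B$. Since $\x \in \B$ by assumption, non-expansiveness gives $\|\x_{t+1}-\x\|_2^2 \leq \|\x_t - \eta_t \hat{\g}_t - \x\|_2^2$; expanding the right-hand side and rearranging yields the classical single-step inequality
\[
\dd{\hat{\g}_t}{\x_t - \x} \leq \tfrac{1}{2\eta_t}\!\left(\|\x-\x_t\|_2^2-\|\x-\x_{t+1}\|_2^2\right) + \tfrac{\eta_t}{2}\|\hat{\g}_t\|_2^2.
\]
Now I would replace $\hat{\g}_t$ on the left by $\nabla_\x \L(\x_t,\lambda_t) = \nabla f(\x_t) + \lambda_t \nabla g(\x_t)$; the difference $\hat{\g}_t - \nabla_\x\L(\x_t,\lambda_t) = \mathbf{g}(\x_t,\xi_t)-\nabla f(\x_t)$ produces exactly the noise term $\zeta_t(\x)$ defined in the statement. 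For the squared-norm term I would use $\|\hat{\g}_t\|_2^2 \leq 2\|\mathbf{g}(\x_t,\xi_t)\|_2^2 + 2\lambda_t^2\|\nabla g(\x_t)\|_2^2$ together with $\|\mathbf{g}(\x_t,\xi_t)\|_2^2 \leq 2\|\nabla f(\x_t)\|_2^2 + 2\Delta_t$, and finally A1 to bound $\|\nabla f(\x_t)\|_2 \leq G_1$ and $\|\nabla g(\x_t)\|_2 \leq G_2$. Collecting these pieces gives $\tfrac{\eta_t}{2}\|\hat{\g}_t\|_2^2 \leq 2\eta_t G_1^2 + \eta_t G_2^2 \lambda_t^2 + 2\eta_t \Delta_t$, which matches the claim.

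For the dual bound, I would observe that $\lambda_{t+1} = [(1-\gamma\eta_t)\lambda_t + \eta_t g(\x_t)]_+ = [\lambda_t + \eta_t\,\nabla_\lambda\L(\x_t,\lambda_t)]_+$, i.e., the projection of $\lambda_t + \eta_t \nabla_\lambda\L(\x_t,\lambda_t)$ onto $[0,\infty)$, since $\nabla_\lambda\L(\x_t,\lambda_t)=g(\x_t)-\gamma\lambda_t$. Because $\lambda \geq 0$ and $[\cdot]_+$ is non-expansive, the same one-step expansion as before delivers
\[
(\lambda-\lambda_t)\nabla_\lambda\L(\x_t,\lambda_t) \leq \tfrac{1}{2\eta_t}\!\left(|\lambda-\lambda_t|^2-|\lambda-\lambda_{t+1}|^2\right) + \tfrac{\eta_t}{2}(g(\x_t)-\gamma\lambda_t)^2.
\]
It then remains to bound the last term by $2\eta_t C_2^2$, i.e., to show $|g(\x_t)-\gamma\lambda_t|\leq 2C_2$. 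I would establish the invariant $\gamma\lambda_t \leq C_2$ by a short induction: $\lambda_1 = 0$; assuming $\gamma\lambda_t \leq C_2$ and using the parameter condition $\gamma\eta_t \leq 1$ (immediate from the step sizes and $\gamma$ chosen in Theorem~\ref{thm:1}, since $\gamma\eta_t = \gamma^2/(2G_2^2)$ is $\leq 1$ for large $T$), one gets $\gamma\lambda_{t+1} \leq (1-\gamma\eta_t)\gamma\lambda_t + \gamma\eta_t C_2 \leq C_2$. Together with $|g(\x_t)| \leq C_2$ from A1, this yields $|g(\x_t)-\gamma\lambda_t| \leq 2C_2$, completing the bound.

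The main obstacle I anticipate is precisely the invariant $\gamma\lambda_t \leq C_2$ used in the dual step. Unlike the primal bound, which is a purely per-iteration algebraic consequence of non-expansiveness and the A1/A2 bounds, the dual bound uses a global property of the iterate sequence and therefore relies both on the induction argument and on the compatibility condition $\gamma\eta_t \leq 1$ guaranteed by the prescribed step sizes. Apart from this, the proof is a direct application of the standard projected-gradient one-step analysis on each of the two coordinates.
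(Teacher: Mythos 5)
Your proposal is correct and follows essentially the same route as the paper: non-expansiveness of the projections onto $\B$ and $[0,\infty)$, the standard one-step expansion, the double application of $\|a+b\|_2^2\le 2\|a\|_2^2+2\|b\|_2^2$ to reach $2\eta_tG_1^2+\eta_tG_2^2\lambda_t^2+2\eta_t\Delta_t$, and the induction showing $\lambda_t\le C_2/\gamma$ for the dual bound. Your explicit verification of the compatibility condition $\gamma\eta_t\le 1$ needed for that induction is a detail the paper leaves implicit ("it is straightforward to show"), but it does not change the argument.
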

An immediate result of  Lemma~\ref{lem:1} is the following  which states a regret-type bound.
\begin{lemma}\label{lem:2}
For any general convex function $f(\x)$, if we set $\eta_t=\gamma/(2G_2^2), t=1,\cdots, T$, we have

\begin{eqnarray}
\begin{aligned}
&\sum_{t=1}^{T}{f(\x_t)-f(\x_*)}+ \frac{[\sum_{t=1}^Tg(\x_t)]^2_+}{2(\gamma T+2G_2^2/\gamma)}\leq \frac{G_2^2}{\gamma}+ \frac{(G_1^2+C_2^2)}{G_2^2} \gamma T + \frac{\gamma}{G_2^2}\sum_{t=1}^T\Delta_t+\sum_{t=1}^T\zeta_t(\x_*),
\end{aligned}
\end{eqnarray}

where $\x_*=\arg\min_{\x\in\K}f(\x)$.
\end{lemma}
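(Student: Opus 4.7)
The plan is to derive the bound by summing the two inequalities of Lemma~\ref{lem:1} telescopically and exploiting convex–concavity of $\L(\x,\lambda)$. First, I would apply the first inequality with $\x = \x_*$ and the fixed step size $\eta_t = \eta = \gamma/(2G_2^2)$. Summing over $t=1,\ldots,T$ the $\|\x_* - \x_t\|^2 - \|\x_* - \x_{t+1}\|^2$ terms telescope, and since $\|\x_*\|_2 \leq 1$, I obtain
\[
\sum_{t=1}^T \dd{\x_t-\x_*}{\nabla_\x \L(\x_t,\lambda_t)} \leq \frac{G_2^2}{\gamma} + \frac{\gamma T G_1^2}{G_2^2} + \frac{\gamma}{2}\sum_{t=1}^T \lambda_t^2 + \frac{\gamma}{G_2^2}\sum_{t=1}^T \Delta_t + \sum_{t=1}^T \zeta_t(\x_*).
\]
By convexity of $\L(\cdot,\lambda_t)$ the left-hand side lower bounds $\sum_t [\L(\x_t,\lambda_t) - \L(\x_*,\lambda_t)] = \sum_t [f(\x_t) - f(\x_*) + \lambda_t(g(\x_t) - g(\x_*))]$, and since $g(\x_*) \leq 0$ and $\lambda_t \geq 0$ this in turn dominates $\sum_t [f(\x_t) - f(\x_*) + \lambda_t g(\x_t)]$.

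Next, I would apply the second inequality of Lemma~\ref{lem:1} at an arbitrary fixed $\lambda \geq 0$, telescope, and use concavity of $\L(\x_t,\cdot)$ to obtain
\[
\sum_{t=1}^T (\lambda - \lambda_t) g(\x_t) - \frac{T\gamma}{2}\lambda^2 + \frac{\gamma}{2}\sum_{t=1}^T \lambda_t^2 \leq \frac{\lambda^2 G_2^2}{\gamma} + \frac{\gamma T C_2^2}{G_2^2},
\]
where again the telescoping uses $\eta = \gamma/(2G_2^2)$ and $\lambda_1 = 0$. The critical feature is that when I add the two bounds, the terms $\frac{\gamma}{2}\sum_t \lambda_t^2$ cancel exactly with the $\eta G_2^2 \sum_t \lambda_t^2$ contribution from the primal bound, and the $\sum_t \lambda_t g(\x_t)$ pieces cancel as well. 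What remains is
\[
\sum_{t=1}^T [f(\x_t) - f(\x_*)] + \lambda \sum_{t=1}^T g(\x_t) - \lambda^2\!\left(\frac{T\gamma}{2} + \frac{G_2^2}{\gamma}\right) \leq \frac{G_2^2}{\gamma} + \frac{\gamma T(G_1^2+C_2^2)}{G_2^2} + \frac{\gamma}{G_2^2}\sum_{t=1}^T \Delta_t + \sum_{t=1}^T \zeta_t(\x_*).
\]

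Finally, since the above holds for every $\lambda \geq 0$, I would maximize the left-hand side over $\lambda \geq 0$. The quadratic $\lambda \sum_t g(\x_t) - \lambda^2(T\gamma/2 + G_2^2/\gamma)$ is maximized at $\lambda^* = [\sum_t g(\x_t)]_+/(T\gamma + 2G_2^2/\gamma)$ with value $[\sum_t g(\x_t)]_+^2/\bigl(2(T\gamma + 2G_2^2/\gamma)\bigr)$, which yields precisely the stated inequality. The main delicate point in this plan is verifying that the choice $\eta = \gamma/(2G_2^2)$ produces the exact cancellation of the $\sum_t \lambda_t^2$ terms across the two bounds; without this cancellation the dual iterates $\lambda_t$ would appear uncontrolled on the right-hand side and no uniform regret bound could be extracted.
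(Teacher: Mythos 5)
Your proof is correct and follows essentially the same route as the paper's: both arguments rest on the two inequalities of Lemma~\ref{lem:1}, telescoping with the constant step size, the exact cancellation of the $\sum_{t}\lambda_t^2$ terms forced by $\eta=\gamma/(2G_2^2)$, the sign condition $g(\x_*)\leq 0$, and maximizing the resulting quadratic in $\lambda$ over $\lambda\geq 0$. The only cosmetic difference is that you add the telescoped primal and dual bounds separately rather than first assembling them into the single saddle-point quantity $\sum_{t}[\L(\x_t,\lambda)-\L(\x_*,\lambda_t)]$ as the paper does; also, your phrase ``lower bounds'' should read ``upper bounds,'' since convexity gives $\L(\x_t,\lambda_t)-\L(\x_*,\lambda_t)\leq \langle \x_t-\x_*,\nabla_\x\L(\x_t,\lambda_t)\rangle$, which is the direction you in fact use.
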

\vspace{-3mm}
\begin{proof}[Proof of Theorem~\ref{thm:1}]
First, by martingale inequality (e.g., Lemma 4 in~\cite{lan2012optimal}), with a probability $1-\delta/2$, we have
$\sum_{t=1}^T\zeta_t(\x_*)\leq 2\sigma\sqrt{3\ln(2/\delta)}\sqrt{T}$.
By Markov's inequality (Lemma~\ref{lemma:markov} in Appendix~\ref{chap:appendix-concen}),  with a probability $1-\delta/2$, we have $\sum_{t=1}^T\Delta_t\leq (1+\ln(2/\delta))\sigma^2T$.
Substituting these  inequalities into Lemma \ref{lem:2},  plugging the stated value of $\gamma$, and using  $O(\cdot)$ notation for ease of exposition, we have with a probability $1-\delta$
\[
\sum_{t=1}^{T}{f(\x_t) - f(\x_*)} +  \frac{1}{C\sqrt{T}} \big{[}\sum_{t=1}^Tg(\x_t)\big{]}^2_+\leq O(\sqrt{T}),
\]
where $C= 2G_2(1/\sqrt{G_1^2+C_2^2+(1+\ln(2/\delta))\sigma^2} + 2\sqrt{G_1^2+C_2^2+(1+\ln(2/\delta))\sigma^2})$ and $O(\cdot)$ suppresses polynomial  factors that depend on $\ln(2/\delta), G_1, G_2, C_2, \sigma$. 

Recalling the definition of $\widehat \x_T=\sum_{t=1}^T\x_t/T$ and using the convexity of $f(\x)$ and $g(\x)$, we have

\begin{eqnarray}
\begin{aligned}\label{eqn:key}
f(\widehat\x_T) -f(\x_*) +\frac{\sqrt{T}}{C}[g(\widehat\x_T)]_+^2\leq O\left(\frac{1}{\sqrt{T}}\right).
\end{aligned}
\end{eqnarray}

Assume $g(\widehat\x_T)>0$, otherwise $\widetilde \x_T=\widehat\x_T$ and we easily have $f(\widetilde \x_T)\leq \min_{\x\in\K}f(\x) + O(1/\sqrt{T})$.  Since $\widetilde\x_T$ is the projection of $\widehat \x_T$ into $\K$, i.e.,  $\widetilde \x_T=\arg\min_{g(\x)\leq 0}\|\x-\widehat \x_T\|_2^2$, then by first order optimality condition, there exists a positive constant $s>0$ such that 
\begin{eqnarray*}
\begin{aligned}
g(\widetilde\x_T)=0,\text{ and }  \widehat\x_T-\widetilde\x_T= s \nabla g(\widetilde\x_T)
\end{aligned}
\end{eqnarray*}
which indicates that  $\widehat  \x_T - \widetilde \x_T$ is in the same direction to $\nabla g(\tilde \x_T)$. Hence,

\begin{equation}
\begin{aligned}
g(\widehat \x_T) &= g(\widehat\x_T) - g(\widetilde\x_T)\geq \dd{\widehat\x_T-\widetilde\x_T}{\nabla g(\widetilde \x_T)} \\
& = \|\widehat\x_T-\widetilde\x_T\|_2\|\nabla g(\widetilde\x_T)\|_2 \\
& \geq \rho \|\widehat\x_T-\widetilde\x_T\|_2\label{eqn:g},
\end{aligned}
\end{equation}
where the last inequality follows  the definition of $\min_{g(\x)=0}\|\nabla g(\x)\|_2\geq\rho$. Additionally, we have
\begin{equation}
\begin{aligned}\label{eqn:f}
f(\x_*)- f(\widehat \x_T)\leq f(\x_*) - f(\widetilde\x_T) + f(\widetilde\x_T) - f(\widehat\x_T)\leq G_1\|\widehat\x_T-\widetilde\x_T\|_2,
\end{aligned}
\end{equation}

due to $f(\x_*)\leq f(\widetilde\x_T)$ and Lipschitz continuity of $f(\x)$.  Combining inequalities~(\ref{eqn:key}),~(\ref{eqn:g}), and~(\ref{eqn:f}) yields
\[
\frac{\rho^2}{C}\sqrt{T} \|\widehat\x_T-\widetilde\x_T\|_2^2\leq O(1/\sqrt{T}) + G_1 \|\widehat\x_T-\widetilde\x_T\|_2.
\]
By simple algebra, we have
 $\|\widehat\x_T-\widetilde\x_T\|_2 \leq  \frac{G_1C}{\rho^2\sqrt{T}} + O\left(\sqrt{\frac{C}{\rho^2 T}}\right)$
Therefore

\begin{equation}
\begin{aligned}
f(\widetilde\x_T) & \leq f(\widetilde\x_T)  -  f(\widehat\x_T) + f(\widehat\x_T) \\
&  \leq G_1\|\widehat\x_T-\widetilde\x_T\|_2 + f(\x_*)  + O\left(\frac{1}{\sqrt{T}}\right) \\
& \leq f(\x_*) + O\left(\frac{1}{\sqrt{T}}\right),
\end{aligned}
\end{equation}

where we use the inequality in~(\ref{eqn:key}) to bound $f(\widehat\x_T)$  by $f(\x_*)$ and absorb the dependence on $\rho, G_1, C$ into the $O(\cdot)$ notation. %which gives the bound stated in Theorem~\ref{thm:1}.
\end{proof}
%\vspace{-2mm}
\begin{remark} 
From the proof of Theorem~\ref{thm:1}, we can see that the key inequalities are~(\ref{eqn:key}),~(\ref{eqn:g}), and~(\ref{eqn:f}).  In particular, the regret-type bound in~(\ref{eqn:key}) depends  on the algorithm. If we only update the primal variable using the penalized objective in~(\ref{eqn:pen}), whose gradient depends on $1/\gamma$, it will cause a blowup in the regret bound with ($1/\gamma +\gamma T + T/\gamma$), which leads to a non-convergent bound. 
\end{remark}

\subsection{Convergence Rate for  Strongly Convex Functions}
%\subsection{Proof of Theorem~\ref{thm:2}}
Our proof of Theorem~\ref{thm:2} for the convergence rate of Algorithm~\ref{alg:2-sgd-one-prj} when applied to strongly convex functions starts with the following lemma  by analogy of Lemma~\ref{lem:2}.
\begin{lemma}\label{lem:3}
For any $\beta$-strongly convex function $f(\x)$, if we set $\eta_t=1/(2\beta t)$, we have

\begin{eqnarray*}
\begin{aligned}
\sum_{t=1}^{T}{\mathcal{F}(\x) - \mathcal{F}(\x_*)} \leq \frac
{(G_1^2+\lambda_0^2G_2^2)(1+\ln T)}{2\beta}+{\sum_{t=1}^T\zeta_t(\x_*)-\frac{\beta}{4}\sum_{t=1}^T\|\x_*-\x_t\|_2^2}
\end{aligned}
\end{eqnarray*}

where $\x_*=\arg\min_{\x\in\K}f(\x)$.
\end{lemma}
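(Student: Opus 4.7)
The plan is to derive the bound by applying the standard one–step analysis of projected SGD against the surrogate $\mathcal{F}$, using the observation that $\mathcal{F}$ inherits the $\beta$-strong convexity of $f$. Indeed, the smoothed penalty $\gamma\ln(1+\exp(\lambda_0 g(\x)/\gamma))$ is the composition of the increasing convex function $t\mapsto\gamma\ln(1+e^{\lambda_0 t/\gamma})$ with the convex function $g$, hence convex; so $\mathcal{F}=f+\text{(convex)}$ is $\beta$-strongly convex. The search direction used in Algorithm~\ref{alg:2-sgd-one-prj} is
\[
\hat{\g}_t \;:=\; \mathbf{g}(\x_t,\xi_t)\,+\,\sigma_t\lambda_0\nabla g(\x_t),\qquad \sigma_t=\frac{\exp(\lambda_0 g(\x_t)/\gamma)}{1+\exp(\lambda_0 g(\x_t)/\gamma)}\in[0,1],
\]
and $\E_{\xi_t}[\hat{\g}_t\,|\,\x_t]=\nabla\mathcal{F}(\x_t)$, so $\hat{\g}_t$ is an unbiased stochastic gradient of $\mathcal{F}$. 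Under assumption \textbf{A4}, the inequality $\|a+b\|_2^2\le 2\|a\|_2^2+2\|b\|_2^2$ together with $\sigma_t\le 1$ gives $\|\hat{\g}_t\|_2^2\le 2(G_1^2+\lambda_0^2G_2^2)$.

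Next I would carry out the standard projected-SGD one-step inequality. Since $\Pi_{\B}$ is non-expansive and $\x_*\in\B$,
\[
\|\x_{t+1}-\x_*\|_2^2 \;\le\; \|\x_t-\x_*\|_2^2 - 2\eta_t\langle \hat{\g}_t,\x_t-\x_*\rangle + \eta_t^2\|\hat{\g}_t\|_2^2,
\]
which rearranges to an upper bound on $\langle \hat{\g}_t,\x_t-\x_*\rangle$. Splitting $\langle\nabla\mathcal{F}(\x_t),\x_t-\x_*\rangle=\langle\hat{\g}_t,\x_t-\x_*\rangle+\zeta_t(\x_*)$ and invoking $\beta$-strong convexity of $\mathcal{F}$ in the form $\mathcal{F}(\x_t)-\mathcal{F}(\x_*)\le\langle\nabla\mathcal{F}(\x_t),\x_t-\x_*\rangle-\tfrac{\beta}{2}\|\x_t-\x_*\|_2^2$, I would obtain a per-iteration inequality of the form
\[
\mathcal{F}(\x_t)-\mathcal{F}(\x_*) \;\le\; \tfrac{1}{2\eta_t}\|\x_t-\x_*\|_2^2-\tfrac{1}{2\eta_t}\|\x_{t+1}-\x_*\|_2^2+\tfrac{\eta_t}{2}\|\hat{\g}_t\|_2^2+\zeta_t(\x_*)-\tfrac{\beta}{2}\|\x_t-\x_*\|_2^2.
\]

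The final step is to sum this inequality from $t=1$ to $T$ with $\eta_t=1/(2\beta t)$, for which $\tfrac{1}{2\eta_t}=\beta t$. An Abel-style rearrangement of $\sum_t\bigl[\beta t\,d_t-\beta t\,d_{t+1}\bigr]$, where $d_t=\|\x_t-\x_*\|_2^2$, together with the non-positive boundary term $-\beta T d_{T+1}$, allows one to balance the positive coefficients of $d_t$ against the quadratic strong-convexity penalty and retain a residual $-\tfrac{\beta}{4}\sum_t d_t$ on the right-hand side. The $\|\hat{\g}_t\|_2^2$ contribution telescopes into $\tfrac{G_1^2+\lambda_0^2 G_2^2}{2\beta}\sum_{t=1}^T 1/t\le\tfrac{(G_1^2+\lambda_0^2 G_2^2)(1+\ln T)}{2\beta}$, while the conditionally-zero-mean terms aggregate into $\sum_t\zeta_t(\x_*)$, yielding the claimed inequality.

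The main obstacle is precisely the delicate accounting in the last step: one must tune the apportionment of the strong-convexity contribution so that just enough is used to cancel the growing coefficients produced by telescoping (enabled by the choice $\eta_t=1/(2\beta t)$) while preserving the negative quadratic residual $-\tfrac{\beta}{4}\sum_t\|\x_t-\x_*\|_2^2$. Keeping this negative term is not cosmetic: in the proof of Theorem~\ref{thm:2} it will be used, via a Bernstein-type concentration for the martingale $\sum_t\zeta_t(\x_*)$ whose conditional variance scales with $\|\x_t-\x_*\|_2^2$, to absorb the variance bound using Young's inequality and thereby produce the announced $O(\ln T/T)$ rate.
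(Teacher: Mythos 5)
Your proposal is correct in outline and follows essentially the same route as the paper's proof: $\beta$-strong convexity of $\mathcal{F}$, the non-expansive projected-SGD one-step bound together with $\left\|\mathbf{g}(\x_t,\xi_t)+p(\x_t)\lambda_0\nabla g(\x_t)\right\|_2^2\le 2(G_1^2+\lambda_0^2G_2^2)$, and a telescoping sum in which the strong-convexity term is split so that part cancels the growing coefficients and $-\tfrac{\beta}{4}\sum_t\|\x_*-\x_t\|_2^2$ survives for the martingale argument in Theorem~\ref{thm:2}. The only caveat --- shared with the paper, whose proof silently discards the term $\sum_t\tfrac12\bigl(\tfrac1{\eta_t}-\tfrac1{\eta_{t-1}}-\tfrac{\beta}{2}\bigr)\|\x_*-\x_t\|_2^2$ --- is that with $\eta_t=1/(2\beta t)$ the telescoped increment $\tfrac{1}{2\eta_t}-\tfrac{1}{2\eta_{t-1}}=\beta$ exceeds the $\beta/4$ per step left over after reserving the negative residual, so the accounting you rightly flag as the crux only closes for a larger step size such as $\eta_t=2/(\beta t)$ (or by surrendering part of the residual); this is a constant-factor slip in the lemma as stated rather than a defect peculiar to your argument.
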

In order to prove Theorem~\ref{thm:2},  we need  the following  result for an improved martingale inequality.
\begin{lemma}\label{lem:4}
For any fixed $\x \in \B$, define $D_T= \sum_{t=1}^{T} \|\x_t - \x\|_2^2$, $\Lambda_T =\sum_{t=1}^T\zeta_t(\x)$, and $m = \lceil \log_2 T \rceil$. We have
\[
\Pr\left( D_T \leq \frac{4}{T} \right) + \Pr\left( \Lambda_T \leq 4G_1\sqrt{D_T\ln\frac{m}{\delta}} + 4G_1\ln\frac{m}{\delta}\right) \geq 1 - \delta.
\]
%where $m = \lceil \log_2 T \rceil$.
\end{lemma}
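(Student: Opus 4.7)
The plan is to establish Lemma~\ref{lem:4} by combining Bernstein's inequality for martingales (Theorem~\ref{theorem:bernsteinB} in the appendix) with a standard peeling/stratification argument over the possible values of the random conditional variance $D_T$. The statement is written as a sum of two probabilities precisely because one of them absorbs the regime where $D_T$ is so small that we need no concentration bound at all, while the other handles the generic regime through Bernstein in a way that captures the self-normalized nature of $\Lambda_T$.

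First, I would verify the martingale structure. Writing $\zeta_t(\x) = \langle \x - \x_t, \mathbf{g}(\x_t,\xi_t) - \nabla f(\x_t)\rangle$, since $\x_t$ is measurable with respect to $\xi_1,\ldots,\xi_{t-1}$ and $\E_{\xi_t}[\mathbf{g}(\x_t,\xi_t)]=\nabla f(\x_t)$, the sequence $\zeta_t(\x)$ is a martingale difference. Under assumption \textbf{A4}, $\|\mathbf{g}(\x_t,\xi_t)\|_2\leq G_1$ and $\|\nabla f(\x_t)\|_2\leq G_1$, so Cauchy--Schwarz gives $|\zeta_t(\x)|\leq 2G_1\|\x-\x_t\|_2\leq 4G_1$ (using $\x,\x_t\in\B$) and the conditional variance obeys $\E_{t-1}[\zeta_t(\x)^2]\leq 4G_1^2\|\x-\x_t\|_2^2$, so $\sum_t \E_{t-1}[\zeta_t(\x)^2]\leq 4G_1^2 D_T$.

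Next comes the peeling step, which I expect to be the main obstacle because the Bernstein bound involves the random quantity $D_T$. Since $\x,\x_t\in\B$ we have $D_T\leq 4T$, so on the complement of $\{D_T\leq 4/T\}$ we may partition the range into dyadic slices $\mathcal{I}_i=\{D_T:\, 4\cdot 2^{i-1}/T < D_T\leq 4\cdot 2^i/T\}$ for $i=1,\ldots,m$ with $m=\lceil\log_2 T\rceil$ (absorbing the top bin into the last one if needed). On each slice, $D_T$ is controlled by a deterministic upper bound $v_i=4\cdot 2^i/T$. Applying Bernstein's inequality (Theorem~\ref{theorem:bernsteinB}) with failure probability $\delta/m$ on each slice, we get, on $\mathcal{I}_i$,
\[
\Lambda_T \;\leq\; 2G_1\sqrt{2 v_i\ln(m/\delta)} + \tfrac{4\sqrt{2}}{3}G_1\ln(m/\delta).
\]
Since on $\mathcal{I}_i$ we have $v_i\leq 2 D_T$, this yields $\Lambda_T \leq 4G_1\sqrt{D_T\ln(m/\delta)} + 4G_1\ln(m/\delta)$ after absorbing constants.

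Finally, a union bound over the $m$ slices shows that the bad event on the slice-by-slice Bernstein inequalities has total probability at most $\delta$. Hence, either $D_T\leq 4/T$ (in which case the first event of the lemma holds), or the peeled Bernstein bound applies and the second event holds, and the sum of their probabilities is at least $1-\delta$. The only delicate point is making sure the constants line up: I would tune the slice endpoints and the per-slice failure probability (and possibly replace $m$ by $2\lceil\log_2 T\rceil$ if a tighter accounting is needed) so that the final expression matches the statement $4G_1\sqrt{D_T\ln(m/\delta)}+4G_1\ln(m/\delta)$ exactly, which is a routine but error-prone constant-chasing step.
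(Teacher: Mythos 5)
Your proposal is correct and follows essentially the same route as the paper's own proof: the same martingale difference with $|\zeta_t(\x)|\le 4G_1$ and conditional variance at most $4G_1^2D_T$, the same dyadic peeling of $D_T$ starting at $4/T$, Bernstein's inequality for martingales on each slice with $\tau=\ln(m/\delta)$, and a union bound over the $m$ slices. Your side remark about possibly needing $2\lceil\log_2 T\rceil$ slices is well taken: since $D_T$ can be as large as $4T$ while $\tfrac{4}{T}2^{\lceil\log_2 T\rceil}\approx 4$, the paper's choice $m=\lceil\log_2 T\rceil$ does not quite cover the top of the range, and enlarging $m$ (which only changes $\ln(m/\delta)$ by a constant) is the right fix.
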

%\vspace{-4mm}
\begin{proof}[Proof of Theorem~\ref{thm:2}]
We substitute the bound in Lemma~\ref{lem:4} into the inequality in Lemma~\ref{lem:3} with $\x=\x_*$. We consider two cases. In the first case, we assume $D_T\leq 4/T$. As a result, we have
\begin{eqnarray*}
\sum_{t=1}^T\zeta_t(\x_*)=\sum_{t=1}^{T} \dd{\nabla f(\x_t) - \mathbf{g}(\x_t,\xi_t)}{\x_*-\x_t} \leq 2G_1\sqrt{TD_T} \leq 4G_1,
\end{eqnarray*}
which together with the inequality in Lemma~\ref{lem:3} leads to the bound
\[
    \sum_{t=1}^{T}{\mathcal{F}(\x_t) - \mathcal{F}(\x_*)} \leq 4G_1 +  \frac{(G_1^2+\lambda_0^2G_2^2)(1+\ln T)}{2\beta}.
\]
In the second case, we assume
\[
 \sum_{t=1}^T\zeta_t(\x_*) \leq 4G_1\sqrt{D_T\ln\frac{m}{\delta}} + 4G_1\ln\frac{m}{\delta} \leq \frac{\beta}{4}D_T + \left(\frac{16G_1^2}{\beta} + 4G_1\right)\ln\frac{m}{\delta},
\]
where the last step uses the fact $2\sqrt{ab} \leq a^2 + b^2$. We thus have
\[
 \sum_{t=1}^T\mathcal{F}(\x_t) - \mathcal{F}(\x_*) \leq   \left(\frac{16G_1^2}{\beta} + 4G_1\right)\ln\frac{m}{\delta} + \frac{(G_1^2+\lambda_0^2G_2^2)(1+\ln T)}{2\beta}.
\]
Combing the results of the two cases, we have, with a probability $1 - \delta$,

\begin{eqnarray}
\begin{aligned}
 \sum_{t=1}^T \mathcal{F}(\x_t) - \mathcal{F}(\x_*)&\leq   \underbrace{\left(\frac{16G_1^2}{\beta} + 4G_1\right)\ln\frac{m}{\delta} + 4G_1 +  \frac{(G_1^2+\lambda_0^2G_2^2)(1+\ln T)}{2\beta}}\limits_{= O(\ln T)}.
\end{aligned}
\end{eqnarray}

By convexity of $\mathcal{F}(\x)$, we have
$\mathcal{F}(\widehat\x_T) \leq \mathcal{F}(\x_*) + O\left(\ln T/T\right)$.
Noting that $\x_*\in \K$, $g(\x_*)\leq 0$, we have $\mathcal{F}(\x_*)\leq f(\x_*) + \gamma \ln 2$. On the other hand,
\[
\mathcal{F}(\widehat\x_T)= f(\widehat\x_T) + \gamma \ln \left(1+\exp\left(\frac{\lambda_0g(\widehat\x_T)}{\gamma}\right)\right)\geq f(\widehat\x_T) + \max\left(0, \lambda_0 g(\widehat\x_T)\right).
\]
Therefore, with the value of $\gamma =\ln T/T$,  we have 

\begin{eqnarray}
\begin{aligned}
&f(\widehat\x_T)  \leq f(\x_*) +  O\left(\frac{\ln T}{T}\right),\label{eqn:r1}\\
&f(\widehat\x_T) + \lambda_0 g(\widehat\x_T) \leq f(\x_*)+ O\left(\frac{\ln T}{T}\right).\label{eqn:r2}
\end{aligned}
\end{eqnarray}

Applying the inequalities~(\ref{eqn:g}) and~(\ref{eqn:f}) to~(\ref{eqn:r2}), and noting that $\gamma = \ln T/T$, we have
\[
\lambda_0 \rho \|\widehat\x_T  -\widetilde\x_T\|_2 \leq  G_1\|\widehat\x_T  -\widetilde\x_T\|_2 +  O\left(\frac{\ln T}{T}\right).
\]
For $\lambda_0> G_1/\rho$, we have $\|\widehat\x_T  -\widetilde\x_T\|_2\leq (1/(\lambda_0\rho - G_1)) O(\ln T/T)$.
%\[
%\|\widehat\x_T  -\widetilde\x_T\|_2\leq \frac{1}{\lambda_0\rho - G_1} O\left(\frac{\ln T}{T}\right).
%\]
Therefore
\begin{equation*}
\begin{aligned}
f(\widetilde\x_T)  \leq  & f(\widetilde\x_T) - f(\widehat\x_T) + f(\widehat\x_T) \\
\leq & G_1\|\widehat\x_T-\widetilde\x_T\|_2  +   f(\x_*)  +  O\left(\frac{\ln T}{{T}}\right) \\
\leq& f(\x_*) + O\left(\frac{\ln T}{{T}}\right),
\end{aligned}
\end{equation*}

%\[
%f(\widetilde\x_T) \leq  f(\widetilde\x_T) - f(\widehat\x_T) + f(\widehat\x_T) \leq G_1\|\widehat\x_T-\widetilde\x_T\|_2 + f(\x_*)  +  O\left(\frac{\ln T}{{T}}\right) \leq f(\x_*) + O\left(\frac{\ln T}{{T}}\right),
%\]
where in the second inequality we use inequality~(\ref{eqn:r1}).
\end{proof}

%%%%%%%%%%%%%%%%%%%%%%%%%%%%%%%%%%%%%%%%%%%%%%%%%%

\section{Online Optimization with Soft Constraints}
 
We now turn to making online learning algorithms more efficient by excluding the projection steps. To this end,  we consider an alternative online learning problem in which, instead of requiring that  each solution obeys the constraints, we only require the constraints, which define the convex domain $\W$,  to be satisfied in a long run. Then, the online learning problem becomes a task to find a sequence of solutions  under the  long term constraints.   We present and analyze an online gradient descent based algorithm for online convex optimization  with soft constraints. 

%We first show an impossibility theorem on the performance of a simple penalization idea. Then,  we describe an algorithm which is allowed to violate the constraints and then, by applying a simple trick, we propose a variant of this   algorithm which exactly satisfies the constraints in the long run. 

To facilitate our analysis, similar to the stochastic setting described in Section~\ref{sec-one-sgd-setup}, we assume that the domain $\W$ can be written as an intersection of a finite number of  convex constraints, that is,
\[ \K = \{\x \in \R^d: g_i(\x) \leq 0, i \in [m]\}, \]
where $g_i(\w), i \in [m]$,  are Lipschitz continuous functions. We assume that $\K$  is a bounded domain, that is, there exist  constants $R > 0$ and $r < 1$ such that $\K \subseteq R \mathbb{B}$  and $r \mathbb{B} \subseteq \K$ where $\mathbb{B}$ denotes the unit $\ell_2$ ball centered at the origin.  

We focus on the problem of online convex optimization as  introduced in Chapter~\ref{chap-background}, in which the goal is to achieve a low regret with respect to a fixed decision on a sequence of adversarially chosen cost functions. The difference between the setting considered here and the general online convex optimization is that, in our setting, instead of requiring $\x_t \in \K$, or equivalently $g_i(\x_t) \leq 0, i \in [m]$, for all $t \in [T]$, we only require the constraints to be satisfied in the long run, namely $\sum_{t=1}^T g_i(\x_t) \leq 0, i \in [m]$.
Then, the problem becomes to find a sequence of solutions $\x_t, t \in [T]$ that minimizes the regret, under the  long term constraints $\sum_{t=1}^T g_i(\x_t) \leq 0, i \in [m]$. Formally, we would like to solve the following optimization problem online,
\begin{eqnarray}
\label{eqn:setting1one}
\min_{\x_1, \ldots, \x_T \in \B} \sum_{t=1}^T f_t(\x_t) - \min\limits_{\x \in \K} \sum_{t=1}^T f_t(\x) \;\; \text{s.t.} \;\;\sum_{t = 1}^{T}{g_i(\x_t)} \leq 0\; \text{, } i \in [m].
\end{eqnarray}

For simplicity, we will focus on a finite-horizon setting where the number of rounds $T$ is known in advance. This condition can be relaxed under certain conditions, using standard techniques (see, e.g., \cite{bianchi-2006-prediction}). Note that in (\ref{eqn:setting1one}), (i) the solutions come from the ball $\B \supseteq \K$ instead of $\K$  and (ii) the constraint functions are fixed and are given in advance.

\subsection{An Impossibility Theorem}

Before we state our formulation and algorithms, let us review a few alternative techniques that do not need explicit projection. A straightforward approach is  to introduce an appropriate self-concordant barrier function for the given convex set $\K$ and add it to the objective function such that the barrier diverges at the boundary of the set. Then we can  interpret the resulting optimization problem, on the modified objective functions, as an unconstrained minimization problem that can  be solved without projection steps. Following the analysis in \cite{interior-ieee-2012}, with an appropriately designed procedure for  updating solutions, we could guarantee a regret bound of $O(\sqrt{T})$ without the violation of constraints. A similar idea is used in \cite{AbernethyHR08} for online bandit learning and in \cite{DBLP:conf/nips/NarayananR10} for a random walk approach for regret minimization which, in fact,  translates the issue of projection into the difficulty of sampling. Even for linear Lipschitz cost functions, the random walk approach requires sampling from a Gaussian distribution with covariance given by the Hessian of the self-concordant barrier of the convex set $\K$ that has the same  time complexity as inverting a  matrix. The main limitation with these approaches is that they require computing the Hessian matrix of the objective function in order to guarantee that the updated solution stays within the given domain $\K$. This limitation makes it computationally unattractive when dealing with high dimensional data. In addition, except for  well known cases, it is often unclear  how to efficiently construct a self-concordant barrier function for a general convex domain.

An alternative approach for online convex optimization with long term constraints is to introduce a penalty term in the loss function that penalizes the violation of constraints. More specifically, we can define a new loss function $\fh_t(\w)$ as
\begin{eqnarray}
\label{eqn:fhatone}
    \fh_t(\x) = f_t(\x) + \delta\sum_{i=1}^{m} [g_i(\x)]_+,
\end{eqnarray}
where $[z]_+ = \max(0, z)$  and $\delta > 0$ is a fixed positive constant used to penalize the violation of constraints. We then run the standard OGD algorithm from Chapter~\ref{chap-background} to minimize the modified loss function $\fh_t(\w)$. The following theorem shows that this simple strategy fails to achieve sub-linear bound for both  regret and the long term violation of constraints at the same time.

\begin{theorem} \label{thm:aone}
Given $\delta > 0$, there always exists a sequence of loss functions $f_1, f_2, \cdots, f_T$ and a constraint function $g(\x)$ such that either $\sum_{t=1}^T f_t(\x_t) - \min_{g(\x) \leq 0} \sum_{t=1}^T f_t(\x) = O(T)$ or $\sum_{t=1}^T [g(\x_t)]_+ = O(T)$ holds, where $\w_1, \w_2, \cdots, \w_T$ is the sequence of solutions generated by the OGD algorithm that minimizes the modified loss functions given in (\ref{eqn:fhatone}).
\end{theorem}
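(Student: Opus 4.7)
The plan is to construct a single adversarial instance that exposes the rigidity of a fixed penalty coefficient $\delta$: no matter what value of $\delta$ the algorithm picks, the adversary can simply scale its losses enough to overwhelm the penalty term and force OGD to violate the constraint linearly in $T$. I would make the instance as simple as possible, working in one dimension with a constant linear loss.

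Concretely, take $d=1$, $\B=[-1,1]$, and $g(w)=|w|-1/2$, so that $\K=[-1/2,1/2]$ and $r\B\subseteq\K\subseteq R\B$ with $r=1/2$, $R=1$. Given the algorithm's fixed penalty parameter $\delta>0$, let the adversary present the constant linear loss
\[
f_t(w) = -(\delta+1)\, w, \qquad t=1,\ldots,T.
\]
The offline benchmark is $w^\star=1/2$, giving $\min_{w\in\K}\sum_t f_t(w) = -T(\delta+1)/2$. A direct case analysis over $w<-1/2$, $|w|\le 1/2$, and $w>1/2$ produces the key identity
\[
\fh_t(w) = -(\delta+1)w + \delta\bigl[|w|-\tfrac{1}{2}\bigr]_+ \;\geq\; -w - \tfrac{\delta}{2}
\qquad\text{for every } w\in\B,
\]
with equality on $(1/2,1]$, and the global minimum of $\fh_t$ over $\B$ is attained at $w=1$ with value $-1-\delta/2$.

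From here the argument is a short regret-to-violation transfer. Since $\fh_t$ is convex and $(2\delta+1)$-Lipschitz on $\B$, the standard OGD guarantee (Theorem~\ref{thm-chapter-2-ogd}) yields $\sum_{t=1}^T \fh_t(w_t) \leq T(-1-\delta/2) + O(\sqrt{T})$. Chaining with the key identity gives
\[
-\sum_{t=1}^T w_t - \tfrac{T\delta}{2} \;\leq\; \sum_{t=1}^T \fh_t(w_t) \;\leq\; -T - \tfrac{T\delta}{2} + O(\sqrt{T}),
\]
so the $T\delta/2$ terms cancel and $\sum_t w_t \geq T-O(\sqrt{T})$. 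Because $[|w|-1/2]_+ \geq w-1/2$ for every $w\in\B$, this finally produces
\[
\sum_{t=1}^T [g(w_t)]_+ \;\geq\; \sum_{t=1}^T w_t \;-\; \tfrac{T}{2} \;\geq\; \tfrac{T}{2} - O(\sqrt{T}) \;=\; \Omega(T),
\]
so the long-term constraint violation grows linearly in $T$, proving the impossibility claim.

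The delicate step -- and the only real obstacle -- is calibrating the slope of $f_t$ strictly above $\delta$. With slope $\le \delta$ (for instance, the naive choice $f_t(w)=-w$ once $\delta \ge 1$), the penalty dominates and OGD stays feasible, and the construction collapses; the choice $-(\delta+1)$ guarantees that on $\{w>1/2\}$ the effective slope of $\fh_t$ is still $-1$, pulling the penalized optimum to $w=1$. The lower bound $\fh_t(w)\geq -w-\delta/2$ is what makes the regret guarantee on the \emph{penalized} losses translate into a lower bound on the linear functional $\sum_t w_t$ that drives the violation; without this clean inequality, the cancellation of the $T\delta/2$ terms would not occur. Extensions to higher dimensions or multiple constraints can be obtained by embedding the one-dimensional construction along a single coordinate, so no additional machinery is needed.
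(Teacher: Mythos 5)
Your proof is correct, and it takes a genuinely different route from the paper's. The paper argues in two cases: for $\delta<1$ it tracks the OGD iterates explicitly on the instance $f_t(\mathbf{w})=\langle \mathbf{w},\mathbf{x}\rangle$, $g(\mathbf{w})=1-\langle\mathbf{w},\mathbf{x}\rangle$ and shows that, started infeasible, they remain infeasible on every round; for $\delta\ge1$ it gives an algorithm-independent bound $\sum_t f(\mathbf{w}_t)+\delta[g(\mathbf{w}_t)]_+\geq T\min_{g\le 0}f+T\Delta$ with $\Delta=f(\mathbf{w}_a)-f(\mathbf{w}_*)$, where $\mathbf{w}_a$ minimizes the penalized objective, and concludes that regret and violation cannot both be sublinear. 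That second case is the delicate one: since the penalized minimum value never exceeds the constrained minimum, one actually has $\Delta\le 0$, and the paper's claim $\Delta>0$ rests on a first-order comparison that ignores the subdifferential of $[\cdot]_+$ at the boundary (exact-penalty theory gives $\mathbf{w}_a=\mathbf{w}_*$ once $\delta$ exceeds the true multiplier). Your construction sidesteps this entirely: by calibrating the loss slope to $\delta+1$ you treat every $\delta>0$ uniformly, and the inequality $\widehat{f}_t(w)\ge -w-\delta/2$, tight beyond the boundary, converts the standard $O(\sqrt{T})$ OGD regret guarantee on the penalized losses into an explicit $\Omega(T)$ lower bound on $\sum_t[g(w_t)]_+$. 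The trade-off is that your argument is not algorithm-independent --- it needs OGD to be run with a step size that actually achieves $O(\sqrt{T})$ regret on $\widehat{f}_t$, and the hidden constants depend on $\delta$ through the Lipschitz constant $2\delta+1$ --- but in exchange it is fully explicit, one-dimensional, verifiable line by line, and pins down which disjunct of the theorem fails.
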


 \begin{proof}
 We first show that when $\delta < 1$, there exists a loss function and a constraint function such that the violation of constraint is linear in $T$. To see this, we set $f_t(\x) = \dd{\x}{\mathbf{x}}, t \in [T]$ and $g(\x) = 1 - \dd{\x}{\mathbf{x}}$. Assume we start with an infeasible solution, that is, $g(\x_1) > 0$ or $\dd{\x_1}{\mathbf{x}} < 1$. Given the solution $\x_t$ obtained at $t$th trial, using the standard gradient descent approach, we have $\x_{t+1} = \x_t - \eta (1 - \delta)\mathbf{x}$. Hence, if $\dd{\x_t}{\mathbf{x}} < 1$, since we have $\dd{\x_{t+1}}{\mathbf{x}} < \dd{\x_t}{\mathbf{x}} < 1$,  if we start with an infeasible solution, all the solutions obtained over the trails will violate the constraint $g(\x) \leq 0$, leading to a linear number of violation of constraints. Based on this analysis, we assume $\delta > 1$ in the analysis below.

Given a strongly convex loss function $f(\x)$ with modulus $\gamma$, we consider a constrained optimization problem given by
\begin{eqnarray*}
    \min\limits_{g(\x) \leq 0} f(\x), \label{eqn:constrain-opt}
\end{eqnarray*}
which is equivalent to the following unconstrained optimization problem
\[
    \min\limits_{\x} f(\x) + \lambda [g(\x)]_+,
\]
where $\lambda \geq 0$ is the Lagrangian multiplier. Since we can always scale $f(\x)$ to make $\lambda \leq 1/2$, it is safe to assume $\lambda \leq 1/2 < \delta$. Let $\x_*$ and $\x_a$ be the optimal solutions to the constrained optimization problems $\arg\min_{g(\x) \leq 0} f(\x)$ and $\arg\min\limits_{\x} f(\x) + \delta [g(\x)]_+$, respectively. We choose $f(\x)$ such that $\|\nabla f(\x_*)\| > 0$, which leads to $\x_a \neq \x_*$. This holds because according to the first order optimality condition, we have
\[
    \nabla f(\x_*) = -\lambda\nabla g(\x_*), \; \nabla f(\x_a) = -\delta \nabla g(\x_*),
\]
and therefore $\nabla f(\x_*) \neq \nabla f(\x_a)$ when $\lambda < \delta$. Define $\Delta = f(\x_a) - f(\x_*)$. Since $\Delta \geq \gamma\|\x_a - \x_*\|^2/2$ due to the strong convexity of $f(\x)$, we have $\Delta > 0$.

Let $\{\x_t\}_{t=1}^T$ be the sequence of solutions generated by the OGD algorithm that minimizes the modified loss function $f(\x) + \delta [g(\x)]_+$. We have
\begin{eqnarray*}
&&\sum_{t=1}^T f(\x_t) + \delta[g(\x_t)]_+ \geq T\min\limits_{\x} f(\x) + \delta [g(\x)]_+ \\
& = & T(f(\x_a) + \delta[g(\x_a)]_+) \geq T(f(\x_a) + \lambda[g(\x_a)]_+) \\
& = & T(f(\x_*) + \lambda[g(\x_*)]_+) + T(f(\x_a) + \lambda[g(\x_a)]_+ - f(\x_*) - \lambda [g(\x_*)]) \\
& \geq & T\min\limits_{g(\x) \leq 0} f(\x) + T\Delta.
\end{eqnarray*}
As a result, we have
\[
    \sum_{t=1}^T f(\x_t) + \delta[g(\x_t)]_+ - \min\limits_{g(\x) \leq 0} f(\x) = O(T),
\]
implying that either the regret $\sum_{t=1}^T f(\x_t) - Tf(\x_*)$ or the violation of the constraints $\sum_{t=1}^T [g(\x)]_+$ is linear in $T$.
\end{proof}

To better understand the performance of penalty based approach, here we analyze the performance of the OGD in solving the  online optimization problem in (\ref{eqn:setting1one}). The algorithm is analyzed using the following lemma from \cite{DBLP:conf/icml/Zinkevich03} (see also Theorem~\ref{thm-chapter-2-ogd} in Chapter~\ref{chap-background}).

\begin {lemma} 
\label{lemma:basic-ogd}
Let $\x_1, \x_2, \ldots, \x_T \in \K$ be the sequence of solutions obtained by applying OGD on the sequence of bounded convex functions $f_1, f_2, \ldots, f_T$. Then, for any  solution $\x_* \in \K$ we have
\[ \sum_{t=1}^{T}{f_t(\x_t)} - \sum_{t=1}^{T}{f_t(\x_*)} \leq \frac{R^2}{2\eta} + \frac{\eta}{2} \sum_{t=1}^{T}{\| \nabla f_t(\x_t)\|^2}. \]
\end{lemma}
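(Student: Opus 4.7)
The plan is to follow the classical analysis of Online Gradient Descent due to Zinkevich. The key idea is to use convexity of each $f_t$ to upper bound the instantaneous regret by a linear term, and then bound that linear term via a telescoping argument on the distance $\|\x_t - \x_*\|^2$ that is induced by the update rule. I expect no real obstacle here since the lemma is standard; the only care needed is in invoking non-expansiveness of the Euclidean projection onto $\K$.

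First I would fix an arbitrary $\x_* \in \K$ and, by convexity of $f_t$, write
\[
f_t(\x_t) - f_t(\x_*) \leq \langle \nabla f_t(\x_t), \x_t - \x_* \rangle.
\]
The task then reduces to bounding $\sum_{t=1}^T \langle \nabla f_t(\x_t), \x_t - \x_* \rangle$. To do that, I would expand the squared distance after one OGD step. Because the update is $\x_{t+1} = \Pi_{\K}(\x_t - \eta \nabla f_t(\x_t))$ and $\x_* \in \K$, the non-expansiveness of $\Pi_{\K}$ gives
\[
\|\x_{t+1} - \x_*\|^2 \leq \|\x_t - \eta \nabla f_t(\x_t) - \x_*\|^2 = \|\x_t - \x_*\|^2 - 2\eta \langle \nabla f_t(\x_t), \x_t - \x_* \rangle + \eta^2 \|\nabla f_t(\x_t)\|^2.
\]
Rearranging yields
\[
\langle \nabla f_t(\x_t), \x_t - \x_* \rangle \leq \frac{\|\x_t - \x_*\|^2 - \|\x_{t+1} - \x_*\|^2}{2\eta} + \frac{\eta}{2} \|\nabla f_t(\x_t)\|^2.
\]

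Next I would sum this inequality from $t=1$ to $T$. The distance terms telescope, leaving
\[
\sum_{t=1}^T \langle \nabla f_t(\x_t), \x_t - \x_* \rangle \leq \frac{\|\x_1 - \x_*\|^2 - \|\x_{T+1} - \x_*\|^2}{2\eta} + \frac{\eta}{2} \sum_{t=1}^T \|\nabla f_t(\x_t)\|^2.
\]
Dropping the non-positive term $-\|\x_{T+1} - \x_*\|^2/(2\eta)$ and using the fact that $\K \subseteq R\mathbb{B}$ (so that $\|\x_1 - \x_*\| \leq R$ since both points lie in $\K$), we conclude
\[
\sum_{t=1}^T f_t(\x_t) - \sum_{t=1}^T f_t(\x_*) \leq \frac{R^2}{2\eta} + \frac{\eta}{2} \sum_{t=1}^T \|\nabla f_t(\x_t)\|^2,
\]
which is the claimed bound. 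The whole argument is essentially a one-step potential-function analysis with $\Phi_t = \|\x_t - \x_*\|^2/(2\eta)$ as the potential.
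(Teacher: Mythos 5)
Your proof is correct and is exactly the standard Zinkevich potential argument that the paper itself invokes by citation rather than reproving (convexity to linearize the regret, non-expansiveness of $\Pi_{\K}$, and telescoping of $\|\x_t-\x_*\|^2/(2\eta)$). One small slip in your final justification: from $\K \subseteq R\mathbb{B}$ alone, two points of $\K$ can be up to $2R$ apart, so $\|\x_1-\x_*\|\leq R$ does not follow merely from both points lying in $\K$; it holds here because the algorithm initializes at $\x_1=\mathbf{0}$ (so $\|\x_1-\x_*\|=\|\x_*\|\leq R$), or equivalently if $R$ is taken to be the diameter of $\K$ as in the corresponding statement of Chapter~2.
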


We apply OGD to  functions $\fh_t(\x), \; t \in [T]$ defined in (\ref{eqn:fhatone}), that is, instead of updating the solution based on the gradient of $f_t(\x)$, we update the solution by the gradient of $\fh_t(\x)$. Using Lemma \ref{lemma:basic-ogd}, by expanding the functions $\fh_t(\x)$ based on (\ref{eqn:fhatone}) and considering the fact that $\sum_{i=1}^{m}{[g_i(\x_*)]_{+}^2} = 0$, we get
\begin{eqnarray}
\label{eqn:basic-inq}
\sum_{t=1}^{T}{f_t(\x_t)} - \sum_{t=1}^{T}{f_t(\x_*)} + \frac{\delta}{2} \sum_{t=1}^{T}{\sum_{i=1}^{m}{[g_i(\x)]_{+}^2}} \leq \frac{R^2}{2\eta} + \frac{\eta}{2} \sum_{t=1}^{T}{\| \nabla \fh_t(\x_t)\|^2}. 
\end{eqnarray}
From the definition of $\fh_t(\x)$, the norm of the gradient $ \nabla \fh_t(\x_t)$ is bounded as follows
\begin{eqnarray}
\label{eqn:gupper}
\| \nabla \fh_t(\x)\|^2 = \| \nabla f_t(\x) + \delta \sum_{i = 1}^{m}{[g_i(\x)]_+ \nabla g_i(\x)}\|^2  
  \leq  2 G^2 (1+m\delta^2 D^2),
\end{eqnarray}
where the inequality holds because $(a_1+a_2)^2 \leq 2(a_1^2+a_2^2)$. By substituting (\ref{eqn:gupper}) into the (\ref{eqn:basic-inq}) we have:
\begin{eqnarray}
\label{eqn:5one}
\sum_{t=1}^{T}{f_t(\x_t)} - \sum_{t=1}^{T}{f_t(\x_*)} + \frac{\delta}{2} \sum_{t=1}^{T}{\sum_{i=1}^{m}{[g_i(\x_t)]_{+}^2}} \leq \frac{R^2}{2\eta} + \eta G^2 (1+m\delta^2 D^2) T.
\end{eqnarray}
Since $[\cdot]_{+}^2$ is a convex function, from Jensen's inequality and following the fact that 
$\sum_{t=1}^T f_t(\x_t) - f_t(\x_*) \geq - FT$, we have:
 \begin{eqnarray*}
\frac{\delta}{2T}  \sum_{i=1}^{m}{ \left [ \sum_{t=1}^{T}{g_i(\x_t)} \right]_{+}^2 } \leq \frac{\delta}{2}\sum_{i=1}^{m}{ \sum_{t=1}^{T}{[g_i(\x_t)]_{+}^2}}  \leq \frac{R^2}{2\eta} + \eta G^2 (1+m\delta^2 D^2) T + FT.
 \end{eqnarray*}
By minimizing the right hand side of (\ref{eqn:5one}) with respect to $\eta$, we get the regret bound as
\begin{eqnarray}
\label{eqn:penalty-regret}
 \sum_{t=1}^{T}{f_t(\x_t)} - \sum_{t=1}^{T}{f_t(\x_*)} \leq RG \sqrt{2(1+m\delta^2D^2)T} = \O(\delta \sqrt{T})
 \end{eqnarray}
and the bound for the violation of constraints as
\begin{eqnarray}
\label{eqn:violation-bound}
\sum_{t=1}^{T}{g_i(\x_t)}  \leq \sqrt{ \left( \frac{R^2}{2\eta} + \eta G^2 (1+m\delta^2 D^2) T + FT \right) \frac{2T}{ \delta} } = \O (T^{1/4} \delta^{1/2} + T \delta^{-1/2}).
 \end{eqnarray}
By examining the bounds obtained in (\ref{eqn:penalty-regret}) and (\ref{eqn:violation-bound}), it turns out that in order to recover $\O(\sqrt{T})$ regret bound, we need to set $\delta$ to be a constant, leading to  $\O(T)$ bound for the violation of constraints in the long run, which is not satisfactory at all.  The analysis shows that in order to obtain $O(\sqrt{T})$ regret bound, linear bound on the long term violation of the constraints is unavoidable. The main reason for the failure of using  modified loss function in (\ref{eqn:fhatone}) is that the weight constant $\delta$ is fixed and independent from the sequence of solutions obtained so far. In the next subsection, we present an online 
convex-concave formulation for online convex optimization with long term constraints, which explicitly addresses the limitation of (\ref{eqn:fhatone}) by automatically adjusting the weight constant based on the violation of the solutions obtained so far.

As mentioned before, our general strategy is to turn online convex optimization with long term constraints into a convex-concave optimization problem. Instead of generating a sequence of solutions that satisfies the long term constraints, we first consider an online optimization strategy that allows  the violation of constraints on some rounds in a controlled way. We then modify the online optimization strategy to obtain a sequence of solutions that obeys the long term constraints. Although the online convex optimization with long term constraints is clearly easier than the standard online convex optimization problem, it is straightforward to see that optimal regret bound for online optimization with long term constraints should be on the order of $\O(\sqrt{T})$, no better than the standard online convex optimization problem.

\subsection[An Online Algorithm with Vanishing Violation of Constraints]{An Efficient Algorithm with $\O(\sqrt{T})$ Regret Bound and $\O(T^{3/4})$ Bound on the Violation of Constraints}
The intuition behind  our approach stems from the observation that the constrained optimization problem $\min_{\x \in \K} \sum_{t=1}^T f_t(\x)$ is equivalent to the following convex-concave optimization problem
\begin{eqnarray}
\label{eqn:saddle}
    \min\limits_{\x \in \B} \max\limits_{\lambda \in \R_+^m} \sum_{t=1}^T f_t(\x) + \sum_{i=1}^m \lambda_i g_i(\x), \label{eqn:convex-concave}
\end{eqnarray}
where  $\blambda = (\lambda_1, \ldots, \lambda_m)^{\top}$ is the vector of Lagrangian multipliers  associated with the constraints $g_i(\cdot), i = 1, \ldots, m$ and belongs to the nonnegative orthant $\R_+^m$.  To solve the online convex-concave optimization problem, we extend the gradient based approach for variational inequality  \cite{nemirovski-efficient} to (\ref{eqn:convex-concave}). To this end, we consider the following regularized convex-concave function  as
\begin{eqnarray}\label{eqn:cox-con-func}
    \L_t(\x, \blambda) = f_t(\x) + \sum_{i=1}^m \left\{\lambda_i g_i(\x) - \frac{\delta \eta}{2} \lambda_i^2\right\},
\end{eqnarray}
where $\delta > 0$  is a constant whose value will be decided by the analysis. Note that in~ (\ref{eqn:cox-con-func}), we introduce a regularizer $\delta \eta \lambda_i^2/2$ to prevent $\lambda_i$ from being too large. This is because, when $\lambda_i$ is large, we may encounter a large gradient for $\x$ because of $\nabla_{\x}\L_t(\x, \blambda) \propto \sum_{i=1}^m \lambda_i \nabla g_i(\x)$, leading to unstable solutions and a poor regret bound. Although we can achieve the same goal by restricting $\lambda_i$ to a bounded domain, using the quadratic regularizer makes it convenient for our analysis. We also note that the convex-concave function defined in~(\ref{eqn:cox-con-func}) is different from the stochastic setting as it changes over the iterations, while in~\ref{eqn-9-fixed-convex-concave} the function is fixed.

Algorithm~\ref{alg:1-soft} shows the detailed steps of the proposed algorithm. Unlike standard online convex optimization algorithms that only update $\x$, Algorithm~\ref{alg:1-soft} updates both $\x$ and $\blambda$. In addition, unlike the modified loss function in (\ref{eqn:fhatone}) where the weights for constraints $\{g_i(\x) \leq 0\}_{i=1}^m$ are fixed, Algorithm~\ref{alg:1-soft} automatically adjusts the weights $\{\lambda_i\}_{i=1}^m$ based on $\{g_i(\x)\}_{i=1}^m$, the violation of constraints, as the game proceeds. It is this property that allows Algorithm~\ref{alg:1-soft} to achieve sub-linear bound for both  regret and  the violation of  constraints. 

To analyze Algorithm \ref{alg:1-soft}, we first state the following lemma, the key to the main theorem on the regret bound and the violation of constraints.

\begin{lemma}
\label{lemma:basic-ineq}
Let $\L_t(\cdot, \cdot)$ be the function defined in (\ref{eqn:cox-con-func}) which is convex in its first argument and concave in its second argument. Then for any $(\x, \blambda)  \in \B \times  \R_+^m$ we have
\begin{eqnarray*}
\begin{aligned}
  \L_t(\x_t, \blambda) - \L_t(\x, \blambda_t) & \leq  \frac{1}{2\eta} (\|\x - \x_t\|^2 + \|\blambda - \blambda_t\|^2 - \|\x - \x_{t+1}\|^2 - \|\blambda - \blambda_{t+1}\|^2) \\ & \hspace{5mm}+ \frac{\eta}{2} ( \|\nabla_{\x}\L_t(\x_t, \blambda_t)\|^2 + \|\nabla_{\blambda}\L_t(\x_t, \blambda_t)\|^2 ).
\end{aligned}
\end{eqnarray*}

\end{lemma}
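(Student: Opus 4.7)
The plan is to establish this as a standard primal-dual variational-inequality bound, tailored to the simultaneous gradient-based updates that Algorithm~\ref{alg:1-soft} (though its explicit steps are not reproduced in the excerpt) is expected to perform, namely
\begin{equation*}
\x_{t+1}=\Pi_{\B}\!\bigl(\x_t-\eta\nabla_{\x}\L_t(\x_t,\blambda_t)\bigr),\qquad \blambda_{t+1}=\Pi_{\R_+^m}\!\bigl(\blambda_t+\eta\nabla_{\blambda}\L_t(\x_t,\blambda_t)\bigr).
\end{equation*}
First, I would split the left-hand side using the convex–concave structure of $\L_t$. Convexity in the first argument yields $\L_t(\x_t,\blambda_t)-\L_t(\x,\blambda_t)\leq \langle\nabla_{\x}\L_t(\x_t,\blambda_t),\x_t-\x\rangle$, and concavity in the second argument yields $\L_t(\x_t,\blambda)-\L_t(\x_t,\blambda_t)\leq \langle\nabla_{\blambda}\L_t(\x_t,\blambda_t),\blambda-\blambda_t\rangle$. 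Adding these two inequalities reduces the task to bounding
\begin{equation*}
\langle\nabla_{\x}\L_t(\x_t,\blambda_t),\x_t-\x\rangle+\langle\nabla_{\blambda}\L_t(\x_t,\blambda_t),\blambda-\blambda_t\rangle.
\end{equation*}

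Second, I would apply the standard one-step analysis of projected gradient descent/ascent to each inner product separately. For the primal update, non-expansiveness of $\Pi_{\B}$ (since $\B$ is convex) gives
\begin{equation*}
\|\x-\x_{t+1}\|^2\leq \|\x-\x_t+\eta\nabla_{\x}\L_t(\x_t,\blambda_t)\|^2,
\end{equation*}
and expanding the square and rearranging produces
\begin{equation*}
\langle\nabla_{\x}\L_t(\x_t,\blambda_t),\x_t-\x\rangle\leq\tfrac{1}{2\eta}\bigl(\|\x-\x_t\|^2-\|\x-\x_{t+1}\|^2\bigr)+\tfrac{\eta}{2}\|\nabla_{\x}\L_t(\x_t,\blambda_t)\|^2.
\end{equation*}
An entirely symmetric computation, using non-expansiveness of $\Pi_{\R_+^m}$ and the fact that the dual update moves \emph{along} the dual gradient (so the sign flips in the correct way), yields
\begin{equation*}
\langle\nabla_{\blambda}\L_t(\x_t,\blambda_t),\blambda-\blambda_t\rangle\leq\tfrac{1}{2\eta}\bigl(\|\blambda-\blambda_t\|^2-\|\blambda-\blambda_{t+1}\|^2\bigr)+\tfrac{\eta}{2}\|\nabla_{\blambda}\L_t(\x_t,\blambda_t)\|^2.
\end{equation*}

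Third, I would simply add these two inequalities and combine with Step~1 to reach the claimed bound. There is really no major obstacle here: the result is a routine primal-dual extension of Zinkevich's projected-gradient lemma, and the only subtlety is making sure that the comparator points $\x\in\B$ and $\blambda\in\R_+^m$ lie in the respective projection sets so that $\|\Pi_{\B}(\cdot)-\x\|\leq\|\cdot-\x\|$ and analogously for $\blambda$, which is exactly why the statement restricts to $(\x,\blambda)\in\B\times\R_+^m$. The one mild care point is the sign convention for the dual update (ascent rather than descent), but tracking it through the expansion of the square resolves it cleanly.
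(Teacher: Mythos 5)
Your proposal is correct and follows essentially the same route as the paper's proof: split the left-hand side via convexity in $\x$ and concavity in $\blambda$, then bound each resulting inner product by the standard one-step projected gradient/ascent expansion using non-expansiveness of $\Pi_{\B}$ and $\Pi_{\R_+^m}$, and add. The update rules you posit are exactly those of Algorithm~\ref{alg:1-soft}, and your handling of the dual ascent sign matches the paper's argument.
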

\begin{proof}Following the analysis of \cite{DBLP:conf/icml/Zinkevich03}, convexity of $\L_t(\cdot, \blambda)$ implies that
\begin{eqnarray}
\L_t(\x_t, \blambda_t) - \L_t(\x, \blambda_t) \leq \dd{\x_{t} - \x}{\nabla_\x \L_t(\x_t, \blambda_t)}
\label{eqn:conv}
\end{eqnarray}
and by concavity of $\L_t( \x, \cdot)$ we have
\begin{eqnarray}
\L_t(\x_t, \blambda) - \L_t(\x_t, \blambda_t) \leq \dd{\blambda - \blambda_{t}}{\nabla_{\blambda} \L_t(\x_t, \blambda_t)}.
\label{eqn:conc}
\end{eqnarray}
Combining the inequalities (\ref{eqn:conv}) and (\ref{eqn:conc}) results in
\begin{eqnarray}
\L_t(\x_t, \blambda) - \L_t(\x, \blambda_t) \leq \dd{\x_{t} - \x}{\nabla_\x \L_t(\x_t, \blambda_t)} - \dd{\blambda - \blambda_{t}}{\nabla_{\blambda} \L_t(\x_t, \blambda_t)}.
\label{eqn:convconc}
\end{eqnarray}
Using the update rule for $\x_{t+1}$ in terms of $\x_t$ and expanding, we get
\begin{eqnarray}
\label{eqn:x-inq}
\|\x-\x_{t+1} \|^2 \leq \|\x-\x_t\|^2 - 2 \eta \dd{\x_{t} - \x}{\nabla_\x \L_t(\x_t, \blambda_t)} + \eta^2 \| \nabla_\x \L_t(\x_t, \blambda_t)\|^2,
\end{eqnarray}
where the first inequality follows from the nonexpansive property of the projection operation (see Lemma~\ref{lemma-app-a-non-exp}). Expanding the inequality for $\|\blambda - \blambda_{t+1} \|^2$ in terms of $\blambda_t$ and plugging back into the (\ref{eqn:convconc}) with (\ref{eqn:x-inq}) establishes the desired inequality.
\end{proof}
\begin{algorithm}[tb]
\caption{Online Gradient Descent  with Soft Constraints}
\begin{algorithmic}[1] \label{alg:1-soft}
    \STATE {\textbf{Input}}: 
    \begin{mylist}
    \item constraints $g_i(\x) \leq 0, i \in [m]$
    \item step size $\eta$
    \item constant $\delta > 0$
    \end{mylist}
    \STATE {\textbf{Initialize}}: $\x_1 = \mathbf{0}$ and $\blambda_1 = \mathbf{0}$
    \FOR{$t = 1, 2, \ldots, T$}
        \STATE Submit solution $\x_t$
        \STATE Receive the convex function $f_t(\cdot)$ and suffer loss $f_t(\x_t)$
        \STATE Compute the gradients as:
        \begin{eqnarray*}
&&        \nabla_\x \L_t(\x_t, \blambda_t) =\nabla f_t(\x_t) + \sum_{i=1}^m \lambda_t^i \nabla g_i(\x_t) \\ 
&&        \nabla_{\lambda_i}\L_t(\x_t, \blambda_t)  =  g_i(\x_t) - \eta \delta \lambda_t^i
                \end{eqnarray*}
        \STATE Update $\x_t$ and $\blambda_t$ by:
        \begin{eqnarray*}
        && \x_{t+1} = \Pi_{\B}\left(\x_t - \eta \nabla_\x \L_t(\x_t, \blambda_t)\right) \\  
        && \blambda_{t+1} = \Pi_{[0, +\infty)^m}(\blambda_t + \eta \nabla_{\blambda} \L_t(\x_t, \blambda_t))
        \end{eqnarray*}
    \ENDFOR
\end{algorithmic}
\end{algorithm}
\begin{proposition}
\label{prop:2}
 Let $\x_t$ and $\blambda_t, t \in [T]$ be the sequence of solutions obtained by Algorithm~\ref{alg:1-soft}. Then for any $\x \in \B$ and $\blambda  \in \R_+^m$, we have

\begin{eqnarray}
\begin{aligned}
\label{eqn:center}
& \lefteqn{\sum_{t=1}^T \L_t(\x_t, \blambda) - \L_t(\x, \blambda_t)}  \\
 & \leq \frac{R^2 + \|\blambda\|^2}{2\eta} + \frac{\eta T}{2 }\left((m+1)G^2 + 2mD^2\right) + \frac{\eta }{2} \left((m +1)G^2 + 2m \delta^2\eta^2\right)\sum_{t=1}^T \|\blambda_t\|^2. \nonumber
\end{aligned}
\end{eqnarray}

\end{proposition}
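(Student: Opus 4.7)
The plan is to instantiate Lemma~\ref{lemma:basic-ineq} at every round, telescope the resulting distance terms across $t=1,\ldots,T$, and then carefully bound the two gradient norms appearing on the right-hand side using the structural form of $\L_t(\x,\blambda)$ in~\eqref{eqn:cox-con-func} together with the boundedness assumptions on $\|\nabla f_t\|$, $\|\nabla g_i\|$, and $|g_i|$.

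First, I would apply Lemma~\ref{lemma:basic-ineq} at the pair $(\x,\blambda)$ for each $t$ and sum over $t=1,\ldots,T$. Since the squared-distance terms telescope and the final $-\|\x-\x_{T+1}\|^2-\|\blambda-\blambda_{T+1}\|^2$ is nonpositive, we obtain
\[
\sum_{t=1}^{T}\bigl(\L_t(\x_t,\blambda)-\L_t(\x,\blambda_t)\bigr)\le \frac{\|\x-\x_1\|^2+\|\blambda-\blambda_1\|^2}{2\eta}+\frac{\eta}{2}\sum_{t=1}^{T}\bigl(\|\nabla_\x\L_t(\x_t,\blambda_t)\|^2+\|\nabla_\blambda\L_t(\x_t,\blambda_t)\|^2\bigr).
\]
Using the initialization $\x_1=\mathbf{0}$, $\blambda_1=\mathbf{0}$, and $\x\in\B\subseteq R\mathbb{B}$ (or, equivalently, bounding $\|\x\|\le R$ from the enclosing ball of $\K$), the first term is upper bounded by $(R^2+\|\blambda\|^2)/(2\eta)$, which matches the leading term in the target bound.

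The main work is then to bound the two gradient-norm sums. For the primal gradient, I will expand
\[
\nabla_\x \L_t(\x_t,\blambda_t)=\nabla f_t(\x_t)+\sum_{i=1}^{m}\lambda_t^i\,\nabla g_i(\x_t),
\]
and apply the elementary inequality $\|\sum_{k=0}^{m}\u_k\|^2\le (m+1)\sum_{k=0}^{m}\|\u_k\|^2$ together with the bounds $\|\nabla f_t(\x_t)\|\le G$ and $\|\nabla g_i(\x_t)\|\le G$, giving
\[
\|\nabla_\x\L_t(\x_t,\blambda_t)\|^2 \;\le\; (m+1)G^2\bigl(1+\|\blambda_t\|^2\bigr).
\]
For the dual gradient, the $i$th component is $g_i(\x_t)-\delta\eta\,\lambda_t^i$, so using $(a-b)^2\le 2a^2+2b^2$ coordinate-wise, summing over $i$, and bounding $|g_i(\x_t)|\le D$ yields
\[
\|\nabla_\blambda\L_t(\x_t,\blambda_t)\|^2 \;\le\; 2mD^2+2\delta^2\eta^2\|\blambda_t\|^2,
\]
which after absorbing the coordinate-wise pairing into the dimension-$m$ factor produces the $2m\delta^2\eta^2\|\blambda_t\|^2$ contribution claimed in the proposition.

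Finally, I would substitute these two gradient bounds into the telescoped inequality, separate the constant part (summing to $((m+1)G^2+2mD^2)T$) from the part that scales with $\|\blambda_t\|^2$ (with coefficient $(m+1)G^2+2m\delta^2\eta^2$), and multiply through by $\eta/2$ to match the right-hand side of the proposition. There is no genuine obstacle here: once Lemma~\ref{lemma:basic-ineq} is in hand the argument is essentially a bookkeeping exercise. The only mildly delicate point is to make sure that the $(m+1)$ factor from the Cauchy/Jensen-type expansion of $\|\nabla_\x \L_t\|^2$ is applied uniformly so that the same coefficient appears in front of both the constant term and the $\|\blambda_t\|^2$ term, and that the dual gradient bound is written in the $2m\delta^2\eta^2$ normalization to match the statement.
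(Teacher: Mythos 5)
Your proposal is correct and follows essentially the same route as the paper: apply Lemma~\ref{lemma:basic-ineq} at each round, telescope the distance terms using $\x_1=\blambda_1=\mathbf{0}$ and $\|\x\|\le R$, and bound $\|\nabla_\x\L_t\|^2$ by $(m+1)G^2(1+\|\blambda_t\|^2)$ and $\|\nabla_\blambda\L_t\|^2$ by $2m(D^2+\delta^2\eta^2\|\blambda_t\|^2)$ via the same Cauchy--Schwarz/Jensen-type expansions. The only difference is cosmetic: your dual-gradient bound is slightly tighter ($2\delta^2\eta^2\|\blambda_t\|^2$ rather than $2m\delta^2\eta^2\|\blambda_t\|^2$), and you correctly note that relaxing it by the factor $m$ recovers the stated constant.
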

\begin{proof}
We first bound the gradient terms in the right hand side of  Lemma \ref{lemma:basic-ineq}.  Using the inequality $(a_1+a_2+ \ldots, a_n)^2 \leq n (a_1^2+a_2^2+\ldots+ a_n^2)$, we have $\|\nabla_{\x} \L_t(\x_t, \blambda_t)\|^2 \leq (m + 1) G^2 \left(1 + \|\blambda_t\|^2 \right)$ and $ \|\nabla_{\blambda}\L_t(\x_t, \blambda_t)\|^2 \leq 2m (D^2 + \delta^2\eta^2 \|\blambda_t\|^2)$. In Lemma \ref{lemma:basic-ineq}, by adding the inequalities of all iterations, and using the fact $\|\x\| \leq R$ we complete the proof.
\end{proof}
The following theorem bounds the regret and the violation of the constraints in the long run for Algorithm \ref{alg:1-soft}.
\begin{theorem}
\label{thm:ogd}
 Define $a = R\sqrt{(m+1)G^2 + 2mD^2}$. Set $\eta = R^2/[a\sqrt{T}]$. Assume $T$ is large enough such that $2\sqrt{2}\eta (m+1) \leq 1$. Choose $\delta$ such that $\delta \geq (m +1)G^2 + 2m \delta^2\eta^2$. Let $\x_1,\w_2, \cdots,  \w_T$ be the sequence of solutions obtained by Algorithm~\ref{alg:1-soft}. Then for the optimal solution $\x_* = \min_{\x \in \K} \sum_{t=1}^T f_t(\x)$ we have
\begin{eqnarray*}
&& \sum_{t=1}^T f_t(\x_t) - f_t(\x_*) \leq  a \sqrt{T} = \O(T^{1/2}), \;\text{and}\\  
&& \sum_{t=1}^T g_i(\x_t) \leq \sqrt{2\left(F T + a \sqrt{ T}\right)\sqrt{T}\left(\frac{\delta R^2}{a} + \frac{ma}{R^2}\right)} = \O(T^{3/4}).
\end{eqnarray*}
\end{theorem}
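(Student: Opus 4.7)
The plan is to derive both bounds from Proposition~\ref{prop:2} by making two judicious choices of the dual comparator $\blambda$, while exploiting feasibility of $\x_*$ and the regularization condition $\delta \geq (m+1)G^2 + 2m\delta^2\eta^2$ to absorb the offending $\sum_t\|\blambda_t\|^2$ term.

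\emph{Step 1: a master inequality.} I would start by applying Proposition~\ref{prop:2} with $\x = \x_*$ and a generic $\blambda \in \R_+^m$. Expanding the definition of $\L_t$ gives
\[
\L_t(\x_t,\blambda) - \L_t(\x_*,\blambda_t) = f_t(\x_t) - f_t(\x_*) + \sum_i \lambda_i g_i(\x_t) - \sum_i \lambda_t^i g_i(\x_*) - \tfrac{\delta\eta}{2}\bigl(\|\blambda\|^2 - \|\blambda_t\|^2\bigr).
\]
Because $\x_* \in \K$, $g_i(\x_*)\leq 0$ and $\lambda_t^i \geq 0$, so the term $-\sum_i \lambda_t^i g_i(\x_*)$ is nonnegative and may be dropped. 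Summing over $t$ and moving the $\|\blambda_t\|^2$ contribution to the left-hand side, the choice of $\delta$ in the hypothesis makes $\tfrac{\delta\eta}{2}\sum_t\|\blambda_t\|^2$ at least as large as the last term on the right of Proposition~\ref{prop:2}, so after cancellation I obtain the clean master inequality
\[
\sum_{t=1}^T [f_t(\x_t) - f_t(\x_*)] + \sum_i \lambda_i \sum_{t=1}^T g_i(\x_t) \;\leq\; \frac{R^2 + \|\blambda\|^2}{2\eta} + \frac{\eta T a^2}{2R^2} + \frac{T\delta\eta}{2}\|\blambda\|^2,
\]
where I used $(m+1)G^2 + 2mD^2 = a^2/R^2$.

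\emph{Step 2: regret bound.} Setting $\blambda = \mathbf{0}$ the constraint term and the $\|\blambda\|^2$ terms vanish, leaving
\[
\sum_{t=1}^T [f_t(\x_t) - f_t(\x_*)] \leq \frac{R^2}{2\eta} + \frac{\eta T a^2}{2R^2}.
\]
Plugging in $\eta = R^2/(a\sqrt{T})$ balances the two summands at $\tfrac{1}{2}a\sqrt{T}$ each, giving the regret bound $a\sqrt{T}$.

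\emph{Step 3: constraint violation bound.} Fix an index $i$ and take $\blambda$ supported on the $i$th coordinate with value $\lambda_i = c \geq 0$ to be optimized; this reduces the master inequality to
\[
c\sum_{t=1}^T g_i(\x_t) \;\leq\; \bigl(FT + a\sqrt{T}\bigr) + \frac{R^2}{2\eta} + \frac{\eta T a^2}{2R^2} + c^2\Bigl(\frac{1}{2\eta} + \frac{T\delta\eta}{2}\Bigr),
\]
where I used the trivial lower bound $\sum_t [f_t(\x_t) - f_t(\x_*)] \geq -FT$ (from $|f_t|\leq F$) together with the regret bound from Step~2 to lump the first-order terms into $FT + a\sqrt{T}$. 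Plugging in $\eta = R^2/(a\sqrt{T})$, the coefficient of $c^2$ becomes $\tfrac{\sqrt{T}}{2}\bigl(a/R^2 + \delta R^2/a\bigr) \cdot \text{const}$, and optimizing over $c$ (i.e.\ completing the square or setting $c$ equal to $\sum_t g_i(\x_t)$ divided by twice the coefficient) yields the stated $O(T^{3/4})$ bound.

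\emph{Main obstacle.} The technically delicate step is Step~1: one must line up the quadratic-in-$\blambda_t$ terms from both sides of Proposition~\ref{prop:2} and verify that the chosen $\delta$ really does dominate, while simultaneously keeping $\eta$ small enough that the fixed-point condition $\delta \geq (m+1)G^2 + 2m\delta^2\eta^2$ is self-consistent (this is where the assumption $2\sqrt{2}\eta(m+1)\leq 1$ enters—it ensures such a finite $\delta$ of order $(m+1)G^2$ exists). Once this cancellation is established, the remainder is essentially a tuning exercise for $\eta$ and $c$.
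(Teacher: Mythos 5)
Your proposal is correct and follows essentially the same route as the paper: both apply Proposition~\ref{prop:2} at $\x=\x_*$, use the condition $\delta \geq (m+1)G^2+2m\delta^2\eta^2$ to cancel the $\sum_t\|\blambda_t\|^2$ terms, drop the nonnegative $-\sum_{i,t}\lambda_t^i g_i(\x_*)$ term by feasibility, and then optimize the resulting quadratic in $\blambda$ (the paper does this globally, producing the $[\cdot]_+^2$ expression, while you pick $\blambda=\mathbf{0}$ and $\blambda=c\,\e_i$ separately — the same computation). Your per-coordinate choice even yields the marginally tighter constant $a/R^2$ in place of the paper's $ma/R^2$, which of course still implies the stated bound.
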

\begin{proof} We begin by expanding (\ref{eqn:center}) using (\ref{eqn:cox-con-func}) and rearranging the terms to get 

\begin{eqnarray*}
\begin{aligned}
\lefteqn{\sum_{t=1}^T \left[f_t(\x_t) - f_t(\x)\right] + \sum_{i=1}^m \left\{\lambda_i\sum_{t=1}^T g_i(\x_t) - \sum_{t=1}^T \lambda^i_t g_i(\x)\right\} - \frac{\delta\eta T}{2}\|\blambda\|^2} \\
& \leq -\frac{\delta\eta}{2}\sum_{t=1}^T\|\blambda_t\|^2 + \frac{R^2 + \|\blambda\|^2}{2\eta} + \frac{\eta T}{2}\left((m+1)G^2 + 2mD^2\right) \\ &\hspace{5mm}+ \frac{\eta }{2} \left((m +1)G^2 + 2m \delta^2\eta^2\right)\sum_{t=1}^T \|\blambda_t\|^2.
\end{aligned}
\end{eqnarray*}

Since $\delta \geq  (m +1)G^2 + 2m \delta^2\eta^2$, we can drop the $\|\blambda_t\|^2$ terms from both sides of the above inequality and obtain
\begin{eqnarray*}
\lefteqn{\sum_{t=1}^T \left[f_t(\x_t) - f_t(\x)\right] + \sum_{i=1}^m \left\{\lambda_i\sum_{t=1}^T g_i(\x_t) - \left(\frac{\delta\eta T}{2} + \frac{m}{2\eta}\right)\lambda_i^2\right\}}  \;\;\;\;\;\;\;\;\;\;\;\;\;\;\;\;\;\;\;\;\;\;\;\;\\
 & \leq &\sum_{i=1}^m \sum_{t=1}^T \lambda^i_t g_i(\x)  + \frac{R^2}{2\eta} + \frac{\eta T}{2}\left((m+1)G^2 + 2mD^2)\right). \!\!\!\!\!\!\!\!\!\!\!\!
\end{eqnarray*}
The left hand side of above inequality consists of two terms. The first term basically measures the difference between the cumulative loss of the Algorithm~\ref{alg:1-soft}  and the optimal solution and the second term includes the constraint functions with corresponding Lagrangian multipliers which will be used to bound the long term violation of the constraints. By taking  maximization for $\blambda$ over the range $(0, +\infty)$, we get
\begin{eqnarray*}
\sum_{t=1}^T \left[f_t(\x_t) - f_t(\x)\right] + \sum_{i=1}^m \left\{\frac{\left[\sum_{t=1}^T g_i(\x_t)\right]_+^2}{2(\delta \eta T + m/\eta)} - \sum_{t=1}^T \lambda^i_t g_i(\x)\right\}  \\ \leq  \frac{R^2}{2\eta} + \frac{\eta T}{2}\left((m+1)G^2 + 2mD^2)\right).
\end{eqnarray*}
Since $\x_* \in \K$, we have $g_i(\x_*) \leq 0, i \in [m]$, and the resulting inequality becomes
\begin{eqnarray*}
\sum_{t=1}^T f_t(\x_t) - f_t(\x_*) + \sum_{i=1}^m \frac{\Big{[}\sum_{t=1}^T g_i(\x_t)\Big{]}_+^2}{2(\delta \eta T + m/\eta)}  \leq  \frac{R^2}{2\eta} + \frac{\eta T}{2}\left((m+1)G^2 + 2mD^2)\right). 
\end{eqnarray*}
The statement of the first part of the theorem follows by using the expression for $\eta$. The second part is proved by  substituting the regret bound by its lower bound as  $\sum_{t=1}^T f_t(\x_t) - f_t(\x_*) \geq - FT$.
\end{proof}
\begin{remark} We observe that the introduction of quadratic regularizer $\delta\eta \|\blambda\|^2/2$ allows us to turn the expression $\lambda_i \sum_{t=1}^T g_i(\x_t)$ into $\left[\sum_{t=1}^T g_i(\x_t)\right]_+^2$, leading to the bound for the violation of the constraints. In addition, the quadratic regularizer defined in terms of $\blambda$ allows us to work with unbounded $\blambda$ because it cancels  the contribution of the $\|\blambda_t\|$ terms from the loss function and the bound on the gradients $\| \nabla_{\x}\L_t(\x, \blambda)\|$. 
Note that the constraint for $\delta$ mentioned in Theorem  \ref{thm:ogd} is equivalent to
\begin{eqnarray}
    \frac{2}{1/(m+1) + \sqrt{(m+1)^{-2} - 8G^2\eta^2}} \leq \delta \leq \frac{1/(m+1) + \sqrt{(m+1)^{-2} - 8 G^2\eta^2}}{4\eta^2}, \label{eqn:delta-condition}
\end{eqnarray}
from which, when $T$ is large enough (i.e., $\eta$ is small enough), we can simply set $\delta = 2 (m+1) G^2$ that will obey  the constraint in (\ref{eqn:delta-condition}).
\end{remark}
By investigating Lemma \ref{lemma:basic-ineq}, it turns out that the boundedness of the gradients is essential to obtain bounds for Algorithm \ref{alg:1-soft} in Theorem \ref{thm:ogd}. Although, at each iteration, $\blambda_t$ is projected onto the $\R_+^m$,  since $\K$ is a compact set and functions $f_t(\x)$ and $g_i(\x), i \in [m]$ are convex, the boundedness of the functions implies that  the gradients are bounded \cite[Proposition 4.2.3]{convex-analysis-book}.
%%%%%%%%%%%%%%%%%%%%%%%%%%%%%%%%%%%%%%%
\subsection [An Online Algorithm without Violation of Constraints]{An Efficient Algorithm with $\O(T^{3/4})$ Regret Bound and without Violation of Constraints}
In this subsection we generalize Algorithm \ref{alg:1-soft}  such that the constrained are satisfied in a long run.  To create a sequence of solutions $\{\x_t, t \in [T]\}$ that satisfies the long term constraints $\sum_{t=1}^T g_i(\x_t) \leq 0, i \in [m]$, we make two modifications to Algorithm \ref{alg:1-soft}. First, instead of handling all of the $m$ constraints, we consider a single constraint defined as $g(\x) = \max_{i \in [m]} g_i(\x)$. Apparently,  by achieving zero violation  for the constraint $g(\x) \leq 0 $, it is guaranteed that all of the constraints $g_i(\cdot), i \in [m]$ are also satisfied in the long term. Furthermore,  we change Algorithm \ref{alg:1-soft}   by modifying  the definition of $\L_t(\cdot, \cdot)$ as
\begin{eqnarray}
\label{eqn:alg2}
\L_t(\x, \lambda) = f_t(\x) +  \lambda (g(\x) + \gamma)- \frac{\eta \delta}{2} \lambda^2,
\end{eqnarray}
where $\gamma > 0$ will be decided later. This modification is equivalent to considering the constraint $g(\x) \leq -\gamma$, a tighter constraint than $g(\x) \leq 0$. The main idea behind this modification is that by using a tighter constraint in our algorithm, the resulting sequence of solutions will satisfy the long term constraint $\sum_{t=1}^T g(\x_t) \leq 0$, even though the tighter constraint is violated in many trials.

Before proceeding, we state a fact about the Lipschitz continuity of the function $g(\x)$ in the following proposition.
 \begin{proposition}
 \label{prop:lip}
Assume that functions $g_i(\cdot), i \in [m]$ are Lipschitz continuous with constant G. Then, function $g(\x) = \max_{i \in [m]}{ g_i(\x)}$ is Lipschitz continuous with constant $G$, that is,
\[ |g(\x) - g(\x')| \leq G \| \x - \x'\| \; \; \text{for any} \; \x \in \B \; \text{and}  \; \x' \in \B.\]
\end{proposition}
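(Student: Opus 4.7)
The plan is to prove this by a short two-case argument that exploits the pointwise maximum structure of $g(\x)$ together with the Lipschitz continuity of each individual $g_i$. Concretely, I would fix arbitrary $\x, \x' \in \B$ and, without loss of generality (by symmetry in $\x$ and $\x'$), assume $g(\x) \geq g(\x')$, so that $|g(\x) - g(\x')| = g(\x) - g(\x')$.

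Next, I would introduce the active indices $i^* \in \arg\max_{i \in [m]} g_i(\x)$ and $j^* \in \arg\max_{i \in [m]} g_i(\x')$, giving $g(\x) = g_{i^*}(\x)$ and $g(\x') = g_{j^*}(\x')$. The central trick is to replace $g_{j^*}(\x')$ with the smaller quantity $g_{i^*}(\x')$ using the defining property of the maximum, i.e.\ $g_{j^*}(\x') \geq g_{i^*}(\x')$. Subtracting this from the first identity yields
\[
g(\x) - g(\x') = g_{i^*}(\x) - g_{j^*}(\x') \;\leq\; g_{i^*}(\x) - g_{i^*}(\x') \;\leq\; |g_{i^*}(\x) - g_{i^*}(\x')|.
\]
Applying the assumed Lipschitz bound on the single function $g_{i^*}$ produces the desired inequality $|g(\x) - g(\x')| \leq G\|\x - \x'\|$.

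I do not anticipate any real obstacle here: the result is essentially the textbook fact that a pointwise maximum of $G$-Lipschitz functions is again $G$-Lipschitz, and the only conceptual step is the insertion/deletion trick that replaces $j^*$ by $i^*$ in the argument of the second function. The only mild care needed is to handle the symmetric case (when $g(\x') > g(\x)$) by swapping the roles of $\x$ and $\x'$, which is immediate.
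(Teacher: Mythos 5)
Your proof is correct. It differs in packaging from the paper's argument: the paper rewrites $g(\x)=\max_{i\in[m]}g_i(\x)$ as $\max_{\alpha\in\Delta_m}\sum_{i=1}^m \alpha_i g_i(\x)$ over the probability simplex, then bounds the difference of the two maxima by the maximum over $\alpha$ of the difference, and finally applies the triangle inequality and the Lipschitz bound termwise. You instead use the direct active-index insertion trick: take $i^*$ attaining the maximum at $\x$, lower-bound $g(\x')$ by $g_{i^*}(\x')$, and reduce to the Lipschitz property of the single function $g_{i^*}$, handling the other sign by symmetry. The two arguments are really the same underlying fact ($|\max_i a_i - \max_i b_i|\le\max_i|a_i-b_i|$) seen from different angles; yours is more elementary and makes the finite-index structure explicit (the argmax exists since $m$ is finite), while the paper's variational formulation extends more naturally to maxima written as suprema of linear functionals over convex weight sets. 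Both yield the same constant $G$, and no step in your argument has a gap.
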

\begin{proof} See Appendix~\ref{chap:appendix-convex}. 
\end{proof}
%\begin{proof} First, we  rewrite  $g(\x) = \max_{i \in [m]} g_i(\x)$ as  $g(\x) =  \max_{\alpha \in \Delta_m} \sum_{i=1}^{m}{\alpha_i g_i(\x)}$ where $\Delta_m$ is the $m$-simplex, that is, $\Delta_m= \{ \alpha \in\R_{+}^m; \sum_{i=1}^{m}{\alpha_i} = 1\}$. Then, we have
%\begin{eqnarray*}
%|g(\x)-g(\x')| &=&  \left | \max_{\alpha \in \Delta_m} \sum_{i=1}^{m}{\alpha_i g_i(\x)} - \max_{\alpha \in \Delta_m} \sum_{i=1}^{m}{\alpha_i g_i(\x')}\right | \\
%& \leq& \max_{\alpha \in \Delta_m}  \left | \sum_{i=1}^{m}{\alpha_i g_i(\x)} -  \sum_{i=1}^{m}{\alpha_i g_i(\x')}\right | \\ &\leq& \max_{\alpha \in \Delta_m}  \sum_{i=1}^{m}{\alpha_i \left | g_i(\x) - g_i(\x')\right |}
% \leq G \|\x-\x'\|,
%\end{eqnarray*}
%where  the last inequality follows from the Lipschitz continuity of $g_i(\x), i \in [m]$.
%\end{proof}
To obtain a zero bound on the violation of constraints in the long run,  we make the following assumption about the constraint function $g(\x)$.

\begin{assumption} Let $\K' \subseteq \K$ be the convex set defined as  $\K' = \{\x \in \R^d: g(\x) + \gamma \leq 0\}$ where $\gamma \geq 0$. We assume that the norm of the gradient of the constraint function $g(\x)$ is lower bounded  at the boundary of $\K'$, that is,
$$\mbox{{\rm{\textbf{A5}}}}\quad\quad\quad\quad \min\limits_{g(\x) +\gamma = 0} \|\nabla g(\x)\| \geq \sigma.$$
\label{assumption:1}
\end{assumption}
A direct consequence of {assumption} \textbf{A5} is that by reducing the domain $\K$ to $\K'$, the optimal value of the constrained optimization problem $\min_{\x \in \K} f(\x)$ does not change much, as revealed by the following theorem. 
\begin{theorem}\label{thm:jin-1} Let $\x_*$ and $\x_\gamma$ be the optimal solutions to the constrained optimization problems defined as $\min_{g(\x) \leq 0} f(\x)$ and $\min_{g(\x) \leq - \gamma} f(\x)$, respectively, where $f(\x) = \sum_{t=1}^{T}{f_t(\x)}$ and $\gamma \geq 0$. We have
\begin{eqnarray*}
    |f(\x_*) - f(\x_{\gamma})| \leq \frac{G}{\sigma}\gamma T.
\end{eqnarray*}
\end{theorem}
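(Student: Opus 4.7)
The plan is to show that the optimum $\x_\gamma$ of the shrunken problem cannot be much worse than $\x_*$ by exhibiting an explicit feasible point for the shrunken problem that is close to $\x_*$, then invoking Lipschitz continuity of $f$. Since $\x_\gamma$ is optimal for the problem over the smaller feasible set $\K' = \{g(\x) \leq -\gamma\} \subseteq \K$, one direction is immediate: $f(\x_*) \leq f(\x_\gamma)$. So it suffices to produce an $\x' \in \K'$ with $\|\x' - \x_*\| \leq \gamma/\sigma$, which combined with $\|\nabla f_t(\x)\| \leq G$ (so that $f = \sum_t f_t$ is $GT$-Lipschitz) gives $f(\x_\gamma) \leq f(\x') \leq f(\x_*) + \frac{G\gamma T}{\sigma}$.

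The construction of $\x'$ I would use is simply the Euclidean projection $\x' = \Pi_{\K'}(\x_*)$. If $g(\x_*) \leq -\gamma$, then $\x' = \x_*$ and the bound is trivial. Otherwise $\x'$ lies on the boundary $\{g(\x) = -\gamma\}$, and by the first-order optimality condition for the projection onto the convex set $\K'$ (the same step used in the proof of Theorem~\ref{thm:1}, see inequality~(\ref{eqn:g})), there exists $s > 0$ such that $\x_* - \x' = s\nabla g(\x')$ and $g(\x') = -\gamma$.

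The key step is to use convexity of $g$ together with Assumption~\textbf{A5} to control $s$. Convexity of $g$ at $\x'$ gives
\begin{equation*}
g(\x_*) \geq g(\x') + \langle \nabla g(\x'),\, \x_* - \x'\rangle = -\gamma + s\,\|\nabla g(\x')\|^2.
\end{equation*}
Since $\x_* \in \K$ implies $g(\x_*) \leq 0$, rearranging yields $s\|\nabla g(\x')\|^2 \leq \gamma$, so
\begin{equation*}
\|\x_* - \x'\| = s\,\|\nabla g(\x')\| \leq \frac{\gamma}{\|\nabla g(\x')\|} \leq \frac{\gamma}{\sigma},
\end{equation*}
where the last inequality uses Assumption~\textbf{A5} applied at $\x' \in \{g(\x)+\gamma = 0\}$. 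Combining with the Lipschitz bound on $f$ completes the proof: $f(\x_\gamma) - f(\x_*) \leq f(\x') - f(\x_*) \leq GT\|\x' - \x_*\| \leq G\gamma T/\sigma$.

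The main obstacle, such as it is, is justifying that the normal direction at the projection aligns with $\nabla g(\x')$ when $g$ is itself a maximum of smooth convex functions (so potentially non-differentiable). I would handle this by noting that the same argument goes through with any subgradient $\partial g(\x')$ in place of $\nabla g(\x')$, provided Assumption~\textbf{A5} is interpreted as a lower bound on the norm of \emph{some} subgradient on the boundary, which is consistent with how it is used elsewhere in the chapter. Alternatively one can work directly with each active $g_i$ at $\x'$ since they are smooth by assumption.
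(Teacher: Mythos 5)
Your proof is correct, but it takes a genuinely different route from the one in the paper. The paper's argument is dual: it writes the tightened problem $\min_{g(\x)\le-\gamma} f(\x)$ in minimax form, uses the stationarity condition $-\sum_t \nabla f_t(\x_\gamma) = \lambda_\gamma \nabla g(\x_\gamma)$ together with Assumption \textbf{A5} (applied at $\x_\gamma$) to bound the optimal multiplier by $\lambda_\gamma \le GT/\sigma$, and then concludes from $f(\x_\gamma) \le f(\x_*) + \lambda_\gamma\gamma$. Your argument is purely primal: you perturb $\x_*$ into the tighter set via projection, control the displacement $\|\x'-\x_*\|\le\gamma/\sigma$ by combining the gradient inequality for the convex function $g$ with \textbf{A5} (applied at $\x'$), and finish with the $GT$-Lipschitz bound on $f$. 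Both are valid and yield the identical constant. Your version has two modest advantages: it avoids the KKT stationarity step, which implicitly requires the minimizer of the Lagrangian over $\B$ to be an interior stationary point, and, as you note, it degrades gracefully when $g=\max_i g_i$ is nondifferentiable, since the normal-cone characterization of the projection and the subgradient inequality both work with any element of $\partial g(\x')$. The paper's version, in exchange, produces an explicit bound on the dual variable $\lambda_\gamma$, which is the quantity that actually appears in the chain of inequalities leading to Theorem~\ref{thm:no-violation}. One caveat common to both arguments: the existence of $\x_\gamma$ (i.e., nonemptiness of $\{g\le-\gamma\}$) is assumed implicitly, which is fine for the small $\gamma = bT^{-1/4}$ used downstream but worth being aware of.
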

\begin{proof}
We note that the  optimization problem $\min_{g(\x) \leq - \gamma} f(\x) = \min_{g(\x) \leq - \gamma} \sum_{t=1}^{T}{f_t(\x)}$, can also be written in the minimax form as
\begin{eqnarray}
f(\x_{\gamma}) = \min \limits_{\x \in \B} \max\limits_{\lambda \in \R_+} \sum_{t=1}^T f_t(\x) +  \lambda (g(\x) + \gamma), \label{eqn:minimax}
\end{eqnarray}
where we use the  fact that $\K' \subseteq \K \subseteq \B$. We denote by $\x_{\gamma}$  and $\lambda_{\gamma}$ the optimal solutions to (\ref{eqn:minimax}). We have

\begin{eqnarray*}
\begin{aligned}
f(\x_{\gamma})  & =  \min\limits_{\x \in \B} \max\limits_{\lambda \in \R_+} \sum_{t=1}^T f_t(\x) +  \lambda (g(\x) + \gamma) \\  &=  \min\limits_{\x \in \B} \sum_{t=1}^T f_t(\x) +  \lambda_{\gamma} (g(\x) + \gamma) \nonumber \\
& \leq  \sum_{t=1}^T f_t(\x_*) +  \lambda_{\gamma} (g(\x_*) + \gamma) \nonumber 
 \leq \sum_{t=1}^T f_t(\x_*) +  \lambda_{\gamma} \gamma,
\end{aligned}
\end{eqnarray*}

where the second equality follows the definition of the $\x_{\gamma}$ and the last inequality is due to the optimality of $\x_*$, that is, $g(\x_*) \leq 0$.\\
To bound $|f(\x_{\gamma}) - f(\x_*)|$, we need to bound $ \lambda_{\gamma}$. Since $\x_{\gamma}$ is the minimizer of (\ref{eqn:minimax}), from the optimality condition we have
\begin{eqnarray}
\label{eqn:minusf}
-\sum_{t=1}^T \nabla f_t(\x_{\gamma}) = \lambda_{\gamma} \nabla g(\x_\gamma).
\end{eqnarray}
By setting  $\v = -\sum_{t=1}^T \nabla f_t(\x_{\gamma}) $, we can simplify   (\ref{eqn:minusf}) as  $\lambda_\gamma \nabla g(\x_\gamma) =  \v$. From the KKT optimality condition \cite{boyd-convex-opt}, if $g(\x_\gamma) + \gamma < 0$ then we have $\lambda_\gamma = 0$; otherwise according to Assumption \ref{assumption:1} we can bound $\lambda_\gamma$ by
\begin{eqnarray*}
\label{eqn:14}
\lambda_{\gamma} \leq \frac{\|\v\|}{\| \nabla g(\x_\gamma)\|} \leq  \frac{GT}{\sigma}.
\end{eqnarray*}
We complete the proof by applying the fact $f(\x_*) \leq f(\x_{\gamma}) \leq f(\x_*) + \lambda_{\gamma}\gamma$.
\end{proof}
As indicated by Theorem~\ref{thm:jin-1}, when $\gamma$ is small, we expect the difference between two optimal values $f(\x_*)$ and $f(\x_{\gamma})$ to be small. Using the result from Theorem~\ref{thm:jin-1}, in the following theorem, we show that by running Algorithm \ref{alg:1-soft} on the modified convex-concave functions defined in (\ref{eqn:alg2}), we are able to obtain an $O(T^{3/4})$ regret bound and zero bound on the  violation of  constraints in the long run.

\begin{theorem}
\label{thm:no-violation}
Set  $a = 2R/\sqrt{2G^2+3(D^2+b^2)}$, $\eta = R^2/[a\sqrt{T}]$, and $\delta = 4G^2$. Let $\x_t, t \in [T]$ be the sequence of solutions obtained by Algorithm~\ref{alg:1-soft} with functions defined in (\ref{eqn:alg2}) with $\gamma = b T^{-1/4}$ and $b = 2 \sqrt{F(\delta R^2 a^{-1}+a R^{-2})} $. Let  $\x_*$ be the optimal solution to   $\min_{\x \in \K} \sum_{t=1}^T f_t(\x)$. With sufficiently large $T$, that is, $F T \geq a \sqrt{T} $, and under Assumption~\ref{assumption:1}, we have $\x_t, t\in [T]$ satisfy the global constraint $\sum_{t=1}^T g(\x_t) \leq 0$ and the regret is bounded by
\begin{eqnarray*}
{\rm{Regret}}_T = \sum_{t=1}^T f_t(\x_t) - f_t(\x_*) \leq a \sqrt{T} + \frac{b}{\sigma}GT^{3/4} = \O(T^{3/4}).
\end{eqnarray*}
\end{theorem}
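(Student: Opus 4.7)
The plan is to combine the regret/violation bounds of Algorithm~\ref{alg:1-soft} (Theorem~\ref{thm:ogd}) applied to the tightened constraint $g(\x)+\gamma \le 0$ with the domain perturbation bound from Theorem~\ref{thm:jin-1}. Since we reduce to a single constraint $g(\x) = \max_{i\in[m]} g_i(\x)$, which is $G$-Lipschitz by Proposition~\ref{prop:lip}, and the regularizer in (\ref{eqn:alg2}) only shifts $g$ by the constant $\gamma$, the relevant constants in Theorem~\ref{thm:ogd} carry over with $m=1$ (the shift $\gamma$ does not affect $\|\nabla g\|$ and only modestly affects $|g(\x)+\gamma|$, which can be absorbed into $D$). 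First I would verify the conditions on $\delta$ and $\eta$: with $m=1$, one checks that $\delta = 4G^2 \geq 2G^2 + 2\delta^2\eta^2$ for sufficiently large $T$ (since $\eta \to 0$), and that $2\sqrt{2}\eta\cdot 2 \le 1$ eventually.

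Next, invoking Theorem~\ref{thm:ogd} on the tightened problem $\min_{g(\x)+\gamma\le 0} \sum_t f_t(\x)$ yields, with $\x_\gamma := \arg\min_{g(\x)+\gamma\le 0}\sum_t f_t(\x)$,
\begin{equation*}
\sum_{t=1}^T f_t(\x_t) - \sum_{t=1}^T f_t(\x_\gamma) \;\le\; a\sqrt{T},
\qquad
\sum_{t=1}^T \bigl(g(\x_t)+\gamma\bigr) \;\le\; \sqrt{2(FT+a\sqrt{T})\sqrt{T}\,\bigl(\tfrac{\delta R^2}{a}+\tfrac{a}{R^2}\bigr)}.
\end{equation*}
Under the assumption $FT \ge a\sqrt{T}$, the right-hand side of the violation bound is at most $2\sqrt{FT\cdot\sqrt{T}(\delta R^2 a^{-1}+aR^{-2})} = b\,T^{3/4}$ by the choice of $b$. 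Consequently,
\begin{equation*}
\sum_{t=1}^T g(\x_t) \;\le\; -\gamma T + b\,T^{3/4} \;=\; -b\,T^{3/4} + b\,T^{3/4} \;=\; 0,
\end{equation*}
which establishes the long-term feasibility claim.

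For the regret against the unrestricted optimum $\x_*$, I would use Theorem~\ref{thm:jin-1}, which under Assumption~\ref{assumption:1} gives
\begin{equation*}
\sum_{t=1}^T f_t(\x_\gamma) - \sum_{t=1}^T f_t(\x_*) \;\le\; \frac{G}{\sigma}\,\gamma\,T \;=\; \frac{b G}{\sigma}\,T^{3/4}.
\end{equation*}
Adding this to the $a\sqrt{T}$ regret against $\x_\gamma$ yields $\mathrm{Regret}_T \le a\sqrt{T} + (bG/\sigma)\,T^{3/4} = O(T^{3/4})$, matching the claim.

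The main obstacle is the careful calibration of $\gamma$ and $b$: the tightening parameter $\gamma = bT^{-1/4}$ must be large enough that $\gamma T$ strictly dominates the $O(T^{3/4})$ violation from Theorem~\ref{thm:ogd} (forcing zero net violation), yet small enough that the induced penalty $(G/\sigma)\gamma T$ stays at order $T^{3/4}$. Verifying that the specific constant $b = 2\sqrt{F(\delta R^2 a^{-1}+aR^{-2})}$ indeed suffices requires tracking how the violation bound in Theorem~\ref{thm:ogd} simplifies once $FT \geq a\sqrt{T}$ and $m=1$; this is the only nontrivial computation. Everything else is bookkeeping of Lipschitz constants and reconciling the constants $a$, $\delta$, $\eta$ in (\ref{eqn:alg2}) with those in Theorem~\ref{thm:ogd}.
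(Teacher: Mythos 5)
Your proposal is correct and follows essentially the same route as the paper: apply the convex-concave regret/violation analysis to the tightened constraint $g(\x)+\gamma\le 0$, cancel the $O(T^{3/4})$ violation against $\gamma T$ via the choice of $b$, and pay a $(G/\sigma)\gamma T$ perturbation cost through Theorem~\ref{thm:jin-1}. The only cosmetic difference is that the paper re-derives the bounds of Theorem~\ref{thm:ogd} inline for the shifted function (obtaining the constant $2G^2+3(D^2+\gamma^2)$ from the three-term gradient $\nabla_\lambda\L_t = g(\x_t)+\gamma-\eta\delta\lambda_t$), whereas you invoke Theorem~\ref{thm:ogd} as a black box and note that the shift must be absorbed into the constants — which is exactly the bookkeeping the paper carries out.
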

\begin{proof}
Let $\x_{\gamma}$ be the optimal solution to $\min_{g(\x) \leq - \gamma} \sum_{t=1}^T f_t(\x)$. Similar to the proof of Theorem \ref{thm:ogd} when applied to functions in (\ref{eqn:alg2}) we have
\begin{eqnarray*}
\lefteqn{\sum_{t=1}^T f_t(\x_t) - \sum_{t=1}^T f_t(\x) + \lambda \sum_{t=1}^T (g(\x_t)+\gamma)- \left(\sum_{t=1}^T\lambda_t\right)(g(\x) + \gamma) - \frac{\delta\eta T}{2}\lambda^2}  \\
& \leq &-\frac{\delta\eta}{2}\sum_{t=1}^T\lambda_t^2 + \frac{R^2 + \lambda^2}{2\eta} + \frac{\eta T}{2}\left(2G^2+3 (D^2+\gamma^2)\right) + \frac{\eta }{2} \left(2G^2+3\delta^2\eta^2\right)\sum_{t=1}^T \lambda_t^2.
\end{eqnarray*}
By setting  $\delta \geq 2G^2+3\delta^2\eta^2$ which is satisfied by $\delta = 4G^2$, we  cancel the terms including $\lambda_t$ from the right hand side of above inequality.  By maximizing for $\lambda$ over the range $(0, +\infty)$  and noting that $\gamma \leq b$, for the optimal solution $\x_{\gamma}$, we have

\begin{eqnarray*}
\begin{aligned}
& \sum_{t=1}^T \left[f_t(\x_t) - f_t(\x_\gamma)\right] + \frac{\Big{[} \sum_{t=1}^T g(\x_t) + \gamma T\Big{]}_{+}^2}{2(\delta \eta T + 1/\eta)}  \leq \frac{R^2}{2\eta} + \frac{\eta T}{2}\left(2G^2 + 3(D^2+b^2)\right),
\end{aligned}
\end{eqnarray*}

which, by optimizing  for $\eta$ and applying the lower bound for the regret as $\sum_{t=1}^T f_t(\x_t) - f_t(\x_\gamma) \geq - FT$, yields the following inequalities 
\begin{eqnarray}
\sum_{t=1}^T f_t(\x_t) - f_t(\x_{\gamma}) \leq a\sqrt{T}  \label{eqn:19} 
\end{eqnarray}
and
\begin{eqnarray}
\sum_{t=1}^T g(\x_t) \leq \sqrt{2\left( FT + a\sqrt{T} \right)\sqrt{T}\left(\frac{\delta R^2}{a} + \frac{ a}{R^2} \right)} - \gamma T, \label{eqn:20}
\end{eqnarray}
for the regret and the violation of the constraint, respectively. Combining (\ref{eqn:19}) with the result of  Theorem~\ref{thm:jin-1} results in
$    \sum_{t=1}^T f_t(\x_\gamma) \leq \sum_{t=1}^T f_t(\x_*) + a \sqrt{T} +(G/\sigma)\gamma T$.
By choosing $\gamma = b T^{-1/4}$ we attain the desired regret bound as  
\[
\sum_{t=1}^T f_t(\x_t) - f_t(\x_*) \leq a \sqrt{T} + \frac{bG}{\sigma}T^{3/4} = O(T^{3/4}).
\]
To obtain the bound on the violation of constraints,  we note that in (\ref{eqn:20}), when $T$ is sufficiently large, that is, $FT \geq a \sqrt{T}$, we have  $\sum_{t=1}^T g(\x_t) \leq 2 \sqrt{F(\delta R^2 a^{-1}+a R^{-2})} T^{3/4} - b T^{3/4}$. Choosing $b = 2 \sqrt{F(\delta R^2 a^{-1}+a R^{-2})} T^{3/4}$ guarantees the zero bound on the violation of constraints as claimed.
\end{proof}

%%%%%%%%%%%%%%%%%%%%%%%%%%%%%%%%%%%%%%%%%%%%%%%%%%%
\section{Proofs of Convergence Rates}\label{sec-proofs-convergence}

In this section we provide the proof of the main lemmas omitted from the analysis of convergence rate.

\subsection{Proof of Lemma~\ref{lem:1}}
Following the standard analysis of gradient descent methods, we have for any $\x\in\B$, 

\begin{eqnarray*}
\begin{aligned}
&\|\x_{t+1}-\x\|_2^2-\|\x_t-\x\|_2^2\leq \|\x'_{t+1}-\x\|_2^2-\|\x_t-\x\|_2^2 \\
&= \|\x_t -\eta_t(\mathbf{g}(\x_t,\xi_t) + \lambda_t \nabla g(\x_t))-\x\|_2^2-\|\x_t-\x\|_2^2 \\
&\leq \eta_t^2\|\mathbf{g}(\x_t,\xi_t) + \lambda_t \nabla g(\x_t)\|_2^2 - 2\eta_t\dd{\x_t-\x}{\mathbf{g}(\x_t,\xi_t) + \lambda_t \nabla g(\x_t)}\\
&\leq \eta_t^2\|\mathbf{g}(\x_t,\xi_t) + \lambda_t \nabla g(\x_t)\|_2^2  \\
&\hspace{0.5cm}- 2\eta_t(\x_t-\x)^{\top}\underbrace{(\nabla f(\x_t) + \lambda_t \nabla g(\x_t))}\limits_{\equiv \nabla_\x \L(\x_t, \lambda_t)} + 2\eta_t\underbrace{(\x-\x_t)^{\top}(\mathbf{g}(\x_t,\xi_t) -\nabla f(\x_t))}\limits_{\equiv \zeta_t(\x)},
\end{aligned}
\end{eqnarray*}

Then we have

\begin{eqnarray*}
\begin{aligned}
&\dd{\x_t-\x}{\nabla_\x \L(\x_t, \lambda_t)} \leq \frac{1}{2\eta_t}\left(\|\x_t-\x\|_2^2-\|\x_{t+1}-\x\|_2^2\right) + \frac{\eta_t}{2}\|\mathbf{g}(\x_t,\xi_t) + \lambda_t \nabla g(\x_t)\|_2^2 + \zeta_t(\x)\\
&\leq  \frac{1}{2\eta_t}\left(\|\x_t-\x\|_2^2-\|\x_{t+1}-\x\|_2^2\right) + \eta_t\|\mathbf{g}(\x_t,\xi_t)\|_2^2 + \eta_t\lambda_t^2 \|\nabla g(\x_t)\|_2^2 + \zeta_t(\x)\\
&\leq \frac{1}{2\eta_t}\left(\|\x_t-\x\|_2^2-\|\x_{t+1}-\x\|_2^2\right) \\
& \hspace{0.5cm}+ 2\eta_t\underbrace{\|\mathbf{g}(\x_t,\xi_t)-\nabla f(\x_t)\|_2^2}\limits_{\equiv \Delta_t} + 2\eta_t\|\nabla f(\x_t)\|_2^2 + \eta_t\lambda_t^2 \|\nabla g(\x_t)\|_2^2 + \zeta_t(\x)
\end{aligned}
\end{eqnarray*}

By using the bound on $\|\nabla f(\x_t)\|_2$ and $\|\nabla g(\x_t)\|_2$, we obtain the first inequality in Lemma~\ref{lem:1}.  To prove the second inequality, we follow the same analysis, i.e., 

\begin{eqnarray*}
\begin{aligned}
&|\lambda_{t+1}- \lambda|^2 - |\lambda_t- \lambda|^2 \leq |\lambda_t + \eta_t(g(\x_t) - \gamma\lambda_t)|^2 - |\lambda_t- \lambda|^2\\
&\leq \eta_t^2|g(\x_t) - \gamma \lambda_t|^2 + 2\eta_t(\lambda_t-\lambda)\underbrace{(g(\x_t) - \gamma\lambda_t)}\limits_{\equiv \nabla_\lambda \L(\x_t,\lambda_t)}.
\end{aligned}
\end{eqnarray*}

Then we have

\begin{eqnarray*}
\begin{aligned}
(\lambda-\lambda_t)\nabla_\lambda \L(\x_t,\lambda_t)\leq \frac{1}{2\eta_t}\left( |\lambda_t- \lambda|^2- |\lambda_{t+1}- \lambda|^2\right) + \frac{\eta_t}{2}|g(\x_t) - \gamma \lambda_t|^2. 
\end{aligned}
\end{eqnarray*}

By induction, it is straightforward to show that  $\lambda_t\leq C_2/\gamma$, which yields  the second inequality in Lemma~\ref{lem:1}, i.e., 

\begin{eqnarray*}
\begin{aligned}
(\lambda-\lambda_t)\nabla_\lambda \L(\x_t,\lambda_t)\leq \frac{1}{2\eta_t}\left( |\lambda_t- \lambda|^2- |\lambda_{t+1}- \lambda|^2\right) +2\eta_tC_2^2 .
\end{aligned}
\end{eqnarray*}

\subsection{Proof of Lemma~\ref{lem:2}}
%\begin{proof}
Since $\L(\x, \lambda)$ is convex in $\x$ and concave in $\lambda$,  we have the following inequalities

\begin{eqnarray*}
\begin{aligned}
\L(\x, \lambda_t) - \L(\x_t, \lambda_t)&\geq \dd{\x-\x_t}{\nabla_\x \L(\x_t, \lambda_t)}, \\
\L(\x_t, \lambda) - \L(\x_t, \lambda_t)&\leq  (\lambda-\lambda_t)\nabla_\lambda \L(\x_t, \lambda_t). 
\end{aligned}
\end{eqnarray*}

Using the inequalities in Lemma~\ref{lem:1},  we have

\begin{eqnarray*}
\begin{aligned}
\L(\x_t, \lambda_t)- \L(\x, \lambda_t) &\leq \frac{1}{2\eta_t}\left(\|\x-\x_t\|_2^2-\|\x-\x_{t+1}\|_2^2\right) +2\eta_tG^2_1+\eta_t G^2_2\lambda_t^2 + 2\eta_t\Delta_t + \zeta_t(\x),\\
\L(\x_t, \lambda) - \L(\x_t, \lambda_t)&\leq \frac{1}{2\eta_t}\left(|\lambda-\lambda_t|^2-|\lambda-\lambda_{t+1}|^2\right) + 2\eta_tC_2^2,
\end{aligned}
\end{eqnarray*}

where $\zeta_t(\x) = \dd{\x-\x_t}{\mathbf{g}(\x_t,\xi_t) - \nabla f(\x_t)}$ as abbreviated before. Since $\eta_1=\cdots=\eta_T$, denoted by $\eta$, by taking summation of above two inequalities  over $t=1,\cdots, T$, we get

\begin{eqnarray*}
\begin{aligned}
\sum_{t=1}^T \L(\x_t, \lambda) - \L(\x, \lambda_t)&\leq \frac{\|\x\|^2_2}{2\eta} +  \frac{\lambda^2}{2\eta}+2\eta T(G_1^2  + C_2^2) + \sum_t \eta G_2^2\lambda_t^2 +2\eta\sum_{t=1}^T\Delta_t+\sum_{t=1}^T\zeta_t(\x).
\end{aligned}
\end{eqnarray*}

By plugging the expression of $\L(\x,\lambda)$, and  due to $\|\x\|_2\leq 1$, we have

\begin{eqnarray*}
\begin{aligned}
&\sum_{t=1}^{T}{f(\x_t)-f(\x)}+ \lambda \sum_{t=1}^Tg(\x_t)  -\left(\frac{\gamma T}{2}+\frac{1}{2\eta}\right)\lambda^2\\
&\leq\frac{1}{2\eta} + 2\eta T(G_1^2+C_2^2)    + \sum_t(\eta G_2^2-\gamma/2)\lambda_t^2+ \sum_t\lambda_t g(\x)+2\eta\sum_{t=1}^T\Delta_t+ \sum_{t=1}^T\zeta_t(\x).
\end{aligned}
\end{eqnarray*}

Let $\x=\x_*=\arg\min_{\x\in\K}f(\x)$. By  taking minimization over $\lambda\geq0$  on left hand side and considering $\eta=\gamma/(2G_2^2)$, we have

\begin{eqnarray*}
\begin{aligned}
&\sum_{t=1}^{T}{f(\x_t)-f(\x_*)} + \frac{[\sum_{t=1}^Tg(\x_t)]^2_+}{2(\gamma T+2G_2^2/\gamma)}\leq \frac{G_2^2}{\gamma}+ \frac{(G_1^2+C_2^2)}{G_2^2}\gamma T  + \frac{\gamma}{G_2^2}\sum_{t=1}^T\Delta_t +\sum_{t=1}^T\zeta_t(\x_*)
\end{aligned}
\end{eqnarray*}

%\end{proof}

\subsection{Proof of Lemma~\ref{lem:3}}
%\begin{proof}
Since $\mathcal{F}(\x)$ is strongly convex in $\x$, we have

\begin{eqnarray*}
\begin{aligned}
\mathcal{F}(\x) - \mathcal{F}(\x_t)&\geq \dd{\x-\x_t}{\nabla \mathcal{F}(\x_t)}+ \frac{\beta}{2}\|\x-\x_t\|_2^2.
%L(\x_t, \lambda) - L(\x_t, \lambda_t)&\leq  (\lambda-\lambda_t)^{\top}\nabla_\lambda L(\x_t, \lambda_t) -\frac{\gamma}{2}\|\lambda-\lambda_t\|_2^2
\end{aligned}
\end{eqnarray*}
Following the same analysis as in Lemma~\ref{lem:1},  we have

\begin{eqnarray*}
\begin{aligned}
\dd{\x_t-\x}{\nabla \mathcal{F}(\x_t)}&\leq \frac{1}{2\eta_t}\left(\|\x-\x_t\|_2^2-\|\x-\x_{t+1}\|_2^2\right) +\frac{\eta_t}{2}\|\mathbf{g}(\x_t,\xi_t) + p(\x_t)\lambda_0\nabla g(\x_t)\|_2^2  \\
&\hspace*{0.2in}+ \zeta_t(\x)-\frac{\beta}{2}\|\x-\x_t\|_2^2\\
& \leq \frac{1}{2\eta_t}\left(\|\x-\x_t\|_2^2-\|\x-\x_{t+1}\|_2^2\right) +\eta_tG_1^2+ \eta_t\lambda_0^2G_2^2  + \zeta_t(\x)-\frac{\beta}{2}\|\x-\x_t\|_2^2,
%L(\x_t, \lambda) - L(\x_t, \lambda_t)&\leq \frac{1}{2\tau_t}\left(\|\lambda-\lambda_t\|_2^2-\|\lambda-\lambda_{t+1}\|_2^2\right) + 2\tau_tC_2^2
\end{aligned}
\end{eqnarray*}

where 
\[ p(\x)= \frac{\exp\left({\lambda_0g(\x)/\gamma}\right)}{1+\exp\left({\lambda_0g(\x)/\gamma}\right)}.\] 
Taking summation of above inequality over $t=1,\cdots, T$ gives

\begin{eqnarray*}
\begin{aligned}
\sum_{t=1}^T \mathcal{F}(\x_t) - \mathcal{F}(\x)&\leq \sum_{t=1}^T \frac{1}{2}\left(\frac{1}{\eta_t}-\frac{1}{\eta_{t-1}}- \frac{\beta}{2}\right)\|\x-\x_t\|_2^2\\
&+\sum_{t=1}^T\eta_t (G_1^2+\lambda_0^2G_2^2)  +\sum_{t=1}^T\zeta_t(\x) - \frac{\beta}{4}\sum_{t=1}^T\|\x-\x_t\|_2^2.
\end{aligned}
\end{eqnarray*}

Since $\eta_t=1/(2\beta t)$, we have
%\begin{align*}
%&\sum_{t}f(\x_t)+ \lambda g(\x_t)  -\frac{\gamma}{2}\lambda^2 -f(\x)- \lambda_t g(\x) + \frac{\gamma}{2}\lambda_t^2 \\
%&\leq \frac{1}{\eta}+\eta (G_1^2+\sigma^2)T+ 2\sum_t\tau_t C_2^2  + \sum_t\eta G_2^2\lambda_t^2+ \sum_{t=1}^T\zeta_t
%\end{align*}
%and,

\begin{eqnarray*}
\begin{aligned}
&\sum_{t=1}^{T}{\mathcal{F}(\x_t)-\mathcal{F}(\x)}\leq\frac{(G_1^2+\lambda_0^2G_2^2)(1+\ln T)}{2\beta}  + \sum_{t=1}^T\zeta_t(\x)-\frac{\beta}{4}\sum_{t=1}^T\|\x-\x_t\|_2^2
\end{aligned}
\end{eqnarray*}

We complete the proof by letting  $\x=\x_*=\arg\min_{\x\in\K}f(\x)$.
%Let $\x_*=\arg\min_{\x\in\K}f(\x)$  and taking minimization over $\lambda>0$ of left hand side, and since $\eta_t=1/(2\beta t)\leq 1/(2\beta)= \gamma/(2G_2^2)$, we have
%\begin{align*}
%\sum_{t=1}^T(f(\x_t)-f(\x_*))+ \frac{[\sum_{t=1}^Tg(\x_t)]^2_+}{2\gamma T}&\leq \frac{G_1^2(1+\log T)}{2\beta} +  \frac{2C_2^2\beta (1+\log T)}{G_2^2} \\
%&+\sum_{t=1}^T\zeta_t-\frac{\beta}{4}\sum_{t=1}^T\|\x-\x_t\|_2^2
%\end{align*}
%\end{proof}

\subsection{Proof of Lemma~\ref{lem:4}}

The proof is based on the Berstein's inequality for martingales (see Theorem~\ref{theorem:bernsteinB}). To do so, define the martingale difference $X_t = \dd{\x- \x_t}{\nabla f(\x_t) - \mathbf{g}(\x_t,\xi_t)}$ and martingale $\Lambda_T= \sum_{t=1}^{T} X_t$. Define the conditional variance $\Sigma_T^2$ as
\[
    \Sigma_T^2 = \sum_{t=1}^{T} \E_{\xi_t}\left[X_t^2 \right] \leq 4G_1^2\sum_{t=1}^{T} \|\x_t - \x\|_2^2 = 4G_1^2D_T.
\]
Define $K = 4G_1$. We have

\begin{eqnarray*}
\begin{aligned}
 &\Pr\left(\Lambda_T\geq 2\sqrt{4G_1^2D_T \tau} + \sqrt{2}K\tau/3\right) \\
& =  \Pr\left(\Lambda_T\geq 2\sqrt{4G_1^2D_T \tau} + \sqrt{2}K\tau/3, \Sigma_T^2 \leq 4G_1^2D_T\right) \\
& =  \Pr\left(\Lambda_T\geq 2\sqrt{4G_1^2D_T\tau} + \sqrt{2}K\tau/3, \Sigma_T^2 \leq 4G_1^2D_T, D_T \leq \frac{4}{T} \right) \\
&  \hspace{0.5cm}+ \sum_{i=1}^m \Pr\left(\Lambda_T \geq 2\sqrt{4G_1^2 D_T \tau} + \sqrt{2}K\tau/3, \Sigma_T^2 \leq 4G_1^2D_T, \frac{4}{T}2^{i-1} < D_T  \leq \frac{4}{T} 2^{i} \right) \\
& \leq  \Pr\left(D_T \leq \frac{4}{T}\right) + \sum_{i=1}^m \Pr\left(\Lambda_T \geq \sqrt{2\times 4G_1^2\frac{4}{T} 2^{i}\tau} + \sqrt{2}K\tau/3, \Sigma_T^2 \leq 4G_1^2\frac{4}{T}2^i\right) \\
& \leq  \Pr\left(D_T \leq \frac{4}{T}\right) + me^{-\tau}.
\end{aligned}
\end{eqnarray*}

where we use the fact $\|\x_t - \x\|_2^2 \leq 4$ for any $\x\in\B$, and the last step follows the Bernstein inequality for martingales. We complete the proof by setting $\tau= \ln(m/\delta)$.

%\end{proof}

%%%%%%%%%%%%%%%%%%%%%%%%%%%%%%%%%%%%%%%%%%%%%%%%%%
\section{Summary}
\label{sec:con}
In this chapter, we made a progress towards making the SGD method efficient by proposing a framework in which it is possible to exclude the projection steps from the SGD algorithm. We have proposed two novel algorithms to overcome the computational bottleneck of the projection step in applying SGD to optimization problems with complex domains. We showed using 
novel theoretical analysis that the proposed algorithms  can achieve an $O(1/\sqrt{T})$ convergence  rate for general convex functions and an $O(\ln T/T)$ rate for  strongly convex functions with a overwhelming probability which are known to be optimal (up to a logarithmic factor) for stochastic optimization. 

We have also  addressed the problem of online convex optimization with constraints, where we only need the constraints to be satisfied in the long run. In addition to the regret bound which is the main tool in analyzing the performance of  general online convex optimization algorithms, we defined the bound on the violation of constraints in the long term which measures the cumulative violation of the solutions from the constraints for all rounds. Our setting is applied to solving online convex optimization without projecting the solutions onto the complex convex domain at each iteration,  which may be computationally inefficient for complex domains.  Our strategy is to turn the problem into an online convex-concave optimization problem and apply online gradient descent algorithm to solve it. 

%%%%%%%%%%%%%%%%%%%%%%%%%%%%%%%%%%%%%%%%%%%%%%%%%%
\section{Bibliographic Notes}
Generally, the computational complexity of the projection step in SGD has seldom been taken into account in the literature. Here, we briefly review the previous works on projection free convex optimization, which is closely related to the theme of this study. For some specific domains, efficient algorithms have been developed to circumvent the high computational cost caused by projection step at each iteration of gradient descent methods.  The main idea is to select an appropriate direction to take from the current solution such that the next solution is guaranteed to stay within the domain.  Clarkson~\cite{DBLP:journals/talg/Clarkson10} proposed a sparse greedy approximation algorithm for convex optimization over a simplex domain, which is a generalization of  an  old algorithm by Frank and Wolfe~\cite{frank56} (a.k.a conditional gradient descent~\cite{nonlinear-book}). Zhang~\cite{citeulike:3130770} introduced a similar sequential greedy approximation algorithm for certain convex optimization problems over a  domain given by a convex hull. Hazan~\cite{Hazan:2008:SAS:1792918.1792945} devised an algorithm for approximately maximizing a concave function over a trace norm bounded PSD cone, which only needs to compute the maximum eigenvalue and the corresponding eigenvector of  a symmetric matrix.  Ying et al.~\cite{Ying:2012:DML:2188385.2188386} formulated the distance metric learning problems into eigenvalue maximization and proposed an algorithm similar to~\cite{Hazan:2008:SAS:1792918.1792945}. 

Recently, Jaggi~\cite{thesisMJ} put these ideas into a general framework for convex optimization with a general convex domain. Instead of projecting the intermediate solution into a complex convex domain, Jaggi's algorithm solves a linearized problem over the same domain. He showed that Clark's algorithm , Zhang's algorithm and Hazan's algorithm discussed above are special cases of his general algorithm for special domains. It is important to note that all these algorithms are designed for batch optimization, not for stochastic optimization, which is the focus of this chapter.

The proposed stochastic optimization methods with only one projection are closely related to the online Frank-Wolfe  algorithm proposed in~\cite{hazan-projection-free}. It is a projection free online learning algorithm, built on the the assumption that it is possible to efficiently minimize a linear function over the complex domain. Indeed, the FW algorithm replaces the projection into the convex domain with a linear programming over the same domain which makes sense if solving the linear programming is cheaper than the projection. One main shortcoming of the OFW algorithm is that its convergence rate for general stochastic optimization is $O(T^{-1/3})$, significantly slower than that of a standard stochastic gradient descent algorithm (i.e., $O(T^{-1/2})$). It achieves a convergence rate of $O(T^{-1/2})$ only when the objective function is smooth, which unfortunately does not hold for many machine learning problems where either a non-smooth regularizer or a non-smooth loss function is used. Another limitation of OFW is that it assumes a linear optimization problem over the domain $\K$ can be solved efficiently. Although this assumption holds for some specific domains as discussed in ~\cite{hazan-projection-free}, but  in many settings of practical interest, this may not be true. The  algorithms proposed in this chapter address the two limitations explicitly as we showed that how two seemingly different modifications of the SGD can be used to avoid  performing expensive projections with similar convergency rates as the original SGD method.

The proposed online optimization with soft constraints  setup is reminiscent of regret minimization with side constraints or constrained regret minimization addressed in \cite{DBLP:conf/colt/MannorT06}, motivated by applications in wireless communication. In regret minimization with side constraints, beyond minimizing regret, the learner has some side constraints that need to be satisfied on average for all rounds. Unlike our setting, in learning with side constraints, the set $\K$ is controlled by adversary and can vary arbitrarily from trial to trial. It has been shown that if the convex set is affected by both decisions and loss functions, the minimax optimal regret is generally unattainable online \cite{DBLP:journals/jmlr/MannorTY09}.

\part{Appendix}\label{part-appendix}

\appendix

\chapter{Convex Analysis}
\label{chap:appendix-convex}

In this appendix, we introduce the basic definitions and results of convex analysis~\cite{nesterov2004introductory,boyd-convex-opt,borwein2010convex}
needed for the analysis of the  algorithms  in this thesis. We begin by formalizing a few topological definitions we used throughout the thesis followed by few definitions and results from convexity.

\begin{definition}[Euclidean Ball] A Euclidean ball with radius $r$ centered at point $\w_0$ is $\mathbb{B}(\w_0, r) = \{\w \in \R^d;\; \|\w-\w_0\|\leq r\}$. An $\ell_p$ ball is also defined equivalently but now the distance is defined by the $\ell_p$ norm as $\|\w\|_p = (\sum_{i=1}^{d}{|w_i|^p})^{1/p}$.
\end{definition}

\begin{definition} [Boundary and Interior] For  a given set~$\mathcal{W}$, we say that $\w$ is on the boundary of $\mathcal{W}$, denoted by $\partial \mathcal{W}$, if for some small $\epsilon > 0$, the ball centered at $\w$ with radius $\epsilon$ covers both inside and outside of the set, i.e., $\mathbb{B}(\w_0, \epsilon) \cap \mathcal{W} \neq \emptyset$ and $\mathbb{B}(\w_0, \epsilon) \cap \bar{\mathcal{W}} \neq \emptyset$, where $\bar{\mathcal{W}}$ is the complement of set ${\mathcal{W}}$. We say that a point $\w$ is in the interior of set ${\mathcal{W}}$ if $\exists \epsilon > 0 \; {\rm{s.t.}}\; \mathbb{B}(\w_0, \epsilon) \subset \mathcal{W}$.
\end{definition}

\begin{definition}[Convex Set] A set $\W$ in a vector space is convex if for any two vectors $\w, \w' \in \W$, the line segment connecting two points is contained in $\W$ as well. In other fords, for any $\lambda \in [0,1]$, we have that $\lambda \w + (1-\lambda)\w' \in \W$. 
\end{definition}
Intuitively, a set is convex if its surface has no "dips". The intersection of an arbitrary family of convex sets is obviously convex.

\begin{definition}[Projection onto Convex Sets]
Let $\mathcal{W} \subseteq \mathbb{R}^d$ be a closed convex set and let $\w \in \mathbb{R}^d$ be a point. The projection of $\w$ onto $\W$ is denoted by $\Pi_{\W}(\w)$ and defined by 
$$\Pi_{\W}(\w) = \arg \min_{\w' \in \W} \|\w - \w'\|.$$
In particular, for every $\w \in \mathbb{R}^d$ there exists a point $\w_*$ which satisfies this and it is unique. 
\end{definition}
Here are few examples of projections.  For the positive semidefinite (PSD) cone, i.e., $\mathscr{S}_{+}^{d} = \{\mathbf{X} \in \mathbb{R}^{d\times d}: \mathbf{X} \succeq 0\}$, the projection requires a full eigendecomposition of the input matrix. Precisely, let $\mathbf{W} \in \mathscr{S}_{+}^{d}$ has the  eigendecomposition $\mathbf{W} = \mathbf{U}^{\top} \Sigma \mathbf{U}$ where $\Sigma = {\rm{diag}}(\lambda_1, \lambda_2, \cdots, \lambda_d)$ and $\mathbf{U}$ is an orthogonal matrix. Then, $\Pi_{\mathscr{S}_{+}^{d}}(\mathbf{W}) = \mathbf{U}^{\top} \Sigma_{+} \mathbf{U}$ where $\Sigma_{+} = {\rm{diag}}([\lambda_1]_{+}, [\lambda_2]_{+}, \cdots, [\lambda_d]_{+})$.  For Hyperplane $\mathscr{H}=\{\mathbf{x} \in \mathbb{R}^d: \dd{\mathbf{a}}{\mathbf{x}} = b\},   \mathbf{a} \neq \mathbf{0}$, the projection is equivalent to $\Pi_{\mathscr{H}}(\w) = \w - \frac{\dd{\mathbf{a}}{\w}-b}{\|\mathbf{a}\|^2}\mathbf{a}$. For Affine set $\mathscr{A}=\{\mathbf{x} \in \mathbb{R}^d: \mathbf{A} \mathbf{x} = \mathbf{b}\}$, the projection can be obtained by $\Pi_{\mathscr{A}} (\w) = \w - \mathbf{A}^{\top} \left(\mathbf{A}\mathbf{A}^{-1}\right) (\mathbf{A}\w-\mathbf{b})$. For Polyhedron set, $\mathscr{P}=\{\mathbf{x} \in \mathbb{R}^d: \mathbf{A} \mathbf{x} \leq \mathbf{b}\}$ there is no analytical solution and the projection $\Pi_{\mathscr{P}}(\w)$ is an offline optimization by itself where the corresponding optimization problem $\arg \min_{\w' \in \mathscr{P}} \|\w'-\w\|$ must be solved approximately.

\begin{lemma}[Non-expensiveness of Projection]\label{lemma-app-a-non-exp}
Let $\W \subseteq \mathbb{R}^d$ be a convex set and consider the projection operation defined as above. Then, 
$$\|\Pi_{\W}(\w) - \Pi_{\W}(\w') \| \leq \|\w-\w'\|, \forall \w, \w' \in \mathbb{R}^d$$

\end{lemma}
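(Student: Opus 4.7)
The plan is to invoke the first-order optimality characterization of the projection onto a closed convex set, and then combine the two resulting variational inequalities via an application of the Cauchy--Schwarz inequality. Specifically, for a closed convex set $\W \subseteq \mathbb{R}^d$ and any $\w \in \mathbb{R}^d$, the point $\mathbf{z} = \Pi_{\W}(\w)$ is the unique minimizer of the strongly convex function $\w' \mapsto \tfrac{1}{2}\|\w' - \w\|^2$ over $\W$, and therefore is characterized by the obtuse angle (variational) inequality
\[
\langle \w - \mathbf{z},\, \mathbf{y} - \mathbf{z}\rangle \leq 0 \quad \text{for all } \mathbf{y} \in \W.
\]
Geometrically, this says the residual vector $\w - \mathbf{z}$ makes a non-acute angle with every feasible direction from $\mathbf{z}$.

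First, I would establish this characterization briefly from the optimality conditions (or alternatively derive it directly by noting that for any $\mathbf{y} \in \W$ and $\lambda \in [0,1]$, the point $\mathbf{z} + \lambda(\mathbf{y}-\mathbf{z})$ lies in $\W$ by convexity, so $\|\w - \mathbf{z}\|^2 \leq \|\w - \mathbf{z} - \lambda(\mathbf{y}-\mathbf{z})\|^2$, and letting $\lambda \downarrow 0$ after expanding yields the claimed inequality). Next, I would set $\mathbf{z} = \Pi_{\W}(\w)$ and $\mathbf{z}' = \Pi_{\W}(\w')$ and apply the characterization twice: once with test point $\mathbf{y} = \mathbf{z}' \in \W$ to obtain $\langle \w - \mathbf{z}, \mathbf{z}' - \mathbf{z}\rangle \leq 0$, and once with test point $\mathbf{y} = \mathbf{z} \in \W$ to obtain $\langle \w' - \mathbf{z}', \mathbf{z} - \mathbf{z}'\rangle \leq 0$.

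Adding these two inequalities and rearranging the inner products gives
\[
\langle (\w - \w') - (\mathbf{z} - \mathbf{z}'),\, \mathbf{z}' - \mathbf{z}\rangle \leq 0,
\]
which rearranges to $\|\mathbf{z} - \mathbf{z}'\|^2 \leq \langle \w - \w',\, \mathbf{z} - \mathbf{z}'\rangle$. Applying the Cauchy--Schwarz inequality to the right-hand side yields $\|\mathbf{z} - \mathbf{z}'\|^2 \leq \|\w - \w'\|\,\|\mathbf{z} - \mathbf{z}'\|$, and dividing by $\|\mathbf{z} - \mathbf{z}'\|$ (the case $\mathbf{z} = \mathbf{z}'$ being trivial) delivers the desired bound $\|\Pi_{\W}(\w) - \Pi_{\W}(\w')\| \leq \|\w - \w'\|$.

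There is no real obstacle here; the entire argument is standard convex analysis and hinges on a single conceptual ingredient, namely the variational inequality characterization of projections. The only mild subtlety is the need to justify that the minimizer exists and is unique, which follows from the closedness and convexity of $\W$ together with the strict convexity of the squared Euclidean distance. Once that is in place, the two-sided symmetric application of the variational inequality plus Cauchy--Schwarz completes the proof in a few lines.
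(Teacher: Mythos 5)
Your proof is correct: the variational inequality characterization $\langle \w - \Pi_{\W}(\w),\, \mathbf{y} - \Pi_{\W}(\w)\rangle \leq 0$ for all $\mathbf{y}\in\W$, applied symmetrically at the two projected points, added, and combined with Cauchy--Schwarz is exactly the standard argument for non-expansiveness, and your handling of existence/uniqueness (via closedness, convexity, and strict convexity of the squared distance) and of the degenerate case $\Pi_{\W}(\w)=\Pi_{\W}(\w')$ is complete. The paper states this lemma in the appendix without proof, so there is no alternative route to compare against; your argument fills that gap correctly.
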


\begin{definition}[Convex Function] A function $f: \W \mapsto \R$ is said to be convex if $\W$ is convex and  for every $\w, \w' \in \W$ and $\alpha \in [0,1]$,
\[ f(\lambda \w + (1-\lambda)\w') \leq \lambda f(\w) + (1-\lambda)f(\w'). \]
A continuously differentiable function is convex if $f(\w) \geq f(\w') + \dd{\nabla f(\w')}{\w-\w'}$ for all $\w,\w' \in \W$. If $f$ is non-smooth then this inequality holds for any sub-gradient $\g \in \partial f(\w')$.
\end{definition}

\begin{definition}[Subgradient] A subgradient of a convex function $f:\R^d \mapsto \R$ at some point $\w$ is any vector $\mathbf{g} \in \R^d$ that achieves the same lower bound as the tangent line to function $f$ at point $\w$, i.e., 

$$ f(\w') \geq f(\w) + \dd{\mathbf{g}}{\w' - \w},\; \forall \w' \in \R^d$$

\end{definition}

The subgradient $\mathbf{g}$ always exists for convex functions on the relative interior of their domain. Furthermore, if $f$ is differentiable  at $\w$, then there is a unique subgradient $\mathbf{g} = \nabla f(\w)$. Note that subgradients need not exist for non-convex functions. 

\begin{definition} [Subdifferential] The subdifferential of a convex function $f:\R^d \mapsto \R$ at some point $\w$ is the set of all subgradients of $f$ at $\w$ , i.e.,
$$\partial f(\w) = \{\mathbf{g} \in \R^d: \forall \w',\; f(\w') \geq f(\w) + \dd{\mathbf{g}}{\w' - \w} \}$$
\end{definition}

An important property of subdifferential set of a function $\partial f$ is that it is closed and convex set, even for non-convex functions which is straightforward to verify by following the definition.  Moreover, the subdifferential is also always nonempty for convex functions. This is a consequence of the supporting hyperplane theorem, which states that at any point on the boundary of a convex set, there exists at least one supporting hyperplane. Since the epigraph of a convex function
is a convex set, we can apply the supporting hyperplane theorem to the set of points $(\w; f(\w))$, which are exactly the boundary points of the epigraph. The   subdifferential of a differentiable function contains only single element which is the gradient of function at that point, i.e., $\partial f(\w) = \{\nabla f(\w)\}$.

We are now prepared to introduce the concept of Lipschitz continuity, designed to measure change of function values versus change in the independent variable for a general function $f(\cdot)$.

\begin{definition}[Lipschitzness] A function $f: \W \mapsto \R$ is $\rho$-Lipschitz over the set $\W$ if for every $\w, \w' \in \W$ we have that $|f(\w) - f(\w')| \leq \rho ||\w - \w'||$. \\
\end{definition}

The following lemma  shows a result on  the Lipschitz continuity of a function which is defined as the maximum of other Lipschitz continuous functions.

 \begin{lemma}
 \label{prop:lip}
Assume that functions $f_i:\W\mapsto\R, i \in [m]$ are Lipschitz continuous with constant $\rho$. Then, function $f(\x) = \max_{i \in [m]}{ f_i(\x)}$ is Lipschitz continuous with constant $\rho$, that is,
\[ |f(\x) - f(\x')| \leq \rho \| \x - \x'\| \; \; \text{for any} \; \x,\x' \in \W.\]
\end{lemma}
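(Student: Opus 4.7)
The plan is to reduce the Lipschitz bound for the maximum $f(\x)=\max_{i\in[m]}f_i(\x)$ to the Lipschitz bound of the individual components $f_i$. The key observation is that although the index attaining the maximum may differ at $\x$ and $\x'$, we can fix either of the two attaining indices and compare values only for that single function, because the other function's value at the ``wrong'' point only decreases the difference.

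Concretely, I would proceed as follows. Fix arbitrary $\x, \x' \in \W$ and, without loss of generality, assume $f(\x) \geq f(\x')$ (otherwise swap their roles). Let $i^* \in \arg\max_{i \in [m]} f_i(\x)$, so that $f(\x) = f_{i^*}(\x)$. By definition of $f$ as a pointwise maximum, $f(\x') \geq f_{i^*}(\x')$, and hence
\begin{equation*}
|f(\x) - f(\x')| \;=\; f(\x) - f(\x') \;\leq\; f_{i^*}(\x) - f_{i^*}(\x') \;\leq\; |f_{i^*}(\x) - f_{i^*}(\x')|.
\end{equation*}
Applying the hypothesis that $f_{i^*}$ is $\rho$-Lipschitz then yields $|f(\x) - f(\x')| \leq \rho \|\x - \x'\|$, which is exactly the claim.

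There is essentially no obstacle here — the argument is a one-line trick based on $f(\x') \geq f_{i^*}(\x')$. The only subtlety worth noting is the need to pick the index $i^*$ attached to the larger value $f(\x)$ rather than the smaller one $f(\x')$; picking it the other way would give an inequality in the wrong direction. Swapping the roles of $\x$ and $\x'$ (equivalently, using the triangle-style symmetry $|a-b|=|b-a|$) handles the case $f(\x) < f(\x')$ without extra work. Note also that the proof does not require differentiability of $f_i$ nor convexity of $\W$; only the pointwise Lipschitz bounds are used. The same proof template would also give Lipschitz continuity of $\min_i f_i$ with the same constant, and an analogous argument works for a supremum over an arbitrary (possibly infinite) index set, provided each $f_i$ shares the same Lipschitz constant $\rho$.
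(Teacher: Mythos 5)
Your proof is correct. The paper proves the same inequality by first rewriting $f(\x) = \max_{i \in [m]} f_i(\x)$ as $\max_{\alpha \in \Delta_m} \sum_{i=1}^m \alpha_i f_i(\x)$ over the probability simplex, then using $|\max_\alpha F(\alpha) - \max_\alpha G(\alpha)| \le \max_\alpha |F(\alpha) - G(\alpha)|$ together with the triangle inequality and the convex combination of the individual Lipschitz bounds. Your argument reaches the same key inequality $|\max_i a_i - \max_i b_i| \le \max_i |a_i - b_i|$ directly, by selecting the index $i^*$ that attains the larger maximum and using $f(\x') \ge f_{i^*}(\x')$; this is more elementary, avoids the simplex reformulation entirely, and, as you note, extends without change to suprema over arbitrary index sets (where a maximizer need not exist, one takes $i^*$ attaining the supremum up to $\varepsilon$ and lets $\varepsilon \to 0$). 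Both routes are valid; yours is the leaner of the two, while the paper's simplex view is a mild detour that buys nothing extra here.
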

\begin{proof} First, we  rewrite  $f(\x) = \max_{i \in [m]} f_i(\x)$ as  $f(\x) =  \max_{\alpha \in \Delta_m} \sum_{i=1}^{m}{\alpha_i f_i(\x)}$ where $\Delta_m$ is the $m$-simplex, that is, $\Delta_m= \{ \alpha \in\R_{+}^m; \sum_{i=1}^{m}{\alpha_i} = 1\}$. Then, we have
\begin{eqnarray*}
|f(\x)-f(\x')| &=&  \left | \max_{\alpha \in \Delta_m} \sum_{i=1}^{m}{\alpha_i f_i(\x)} - \max_{\alpha \in \Delta_m} \sum_{i=1}^{m}{\alpha_i f_i(\x')}\right | \\
& \leq& \max_{\alpha \in \Delta_m}  \left | \sum_{i=1}^{m}{\alpha_i f_i(\x)} -  \sum_{i=1}^{m}{\alpha_i f_i(\x')}\right | \\ &\leq& \max_{\alpha \in \Delta_m}  \sum_{i=1}^{m}{\alpha_i \left | f_i(\x) - f_i(\x')\right |}
 \leq \rho \|\x-\x'\|,
\end{eqnarray*}
where  the last inequality follows from the Lipschitz continuity of $f_i(\x), i \in [m]$.
\end{proof}

\begin{definition}[Smoothness] A differentiable   function $f: \W \mapsto \R$ is said to be  $\beta$-smooth  with respect to a norm $\|\cdot\|$, if it holds that
\begin{equation}
\label{eqn:smoth}
f(\w) \leq f(\w') +  \dd{\nabla f(\w')}{\w-\w'} + \frac{\beta}{2}\|\w-\w'\|^2, \;\forall \; \w, \w'\in\W.
\end{equation}
\end{definition}

%\begin{lemma}Let $f$ be a $\beta$-smooth function on $\mathbb{R}^n$. Then for any $\w, \w' \in \mathbb{R}^d$, one has
%
%    $$|f(\w) - f(\w') - \dd{\nabla f(\w')}{\w - \w'}| \leq \frac{\beta}{2} \|\w - \w'\|^2 .$$
%\end{lemma}
%
%\begin{proof}If the function $f$ was convex one could use the definition of subgradient, Cauchy-Schwarz inequality and $\beta$-smoothness to obtain
%
%$$0 \leq f(x) - f(y) - \nabla f(y)^{\top} (x - y) \leq (\nabla f(x) - \nabla f(y))^{\top} (x - y) \leq \beta \|x-y\|^2.$$
%
%A slightly more clever proof applies Cauchy-Schwarz 'continuously', which improves the constant and also get rid of the convexity condition. The key idea is to represent the quantity $f(x) - f(y)$ as an integral, this gives:
%
%\begin{equation*}
%\begin{aligned}
%& |f(x) - f(y) - \nabla f(y)^{\top} (x - y)| \\
%& = \left|\int_0^1 \nabla f(y + t(x-y))^{\top} (x-y) dt - \nabla f(y)^{\top} (x - y)\right| \\
%& \leq \int_0^1 \beta t \|x-y\|^2 dt = \frac{\beta}{2} \|x-y\|^2 .
%\end{aligned}
%\end{equation*}
%
%\end{proof}
%Note that in fact it can be showed that $f$ is convex and $\beta$-smooth if and only if for any $x, y \in \mathbb{R}^n$, one has
%
%$$0 \leq f(x) - f(y) - \nabla f(y)^{\top} (x - y) \leq \frac{\beta}{2} \|x - y\|^2 ,$$
%
%and this inequality is often used as a definition. (The fact that the above inequality implies $\beta$-smoothness follows from Lemma 2 below.)

The following result is an important property of smooth functions which  has been utilized in the proof of few results in the thesis. 
\begin{lemma} [Self-bounding Property]For any $\beta$-smooth non-negative function $f: \R\rightarrow \R$, we have $|f'(w)| \leq \sqrt{4\beta f(w)}$
 \label{lem:smooth}
\end{lemma}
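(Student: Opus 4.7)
The plan is to exploit the tension between two facts: smoothness provides a quadratic upper bound on $f$ that decreases along a suitably chosen direction, while non-negativity provides a universal lower bound of zero. By choosing the minimizer of the smoothness upper bound as the probe point, we will obtain a lower bound on $f(w)$ in terms of $(f'(w))^2$, which is exactly the desired inequality after rearrangement.

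More concretely, I would start from the one-dimensional smoothness inequality, which for any $w, w' \in \mathbb{R}$ gives
\begin{equation*}
f(w') \leq f(w) + f'(w)(w'-w) + \frac{\beta}{2}(w'-w)^2.
\end{equation*}
Viewing the right-hand side as a quadratic in $w'$, I would complete the square (or equivalently differentiate with respect to $w'$) to see that it is minimized at $w^* = w - f'(w)/\beta$, with minimum value $f(w) - (f'(w))^2/(2\beta)$.

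The key step is then to combine this with the non-negativity assumption $f(w^*) \geq 0$, which immediately yields
\begin{equation*}
0 \leq f(w^*) \leq f(w) - \frac{(f'(w))^2}{2\beta},
\end{equation*}
so that $(f'(w))^2 \leq 2\beta f(w)$, and therefore $|f'(w)| \leq \sqrt{2\beta f(w)} \leq \sqrt{4\beta f(w)}$, which is the claimed bound (with some slack). I do not anticipate a real obstacle here; the only subtle point is that the smoothness inequality is purely an upper bound, so one must probe it at a carefully chosen $w^*$ (the minimizer) rather than, say, at an arbitrary nearby point, otherwise the resulting bound would be too weak to use non-negativity effectively. The multivariate and Hilbert-space versions follow by the same argument applied along the direction of the gradient.
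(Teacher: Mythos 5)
Your proof is correct and follows essentially the same route as the paper: probe the smoothness upper bound at $w - f'(w)/\beta$ and invoke non-negativity to get $(f'(w))^2 \leq 2\beta f(w)$. You even correctly observe that this yields the sharper constant $\sqrt{2\beta f(w)}$ (and, unlike the paper's sketch, that convexity is not needed), so the stated $\sqrt{4\beta f(w)}$ follows with slack.
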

As a simple proof,  first from the smoothness assumption,  by  setting $w_1 = w_2 - \frac{1}{\beta}f'(w_2)$ in (\ref{eqn:smoth})  and rearranging the terms we obtain $f(w_2) - f(w_1) \geq \frac{1}{2 \beta} | f'(w_2)|^2$.  On the other hand, from the convexity of loss function  we have $f(w_1) \geq f'(w_2) + \dd{f'(w_1)}{w_1 - w_2}$. Combining these inequalities and considering the fact that the function is non-negative gives the desired inequality. \\

\begin{definition}[Strong Convexity] A  function $f(\w)$ is said to be $\alpha$-strongly convex w.r.t a norm $\|\cdot\|$, if  there exists a constant $\alpha > 0$ (often called the modulus of strong convexity) such that, for any $\lambda\in[0, 1]$ and for all $\w,\w'\in \W$, it holds  that
\begin{equation*}
f(\lambda \w+ (1-\lambda)\w')\leq \alpha f(\w) + (1-\lambda) f(\w') - \frac{1}{2}\lambda(1-\lambda)\alpha\|\w-\w'\|^2.
\end{equation*}
\end{definition}
If $f(\w)$ is twice differentiable, then an equivalent  definition of strong convexity is $ \nabla^2 f(\w) \succeq \alpha \mathbf{I}$ which indicates that  the smallest eigenvalue of the Hessian of $f(\w)$ is uniformly
lower bounded by $\alpha$ everywhere. When $f(\w)$ is differentiable, the strong convexity is equivalent to
\[f(\w) \geq f(\w') + \langle \nabla f(\w'), \w-\w'\rangle + \frac{\alpha}{2}\|\w-\w'\|^2,\; \forall \;\w,\w'\in\W.\] 

An important property of strongly convex functions that we used in the proof of few results in the following: 
\begin{lemma} \label{lemma-app-A-sc}
Let $f(\w)$ be a $\alpha$-strongly convex function over the domain $\W$, and $\w^*= \arg \min_{\w \in \W} f(\w)$. Then, for any $\w \in \W$, we have
\begin{equation} \label{eqn:lem:1}
f(\w)-f(\w^*) \geq \frac{\alpha}{2} \|\w-\w^*\|^2.
\end{equation}
\end{lemma}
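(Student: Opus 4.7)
The plan is to derive the inequality directly from the definition of strong convexity together with the first-order optimality condition for $\w^*$ over $\W$. I will treat the differentiable case first, since it is the cleanest, and then indicate the modification needed when $f$ is only assumed to have a subgradient.

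First, assuming $f$ is differentiable, I would invoke the equivalent formulation of $\alpha$-strong convexity stated right after the definition, namely
\[
f(\w) \geq f(\w^*) + \langle \nabla f(\w^*), \w - \w^* \rangle + \frac{\alpha}{2}\|\w - \w^*\|^2, \qquad \forall\, \w, \w^* \in \W.
\]
The key step is then to eliminate the linear term $\langle \nabla f(\w^*), \w - \w^* \rangle$. Since $\w^* = \arg\min_{\w \in \W} f(\w)$ is an optimum of a convex function over the convex set $\W$, the first-order optimality condition yields $\langle \nabla f(\w^*), \w - \w^* \rangle \geq 0$ for every $\w \in \W$. Plugging this into the display above immediately gives $f(\w) - f(\w^*) \geq \tfrac{\alpha}{2}\|\w - \w^*\|^2$, which is the claim.

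For the non-differentiable case, the same argument goes through with subgradients: there exists $\g \in \partial f(\w^*)$ with $\langle \g, \w - \w^* \rangle \geq 0$ for all $\w \in \W$ (optimality of $\w^*$ over the convex set $\W$), and the subgradient form of the strong convexity inequality $f(\w) \geq f(\w^*) + \langle \g, \w - \w^* \rangle + \tfrac{\alpha}{2}\|\w - \w^*\|^2$ again yields the conclusion. Alternatively, one can bypass any smoothness consideration entirely by starting from the original definition: for $\lambda \in (0,1)$, strong convexity applied to $\w$ and $\w^*$ gives
\[
f(\lambda \w + (1-\lambda)\w^*) \leq \lambda f(\w) + (1-\lambda)f(\w^*) - \tfrac{1}{2}\lambda(1-\lambda)\alpha\|\w - \w^*\|^2,
\]
and since $\lambda \w + (1-\lambda)\w^* \in \W$ (as $\W$ is convex) and $\w^*$ is the minimizer on $\W$, the left-hand side is at least $f(\w^*)$. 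Rearranging gives $f(\w) - f(\w^*) \geq \tfrac{1}{2}(1-\lambda)\alpha\|\w-\w^*\|^2$, and taking $\lambda \to 0^+$ recovers the desired bound.

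There is no real obstacle here; the entire content of the lemma is the combination of two standard facts, and the only subtlety is making sure to use the \emph{constrained} optimality condition $\langle \nabla f(\w^*), \w - \w^*\rangle \geq 0$ rather than the unconstrained one $\nabla f(\w^*) = 0$, since $\w^*$ may lie on the boundary of $\W$. I would favor the third, definition-only argument in the write-up, as it requires no appeal to differentiability or to subdifferential calculus and therefore matches the generality in which strong convexity was stated in the appendix.
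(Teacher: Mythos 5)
Your proof is correct. The paper itself states this lemma without proof (both in the appendix and where it is invoked in Chapter~8), so there is no argument to compare against; any of your three variants would fill that gap. The one point of substance — using the constrained first-order optimality condition $\langle \nabla f(\w^*), \w - \w^*\rangle \geq 0$ (or its subgradient analogue) rather than $\nabla f(\w^*) = \mathbf{0}$ — is handled correctly, and your preferred definition-only argument via $\lambda \to 0^+$ is indeed the right choice here, since it matches the generality of the appendix's definition of strong convexity (which does not presuppose differentiability) and sidesteps the need for a nonempty subdifferential at $\w^*$.
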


\begin{definition} [Dual Norm]
Let $\|\cdot\|$ be any norm on $\R^d$. Its dual norm denoted by $\|\cdot\|_*$ is defined by
$$\|\w'\|_* = \sup_{\w \in \R^d} \dd{\w'}{\w} - \|\w\|$$
\end{definition}
An equivalent definition is $\|\w'\|_* = \sup_{\w \in \R^d} \{\dd{\w}{\w'} | \|\w\| \leq 1\}$. If $p, q \in [1, \infty]$ satisfy $1/p+1/q=1$, then the $\ell_p$ and $\ell_q$ norms are dual to each other.

\begin{definition} [Fenchel Conjugate~\footnote{Also known as the convex conjugate or Legendre-Fenchel transformation}] The Fenchel conjugate of a function $f:\W\mapsto \R$ is defined as 
$$f^*(\mathbf{v}) = \sup_{\w \in \W} \dd{\w}{\mathbf{v}} - f(\w).$$
\end{definition}
An immediate result of above definition is the Fenchel-Young inequality stating that for any $\w$ and $\mathbf{v}$ we have that $\dd{\w}{\mathbf{v}} \leq f(\w) + f^*(\mathbf{v}) $.
%\begin{lemma}[Fenchel-Young Inequality]
%\end{lemma}

\begin{definition} [Bregman Divergence] \label{def-app-a-bregman}
Let $\Phi: \K \to \mathbb{R}$ be a continuously-differentiable real-valued and strictly convex function defined on a closed convex set $\K$. The Bregman divergence between $\w$ and $\w'$  is the difference at $\w$ between $f$ and a linear approximation around
$\w'$
\[ \mathsf{B}_{\Phi} (\w, \w') = \Phi(\w) - \Phi(\w') - \dd{\w - \w'}{ \nabla \Phi(\w')}.\]  
\end{definition}
For example the squared Euclidean distance $\mathsf{B}(\w,\w') = \|\w - \w'\|^2$ is the canonical example of a Bregman distance, generated by the convex function $\Phi(\w) = \|\w\|^2$.     The  entropy function $\Phi(\mathbf{p}) = \sum_i p_i\log p_i - \sum p_i$ gives rises to the  generalized Kullback-Leibler divergence as $\mathsf{B}_{\rm{KL}}(\mathbf{p}, \mathbf{q}) = \sum p_i \log \frac{p_i}{q_i} - \sum p_i + \sum q_i$.

%\begin{definition} [Karush-Kuhn-Tucker Optimlatiy]
%
%
%\begin{lemma}[Jensen's Inequality]
%\end{lemma}
%
%\begin{lemma}[H\"{o}lder's inequality]
%\end{lemma}
%
%
%\begin{lemma}[]
%\end{lemma}

%\end{definition}

\chapter{Concentration Inequalities}\label{chap:appendix-concen}

This appendix is a quick excursion into concentration of measure. We consider some important concentration results  used in the proofs given in this thesis. Concentration inequalities give probability bounds for a random variable to be concentrated around its mean (e.g., see~\cite{dubhashi2009concentration}~, \cite{boucheron2004concentration} and~\cite{boucheron2013concentration} for a through discussion and derivation  of these inequalities).

\begin{lemma} [Markov's Inequality] 
For a non-negative random variable $X$ and $t>0$, 
\[ \mathbb{P}[X \geq t] \leq \frac{\E [X]}{t} \]
\label{lemma:markov}
\end{lemma}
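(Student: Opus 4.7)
The statement is the classical Markov inequality, and the proof is a one-line argument via a pointwise comparison. The plan is to introduce an auxiliary random variable obtained by thresholding $X$, bound it pointwise by $X$ using non-negativity, and then take expectations.

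First I would define $Y = t \cdot \mathbb{I}[X \geq t]$, where $\mathbb{I}[\cdot]$ denotes the indicator function. The key observation is the pointwise bound $Y \leq X$ almost surely. To verify this, I would split into two cases. On the event $\{X \geq t\}$, we have $Y = t$, and by the defining condition of the event $t \leq X$, so $Y \leq X$. On the complementary event $\{X < t\}$, we have $Y = 0$, and by the non-negativity assumption on $X$ we have $0 \leq X$, so again $Y \leq X$.

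Next I would take expectations on both sides of the pointwise inequality. By the monotonicity of expectation, $\E[Y] \leq \E[X]$. Since $Y$ is a simple random variable taking value $t$ on $\{X \geq t\}$ and $0$ elsewhere, its expectation is $\E[Y] = t \cdot \mathbb{P}[X \geq t]$. Combining these yields $t \cdot \mathbb{P}[X \geq t] \leq \E[X]$, and dividing by $t > 0$ gives the claimed bound. There is no real obstacle here — the entire content of the argument is the pointwise comparison, and the non-negativity hypothesis on $X$ is used precisely to handle the case $X < t$.

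As an alternative route, if one prefers an integral-based presentation, I would write $\E[X] = \int_0^\infty x \, dF_X(x)$, lower-bound the integral by restricting the domain to $[t, \infty)$, and then replace $x$ by $t$ inside that integral; this yields $\E[X] \geq t \cdot \mathbb{P}[X \geq t]$ directly. Both proofs are essentially the same; the indicator-function version is cleaner and does not require $X$ to have a density, so that is the route I would present.
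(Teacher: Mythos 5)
Your proof is correct and complete: the pointwise bound $t\,\mathbb{I}[X \geq t] \leq X$ followed by taking expectations is the canonical argument, and you correctly identify that non-negativity is used exactly on the event $\{X < t\}$. The paper states this lemma without proof (it is cited as a standard result from the concentration-of-measure literature), so there is nothing to compare against; your indicator-function version is the standard textbook derivation and needs no changes.
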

The upside of Markov's inequality  is that it does not need almost no assumptions about the random variable, but the downside is that it only gives very weak bounds.

\begin{lemma}[Hoeffding's Lemma] Let $X$ be any real-valued random variable with expected value $\mathbb{E}[X] = 0$ and such that $X \in [a, b]$ almost surely. Then, for all $\lambda \in \R$,
\[ \mathbb{E} \left[ e^{\lambda X} \right] \leq \exp \left( \frac{\lambda^2 (b - a)^2}{8} \right). \]
\end{lemma}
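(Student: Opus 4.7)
The plan is to prove Hoeffding's Lemma by the classical ``convexity plus Taylor'' argument, which proceeds in three stages: a geometric bound from convexity of the exponential, a reduction to a scalar function of a single variable using the zero-mean hypothesis, and finally a Taylor-with-remainder estimate on that scalar function.

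First, I would exploit the convexity of $x \mapsto e^{\lambda x}$ on the interval $[a,b]$. For any $X \in [a,b]$, writing $X = \frac{b - X}{b - a}\, a + \frac{X - a}{b - a}\, b$ as a convex combination of the endpoints yields the pointwise bound
\[
e^{\lambda X} \;\leq\; \frac{b - X}{b - a}\, e^{\lambda a} + \frac{X - a}{b - a}\, e^{\lambda b}.
\]
Taking expectations and using $\mathbb{E}[X] = 0$ eliminates the $X$-dependent terms on the right and leaves
\[
\mathbb{E}\!\left[e^{\lambda X}\right] \;\leq\; \frac{b}{b - a}\, e^{\lambda a} \;-\; \frac{a}{b - a}\, e^{\lambda b}.
\]

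Next, I would reparametrize to collapse the right-hand side into a one-variable expression. Setting $p = -a/(b-a) \in [0,1]$ (note $1-p = b/(b-a)$) and $u = \lambda(b-a)$, the bound becomes
\[
\mathbb{E}\!\left[e^{\lambda X}\right] \;\leq\; e^{-p u}\bigl(1 - p + p e^{u}\bigr) \;=\; \exp\!\bigl(\phi(u)\bigr),
\qquad \phi(u) := -pu + \log\!\bigl(1 - p + p e^{u}\bigr).
\]
It then suffices to prove $\phi(u) \leq u^2/8$ for all $u \in \mathbb{R}$, since substituting back $u = \lambda(b-a)$ recovers the target inequality.

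Finally, I would bound $\phi$ via Taylor's theorem with remainder about $u = 0$. A direct computation gives $\phi(0) = 0$ and $\phi'(u) = -p + \frac{p e^{u}}{1 - p + p e^{u}}$, so that $\phi'(0) = 0$. Differentiating once more,
\[
\phi''(u) \;=\; \frac{p(1-p)\, e^{u}}{\bigl(1 - p + p e^{u}\bigr)^{2}} \;=\; q(u)\bigl(1 - q(u)\bigr),
\qquad q(u) := \frac{p e^{u}}{1 - p + p e^{u}} \in [0,1].
\]
Since $t(1-t) \leq 1/4$ on $[0,1]$, we get $\phi''(u) \leq 1/4$ uniformly. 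Taylor's theorem then yields $\phi(u) \leq \phi(0) + \phi'(0)\, u + \tfrac{1}{2}\cdot\tfrac{1}{4}\, u^2 = u^2/8$, completing the proof.

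The only mildly delicate step is the uniform bound $\phi''(u) \leq 1/4$; everything else is routine substitution. Recognizing the variance-like structure $q(1-q)$ (which is exactly the variance of a Bernoulli$(q)$ random variable, and never exceeds $1/4$) is what makes the constant $1/8$ sharp and explains why the $(b-a)^2/4$ factor appears in the exponent.
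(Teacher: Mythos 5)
Your proof is correct and is the standard textbook argument for Hoeffding's Lemma: convexity of $x\mapsto e^{\lambda x}$ to bound $e^{\lambda X}$ by the chord through $(a,e^{\lambda a})$ and $(b,e^{\lambda b})$, the zero-mean hypothesis to collapse the expectation, the substitution $p=-a/(b-a)$, $u=\lambda(b-a)$, and the second-order Taylor bound $\phi''\leq 1/4$ via the Bernoulli-variance identity $\phi''(u)=q(u)\bigl(1-q(u)\bigr)$. The paper itself states this lemma in its appendix on concentration inequalities without proof, citing standard references, so there is no in-paper argument to compare against; your write-up fills that gap correctly. Two small remarks: (i) you implicitly use that $\mathbb{E}[X]=0$ and $X\in[a,b]$ force $a\leq 0\leq b$, which is what guarantees $p\in[0,1]$ (the degenerate cases $a=0$, $b=0$, or $a=b$ give $X=0$ almost surely and are trivial), and it is worth saying so; (ii) your closing sentence refers to a ``$(b-a)^2/4$ factor'' in the exponent, but the exponent is $\lambda^2(b-a)^2/8$ --- the $1/4$ from $\phi''\leq 1/4$ combines with the $1/2$ from the Taylor remainder to produce the $1/8$.
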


\begin{theorem} [Hoeffding's Inequality] Let $X_1, \cdots, X_n$ be a sequence of i.i.d random variables. Assume  $X_i \in [a_i,b_i]$ and let $S = X_1 + X_2 + \cdots + X_n$. Then, 

\[    \mathbb{P}(S - \mathbb{E}[S] \geq t) \leq \exp \left( - \frac{2t^2}{\sum_{i=1}^n (b_i - a_i)^2} \right),\!\]
\[    \mathbb{P}(|S - \mathbb{E}[S]| \geq t) \leq 2\exp \left( - \frac{2t^2}{\sum_{i=1}^n (b_i - a_i)^2} \right).\!\] 
\label{thm:hoeffding-azumeB}
\end{theorem}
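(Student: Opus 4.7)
The plan is to use the standard Chernoff-Cram\'er method, i.e., combine the exponential Markov inequality with the moment generating function bound supplied by Hoeffding's Lemma, which is stated immediately before the theorem. Let $Y_i = X_i - \mathbb{E}[X_i]$, so the $Y_i$ are independent, centered, and $Y_i \in [a_i - \mathbb{E}[X_i], b_i - \mathbb{E}[X_i]]$, an interval of length $b_i - a_i$. The centered sum is $S - \mathbb{E}[S] = \sum_{i=1}^n Y_i$, and we want to control its upper tail.

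First I would apply Markov's inequality (Lemma~\ref{lemma:markov}) to the nonnegative random variable $e^{\lambda(S - \mathbb{E}[S])}$ for an arbitrary parameter $\lambda > 0$ to obtain
\[
\mathbb{P}\bigl(S - \mathbb{E}[S] \geq t\bigr) \;=\; \mathbb{P}\bigl(e^{\lambda(S-\mathbb{E}[S])} \geq e^{\lambda t}\bigr) \;\leq\; e^{-\lambda t}\, \mathbb{E}\bigl[e^{\lambda(S-\mathbb{E}[S])}\bigr].
\]
Next, by independence of $Y_1,\ldots,Y_n$, the joint moment generating function factorizes as $\mathbb{E}[e^{\lambda \sum_i Y_i}] = \prod_{i=1}^n \mathbb{E}[e^{\lambda Y_i}]$. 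I would then apply Hoeffding's Lemma to each centered factor, using that $Y_i$ is zero-mean and lies in an interval of length $b_i - a_i$, to obtain $\mathbb{E}[e^{\lambda Y_i}] \leq \exp(\lambda^2 (b_i-a_i)^2/8)$. Combining the three steps yields
\[
\mathbb{P}\bigl(S - \mathbb{E}[S] \geq t\bigr) \;\leq\; \exp\!\left(-\lambda t + \frac{\lambda^2}{8}\sum_{i=1}^n (b_i-a_i)^2\right).
\]

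The remaining step is to optimize the right-hand side over $\lambda > 0$. The exponent is a quadratic in $\lambda$ and is minimized at $\lambda^\star = 4t / \sum_{i=1}^n (b_i-a_i)^2$, for which the exponent evaluates to $-2t^2/\sum_{i=1}^n (b_i-a_i)^2$, giving the claimed one-sided bound. Finally, the two-sided inequality follows immediately by a union bound: applying the one-sided result to the sequence $X_1,\ldots,X_n$ and then to $-X_1,\ldots,-X_n$ (which has the same interval lengths) and adding the two probabilities produces the factor of $2$ in front of the exponential. None of the steps is subtle; the only ``choice'' is the optimization of $\lambda$, which is routine. The main conceptual ingredient — the MGF estimate for bounded centered variables — is already supplied by the preceding lemma, so the proof is essentially a one-page assembly of existing tools.
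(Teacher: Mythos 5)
Your proposal is correct: it is the standard Chernoff--Cram\'er argument (exponential Markov inequality, factorization of the moment generating function by independence, Hoeffding's Lemma applied to each centered summand, and optimization over $\lambda$), and the optimization and the union-bound step for the two-sided version are both carried out correctly. The paper itself gives no proof of this theorem --- the appendix merely states it and defers to the cited references on concentration of measure --- so there is nothing to compare against; your assembly of Lemma~\ref{lemma:markov} and Hoeffding's Lemma is exactly the intended derivation. One small observation: the ``i.i.d.'' hypothesis in the statement is stronger than needed and indeed inconsistent with allowing distinct intervals $[a_i,b_i]$; your proof correctly uses only independence.
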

The following result extends Hoeffding's inequality to more general functions $f(x_1,x_2, \cdots,x_n)$.
\begin{theorem}  [McDiarmid's Inequality] Let $X_1, \cdots,X_n$ be independent real-valued random variables. Suppose that 
\[ \sup_{x_1, x_2, \cdots, x_n, x'_i} \left | f(x_1,  \cdots, x_{i-1}, x_i, x_{i+1}, \cdots, x_n) - f(x_1,  \cdots, x_{i-1}, x'_i, x_{i+1}, \cdots, x_n)\right | \leq c_i,\]
for $i = 1, 2,\cdots, n$. Then,

\[ \mathbb{P} \left( \left| f(X_1, X_2, \cdots, X_n) - \mathbb{E} \left[ f(X_1, X_2, \cdots, X_n) \right] \right| \geq t \right) \leq 2 \exp \left( -\frac{2 t^2}{\sum_{i=1}^{n}{c_i^2}}\right)\]
\label{thm:McDiarmid}
\end{theorem}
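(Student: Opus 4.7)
The plan is to prove McDiarmid's inequality via the classical martingale method, namely by constructing the Doob martingale associated with $f(X_1,\ldots,X_n)$ and applying an Azuma--Hoeffding bound to its differences. The bounded differences hypothesis $c_i$ will translate directly into bounds on the martingale increments, and Hoeffding's lemma (stated just before the theorem in the excerpt) will yield the exponential tail bound.

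First, I would define the Doob martingale
\[
Z_i = \mathbb{E}\left[f(X_1,\ldots,X_n) \mid X_1, \ldots, X_i\right], \qquad i = 0, 1, \ldots, n,
\]
so that $Z_0 = \mathbb{E}[f(X_1,\ldots,X_n)]$ and $Z_n = f(X_1,\ldots,X_n)$, and set the increments $D_i = Z_i - Z_{i-1}$. By construction, $\{Z_i\}$ is a martingale with respect to the filtration generated by $X_1,\ldots,X_n$, so $\mathbb{E}[D_i \mid X_1,\ldots,X_{i-1}] = 0$. The telescoping sum gives $f(X_1,\ldots,X_n) - \mathbb{E}[f(X_1,\ldots,X_n)] = \sum_{i=1}^n D_i$.

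Next, I would show that each $D_i$ is, conditionally on $X_1,\ldots,X_{i-1}$, bounded in a range of length at most $c_i$. Using independence of the $X_j$, one can write
\[
Z_i = g_i(X_1,\ldots,X_i), \qquad g_i(x_1,\ldots,x_i) = \mathbb{E}[f(x_1,\ldots,x_i,X_{i+1},\ldots,X_n)],
\]
and similarly $Z_{i-1} = \mathbb{E}[g_i(X_1,\ldots,X_{i-1},X_i) \mid X_1,\ldots,X_{i-1}]$. Define
\[
L_i = \inf_{x} g_i(X_1,\ldots,X_{i-1},x), \qquad U_i = \sup_{x} g_i(X_1,\ldots,X_{i-1},x).
\]
The bounded differences hypothesis, together with the fact that taking expectations over $X_{i+1},\ldots,X_n$ does not increase the oscillation, yields $U_i - L_i \leq c_i$, and moreover $D_i \in [L_i - Z_{i-1},\, U_i - Z_{i-1}]$, an interval of length at most $c_i$.

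With these bounds in hand, I would apply the standard Chernoff/Cram\'er argument: for any $\lambda > 0$,
\[
\mathbb{P}\!\left(\sum_{i=1}^n D_i \geq t\right) \leq e^{-\lambda t}\, \mathbb{E}\!\left[e^{\lambda \sum_i D_i}\right],
\]
and by successively conditioning on $X_1,\ldots,X_{i-1}$ and invoking Hoeffding's lemma applied to $D_i$ (which is conditionally mean-zero and takes values in an interval of length at most $c_i$), one obtains $\mathbb{E}[e^{\lambda D_i} \mid X_1,\ldots,X_{i-1}] \leq \exp(\lambda^2 c_i^2 / 8)$. Multiplying across $i$ gives $\mathbb{E}[\exp(\lambda \sum_i D_i)] \leq \exp(\lambda^2 \sum_i c_i^2 / 8)$. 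Optimizing over $\lambda$ by choosing $\lambda = 4t / \sum_i c_i^2$ yields the one-sided bound $\exp(-2t^2 / \sum_i c_i^2)$. The two-sided bound follows by applying the same argument to $-f$ and taking a union bound. The main technical obstacle is the careful verification that $U_i - L_i \leq c_i$ using independence; everything else is a mechanical application of the Cram\'er--Chernoff method combined with Hoeffding's lemma.
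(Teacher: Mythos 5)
Your proposal is correct. The paper states McDiarmid's inequality in Appendix~B as a standard reference result without proof, so there is nothing to compare against line by line; your Doob-martingale argument --- telescoping $f - \mathbb{E}[f]$ into conditionally mean-zero increments $D_i$, using independence to show each increment lies in an interval of length at most $c_i$, applying Hoeffding's lemma conditionally, and optimizing $\lambda = 4t/\sum_i c_i^2$ in the Chernoff bound --- is the canonical derivation and fits exactly with the toolkit (Hoeffding's lemma, Azuma--Hoeffding) the paper assembles in that same appendix. The one step you rightly flag as needing care, namely that $U_i - L_i \leq c_i$ because averaging over $X_{i+1},\ldots,X_n$ cannot increase the oscillation in the $i$th coordinate, does indeed hinge on independence, and your treatment of it is sound.
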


Hoeffding's inequality does not use any information about the random variables except
the fact that they are bounded. If the variance of $X_i, i \in [n]$ is small, then we can get a sharper
inequality from Bennett's  inequality and its simplified version in Bernstein's Inequality.

\begin{theorem} [Bennett's Inequality] Let $X_1, \cdots, X_n$ be a sequence of i.i.d random variables. Assume $\E[X_i] = 0 $, $\E[X_i^2] = \sigma^2$, and $|X_i| \leq M, i \in [n]$. Then, 
\[ \mathbb{P} \left( \sum_{i = 1}^{n}{X_i} \geq t \right) \leq \exp \left ( \frac{-n \sigma^2}{M^2} \phi \left(\frac{t M}{n \sigma^2} \right)\right), \]
where $\phi(x) = (1+x) \log(1+x) - x$.
\end{theorem}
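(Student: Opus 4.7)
The plan is to follow the classical Chernoff bounding approach combined with a careful estimate of the moment generating function (MGF) that exploits the variance condition, and then optimize over the free parameter.

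First, I would invoke the exponential Markov inequality: for any $\lambda > 0$,
\[ \mathbb{P}\left(\sum_{i=1}^n X_i \geq t\right) \leq e^{-\lambda t}\,\mathbb{E}\!\left[e^{\lambda \sum_i X_i}\right] = e^{-\lambda t}\prod_{i=1}^n \mathbb{E}\!\left[e^{\lambda X_i}\right], \]
where the last identity uses independence of the $X_i$. The main work reduces to bounding a single MGF $\mathbb{E}[e^{\lambda X_i}]$ tightly enough that the variance $\sigma^2$ (and not just the range bound $M$) appears in the exponent; this is precisely the place where Bennett improves on Hoeffding, and is the step I expect to be the central technical obstacle.

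Next, I would expand the exponential as a power series and take expectations termwise. Using $\mathbb{E}[X_i] = 0$ kills the linear term, and the bound $|X_i| \leq M$ together with $\mathbb{E}[X_i^2] = \sigma^2$ gives $\mathbb{E}[X_i^k] \leq M^{k-2}\sigma^2$ for $k \geq 2$. Summing,
\[ \mathbb{E}\!\left[e^{\lambda X_i}\right] \leq 1 + \frac{\sigma^2}{M^2}\sum_{k\geq 2} \frac{(\lambda M)^k}{k!} = 1 + \frac{\sigma^2}{M^2}\bigl(e^{\lambda M} - 1 - \lambda M\bigr). \]
Applying the elementary inequality $1 + x \leq e^x$ and multiplying over the $n$ independent terms yields
\[ \prod_{i=1}^n \mathbb{E}\!\left[e^{\lambda X_i}\right] \leq \exp\!\left(\frac{n\sigma^2}{M^2}\bigl(e^{\lambda M} - 1 - \lambda M\bigr)\right). \]

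Finally, I would optimize the combined exponent
\[ \Psi(\lambda) := -\lambda t + \frac{n\sigma^2}{M^2}\bigl(e^{\lambda M} - 1 - \lambda M\bigr) \]
over $\lambda > 0$. Setting $\Psi'(\lambda) = 0$ gives the unique minimizer $\lambda_* = \frac{1}{M}\log\!\left(1 + \frac{tM}{n\sigma^2}\right)$. Substituting $u = \frac{tM}{n\sigma^2}$ so that $e^{\lambda_* M} = 1 + u$ and $t = \frac{n\sigma^2 u}{M}$, a direct simplification gives
\[ \Psi(\lambda_*) = -\frac{n\sigma^2}{M^2}\bigl[(1+u)\log(1+u) - u\bigr] = -\frac{n\sigma^2}{M^2}\,\phi\!\left(\frac{tM}{n\sigma^2}\right), \]
which is exactly the claimed bound. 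The only nontrivial part of the calculation is the MGF estimate above; the remaining steps are routine Chernoff-style manipulations and a one-variable calculus optimization.
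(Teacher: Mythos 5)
Your proof is correct and complete: the Chernoff bound, the moment estimate $\mathbb{E}[X_i^k] \leq M^{k-2}\sigma^2$ for $k \geq 2$ (which uses only $|X_i|\leq M$ and $\mathbb{E}[X_i^2]=\sigma^2$), the resulting MGF bound $\mathbb{E}[e^{\lambda X_i}] \leq \exp\bigl(\tfrac{\sigma^2}{M^2}(e^{\lambda M}-1-\lambda M)\bigr)$, and the optimization at $\lambda_* = \tfrac{1}{M}\log\bigl(1+\tfrac{tM}{n\sigma^2}\bigr)$ are exactly the standard derivation of Bennett's inequality. The thesis states this result in its appendix without proof, citing standard references, so there is no alternative argument to compare against; your write-up is the canonical one.
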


\begin{theorem} [Bernstein's Inequality]
Let $X_1, \ldots , X_n$ be independent zero-mean $\E[X_i] = 0, i \in [n]$ random variables. Suppose that $|X_i| \leq M$ almost surely, for all $i \in [n]$. Then, for all positive $t$,

$$    \mathbb{P} \left (\sum_{i=1}^n X_i > t \right ) \leq \exp \left ( -\frac{t^2}{2 \left(\sum \mathbb{E} \left[X_j^2 \right ]+\tfrac{1}{3} Mt\right)} \right ). $$
\end{theorem}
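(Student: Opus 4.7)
The plan is to prove Bernstein's inequality via the classical Chernoff (exponential moment) method, which is the standard and essentially only approach that delivers the variance-aware tail bound in the form stated. The proof proceeds in three stages: a Markov-type exponential reduction, a per-summand moment generating function bound that exploits both the variance $\mathbb{E}[X_i^2]$ and the almost-sure bound $|X_i| \leq M$, and finally an optimization over the free parameter $\lambda$.

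First, I would fix any $\lambda > 0$ and apply Markov's inequality to the random variable $e^{\lambda \sum_i X_i}$:
\[
\mathbb{P}\Bigl(\sum_{i=1}^n X_i > t\Bigr) = \mathbb{P}\bigl(e^{\lambda \sum_i X_i} > e^{\lambda t}\bigr) \leq e^{-\lambda t}\, \mathbb{E}\bigl[e^{\lambda \sum_i X_i}\bigr] = e^{-\lambda t} \prod_{i=1}^n \mathbb{E}\bigl[e^{\lambda X_i}\bigr],
\]
where the factorization in the last step uses independence. So the whole problem reduces to bounding the moment generating function $\mathbb{E}[e^{\lambda X_i}]$ of each summand.

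The key (and most technical) step is the per-summand bound. For a zero-mean random variable $X_i$ with $|X_i| \leq M$ and $\mathbb{E}[X_i^2] = \sigma_i^2$, I would Taylor-expand the exponential and use $\mathbb{E}[X_i] = 0$ together with $\mathbb{E}[|X_i|^k] \leq M^{k-2}\mathbb{E}[X_i^2]$ for $k \geq 2$ to get
\[
\mathbb{E}[e^{\lambda X_i}] = 1 + \sum_{k=2}^{\infty} \frac{\lambda^k \mathbb{E}[X_i^k]}{k!} \leq 1 + \sigma_i^2 \sum_{k=2}^{\infty} \frac{\lambda^k M^{k-2}}{k!}.
\]
The inner sum is bounded, using $k! \geq 2 \cdot 3^{k-2}$ for $k \geq 2$, by $\frac{\lambda^2/2}{1 - \lambda M/3}$, valid for $0 < \lambda < 3/M$. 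Combining with $1 + x \leq e^x$ yields
\[
\mathbb{E}[e^{\lambda X_i}] \leq \exp\!\left( \frac{\lambda^2 \sigma_i^2 / 2}{1 - \lambda M/3} \right).
\]
This sub-exponential-type bound is really the heart of Bernstein's inequality and is the step I expect to require the most care in the constants; the factor $1/3$ in the statement traces directly back to the $3^{k-2}$ bound on $k!$.

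Finally, multiplying over $i = 1,\ldots,n$ and writing $\sigma^2 = \sum_i \mathbb{E}[X_i^2]$, I would obtain
\[
\mathbb{P}\Bigl(\sum_{i=1}^n X_i > t\Bigr) \leq \exp\!\left( -\lambda t + \frac{\lambda^2 \sigma^2 / 2}{1 - \lambda M/3} \right),
\]
and then choose $\lambda = \dfrac{t}{\sigma^2 + Mt/3}$, which lies in $(0, 3/M)$, to minimize (a convenient upper bound on) the right-hand side. Substituting this choice and simplifying gives exactly
\[
\mathbb{P}\Bigl(\sum_{i=1}^n X_i > t\Bigr) \leq \exp\!\left( -\frac{t^2}{2\bigl(\sigma^2 + \tfrac{1}{3}Mt\bigr)} \right),
\]
which is the claimed inequality. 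The main obstacle throughout is the moment bound in the second step; once the constant $1/3$ is correctly tracked there, the Chernoff reduction and the $\lambda$-optimization are routine algebra.
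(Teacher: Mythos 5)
Your proof is correct. The paper itself states Bernstein's inequality without proof — it appears in the appendix of standard concentration results collected "mainly for reference" — so there is no in-paper argument to compare against; what you have written is the canonical Chernoff-method derivation, and every step checks out. In particular: the reduction via Markov's inequality and independence is standard; the moment bound $\mathbb{E}[|X_i|^k] \leq M^{k-2}\mathbb{E}[X_i^2]$ combined with $k! \geq 2\cdot 3^{k-2}$ correctly produces the sub-exponential MGF bound $\exp\bigl(\tfrac{\lambda^2\sigma_i^2/2}{1-\lambda M/3}\bigr)$ for $0 < \lambda < 3/M$ (and for odd $k$ the possibly negative $\mathbb{E}[X_i^k]$ only helps, since $\lambda > 0$); and with $\lambda = t/(\sigma^2 + Mt/3)$ one gets $1 - \lambda M/3 = \sigma^2/(\sigma^2 + Mt/3)$, so the exponent collapses to $-\lambda t + \tfrac{1}{2}\lambda^2(\sigma^2 + Mt/3) = -t^2/\bigl(2(\sigma^2 + \tfrac{1}{3}Mt)\bigr)$ exactly as claimed. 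Your parenthetical that this $\lambda$ minimizes "a convenient upper bound" rather than the exact expression is the right level of honesty — the substitution delivers the stated inequality regardless of exact optimality.
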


The next result is an extension of Hoeffding's inequality to martingales with zero-mean and bounded increments which is due to~\cite{azuma1967weighted}. Before stating the Hoeffding-Azuma's inequality, we need few definitions. A sequence of random variables $(X_i)_{i \in \mathbb{N}}$ on a probability space $(\Omega, \mathcal{A}, \mathbb{P})$ is a martingale difference sequence with respect to a filtration $(\mathscr{F}_i)_{i \in \mathbb{N}}$ if and only if, or all $i \geq 1$, each random variable $X_i$ is $\mathscr{F}_i$-measurable and satisfies,
$$\E[X_i|\mathscr{F}_{i-1}] = 0.$$ 

\begin{theorem} [Hoeffding-Azuma Inequality] \label{eqn:azumaB}
Let $X_1, X_2,  \ldots$  be a martingale difference sequence with respect to a filtration  $ \mathscr{F} = (\mathscr{F}_i)_{i \in \mathbb{N}}$. Assume that for all $i \geq 1$, there exists a $\mathscr{F}_{i-1}$-measurable random variable $A_i$ and a non-negative constant $c_i$ such that  $X_i \in [A_i , A_i + c_i]$.  Then, the martingale $S_n$ defined by  $S_n = \sum_{i=1}^n X_i$, satisfies  for any $t > 0$,
\[
\mathbb{P}[ S_n > t] \leq \exp \left( -\frac{2t^2}{\sum_{i=1}^n c_i^2} \right).
\]
\end{theorem}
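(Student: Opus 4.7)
The plan is to combine the Chernoff bounding trick with an iterated application of Hoeffding's lemma, conditioning on the filtration to handle the martingale structure. This is the classical template for turning a bounded-increment martingale difference sequence into a Gaussian-type tail bound.

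First, for any $\lambda > 0$, I would apply Markov's inequality (Lemma~\ref{lemma:markov}) to the non-negative random variable $e^{\lambda S_n}$ to obtain
\[
\mathbb{P}[S_n > t] \;=\; \mathbb{P}[e^{\lambda S_n} > e^{\lambda t}] \;\leq\; e^{-\lambda t}\, \mathbb{E}[e^{\lambda S_n}].
\]
The next step is to control the moment generating function $\mathbb{E}[e^{\lambda S_n}]$ by peeling off one term at a time. Using the tower property and the fact that $e^{\lambda S_{n-1}}$ is $\mathscr{F}_{n-1}$-measurable,
\[
\mathbb{E}[e^{\lambda S_n}] \;=\; \mathbb{E}\!\left[e^{\lambda S_{n-1}} \cdot \mathbb{E}[e^{\lambda X_n}\mid \mathscr{F}_{n-1}]\right].
\]

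The key step is bounding the conditional moment generating factor. Conditionally on $\mathscr{F}_{n-1}$, the random variable $A_n$ becomes a constant (since it is $\mathscr{F}_{n-1}$-measurable), $X_n$ lies in the deterministic interval $[A_n, A_n + c_n]$ of length $c_n$, and $\mathbb{E}[X_n \mid \mathscr{F}_{n-1}] = 0$ by the martingale difference hypothesis. Applying Hoeffding's lemma conditionally then yields
\[
\mathbb{E}[e^{\lambda X_n}\mid \mathscr{F}_{n-1}] \;\leq\; \exp\!\left(\frac{\lambda^2 c_n^2}{8}\right).
\]
This is the subtle point of the argument: one must verify that Hoeffding's lemma, which is formulated for deterministically bounded zero-mean variables, applies inside the conditional expectation because the random endpoints become constants after conditioning.

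Iterating this bound $n$ times gives
\[
\mathbb{E}[e^{\lambda S_n}] \;\leq\; \exp\!\left(\frac{\lambda^2}{8} \sum_{i=1}^n c_i^2\right),
\]
and combining with the Chernoff step produces
\[
\mathbb{P}[S_n > t] \;\leq\; \exp\!\left(-\lambda t + \frac{\lambda^2}{8}\sum_{i=1}^n c_i^2\right).
\]
Finally, I would optimize the right-hand side over $\lambda > 0$. The exponent is a convex quadratic in $\lambda$ minimized at $\lambda^\star = 4t / \sum_{i=1}^n c_i^2$, and substituting this choice yields the claimed bound $\exp\!\left(-2t^2 / \sum_{i=1}^n c_i^2\right)$. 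The main obstacle is conceptual rather than computational, namely the careful handling of the random endpoints $A_i$ via conditioning; everything else is a routine Chernoff-plus-Hoeffding calculation.
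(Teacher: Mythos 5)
Your proof is correct: the Chernoff bound, the conditional application of Hoeffding's lemma (where the $\mathscr{F}_{i-1}$-measurability of $A_i$ makes the interval deterministic after conditioning), the telescoping of the moment generating function, and the optimization at $\lambda^\star = 4t/\sum_{i=1}^n c_i^2$ are all handled properly and yield the stated bound. The thesis states this inequality without proof (citing Azuma and the standard concentration references), so there is no in-paper argument to compare against; yours is the classical and correct derivation.
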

The Hoeffding-Azuma's inequality indicates that $S_n$ is sub-gaussian with variance $\sum_{i=1}^{n}{c_i^2}/4$, i.e.,
$$\mathbb{P} \left[ \max_{1\leq i \leq n} S_i \geq t \right] \leq \exp\left(\frac{-2t^2}{\sum_{i=1}^{n}{c_i^2}}\right).$$

The following inequality which is due to Freedman~\cite{freedman1975tail} extends Bernstein's result to the case of discrete-time martingales with bounded jumps  (a.k.a Freedman's inequality). This result demonstrates that a martingale exhibits normal-type concentration near its mean value
on a scale determined by the predictable quadratic variation of the sequence. 

\begin{theorem} [Bernstein's Inequality for Martingales]
Let $X_1, \ldots , X_n$ be a bounded martingale difference sequence with respect to the filtration $\mathscr{F} = (\mathscr{F}_i)_{1\leq i\leq n}$ and with $|X_i| \leq M$. Let
\[
S_i = \sum_{j=1}^i X_j
\]
be the associated martingale. Denote the sum of the conditional variances by
\[
    \Sigma_n^2 = \sum_{i=1}^n \E\left[X_i^2|\mathscr{F}_{i-1}\right],
\]
Then for all constants $t$, $\nu > 0$,
\[
\mathbb{P}\left[ \max\limits_{i=1, \ldots, n} S_i > t \mbox{ and } \Sigma_n^2 \leq \nu \right] \leq \exp\left(-\frac{t^2}{2(\nu + Mt/3)} \right),
\]
and therefore,
\[
    \mathbb{P}\left[ \max\limits_{i=1,\ldots, n} S_i > \sqrt{2\nu t} + \frac{\sqrt{2}}{3}M t \mbox{ and } \Sigma_n^2 \leq \nu \right] \leq e^{-t}.
\]
\label{theorem:bernsteinB}
\end{theorem}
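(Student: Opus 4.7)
The plan is to use the standard exponential supermartingale technique combined with a stopping-time argument, which is the classical route for Freedman-type inequalities. I will work with the one-sided bound $\mathbb{P}[\max_i S_i > t, \Sigma_n^2 \leq \nu]$; the statement involving $\sqrt{2\nu t} + (\sqrt{2}/3)M t$ will then follow by choosing $t = \sqrt{2\nu t_0} + (\sqrt{2}/3) M t_0$ inside the first bound and checking that $t^2 / (2(\nu + Mt/3)) \geq t_0$.

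First, I would establish the key one-step MGF estimate: for any $\lambda \in [0, 3/M)$ and any random variable $Y$ with $\E[Y \mid \mathscr{F}] = 0$ and $|Y| \leq M$,
\[
\E[\exp(\lambda Y) \mid \mathscr{F}] \leq \exp\!\left( \frac{\lambda^2 \E[Y^2 \mid \mathscr{F}]}{2(1 - \lambda M/3)} \right).
\]
The proof of this is routine: expand $e^{\lambda Y} = 1 + \lambda Y + \sum_{k \geq 2} (\lambda Y)^k / k!$, use $|Y|^k \leq M^{k-2} Y^2$ for $k \geq 2$, take conditional expectations, and recognize a geometric series that sums to $\lambda^2 \E[Y^2\mid\mathscr{F}]/(2(1-\lambda M/3))$, followed by the bound $1 + x \leq e^x$.

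Second, fix $\lambda \in [0, 3/M)$ and define the process
\[
Z_i = \exp\!\left( \lambda S_i - \frac{\lambda^2}{2(1-\lambda M/3)} V_i \right), \qquad V_i := \sum_{j=1}^{i} \E[X_j^2 \mid \mathscr{F}_{j-1}],
\]
with $Z_0 = 1$. The MGF estimate above implies $\E[Z_i \mid \mathscr{F}_{i-1}] \leq Z_{i-1}$, so $(Z_i)$ is a non-negative supermartingale with $\E Z_i \leq 1$. Now introduce the stopping time $\tau = \min\{i \leq n : S_i > t\}$ (with $\tau = n+1$ if this never happens) and note that on the event $A := \{\max_{i \leq n} S_i > t,\ \Sigma_n^2 \leq \nu\}$, we have $\tau \leq n$, $S_\tau > t$, and $V_\tau \leq \Sigma_n^2 \leq \nu$. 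Therefore on $A$,
\[
Z_{\tau \wedge n} \geq \exp\!\left( \lambda t - \frac{\lambda^2 \nu}{2(1-\lambda M/3)} \right).
\]
Using the optional stopping theorem (valid here since $\tau \wedge n$ is bounded and $Z$ is a nonnegative supermartingale) to conclude $\E Z_{\tau \wedge n} \leq 1$, Markov's inequality gives
\[
\mathbb{P}(A) \leq \exp\!\left( -\lambda t + \frac{\lambda^2 \nu}{2(1-\lambda M/3)} \right).
\]

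Third, I optimize the resulting bound over $\lambda \in [0, 3/M)$. Writing $\beta = 1 - \lambda M/3$, the exponent is linear-minus-quadratic in a transformed variable, and the classical choice $\lambda = t/(\nu + Mt/3)$ yields, after a short calculation, the exponent $-t^2/(2(\nu + Mt/3))$, which is the first bound in the theorem. For the second bound, I would set $t = \sqrt{2\nu s} + (\sqrt{2}/3) M s$ for the target exponent $s > 0$ and verify algebraically that $t^2 \geq 2s (\nu + Mt/3)$, which reduces to a clean inequality $(\sqrt{2\nu s})^2 + (2\sqrt{2}/3) M s \sqrt{2\nu s} + (2M^2 s^2/9) \geq 2 s \nu + (2Ms/3)(\sqrt{2\nu s} + (\sqrt{2}/3)Ms)$; comparing term by term shows both sides are equal, giving exactly $s$.

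The main obstacle I anticipate is the bookkeeping around the stopping-time argument, in particular making sure the inequality $V_\tau \leq \nu$ is correctly used on the event $A$ (since $V_\tau$ is predictable only up to $\mathscr{F}_{\tau-1}$, one must be careful to use $V_n \leq \nu$ on $A$ and monotonicity $V_\tau \leq V_n$ rather than any stronger predictable-variance control). The MGF step and the final optimization are essentially mechanical once this is set up correctly.
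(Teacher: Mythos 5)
The paper itself does not prove this theorem: it is stated in the appendix as a classical result and attributed to Freedman, so there is no in-paper argument to compare against. Judged on its own terms, your route for the first display is the standard one and is sound: the one-step moment-generating-function bound $\mathbb{E}[e^{\lambda Y}\mid\mathscr{F}]\leq\exp\bigl(\lambda^2\mathbb{E}[Y^2\mid\mathscr{F}]/(2(1-\lambda M/3))\bigr)$ (via $k!\geq 2\cdot 3^{k-2}$), the resulting nonnegative supermartingale $Z_i$, optional stopping at the first passage above $t$, the observation that $V_\tau\leq V_n=\Sigma_n^2\leq\nu$ on the event in question by monotonicity, and the choice $\lambda=t/(\nu+Mt/3)$ all check out and give exactly $\exp(-t^2/(2(\nu+Mt/3)))$.

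The derivation of the second display, however, has a genuine gap. Setting $t=\sqrt{2\nu s}+\tfrac{\sqrt{2}}{3}Ms$ and claiming $t^2\geq 2s(\nu+Mt/3)$ by a term-by-term comparison is false: expanding, the left side is $2\nu s+\tfrac{2\sqrt{2}}{3}Ms\sqrt{2\nu s}+\tfrac{2}{9}M^2s^2$ while the right side is $2\nu s+\tfrac{2}{3}Ms\sqrt{2\nu s}+\tfrac{2\sqrt{2}}{9}M^2s^2$; the cross terms favor the left but the quadratic terms favor the right, and the difference equals $\tfrac{2(\sqrt{2}-1)}{3}Ms\bigl(\sqrt{2\nu s}-\tfrac{Ms}{3}\bigr)$, which is negative whenever $\nu<M^2s/18$. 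In the extreme case $\nu=0$ the first display only yields the exponent $-s/\sqrt{2}$, not $-s$. Direct substitution into the first display works only for thresholds of the form $\sqrt{2\nu s}+cMs$ with $c\geq 2/3$, and $\sqrt{2}/3<2/3$, so the second display cannot be reached this way. The fix is to optimize the Chernoff exponent $-\lambda u+\lambda^2\nu/(2(1-\lambda M/3))$ afresh with $\lambda=\sqrt{2s/\nu}\big/\bigl(1+(M/3)\sqrt{2s/\nu}\bigr)$, which gives exponent exactly $-s$ at the threshold $u=\sqrt{2\nu s}+\tfrac{M}{3}s$; since $\tfrac{M}{3}s\leq\tfrac{\sqrt{2}}{3}Ms$, the stated (slightly weaker) second display follows.
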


\chapter{Miscellanea}\label{chap:appendix-technical}
\def \X {\mathcal{W}}
\section*{Extra-gradient Lemma}

 \begin{lemma}\label{appendix-c-lem:6}
Let $\Phi(\z)$ be a  $\alpha$-strongly convex function with respect to the norm $\|\cdot\|$, whose dual norm is denoted by $\|\cdot\|_*$,  and $\mathsf{B}(\x,\z) = \Phi(\x)- (\Phi(\z) + (\x-\z)^{\top}\Phi'(\z))$ be the Bregman distance induced by function $\Phi(\x)$. Let $\mathscr{Z}$ be a convex compact set, and $\mathscr{U}\subseteq \mathscr{Z}$ be convex and closed.  Let $\z\in \mathscr{Z}$, $\gamma>0$, Consider the points,

\begin{equation}
\begin{aligned}
 \x &= \arg\min_{\u\in \mathscr{U}} \gamma\u^{\top}\xi + \mathsf{B}(\u, \z)\label{eqn:project1C},\\
\end{aligned}
\end{equation}
\begin{equation}
\begin{aligned}
 \z_+&=\arg\min_{\u\in \mathscr{U}} \gamma\u^{\top}\zeta + \mathsf{B}(\u,\z),\label{eqn:project2C}
\end{aligned}
\end{equation}

 then for any $\u\in \mathscr{U}$, we have
\begin{equation*}
 \begin{aligned}\label{eqn:ineqC}
 \gamma\zeta^{\top}(\x-\u)\leq  \mathsf{B}(\u,\z) - \mathsf{B}(\u, \z_+) + \frac{\gamma^2}{\alpha}\|\xi-\zeta\|_*^2 - \frac{\alpha}{2}\left[\|\x-\z\|^2 + \|\x-\z_+\|^2\right].
 \end{aligned}
 \end{equation*}
 \end{lemma}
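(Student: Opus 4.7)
The plan is to prove this by invoking the first-order optimality conditions for the two proximal updates defining $\x$ and $\z_+$, then glue them together with the three-point identity for Bregman divergences and Young's inequality. I would begin by writing the target inner product as
\[
\gamma\langle\zeta,\x-\u\rangle \;=\; \gamma\langle\zeta,\z_+-\u\rangle \;+\; \gamma\langle\zeta,\x-\z_+\rangle,
\]
and handle the two terms separately. For the first term I would apply the optimality condition for $\z_+$ in (\ref{eqn:project2C}), namely $\langle\gamma\zeta+\nabla\Phi(\z_+)-\nabla\Phi(\z),\,\u-\z_+\rangle\ge 0$ for every $\u\in\mathscr{U}$, and then use the three-point identity $\langle\nabla\Phi(\z)-\nabla\Phi(\z_+),\u-\z_+\rangle=\mathsf{B}(\u,\z)-\mathsf{B}(\u,\z_+)-\mathsf{B}(\z_+,\z)$ to obtain
\[
\gamma\langle\zeta,\z_+-\u\rangle\;\le\;\mathsf{B}(\u,\z)-\mathsf{B}(\u,\z_+)-\mathsf{B}(\z_+,\z).
\]

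For the second term, the idea is to replace $\zeta$ with $\xi$ (so we can exploit the optimality of $\x$) and pay an error $\gamma\langle\zeta-\xi,\x-\z_+\rangle$. The optimality condition for $\x$ in (\ref{eqn:project1C}) applied at $\u=\z_+\in\mathscr{U}$ gives $\gamma\langle\xi,\x-\z_+\rangle\le\langle\nabla\Phi(\z)-\nabla\Phi(\x),\z_+-\x\rangle$, which by the three-point identity equals $\mathsf{B}(\z_+,\z)-\mathsf{B}(\z_+,\x)-\mathsf{B}(\x,\z)$. The residual $\gamma\langle\zeta-\xi,\x-\z_+\rangle$ will be bounded using H\"older's inequality followed by Young's inequality,
\[
\gamma\langle\zeta-\xi,\x-\z_+\rangle\;\le\;\gamma\|\zeta-\xi\|_*\|\x-\z_+\|\;\le\;\tfrac{\gamma^2}{\alpha}\|\zeta-\xi\|_*^2+\tfrac{\alpha}{4}\|\x-\z_+\|^2.
\]

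Summing the three bounds causes the $\pm\mathsf{B}(\z_+,\z)$ terms to cancel and leaves
\[
\gamma\langle\zeta,\x-\u\rangle\;\le\;\mathsf{B}(\u,\z)-\mathsf{B}(\u,\z_+)-\mathsf{B}(\z_+,\x)-\mathsf{B}(\x,\z)+\tfrac{\gamma^2}{\alpha}\|\xi-\zeta\|_*^2+\tfrac{\alpha}{4}\|\x-\z_+\|^2.
\]
The final step is to invoke the $\alpha$-strong convexity of $\Phi$, which gives the standard lower bounds $\mathsf{B}(\x,\z)\ge\tfrac{\alpha}{2}\|\x-\z\|^2$ and $\mathsf{B}(\z_+,\x)\ge\tfrac{\alpha}{2}\|\x-\z_+\|^2$; the latter absorbs the $\tfrac{\alpha}{4}\|\x-\z_+\|^2$ term with $\tfrac{\alpha}{4}\|\x-\z_+\|^2$ to spare, producing the claimed inequality.

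The main obstacle I anticipate is purely bookkeeping: keeping the signs in the two uses of the three-point identity straight (the identity is asymmetric in its arguments, and each of $\x,\z_+$ plays two distinct roles), and making sure that the quadratic slack $\tfrac{\alpha}{4}\|\x-\z_+\|^2$ coming from Young is dominated by the strong-convexity term so that a full $\tfrac{\alpha}{2}\|\x-\z_+\|^2$ survives on the right-hand side. Neither step is deep, but the cancellation must be carried out carefully.
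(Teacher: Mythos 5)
Your architecture is essentially the paper's (optimality conditions for the two proximal points, the three-point identity, and strong convexity), and your decomposition of $\gamma\langle\zeta,\x-\u\rangle$ into the $\z_+$-part, the $\xi$-part, and a cross term is a clean way to organize the bookkeeping. However, there is a genuine quantitative gap at exactly the point you flagged as delicate: handling the cross term $\gamma\langle\zeta-\xi,\x-\z_+\rangle$ by Young's inequality cannot deliver the lemma as stated. If you insist on the coefficient $\tfrac{\gamma^2}{\alpha}$ in front of $\|\xi-\zeta\|_*^2$, Young forces you to pay $\tfrac{\alpha}{4}\|\x-\z_+\|^2$, and after absorbing this into the strong-convexity term $\mathsf{B}(\z_+,\x)\ge\tfrac{\alpha}{2}\|\x-\z_+\|^2$ you retain only $-\tfrac{\alpha}{4}\|\x-\z_+\|^2$ on the right-hand side, not the claimed $-\tfrac{\alpha}{2}\|\x-\z_+\|^2$. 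Since $-\tfrac{\alpha}{4}t^2\ge-\tfrac{\alpha}{2}t^2$, what you prove is strictly weaker than the lemma; "having $\tfrac{\alpha}{4}\|\x-\z_+\|^2$ to spare" does not produce the claimed inequality. No choice of the Young parameter fixes this: trading the other way inflates the coefficient of $\|\xi-\zeta\|_*^2$ beyond $\tfrac{\gamma^2}{\alpha}$.

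The missing ingredient, which is how the paper's proof proceeds, is an a priori estimate on $\|\x-\z_+\|$ obtained by playing the two optimality conditions against each other: apply the condition for $\x$ at the test point $\z_+$ and the condition for $\z_+$ at the test point $\x$, add the two inequalities, and use the strong monotonicity of $\nabla\Phi$ to get
\begin{equation*}
\alpha\|\x-\z_+\|^2\;\le\;\langle\nabla\Phi(\z_+)-\nabla\Phi(\x),\z_+-\x\rangle\;\le\;\gamma\|\xi-\zeta\|_*\,\|\x-\z_+\|,
\end{equation*}
hence $\|\x-\z_+\|\le(\gamma/\alpha)\|\xi-\zeta\|_*$. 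With this in hand the cross term is bounded directly by $\gamma\|\zeta-\xi\|_*\|\x-\z_+\|\le\tfrac{\gamma^2}{\alpha}\|\xi-\zeta\|_*^2$ with no sacrifice of strong-convexity mass, and your remaining steps then yield the full $-\tfrac{\alpha}{2}\bigl[\|\x-\z\|^2+\|\x-\z_+\|^2\bigr]$. Separately, note that both three-point identities as you wrote them carry the wrong sign (the correct forms are $\langle\nabla\Phi(\z_+)-\nabla\Phi(\z),\u-\z_+\rangle=\mathsf{B}(\u,\z)-\mathsf{B}(\u,\z_+)-\mathsf{B}(\z_+,\z)$ and $\langle\nabla\Phi(\x)-\nabla\Phi(\z),\z_+-\x\rangle=\mathsf{B}(\z_+,\z)-\mathsf{B}(\z_+,\x)-\mathsf{B}(\x,\z)$); the displayed conclusions you draw from them happen to be the correct ones, so these are transcription slips rather than logical errors, but they should be fixed.
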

\begin{proof}
By using the definition of Bregman distance $\mathsf{B}(\u, \z)$, we can write two updates in~(\ref{eqn:project1C})  as

\begin{equation}
\begin{aligned}
 \x &= \arg\min_{\u\in U} \u^{\top}(\gamma\xi - \Phi'(\z)) + \Phi(\u),\\
 \z_+&=\arg\min_{\u\in U} \u^{\top}(\gamma\zeta-\Phi'(\z)) + \Phi(\u),
 \end{aligned}
 \end{equation}

 by the first oder optimality  condition, we have
 \begin{equation}
 \begin{aligned}
& (\u-\x)^{\top} (\gamma\xi - \Phi'(\z) + \Phi'(\x))\geq 0, \forall \u\in U,\label{eqn:b1C}\\
&(\u-\z_+)^{\top} (\gamma\zeta- \Phi'(\z) + \Phi'(\z_+))\geq 0, \forall \u\in U%.\label{eqn:b2}
 \end{aligned}
 \end{equation}

 Applying~(\ref{eqn:b1C}) with $\u= \z_+$ and $\u= \x$, we get

\begin{equation}
 \begin{aligned}
 & \gamma(\x-\z_+)^{\top}\xi\leq  (\Phi'(\z) - \Phi'(\x))^{\top}(\x-\z_+),\\
&\gamma(\z_+-\x)^{\top} \zeta\leq (\Phi'(\z) - \Phi'(\z_+))^{\top}(\z_+-\x).
 \end{aligned}
 \end{equation}

 Summing up the two inequalities, we have

\begin{equation}
 \begin{aligned}
 \gamma (\x-\z_+)^{\top}(\xi-\zeta) \leq (\Phi'(\z_+) - \Phi'(\x))^{\top}(\x-\z_+).
 \end{aligned}
\end{equation}

 Then

\begin{equation}\label{eqn:b3C}
 \begin{aligned}
 \gamma \|\xi-\zeta\|_*\|\x-\z_+\|&\geq   -\gamma (\x-\z_+)^{\top}(\xi-\zeta)\geq (\Phi'(\z_+) - \Phi'(\x))^{\top}(\z_+-\x)\\
 &\geq\alpha\|\z_+-\x\|^2.
 \end{aligned}
 \end{equation}

 where in the last inequality, we use the strong convexity of $\Phi(\x)$.

\begin{equation}\label{eqn:b3C8}
 \begin{aligned}
\mathsf{B}(\u, \z)&- \mathsf{B}(\u, \z_+)=\Phi(\z_+)-\Phi(\z) +(\u-\z_+)^{\top}  \Phi'(\z_+) -(\u-\z)^{\top}\Phi'(\z) \\
= &\Phi(\z_+)-\Phi(\z) +(\u-\z_+)^{\top}\Phi'(\z_+) -( \u - \z_+)^{\top}\Phi'(\z) -  (\z_+ -\z)^{\top} \Phi'(\z) \\
= & \Phi(\z_+) -\Phi(\z)-(\z_+ -\z )^{\top} \Phi'(\z) +(\u-\z_+)^{\top} (\Phi'(\z_+)- \Phi'(\z))\\
=& \Phi(\z_+) -\Phi(\z)  -( \z_+ -\z )^{\top} \Phi'(\z) + (\u-\z_+)^{\top}(\gamma\zeta+   \Phi'(\z_+)- \Phi'(\z)) - (\u-\z_+)^{\top} \gamma\zeta\\
\geq & \Phi(\z_+) -\Phi(\z)  -  ( \z_+ -\z)^{\top}  \Phi'(\z)  - (\u-\z_+)^{\top}\gamma \zeta \\
=& \underbrace{\Phi(\z_+)  -\Phi(\z) - (\z_+ -\z)^{\top}  \Phi'(\z)  -  ( \x-\z_+)^{\top} \gamma\zeta}\limits_{\epsilon} + (\x -\u)^{\top}\gamma\zeta,
\end{aligned}
\end{equation}

where the inequality follows from~(\ref{eqn:b1C}). We proceed by bounding $\epsilon$ as:

\begin{equation}
 \begin{aligned}
 \epsilon =& \Phi(\z_+)   -\Phi(\z)- (\z_+ -\z ) ^{\top}\Phi'(\z) - (\x-\z_+)^{\top}\gamma\zeta\\
=& \Phi(\z_+)  -\Phi(\z) - ( \z_+ -\z )^{\top} \Phi'(\z) - (\x-\z_+)^{\top}\gamma ( \zeta - \xi)-  ( \x-\z_+)^{\top}\gamma \xi \\
=&\Phi(\z_+)  -\Phi(\z) - ( \z_+ -\z )^{\top} \Phi'(\z) - (\x-\z_+)^{\top}\gamma ( \zeta - \xi)\\
& + (\z_+ - \x)^{\top}(\gamma  \xi - \Phi'(\z) + \Phi'(\x) )- (\z_+- \x)^{\top}(\Phi'(\x)-\Phi'(\z) )\\
\geq & \Phi(\z_+)  -\Phi(\z) - ( \z_+ -\z )^{\top} \Phi'(\z) - (\x-\z_+)^{\top}\gamma ( \zeta - \xi)-( \z_+- \x)^{\top} (\Phi'(\x)- \Phi'(\z))\\
= & \Phi(\z_+) -\Phi(\z) - (\x -\z)^{\top} \Phi'(\z) -(\x-\z_+)^{\top}\gamma (\zeta - \xi) - ( \z_+- \x)^{\top} \Phi'(\x) \\
= &\left[ \Phi(\z_+)- \Phi(\x) - (\z_+- \x)^{\top}\Phi'(\x) \right]+\left[\Phi(\x) -\Phi(\z) -(\x -\z )^{\top} \Phi'(\z) \right]- (\x-\z_+)^{\top}\gamma ( \zeta- \xi)\\
\geq & \frac{\alpha}{2} \| \x - \z_+\|^2 + \frac{\alpha}{2}\|\x - \z \|^2 - \gamma \| \x-\z_+ \| \| \zeta - \xi \|_*\\
\geq & \frac{\alpha}{2}\{ \| \x-\z_+ \|^2 + \|\x - \z \|^2\} -\frac{\gamma^2}{\alpha} \|\zeta - \xi \|_*^2,
\end{aligned}
\end{equation}

where the first inequality follows from~(\ref{eqn:b1C}), the second inequality follows from the strong convexity of $\Phi(\x)$, and the last inequality follows from~(\ref{eqn:b3C8}). Combining the above results, we have

\begin{equation}
\begin{aligned}
\gamma(\x -\u)^{\top}\zeta \leq \mathsf{B}(\u, \z)&- \mathsf{B}(\u, \z_+) + \frac{\gamma^2}{\alpha} \|\zeta - \xi \|_*^2 - \frac{\alpha}{2} \{\| \x-\z_+  \|^2 + \|\x - \z \|^2\} .
\end{aligned}
\end{equation}
\end{proof}

\section*{Stochastic Mirror Descent  for Smooth Losses}
In this subsection we provide a tight analysis of stochastic mirror decent algorithm for smooth losses. We will show that the convergence rate of the mirror descent algorithm~\cite{nemircomp1983} for stochastic optimization of both non-smooth and smooth functions is dominated by an $O(1/\sqrt{T})$ convergence rate. We consider the nonlinear projected subgradient formulation of the mirror descent from~\cite{mirror-beck-2003} which iteratively updates the solution by:
 \begin{eqnarray}
\label{basic-md-updateC}
\x_{t+1} = \arg \min_{\x \in \X} \Big{\{} \langle \x, \g_t \rangle + \frac{1}{\eta_t} \mathsf{B}_{\Phi}(\x, \x_t) \Big{\}}, 
\end{eqnarray}
where $\g_t$ is a (sub)gradient of $f(\x)$ at point $\x_t$ and $\mathsf{B}_{\Phi}(\cdot, \cdot)$ is the Bregmen diveregnce defined with strongly convex function $\Phi(\cdot)$. Define the average solution as $\widehat{\x}_T = \sum_{t=1}^{T}{\x_t}/T$ and let $\g(\w, \xi)$ denote a stochastic gradient of function at point $\w$, i.e., $\E[\g(\w, \xi)] = \nabla f(\w)$. We state few standard assumptions we make in our analysis about the stochastic optimization problem. 

\textbf{A1. (Lipschitz Continuity)} $\E[\|\g(\x, \xi)\|_*^2] \leq G^2$ 

\textbf{A2. (Bounded Variance)} $\E[\| \nabla f(\x) - \g(\x, \xi)\|] \leq \sigma^2$.

\textbf{A3. (Smoothness)} The function $f$ has  $L$ Lipschitz continuous gradient.

\textbf{A4. (Compactness)} $\mathsf{B}_{\Phi}(\x_*, \x) \leq R^2$. \\

The  proof is extensively uses the following key inequalities:\\
\noindent (1) Optimality condition
\[ \langle \x - \x_{t+1}, \g_t + \frac{1}{\eta_t} \nabla\Phi(\x_{t+1}) - \frac{1}{\eta_t}\nabla\Phi(\x_{t}) \rangle \geq 0\]
(2) Generalized inequality
\[ \mathsf{B}_\Phi(\x_*, \x_t) - \mathsf{B}_\Phi(\x_*, \x_{t+1}) -\mathsf{B}_\Phi(\x_{t+1}, \x_t) = \langle \x_* - \x_{t+1}, \nabla \Phi(\x_{t+1})- \nabla \Phi(\x_{t})\rangle\]
(3) Fenchel-Yang inequality applied to the conjugate pair $\frac{1}{2} \|\cdot\|^2$ and $\frac{1}{2}\|\cdot\|_*^2$ yields
\[ \langle \g_t, \x_t - \x_{t+1} \rangle \leq \frac{\eta_t}{2} \|\g_t\|_*^2 + \frac{1}{2 \eta_t} \|\x_t - \x_{t+1}\|^2\]
(4) \[\mathsf{B}_{\Phi}(\x, \y) \geq \frac{1}{2}\| \x - \y\|^2\]

\noindent (5) From smoothness assumption we have
\[f(\w') - f(\x) - \langle \nabla f(\x), \w' - \x \rangle \leq \frac{L}{2} \| \w' -\x\|^2\]

The analysis  of the convergence rate of basic mirror descent algorithm follows standard techniques in convex optimization and  for completeness we provide a proof here following~\cite{mirror-beck-2003}.
\begin{lemma} We have
\[ f(\x_t) - f(\x_*) \leq \frac{\eta_t}{2} ||\g_t ||_*^2 + \frac{1}{\eta_t} \left[\mathsf{B}_\Phi(\x_*, \x_t) -  \mathsf{B}_\Phi(\x_*, \x_{t+1})\right]\]
\end{lemma}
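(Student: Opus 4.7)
The plan is to combine three standard mirror descent ingredients to telescope the suboptimality gap. I would start by applying convexity of $f$ at $\x_t$, writing $f(\x_t) - f(\x_*) \leq \langle \g_t, \x_t - \x_* \rangle$, and then splitting this inner product as $\langle \g_t, \x_t - \x_{t+1} \rangle + \langle \g_t, \x_{t+1} - \x_* \rangle$. The first term is handled directly by the Fenchel-Young inequality listed as key inequality~(3), giving $\langle \g_t, \x_t - \x_{t+1} \rangle \leq \tfrac{\eta_t}{2}\|\g_t\|_*^2 + \tfrac{1}{2\eta_t}\|\x_t-\x_{t+1}\|^2$.

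The second term is the main mirror-descent manipulation. Here I would invoke the first order optimality condition of the update~(\ref{basic-md-updateC}), written in the form
\[
\langle \x_* - \x_{t+1},\; \g_t + \tfrac{1}{\eta_t}\nabla\Phi(\x_{t+1}) - \tfrac{1}{\eta_t}\nabla\Phi(\x_t)\rangle \geq 0,
\]
to get $\langle \g_t, \x_{t+1} - \x_*\rangle \leq \tfrac{1}{\eta_t}\langle \x_* - \x_{t+1}, \nabla\Phi(\x_{t+1}) - \nabla\Phi(\x_t)\rangle$. Then the three-point identity for Bregman divergences (key inequality~(2)) converts the right-hand side into $\tfrac{1}{\eta_t}\left[\mathsf{B}_\Phi(\x_*,\x_t) - \mathsf{B}_\Phi(\x_*,\x_{t+1}) - \mathsf{B}_\Phi(\x_{t+1},\x_t)\right]$.

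Adding the two bounds yields
\[
f(\x_t)-f(\x_*) \leq \tfrac{\eta_t}{2}\|\g_t\|_*^2 + \tfrac{1}{2\eta_t}\|\x_t-\x_{t+1}\|^2 + \tfrac{1}{\eta_t}\!\left[\mathsf{B}_\Phi(\x_*,\x_t) - \mathsf{B}_\Phi(\x_*,\x_{t+1}) - \mathsf{B}_\Phi(\x_{t+1},\x_t)\right].
\]
The final step is to cancel the undesired $\|\x_t-\x_{t+1}\|^2$ term against $\mathsf{B}_\Phi(\x_{t+1},\x_t)$ using the strong-convexity lower bound in key inequality~(4), namely $\mathsf{B}_\Phi(\x_{t+1},\x_t) \geq \tfrac{1}{2}\|\x_t-\x_{t+1}\|^2$ (since $\Phi$ is $1$-strongly convex with respect to $\|\cdot\|$). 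The two quadratic terms then exactly cancel and we are left with the claimed inequality.

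I do not anticipate a serious obstacle here, since every step is algebraic manipulation using a listed key inequality. The only point that deserves care is making sure the strong-convexity modulus is taken consistently as $\alpha=1$ in key inequality~(4), which matches how $\Phi$ is normalized in~(\ref{basic-md-updateC}); if a generic modulus $\alpha$ were used instead, the constant in front of the $\|\g_t\|_*^2$ term would scale accordingly, but the telescoping structure would be identical.
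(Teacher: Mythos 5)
Your proposal is correct and follows essentially the same route as the paper's own proof: convexity, the first-order optimality condition of the mirror-descent update at $\x_*$, the three-point Bregman identity, Fenchel--Young on $\langle \g_t, \x_t - \x_{t+1}\rangle$, and cancellation of $\tfrac{1}{2\eta_t}\|\x_t-\x_{t+1}\|^2$ against $\tfrac{1}{\eta_t}\mathsf{B}_\Phi(\x_{t+1},\x_t)$ via $1$-strong convexity of $\Phi$. The only cosmetic difference is that the paper starts from $\langle \nabla f(\x_t), \x_t-\x_*\rangle$ and carries an extra noise term $\zeta_t = \nabla f(\x_t) - \g_t$ through the chain (for later use in the stochastic analysis), whereas you apply convexity directly with $\g_t$ as the subgradient, which matches the lemma as stated.
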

\begin{proof} 
From the convexity of $f(\x)$  and by setting $\zeta_t = \nabla f(\x_t) - \g_t$ we have
\begin{equation*}
\begin{aligned}
& f(\x_t) - f(\x_*) \\ 
& \leq \langle \nabla f(\x_t), \x_t - \x_*\rangle  \\ 
& \leq \langle \g_t, \x_t - \x_*\rangle +  \langle \nabla f(\x_t) - \g_t, \x_t - \x_*\rangle\\ 
 & \overset{(1)}{\leq}  \langle \g_t, \x_t - \x_*\rangle + \langle \x_* - \x_{t+1}, \g_t + \frac{1}{\eta_t} \nabla\Phi(\x_{t+1}) - \frac{1}{\eta_t}\nabla\Phi(\x_{t}) \rangle + \langle \zeta_t, \x_t - \x_* \rangle\\
 & = \langle \g_t, \x_t - \x_{t+1} \rangle + \frac{1}{\eta_t}  \langle \x_* - \x_{t+1}, \nabla \Phi(\x_{t+1})- \nabla \Phi(\x_{t})\rangle + \langle \zeta_t, \x_t - \x_* \rangle \\
 &\overset{(2)}{\leq} \frac{\eta_t}{2} \| \g_t\|_*^2+\frac{1}{2 \eta_t} \| \x_t - \x_{t+1}\|^2 +  \frac{1}{\eta_t} [\mathsf{B}_\Phi(\x_*, \x_t) - \mathsf{B}_\Phi(\x_*, \x_{t+1}) -\mathsf{B}_\Phi(\x_{t+1}, \x_t)] + \langle \zeta_t, \x_t - \x_* \rangle \\
& \overset{(3)}{\leq} \frac{\eta_t}{2} \| \g_t\|_*^2 + \frac{1}{\eta_t} [\mathsf{B}_\Phi(\x_*, \x_t) - \mathsf{B}_\Phi(\x_*, \x_{t+1})]+\langle \zeta_t, \x_t - \x_* \rangle
\end{aligned}
\end{equation*}

\end{proof}

\begin{theorem} By setting leaning rate as $\eta_t = \frac{R}{G \sqrt{t}}$ we have the following rate for the convergence of stochastic mirror descent for non-smooth objectives:
\begin{equation*}
\begin{aligned}
 \E[f(\widehat{\x}_T)] - f(\x_*) & \leq O(\frac{RG}{\sqrt{T}})
\end{aligned}
\end{equation*}

\end{theorem}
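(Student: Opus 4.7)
The plan is to start from the per-iteration inequality already established in the preceding lemma and sum it over $t=1,\ldots,T$, then take expectations and use Jensen's inequality on the averaged iterate $\widehat{\x}_T$. Concretely, I would write
\[
\sum_{t=1}^T \bigl(f(\x_t) - f(\x_*)\bigr) \leq \sum_{t=1}^T \frac{\eta_t}{2}\|\g_t\|_*^2 + \sum_{t=1}^T \frac{1}{\eta_t}\bigl[\mathsf{B}_\Phi(\x_*,\x_t) - \mathsf{B}_\Phi(\x_*,\x_{t+1})\bigr] + \sum_{t=1}^T \langle \zeta_t, \x_t - \x_*\rangle,
\]
where $\zeta_t = \nabla f(\x_t) - \g_t$. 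The noise term vanishes in expectation because $\x_t$ is measurable with respect to the history up to time $t-1$ while $\E[\zeta_t \mid \mathscr{F}_{t-1}] = 0$, so $\{\langle\zeta_t,\x_t-\x_*\rangle\}$ is a martingale difference sequence with zero mean.

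Next I would bound the two remaining sums separately using the stated step size $\eta_t = R/(G\sqrt{t})$. For the gradient-norm sum, assumption \textbf{A1} gives $\E\|\g_t\|_*^2 \leq G^2$, so
\[
\sum_{t=1}^T \frac{\eta_t}{2}\E\|\g_t\|_*^2 \leq \frac{GR}{2}\sum_{t=1}^T \frac{1}{\sqrt{t}} \leq GR\sqrt{T},
\]
using the standard estimate $\sum_{t=1}^T t^{-1/2} \leq 2\sqrt{T}$. For the Bregman sum, I would apply Abel's summation by parts:
\[
\sum_{t=1}^T \frac{1}{\eta_t}\bigl[\mathsf{B}_\Phi(\x_*,\x_t) - \mathsf{B}_\Phi(\x_*,\x_{t+1})\bigr] = \frac{\mathsf{B}_\Phi(\x_*,\x_1)}{\eta_1} - \frac{\mathsf{B}_\Phi(\x_*,\x_{T+1})}{\eta_T} + \sum_{t=2}^{T} \mathsf{B}_\Phi(\x_*,\x_t)\!\left(\frac{1}{\eta_t}-\frac{1}{\eta_{t-1}}\right),
\]
and then use $\mathsf{B}_\Phi(\x_*,\x_t) \leq R^2$ (assumption \textbf{A4}) together with monotonicity of $1/\eta_t$ to telescope the last two pieces and bound the whole expression by $R^2/\eta_T = GR\sqrt{T}$.

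Combining both bounds yields $\E[\sum_{t=1}^T f(\x_t)] - T f(\x_*) \leq 2GR\sqrt{T}$. Dividing by $T$ and invoking Jensen's inequality with convexity of $f$, namely $f(\widehat{\x}_T) \leq \frac{1}{T}\sum_{t=1}^T f(\x_t)$, delivers the advertised $O(RG/\sqrt{T})$ rate. The argument is essentially bookkeeping and I do not expect any genuine obstacle; the only step that needs a little care is the summation-by-parts manipulation used to handle the time-varying learning rate in the Bregman telescoping sum, since a naive telescoping would leave a leftover $\frac{1}{\eta_T}\mathsf{B}_\Phi(\x_*,\x_{T+1})$ term that must either be dropped (using non-negativity of $\mathsf{B}_\Phi$) or controlled via \textbf{A4}.
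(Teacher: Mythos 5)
Your proposal is correct and follows essentially the same route as the paper's own proof: summing the per-iteration lemma, handling the time-varying step size in the Bregman telescoping via Abel summation and the bound $\mathsf{B}_\Phi(\x_*,\x_t)\leq R^2$, killing the noise term in expectation through the martingale-difference property, and finishing with Jensen's inequality on the averaged iterate. The constants and the $R^2/\eta_T + \tfrac{G^2}{2}\sum_t \eta_t$ decomposition match the paper's argument exactly.
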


\begin{proof} Summing up Lemma 1 for all iterations we get
\begin{equation*}
\begin{aligned}
 \sum_{t=1}^{T}{f({\x}_t) - f(\x_*)} & \leq \frac{1}{\eta_1} \mathsf{B}_\Phi(\x_*, \x_1)+ \sum_{t=2}^{T}{\mathsf{B}_{\Phi}(\x_*, \x_t)(\frac{1}{\eta_t} -\frac{1}{\eta_{t-1}} )}+\sum_{t=1}^{T}{\frac{\eta_t}{2} \| \g_t\|_*^2} + \sum_{t=1}^{T}{\langle \zeta_t, \x_t - \x_* \rangle} \\
 & \leq \frac{R^2}{\eta_T} + \frac{G^2}{2} \sum_{t=1}^{T}{\eta_t} \\
 & = O(RG\sqrt{T}),
\end{aligned}
\end{equation*}

where the second inequality follows since each $\x_t$ is a deterministic function of $\xi_1, \cdots, \xi_{t-1}$, so $\E [ \langle \nabla f(\x_t) - \g_t, \x_t - \x_*\rangle | \xi_1, \cdots, \xi_{t-1}] = 0$
\end{proof}
We generalize the proof for smooth functions.

\begin{theorem} By setting $\eta_t = \frac{1}{L + \beta_t}$ where $\beta_t = (\sigma /R) \sqrt{t}$, the following holds for the convergence  rate of stochastic mirror descent for smooth functions:
\begin{equation*}
\begin{aligned}
 \E[f(\widehat{\x}_T)] - f(\x_*) & \leq O(\frac{LR^2}{T}+ \frac{\sigma R}{\sqrt{T}})
\end{aligned}
\end{equation*}

\end{theorem}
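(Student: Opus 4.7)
The plan is to refine the per-iteration analysis of Lemma~1 so that the smoothness of $f$ is actually used, rather than bounding $\|\g_t\|_*^2$ by a constant $G^2$ as in the non-smooth case. The starting point will be the chain from the proof of Lemma~1 up to, but not including, the Fenchel--Young step: using convexity, the optimality condition~(1), and identity~(2) we still obtain
\[
f(\x_t)-f(\x_*)\leq \langle \g_t,\x_t-\x_{t+1}\rangle+\frac{1}{\eta_t}[\mathsf{B}_\Phi(\x_*,\x_t)-\mathsf{B}_\Phi(\x_*,\x_{t+1})-\mathsf{B}_\Phi(\x_{t+1},\x_t)]+\langle \zeta_t,\x_t-\x_*\rangle,
\]
where $\zeta_t=\nabla f(\x_t)-\g_t$ is the stochastic noise (with $\E_{t-1}[\zeta_t]=0$ and $\E\|\zeta_t\|_*^2\leq\sigma^2$). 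Instead of bounding the first term on the right via the Fenchel--Young inequality applied directly to $\g_t$, I will split $\g_t=\nabla f(\x_t)-\zeta_t$ and treat the two pieces differently.

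For the deterministic piece $\langle\nabla f(\x_t),\x_t-\x_{t+1}\rangle$, the plan is to invoke the smoothness inequality~(5), which yields
$\langle\nabla f(\x_t),\x_t-\x_{t+1}\rangle\leq f(\x_t)-f(\x_{t+1})+\frac{L}{2}\|\x_{t+1}-\x_t\|^2$, so that a ``descent'' term $f(\x_t)-f(\x_{t+1})$ appears and will telescope. For the stochastic piece $\langle -\zeta_t,\x_t-\x_{t+1}\rangle$, I will use Fenchel--Young with a free parameter $c>0$ to get $\tfrac{c}{2}\|\zeta_t\|_*^2+\tfrac{1}{2c}\|\x_t-\x_{t+1}\|^2$. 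Combining these with $\mathsf{B}_\Phi(\x_{t+1},\x_t)\geq\tfrac12\|\x_{t+1}-\x_t\|^2$ from~(4), the coefficient of $\|\x_{t+1}-\x_t\|^2$ becomes $\tfrac{L}{2}+\tfrac{1}{2c}-\tfrac{1}{2\eta_t}$. The key observation is that choosing $c=1/\beta_t$ makes this coefficient vanish exactly when $\eta_t=1/(L+\beta_t)$, which is precisely the prescribed step size. After this cancellation the per-iteration bound simplifies to
\[
f(\x_{t+1})-f(\x_*)\leq \frac{1}{2\beta_t}\|\zeta_t\|_*^2+\frac{1}{\eta_t}\bigl[\mathsf{B}_\Phi(\x_*,\x_t)-\mathsf{B}_\Phi(\x_*,\x_{t+1})\bigr]+\langle\zeta_t,\x_t-\x_*\rangle.
\]

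Summing over $t=1,\dots,T$, taking expectations (so that $\sum_t\langle\zeta_t,\x_t-\x_*\rangle$ vanishes by the martingale property of a predictable sequence against a zero-mean increment, exactly as in the proof of Theorem~3), and using $\E\|\zeta_t\|_*^2\leq\sigma^2$, the noise terms contribute $\sum_t\frac{\sigma^2}{2\beta_t}=\frac{R\sigma}{2}\sum_{t=1}^{T}\frac{1}{\sqrt{t}}=O(R\sigma\sqrt{T})$. The Bregman terms telescope by Abel summation: $\sum_t\frac{1}{\eta_t}[\mathsf{B}(\x_*,\x_t)-\mathsf{B}(\x_*,\x_{t+1})]\leq R^2/\eta_T=LR^2+R\sigma\sqrt{T}$, using assumption~\textbf{A4} and the monotonicity of $1/\eta_t$. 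Dividing by $T$ and invoking Jensen's inequality on $\widehat\x_T=\tfrac{1}{T}\sum_t\x_t$ yields
$\E[f(\widehat\x_T)]-f(\x_*)\leq LR^2/T+O(R\sigma/\sqrt{T})$, as claimed.

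The main obstacle is the delicate bookkeeping in the Fenchel--Young splitting: with the naive choice $c=\eta_t$, one obtains a leading term $2\eta_t L(f(\x_t)-f(\x_*))$ via the self-bounding property $\|\nabla f(\x_t)\|_*^2\leq 2L(f(\x_t)-f(\x_*))$, and since $2\eta_tL$ is not uniformly less than $1$ under the prescribed step size (indeed $1-2\eta_tL=(\beta_t-L)/(L+\beta_t)$ can be negative), one cannot absorb the $f(\x_t)-f(\x_*)$ term into the left-hand side. The escape is the descent-style splitting above, which exploits smoothness through a telescoping $f(\x_t)-f(\x_{t+1})$ rather than through self-bounding, and which matches the Fenchel--Young parameter $c=1/\beta_t$ to the step size so that the squared-step-length term cancels exactly.
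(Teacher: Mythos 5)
Your proposal is correct and follows essentially the same route as the paper's proof: both use smoothness of $f$ to produce a telescoping $f(\x_t)-f(\x_{t+1})$ term, apply Fenchel--Young to the noise inner product with parameter $1/\beta_t$, and exploit the exact cancellation of the $\|\x_{t+1}-\x_t\|^2$ coefficient coming from $1/\eta_t=L+\beta_t$ together with $\mathsf{B}_\Phi(\x_{t+1},\x_t)\geq\tfrac12\|\x_{t+1}-\x_t\|^2$, arriving at the identical per-iteration bound and the same summation/martingale argument. The only difference is cosmetic bookkeeping: you split $\g_t=\nabla f(\x_t)-\zeta_t$ inside $\langle\g_t,\x_t-\x_{t+1}\rangle$ after the optimality step, whereas the paper first splits $\x_t-\x_*$ into $(\x_t-\x_{t+1})+(\x_{t+1}-\x_*)$ and isolates the noise term $\langle\zeta_t,\x_{t+1}-\x_t\rangle$ afterwards.
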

\begin{proof}
\begin{equation*}
\begin{aligned}
f(\x_t) - f(\x_*) & \leq \langle \nabla f(\x_t), \x_t - \x_*\rangle  \\
&\leq \langle \nabla f(\x_t), \x_t - \x_{t+1}\rangle + \langle \nabla f(\x_t), \x_{t+1} - \x_*\rangle \\
& \overset{(5)}{\leq}  f(\x_t) - f(\x_{t+1}) + \frac{L}{2} \| \x_{t+1} -\x_t\|^2 + \langle \nabla f(\x_t), \x_{t+1} - \x_*\rangle \\
\end{aligned}
\end{equation*}

By rearranging the terms we proceed as follows:
\begin{equation*}
\begin{aligned}
& f(\x_{t+1}) - f(\x_*) \\
 & \leq \langle \nabla f(\x_t), \x_{t+1} - \x_*\rangle  + \frac{L}{2} \| \x_{t+1} -\x_t\|^2\\
& \leq  \langle \g_t, \x_{t+1}-\x_* \rangle + \frac{L}{2} \| \x_{t+1} -\x_t\|^2 + \langle \zeta_t, \x_{t+1} - \x_* \rangle\\
 & \overset{(1)}{\leq}  \frac{1}{\eta_t}  \langle \x_* - \x_{t+1}, \nabla \Phi(\x_{t+1})- \nabla \Phi(\x_{t})\rangle + \frac{L}{2} \| \x_{t+1} -\x_t\|^2+\langle \zeta_t, \x_{t+1} - \x_* \rangle \\
& \overset{(2)}{\leq} \frac{1}{\eta_t} [\mathsf{B}_{\Phi}(\x_*, \x_t) - \mathsf{B}_{\Phi}(\x_*, \x_{t+1}) - \mathsf{B}_{\Phi}(\x_t, \x_{t+1})]+\frac{L}{2} \| \x_{t+1} -\x_t\|^2+\langle \zeta_t, \x_{t+1} - \x_* \rangle \\
& \overset{(4)}{\leq} \frac{1}{\eta_t} [\mathsf{B}_{\Phi}(\x_*, \x_t) - \mathsf{B}_{\Phi}(\x_*, \x_{t+1})] - (L + \beta_t)\mathsf{B}_{\Phi}(\x_t, \x_{t+1}) + L \mathsf{B}_{\Phi}(\x_t, \x_{t+1}) + \langle \zeta_t, \x_{t+1} - \x_* \rangle\\
& = \frac{1}{\eta_t}[\mathsf{B}_{\Phi}(\x_*, \x_t) - \mathsf{B}_{\Phi}(\x_*, \x_{t+1})] - \beta_t \mathsf{B}_{\Phi}(\x_t, \x_{t+1}) + \langle \zeta_t, \x_{t+1} - \x_t \rangle + \langle \zeta_t, \x_{t} - \x_* \rangle \\
& \overset{(3)}{\leq} \frac{1}{\eta_t}[\mathsf{B}_{\Phi}(\x_*, \x_t) - \mathsf{B}_{\Phi}(\x_*, \x_{t+1})]  + \frac{1}{2\beta_t} \| \zeta_t\|_*^2 + \langle \zeta_t, \x_{t} - \x_* \rangle
\end{aligned}
\end{equation*}

Summing for all $t$ yields
\begin{equation*}
\begin{aligned}
\sum_{t=1}^{T}{f(\x_{t+1}) - f(\x_*)}  & \leq \frac{1}{\eta_1} \mathsf{B}_{\Phi}(\x_*, \x_1) + \sum_{t = 2}^{T}{\mathsf{B}_{\Phi}(\x_* - \x_t)(\frac{1}{\eta_t}-\frac{1}{\eta_{t-1}} )} + \sum_{t=1}^{T}{\frac{\| \zeta_t\|_*^2}{2\beta_t}} + \sum_{t=1}^{T}{\langle \zeta_t, \x_{t} - \x_* \rangle} \\
& \leq \frac{R^2}{\eta_1}+ R^2 \sum_{t=2}^{T}{(\frac{1}{\eta_t} - \frac{1}{\eta_{t-1}})} + \frac{\sigma^2}{2} \sum_{t=1}^{T}{ \frac{1}{\beta_t}} + 0\\
& \leq  \frac{R^2}{\eta_T} + \frac{\sigma R}{2} \sqrt{T}  = O(LR^2 + \sigma R \sqrt{T}), 
\end{aligned}
\end{equation*}

where in the second in equality we used the fact  $\E [ \langle\nabla f(\x_t) - \g_t, \x_t - \x_* \rangle| \xi_1, \cdots, \xi_{t-1}] = 0$. By averaging the solutions over all iterations and applying the Jensen's inequality  we obtain the convergence rate claimed in the theorem.

\end{proof}

\bibliographystyle{plain}
\bibliography{thesis_references}

\end{document}